\journal{arXiv.}
\newif\ifshowproofs
\begin{document}

\newcommand{\Environment}{M}
\newcommand{\States}{\mathcal{S}}
\newcommand{\Actions}{\mathcal{A}}
\newcommand{\reward}{R}
\newcommand{\tfunc}{\tau}
\newcommand{\init}{\mu_0} 
\newcommand{\discount}{\gamma}
\newcommand{\TransitionSet}{\mathcal{T}}
\newcommand{\InitStateSet}{\mathcal{I}}

\newcommand{\norm}[1]{\lVert#1\rVert}

\newcommand{\MDP}{\langle \States, \Actions, \tfunc, \init, \reward, \discount \rangle}
\newcommand{\MDPd}{\langle \States, \Actions, \tfunc, \init, \reward, d \rangle}

\newcommand{\yrcite}[1]{\citeyearpar{#1}}

\theoremstyle{plain}
\newtheorem{theorem}{Theorem}
\newtheorem{proposition}[theorem]{Proposition}
\newtheorem{lemma}[theorem]{Lemma}
\newtheorem{corollary}[theorem]{Corollary}

\theoremstyle{definition}
\newtheorem{definition}[theorem]{Definition}
\newtheorem{assumption}[theorem]{Assumption}

\theoremstyle{remark}
\newtheorem{remark}[theorem]{Remark}

\newcommand{\red}[1]{\textcolor{red}{#1}}
\newcommand{\green}[1]{\textcolor{teal}{#1}}
\newcommand{\orange}[1]{\textcolor{orange}{#1}}
\newcommand{\blue}[1]{{#1}}

\newcommand{\SxS}{{\States{\times}\States}}
\newcommand{\SxA}{{\States{\times}\Actions}}
\newcommand{\SxAxS}{{\States{\times}\Actions{\times}\States}}

\newcommand{\M}{\mathcal{M}}
\newcommand{\Q}{Q}
\newcommand{\V}{V}
\newcommand{\A}{A}
\newcommand{\Return}{G}
\newcommand{\Evaluation}{J}
\newcommand{\EvaluationMCE}{\Evaluation^{\mathrm{H}}_\beta}
\newcommand{\Qfor}[1]{\Q^{#1}}
\newcommand{\Vfor}[1]{\V^{#1}}
\newcommand{\Afor}[1]{\A^{#1}}
\newcommand{\QStar}{\Q^\star}
\newcommand{\VStar}{\V^\star}
\newcommand{\AStar}{\A^\star}
\newcommand{\QSoft}{\Q^{\mathrm{S}}_\alpha}
\newcommand{\QSoftN}[1]{\Q^{\mathrm{S}}_{\alpha,#1}}

\newcommand{\J}{\Evaluation}

\newcommand{\softmax}{\mathrm{softmax}}

\newcommand{\policy}{\pi}
\newcommand{\OptimalPolicy}{\policy_\star}
\newcommand{\basePolicy}{\policy_0}
\newcommand{\greedyPolicyWrt}[1]{\policy^{#1}_{\star}}
\newcommand{\epsGreedyPolicyWrt}[1]{\policy^{#1}_{\epsilon}}
\newcommand{\BoltzmannPolicyWrt}[1]{\policy^{#1}_{\beta}}
\newcommand{\BoltzmannRationalPolicy}{\policy^\star_\beta}
\newcommand{\MaximumEntropyPolicy}{\policy_\beta}
\newcommand{\MCEPolicy}{\policy^{\mathrm{H}}_\beta}

\newcommand{\Expect}[2]{\mathbb{E}_{#1}\left[{#2}\right]}

\newcommand{\PS}{\mathrm{PS}_\gamma}
\newcommand{\SR}{S'\mathrm{R}_\tfunc}
\newcommand{\LS}{\mathrm{LS}}
\newcommand{\OP}{\mathrm{OP}_{\tfunc,\gamma}}
\newcommand{\Mx}{\mathrm{M}_\mathcal{X}}
\newcommand{\CS}{\mathrm{CS}}

\newcommand{\kPS}[1]{\mathrm{PS}_{\discount,\init}^{#1}}

\newcommand{\R}{\mathcal{R}}
\newcommand{\Rspace}{{\hat{\R}}}

\newcommand{\ORD}{\mathrm{ORD}_{\tfunc, \gamma}}
\newcommand{\OPT}{\mathrm{OPT}_{\tfunc, \gamma}}
\newcommand{\ORDstar}{\mathrm{ORD}_{\star}}

\newcommand{\starc}{d^{\mathrm{STARC}}_{\tfunc, \gamma}}

\newcommand{\eq}[1]{\equiv_{#1}}

\newcommand{\refines}{\preceq}
\newcommand{\refinesStrict}{\prec}

\begin{frontmatter}



\title{Partial Identifiability and Misspecification\\ in Inverse Reinforcement Learning}


\author{Joar Skalse, Alessandro Abate}

\affiliation{organization={Department of Computer Science, Oxford University},
            country={The United Kingdom}}

\begin{abstract}
The aim of Inverse Reinforcement Learning (IRL) is to infer a reward function $R$ from a policy $\pi$. This problem is difficult, for several reasons. First of all, there are typically multiple reward functions which are compatible with a given policy; this means that the reward function is only \emph{partially identifiable}, and that IRL contains a certain fundamental degree of ambiguity. Secondly, in order to infer $R$ from $\pi$, an IRL algorithm must have a \emph{behavioural model} of how $\pi$ relates to $R$. However, the true relationship between human preferences and human behaviour is very complex, and practically impossible to fully capture with a simple model. This means that the behavioural model in practice will be \emph{misspecified}, which raises the worry that it might lead to unsound inferences if applied to real-world data. 
In this paper, we provide a comprehensive mathematical analysis of partial identifiability and misspecification in IRL. Specifically, we fully characterise and quantify the ambiguity of the reward function for all of the behavioural models that are most common in the current IRL literature. We also provide necessary and sufficient conditions that describe precisely how the observed demonstrator policy may differ from each of the standard behavioural models before that model leads to faulty inferences about the reward function $R$. In addition to this, we introduce a cohesive framework for reasoning about partial identifiability and misspecification in IRL, together with several formal tools that can be used to easily derive the partial identifiability and misspecification robustness of new IRL models, or analyse other kinds of reward learning algorithms.
\end{abstract}

\end{frontmatter}



\newpage
\tableofcontents

\newpage
\section{Introduction}


In this section we provide the background and context for our work, an overview of major related work from the existing literature, and an overview of our contributions and the structure of this article. 

\subsection{Background and Context}

Inverse Reinforcement Learning (IRL) is an area of machine learning that is concerned with the problem of inferring what \emph{objective} an agent is pursuing based on the \emph{actions} which that agent takes within some environment \citep{ng2000}. 
IRL can be related to the notion of \emph{revealed preferences} in psychology and economics, since it aims to infer \emph{preferences} from \emph{behaviour} \citep{rothkopf2011}.  
There are many possible applications of IRL.
For example, it has been used in natural science contexts, as a tool for understanding animal behaviour \citep{celegansIRL}.
It can also be used in various engineering contexts; many important tasks can be represented as sequential decision-making problems, where the goal is to maximise a given \emph{reward function} over several steps \citep{sutton2018}. However, for many complex tasks it can be very challenging to manually specify a reward function that robustly incentivises the intended behaviour \citep[see e.g.\ ][]{clark2016faulty, paulus2017deep, ibarz2018reward, manheim_categorizing_2019, spec_gaming_DM, knox2022reward, rewardhacking, pang2023reward, karwowski2023goodharts}. 
In those contexts, IRL can be employed to automatically \emph{learn} a good reward function, based on demonstrations of correct behaviour \cite[e.g. ][]{abbeel2010,singh2019}. 
IRL can also be used as a tool for \emph{imitation learning}, where the goal is to use machine learning to clone the behaviour of an agent. In these cases, IRL can improve metrics such as out-of-distribution robustness \cite[e.g.][]{imitation2017}. 
Overall, IRL relates to many fundamental questions about goal-directed behaviour and agent-based modelling.

It is important to note that the properties which we desire an IRL method to have will depend on the context in which that IRL method will be applied. For example, when IRL is used as a tool for imitation learning, it is not fundamentally important that the inferred preferences actually correspond to the true intentions of the demonstrator, as long as they help the imitation learning process. 
However, when IRL is used to understand the preferences and motivations of an agent \cite[as in e.g.][etc]{CIRL}, then it is crucial that the inferred preferences actually capture the true intentions of the observed agent as faithfully as possible. We should note that this paper is written with mainly this latter motivation in mind.

IRL faces several fundamental challenges. First of all, the IRL problem is typically formalised as the problem of inferring a reward function $R$ from a policy $\pi$.\footnote{Throughout this section, we will sometimes refer to technical terms that we expect to be familiar to most readers. For a rigorous definition of these terms, see Section~\ref{sec:preliminaries}.} To do this, an IRL algorithm needs a model of how $\pi$ relates to $R$, which is referred to as a \emph{behavioural model}. However, under most behavioural models, there are typically multiple reward functions that are consistent with each given policy. For example, two different reward functions may result in exactly the same optimal policy. In that case, we cannot distinguish between those reward functions by observing their optimal policy. This means that the reward function is ambiguous, or \emph{partially identifiable}, based on this data source. 
This ought to be intuitive: informally, there can be multiple different reasons for doing something, which means that observed behaviour sometimes can be explained in multiple different ways.
For this reason, the IRL problem is fundamentally ambiguous, and we should \emph{prima facie} expect this ambiguity to be irreducible. As such, it is important to fully characterise and quantify this ambiguity in order to clearly understand its impact. 

Another core challenge for IRL is that it must assume a specific relationship between the observed policy and the underlying reward function, i.e., it requires a specific behavioural model. In reality, the relationship between human preferences and human behaviour is incredibly complex: indeed, a complete account of this relationship would amount to a solution to many of the main questions in fields such as cognitive science, behavioural psychology, decision science, and artificial intelligence, etc.
By contrast, most IRL algorithms are based on rather simple behavioural models, that typically correspond to some form of noisy optimality (c.f.\ Section~\ref{sec:preliminaries}).
In fact, there are observable differences between human data and data synthesised using these standard assumptions \citep[see, e.g.,][]{orsini2021}. 
This means that these behavioural models are \emph{misspecified}, which raises the concern that they might systematically lead to flawed inferences if applied to real-world data.

Resolving the issue of misspecification in IRL is fundamentally difficult. 
Of course, we can incorporate findings from behavioural psychology to create behavioural models that are more and more accurate (and hence subject to less and less misspecification). Similarly, we can use machine learning to \emph{learn} behavioural models from data \citep[an approach pioneered by][]{shah2019feasibility}, which may also yield more accurate models.
However, it will never be realistically possible to create a behavioural model that is completely free from all forms of misspecification. For this reason, it is important to understand how sensitive the IRL problem is to misspecification of the underlying behavioural model:  is a mostly accurate behavioural model sufficient to ensure that the inferred reward function likewise is mostly accurate, or can a slight error in the behavioural model lead to a large error in the inferred reward? In the former case misspecification may be a manageable issue, whereas in the latter case it may be practically insurmountable. 


In this paper, we provide a comprehensive theoretical study of \emph{partial identifiability} and of  \emph{misspecification} in inverse reinforcement learning. 
To do this, we first introduce a cohesive theoretical framework for analysing partial identifiability and misspecification robustness in IRL, and derive a number of core results and formal tools within this framework.
We then apply these tools to exactly characterise the \emph{ambiguity} of the reward function given several popular behavioural models, and derive necessary and sufficient conditions which exactly describe what forms of misspecification these behavioural models will tolerate.
The tools we introduce can also be used to easily derive the partial identifiability and misspecification robustness of new behavioural models, beyond those we consider explicitly. 
Our analysis is general, as it is carried out in terms of \emph{behavioural models}, rather than \emph{algorithms}. This means that our results will apply to any IRL algorithm based on these behavioural models. 

The motivation behind our work is to provide a theoretically principled understanding of whether and when IRL methods are (or are not) applicable to the problem of inferring a person's (true) preferences and intentions. 
It will never be realistically possible to fully eliminate ambiguity and misspecification from IRL, except possibly in very narrow domains. 
Therefore, if we wish to use IRL as a tool for preference elicitation, then it is crucial to have a good understanding of how IRL is affected by partial identifiability and misspecified behavioural models.
In this paper, we aim to contribute towards building this formal understanding. 

\subsection{Related Work}\label{sec:related_work}

This paper is based on a number of earlier conference papers, namely \citet{skalse2022, misspecification1, skalse2023starc, skalse2024quantifyingsensitivityinversereinforcement}. Specifically, the results in Section~\ref{sec:comparing_reward_functions} are based on the work in \citet{misspecification1} and \citet{skalse2023starc}; the results in Section~\ref{sec:partial_identifiability} are grounded in work from \citet{skalse2022}; the results in Section~\ref{sec:misspecification_1} are in large part based on those in \citet{misspecification1}; and the results in Section~\ref{sec:misspecification_2} depend on those in \citet{skalse2024quantifyingsensitivityinversereinforcement}. However, this paper also contains a large number of novel results that cannot be found in any earlier work, especially in Section~\ref{sec:misspecification_1_wider_classes}, \ref{appendix:generalising_analysis}, and part of Section~\ref{sec:partial_identifiability}. 
In addition to these new results, this paper also contributes by presenting all results with a mature and cohesive narrative and with unified terminology, which will help with making the results more accessible. 

\smallskip 

The issue of partial identifiability in IRL is well-known, and has been studied in a number of previous works. Indeed, the first paper to formally introduce the IRL problem \citep{ng2000} acknowledges the issue of partial identifiability, and characterises the ambiguity of the reward function under the assumption that the observed policy is optimal and the assumption that the reward of a transition $\langle s,a,s' \rangle$ only depends on the state $s$. This work is extended by \citet{dvijotham2010}, who study partial identifiability in IRL for a particular type of environment called linearly-solvable Markov decision processes (LMDPs). Partial identifiability in IRL is also studied by \citet{cao2021}. In this paper, it is assumed that the observed policy maximises causal entropy (c.f.\ Section~\ref{sec:preliminaries}), and that the reward of a transition $\langle s,a,s' \rangle$ only depends on the state $s$ and action $a$ (but not the subsequent state $s'$). \citet{cao2021} also show that the ambiguity of the reward function in this setting can be reduced by combining information from multiple environments.
Also relevant is \citet{towardstheoreticalunderstandingofIRL}, who generalise the results of \citet{cao2021} by also considering environments with constraints, as well as other types of regularisation. They also provide an analysis of the sample complexity of the IRL problem in this setting.

We extend this previous work on partial identifiability in IRL by providing a more complete analysis, and by integrating our analysis into the study of misspecification robustness. In particular, our analysis explicitly considers three types of policies --- optimal policies, maximal causal entropy policies, and Boltzmann-rational policies. Of these, only the first two have been considered by previous works. Moreover, unlike \citet{ng2000} and \citet{cao2021}, we allow the reward of a transition $\langle s,a,s' \rangle$ to depend on each of $s$, $a$, and $s'$, and show that this reveals important additional structure that is not captured by the analysis of \citet{ng2000} or \citet{cao2021}. In addition to this, we provide a general, unified framework for reasoning about both partial identifiability and misspecification robustness, and integrate our results into this framework. However, unlike \citet{dvijotham2010}, we will not consider LMDPs. Moreover, unlike \citet{towardstheoreticalunderstandingofIRL}, we will not consider environments with constraints, other types of regularisation, or finite-sample bounds. Extending our analysis to cover these cases will be a direction for future work.

It is well-known that the standard behavioural models of IRL are misspecified in most applications.
However, there has nonetheless so far not been much research on how sensitive IRL is to misspecification, and what forms of misspecification it can tolerate. 
There are previous papers which aim to \emph{reduce} misspecification in IRL, by creating more realistic behavioural models.
For example, most work in IRL assumes that the observed agent discounts exponentially. However, there is an extensive body of work in the behavioural sciences which suggests that humans are better modelled as discounting \emph{hyperbolically} \cite[see e.g.\ ][]{inconsistencyempirical,mazur1987adjusting, green1996exponential, againstnormativediscounting, discounting_review}. For this reason, \citet{IgnorantAndInconsistent} analyse IRL for agents that use an approximation of hyperbolic discounting.  
Similarly, most work in IRL assumes that the observed agent is \emph{risk-neutral}, whereas humans often are \emph{risk-sensitive} \citep[see e.g.\ ][]{allais, ellsberg, prospecttheory}. For this reason, \citet{singh2018risksensitive} analyse IRL for agents with different forms of risk-sensitivity. Also relevant is \citet{chan2021human}, who provide an empirical study of IRL which incorporates many different models from the behavioural psychology literature. \citet{chan2021human} also empirically confirm that misspecified behavioural models can lead to large errors in the inferred reward, but that this error can be reduced when the misspecification is reduced.

These approaches to reducing misspecification rely on creating more accurate behavioural models by manually incorporating more information about human behaviour. Another approach to reducing misspecification is to try to \emph{learn} a behavioural model from data. \citet{shah2019feasibility} carry out an empirical analysis of IRL where the behavioural model and the underlying reward function are learnt in two different steps, but conclude that this approach comes with significant practical challenges. By contrast, \citet{armstrong2017} carry out a theoretical analysis of the setting where the reward function and the behavioural model are learnt at the same time, from a single stream of data. Notably, \citet{armstrong2017} derive several impossibility theorems for this setting. In particular, they show that this problem setting always will admit several degenerate solutions that fail to solve the problem in a satisfactory way, given that the learning algorithm has an inductive bias towards joint simplicity.

These earlier works all aim to \emph{reduce} misspecification in IRL, by creating more accurate behavioural models. By contrast, we are not focusing on the problem of \emph{reducing} misspecification. Rather, our work aims to understand how \emph{sensitive} IRL is to misspecification of the behavioural model.  As such, our analysis of misspecification is distinct from this earlier work, although it is very relevant to it. Our work aims to answer whether or not IRL will yield accurate inferences given that the behavioural model is misspecified, which in turn would tell us how much misspecification has to be removed (be that manually or by a learning algorithm) before we can make accurate inferences about the reward function through IRL. 

There are some previous papers that (like this paper) study the question of how robust IRL is to misspecification of the behavioural model. In particular, \citet{choicesetmisspecification} study the effects of \emph{choice set misspecification} in IRL (and reward inference more broadly), following the formalism of \citet{RRIC}. They also show that choice set misspecification in some cases can be catastrophic. Also relevant is \cite{viano2021robust}, who study the effects of misspecified \emph{environment dynamics}. They also propose a bespoke IRL algorithm that is meant to be more robust to such misspecification. 
By contrast, we present a broader analysis that covers \emph{all} forms of misspecification, within a single framework. Our work is therefore much wider in scope, and aims to provide necessary and sufficient conditions which fully describe all kinds of misspecification to which each behavioural model is robust.

Another relevant paper is \citet{hong2022sensitivity}, who also study how sensitive IRL is to misspecification of the behavioural model. Our work is more complete than this earlier work in several important respects. 
To start with, our problem setup is both more realistic, and more general. 
In particular, in order to quantify how robust IRL is to misspecification, we first need a way to formalise what it means for two reward functions to be \enquote{close}. \citet{hong2022sensitivity} formalise this in terms of the $L_2$-distance between the reward functions. However, this choice is problematic, because two reward functions can be very dissimilar even though they have a small $L_2$-distance, and vice versa (cf.\ Section~\ref{sec:STARC}). By contrast, our analysis is carried out in terms of specially selected \emph{metrics} on the space of all reward functions, which are backed by strong theoretical guarantees (cf.\ Section~\ref{sec:STARC}). 
Moreover, \citet{hong2022sensitivity} assume that there is a \emph{unique} reward function that maximises fit to the training data, but this is violated in most real-world cases \citep{ng2000, dvijotham2010, cao2021, kim2021, schlaginhaufen2023identifiability}. 
In addition to this, many of their results also assume \enquote{strong log-concavity}, which is a rather opaque condition that is left mostly unexamined. Indeed, \citet{hong2022sensitivity} explicitly do not answer if strong log-concavity should be expected to hold under typical circumstances. 
Our work is not subject to any of these limitations. Moreover, unlike \citet{hong2022sensitivity}, we also integrate our analysis of misspecification with the study of partial identifiability, which is crucial for gaining a complete understanding of the problem. In addition to this, we also present a large number of novel results that are not analogous to any results derived by \citet{hong2022sensitivity}. This includes --- among other things --- necessary and sufficient conditions that fully describe what kinds of misspecification many behavioural models will (or will not) tolerate. 

In our analysis, we will provide a method for quantifying the difference between reward functions. Previous works have also considered this problem. In particular, \citet{epic} provide a pseudometric on the space of all reward functions, which they call EPIC (Equivalent Policy Invariant Comparison). They also show that EPIC induces a regret bound. Similarly, \citet{dard} also provide a pseudometric for reward functions, which they call DARD (Dynamics-Aware Reward Distance). While EPIC is invariant to the transition dynamics of the environment, DARD incorporates some information about the transition function, which can lead to tighter correlation to worst-case regret. However, unlike \citet{epic}, \citet{dard} do not show that DARD induces a bound on worst-case regret. We will also introduce a family of pseudometrics on the space of all reward functions. However, unlike EPIC and DARD, our pseudometrics induce much stronger theoretical guarantees. 

There is also other work which studies the question of what happens if a reward function is changed or misspecified.
For example, \citet{rewardhacking} show that if two reward functions $R_1, R_2$ are \emph{unhackable}, in the sense that there are no policies $\pi_1, \pi_2$ such that $J_1(\pi_1) > J_1(\pi_2)$ but $J_2(\pi_1) < J_2(\pi_2)$, then either $R_1$ and $R_2$ induce the same ordering of policies, or at least one of them assigns the same value to all policies. Similarly, \citet{consequences_of_misaligned_AI} consider the case when a reward function $R_2$ depends on a strict subset of the features which are relevant to another reward function $R_1$, and show that optimising $R_2$ in this case may lead to a policy that is arbitrarily bad according to $R_1$, given certain assumptions. Related to this work is also e.g.\ \citet{reward_misspecification} and \citet{pang2023reward}, who carry out an empirical investigation of the consequences of misspecified reward functions in certain environments. Another relevant paper is \citet{karwowski2023goodharts}, who study the effects of reward misspecification through the lens of \emph{Goodhart's Law}, and \citet{limitations_of_Markov_rewards}, who provide examples of natural preference structures which cannot be expressed by reward functions at all.

\subsection{Contributions and Structure of This Article 
}

This paper makes several core contributions. First of all, in Section~\ref{sec:frameworks}, we introduce a framework for reasoning about partial identifiability and misspecification in IRL. This includes a number of formal definitions that describe what it means for an application to tolerate the ambiguity of a reward learning method, and what it means for a behavioural model to be robust to a given form of misspecification, as well as methods for quantifying partial identifiability and misspecification robustness. We also derive a number of lemmas and general results within this framework, that make it easy to reason about partial identifiability and misspecification. 

In Section~\ref{sec:comparing_reward_functions}, we provide several results related to the issue of comparing reward functions. Specifically, we 
provide necessary and sufficient conditions that describe when two reward functions have the same optimal policies, or the same ordering of policies. We additionally introduce a family of pseudometrics for continuously quantifying the difference between reward functions. We show that these pseudometrics induce both an upper and a lower bound on worst-case regret, and that any pseudometric with this property must be bilipschitz equivalent to ours. Our later analysis builds on these results.

In Section~\ref{sec:partial_identifiability}, we fully characterise the ambiguity of the reward function given several different behavioural models, and we describe the practical consequences of this ambiguity. 
Notably, we show that the ambiguity of the reward is unproblematic for each of the standard behavioural models as long as the learnt reward is used in the same environment it was learnt in, but that we cannot guarantee robust transfer to new environments. 
In Sections~\ref{sec:misspecification_1} and \ref{sec:misspecification_2}, we analyse the question of misspecification, and derive necessary and sufficient conditions that fully describe what forms of misspecification each of the standard behavioural models will tolerate. We also study a few specific types of misspecification in greater depth, such as misspecification of the parameters of the behavioural model or perturbations of the observed policy. We find that the standard behavioural models do tolerate some forms of misspecification, but that they are highly sensitive to other forms of misspecification: notably, we find that even mild misspecification of the discount factor $\gamma$ or transition function $\tfunc$ can lead to very large errors in the inferred reward function. 

The proofs of all theorems stated in the main text are provided in \ref{appendix:proofs}. In \ref{appendix:generalising_analysis}, we discuss how to extend our analysis further, by generalising the definitions we introduce in Section~\ref{sec:frameworks}. We show that most of our analysis from Sections~\ref{sec:comparing_reward_functions}-\ref{sec:misspecification_2} carries over if our definitions are generalised.

\section{Technical Background}\label{sec:preliminaries}

In this section, we introduce the technical prerequisites that are needed to understand the rest of our paper, together with our choice of notation. We also introduce all the assumptions we will make about the environment. For a more in-depth overview of reinforcement learning, see e.g.\ \citet{sutton2018}, and for a more in-depth overview of inverse reinforcement learning, see e.g.\ \citet{arora2020survey} or \citet{Adams2022-ji}.

\subsection{Reinforcement Learning}\label{sec:RL_preliminaries}

A \emph{Markov Decision Processes} (MDP) is a tuple
$(\States, \Actions, \tfunc, \init, 
\reward, \discount)$ where 
$\States$ is a set of \emph{states},
$\Actions$ is a set of \emph{actions},
$\tfunc : \SxA \rightsquigarrow \States$ is a \emph{transition function},
$\init \in \Delta(\States)$ is an \emph{initial state distribution}, 
$\reward : \SxAxS \to \mathbb{R}$ is a \emph{reward function}, and
$\discount \in (0, 1)$ is a \emph{discount rate}.
Here $f : X \rightsquigarrow Y$ denotes a probabilistic mapping from $X$ to $Y$.
A (stationary) \textit{policy} is a function $\policy : \States \rightsquigarrow \Actions$, which encodes the behaviour of an agent in each state of an MDP.
We use $\Pi$ to denote the set of all stationary policies.
A triple $\langle s,a,s' \rangle \in \SxAxS$ is a \emph{transition}, and a \emph{trajectory} $\xi = \langle s_0, a_0, s_1, a_1 \dots \rangle$ is an infinite (potentially repeating) path through an MDP, i.e.\ an element of $(\SxA)^\omega$. If $s_0 \in \mathrm{supp}(\init)$ and $s_{t+1} \in \mathrm{supp}(\tfunc(s_t, a_t))$ for each $t \in \mathbb{N}$, then we say that $\xi$ is a \emph{possible} trajectory, and otherwise it is \emph{impossible}.

In this paper, we assume that $\States$ and $\Actions$ are finite. Moreover, we also assume that all states in $S$ are reachable under $\tfunc$ and $\init$ (i.e., for all states $s$, there exists a possible trajectory which includes $s$). This is primarily a theoretical convenience. Also note that if an MDP has unreachable states, then we may simply remove these states from $\States$. 

The \emph{return function} $\Return : (\SxA)^\omega \to \mathbb{R}$ gives the
cumulative discounted reward of each trajectory, i.e.\
$\Return(\xi) = \sum_{t=0}^{\infty} \discount^t \reward(s_t, a_t, s_{t+1})$. Similarly, the \emph{evaluation function} $\Evaluation : \Pi \to \mathbb{R}$ gives the expected trajectory return of each policy, $\Evaluation(\pi) = \Expect{\xi \sim \pi}{G(\xi)}$.
The \emph{value function} $V^\pi : \States \to \mathbb{R}$ of a policy $\pi$ encodes the expected future cumulative discounted reward from each state when following that policy $\pi$. 
The $Q$-function $Q^\pi : \SxA \to \mathbb{R}$ of a policy $\pi$ is given by $Q^\pi(s,a) = \Expect{S' \sim \tfunc(s,a)}{R(s,a,S') + \discount \Vfor\policy(S')}$, i.e.\ the expected future cumulative discounted reward conditional on taking action $a$ in state $s$, and then following the policy $\pi$. Similarly, the \emph{advantage function} of $\pi$ is given by $A^\pi(s, a) = Q^\pi(s, a) - V^\pi(s)$. We say that the \emph{ordering of policies} in an MDP is the ordering on $\Pi$ that is induced by $\J$.

Both $V^\pi$ and $Q^\pi$ can also be defined in terms of fixed points, because they   
satisfy the following \emph{Bellman equations}:  
\begin{equation}\label{equation:V_pi_recursion}
V^\pi(s) = \Expect{A \sim \pi(s), S' \sim \tfunc(s,A)}{R(s,A,S') + \gamma V^\pi(S')},
\end{equation}
\begin{equation}\label{equation:Q_pi_recursion}
Q^\pi(s,a) = \Expect{S' \sim \tfunc(s,a), A' \sim \pi(S')}{R(s,a,S') + \gamma Q^\pi(S',A')}.
\end{equation}
Both of these equations specify a recursion, and these recursions can be shown to be contraction maps.  Thus, $V^\pi$ and $Q^\pi$ are the only functions which satisfy Equations~\ref{equation:V_pi_recursion} and \ref{equation:Q_pi_recursion} respectively. Similarly, we can specify a unique function $V^\star : \States \to \mathbb{R}$ and a unique function $Q^\star : \SxA \to \mathbb{R}$ via the following two Bellman recursions:
\begin{equation}\label{equation:optimal_V_recursion}
V^\star(s) = \max_{a \in \Actions}\Expect{S' \sim \tfunc(s,a)}{R(s,a,S') + \gamma V^\star(S')},
\end{equation}
\begin{equation}\label{equation:optimal_Q_recursion}
Q^\pi(s,a) = \Expect{S' \sim \tfunc(s,a)}{R(s,a,S') + \gamma \max_{a \in \Actions} Q^\pi(S',a)}. 
\end{equation}
We refer to $V^\star$ as the \emph{optimal value function}, and to $Q^\star$ as the \emph{optimal $Q$-function}. We can also define an \emph{optimal advantage function} $A^\star$ as $A^\star(s,a) = Q^\star(s,a) - V^\star(s)$. Note that $A^\star$ always is non-positive.

If an action $a$ maximises $Q^\star(s,a)$ (or, equivalently, $A^\star(s,a)$) in some state $s$, then we say that $a$ is an \emph{optimal action} in $s$. If a policy $\pi$ only takes optimal actions with positive probability, then we say that $\pi$ is an \emph{optimal policy}. We will sometimes denote an optimal policy as $\pi^\star$. If $\pi$ is optimal, then $\pi$ maximises the evaluation function $\J$. 
However, the converse does not hold. To see this, note that $\pi$ may maximise $\J$, even if $\pi$ takes sub-optimal actions in states that $\pi$ visits with probability $0$. Also note that if $\pi$ is optimal, then $Q^\pi = Q^\star$ and $V^\pi = V^\star$. 
Since Equations~\ref{equation:optimal_V_recursion} and \ref{equation:optimal_Q_recursion} always have a solution, there is always at least one optimal policy. Moreover, the set of all optimal policies form a convex set, given by all distributions over the optimal actions in each state. 

In this paper, we will often talk about pairs or sets of reward functions. In these cases, we will give each reward function a subscript $R_i$, and use $\Evaluation_i$, $\VStar_i$, and $\Vfor{\pi}_i$, and so on, to denote $R_i$'s evaluation function, optimal value function, and $\pi$ value function, and so on. We reserve $R_0$ for the reward function that is zero everywhere, i.e.\ $R_0(s,a,s') = 0$ for all $s,a,s'$. Moreover, if a reward function $R$ satisfies that $J(\pi) = J(\pi')$ for all policies $\pi, \pi'$, then we say that $R$ is \emph{trivial}. $R_0$ is trivial, but there are other trivial reward functions as well (c.f.\ e.g.\ Proposition~\ref{prop:change_from_potentials} or Theorem~\ref{thm:policy_ordering}). We will also use $\R$ to denote the set of all possible reward functions.

Note that we have have defined reward functions $R$ as having the type signature $\SxAxS \to \mathbb{R}$. In practice, it is common to instead consider reward functions with the type signature $\SxA \to \mathbb{R}$. The reason for this is that, for any reward function $R_1 : \SxAxS \to \mathbb{R}$, we can define a second reward function $R_2 : \SxA \to \mathbb{R}$ as
$$
R_2(s,a) = \Expect{S' \sim \tfunc(s,a)}{R_1(s,a,S')}.
$$
It is now easy to see that $J_2 = J_1$, $Q^\star_2 = Q^\star_1$, and so on.  Thus, we arguably do not gain any expressive power from allowing the reward of a transition $\langle s,a,s' \rangle$ to depend on $s'$. However, it is important to note that $R_1$ and $R_2$ only are equivalent relative to one particular transition function $\tfunc$. Moreover, in some of our results, we will quantify over multiple transition functions. We must therefore allow the reward to depend on $s'$, to ensure that our results are fully general.

The \emph{occupancy measure} $\eta^\pi$ of a policy $\pi$ is the $(|\States||\Actions||\States|)$-dimensional vector in which the value of the $(s,a,s')$'th dimension is given by
$$
\sum_{t=0}^\infty \gamma^t \mathbb{P}_{\xi \sim \pi}\left(S_t, A_t, S_{t+1} = s,a,s' \right), 
$$
where the probability is over a trajectory $\xi$ sampled from $\pi$ (assuming the first state is sampled from $\init$ and transitions are sampled from $\tfunc$).
In other words, the occupancy measure $\eta^\pi$ of $\pi$ measures the cumulative discounted probability with which $\pi$ visits each transition. To prove our results, it will sometimes be useful to map policies to their occupancy measures. One reason for this is that, if we represent the reward function $R$ as an $(|\States||\Actions||\States|)$-dimensional vector, then $\J(\pi) = \eta^\pi \cdot R$. 
In other words, occupancy measures allow us to decompose $\J$ into two separate steps, the first of which is independent of the reward function, and the second of which is linear. We will sometimes use $\Omega$ to denote the set of all occupancy measures, i.e.\ $\Omega = \{\eta^\pi : \pi \in \Pi\}$, where $\Pi$ is the set of all (stationary) policies.

\subsection{Inverse Reinforcement Learning}\label{sec:IRL_preliminaries}

The aim of an IRL algorithm is to infer a representation of an agent's \emph{preferences} based on their \emph{behaviour}. It is typically assumed that these preferences can be represented as a reward function, and that the observed behaviour has the form of a policy. It is also typically assumed that the environment of the agent can be modelled as an MDP. The IRL problem can thus loosely be stated as follows. There is an unknown reward function $R$. You get to observe a policy $\pi$, which has been computed from $R$ relative to some transition function $\tfunc$, initial state distribution $\init$, and discount factor $\gamma$. We may or may not assume that $\tfunc$, $\init$, and $\gamma$ are known.\footnote{Note that we generally have to assume that the set of states $\States$ and the set of actions $\Actions$ are known, since these are the domain and codomain of $\pi$.} The goal is then to infer a reward function $R_H$, that is as similar as possible (in some relevant sense) to the true reward function $R$. 

An IRL algorithm must make assumptions about how the observed policy $\pi$ relates to the underlying reward function, $R$. These assumptions are referred to as the \emph{behavioural model}. In some cases, the behavioural model simply assumes that $\pi$ is optimal under $R$ \citep[e.g.\ ][]{ng2000}. However, this assumption is often unrealistic; people sometimes make mistakes, and are subject to limited information and limited cognitive resources. As such, many IRL algorithms make use of other behavioural models. One common model is \emph{Boltzmann rationality} \citep[e.g.\ ][]{ramachandran2007}, which says that
$$
\mathbb{P}(\pi(s) = a) = \left( \frac{\exp \beta Q^\star(s,a)}{\sum_{a' \in \Actions} \exp \beta Q^\star(s,a')} \right).
$$
Here $\beta \in \mathbb{R}^+$ is known as a \emph{temperature parameter}.
When $\pi$ satisfies this relationship, we refer to it as a \emph{Boltzmann-rational policy}.
The function $f : \mathbb{R}^n \to \mathbb{R}^n$ given by $f(v)_i = \exp \beta v_i / \sum_{j=1}^n \exp \beta v_j$ is known as the \emph{softmax function} for temperature $\beta$. Thus, a Boltzmann-rational policy is given by applying a softmax function to the optimal $Q$-function. Intuitively speaking, such a policy takes every action with positive probability, but is more likely to take actions with high value than actions with low value. Boltzmann-rationality can therefore be seen as a form of noisy optimality.

Another common behavioural model is \emph{causal entropy maximisation} \citep[e.g.\ ][]{ziebart2010thesis}. This behavioural model specifies an alternative optimisation criterion, known as the \emph{maximal causal entropy} (MCE) objective:
$$
\J^\mathrm{MCE}(\pi) = \Expect{\xi \sim \pi}{\sum_{t=0}^{\infty} \discount^t (\reward(s_t, a_t, s_{t+1}) + \alpha H(\pi(s_t)))}.
$$
Here $\alpha \in \mathbb{R}^+$ is a \emph{weight}, and $H$ is the Shannon entropy function. A policy $\pi$ which maximises the MCE objective is referred to as an MCE policy. Moreover, let the \emph{soft $Q$-function} $\QSoft : \SxA \to \mathbb{R}$ be the function that is defined by the following Bellman recursion:
\begin{equation}\label{equation:soft_Q_recursion}
\QSoft(s,a) = \Expect{S' \sim \tfunc(s,a)}{R(s,a,S') + \gamma \alpha \log
        \sum_{a' \in \Actions} \exp\left(\left(\frac{1}{\alpha}\right)\QSoft(S', a')\right)}.     
\end{equation}
This recursion is a contraction map, and thus has a unique solution. Moreover, it can be shown that the MCE policy is given by  
$$
\mathbb{P}(\pi(s) = a) = \left( \frac{\exp (1/\alpha) \QSoft(s,a)}{\sum_{a' \in \Actions} \exp (1/\alpha) \QSoft(s,a')} \right),
$$
i.e., by applying the softmax function with temperature $1/\alpha$ to $\QSoft$ \citep[see][their Theorem 1 and 2]{haarnoja2017}. Note that this implies that the MCE policy always is unique. 
Intuitively speaking, the MCE policy maximises expected cumulative discounted reward, subject to a regularisation term that encourages the policy to be as stochastic as possible. One way to justify the MCE objective as a model of human behaviour is to note that a boundedly rational agent presumably is less likely to solve a given problem using a strategy that is highly sensitive to mistakes.\footnote{As an intuitive example, suppose you are choosing between two different train routes between a point A and some destination B, where the first route is a direct connection, and the second route involves several stops. Suppose also that the second route is slightly faster if you do not miss any connecting trains, but longer if you do miss one or more of the connections. A boundedly rational agent may then be more likely to pick the first route, even if an optimal agent would pick the second route. The entropy regularisation in the MCE objective roughly captures this kind of reasoning, which may make it a plausible model of bounded rationality.}

In the current literature, most IRL algorithms assume that the observed policy is either optimal, Boltzmann-rational, or MCE-optimal. Therefore, we will refer to these behavioural models as the \emph{standard} behavioural models, and focus on them in our analysis. We will however additionally present many results that hold for wider classes of behavioural models. 

There are many ways to design an IRL algorithm around a given behavioural model \citep[see e.g.\ ][etc]{ng2000, ramachandran2007, ziebart2010thesis, haarnoja2017}. However, the details of these algorithms will not be important for understanding our paper, because our analysis will be carried out primarily in terms of behavioural models, rather than specific algorithms. In this way, we can derive results that will apply to \emph{any} IRL algorithm that is based on a given behavioural model.

\subsection{Metrics, Pseudometrics, and Norms}\label{sec:prelims_metrics_and_norms}

In our analysis, we will often quantify the difference between different kinds of objects (especially reward functions). To do this, we will make use of \emph{metrics}, \emph{pseudometrics}, and \emph{norms}. Given a set $X$, a function $m : X \times X \to \mathbb{R}$ is a \emph{pseudometric} on $X$ if it satisfies the following axioms:
\begin{enumerate}
    \item Indiscernibility of identicals: $m(x,x) = 0$ for all $x \in X$.
    \item Positivity: $m(x,y) \geq 0$ for all $x, y \in X$.
    \item Symmetry: $m(x,y) = m(y,x)$ for all $x, y \in X$.
    \item Triangle inequality: $m(x,z) \leq m(x,y) + m(y,z)$ for all $x, y, z \in X$.
\end{enumerate}
If $m$ additionally satisfies the identity of indiscernibles, which says that $m(x,y) \neq 0$ for all $x, y \in X$ such that $x \neq y$, then $m$ is a \emph{metric}. Every metric is a pseudometric, but not vice versa.

Given a vector space $V$, a function $n : V \to \mathbb{R}$ is a \emph{norm} on $V$ if $n$ satisfies the following axioms:
\begin{enumerate}
    \item Non-negativity: $n(v) \geq 0$ for all $v \in V$.
    \item Positive definiteness: $n(v) = 0$ if and only if $v$ is the zero vector.
    \item Absolute homogeneity: $n(c \cdot v) = c \cdot m(v)$ for all $v \in V$ and $c \in \mathbb{R}$.
    \item Triangle inequality: $n(v+w) \leq n(v) + n(w)$ for all $v,w \in V$.
\end{enumerate}
For any real number $p \geq 1$, the function $L_p : \mathbb{R}^d \to \mathbb{R}$ given by
$$
L_p(v) = \left(\sum_{i=1}^{d} |v_i|^p\right)^{1/p}
$$
is a norm on $\mathbb{R}^d$. The function $L_\infty : \mathbb{R}^d \to \mathbb{R}$, given by $L_\infty(v) = \max_i |v_i|$, is also a norm. Moreover, if $n : \mathbb{R}^d \to \mathbb{R}$ is a norm , and $M : \mathbb{R}^d \to \mathbb{R}^d$ is an invertible matrix, then $n \circ M$ is also a norm.

If $n : \mathbb{R}^n \to \mathbb{R}$ is a norm, then the function $m : \mathbb{R}^n \times  \mathbb{R}^n \to \mathbb{R}$ given by $m(v,w) = n(v-w)$ is a metric on $\mathbb{R}$. For convenience, we will (in a mild overload of notation) also denote this metric using $n$, so that e.g.\ 
$$
L_2(v,w) = \left(\sum_{i=1}^{|v|} |v_i - w_i|^2\right)^{1/2},
$$
and so on. Every norm corresponds to a metric in this way, but not every metric corresponds to a norm.

Given a set $X$, and two metrics $m_1, m_2$ on $X$, if there exists positive constants $\ell, u \in \mathbb{R}^+$ such that
$$
\ell \cdot m_1(x,y) \leq m_2(x,y) \leq u \cdot m_1(x,y)
$$
for all $x,y \in X$, then $m_1$ and $m_2$ are said to be \emph{bilipschitz equivalent}. All norms (but not all metrics) are bilipschitz equivalent on any finite-dimensional vector space.


\section{New Definitions and Formalisms}\label{sec:frameworks}

In this section, we introduce the theoretical frameworks that underpin our further analysis. First, we will introduce a number of definitions that formalise the notion of \emph{partial identifiability}. After this, we will introduce two related but distinct ways of formalising \emph{misspecification robustness}, and discuss the benefits of each approach. In addition, we will present several relevant 
intermediate results about our framework. These lemmas will be used to prove our later results, but are also insightful in their own right. 

Some of the definitions we provide in this section will be given relative to an equivalence relation $\equiv$ or a pseudometric $d^\R$ on $\R$, the set of all possible rewards. The purpose of these is to quantify differences between reward functions (in particular, the difference between the learnt reward function and the true reward function). 
In this section, we will not discuss the issue of which equivalence relation $\equiv$ or pseudometric $d^\R$ to use --- this question will instead be addressed in Section~\ref{sec:comparing_reward_functions}.

\subsection{Partial Identifiability}\label{sec:frameworks_ambiguity}

In this section, we describe the framework that we will use to analyse \emph{partial identifiability}. 
Before going into the specifics, let us recall the details of the problem. In IRL, there are typically multiple reward functions that are consistent with a given data source, even in the limit of infinite data.
This means that the reward function is ambiguous, or \emph{partially identifiable}, based on such data sources. We wish to characterise this ambiguity. 

At the same time, it is important to note that it often is unnecessary to identify a reward function uniquely, because all plausible reward functions might lead to the same outcome in a given application. For example, if we want to learn a reward function in order to compute an optimal policy, then it is enough to learn a reward function that has the same optimal policies as the true reward function. 
It is therefore important to also consider the \emph{ambiguity tolerance} of 
various applications.



Our framework for characterising partial identifiability in IRL is based on the following three definitions:


\begin{definition}\label{def:reward_object_behavioural_model}
    We say that a \emph{reward object} is a function $f : \R \to X$, where $\R$ is the set of all reward functions, and $X$ is any set. If $X$ is the set of all policies $\Pi$, then we refer to $f$ as a \emph{behavioural model}. 
\end{definition}

\begin{definition}\label{def:invariance_partition}
Given a reward object $f : \mathcal{R} \to X$, the \emph{invariance partition} $\mathrm{Am}(f)$ of $f$ is the partition of $\mathcal{R}$ according to the equivalence relation $\eq{f}$ where $R_1 \eq{f} R_2$ if and only if $f(R_1) = f(R_2)$.
\end{definition} 

\begin{definition}\label{def:refinement}
Given two partitions $P$, $Q$ of $\R$, if $R_1 \eq{P} R_2 \implies R_1 \eq{Q} R_2$ then we write $P \refines Q$. Given two reward objects $f : \mathcal{R} \to X$, $g : \mathcal{R} \to Y$, if $\mathrm{Am}(f) \refines \mathrm{Am}(g)$ then we say that $f$ is \emph{no more ambiguous} than $g$.
If $\mathrm{Am}(f) \refines \mathrm{Am}(g)$ but not $\mathrm{Am}(g) \refines \mathrm{Am}(f)$, then we write $\mathrm{Am}(f) \refinesStrict \mathrm{Am}(g)$ and say that $f$ is \emph{strictly less ambiguous} than $g$.
\end{definition}

\begin{figure}[H]
    \centering
    \includegraphics[width=3\textwidth/4]{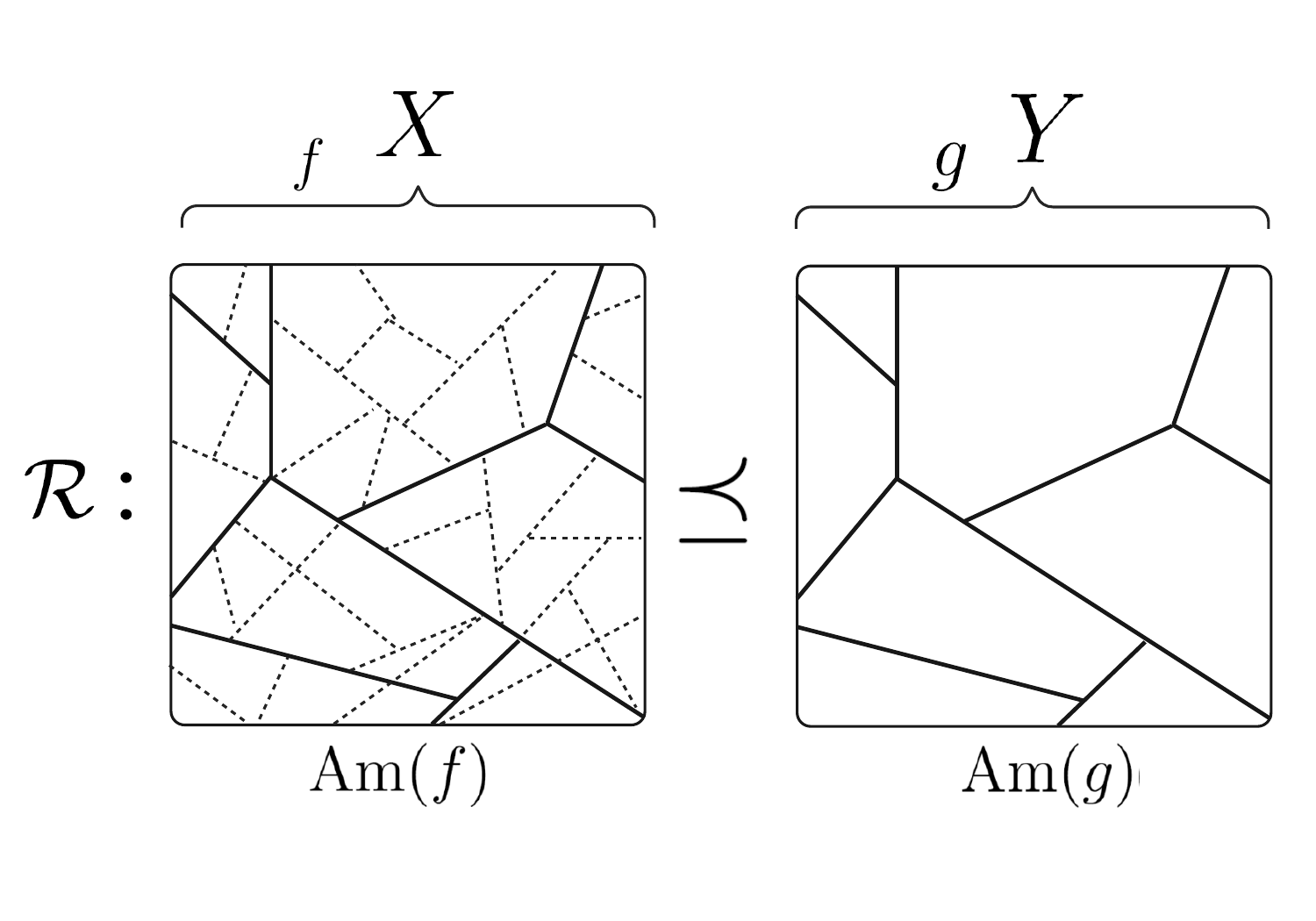}
    \caption{This figure illustrates Definition~\ref{def:reward_object_behavioural_model}-\ref{def:refinement} visually. Specifically, suppose $f : \R \to X$ and $g : \R \to Y$ are functions (or \enquote{reward objects} in our terminology). Now $f$ induces a partitioning $\mathrm{Am}(f)$ of $\R$ according to which $R_1$ and $R_2$ belong to the same partition if (and only if) $f(R_1) = f(R_2)$, and likewise for $g$ and $\mathrm{Am}(g)$. If $g(R_1) = g(R_2)$ whenever $f(R_1) = f(R_2)$, then $\mathrm{Am}(f)$ is a partition refinement of $\mathrm{Am}(g)$, which can be visualised as in the figure above. This corresponds to the case when $\mathrm{Am}(f) \refines \mathrm{Am}(g)$, where $f$ is \emph{less ambiguous} than $g$.}
    \label{fig:frameworks_refinement}
\end{figure}

Before moving on, let us provide an intuitive explanation of these definitions. First of all, anything that can be computed from a reward function can be seen as a \emph{reward object}. For example, we could consider the function $f_{\tfunc,\gamma}$ that, given a reward function $R$, returns the optimal $Q$-function given transition function $\tfunc$ and discount factor $\gamma$. In this case, $X$ would be the set of functions $\SxA \to \mathbb{R}$. 
Similarly, we could consider the function $b_{\tfunc,\gamma,\beta}$ that returns the Boltzmann-rational policy for temperature $\beta$,  given transition function $\tfunc$ and discount factor $\gamma$. In this case $X$ would be the set $\Pi$ of all policies, which means that $b_{\tfunc,\gamma,\beta}$ is a behavioural model. 
We will mainly, but not exclusively, consider reward objects with the type $\mathcal{R} \rightarrow \Pi$, i.e.\ behavioural models.

We can use reward objects to create an abstract model of a reward learning algorithm $\mathcal{L}$ as follows; first, we assume that there is a true underlying reward function $R^\star$. We model the data source as a function $f : \mathcal{R} \to X$, for some data space $X$, so that the learning algorithm observes $f(R^\star)$.
Note that $f(R^\star)$ could be a distribution, which models the case where the data comprises a set of samples from some source, but it could also be a single finite object.
Next, we assume that $\mathcal{L}$ learns (or converges to) a reward function $R_H$ that is compatible with the observed data, which means that $f(R_H) = f(R^\star)$. Note that this primarily is a model of the \emph{asymptotic} behaviour of learning algorithms, in the limit of \emph{infinite data}.


Hence, the defined \emph{invariance partition} $\mathrm{Am}(f)$ of $f$ groups together all reward functions that a learning algorithm $\mathcal{L}$ that is based on $f$ could converge to. For example, let $b_{\beta,\tfunc,\gamma} : \R \to \Pi$ be the function that returns the Boltzmann-rational policy for temperature $\beta$ given transition function $\tfunc$ and discount factor $\gamma$. If two reward functions $R_1$, $R_2$ have the same Boltzmann-rational policy --- i.e., if $b_{\beta,\tfunc,\gamma}(R_1) = b_{\beta,\tfunc,\gamma}(R_2)$ --- then $R_1$ and $R_2$ cannot be distinguished by $b_{\beta,\tfunc,\gamma}$. Thus, $\mathrm{Am}(b_{\beta,\tfunc,\gamma})$ partitions $\R$ according to which reward functions can and cannot be separated by a learning algorithm based on Boltzmann-rational policies. This means that $\mathrm{Am}(f)$ describes the \emph{ambiguity} of the reward $R$ given the data $f(R)$.

Next, note that we can also interpret the invariance partition of $f$ as a characterisation of the information that we need to have about $R$ to construct $f(R)$. Specifically, let $g : \mathcal{R} \to Y$ be a function whose output we wish to compute. If $R^\star$ is the true reward function, then it is acceptable to instead learn a reward function $R_H$ as long as $g(R_H) = g(R^\star)$.
This means that the invariance partition of $g$ also groups together all reward functions that it would be acceptable to learn, for the purpose of computing the output of $g$. Stated differently, $\mathrm{Am}(g)$ describes the \emph{ambiguity tolerance} of $R$ when computing the value of $g(R)$.

We can now see that $\refines$ formalises two important relationships between reward objects. First of all, if $f$ and $g$ correspond to two different reward learning data sources, and $\mathrm{Am}(f) \refinesStrict \mathrm{Am}(g)$, then we get strictly more information about the underlying reward function by observing data from $f$ than we get by observing data from $g$. 
Moreover, if $f$ is a reward learning data source and $g$ is a downstream application, then $\mathrm{Am}(f) \refines \mathrm{Am}(g)$ is precisely the condition of $g$ tolerating the ambiguity of the data source $f$ (i.e., any two reward functions that cannot be distinguished by data from $f$ lead to identical outputs when computing the value of $g$).

To make this more intuitive, let us discuss an example. 
Consider first a reward learning data source, such as trajectory comparisons.
In this case, we can let $X$ be the set of all (strict, partial) orderings of the set of all trajectories, and $f$ be the function that returns the ordering of the trajectories that is induced by the trajectory return function, $G$. Let $R^\star$ be the true reward function. In the limit of infinite data, the reward learning algorithm will learn a reward function $R_H$ that induces the same trajectory ordering as $R^\star$, which means that $f(R_H) = f(R^\star)$. Furthermore, if we want to use the learnt reward function to compute a policy, then we may consider a function $g : \mathcal{R} \to \Pi$ that takes a reward function $R$, and returns a policy $\pi^\star$ that is optimal under $R$ (given some $\tfunc$ and $\gamma$). Then if $f(R_H) = f(R^\star) \implies g(R') = g(R^\star)$, we will compute a policy that is optimal under the true reward $R^\star$. This corresponds to the condition that $\mathrm{Am}(f) \refines \mathrm{Am}(g)$.

Before moving on, we should also briefly add a remark on our use of the term \enquote{ambiguity} in this article. We will mostly use this as a general  term, corresponding to a number of related technical notions in different contexts. In particular, \enquote{the ambiguity of $f$} and \enquote{the ambiguity of the reward function under $f$} should both normally be taken to refer to $\mathrm{Am}(f)$. Similarly, \enquote{$f$ is less ambiguous than $g$} and \enquote{$g$ tolerates the ambiguity of $f$} should be understood as \enquote{$\mathrm{Am}(f) \refines \mathrm{Am}(g)$}, as per Definition~\ref{def:refinement}. If we say that $f$ is \enquote{too ambiguous} (for some purpose), then this should normally be taken to mean that $\mathrm{Am}(f) \not\refines P$ (for some partition $P$ of $\R$). The intended meaning should generally be clear in each given context. However, to avoid any risk of confusion, all theorems and proofs are stated in terms of unambiguous technical terms.

Next, it is useful to note that the ambiguities of an object are inherited by all objects that can be computed from it. More formally: 

\begin{restatable}[]{lemma}{ambiguityinherited}
\label{lemma:ambiguity_inherited}
Consider two reward objects $f : \mathcal{R} \to X$, $g : \mathcal{R} \to Y$. If there exists a function $h : X \to Y$ such that $h \circ f = g$, then $\mathrm{Am}(f) \refines \mathrm{Am}(g)$.
\end{restatable}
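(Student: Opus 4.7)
The plan is to prove the lemma by a direct unwinding of the definitions involved, with essentially no technical obstacle beyond careful bookkeeping. Concretely, I would start by expanding $\mathrm{Am}(f) \refines \mathrm{Am}(g)$ via Definition~\ref{def:refinement}: it amounts to showing that for every pair $R_1, R_2 \in \R$, the implication $R_1 \eq{f} R_2 \Rightarrow R_1 \eq{g} R_2$ holds. Using Definition~\ref{def:invariance_partition}, this in turn reduces to showing that $f(R_1) = f(R_2)$ implies $g(R_1) = g(R_2)$.

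Next, I would fix an arbitrary pair $R_1, R_2 \in \R$ with $f(R_1) = f(R_2)$, and apply the given function $h : X \to Y$ to both sides. Since $h$ is a well-defined function, $h(f(R_1)) = h(f(R_2))$. Using the hypothesis $h \circ f = g$, this immediately rewrites to $g(R_1) = g(R_2)$, which is the required conclusion $R_1 \eq{g} R_2$. Since $R_1, R_2$ were arbitrary, this establishes the refinement $\mathrm{Am}(f) \refines \mathrm{Am}(g)$.

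There is no real obstacle here — the content of the lemma is just that equivalence relations of the form \enquote{equal under a function} are contravariantly functorial in a trivial sense: post-composing with any $h$ can only coarsen the partition. The only thing to be careful about is that we need $h$ to be a genuine (total, single-valued) function on $X$, not merely defined on the image $f(\R)$; but since the hypothesis $h \circ f = g$ only uses $h$ on $f(\R)$, one could in fact weaken the statement to require $h$ to be defined on $f(\R)$, and the same argument would go through verbatim. I would note this in passing but not belabour it, since the statement as given already suffices for all downstream uses in the paper.
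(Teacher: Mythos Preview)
Your proposal is correct and matches the paper's proof essentially verbatim: assume $f(R_1) = f(R_2)$, apply $h$ to both sides, and use $h \circ f = g$ to conclude $g(R_1) = g(R_2)$. The additional remark about $h$ only needing to be defined on $f(\R)$ is a valid observation not in the paper, but it is tangential.
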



This simple observation has the important consequence that if there is an intermediate object (e.g., a $Q$-function) that is too ambiguous for a given application, then this ambiguity will also hold for any object that can be computed from this intermediate object. 

Note that $\refines$ is transitive; if $P \refines Q$ and $Q \refines R$ then $P \refines R$. It is also antisymmetric; if $P \refines Q$ and $Q \refines P$ then $P = Q$. This means that our framework endows all reward learning data sources and applications with a lattice structure, where $f \rightarrow g$ if $\mathrm{Am}(f) \refines \mathrm{Am}(g)$. This lattice structure enables reading out several important relationships graphically: 
\begin{enumerate}
    \item If $f \rightarrow g$ then a data source based on $f$ is at least as informative as a data source based on $g$.
    \item If $f \rightarrow g$ then a data source based on $f$ contains enough information to compute the output of $g$.
    \item If $f \rightarrow g$ then it is in principle possible to compute $g(R)$ from $f(R)$.
    \item If $f \rightarrow g$ and $f \not\rightarrow h$ then $g \not\rightarrow h$. In other words, if $f$ is a data source that does not contain enough information to compute $h$, and $g$ is a data source that can be derived from $f$, then $g$ does not contain enough information to compute $h$.
\end{enumerate}
As such, our definitions make it easy to reason about partial identifiability, ambiguity, and ambiguity tolerance, within a single unified framework.

\begin{figure}[H]
\centering
\begin{tikzpicture}[shorten >=1pt,node distance=2.6cm,on grid,auto]
   \node[] (R)   {$R$}; 
   \node[] (Q) [below=of R] {$Q^\star$};
   \node[] (pi) [below left=of Q] {$\pi^\star$};
   \node[] (V) [below right=of Q] {$V^\star$};
    \path[->] 
    (R) edge [swap] node {} (Q)
    (Q) edge [swap] node {} (pi)
    (Q) edge [swap] node {} (V)
    ;
\end{tikzpicture}
\caption{This figure gives a simple illustration of how Definition~\ref{def:reward_object_behavioural_model}-\ref{def:refinement} induces a partial order over objects that can be computed from reward functions. For example, let $q$ be the function that, given a reward function $R$, returns the optimal $Q$-function $Q^\star$, and let $v$ be the function that, given a reward function $R$, returns the optimal value-function $V^\star$. Since $V^\star$ can be computed from $Q^\star$, we have that $\mathrm{Am}(q) \refines \mathrm{Am}(v)$, which can be represented as $q \to v$ (or $Q^\star \to V^\star$) in a figure. Important relationships between data sources can then be read out graphically --- for example, if $Q^\star$ is too ambiguous for a given application, then $V^\star$ must be too ambiguous as well.}
\label{fig:ambiguity_lattice_example}
\end{figure}
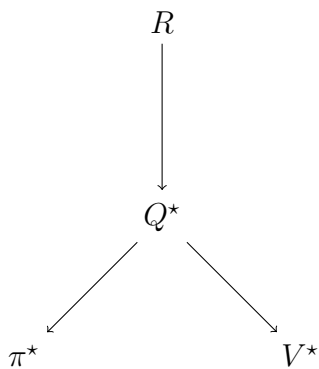

Next, it will often be useful to express $\mathrm{Am}(f)$ in terms of the set of all transformations of $R$ that preserve $f(R)$.\footnote{Note that reward transformations are used for many purposes in reinforcement learning, such as \emph{reward shaping} \citep[e.g.,][]{ng1999}. In this paper, we will (primarily) use them to characterise and describe the structure of partitions of $\R$.} Formally:

\begin{definition}
\label{def:transformations_and_invariances}
A \emph{reward transformation} is a map $t : \mathcal{R} \to \mathcal{R}$.
We say that the \emph{invariances} of $f$ is a set of reward transformations $T$ if for all $R_1, R_2 \in \mathcal{R}$, we have that $f(R_1) = f(R_2)$ if and only if there is a $t \in T$ such that $t(R_1) = R_2$.
We then say that $f$ \emph{determines $R$ up to} $T$.
\end{definition}

Moreover, when talking about a particular kind of object, we will for the sake of brevity sometimes leave the function $f$ implicit, and instead just mention the relevant object. 
For example, we might say that \enquote{the Boltzmann-rational policy determines $R$ up to $T$}. This should be understood as saying that \enquote{$f$ determines $R$ up to $T$, where $f$ is the function that takes a reward and returns the corresponding Boltzmann-rational policy}.
It is also worth noting that $f$ and $T$ often will be parameterised by $\tfunc$, $\init$, or $\gamma$; this dependence will usually be spelt out, but it may in some cases be omitted when it is unambiguous from the context. Moreover, we will sometimes express the invariances of a function $f$ in terms of several sets of reward transformations -- for example, we might say that \enquote{$f$ determines $R$ up to $T_1$ and $T_2$}. This should be understood as saying that $f$ determines $R$ up to $T$, where $T$ is the set of all transformations that can be formed by composing transformations in $T_1$ and $T_2$ (in any order). For more details, see Section~\ref{sec:reward_transformations}.

It is worth noting that if $f$ determines $R$ up to $T_1$, and $g$ determines $R$ up to $T_2$, where $T_1 \subseteq T_2$, then $\mathrm{Am}(f) \refines \mathrm{Am}(g)$. Similarly, if $f$ determines $R$ up to $T_1$, and $g$ determines $R$ up to $T_1$ \emph{and} $T_2$, then we also have that $\mathrm{Am}(f) \refines \mathrm{Am}(g)$. This ought to be quite intuitive, but noting this will make it easier to compare our theorem statements in Section~\ref{sec:partial_identifiability}.

\medskip

Note that the notions introduced in Definition~\ref{def:reward_object_behavioural_model}-\ref{def:refinement} only let us compare the ambiguity of different data sources $f$ and $g$ by examining whether or not the ambiguity of $f$ is strictly greater than the ambiguity of $g$, or vice versa. This means that Definition~\ref{def:reward_object_behavioural_model}-\ref{def:refinement} do not let us \emph{quantify} the absolute ambiguity of a data source. To address this, we introduce the following definition:


\begin{definition}\label{def:ambiguity_diameter}
    Given a set of reward functions $S \subseteq \R$, and a pseudometric $d^\R$ on $\R$, we say that the \emph{diameter} $\mathrm{diam}(S)$ of $S$ is the supremum of the distance between pairs of reward functions in $S$ under $d^\R$, i.e.\
    $$
    \mathrm{diam}(S) = \mathrm{sup}\{d^\R(R_1, R_2) : R_1, R_2 \in S\}.
    $$
    Moreover, given a reward object $f : \R \to X$, we say that the \emph{upper diameter} of $\mathrm{Am}(f)$ is the \emph{greatest} diameter of any set in $\mathrm{Am}(f)$, i.e.\
    $$
    \mathrm{sup}\{\mathrm{diam}(S) : S \in \mathrm{Am}(f)\}.
    $$
    Similarly, we say that the \emph{lower diameter} of $\mathrm{Am}(f)$ is the \emph{smallest} diameter of any set in $\mathrm{Am}(f)$, i.e.\
    $$
    \mathrm{inf}\{\mathrm{diam}(S) : S \in \mathrm{Am}(f)\}.
    $$
\end{definition}

To understand these definitions, 
note that if we have a pseudometric $d^\R$ on $\R$ that provides a quantification of how different any two reward functions are, then we can use this pseudometric to measure the \enquote{size} of $\mathrm{Am}(f)$, where a larger size corresponds to greater ambiguity. Since not every set in $\mathrm{Am}(f)$ may have the same size, we further distinguish between the upper and the lower diameter of $\mathrm{Am}(f)$. Intuitively, the upper diameter measures the worst-case ambiguity of the reward function under $f$, whereas the lower diameter measures the best-case ambiguity. For example, if the lower diameter of $\mathrm{Am}(f)$ is $\epsilon$, then that means that there for \emph{any} $x \in \mathrm{Im}(f)$ exists two reward functions $R_1, R_2$ such that $f(R_1) = f(R_2) = x$, but such that the distance between $R_1$ and $R_2$ is $\epsilon$ (or arbitrarily close to $\epsilon$). By contrast, if the upper diameter of $\mathrm{Am}(f)$ is $\epsilon$, then that means that there is \emph{some} $x \in \mathrm{Im}(f)$ for which there exists two reward functions $R_1, R_2$ such that $f(R_1) = f(R_2) = x$, but such that the distance between $R_1$ and $R_2$ is $\epsilon$ (or arbitrarily close to $\epsilon$). Also note that the upper diameter of $\mathrm{Am}(f)$ always is at least as great as the lower diameter of $\mathrm{Am}(f)$.

It is also important to note that, while the (upper and lower) diameter of $\mathrm{Am}(f)$ provides a way of quantifying the size of $\mathrm{Am}(f)$, this does not capture all of the structure of $\mathrm{Am}(f)$. For example, even if the lower diameter of $\mathrm{Am}(f)$ is greater than the upper diameter of $\mathrm{Am}(g)$, this does \emph{not} guarantee that $\mathrm{Am}(g) \refines \mathrm{Am}(f)$. For this reason, it may in some cases be more informative to characterise $\mathrm{Am}(f)$ in terms of reward transformations (rather than in terms of its upper and lower diameter), since this provides a complete description of the structure of $\mathrm{Am}(f)$.

\subsection{Misspecification Robustness}\label{sec:frameworks_misspecification}


In this section, we introduce the frameworks that we will use for analysing robustness to misspecification. Do do this, we will fist give an abstract model of a reward learning algorithm $\mathcal{L}$ that is slightly more general than that provided in Section~\ref{sec:frameworks_ambiguity}. As before, we assume that there is a true underlying reward function $R^\star$, and that the training data is generated by a function $g : \R \to X$, so that the learning algorithm observes $g(R^\star)$. Moreover, we assume that the learning algorithm $\mathcal{L}$ has a model $f : \R \to X$ of how the observed data relates to $R^\star$, such that $\mathcal{L}$ converges to a reward function $R_H$ that satisfies $f(R_H) = g(R^\star)$. However, unlike in Section~\ref{sec:frameworks_ambiguity}, we will not assume that $f = g$; this allows us to reason about the impact of misspecification. If $f \neq g$, then $f$ is \emph{misspecified}, otherwise $f$ is correctly specified.

Intuitively, we want to say that $f$ is robust to misspecification with $g$ if a learning algorithm $\mathcal{L}$ that is based on $f$ is guaranteed to learn a reward function that is \enquote{close} to the true reward function if it is trained on data generated from $g$. To make this statement formal, we need a definition of what it means for two reward functions to be \enquote{close}. Our first formalisation defines this in terms of \emph{equivalence classes}. Specifically, we assume that we have a partition $P$ of $\mathcal{R}$ (which, of course, corresponds to an equivalence relation), and that the learnt reward function $R_H$ is \enquote{close enough} to the true reward $R^\star$ if they are in the same equivalence class, $R_H \equiv_P R^\star$. 
We will for the time being leave out the question of how to pick the partition $P$, and later revisit this question in Section~\ref{sec:comparing_reward_functions}.
Given this, we can now provide our first definition of robustness to misspecification. 

\begin{definition}\label{def:misspecification_eq}
    Given a partition $P$ of $\mathcal{R}$, and two reward objects $f, g : \R \to X$, we say that $f$ is \emph{$P$-robust to misspecification} with $g$ if each of the following conditions are satisfied:
    \begin{enumerate}
        \item If $f(R_1) = g(R_2)$ then $R_1 \eq{P} R_2$.
        \item $\mathrm{Im}(g) \subseteq \mathrm{Im}(f)$.
        \item $\mathrm{Am}(f) \refines P$.
        \item $f \neq g$.
    \end{enumerate}
\end{definition}

Let us explain each of these conditions. 
The first condition says that if $f$ is $P$-robust to misspecification with $g$, then any learning algorithm $\mathcal{L}$ based on $f$ is guaranteed to learn a reward function that is $P$-equivalent to the true reward function when trained on data generated from $g$. This is the core property of misspecification robustness, which ensures that the mismatch between $f$ and $g$ is unproblematic. 

The second condition ensures that $\mathcal{L}$ can never observe data that is impossible according to its assumed model. For example, suppose $f$ maps each reward function to a deterministic policy; in that case, the learning algorithm $\mathcal{L}$ will assume that the observed policy must be deterministic. What happens if such an algorithm is given data from a nondeterministic policy? This is undefined, absent further details about $\mathcal{L}$. Since we do not want to make any strong assumptions about $\mathcal{L}$, it is reasonable to require that any data that could be produced by $g$, can be explained under $f$. 

The third condition says that any learning algorithm $\mathcal{L}$ based on $f$ is guaranteed to learn a reward function that is $P$-equivalent to the true reward function when trained on data generated by $f$, i.e.\ when there is no misspecification. 
In other words, $f$ is \emph{no more ambiguous} than $P$, in the sense of Definition~\ref{def:refinement}.
This condition is included to rule out certain uninteresting edge cases (c.f.\ \ref{appendix:additional_comments_on_definitions}). The final condition simply says that $f$ and $g$ are distinct --- if they are not, then $f$ is not misspecified!\footnote{Note that if we were to drop condition 4, and set $f = g$, then Definition~\ref{def:misspecification_eq} would be equivalent to Definition~\ref{def:refinement}, which is the definition we use to formalise ambiguity tolerance. Definition~\ref{def:misspecification_eq} is thus an extension of Definition~\ref{def:refinement}, designed to cover the case where the true data generating process (i.e.\ $g$) is different from the model assumed by $\mathcal{L}$ (i.e., $f$).}

\begin{figure}[H]
    \centering
    \includegraphics[width=3\textwidth/4]{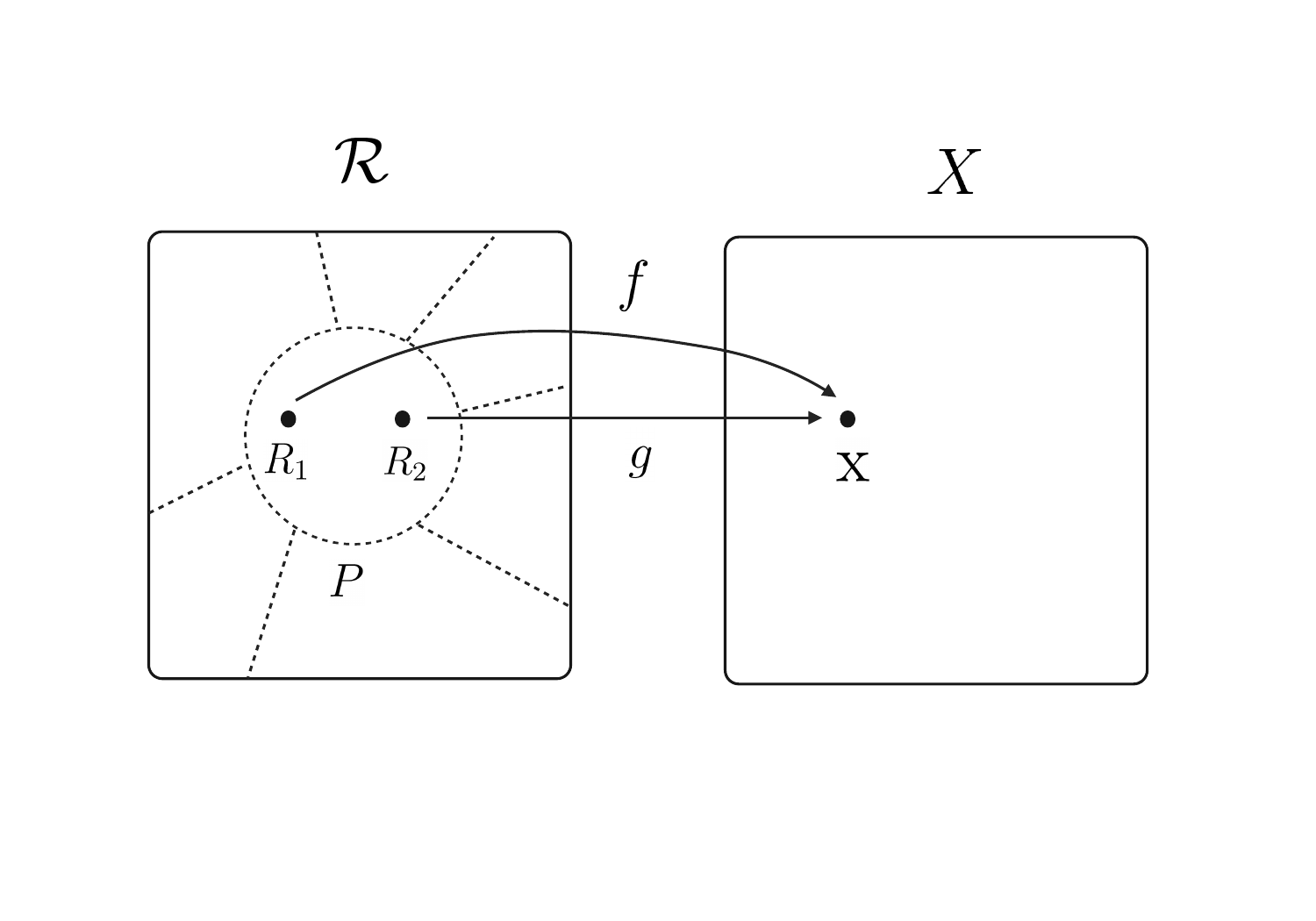}
    \caption{This figure illustrates the conditions in Definition~\ref{def:misspecification_eq}. Both $f$ and $g$ are functions from the space of all rewards $\R$ to some set $X$, and $P$ is a partitioning of $\R$. The learning algorithm $\mathcal{L}$ observes $x = g(R^\star)$ for some unknown reward function $R^\star$, and will find a reward function $R_H$ such that $f(R_H) = x$. We wish to ensure that $R_H \eq{P} R^\star$. If this holds for all $R_H$ and $R^\star$ such that $f(R_H) = g(R^\star)$, together with the other conditions in Definition~\ref{def:misspecification_eq}, when we say that $f$ is $P$-robust to misspecification with $g$.}
    \label{fig:frameworks_mis_eq}
\end{figure}


Our next definition formalises misspecification robustness in terms of pseudometrics on $\R$ (rather than equivalence relations). While Definition~\ref{def:misspecification_eq} captures many important properties of misspecification, it is also limited by the fact that it quantifies the differences between reward functions in terms of equivalence relations. With this definition, two reward functions are either equivalent or not, which means that Definition~\ref{def:misspecification_eq} cannot distinguish between small and large errors in the learned reward function. 
To alleviate this limitation, we introduce a second definition of misspecification robustness that is based on \emph{pseudometrics} on $\R$; this will let us quantify the error in the learnt reward function in a fine-grained and continuous manner. 

\begin{definition}\label{def:misspecification_metric}
Given a pseudometric $d^\mathcal{R}$ on $\R$, and two reward objects $f, g : \R \to X$, we say that $f$ is $\epsilon$-robust to misspecification with $g$ as measured by $d^\mathcal{R}$ if each of the following conditions are satisfied:
\begin{enumerate}
    \item If $f(R_1) = g(R_2)$ then $d^\mathcal{R}(R_1, R_2) \leq \epsilon$.
    \item $\mathrm{Im}(g) \subseteq \mathrm{Im}(f)$.
    \item If $f(R_1) = f(R_2)$ then $d^\mathcal{R}(R_1, R_2) \leq \epsilon$.
    \item $f \neq g$.
\end{enumerate}
\end{definition}

The conditions in Definition~\ref{def:misspecification_metric} mirror the conditions in Definition~\ref{def:misspecification_eq}; the second and fourth conditions are identical in both definitions, and the first and third conditions are restated in terms of a pseudometric $d^\mathcal{R}$. We will for the time being leave out the question of how to pick a pseudometric $d^\mathcal{R}$, and later revisit this question in Section~\ref{sec:comparing_reward_functions}.

\begin{figure}[H]
    \centering
    \includegraphics[width=3\textwidth/4]{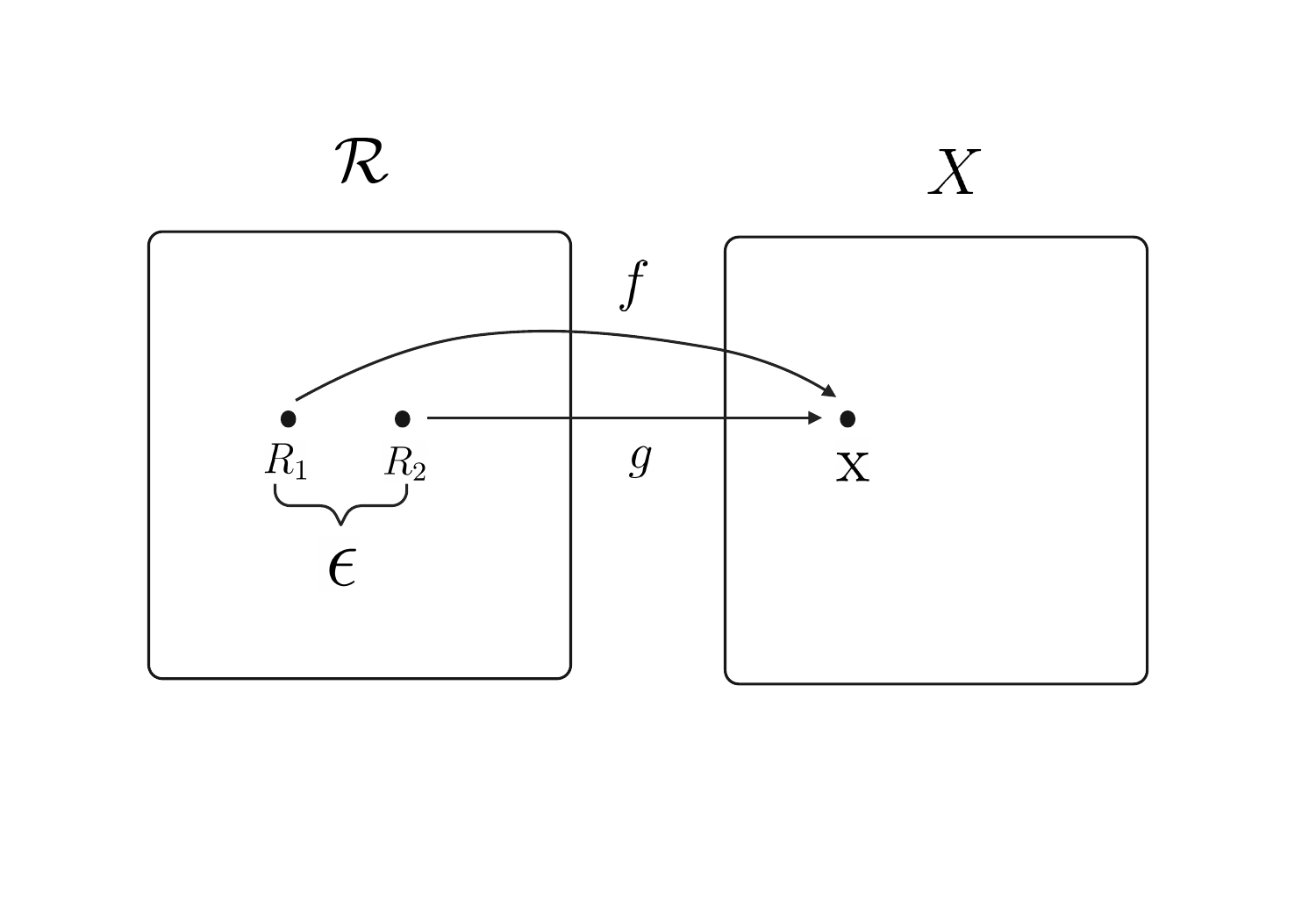}
    \caption{This figure illustrates the conditions in Definition~\ref{def:misspecification_metric}. Both $f$ and $g$ are functions from the space of all rewards $\R$ to some set $X$, and we have some pseudometric $d^\R$ on $\R$. The learning algorithm $\mathcal{L}$ observes $x = g(R^\star)$ for some unknown reward function $R^\star$, and will find a reward function $R_H$ such that $f(R_H) = x$. We wish to ensure that $d^\R(R_H, R^\star) \leq \epsilon$. If this holds for all $R_H$ and $R^\star$ such that $f(R_H) = g(R^\star)$, together with the other conditions in Definition~\ref{def:misspecification_metric}, when we say that $f$ is $\epsilon$-robust to misspecification with $g$ (as measured by the pseudometric $d^\R$).}
    \label{fig:frameworks_mis_metric}
\end{figure}



Next, note that any result expressed in terms of Definition~\ref{def:misspecification_eq} can be translated into a corresponding result expressed in terms of Definition~\ref{def:misspecification_metric}: 

\begin{restatable}[]{proposition}{metricequivalencerelationconnection}
\label{prop:metric_equivalence_relation_connection}
    Consider a pseudometric $d^\mathcal{R}$ and an equivalence relation $\eq{P}$ on $\R$ such that $R_1 \eq{P} R_2$ if and only if $d^\mathcal{R}(R_1, R_2) = 0$. Then $f$ is $P$-robust to misspecification with $g$ if and only if $f$ is $0$-robust to misspecification with $g$ as measured by $d^\mathcal{R}$.
\end{restatable}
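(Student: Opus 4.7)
The plan is to prove this by directly matching the four conditions of Definition~\ref{def:misspecification_eq} (with partition $P$) against the four conditions of Definition~\ref{def:misspecification_metric} (with $\epsilon = 0$ and pseudometric $d^\R$), using the hypothesis $R_1 \eq{P} R_2 \iff d^\R(R_1, R_2) = 0$ as the translating bridge. The proposition is essentially a bookkeeping result, so the strategy is to verify each conjunct of one definition from the corresponding conjunct of the other, in both directions simultaneously.

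First I would observe that since $d^\R$ is a pseudometric, positivity gives $d^\R(R_1, R_2) \geq 0$ for all $R_1, R_2$, so the inequalities ``$d^\R(R_1, R_2) \leq 0$'' appearing in Definition~\ref{def:misspecification_metric} at $\epsilon = 0$ are equivalent to the equalities ``$d^\R(R_1, R_2) = 0$''. Using the hypothesis, these in turn are equivalent to $R_1 \eq{P} R_2$. Condition~1 of Definition~\ref{def:misspecification_metric} at $\epsilon = 0$ therefore rewrites verbatim as Condition~1 of Definition~\ref{def:misspecification_eq}. Conditions 2 and 4 are literally identical in the two definitions, so they require no work.

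The only slightly non-trivial step is Condition~3. I would unpack it as follows: Condition~3 of Definition~\ref{def:misspecification_eq} asserts $\mathrm{Am}(f) \refines P$, which by Definitions~\ref{def:invariance_partition} and~\ref{def:refinement} means exactly that $f(R_1) = f(R_2) \implies R_1 \eq{P} R_2$ for all $R_1, R_2 \in \R$. Applying the hypothesis translates this equivalently to ``$f(R_1) = f(R_2) \implies d^\R(R_1, R_2) = 0$'', which, combined with the pseudometric's non-negativity, is exactly Condition~3 of Definition~\ref{def:misspecification_metric} at $\epsilon = 0$.

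Having matched all four conditions as mutual biconditionals, the conjunction of the conditions of one definition is equivalent to the conjunction of the conditions of the other, yielding the claimed equivalence. There is no real obstacle here; the only point deserving care is to invoke the pseudometric axiom of non-negativity when collapsing ``$\leq 0$'' to ``$= 0$'', and to explicitly cite Definitions~\ref{def:invariance_partition} and~\ref{def:refinement} when unpacking $\mathrm{Am}(f) \refines P$ in terms of $f$ and $\eq{P}$.
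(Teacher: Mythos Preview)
Your proposal is correct and takes essentially the same approach as the paper; the paper's own proof is simply ``Immediate from Definition~\ref{def:misspecification_eq} and \ref{def:misspecification_metric}'', and you have spelled out exactly why that immediacy holds.
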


Also note that if $f$ is $0$-robust to misspecification with $g$, then it of course follows that $f$ is $\epsilon$-robust to misspecification with $g$ for all $\epsilon > 0$:

\begin{restatable}[]{proposition}{metricequivalencerelationconnectiontwo}
\label{prop:metric_equivalence_relation_connection_two}
    If $f$ is $\epsilon$-robust to misspecification with $g$ measured by $d^\mathcal{R}$, and $\delta > \epsilon$, then $f$ is $\delta$-robust to misspecification with $g$ measured by $d^\mathcal{R}$.
\end{restatable}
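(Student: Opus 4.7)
The proof is essentially immediate from inspection of Definition~\ref{def:misspecification_metric}, but it is worth laying out the bookkeeping carefully. My plan is to assume the hypothesis that $f$ is $\epsilon$-robust to misspecification with $g$ as measured by $d^\R$, fix $\delta > \epsilon$, and then verify each of the four clauses of Definition~\ref{def:misspecification_metric} in turn for the parameter $\delta$.

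The key observation is that only clauses (1) and (3) of Definition~\ref{def:misspecification_metric} actually depend on the robustness parameter, and both of them have the form \emph{``\ldots then $d^\R(R_1, R_2) \leq \epsilon$''}. So for clause (1), I take arbitrary $R_1, R_2 \in \R$ with $f(R_1) = g(R_2)$; by the hypothesised $\epsilon$-robustness, $d^\R(R_1, R_2) \leq \epsilon$, and since $\epsilon < \delta$, the transitivity of $\leq$ gives $d^\R(R_1, R_2) \leq \delta$, which is exactly what clause (1) requires at level $\delta$. Clause (3) is handled identically, using $f(R_1) = f(R_2)$ in place of $f(R_1) = g(R_2)$.

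Clauses (2) and (4) are completely independent of the robustness parameter: clause (2) asserts $\mathrm{Im}(g) \subseteq \mathrm{Im}(f)$ and clause (4) asserts $f \neq g$, neither of which mentions $\epsilon$ or $\delta$. So both are inherited verbatim from the assumption that $f$ is $\epsilon$-robust to misspecification with $g$.

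There is no real obstacle here; the result is a monotonicity statement for the robustness parameter, and the only thing to do is to write the four verifications cleanly. The reason the proposition is worth recording separately is conceptual rather than technical: together with Proposition~\ref{prop:metric_equivalence_relation_connection}, it confirms that Definition~\ref{def:misspecification_metric} is a genuine graded refinement of Definition~\ref{def:misspecification_eq}, and it justifies the later practice of talking about \emph{the} smallest $\epsilon$ (or an infimum of such $\epsilon$) for which $f$ is $\epsilon$-robust to misspecification with $g$.
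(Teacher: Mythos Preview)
Your proof is correct and takes essentially the same approach as the paper, which simply records the result as ``Immediate from Definition~\ref{def:misspecification_metric}.'' You have merely spelled out the four-clause verification that this one-line proof leaves implicit.
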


In light of this, one might ask why we should use Definition~\ref{def:misspecification_eq} if Definition~\ref{def:misspecification_metric} is strictly more expressive.
There are several reasons for this.
First of all, Definition~\ref{def:misspecification_eq} still captures most of what we care about in practice, while also being notably easier to work with.
Moreover, while any pseudometric can be straightforwardly translated into an equivalence relation, it is not always straightforward to translate an equivalence relation into a metric, other than by letting this metric be equal to $0$ for equivalent reward functions and $1$ for non-equivalent reward functions. 
Additionally, Definition~\ref{def:misspecification_eq} will let us derive results that are both stronger and easier to interpret qualitatively, than what is possible using Definition~\ref{def:misspecification_metric}. For this reason, we will make use of both Definition~\ref{def:misspecification_eq} and Definition~\ref{def:misspecification_metric} throughout this paper.

In \ref{appendix:generalising_analysis}, we provide a more extensive discussion of Definition~\ref{def:misspecification_eq} and \ref{def:misspecification_metric}, including ways in which these definitions may be modified or generalised, and whether such modifications would have a meaningful impact on any of our results. We show that many natural generalisations would lead to results that are either identical or closely analogous to the results that we will provide for Definition~\ref{def:misspecification_eq} and \ref{def:misspecification_metric}.

\subsection{Intermediate Results About Our Definitions}\label{sec:frameworks_properties_of_definitions}


In this section, we provide several lemmas and intermediate results about Definition~\ref{def:misspecification_eq} and \ref{def:misspecification_metric}. These results give insight into the properties of our problem setting, and will also be used to prove our object-level results. We begin by listing a number of interesting properties of Definition~\ref{def:misspecification_eq}:

\begin{restatable}[]{lemma}{Probustnessinheritance}
\label{lemma:P_robustness_inheritance}
If $f$ is not $P$-robust to misspecification with $g$, and $\mathrm{Im}(g) \subseteq \mathrm{Im}(f)$, then for any $h$, $h \circ f$ is not $P$-robust to misspecification with $h \circ g$.
\end{restatable}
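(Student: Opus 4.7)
The plan is to do a case analysis on which clause of Definition~\ref{def:misspecification_eq} is violated for the pair $(f,g)$. Since $f$ is not $P$-robust to misspecification with $g$, at least one of conditions (1)--(4) fails; since the hypothesis $\mathrm{Im}(g) \subseteq \mathrm{Im}(f)$ rules out condition (2), the failure must occur in (1), (3), or (4). In each case I would show that the same condition fails for the pair $(h \circ f,\, h \circ g)$, which by definition means $h \circ f$ is not $P$-robust with $h \circ g$.

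If condition (1) fails, there exist $R_1, R_2$ with $f(R_1) = g(R_2)$ and $R_1 \not\eq{P} R_2$. Applying $h$ to both sides yields $(h \circ f)(R_1) = (h \circ g)(R_2)$ with $R_1 \not\eq{P} R_2$, so condition (1) fails for $(h \circ f,\, h \circ g)$. If condition (3) fails, i.e.\ $\mathrm{Am}(f) \not\refines P$, then there exist $R_1, R_2$ with $f(R_1) = f(R_2)$ but $R_1 \not\eq{P} R_2$; applying $h$ gives $(h \circ f)(R_1) = (h \circ f)(R_2)$ with $R_1 \not\eq{P} R_2$, so $\mathrm{Am}(h \circ f) \not\refines P$, violating condition (3). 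If condition (4) fails, then $f = g$, so $h \circ f = h \circ g$, and condition (4) again fails.

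There is no real obstacle here beyond bookkeeping: every witness to non-robustness for $(f,g)$ pushes forward through $h$ to a witness for $(h \circ f,\, h \circ g)$. The only subtlety worth flagging is why the hypothesis $\mathrm{Im}(g) \subseteq \mathrm{Im}(f)$ cannot be dropped: condition (2) does not, in general, transfer under post-composition, since $h$ may map elements of $\mathrm{Im}(g) \setminus \mathrm{Im}(f)$ into $\mathrm{Im}(h \circ f)$, so a failure of (2) for $(f,g)$ need not yield any failure of any clause for $(h \circ f,\, h \circ g)$. The hypothesis precisely excludes this degenerate route, ensuring the failure that makes $f$ non-robust is one of the three that are preserved by composition with $h$.
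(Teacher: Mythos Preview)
Your proposal is correct and follows essentially the same approach as the paper: a case split on which of conditions (1), (3), (4) fails (since the hypothesis guarantees (2) holds), with each failure pushing forward under $h$. The only cosmetic difference is that the paper cites Lemma~\ref{lemma:ambiguity_inherited} for the condition-(3) case rather than writing out the one-line argument directly, and the paper does not include your closing remark on why the hypothesis $\mathrm{Im}(g)\subseteq\mathrm{Im}(f)$ cannot be dropped.
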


%
%
%

Lemma~\ref{lemma:P_robustness_inheritance} says that if $f$ lacks robustness to a given form of misspecification, then any object that can be computed from $f$ inherits a lack of robustness to its corresponding misspecification. This lemma can be seen as analogous to Lemma~\ref{lemma:ambiguity_inherited}, and will later be used to show that broad classes of data models lack robustness to some forms of misspecification.

\begin{restatable}[]{lemma}{Probustnessimpliesrefinement}
\label{lemma:P_robustness_implies_refinement}
If $f$ is $P$-robust to misspecification with $g$ then $\mathrm{Am}(g) \refines P$.
\end{restatable}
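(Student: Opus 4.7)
The plan is to unpack what the conclusion $\mathrm{Am}(g) \refines P$ actually requires and then show it follows essentially by transitivity of the equivalence relation $\eq{P}$. By Definition~\ref{def:refinement}, we need to prove that whenever $R_1 \eq{g} R_2$ (i.e.\ $g(R_1) = g(R_2)$), we also have $R_1 \eq{P} R_2$. So I would fix an arbitrary pair $R_1, R_2 \in \R$ with $g(R_1) = g(R_2)$ and aim to derive $R_1 \eq{P} R_2$.

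The key step is to use condition 2 of Definition~\ref{def:misspecification_eq}, namely $\mathrm{Im}(g) \subseteq \mathrm{Im}(f)$, to produce a common $f$-preimage. Since $g(R_1) \in \mathrm{Im}(g) \subseteq \mathrm{Im}(f)$, there exists some $R_3 \in \R$ such that
\[
f(R_3) = g(R_1) = g(R_2).
\]
This $R_3$ acts as a bridge between $R_1$ and $R_2$ through the $f$-side of the picture.

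Now I would invoke condition 1 of Definition~\ref{def:misspecification_eq} twice. Applied to the identity $f(R_3) = g(R_1)$, it gives $R_3 \eq{P} R_1$; applied to $f(R_3) = g(R_2)$, it gives $R_3 \eq{P} R_2$. Because $\eq{P}$ is an equivalence relation, symmetry and transitivity immediately yield $R_1 \eq{P} R_2$, completing the argument. Note that conditions 3 and 4 of Definition~\ref{def:misspecification_eq} are not needed for this lemma, so no subtlety hides there.

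There is essentially no obstacle — the entire proof hinges on the fact that condition 2 guarantees a shared $f$-preimage which can then be combined with condition 1 via transitivity. If anything, the only mildly non-obvious move is realising that one should look for such a common preimage $R_3$ in $\R$, rather than trying to relate $R_1$ and $R_2$ directly via $f$ and $g$.
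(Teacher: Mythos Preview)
Your proposal is correct and matches the paper's proof essentially line for line: pick $R_3$ with $f(R_3)=g(R_1)=g(R_2)$ via condition 2, apply condition 1 twice, and conclude by transitivity of $\eq{P}$.
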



It may be easier to understand Lemma~\ref{lemma:P_robustness_implies_refinement} by considering the contrapositive statement; if $\mathrm{Am}(g) \not\refines P$ then no $f$ is $P$-robust to misspecification with $g$. In other words, if data from $g$ is insufficient for identifying the $P$-class of the true reward function when there is no misspecification, then we cannot identify the correct $P$-class by using a misspecified data model. This means that we can never gain anything from misspecification.

\begin{restatable}[]{proposition}{Probustnesssymmetry}
\label{prop:P_robustness_symmetry}
If $f$ is $P$-robust to misspecification with $g$ and $\mathrm{Im}(f) = \mathrm{Im}(g)$ then $g$ is $P$-robust to misspecification with $f$.
\end{restatable}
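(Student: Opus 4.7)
The plan is to verify each of the four conditions in Definition~\ref{def:misspecification_eq} for the pair $(g, f)$, given that they hold for the pair $(f, g)$ together with the hypothesis $\mathrm{Im}(f) = \mathrm{Im}(g)$. Three of the four conditions are essentially immediate; the one that requires a small argument is condition 3, and that has already been packaged as Lemma~\ref{lemma:P_robustness_implies_refinement}, so the proof will be short.

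First I would dispatch the easy conditions. Condition 1 for $(g, f)$ asks that $g(R_1) = f(R_2)$ imply $R_1 \eq{P} R_2$; but $g(R_1) = f(R_2)$ is the same equality as $f(R_2) = g(R_1)$, so by condition 1 for $(f,g)$ we get $R_2 \eq{P} R_1$, and then symmetry of the equivalence relation gives $R_1 \eq{P} R_2$. Condition 2 for $(g, f)$ asks that $\mathrm{Im}(f) \subseteq \mathrm{Im}(g)$, which is immediate from the hypothesis $\mathrm{Im}(f) = \mathrm{Im}(g)$. Condition 4 for $(g,f)$ is $g \neq f$, which is condition 4 for $(f,g)$ unchanged.

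The remaining obligation is condition 3 for $(g, f)$, namely $\mathrm{Am}(g) \refines P$. This is exactly the conclusion of Lemma~\ref{lemma:P_robustness_implies_refinement} applied to the hypothesis that $f$ is $P$-robust to misspecification with $g$, so it follows at once. (For a self-contained argument without invoking that lemma, I would instead argue directly: suppose $g(R_1) = g(R_2)$; since $\mathrm{Im}(g) \subseteq \mathrm{Im}(f)$, pick $R_3$ with $f(R_3) = g(R_1) = g(R_2)$; then condition 1 for $(f,g)$ applied twice yields $R_3 \eq{P} R_1$ and $R_3 \eq{P} R_2$, and transitivity/symmetry of $\eq{P}$ give $R_1 \eq{P} R_2$.)

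The hardest part is really just bookkeeping: being careful to keep track of which role ($f$ or $g$) each function plays in each condition, and noting that the hypothesis $\mathrm{Im}(f) = \mathrm{Im}(g)$ (rather than just $\mathrm{Im}(g) \subseteq \mathrm{Im}(f)$, which is already implied by condition 2 of the hypothesis) is needed precisely to obtain condition 2 for the reversed pair. No genuinely new ideas beyond Lemma~\ref{lemma:P_robustness_implies_refinement} and the symmetry of $\eq{P}$ are required.
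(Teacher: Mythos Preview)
Your proposal is correct and follows essentially the same approach as the paper's proof: verify the four conditions of Definition~\ref{def:misspecification_eq} for the reversed pair, with condition~3 supplied by Lemma~\ref{lemma:P_robustness_implies_refinement} and condition~2 by the hypothesis $\mathrm{Im}(f)=\mathrm{Im}(g)$. Your write-up is in fact a bit more explicit than the paper's, but the underlying argument is identical.
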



Proposition~\ref{prop:P_robustness_symmetry} says that misspecification robustness is symmetric under many typical circumstances. For example, if $f$ and $g$ are both surjective, then $\mathrm{Im}(f) = \mathrm{Im}(g)$. This means that there are equivalence classes of behavioural models that are all robust to misspecification with each other.

\begin{restatable}[]{lemma}{lessambiguitylessrobustness}
\label{lemma:less_ambiguity_less_robustness}
Let $\mathrm{Am}(f) \refines P$. Then there is no $g$ such that $f$ is $P$-robust to misspecification with $g$ if and only if $\mathrm{Am}(f) = P$.
\end{restatable}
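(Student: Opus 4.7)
The plan is to prove both directions of the biconditional separately, using Definition~\ref{def:misspecification_eq} directly. The hypothesis $\mathrm{Am}(f) \refines P$ automatically discharges condition~3 of that definition throughout, so the interesting content is in conditions~1, 2, and 4.

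For the direction ($\Leftarrow$), assume $\mathrm{Am}(f) = P$ and suppose for contradiction that some $g$ makes $f$ a $P$-robust model under misspecification with $g$. Since condition~4 gives $f \neq g$, pick $R_0$ with $f(R_0) \neq g(R_0)$. By condition~2, $g(R_0) \in \mathrm{Im}(f)$, so there exists $R_1$ with $f(R_1) = g(R_0)$. Condition~1 then forces $R_1 \eq{P} R_0$, and the assumption $\mathrm{Am}(f) = P$ upgrades this to $f(R_1) = f(R_0)$. Combining with $f(R_1) = g(R_0)$ yields $f(R_0) = g(R_0)$, contradicting the choice of $R_0$.

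For the direction ($\Rightarrow$), I will show the contrapositive: if $\mathrm{Am}(f) \refinesStrict P$, then there exists a $g$ witnessing $P$-robustness of $f$. Strict refinement gives two reward functions $R_1, R_2$ with $R_1 \eq{P} R_2$ but $f(R_1) \neq f(R_2)$. Define $g$ by $g(R) = f(R)$ for all $R \neq R_2$ and $g(R_2) = f(R_1)$. Condition~2 holds by construction, and condition~4 holds because $g(R_2) = f(R_1) \neq f(R_2)$. For condition~1, suppose $f(R_1') = g(R_2')$. If $R_2' \neq R_2$, then $f(R_1') = f(R_2')$, so $R_1' \eq{\mathrm{Am}(f)} R_2'$, hence $R_1' \eq{P} R_2'$ since $\mathrm{Am}(f) \refines P$. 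If $R_2' = R_2$, then $f(R_1') = f(R_1)$, so $R_1' \eq{P} R_1 \eq{P} R_2 = R_2'$, and transitivity of $\eq{P}$ closes the argument.

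The main obstacle is just identifying the right pointwise perturbation of $f$: the construction must simultaneously break the equality $f = g$, stay inside $\mathrm{Im}(f)$, and not introduce any new coincidences $f(R_1') = g(R_2')$ with $R_1' \not\eq{P} R_2'$. Reassigning the value of $g$ at a single point, to another value already in $\mathrm{Im}(f)$ that lies in the same $P$-class, is the minimal edit that achieves all three simultaneously; the strict refinement hypothesis is exactly what guarantees such a pair $(R_1, R_2)$ exists.
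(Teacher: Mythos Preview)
Your proof is correct and follows essentially the same approach as the paper's: both directions proceed by the same logic, with the only cosmetic difference being that the paper constructs $g$ by \emph{swapping} the $f$-values at $R_1$ and $R_2$, whereas you reassign $g$ only at the single point $R_2$. Either construction works for the same reason, so this is not a substantive deviation.
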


%

Lemma~\ref{lemma:less_ambiguity_less_robustness} has a few interesting implications. First of all, note that it means that we should expect most well-behaved data models to be robust to some forms of misspecification, assuming that $\mathrm{Am}(f) \neq P$. Moreover, it also suggests that data models that are less ambiguous also are less robust to misspecification, and vice versa. One way to interpret this is to note that if $\mathrm{Am}(f) \refinesStrict P$, then $f$ is sensitive to properties of $R$ that are irrelevant from the point of view of $P$. Specifically, it means that there are reward functions $R_1, R_2$ such that $f(R_1) \neq f(R_2)$ but $R_1 \eq{P} R_2$. Informally, we may then expect $f$ to be robust to misspecification with $g$ if $f$ and $g$ only differ in terms of such \enquote{irrelevant details} (c.f.\ Section~\ref{sec:misspecification_1_wider_classes}).

\begin{restatable}[]{lemma}{Probustnessfunctioncomposition}
\label{lemma:P_robustness_function_composition}
Let $\mathrm{Am}(f) \refines P$. 
Then $f$ is $P$-robust to misspecification with $g$ if and only if $f \neq g$ and $g = f \circ t$ for some $t : \R \to \R$ such that $R \eq{P} t(R)$ for all $R$.
\end{restatable}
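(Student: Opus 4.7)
The plan is to prove both directions by unpacking the four clauses of Definition~\ref{def:misspecification_eq} and observing that the condition $\mathrm{Am}(f) \refines P$ supplies the crucial rigidity needed to translate between the ``relational'' statement (clause 1 of the definition) and the ``functional'' statement ($g = f \circ t$).

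For the forward direction, suppose $f$ is $P$-robust to misspecification with $g$. Clause 4 of Definition~\ref{def:misspecification_eq} immediately gives $f \neq g$. To construct $t$, I would use clause 2: since $\mathrm{Im}(g) \subseteq \mathrm{Im}(f)$, for every $R \in \R$ the element $g(R)$ lies in $\mathrm{Im}(f)$, so the preimage $f^{-1}(g(R))$ is nonempty. Using choice, define $t(R)$ to be any element of $f^{-1}(g(R))$. By construction $f(t(R)) = g(R)$, so $g = f \circ t$. For the equivalence condition on $t$, I would apply clause 1 with $R_1 := t(R)$ and $R_2 := R$: since $f(t(R)) = g(R)$, clause 1 yields $t(R) \eq{P} R$, and hence $R \eq{P} t(R)$ by symmetry of $\eq{P}$.

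For the reverse direction, assume $f \neq g$ and that $g = f \circ t$ with $R \eq{P} t(R)$ for all $R$. I would verify the four clauses in turn. Clauses 3 and 4 are immediate from the hypotheses. For clause 2, any $x \in \mathrm{Im}(g)$ has the form $x = g(R) = f(t(R)) \in \mathrm{Im}(f)$. For clause 1, suppose $f(R_1) = g(R_2)$. Then $f(R_1) = f(t(R_2))$, and since $\mathrm{Am}(f) \refines P$ we get $R_1 \eq{P} t(R_2)$; combining this with the hypothesis $R_2 \eq{P} t(R_2)$ and using transitivity and symmetry of $\eq{P}$ yields $R_1 \eq{P} R_2$, as required.

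This proof is essentially a definition-chase, so there is no serious obstacle. The only subtle points are (i) noticing that $\mathrm{Am}(f) \refines P$ is used precisely to turn $f(R_1) = f(t(R_2))$ into a $P$-equivalence in the reverse direction, and (ii) invoking choice (or just pointwise existence of preimages) to define $t$ in the forward direction. Both are standard, and no additional lemmas from the excerpt are needed beyond the definitions themselves.
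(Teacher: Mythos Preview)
Your proposal is correct and follows essentially the same approach as the paper's proof: both directions are proved by unpacking Definition~\ref{def:misspecification_eq}, constructing $t$ in the forward direction by choosing preimages (using $\mathrm{Im}(g)\subseteq\mathrm{Im}(f)$), and in the reverse direction using $\mathrm{Am}(f)\refines P$ together with transitivity of $\eq{P}$ to verify clause~1.
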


%

Lemma~\ref{lemma:P_robustness_function_composition} is very important, because it provides us with an easy method for deriving necessary and sufficient conditions that completely describe what forms of misspecification any given data model $f$ is robust to. In particular, given an equivalence relation $P$, if we can find the set $T$ of all functions $t : \R \to \R$ such that $R \eq{P} t(R)$ for all $R$, then we can completely characterise the misspecification robustness of any data model $f$ by simply composing $f$ with each element of $T$. We will later use this method to characterise the misspecification robustness of several important data models.


Let us next consider Definition~\ref{def:misspecification_metric}. We will show that Definition~\ref{def:misspecification_metric} mostly fails to induce results analogous to those given in Lemma~\ref{lemma:P_robustness_inheritance}-\ref{lemma:P_robustness_function_composition}.

\begin{restatable}[]{proposition}{noepsilonrobustnessimpliesrefinement}
\label{prop:no_epsilon_robustness_implies_refinement}
There exists a pseudometric $d^\R$ on $\R$ such that for each $\epsilon > 0$ there are reward objects $f, g : \R \to X$ where $f$ is $\epsilon$-robust to misspecification with $g$ as measured by $d^\R$, but there are reward functions $R_1, R_2$ such that $g(R_1) = g(R_2)$ but $d^\R(R_1, R_2) > \epsilon$.
\end{restatable}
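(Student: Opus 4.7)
The plan is to construct an explicit counterexample rather than argue abstractly. The key observation is that condition~(1) of Definition~\ref{def:misspecification_metric} only bounds $d^\R(R_1,R_2)$ on pairs with $f(R_1) = g(R_2)$; it does not directly constrain pairs $R_1, R_2$ with merely $g(R_1) = g(R_2)$. So we should be able to design $g$ to collapse two distant reward functions to a common image point, so long as the preimage under $f$ of that point sits close to each of them. This is exactly the slack that allows the analogue of Lemma~\ref{lemma:P_robustness_implies_refinement} to fail in the metric setting.

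Concretely, I would let $d^\R$ be the $L_\infty$ metric on $\R$ (viewing reward functions as vectors indexed by transitions), take $X = \R$, and let $f$ be the identity map. This immediately gives $\mathrm{Im}(f) = \R$, so condition~(2) is trivial, and it also trivialises condition~(3), since $f(R_1) = f(R_2)$ forces $R_1 = R_2$ and hence $d^\R(R_1,R_2) = 0 \leq \epsilon$. Given $\epsilon > 0$, I would pick three specific reward functions $R_a, R_b, R_c$ with $R_c$ lying midway between $R_a$ and $R_b$ in $L_\infty$, so that $d^\R(R_a, R_c) = d^\R(R_b, R_c) = \epsilon$ and $d^\R(R_a, R_b) = 2\epsilon$; for instance, one may take $R_c = R_0$, $R_a = R_0 + \epsilon \cdot \mathbf{1}$, and $R_b = R_0 - \epsilon \cdot \mathbf{1}$, where $\mathbf{1}$ is the reward function assigning $1$ to every transition. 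I then define $g$ to agree with $f$ everywhere except at $R_a$ and $R_b$, where I set $g(R_a) = g(R_b) = R_c$.

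It then remains to verify the remaining conditions and exhibit the witnesses. Condition~(1) reduces to checking $d^\R(g(R_2), R_2) \leq \epsilon$ for every $R_2$, which holds because $g$ either fixes $R_2$ or moves it by exactly $\epsilon$. Condition~(4) holds since $g(R_a) = R_c \neq R_a = f(R_a)$, so $f \neq g$. Finally, the witnesses satisfy $g(R_a) = g(R_b) = R_c$ while $d^\R(R_a, R_b) = 2\epsilon > \epsilon$, which is precisely the failure of the analogue of Lemma~\ref{lemma:P_robustness_implies_refinement} that the proposition asserts. There is no substantial obstacle in this plan; the only thing requiring care is being explicit about the three reward functions and checking condition~(1) at the two modified points. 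Conceptually, the proof shows that a small misspecification gap between $f$ and $g$ can be compatible with a strictly larger intrinsic ambiguity in $g$ (here, up to a factor of two, which is sharp by the triangle inequality applied via any $R^*$ with $f(R^*) = g(R_a) = g(R_b)$).
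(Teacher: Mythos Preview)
Your proposal is correct and takes essentially the same approach as the paper: the paper also chooses an injective $f$ (into any $X$ with $|X|\ge|\R|$), picks two rewards at distance $2\epsilon$, and lets $g$ agree with $f$ everywhere except that it sends both of these rewards to $f$ of their midpoint. Your choice of $X=\R$, $f=\mathrm{id}$, and the $L_\infty$ metric in place of $L_2$ are inessential variations on the same construction.
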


%
%

As such, Definition~\ref{def:misspecification_metric} does not induce a result analogous to Lemma~\ref{lemma:P_robustness_implies_refinement}; there are $f$ and $g$ such that a learning algorithm that is based on $f$ is guaranteed to learn a reward function that is close to the true reward function if trained on data generated from $g$, but where this is \emph{not} true if we instead use a learning algorithm that is based on $g$, even though the former algorithm is misspecified and the latter is not. This is somewhat pathological. However, we can prove a similar but weaker result:

\begin{restatable}[]{lemma}{epsilonrobustnessimpliesweakrefinement}
\label{lemma:epsilon_robustness_implies_weak_refinement}
Let $f, g : \R \to $ be two reward objects, and let $d^\R$ be a pseudometric on $\R$. Suppose $f$ is $\epsilon$-robust to misspecification with $g$ (as measured by $d^\R$). Then if $g(R_1) = g(R_2)$, we have that $d^\R(R_1, R_2) \leq 2\epsilon$.
\end{restatable}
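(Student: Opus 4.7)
The plan is to exploit the fact that $g(R_1)$ and $g(R_2)$ both lie in $\mathrm{Im}(g)$, which by condition 2 of Definition~\ref{def:misspecification_metric} is contained in $\mathrm{Im}(f)$. This will let me introduce an intermediate reward function $R'$ that $f$ maps to the common value $g(R_1) = g(R_2)$, after which two applications of condition 1 together with the triangle inequality for $d^\R$ will give the desired bound.

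More precisely, suppose $g(R_1) = g(R_2)$, and let $x$ denote this common value. First I would invoke condition 2 of Definition~\ref{def:misspecification_metric} to conclude that $x \in \mathrm{Im}(f)$, so there exists some $R' \in \R$ with $f(R') = x$. Then applying condition 1 to the pair $(R', R_1)$, since $f(R') = g(R_1)$, yields $d^\R(R', R_1) \leq \epsilon$, and applying it to $(R', R_2)$ yields $d^\R(R', R_2) \leq \epsilon$.

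Finally, because $d^\R$ is a pseudometric on $\R$, it satisfies the triangle inequality and symmetry (as listed in Section~\ref{sec:prelims_metrics_and_norms}), so
\[
d^\R(R_1, R_2) \leq d^\R(R_1, R') + d^\R(R', R_2) \leq \epsilon + \epsilon = 2\epsilon,
\]
which is the claimed bound.

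I do not anticipate a serious obstacle: the argument is essentially a triangle-inequality chase through an intermediary $R'$ whose existence is guaranteed by the image-containment condition. The only mildly delicate point is that the bound $2\epsilon$ (rather than $\epsilon$) is tight in general, which is exactly what Proposition~\ref{prop:no_epsilon_robustness_implies_refinement} already establishes; this shows that one cannot sharpen the factor $2$ without strengthening the definition, and it justifies why the lemma is stated as a ``weaker'' analogue of Lemma~\ref{lemma:P_robustness_implies_refinement}.
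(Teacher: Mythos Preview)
Your proposal is correct and follows essentially the same argument as the paper: introduce an intermediary $R'$ (the paper calls it $R_3$) in the preimage of $g(R_1)=g(R_2)$ under $f$ via condition~2, apply condition~1 twice, and finish with the triangle inequality. The additional remark tying the factor $2$ to Proposition~\ref{prop:no_epsilon_robustness_implies_refinement} is a nice touch not present in the paper's proof.
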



Definition~\ref{def:misspecification_metric} does also not induce a result analogous to Proposition~\ref{prop:P_robustness_symmetry}:

\begin{restatable}[]{proposition}{noepsilonrobustnesssymmetry}
\label{prop:no_epsilon_robustness_symmetry}
There exists a pseudometric $d^\R$ on $\R$ such that for each $\epsilon > 0$ there are reward objects $f, g : \R \to X$ where $f$ is $\epsilon$-robust to misspecification with $g$ as measured by $d^\R$, and $\mathrm{Im}(f) = \mathrm{Im}(g)$, but where $g$ is not $\epsilon$-robust to misspecification with $f$ as measured by $d^\R$. 
\end{restatable}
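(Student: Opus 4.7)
The plan is to give a direct counterexample, exploiting the fact that three of the four conditions in Definition~\ref{def:misspecification_metric} are essentially symmetric in $(f,g)$---conditions 1 and 2 by the symmetry of $d^\R$ together with the assumption $\mathrm{Im}(f) = \mathrm{Im}(g)$, and condition 4 trivially---while condition 3 (``level sets of $f$ have diameter $\leq \epsilon$'') refers only to $f$. So I would engineer $f$ and $g$ so that this asymmetric condition cuts one way and not the other: every level set of $f$ has $d^\R$-diameter at most $\epsilon$, but some level set of $g$ has diameter strictly greater than $\epsilon$. This immediately makes $g$ fail condition 3 of $\epsilon$-robustness with $f$, while leaving $f$ free to be $\epsilon$-robust with $g$.

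Concretely, I would fix $d^\R$ to be the $L_\infty$ pseudometric on $\R$ (viewed as a finite-dimensional Euclidean space) once and for all. Then, for each $\epsilon > 0$, I would pick four distinct reward functions $R_{A_1}, R_{A_2}, R_{M_1}, R_{M_2}$ with $d^\R(R_{A_1}, R_{A_2}) = 2\epsilon$, $d^\R(R_{A_i}, R_{M_j}) \leq \epsilon$ for all $i,j$, and $d^\R(R_{M_1}, R_{M_2}) \leq \epsilon$. Such a configuration exists under $L_\infty$ because the set of points within $L_\infty$-distance $\epsilon$ of two points at $L_\infty$-distance $2\epsilon$ is a nondegenerate line segment (not a single point, as it would be under the Euclidean metric), leaving room for two distinct midpoints. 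I would then define $g_\epsilon$ to collapse the far pair, $g_\epsilon(R_{A_1}) = g_\epsilon(R_{A_2}) = y_1$, $g_\epsilon(R_{M_1}) = y_2$, $g_\epsilon(R_{M_2}) = y_3$, and dually $f_\epsilon$ to separate it, $f_\epsilon(R_{A_1}) = y_2$, $f_\epsilon(R_{A_2}) = y_3$, $f_\epsilon(R_{M_1}) = f_\epsilon(R_{M_2}) = y_1$, extending both to the identity on the rest of $\R$ (taking $X = \R$) with $y_1, y_2, y_3$ chosen inside suitable $\epsilon$-neighbourhoods.

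A short case analysis then verifies that $\mathrm{Im}(f_\epsilon) = \mathrm{Im}(g_\epsilon)$, $f_\epsilon \neq g_\epsilon$, every $f_\epsilon$-level set has $L_\infty$-diameter at most $\epsilon$, and $f_\epsilon(R_1) = g_\epsilon(R_2)$ implies $d^\R(R_1, R_2) \leq \epsilon$: the last holds because any such cross-pair is either both non-special (so $R_1 = R_2$) or involves special points among the four above, for which distances are within $\epsilon$ by construction. On the other hand $g_\epsilon^{-1}(y_1)$ contains both $R_{A_1}$ and $R_{A_2}$ and therefore has diameter at least $2\epsilon$, so $g_\epsilon$ violates condition 3 of $\epsilon$-robustness with $f_\epsilon$. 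The main obstacle is the simultaneous placement of $y_1, y_2, y_3$: condition 1 together with $\mathrm{Im}(f_\epsilon) = \mathrm{Im}(g_\epsilon)$ forces $y_1$ to lie in $\bigcap_i B_\epsilon(R_{A_i}) \cap \bigcap_j B_\epsilon(R_{M_j})$, an intersection that is empty under the Euclidean metric but nonempty under $L_\infty$, which is exactly why the pseudometric has to be chosen with a little geometric care.
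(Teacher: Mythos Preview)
Your proposal is correct and follows the same core strategy as the paper: exploit the asymmetry of condition~3 in Definition~\ref{def:misspecification_metric} by making $g$ collapse a pair at distance $2\epsilon$ while keeping all level sets of $f$ small, and arranging the values so that every cross-match $f(R_1)=g(R_2)$ stays within $\epsilon$.

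The only substantive difference is in the concrete construction. You build $f$ and $g$ symmetrically on the same four special points with swapped roles, which forces both midpoints $R_{M_1},R_{M_2}$ to lie within $\epsilon$ of \emph{both} far points $R_{A_1},R_{A_2}$; under $L_2$ that intersection is a single point, so you correctly switch to $L_\infty$ to get room for two distinct midpoints and the target $y_1$. The paper avoids this complication by using an asymmetric placement: it only requires $R_a$ close to $R_1$ and $R_b$ close to $R_2$ (not each close to both), sends $f(R_1)=h(R_a)$, $f(R_2)=h(R_b)$ through an abstract injective $h$, and lets $g$ collapse $R_1,R_2$ to $h((R_1+R_2)/2)$. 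This works under $L_2$ with no geometric subtlety. Your route buys a pleasingly symmetric picture at the cost of a more delicate point placement; the paper's route is shorter and metric-agnostic.
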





In other words, Definition~\ref{def:misspecification_metric} fails to be symmetric even if $\mathrm{Im}(f) = \mathrm{Im}(g)$. This will make it more difficult to establish equivalence classes of behavioural models that are internally robust to misspecification. Lemma~\ref{lemma:less_ambiguity_less_robustness} cannot even be straightforwardly translated into Definition~\ref{def:misspecification_metric} for $\epsilon > 0$. Definition~\ref{def:misspecification_metric} also fails to induce a result analogous to Lemma~\ref{lemma:P_robustness_function_composition}:

\begin{restatable}[]{proposition}{noepsilonrobustnessfunctioncomposition}
\label{prop:no_epsilon_robustness_function_composition}
There exists a pseudometric $d^\R$ on $\R$ such that for each $\epsilon > 0$ there is an $f: \R \to X$ such that if $f(R_1) = f(R_2)$ then $d^\R(R_1,R_2) \leq \epsilon$, and a $t : \R \to \R$ such that $d^\R(R,t(R)) \leq \epsilon$, where $f \neq f \circ t$, but where $f$ is not $\epsilon$-robust to misspecification with $f \circ t$ as measured by $d^\R$.
\end{restatable}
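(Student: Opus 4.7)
The plan is to construct an explicit counterexample by quantisation. Fix any triple $(s_0, a_0, s_0') \in \SxAxS$ and define a pseudometric on $\R$ by $d^\R(R_1, R_2) = |R_1(s_0, a_0, s_0') - R_2(s_0, a_0, s_0')|$; the four pseudometric axioms are immediate. Given $\epsilon > 0$, set $X = \mathbb{Z}$, define $f : \R \to X$ by the quantisation $f(R) = \lfloor R(s_0, a_0, s_0')/\epsilon \rfloor$, and define $t : \R \to \R$ to be the shift that replaces the single value $R(s_0, a_0, s_0')$ by $R(s_0, a_0, s_0') + \epsilon$ and leaves every other coordinate untouched.

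Next, I would verify the three hypotheses of the proposition. If $f(R_1) = f(R_2) = k$, then $R_1(s_0,a_0,s_0')$ and $R_2(s_0,a_0,s_0')$ both lie in $[k\epsilon, (k+1)\epsilon)$, so $d^\R(R_1, R_2) < \epsilon$. By construction $d^\R(R, t(R)) = \epsilon$ for every $R$, and $f(t(R)) = f(R) + 1 \neq f(R)$, so $f \neq f \circ t$.

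To exhibit the failure, I would pick a small $\delta \in (0, \epsilon)$ and any two reward functions $R_1, R_2$ whose values at $(s_0, a_0, s_0')$ are respectively $2\epsilon - \delta$ and $0$ (and whose other coordinates are arbitrary). A quick computation gives $f(R_1) = 1 = f(t(R_2))$ while $d^\R(R_1, R_2) = 2\epsilon - \delta > \epsilon$, which contradicts condition~(1) of Definition~\ref{def:misspecification_metric} for $g = f \circ t$. Hence $f$ is not $\epsilon$-robust to misspecification with $f \circ t$ under $d^\R$.

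The main obstacle is essentially cosmetic: one must package the one-dimensional quantisation inside the reward-function formalism while keeping $d^\R$ a legitimate pseudometric on the full reward space $\R$, which the single-coordinate projection handles cleanly. The underlying tightness --- that two displacements each of size $\leq \epsilon$ can combine to something strictly larger than $\epsilon$ via the triangle inequality --- is precisely the slack that Lemma~\ref{lemma:epsilon_robustness_implies_weak_refinement} already leaves open, which is why no pseudometric analogue of Lemma~\ref{lemma:P_robustness_function_composition} can hold at level $\epsilon$ rather than $2\epsilon$.
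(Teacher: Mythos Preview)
Your construction is correct. You verify all three hypotheses and then exhibit a pair $R_1, R_2$ with $f(R_1) = (f\circ t)(R_2)$ but $d^\R(R_1,R_2) = 2\epsilon - \delta > \epsilon$, which violates condition~(1) of Definition~\ref{def:misspecification_metric}; your closing remark about the $\epsilon + \epsilon > \epsilon$ slack left by Lemma~\ref{lemma:epsilon_robustness_implies_weak_refinement} correctly identifies the mechanism.

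The paper's proof is different in style though not in spirit. It takes $d^\R$ to be the full $L_2$-metric, chooses three collinear reward functions $R_1, R_2, R_3$ at pairwise distances $\epsilon, \epsilon, 2\epsilon$, lets $f$ be injective except for the single collision $f(R_1) = f(R_2)$, and lets $t$ be the identity except that $t(R_3) = R_2$; then $f(R_1) = f(t(R_3))$ while $d^\R(R_1, R_3) = 2\epsilon$. So the paper's construction is surgical (only three exceptional points, otherwise everything is injective or the identity), whereas yours is structural (a uniform quantisation and a uniform shift). Your approach buys the advantage that every step is a concrete formula one can compute, and it makes transparent that the phenomenon is really one-dimensional; the paper's approach buys minimality and uses a genuine metric rather than a pseudometric, which is slightly stronger in that it shows the failure does not depend on collapsing many rewards to distance zero. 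Both arguments rest on exactly the same triangle-inequality obstruction you identified.
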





This is particularly unfortunate, since Lemma~\ref{lemma:P_robustness_function_composition} is very useful for easily deriving necessary and sufficient conditions for $P$-robustness.
However, we can derive a \emph{sufficient} condition for $\epsilon$-robustness that mirrors Lemma~\ref{lemma:P_robustness_function_composition}; if $f$ satisfies that $f(R_1) = f(R_2)$ implies $d^\R(R_1,R_2) \leq \epsilon$, and $t$ satisfies that $d^\R(R,t(R)) \leq \epsilon$, then $f$ is guaranteed to be $2\epsilon$-robust to misspecification with $f \circ t$. However, there can be $g$ such that $f$ is $\epsilon$-robust to misspecification with $g$, but where $g$ cannot be expressed in this way.
We can also derive a necessary and sufficient condition by imposing stronger requirements on $f$:

\begin{restatable}[]{lemma}{weakepsilonrobustnessfunctioncomposition}
\label{lemma:weak_epsilon_robustness_function_composition}
Let $f : \R \to X$ be a reward object, and let $d^\mathcal{R}$ be a pseudometric on $\R$.
Assume that $f(R_1) = f(R_2) \implies d^\mathcal{R}(R_1, R_2) = 0$.
Then $f$ is $\epsilon$-robust to misspecification with $g$ as measured by $d^\mathcal{R}$ if and only if $g = f \circ t$ for some $t : \R \to \R$ such that $d^\mathcal{R}(R, t(R)) \leq \epsilon$ for all $R$, and such that $f \neq g$.
\end{restatable}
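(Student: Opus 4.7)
The plan is to prove both directions of the biconditional, relying on the assumption that $f(R_1)=f(R_2) \Rightarrow d^\R(R_1,R_2)=0$ to collapse potential ambiguity when constructing the witness $t$ in one direction, and to apply the triangle inequality to transfer the bound in the other direction.

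For the forward direction, I would assume $f$ is $\epsilon$-robust to misspecification with $g$ and explicitly build a map $t : \R \to \R$ such that $g = f \circ t$. Since condition 2 of Definition~\ref{def:misspecification_metric} gives $\mathrm{Im}(g) \subseteq \mathrm{Im}(f)$, for each $R \in \R$ the fibre $f^{-1}(\{g(R)\})$ is nonempty; using the axiom of choice, I pick one element from each such fibre and call it $t(R)$. By construction $f(t(R)) = g(R)$, so $g = f \circ t$. Applying condition 1 of Definition~\ref{def:misspecification_metric} with $R_1 = t(R)$ and $R_2 = R$ (which is allowed since $f(t(R)) = g(R)$) yields $d^\R(t(R), R) \leq \epsilon$, and symmetry of the pseudometric gives $d^\R(R, t(R)) \leq \epsilon$. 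Condition 4 immediately supplies $f \neq g$.

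For the reverse direction, assume $g = f \circ t$ with $d^\R(R, t(R)) \leq \epsilon$ for all $R$ and $f \neq g$. I then verify each clause of Definition~\ref{def:misspecification_metric} in turn. Clause 2 is immediate since $\mathrm{Im}(g) = \mathrm{Im}(f \circ t) \subseteq \mathrm{Im}(f)$. Clause 3 follows from the standing hypothesis on $f$, which gives $d^\R(R_1,R_2) = 0 \leq \epsilon$ whenever $f(R_1)=f(R_2)$. Clause 4 holds by assumption. The main step is clause 1: suppose $f(R_1) = g(R_2) = f(t(R_2))$. The hypothesis on $f$ forces $d^\R(R_1, t(R_2)) = 0$, and the triangle inequality then gives
\[
d^\R(R_1, R_2) \leq d^\R(R_1, t(R_2)) + d^\R(t(R_2), R_2) \leq 0 + \epsilon = \epsilon.
\]

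I expect no substantive obstacle: the strong assumption $f(R_1)=f(R_2) \Rightarrow d^\R(R_1,R_2)=0$ is precisely what makes the triangle inequality collapse to the desired bound and what allows the choice of preimage in the forward direction to be made without loss. The only subtlety worth flagging is that this assumption is strictly stronger than clause 3 of Definition~\ref{def:misspecification_metric}, and it is exactly this strengthening that rescues a clean analogue of Lemma~\ref{lemma:P_robustness_function_composition} in the metric setting, after Proposition~\ref{prop:no_epsilon_robustness_function_composition} has shown that the naive analogue fails.
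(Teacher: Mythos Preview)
Your proof is correct and follows essentially the same approach as the paper: construct $t$ from preimages using $\mathrm{Im}(g)\subseteq\mathrm{Im}(f)$ and invoke condition~1 for the bound in one direction, and use the hypothesis $f(R_1)=f(R_2)\Rightarrow d^\R(R_1,R_2)=0$ plus the triangle inequality for the other. One minor remark: in your forward direction the hypothesis on $f$ is not actually needed to ``collapse ambiguity'' in the choice of $t$ (condition~1 applies to \emph{any} preimage directly), so that part of your plan overstates its role there---but your actual argument is fine and matches the paper.
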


Moreover, Definition~\ref{def:misspecification_metric} \emph{does} induce a result analogous to Lemma~\ref{lemma:P_robustness_inheritance}:

\begin{restatable}[]{proposition}{epsilonrobustnessinheritance}
\label{prop:epsilon_robustness_inheritance}
For any pseudometric $d^\R$ on $\R$ and any $\epsilon \geq 0$, if $f$ is not $\epsilon$-robust to misspecification with $g$ as measured by $d^\R$, and $\mathrm{Im}(g) \subseteq \mathrm{Im}(f)$, then for any $h$, $h \circ f$ is not $\epsilon$-robust to misspecification with $h \circ g$ as measured by $d^\R$.
\end{restatable}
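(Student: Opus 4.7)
The plan is to proceed by a direct case analysis on which of the four conditions in Definition~\ref{def:misspecification_metric} is violated by $f$ with respect to $g$, and to show in each case that the corresponding (or a related) condition is violated by $h \circ f$ with respect to $h \circ g$. This mirrors the structure used to prove Lemma~\ref{lemma:P_robustness_inheritance}, which handles the analogous statement for Definition~\ref{def:misspecification_eq}.

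First I would note that, since we assume $\mathrm{Im}(g) \subseteq \mathrm{Im}(f)$, condition~2 of Definition~\ref{def:misspecification_metric} holds for $f$ and $g$. It also holds for $h \circ f$ and $h \circ g$: for any $y \in \mathrm{Im}(h \circ g)$, write $y = h(g(R))$; since $g(R) \in \mathrm{Im}(g) \subseteq \mathrm{Im}(f)$, there is some $R'$ with $g(R) = f(R')$, giving $y = h(f(R')) \in \mathrm{Im}(h \circ f)$. So condition~2 can never be the culprit for the failure of $\epsilon$-robustness of $h \circ f$ with $h \circ g$; we must exhibit a failure of condition~1, 3, or 4.

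Next I would split into cases based on which of conditions 1, 3, 4 fails for $f$ with $g$. If condition~1 fails, there exist $R_1, R_2$ with $f(R_1) = g(R_2)$ and $d^\R(R_1, R_2) > \epsilon$; applying $h$ to both sides gives $(h \circ f)(R_1) = (h \circ g)(R_2)$, so condition~1 fails for $h \circ f$ with $h \circ g$ using the same pair $(R_1, R_2)$. If condition~3 fails, there exist $R_1, R_2$ with $f(R_1) = f(R_2)$ and $d^\R(R_1, R_2) > \epsilon$; applying $h$ yields $(h \circ f)(R_1) = (h \circ f)(R_2)$, so condition~3 fails for $h \circ f$. If condition~4 fails, then $f = g$, so trivially $h \circ f = h \circ g$, so condition~4 fails for $h \circ f$ with $h \circ g$.

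In every case, at least one of the four conditions fails for $h \circ f$ with $h \circ g$, so $h \circ f$ is not $\epsilon$-robust to misspecification with $h \circ g$ as measured by $d^\R$. There is no substantive obstacle here --- the result essentially follows because composition with $h$ preserves all the equalities appearing in the statements of the conditions, and the distances $d^\R(R_1, R_2)$ that witness failures are unaffected by $h$ since they only involve reward arguments; the only mild subtlety is remembering to verify that condition~2 is passed on to $h \circ f, h \circ g$ so that the failure must be located in one of the other three conditions.
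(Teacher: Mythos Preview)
Your proposal is correct and essentially identical to the paper's own proof: both argue by case analysis on which of conditions~1, 3, or 4 of Definition~\ref{def:misspecification_metric} fails for $f$ with $g$ (condition~2 being excluded by hypothesis), and in each case apply $h$ to the relevant equality to exhibit the same failure for $h\circ f$ with $h\circ g$. Your explicit verification that condition~2 also holds for $h\circ f$ and $h\circ g$ is a harmless extra observation that the paper omits, since it suffices to exhibit any single failing condition.
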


%

The comparison between Lemma~\ref{lemma:P_robustness_inheritance}-\ref{lemma:P_robustness_function_composition} and 
the results provided above
exemplify the fact that Definition~\ref{def:misspecification_eq} sometimes lets us derive results that are stronger and more informative than what is possible using Definition~\ref{def:misspecification_metric}, unless we assume that $\epsilon = 0$ (in which case Definitions~\ref{def:misspecification_eq} and \ref{def:misspecification_metric} are equivalent). For this reason, we will make use of both Definition~\ref{def:misspecification_eq} and Definition~\ref{def:misspecification_metric}.

\subsection{Reward Transformations}\label{sec:reward_transformations}

Recall that a \emph{reward transformation} is a map $t : \R \to \R$. In this section, we introduce several important classes of reward transformations, that we will later use to express our results. First recall \emph{potential shaping}, which was first introduced by \cite{ng1999}:

\begin{definition}\label{def:potential_shaping}
A \emph{potential function} is a function $\Phi : \States \to \mathbb{R}$.
Given a discount $\discount$, we say that $R_1$ and $R_2$ differ by \emph{potential shaping} with $\gamma$ if for some potential $\Phi$,
$$
R_2(s,a,s') = R_1(s,a,s') + \discount\cdot\Phi(s') - \Phi(s).
$$
for all $s,s' \in \States$ and all $a \in \Actions$.
\end{definition}

\cite{ng1999} proved that if $R_1$ and $R_2$ differ by potential shaping, then they have the same optimal policies for any choice of $\tfunc$ and $\init$. Potential shaping also has many other interesting properties, which we discuss in more detail in \ref{appendix:reward_transformation_properties}. We next define two new classes of transformations, starting with \emph{$S'$-redistribution}.
\begin{definition}
\label{def:sprime-redistribution} 
Given a transition function $\tfunc$,
we say that $R_1$ and $R_2$ differ by \emph{$S'$-redistribution} with $\tfunc$
if
    $$
    \Expect{S' \sim \tfunc(s,a)}{\reward_1(s,a,S')}
    = \Expect{S' \sim \tfunc(s,a)}{\reward_2(s,a,S')}
    $$
for all $s,s' \in \States$ and all $a \in \Actions$.
\end{definition}


$S'$-redistribution accounts for any difference between $R_1$ and $R_2$ that does not affect the expected reward. If $s_1$, $s_2 \in \mathrm{Supp}(\tfunc(s,a))$ then $S'$-redistribution can increase $\reward(s,a,s_1)$ if it decreases $\reward(s,a,s_2)$ proportionally. $S'$-redistribution can also change $\reward$ arbitrarily for transitions that occur with probability 0. We next consider \emph{optimality-preserving transformations}:
%

\begin{definition}\label{def:optimality_preserving_transformation}
Given a transition function $\tfunc$ and a discount $\discount$, we say that $R_1$ and $R_2$ differ by an \emph{optimality-preserving transformation} with $\tfunc$ and $\gamma$ if there exists a function $\psi : \States \rightarrow \mathbb{R}$ such that
$$
\mathbb{E}_{S' \sim \tfunc(s,a)}[R_2(s,a,S') + \discount\cdot\psi(S')] \leq \psi(s), 
$$
with equality if and only if $a \in \mathrm{argmax}_{a \in \Actions} A^\star_1(s,a)$. 
\end{definition}

As the name suggests, an optimality-preserving transformation preserves optimal policies (c.f.\ Theorem~\ref{thm:OPT_ambiguity}). 
Intuitively speaking, an optimality-preserving transformation lets us pick an arbitrary new value function $\psi$, and then adjust $R_2$ in any way that respects the new value function and the argmax of $A^\star_1$ --- the latter condition ensures that the same actions (and hence the same policies) stay optimal. 


In addition to these transformations, we also say that $R_1$ and $R_2$ differ by \emph{positive linear scaling} if $R_2 = c \cdot R_1$ for some positive constant $c$, and that they differ by \emph{constant shift} if $R_2 = R_1 + c$ for some constant $c$.
Based on these definitions, we can now specify several \emph{sets} of reward transformations:
\begin{enumerate}
    \item Let $\PS$ be the set of all  reward transformations $t$ such that $t(R)$ is given by potential shaping of $R$ relative to the discount $\discount$.
    \item Let $\SR$ be the set of all reward transformations $t$ such that $t(R)$ is given by $S'$-redistribution of $R$ relative to the transition function $\tfunc$.
    \item Let $\LS$ be the set of all reward transformations $t$ that scale each reward function by some positive constant, i.e.\ for each $R$ there is a $c \in \mathbb{R}^+$ such that $t(R)(s,a,s') = c \cdot R(s,a,s')$.
    \item Let $\CS$ be the set of all reward transformations $t$ that shift each reward function by some constant, i.e.\ for each $R$ there is a $c \in \mathbb{R}$ such that $t(R)(s,a,s') = R(s,a,s') + c$.
    \item Let $\OP$ be the set of all reward transformations $t$ such that $t(R)$ is given by an optimality-preserving transformation of $R$ with $\tfunc$ and $\discount$.
\end{enumerate}
Note that these sets are defined in a way that allows their transformations to be \enquote{sensitive} to the reward function it takes as input. For example, a transformation $t \in \mathrm{PS}_\gamma$ might apply one potential function $\Phi_1$ to $R_1$, and a different potential function $\Phi_2$ to $R_2$, and a transformation $t \in \mathrm{LS}$ might scale $R_1$ by a positive constant $c_1$, and $R_2$ by a different constant $c_2$, etc. Many of our results will be expressed in terms of these sets.

We will also combine sets of reward transformations to form bigger sets. Specifically, if $T_1$ and $T_2$ are sets of reward transformations, then we use $T_1 \bigodot T_2$ to denote the set of all transformations that can be obtained by composing transformations in $T_1$ and $T_2$ arbitrarily, in any order. Formally, we define this operator in the following way:

\begin{definition}
Let $T_1$ and $T_2$ be two (non-empty) sets of reward transformations. Define $S_0$ as $T_1 \cup T_2$, and
$$
S_{i+1} = \{ t_1 \circ t_2 : t_1 \in T_1 \cup T_2, t_2 \in S_i\}.
$$
Then $T_1 \bigodot T_2 = \bigcup_{i = 0}^\infty\{S_i : i \in \mathbb{N}\}$.
\end{definition}

For example, this means that $\PS \bigodot \SR$ is the set of all reward transformations that can be created by composing potential shaping and $S'$-redistribution, in any order. In the text, we will sometimes refer to this set as \enquote{potential shaping and $S'$-redistribution}. This means that the statement \enquote{$R_1$ and $R_2$ differ by potential shaping and $S'$-redistribution} should be understood as saying that there is a $t \in \PS \bigodot \SR$ such that $R_2 = t(R_1)$, and so on. Also note that $\bigodot$ is both commutative and associative.

In \ref{appendix:reward_transformation_properties}, we list and prove several key properties of the reward transformations we have introduced above. These properties are primarily used in our proofs, but may also be helpful for gaining an intuitive understanding  of how these reward transformations work.

\subsection{Behavioural Models}\label{sec:additional_definitions}

In this section, we introduce some special notation for the three behavioural models that are most common in the current IRL literature, i.e.\ optimal policies, Boltzmann-rational policies, and MCE policies.
Given a transition function $\tfunc$ and a discount parameter $\gamma$, let $b_{\tfunc, \gamma, \beta} : \R \to \Pi$ be the function that returns the Boltzmann-rational policy of $R$ with temperature $\beta$, and let $c_{\tfunc, \gamma, \alpha} : \R \to \Pi$ be the function that returns the MCE policy of $R$ with weight $\alpha$. These policies exist and are unique for each $\tfunc$, $\gamma$, $\beta$, and $\alpha$, and so $b_{\tfunc, \gamma, \beta}$ and $c_{\tfunc, \gamma, \alpha}$ are well-defined.


The optimality model requires a bit more care, because there may in general be more than one policy that is optimal under a given reward function.
To resolve this, recall that a policy is optimal if and only if it only gives support to optimal actions, where the \enquote{optimal actions} are the actions that maximise $Q^\star$.
A state may have multiple optimal actions, so we can get multiple optimal policies by breaking ties in different ways.
However, if an optimal policy gives support to multiple actions in some state, then we would normally not expect the exact probability it assigns to each action to convey any information about the reward function. We will therefore only look at the actions that the optimal policy takes, and ignore the relative probability it assigns to those actions. 
Formally, we will treat optimal policies as functions $\OptimalPolicy : \States \rightarrow \mathcal{P}(\mathrm{argmax}_{a \in \Actions} \AStar)-\{\varnothing\}$; i.e.\ as functions that for each state return a non-empty subset of the set of all actions that are optimal in that state.
Let $\mathcal{O}_{\tfunc,\gamma}$ be the set of all functions that return such policies (relative to transition function $\tfunc$ and discount factor $\gamma$).
Moreover, let $o_{\tfunc,\gamma}^\star \in \mathcal{O}_{\tfunc,\gamma}$ be the function that, given $R$, returns the function that maps each state to the set of \emph{all} actions which are optimal in that state. Intuitively, $o_{\tfunc,\gamma}^\star$ corresponds to optimal policies that take all optimal actions with positive probability. Alternatively, we can also think of $o_{\tfunc,\gamma}^\star$ as corresponding to the set of all optimal policies (noting that this set determines the set of optimal actions for each state, and vice versa).
\section{Comparing Reward Functions}\label{sec:comparing_reward_functions}

When analysing a reward learning algorithm, we wish to derive claims that compare the learnt reward function to the underlying true reward function, given different setups and conditions. To do this, we must first have principled methods for comparing reward functions. In this section, we discuss different methods for doing this. 
First, we will 
introduce two natural \emph{equivalence relations} on the space of all reward functions, and characterise the corresponding equivalence classes. We will also introduce a family of \emph{pseudometrics} on the space of all reward functions, and show that these pseudometrics satisfy several desirable properties. In later sections, we will use these reward transformations, equivalence classes, and metrics, to express and prove our results about IRL algorithns.

\subsection{Equivalent Reward Functions}\label{sec:equivalent_rewards}

In this section, we will introduce and study two important equivalence relations on $\R$. The first equivalence relation considers two reward functions to be equivalent if they have the same \emph{ordering of policies}, and the second equivalence relation considers two reward functions to be equivalent if they have the same \emph{optimal policies}. We will also characterise these equivalence relations in terms of reward transformations.

Given a discount $\gamma$ and transition function $\tfunc$, we say that $\ORD$ is the equivalence relation under which $R_1 \eq{\ORD} R_2$ if and only if $R_1$ and $R_2$ have the same \emph{policy ordering} under $\tfunc$ and $\init$.\footnote{Note that while the policy ordering of $R$ may depend on the initial state distribution $\init$, we have that $R_1$ and $R_2$ have the same policy order for one $\init$ if and only if they have the same policy order for all $\init$, c.f.\ Theorem~\ref{thm:policy_ordering}.} Moreover, we say that $\OPT$ is the equivalence relation under which $R_1 \eq{\OPT} R_2$ if and only if $R_1$ and $R_2$ have the same \emph{optimal policies} under $\tfunc$ and $\init$. Note that if $R_1 \eq{\ORD} R_2$ then $R_1 \eq{\OPT} R_2$, but the converse does not hold --- if two reward functions have the same policy ordering, then they have the same optimal policies, but they may have the same optimal policies, without having the same policy ordering. This means that $\ORD$ is stronger than $\OPT$.

We will characterise these equivalence relations in terms of necessary and sufficient conditions on $R_1$ and $R_2$ (relative to a particular choice of transition function $\tfunc$ and discount factor $\gamma$): 

\begin{restatable}[]{theorem}{policyordering}
\label{thm:policy_ordering}
$R_1 \eq{\ORD} R_2$ if and only if $R_2 = t(R_1)$ for some $t \in \SR \bigodot \PS \bigodot \LS$.
\end{restatable}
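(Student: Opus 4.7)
The plan is to prove both directions of the biconditional. The sufficiency direction ($\Leftarrow$) follows by checking that each of $\LS$, $\PS$, and $\SR$ individually preserves the policy ordering, so that any composition of such transformations does as well. Specifically, a positive scaling by $c$ sends $\J(\pi) \mapsto c \cdot \J(\pi)$; potential shaping by $\Phi$ sends $\J(\pi) \mapsto \J(\pi) - \Expect{S_0 \sim \init}{\Phi(S_0)}$ by a telescoping argument on $\Return(\xi)$; and an $S'$-redistribution leaves $\J(\pi)$ unchanged pointwise, since the per-step expected reward is preserved. In each case $\J$ changes only by a positive linear scaling plus an additive constant, which leaves the induced preorder on $\Pi$ invariant.

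For the necessity direction ($\Rightarrow$), the central idea is to use the linear representation $\J(\pi) = \eta^\pi \cdot R$ to regard the evaluations $\J_1, \J_2$ associated with $R_1, R_2$ as two linear functionals on the convex polytope $\Omega = \{\eta^\pi : \pi \in \Pi\}$. I would then prove the following geometric lemma: if two affine functionals $f_1, f_2$ on a convex set $C$ induce the same preorder, then there exist $a > 0$ and $b \in \mathbb{R}$ with $f_2 = a f_1 + b$ on $C$ (taking $a = 1$ in the degenerate case where both are constant). To see this, one considers the image $H(C) = \{(f_1(x), f_2(x)) : x \in C\}$, which is convex in $\mathbb{R}^2$. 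The ordering condition forces any two distinct points of $H(C)$ to have distinct first coordinates and second coordinates in the same strict order; combined with convexity (any intermediate first coordinate is realised by a point on the segment between two extremes), this compels $H(C)$ onto a line of positive slope. Specialising, $\J_2(\pi) = a \J_1(\pi) + b$ for all $\pi$.

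The final step is to convert this pointwise affine relation into a reward transformation. Since $\J$ is linear in $R$, the reward $R_\Delta = R_2 - a R_1$ satisfies $\J(\pi) = b$ under $R_\Delta$ for every $\pi$, so $R_\Delta$ is trivial. I would then prove the auxiliary characterisation that every trivial reward lies in $(\PS \bigodot \SR)(R_0)$: triviality makes every policy optimal, so $\AStar \equiv 0$; the Bellman optimality equation (from Equations~\ref{equation:optimal_V_recursion}--\ref{equation:optimal_Q_recursion}) then gives $\VStar(s) = \Expect{S' \sim \tfunc(s,a)}{R(s,a,S') + \gamma \VStar(S')}$ for every $(s,a)$; taking $\Phi = \VStar$, the reward $R + \gamma \Phi(s') - \Phi(s)$ has zero $\tfunc$-conditional expectation at every $(s,a)$ and is therefore an $S'$-redistribution of $R_0$, placing $R$ in $(\PS \bigodot \SR)(R_0)$. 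Writing $R_\Delta = p(s(R_0))$ with $p \in \PS$ and $s \in \SR$, I realise $R_2 = a R_1 + p(s(R_0))$ as the composite of (i) positive linear scaling of $R_1$ by $a$, (ii) $S'$-redistribution on $aR_1$ adding the zero-mean perturbation $s(R_0)$, and (iii) potential shaping by $\Phi$; this composite lies in $\SR \bigodot \PS \bigodot \LS$. The main obstacle is the convex-geometric lemma: although intuitive, it rests essentially on the convexity of the occupancy polytope and a careful case analysis for points of $\Omega$ lying outside any chosen segment of witnesses of non-constancy. A secondary subtlety is the trivial-reward characterisation, where one must first promote \enquote{$\J$ is constant for the working $\init$} to a pointwise Bellman condition on $\VStar$, exploiting the invariance of the ordering under $\init$ noted in the footnote to Section~\ref{sec:equivalent_rewards} together with the standing reachability assumption on $\States$.
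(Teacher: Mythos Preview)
Your sufficiency direction matches the paper's. For necessity, you take a genuinely different route. The paper first asserts (tersely) that the same ordering forces $\J_1 = a\J_2 + b$, and then reduces to Lemma~\ref{lemma:ambiguity_of_J}, whose proof is a \emph{dimension count}: Lemma~\ref{lemma:homeomorphism} (via Invariance of Domain) shows that $\mathrm{Im}(m_{\tfunc,\init,\gamma})$ spans an $|\States|(|\Actions|-1)$-dimensional affine space, so the set of rewards sharing a given $\J$ is an affine space of exactly the dimension of potential shaping plus $S'$-redistribution, forcing the two to coincide. You instead characterise trivial rewards directly through the Bellman optimality equation, which is more elementary and sidesteps the topological machinery entirely. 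Your convex-geometry lemma does presuppose that $\Omega$ is convex --- a standard LP-duality fact about occupancy measures that the paper never states; the paper's own ``monotonic $\Rightarrow$ affine'' step is equally terse and can be justified either from this convexity or from the openness of $m(\Pi^+)$ established in Lemma~\ref{lemma:homeomorphism}.

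There is, however, one genuine gap. Your proposed route to ``$\J_\Delta$ constant $\Rightarrow A^\star_\Delta \equiv 0$'' invokes the footnote in Section~\ref{sec:equivalent_rewards} asserting that the policy ordering is $\init$-invariant --- but that footnote cites Theorem~\ref{thm:policy_ordering} itself as its justification, so the appeal is circular. The fix is direct and does not require varying $\init$: let $\pi_u$ be the uniform policy, which by the standing reachability assumption visits every state with strictly positive discounted weight. The performance difference lemma applied to $\pi_u$ against an optimal $\pi^\star$ then gives $0 = \J_\Delta(\pi_u) - \J_\Delta(\pi^\star)$ as a sum over $(s,a)$ of strictly positive weights times $A^\star_\Delta(s,a) \le 0$, whence $A^\star_\Delta \equiv 0$. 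With this repair your argument goes through cleanly.
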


Stated differently, Theorem~\ref{thm:policy_ordering} says that the MDPs $(\States,\Actions,\tfunc,\init,R_1,\gamma)$ and $(\States,\Actions,\tfunc,\init,R_2,\gamma)$ have the same ordering of policies if and only if $R_1$ and $R_2$ differ by potential shaping, positive linear scaling, and $S'$-redistribution. 
This result will be very important for our analysis.
We next show that $\OPT$ corresponds to optimality-preserving transformations:

\begin{restatable}[]{theorem}{OPTambiguity}
\label{thm:OPT_ambiguity}
$R_1 \eq{\OPT} R_2$ if and only if $R_2 = t(R_1)$ for some $t \in \OP$.
\end{restatable}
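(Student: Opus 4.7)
The plan is to prove both directions by identifying the mysterious function $\psi$ in Definition~\ref{def:optimality_preserving_transformation} with the optimal value function $V^\star_2$ of $R_2$. The inequality-with-equality-on-the-argmax condition in the definition is essentially a disguised Bellman optimality equation, so the proof reduces to recognising this and invoking uniqueness of the fixed point.

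For the reverse direction (\emph{if}), suppose $R_2 = t(R_1)$ for some $t \in \OP$ with witness $\psi$. The condition says $\mathbb{E}_{S' \sim \tfunc(s,a)}[R_2(s,a,S') + \gamma \psi(S')] \leq \psi(s)$ for every $(s,a)$, with equality iff $a \in \mathrm{argmax}_{a} A^\star_1(s,a)$. Since the argmax set is non-empty at every state, the maximum over $a$ of $\mathbb{E}_{S' \sim \tfunc(s,a)}[R_2(s,a,S') + \gamma \psi(S')]$ equals $\psi(s)$; hence $\psi$ satisfies the Bellman optimality Equation~\ref{equation:optimal_V_recursion} for $R_2$. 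By uniqueness of the fixed point of that contraction, $\psi = V^\star_2$. Consequently, the actions attaining equality are exactly the $R_2$-optimal actions, i.e.\ $\mathrm{argmax}_a A^\star_2(s,a)$. But these are also exactly $\mathrm{argmax}_a A^\star_1(s,a)$ by the equality condition in the definition. So $R_1$ and $R_2$ share the set of optimal actions at every state, hence the same set of optimal policies, so $R_1 \eq{\OPT} R_2$.

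For the forward direction (\emph{only if}), suppose $R_1 \eq{\OPT} R_2$. Choose $\psi := V^\star_2$. Then by the Bellman optimality equation for $R_2$,
\[
\mathbb{E}_{S'\sim\tfunc(s,a)}\bigl[R_2(s,a,S') + \gamma \psi(S')\bigr] \;\leq\; \psi(s),
\]
with equality iff $a$ is $R_2$-optimal at $s$, i.e.\ $a \in \mathrm{argmax}_a A^\star_2(s,a)$. Because $R_1$ and $R_2$ have identical sets of optimal policies, and because (given the assumption that every state is reachable) the set of optimal policies determines the set of optimal actions at every state, we conclude $\mathrm{argmax}_a A^\star_2(s,a) = \mathrm{argmax}_a A^\star_1(s,a)$ for all $s$. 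Thus the equality condition holds with respect to $R_1$'s argmax, so $\psi$ witnesses that $R_2 = t(R_1)$ for some $t \in \OP$.

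The only subtle point I expect to spend care on is the step \emph{having the same optimal policies implies having the same set of optimal actions at every state}. This is where the standing assumption that every state is reachable matters: without it, two reward functions could differ on an unreachable state's optimal actions while still having the same set of (reachable) optimal policies. Given reachability, any single action can be realised as the action taken by some optimal policy at that state, so the two argmax sets must coincide pointwise. Once that small observation is in place, everything else is a direct appeal to the uniqueness of the Bellman optimality fixed point.
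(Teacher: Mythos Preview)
Your proof is correct and follows essentially the same route as the paper's: in both directions you identify $\psi$ with $V^\star_2$ and invoke uniqueness of the Bellman optimality fixed point, exactly as the paper does. One small remark on the ``subtle point'' you flag: reachability is not actually needed there, because the paper defines an optimal policy as one that takes only optimal actions at \emph{every} state, so the set of optimal policies is the product $\prod_s \Delta(\mathrm{argmax}_a A^\star(s,a))$ and two such product sets coincide iff the per-state argmax sets coincide.
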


Stated differently, Theorem~\ref{thm:OPT_ambiguity} says that the MDPs $(\States,\Actions,\tfunc,\init,R_1,\gamma)$ and $(\States,\Actions,\tfunc,\init,R_2,\gamma)$ have the same optimal policies if and only if $R_1$ and $R_2$ differ by an optimality-preserving transformation. 

In \ref{appendix:stronger_equivalence_conditions_for_rewards}, we discuss the question of how to define and characterise even stronger equivalence relations on $\R$.




\subsection{STARC Metrics}\label{sec:STARC}

In this section, we introduce a family of \emph{pseudometrics} on $\R$, which we can use to get a fine-grained quantification of the difference between any two reward functions.
First, we note that it is not straightforward to quantify the difference between reward functions in an informative way. A simple method might be to measure their $L_2$-distance. However, this is unsatisfactory, because two reward functions can have a large $L_2$-distance, even if they induce the \emph{same} ordering of policies, or a small $L_2$-distance, even if they induce the \emph{opposite} ordering of policies. For example, given an arbitrary reward function $R$ and an arbitrary positive constant $c$, we have that $R$ and $c \cdot R$ have the same ordering of policies, even though their $L_2$-distance may be arbitrarily large. Similarly, for any $\epsilon$, we have that $\epsilon \cdot R$ and $-\epsilon \cdot R$ have the opposite ordering of policies, unless $R$ is trivial, even though their $L_2$-distance may be arbitrarily small. Constructing a new  pseudometric on $\R$ which provides an informative quantification of a useful difference between two reward functions will therefore require some care.

Before proceeding, we should consider what properties a function $d : \R \times \R \to \mathbb{R}$ needs to have, in order to provide a useful way of quantifying the differences between reward functions. First of all, it would certainly be desirable for $d$ to be a \emph{pseudometric}, since pseudometrics provide a well-defined notion of \enquote{distance} that can be used in mathematical analysis. Moreover, it seems reasonable to permit $d$ to be a pseudometric, rather than to require $d$ to be a (proper) metric, because we may want to consider distinct reward functions to be equivalent. For example, if $R_1$ and $R_2$ are distinct but have the same ordering of policies, then it would be natural to consider their distance to be $0$.

Additionally, it would be highly desirable for $d$ to induce an upper bound on worst-case regret. Specifically, we want it to be the case that if $d(R_1, R_2)$ is small, then the impact of optimising $R_2$ instead of $R_1$ should also be small.
When a pseudometric has this property, we say that it is \emph{sound}:

\begin{definition}\label{def:soundness}
A pseudometric $d$ on $\R$ is \emph{sound} if there exists a positive constant $U$, such that for any reward functions $R_1$ and $R_2$, if two policies $\pi_1$ and $\pi_2$ satisfy that $J_2(\pi_2) \geq J_2(\pi_1)$, then
$$
J_1(\pi_1) - J_1(\pi_2) \leq U \cdot (\max_\pi J_1(\pi) - \min_\pi J_1(\pi)) \cdot d(R_1, R_2).
$$
\end{definition}

Let us unpack this definition. 
$J_1(\pi_1) - J_1(\pi_2)$ is the regret, as measured by $R_1$, of using policy $\pi_2$ instead of $\pi_1$.
A division by the quantity $(\max_\pi J_1(\pi) - \min_\pi J_1(\pi))$ normalises the regret based on the total range of $R_1$ 
(though the term is put on the right-hand side of the inequality, instead of being used as a denominator, in order to avoid division by zero when $R_1$ is trivial).
The condition that $J_2(\pi_2) \geq J_2(\pi_1)$ says that $R_2$ prefers $\pi_2$ over $\pi_1$.
Taken together, this means that a pseudometric $d$ on $\R$ is sound if $d(R_1, R_2)$ gives an upper bound on the maximal regret that could be incurred under $R_1$ if an arbitrary policy $\pi_1$ is optimised to another policy $\pi_2$ according to $R_2$. 
It is worth noting that this includes the special case when $\pi_1$ is optimal under $R_1$ and $\pi_2$ is optimal under $R_2$. It is also worth noting that Definition~\ref{def:soundness} implicitly is given relative to a particular choice of $\tfunc$ and $\gamma$ (via $\J_1$ and $\J_2$).

Moreover, it would also be desirable for $d$ to induce a \emph{lower} bound on worst-case regret.
It may not be immediately obvious why this property is preferable. 
To see why this is the case, note that if a pseudometric $d$ on $\mathcal{R}$ does not induce a lower bound on worst-case regret, then there are reward functions that have a \emph{low} worst-case regret, but a \emph{large} distance under $d$. This would in turn mean that $d$ is not tight, and that it should be possible to improve upon it. In other words, if we want a small distance under $d$ to be both sufficient \emph{and necessary} for low worst-case regret, then $d$ must induce both an upper \emph{and a lower} bound on worst-case regret.
As such, we also introduce the following definition:

\begin{definition}\label{def:STARC_complete}
A pseudometric $d$ on $\R$ is \emph{complete} if there exists a positive constant $L$, such that for any reward functions $R_1$ and $R_2$, there exists two policies $\pi_1$ and $\pi_2$ such that $J_2(\pi_2) \geq J_2(\pi_1)$ and
$$
J_1(\pi_1) - J_1(\pi_2) \geq L \cdot (\max_\pi J_1(\pi) - \min_\pi J_1(\pi)) \cdot d(R_1, R_2),
$$
and moreover, if both $R_1$ and $R_2$ are trivial, then $d(R_1, R_2) = 0$.
\end{definition}

The last condition is included to rule out certain pathological edge-cases.
Intuitively, if $d$ is sound, then a small $d$ is \emph{sufficient} for low regret, and if $d$ is complete, then a small $d$ is \emph{necessary} for low regret. 
Soundness implies the absence of false positives, and completeness the absence of false negatives. Soundness and completeness also implies the following property:

\begin{restatable}[]{proposition}{soundandcompletemeansdistancezeroiffsameorder}
\label{prop:sound_and_complete_means_distance_0_iff_same_order}
    If a pseudometric $d$ on $\R$ is both sound and complete, then $d(R_1, R_2) = 0$ if and only if $R_1 \eq{\ORD} R_2$.
\end{restatable}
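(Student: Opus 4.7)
The plan is to prove both directions separately, with soundness giving one direction and completeness giving the other, plus a bit of care around the trivial case.

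For the direction $R_1 \eq{\ORD} R_2 \Rightarrow d(R_1,R_2)=0$, I would apply completeness. Completeness provides policies $\pi_1,\pi_2$ with $J_2(\pi_2)\geq J_2(\pi_1)$ and
\[
J_1(\pi_1)-J_1(\pi_2) \;\geq\; L\cdot(\max_\pi J_1(\pi)-\min_\pi J_1(\pi))\cdot d(R_1,R_2).
\]
Since $R_1$ and $R_2$ share the same policy ordering, $J_2(\pi_2)\geq J_2(\pi_1)$ forces $J_1(\pi_2)\geq J_1(\pi_1)$, so the left-hand side is $\leq 0$. Non-negativity of $d$ then gives $d(R_1,R_2)=0$ whenever $\max_\pi J_1(\pi)>\min_\pi J_1(\pi)$. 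If instead $R_1$ is trivial, then $J_1$ is constant on $\Pi$; since the policy orderings agree, $J_2$ is also constant (every pair of policies is tied under both), so $R_2$ is trivial as well. The explicit extra clause in Definition~\ref{def:STARC_complete} then forces $d(R_1,R_2)=0$, handling the edge case that the main inequality cannot rule out.

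For the direction $d(R_1,R_2)=0 \Rightarrow R_1 \eq{\ORD} R_2$, I would use soundness twice, exploiting the symmetry of the pseudometric. Soundness applied to the pair $(R_1,R_2)$ says: whenever $J_2(\pi_2)\geq J_2(\pi_1)$,
\[
J_1(\pi_1)-J_1(\pi_2) \;\leq\; U\cdot(\max_\pi J_1(\pi)-\min_\pi J_1(\pi))\cdot d(R_1,R_2) \;=\; 0,
\]
so $J_1(\pi_2)\geq J_1(\pi_1)$. Since $d(R_2,R_1)=d(R_1,R_2)=0$ by symmetry of pseudometrics, soundness applied to $(R_2,R_1)$ gives the converse implication: $J_1(\pi_2)\geq J_1(\pi_1) \Rightarrow J_2(\pi_2)\geq J_2(\pi_1)$. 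Combining these yields the biconditional $J_1(\pi_2)\geq J_1(\pi_1) \Leftrightarrow J_2(\pi_2)\geq J_2(\pi_1)$ for all $\pi_1,\pi_2$, which is precisely the statement that $R_1$ and $R_2$ induce the same ordering on $\Pi$, i.e.\ $R_1 \eq{\ORD} R_2$.

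The main obstacle is the trivial case in the forward direction: when $R_1$ is trivial the factor $\max_\pi J_1(\pi)-\min_\pi J_1(\pi)$ vanishes, so the completeness inequality becomes $0\geq L\cdot 0\cdot d(R_1,R_2)$, which says nothing about $d$. This is exactly why Definition~\ref{def:STARC_complete} bolts on the separate clause requiring $d(R_1,R_2)=0$ when both reward functions are trivial; without that clause the claim would fail. Everything else is a direct, short deduction from the two defining inequalities.
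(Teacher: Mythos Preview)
Your proof is correct and follows the same structure as the paper's: completeness (together with the trivial-case clause) gives the forward implication, and soundness gives the reverse. Your reverse direction is in fact slightly more explicit than the paper's, since you spell out the use of the pseudometric's symmetry to apply soundness to both $(R_1,R_2)$ and $(R_2,R_1)$ and thereby obtain the full biconditional $J_1(\pi_2)\geq J_1(\pi_1) \Leftrightarrow J_2(\pi_2)\geq J_2(\pi_1)$; the paper's proof derives only the one-sided implication and then asserts equality of the orderings.
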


In other words, a pseudometric that is sound and complete must consider two reward functions to be equivalent exactly when they induce the same ordering of policies.
Next, it is worth noting that if two pseudometrics $d_1, d_2$ on $\R$ are both sound and complete, then $d_1$ and $d_2$ are bilipschitz equivalent. This means that if there is a pseudometric on $\R$ that is both sound and complete, then this pseudometric is unique up to bilipschitz equivalence:

\begin{restatable}[]{proposition}{bilipschitz}
\label{prop:bilipschitz}
Any pseudometrics on $\R$ that are both sound and complete are bilipschitz equivalent.
\end{restatable}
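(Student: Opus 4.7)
The plan is to derive both directions of the bilipschitz inequality by chaining soundness and completeness constants across the two pseudometrics. Specifically, to obtain $d_2 \leq u \cdot d_1$, I would combine completeness of $d_2$ (which forces $d_2$ to be \emph{lower}-bounded by a normalised regret quantity witnessed by some pair of policies) with soundness of $d_1$ (which \emph{upper}-bounds that same regret quantity by $d_1$). The other direction will follow by an entirely symmetric argument with the roles of $d_1$ and $d_2$ swapped.

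Concretely, fix arbitrary $R_1, R_2 \in \R$, and let $U_i, L_i$ denote the soundness and completeness constants of $d_i$ guaranteed by Definitions~\ref{def:soundness} and~\ref{def:STARC_complete}. Assume first that $R_1$ is non-trivial, so that $\Delta_1 := \max_\pi J_1(\pi) - \min_\pi J_1(\pi) > 0$. Completeness of $d_2$ produces policies $\pi_1, \pi_2$ with $J_2(\pi_2) \geq J_2(\pi_1)$ and
$$
J_1(\pi_1) - J_1(\pi_2) \geq L_2 \cdot \Delta_1 \cdot d_2(R_1, R_2).
$$
Applying soundness of $d_1$ to the same pair $(\pi_1, \pi_2)$ yields
$$
J_1(\pi_1) - J_1(\pi_2) \leq U_1 \cdot \Delta_1 \cdot d_1(R_1, R_2).
$$
Dividing through by $\Delta_1 > 0$ gives $d_2(R_1, R_2) \leq (U_1/L_2) \cdot d_1(R_1, R_2)$.

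The main obstacle will be the degenerate case when $R_1$ is trivial, since then $\Delta_1 = 0$ and the division step collapses. I would handle this by a further case split. If $R_2$ is also trivial, the final clause of Definition~\ref{def:STARC_complete} forces $d_1(R_1, R_2) = d_2(R_1, R_2) = 0$ and the inequality is vacuous. If $R_2$ is non-trivial, then $\Delta_2 > 0$, and I would rerun the principal argument with the roles of $R_1$ and $R_2$ interchanged; the symmetry of pseudometrics, $d_i(R_1, R_2) = d_i(R_2, R_1)$, then delivers precisely the inequality required. Setting $u := U_1/L_2$ gives $d_2 \leq u \cdot d_1$ on all of $\R \times \R$.

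For the lower bilipschitz constant, I would run the entire argument verbatim with $d_1$ and $d_2$ interchanged, which yields $d_1 \leq (U_2/L_1) \cdot d_2$, hence $\ell \cdot d_1 \leq d_2$ for $\ell := L_1 / U_2$. Combining the two inequalities establishes bilipschitz equivalence. Beyond the careful treatment of trivial reward functions, no further estimates are needed: the result is essentially a bookkeeping exercise about how the soundness and completeness constants compose.
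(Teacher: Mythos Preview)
Your proposal is correct and follows essentially the same approach as the paper: chain the completeness witness of one pseudometric with the soundness bound of the other to sandwich the normalised regret, divide by $\Delta_1$ when it is positive, and handle the trivial-reward cases by the final clause of Definition~\ref{def:STARC_complete} together with the symmetry $d_i(R_1,R_2)=d_i(R_2,R_1)$. The paper's proof is organised identically, differing only in which index it treats first and in writing the inequality via a $\max$ over admissible policy pairs rather than fixing a single witnessing pair.
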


We will next derive a family of pseudometrics on $\R$, which we refer to as \emph{STAndardised Reward Comparison (STARC) metrics}, and show that these pseudometrics are both sound and complete. This means that all pseudometrics in this family induce both an upper and a lower bound on worst-case regret, and that any other pseudometric with this property must be bilipschitz equivalent to our metrics. As such, STARC metrics can be considered to be canonical, in a certain sense.

STARC metrics are computed in several steps, where the first steps collapse certain equivalence classes in $\R$ to a single representative, and the last step measures a distance. Recall that Proposition~\ref{prop:sound_and_complete_means_distance_0_iff_same_order} says that if $d$ is both sound and complete, then $d(R_1,R_2) = 0$ if and only if $R_1$ and $R_2$ have the same policy order. Moreover, also recall that Theorem~\ref{thm:policy_ordering} says that $R_1$ and $R_2$ have the same policy order if and only if $R_1$ and $R_2$ differ by potential shaping, $S'$-redistribution, and positive linear scaling. 
This implies that if $d$ is sound and complete, then $d(R_1,R_2) = 0$ if and only if $R_1$ and $R_2$ differ by potential shaping, $S'$-redistribution, and positive linear scaling.
Our metrics are therefore computed by first standardising the reward functions to ensure that rewards are considered to be equivalent when they differ by 
these transformations.
After this, the distance can be measured.

The first step standardises potential shaping and $S'$-redistribution. These transformations can be characterised in terms of linear subspaces of $\R$ (c.f.\ Proposition~\ref{prop:PS_SR_linear} in \ref{appendix:reward_transformation_properties}), which means that this standardisation can be achieved by a linear transformation:

\begin{definition}\label{def:canonicalisation_function}
A function $c : \R \to \R$ is a \emph{canonicalisation function} if $c$ is linear, $c(R)$ and $R$ differ by potential shaping and $S'$-redistribution, and $c(R_1) = c(R_2)$ if and only if $R_1$ and $R_2$ only differ by potential shaping and $S'$-redistribution.
\end{definition}

A canonicalisation function is a quotient map for the subspace of $\R$ that is given by potential shaping and $S'$-redistribution. Note that we require $c$ to be linear. We will later provide examples of canonicalisation functions. Let us next introduce the functions that we use to compute a distance:

\begin{definition}\label{def:admissible_metric}
A metric $m : \R \times \R \to \mathbb{R}$ is \emph{admissible} if there exists a norm $p$ and two (positive) constants $u,\ell$ such that $\ell \cdot p(x,y) \leq m(x,y) \leq u \cdot p (x,y)$ for all $x, y \in \R$.
\end{definition}
 
A metric is admissible if it is bilipschitz equivalent to a norm.
Any norm is an admissible metric, though there are admissible metrics which are not norms.\footnote{For example, the unit ball of $m$ does not have to be convex, or symmetric around the origin, etc.}
Recall also that all norms are bilipschitz equivalent on any finite-dimensional vector space. This means that if $m$ satisfies Definition~\ref{def:admissible_metric} for one norm, then it satisfies it for all norms.
Given these two components, we can now define our class of reward metrics:

\begin{definition}\label{def:reward_metric}
A function $d : \R \times \R \to \mathbb{R}$ is a \emph{STARC metric} (STAndardised Reward Comparison) if there is a canonicalisation function $c$, a function $n$ that is a norm on $\mathrm{Im}(c)$, and a metric $m$ that is admissible on $\mathrm{Im}(s)$, such that $d(R_1, R_2) = m(s(R_1), s(R_2))$,
where $s(R) = c(R)/n(c(R))$ when $n(c(R)) \neq 0$, and $c(R)$ otherwise.
\end{definition}

Intuitively speaking, $c$ ensures that all reward functions which differ by potential shaping and $S'$-redistribution are considered to be equivalent, and division by $n$ ensures that positive linear scaling is ignored as well.
Note that if $n(c(R)) = 0$, then $c(R) = R_0$.
Note also that $\mathrm{Im}(c)$ is the image of $c$, if $c$ is applied to the entirety of $\R$, and similarly for $\mathrm{Im}(s)$. If $n$ is a norm on $\R$, then $n$ is also a norm on $\mathrm{Im}(c)$, but there are functions which are norms on $\mathrm{Im}(c)$ but not on $\R$ (c.f.\ Proposition~\ref{prop:J_norm}), and similarly for $\mathrm{Im}(s)$.

STARC metrics have a number of important properties. We first note that STARC metrics indeed are pseudometrics on $\R$, which means that they give us a well-defined notion of a \enquote{distance} between reward functions:

\begin{restatable}[]{proposition}{STARCpseudometrics}
\label{prop:STARC_pseudometrics}
    All STARC metrics are pseudometrics on $\mathcal{R}$.
\end{restatable}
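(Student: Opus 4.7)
The plan is to show that each of the four pseudometric axioms (indiscernibility of identicals, positivity, symmetry, triangle inequality) for $d$ on $\mathcal{R}$ follows directly from the corresponding axioms of $m$ on $\mathrm{Im}(s)$. The key observation is that $d(R_1,R_2) = m(s(R_1), s(R_2))$ is the pullback of a metric along a function $s : \mathcal{R} \to \mathrm{Im}(s)$, and such pullbacks of metrics are automatically pseudometrics (they may fail to be full metrics only because $s$ need not be injective).

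First, I would note that $s$ is a well-defined function on all of $\mathcal{R}$: Definition~\ref{def:reward_metric} specifies $s(R) = c(R)/n(c(R))$ when $n(c(R)) \neq 0$, and $s(R) = c(R)$ otherwise, so every reward function has a well-defined image under $s$. Next, since $m$ is admissible on $\mathrm{Im}(s)$ (Definition~\ref{def:admissible_metric}), $m$ is a genuine metric on $\mathrm{Im}(s)$, and in particular satisfies the usual four metric axioms for all $x, y, z \in \mathrm{Im}(s)$.

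The verification is then immediate: for indiscernibility, $d(R,R) = m(s(R), s(R)) = 0$; for positivity, $d(R_1,R_2) = m(s(R_1), s(R_2)) \geq 0$; for symmetry, $d(R_1,R_2) = m(s(R_1), s(R_2)) = m(s(R_2), s(R_1)) = d(R_2,R_1)$; and for the triangle inequality,
\[
d(R_1,R_3) = m(s(R_1), s(R_3)) \leq m(s(R_1), s(R_2)) + m(s(R_2), s(R_3)) = d(R_1, R_2) + d(R_2, R_3).
\]
Each step merely applies the corresponding property of $m$ to the points $s(R_1), s(R_2), s(R_3) \in \mathrm{Im}(s)$.

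There is really no main obstacle here; the statement is essentially structural. The only subtlety worth flagging is that we obtain a \emph{pseudo}metric rather than a metric precisely because $s$ is generally not injective (for example, two reward functions differing by potential shaping and $S'$-redistribution have the same image under $c$, hence under $s$), so $d(R_1,R_2) = 0$ does not force $R_1 = R_2$. This failure of the identity of indiscernibles is exactly what one wants, and is consistent with Proposition~\ref{prop:sound_and_complete_means_distance_0_iff_same_order}.
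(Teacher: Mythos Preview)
Your proof is correct and takes essentially the same approach as the paper: both verify the pseudometric axioms directly by pulling back the metric properties of $m$ along the standardisation map $s$. Your version is slightly more explicit (you also check positivity and comment on why only a pseudometric is obtained), but the argument is the same.
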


Moreover, all STARC metrics have the property of being both sound and complete. This constitutes our most important results for this section:

\begin{restatable}[]{theorem}{STARCsound}
\label{thm:STARC_sound}
Any STARC metric is sound.
\end{restatable}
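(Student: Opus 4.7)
The plan is to reduce soundness to a linear bound on the space $\mathrm{Im}(c)$, convert this to an $m$-distance via admissibility, and finally recover the range factor by comparing two norms on $\mathrm{Im}(c)$. Throughout, write $\bar{R}_i = s(R_i)$ and $\rho(R) := \max_\pi \J_R(\pi) - \min_\pi \J_R(\pi)$.

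First I would reduce to standardised rewards. Since $c(R)$ differs from $R$ by potential shaping and $S'$-redistribution, and since both of these leave $\J_R(\pi)$ unchanged up to an additive constant that does not depend on $\pi$ (the paper's appendix on reward transformation properties should contain this, but it follows from the telescoping cancellation for potential shaping and from $\mathbb{E}_{S'\sim\tfunc(s,a)}[R(s,a,S')]$ being invariant under $S'$-redistribution), one gets $\J_{c(R_i)}(\pi) = \J_{R_i}(\pi) + K_i$ for some constant $K_i$. When $R_i$ is non-trivial, set $\alpha_i = 1/n(c(R_i)) > 0$; then $\bar{R}_i = \alpha_i \cdot c(R_i)$, so $\J_{\bar{R}_i}(\pi) = \alpha_i(\J_{R_i}(\pi) + K_i)$. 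Regret and range both scale by $\alpha_i$, and the sign of regret is preserved, so the hypothesis $\J_2(\pi_2) \geq \J_2(\pi_1)$ is equivalent to $\J_{\bar{R}_2}(\pi_2) \geq \J_{\bar{R}_2}(\pi_1)$ (the trivial case where $\bar{R}_2 = R_0$ makes both inequalities vacuous anyway).

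Next I would exploit the linearity $\J_R(\pi) = \eta^\pi \cdot R$. Using $\J_{\bar{R}_2}(\pi_1) - \J_{\bar{R}_2}(\pi_2) \leq 0$, one obtains
\begin{equation*}
\J_{\bar{R}_1}(\pi_1) - \J_{\bar{R}_1}(\pi_2) \;\leq\; (\J_{\bar{R}_1}(\pi_1) - \J_{\bar{R}_1}(\pi_2)) - (\J_{\bar{R}_2}(\pi_1) - \J_{\bar{R}_2}(\pi_2)) \;=\; (\eta^{\pi_1} - \eta^{\pi_2}) \cdot (\bar{R}_1 - \bar{R}_2).
\end{equation*}
Because $\States$ and $\Actions$ are finite, $\R$ is finite-dimensional and $\Omega$ is bounded; hence there is a constant $C$ with $(\eta^{\pi_1} - \eta^{\pi_2}) \cdot v \leq C \cdot p(v)$ for all $v \in \R$, where $p$ is the norm witnessing admissibility of $m$. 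Admissibility yields $p(\bar{R}_1 - \bar{R}_2) \leq m(\bar{R}_1, \bar{R}_2)/\ell$, and $m(\bar{R}_1,\bar{R}_2) = d(R_1,R_2)$ by definition, so $\J_{\bar{R}_1}(\pi_1) - \J_{\bar{R}_1}(\pi_2) \leq (C/\ell) \cdot d(R_1,R_2)$.

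The main obstacle is the final step, which must introduce the $\rho(R_1)$ factor on the right-hand side. Here the key observation is that $\rho$ is a seminorm on $\R$ that becomes a genuine norm when restricted to $\mathrm{Im}(c)$: for $v \in \mathrm{Im}(c)$, writing $v = c(w)$, we have $\rho(v) = 0$ iff $w$ is trivial iff $c(w) = R_0$ iff $v = R_0$. Since $n$ is also a norm on the finite-dimensional space $\mathrm{Im}(c)$, norm equivalence gives a constant $\delta > 0$ with $\rho(v) \geq \delta \cdot n(v)$ for every $v \in \mathrm{Im}(c)$. Applied to $c(R_1)$, this gives $\rho(R_1) = \rho(c(R_1)) \geq \delta \cdot n(c(R_1)) = \delta/\alpha_1$ (using $\rho$-invariance under potential shaping and $S'$-redistribution), so $1/\alpha_1 \leq \rho(R_1)/\delta$ when $R_1$ is non-trivial. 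Combining with the previous step and multiplying by $1/\alpha_1$,
\begin{equation*}
\J_1(\pi_1) - \J_1(\pi_2) \;=\; \tfrac{1}{\alpha_1}\bigl(\J_{\bar{R}_1}(\pi_1) - \J_{\bar{R}_1}(\pi_2)\bigr) \;\leq\; \frac{C}{\ell\delta}\,\rho(R_1)\,d(R_1,R_2),
\end{equation*}
so $U := C/(\ell\delta)$ witnesses soundness. The trivial case $R_1 = $ trivial is immediate: then $\J_1$ is constant so the left-hand side vanishes and the inequality $0 \leq 0$ holds. The most delicate part will be verifying that $\rho$ is a norm on $\mathrm{Im}(c)$ and that all the scaling identities in Step 1 genuinely go through in the degenerate cases where $c(R_i) = R_0$; everything else reduces to bilinearity and finite-dimensional norm equivalence.
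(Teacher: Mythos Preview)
Your proposal is correct and follows essentially the same architecture as the paper's proof: standardise both rewards via $s$, bound the regret on standardised rewards using admissibility of $m$ and finite-dimensional norm equivalence, then scale back to the original reward using the fact that $\rho$ is a norm on $\mathrm{Im}(c)$ (the paper's Proposition~\ref{prop:J_norm}) and is therefore equivalent to $n$. The only notable difference is that you bound the standardised regret in one step via the occupancy-measure identity $(\eta^{\pi_1}-\eta^{\pi_2})\cdot(\bar R_1-\bar R_2)$, whereas the paper first bounds $|J_1^S(\pi)-J_2^S(\pi)|$ pointwise (Lemma~\ref{lemma:norm_to_distance_fixed_pi}) and then applies a separate triangle-inequality lemma (Lemma~\ref{lemma:policy_optimisation_bound}) that introduces an extra factor of $2$; this is a minor streamlining rather than a genuinely different route.
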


\begin{restatable}[]{theorem}{STARCcomplete}
\label{thm:STARC_complete}
Any STARC metric is complete.
\end{restatable}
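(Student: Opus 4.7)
\textbf{Plan for the proof of Theorem~\ref{thm:STARC_complete}.}

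The plan is to reduce the inequality to a statement about standardised reward pairs in the finite-dimensional space $C := \mathrm{Im}(c)$, and then exploit bilipschitz equivalence of norms together with a compactness/LP argument.

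First I would strip away the canonicalisation and normalisation. Setting $R_i' = s(R_i) = c(R_i)/n(c(R_i))$, potential shaping shifts $J$ by a constant and $S'$-redistribution preserves $J$ entirely, so $J^{R_i}(\pi_1) - J^{R_i}(\pi_2) = n(c(R_i))\,(J^{R_i'}(\pi_1) - J^{R_i'}(\pi_2))$; in particular $W(R_i) = n(c(R_i))\, W(R_i')$. Therefore the constraint $J^{R_2}(\pi_2) \geq J^{R_2}(\pi_1)$ is equivalent to $J^{R_2'}(\pi_2) \geq J^{R_2'}(\pi_1)$, and since $d(R_1,R_2) = m(R_1',R_2')$ the factors of $n(c(R_1))$ cancel, so the inequality to establish reduces to: there exists $L>0$ such that for all $R_1',R_2' \in \mathrm{Im}(s)$ one can find $\pi_1,\pi_2$ with $J^{R_2'}(\pi_2) \geq J^{R_2'}(\pi_1)$ and $J^{R_1'}(\pi_1) - J^{R_1'}(\pi_2) \geq L\cdot W(R_1')\cdot m(R_1',R_2')$. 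The trivial-reward edge case also falls out here: a trivial $R$ differs from $R_0$ by potential shaping and $S'$-redistribution (by Theorem~\ref{thm:policy_ordering} applied to the constant policy ordering), so $c(R)=R_0$, hence $s(R)=0$; this makes $W(R_1')=0$ and $m(s(R_1),s(R_2))=m(0,0)=0$, so both sides vanish and the second clause of Definition~\ref{def:STARC_complete} is verified.

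Next I would observe that $W$ is a genuine norm on $C$ (since the kernel of the seminorm $W$ on $\R$ consists of trivial rewards, which all canonicalise to $0$ in $C$), and likewise $n$ is a norm on $C$; by admissibility $m$ is bilipschitz equivalent to some norm $p$ on $C$. Because $C$ is finite-dimensional, all norms on $C$ are bilipschitz equivalent, so on $\{R' \in C : n(R')=1\}$ one has $W(R_1') \geq a$ and $m(R_1',R_2') \leq b\cdot W(R_1'-R_2')$ for positive constants $a,b$. Together with the reduction above, it is enough to prove the uniform inequality
$$\mathrm{WReg}(R_1',R_2') \;\geq\; L''\cdot W(R_1'-R_2')$$
over standardised pairs, where $\mathrm{WReg}(R_1',R_2')$ is the supremum of $J^{R_1'}(\pi_1)-J^{R_1'}(\pi_2)$ subject to $J^{R_2'}(\pi_2) \geq J^{R_2'}(\pi_1)$.

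The hard step will be this uniform bound, and this is where I expect the main obstacle to lie. Writing in terms of occupancy measures, $\mathrm{WReg}(R_1',R_2') = \max\{\delta \cdot R_1' : \delta \in \Omega-\Omega,\ \delta \cdot R_2' \leq 0\}$, and with $v = R_1'-R_2'$ the objective becomes $\delta\cdot R_2' + \delta\cdot v$. Taking $\delta^{\star} = \arg\max_{\delta \in \Omega-\Omega}\delta \cdot v$ gives $\delta^\star \cdot v = W(v)$; if $\delta^\star \cdot R_2' \in [-W(v)/2,0]$ the pair is feasible and yields $\mathrm{WReg} \geq W(v)/2$ immediately. The awkward subcase is when this fails, which by a short computation forces $R_1'$ to be close to $-R_2'$ in $C$; here I would close the argument by a compactness plus LP-sensitivity argument on the compact set $\{(R_1',R_2') : n(R_1')=n(R_2')=1\}$, using Theorem~\ref{thm:policy_ordering} to ensure $R_1'\neq R_2'$ in $\mathrm{Im}(s)$ implies distinct policy orderings (so $\mathrm{WReg}>0$ off-diagonal), and handling the diagonal by showing that the ratio $\mathrm{WReg}(R_1',R_2')/W(R_1'-R_2')$ extends to a positive lower-semicontinuous function. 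The delicate point is ruling out a sequence $(R_1'^{(k)},R_2'^{(k)})$ along which this ratio tends to zero: this is where piecewise linearity of the parametric LP and the transversality of $\mathrm{Im}(s)$ to the shaping/scaling directions will be used to produce the uniform constant $L''$.
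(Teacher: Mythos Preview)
Your reduction to standardised rewards is correct and tidy: cancelling $n(c(R_1))$ against $W(R_1)=\max_\pi J_1-\min_\pi J_1$ via Proposition~\ref{prop:change_from_potentials} does reduce completeness to a uniform bound $\mathrm{WReg}(R_1',R_2') \geq L''\,W(R_1'-R_2')$ on the product of $n$-unit spheres in $C=\mathrm{Im}(c)$, and the trivial clause falls out as you say. This reduction is equivalent to the norm-equivalence manipulations at the end of the paper's proof, so the setups agree.

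Where the proposal has a genuine gap is the hard step. First, the $\delta^\star$ case analysis does not do what you claim: when $\delta^\star\cdot R_2' \notin [-W(v)/2,0]$ it does \emph{not} follow that $R_1'$ is close to $-R_2'$. Near the diagonal $R_1'\approx R_2'$, the vector $v=R_1'-R_2'$ is small while $\delta^\star$ (which maximises $\delta\cdot v$ over the fixed polytope $\Omega-\Omega$) will generically have $\delta^\star\cdot R_2'$ of order one; both $\delta^\star\cdot R_2'>0$ and $\delta^\star\cdot R_2'<-W(v)/2$ then occur routinely with $R_1'$ arbitrarily close to $+R_2'$. So this branch of the argument does not isolate an easy residual case.

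Second, the compactness sketch is the right idea but the crux --- the diagonal limit --- is not carried out. If $R_1'^{(k)},R_2'^{(k)}\to R^*$ with normalised direction $u^{(k)}\to u^*$, one can bound $\mathrm{WReg}\geq \epsilon_k\max\{\delta\cdot u^{(k)}:\delta\in\Omega-\Omega,\ \delta\cdot R_2'^{(k)}=0\}$, and then the question is whether this inner maximum stays positive in the limit, i.e.\ whether $u^*\notin\mathbb{R}R^*$. Your phrase ``transversality of $\mathrm{Im}(s)$ to the scaling directions'' is pointing at exactly this fact (the $n$-unit sphere is transverse to radial lines), but you still owe the argument: from $n(R_1'^{(k)})=n(R_2'^{(k)})=1$ and the triangle inequality for $n$ one gets $|\lambda|\leq n(u^{(k)}-\lambda R_2'^{(k)})\to 0$ whenever $u^*=\lambda R^*$, forcing $\lambda=0$, contradicting $W(u^*)=1$. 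None of this is in the proposal, and ``piecewise linearity of the parametric LP'' alone does not supply it.

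The paper avoids compactness altogether and takes a constructive geometric route. It first proves two lemmas bounding the \emph{angle} between $s(R_1)$ and $s(R_2)$ below by a multiple of $d(R_1,R_2)$, and showing that an invertible linear map (here, the projection onto the affine span of $\Omega$) distorts angles by at most a fixed factor. It then uses Lemma~\ref{lemma:homeomorphism} to locate an $\epsilon$-ball inside $\Omega$, and explicitly chooses $\pi_1,\pi_2$ as the endpoints of a diameter of this ball lying in the hyperplane $\{J_2=\mathrm{const}\}$; the angle bound gives $J_1(\pi_1)-J_1(\pi_2)\geq L_2(Mc(R_1))\cdot\epsilon\cdot\sin\theta$ directly, with $\sin\theta\geq(2/\pi)\theta$. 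This sidesteps the diagonal-limit analysis your approach must confront, at the cost of heavier geometric preliminaries. Your route, once the diagonal case is actually proved, would be shorter and more in the spirit of ``everything is equivalent on a finite-dimensional space''; the paper's route yields explicit witnessing policies.
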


Theorems~\ref{thm:STARC_sound} and \ref{thm:STARC_complete} together imply that, for any STARC metric $d$, we have that a small value of $d$ is both necessary and sufficient for a low regret.
This means that STARC metrics, in a certain sense, exactly capture what it means for two reward functions to be similar, and that we should not expect it to be possible to significantly improve upon them. Also recall that Proposition~\ref{prop:bilipschitz} says that if two pseudometrics $d_1$ and $d_2$ are both sound and complete, then $d_1$ and $d_2$ are bilipschitz equivalent. This means that STARC metrics are unique up to bilipschitz equivalence. In particular, all STARC metrics are bilipschitz equivalent, and any other pseudometric on $\R$ that induces both an upper and a lower bound on worst-case regret (as we define it) must also be bilipschitz equivalent to STARC metrics.

It is also worth noting that STARC metrics assign two rewards a distance of zero if and only if those rewards induce the same ordering of policies. This is implied by Proposition~\ref{prop:sound_and_complete_means_distance_0_iff_same_order}, together with Theorems~\ref{thm:STARC_sound} and \ref{thm:STARC_complete}.

\begin{restatable}[]{proposition}{STARCdistancezero}
\label{prop:STARC_distance_zero}
All STARC metrics have the property that $d(R_1, R_2) = 0$ if and only if $R_1 \eq{\ORD} R_2$.
\end{restatable}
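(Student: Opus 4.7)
The cleanest route is to obtain the proposition as an immediate corollary of three already-established results. Theorem~\ref{thm:STARC_sound} shows that every STARC metric is sound, and Theorem~\ref{thm:STARC_complete} shows that every STARC metric is complete. Proposition~\ref{prop:sound_and_complete_means_distance_0_iff_same_order} then says that any pseudometric on $\R$ which is both sound and complete satisfies $d(R_1, R_2) = 0$ if and only if $R_1 \eq{\ORD} R_2$. Combining these three facts yields the claim in a single line; Proposition~\ref{prop:STARC_pseudometrics} has already guaranteed that $d$ is a pseudometric, so the hypothesis of Proposition~\ref{prop:sound_and_complete_means_distance_0_iff_same_order} is met. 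This is the route the paper itself advertises immediately before the proposition statement, so I would prefer this short chain over any longer calculation.

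Alternatively, one can verify the claim directly from Definition~\ref{def:reward_metric} without passing through soundness and completeness. Since $m$ is admissible and hence a genuine metric, $d(R_1, R_2) = m(s(R_1), s(R_2)) = 0$ iff $s(R_1) = s(R_2)$. The linearity of $c$, together with the fact that reward differences lying in $\PS \bigodot \SR$ form a linear subspace (so $c$ is a quotient map with this subspace as kernel), and the normalisation by $n(c(\cdot))$, which quotients out positive rescalings, together imply that $s(R_1) = s(R_2)$ iff there exist $\lambda > 0$ and $v$ in the $\PS \bigodot \SR$ subspace with $R_1 = \lambda R_2 + v$. By Theorem~\ref{thm:policy_ordering}, this is exactly $R_1 \eq{\ORD} R_2$.

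The only bookkeeping hurdle in the direct approach is the degenerate case $n(c(R)) = 0$. Since $n$ is a norm on $\mathrm{Im}(c)$, this forces $c(R) = R_0$, placing $R$ itself in the $\PS \bigodot \SR$ subspace. If this happens for both $R_1$ and $R_2$, they differ by a $\PS \bigodot \SR$ transformation (the $\lambda = 1$ special case of $\PS \bigodot \SR \bigodot \LS$) and so $R_1 \eq{\ORD} R_2$; if it happens for only one of them, using the subspace property one checks that no such $\lambda$ and $v$ can exist, and neither side of the equivalence holds. Beyond this minor case split, no substantive obstacle remains, because the structural content has already been absorbed into Theorem~\ref{thm:policy_ordering} and the soundness/completeness theorems.
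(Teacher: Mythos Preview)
Your proposal is correct and in fact covers both routes that appear in the paper: the soundness/completeness chain via Proposition~\ref{prop:sound_and_complete_means_distance_0_iff_same_order} is exactly what the main text advertises, while the paper's actual proof in the appendix takes your alternative direct route, stating only that the claim ``is immediate from Theorem~\ref{thm:policy_ordering}, together with the definition of STARC metrics.'' Your handling of the degenerate case $n(c(R)) = 0$ is also correct and slightly more explicit than anything the paper spells out.
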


We will next give a few concrete examples of STARC metrics. We begin by showing how to construct canonicalisation functions:

\begin{restatable}[]{proposition}{valuecanon}
\label{proposition:value_canon}
    For any policy $\pi$, the function $c : \R \to \R$ given by 
    $$
    c(R)(s,a,s') = \mathbb{E}_{S' \sim \tau(s,a)}\left[R(s,a,S') - V^\pi(s) + \gamma V^\pi(S')\right]
    $$
    is a canonicalisation function. Here $V^\pi$ is computed under the reward $R$ given as input to $c$. We call this function Value-Adjusted Levelling (VAL).
\end{restatable}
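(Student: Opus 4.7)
The plan is to verify each of the three defining properties of a canonicalisation function (Definition~\ref{def:canonicalisation_function}) in turn. Linearity of $c$ follows from the fact that, for any fixed $\pi$, the Bellman equation exhibits $V^\pi$ as a linear function of $R$ (it is the unique fixed point of a contraction whose affine part is linear in $R$). Since $c(R)(s,a,s')$ is then a fixed linear combination of entries of $R$ and of $V^\pi$, linearity of $c$ in $R$ follows.

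The key observation powering the remaining two properties is that $c(R)(s,a,s')$ does not actually depend on $s'$, and in fact coincides with the advantage function of $\pi$ under $R$:
\begin{equation*}
c(R)(s,a,s') \;=\; \mathbb{E}_{S' \sim \tfunc(s,a)}[R(s,a,S') + \gamma V^\pi(S')] - V^\pi(s) \;=\; Q^\pi(s,a) - V^\pi(s) \;=\; A^\pi(s,a).
\end{equation*}
With this in hand, to show that $c(R)$ and $R$ differ by a transformation in $\PS \bigodot \SR$, I would first apply potential shaping with potential $\Phi := V^\pi$ (computed under $R$) to obtain an intermediate reward $R'(s,a,s') = R(s,a,s') + \gamma V^\pi(s') - V^\pi(s)$, and then verify that $\mathbb{E}_{S' \sim \tfunc(s,a)}[R'(s,a,S')] = Q^\pi(s,a) - V^\pi(s) = A^\pi(s,a) = c(R)(s,a,s')$, so that $R'$ and $c(R)$ differ only by $S'$-redistribution.

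For the \enquote{if and only if} clause I reason in both directions. For the backward direction I use the identification $c(R) = A^\pi_R$ and show that $A^\pi_R$ is invariant under both potential shaping and $S'$-redistribution. A short Bellman-equation calculation using uniqueness of the fixed point shows that shaping by $\Phi$ sends $V^\pi_R \mapsto V^\pi_R - \Phi$ and $Q^\pi_R(s,a) \mapsto Q^\pi_R(s,a) - \Phi(s)$, so the advantage $A^\pi$ is unchanged; $S'$-redistribution preserves $\mathbb{E}_{S' \sim \tfunc(s,a)}[R(s,a,S')]$ and hence leaves $Q^\pi$, $V^\pi$, and $A^\pi$ all unchanged, again by uniqueness of the Bellman fixed point. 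For the forward direction, if $c(R_1) = c(R_2)$ then concatenating the chain $R_1 \to c(R_1) = c(R_2) \to R_2$ and using that $\PS \bigodot \SR$ is closed under taking inverses (potential shaping by $\Phi$ is undone by shaping by $-\Phi$, and $S'$-redistribution is symmetric by definition) exhibits $R_1$ and $R_2$ as differing by a transformation in $\PS \bigodot \SR$.

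The main technical obstacle is the $V^\pi_R \mapsto V^\pi_R - \Phi$ calculation under potential shaping, which requires a careful Bellman-fixed-point argument rather than a pure algebraic manipulation; everything else is essentially bookkeeping and unfolding of definitions.
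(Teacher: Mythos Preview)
Your proposal is correct and follows essentially the same route as the paper: linearity via the linearity of $V^\pi$ in $R$, the decomposition of $R \to c(R)$ as potential shaping by $\Phi = V^\pi$ followed by $S'$-redistribution, invariance under $\PS$ and $\SR$ via the $V^\pi_R \mapsto V^\pi_R - \Phi$ Bellman calculation (the paper invokes Proposition~\ref{prop:change_from_potentials} here), and the chain argument for the converse. Your explicit identification $c(R)(s,a,s') = A^\pi(s,a)$ is a clean organizing observation that the paper does not state in this proof, though it is effectively the content of Lemma~\ref{lemma:ambiguity_A_function}; it streamlines the invariance step nicely but does not change the underlying argument.
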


Proposition~\ref{proposition:value_canon} provides an easy way to construct canonicalisation functions. 
Moreover, while our focus in this paper is on theoretical analysis, it is worth noting that in practice VAL can be approximated as long as $V^\pi$ can be approximated - this is a well studied problem in the instance of large-scale environments.  
We next discuss a desirable property of canonicalisation functions: 

\begin{definition}\label{def:minimal}
    A canonicalisation function $c : \R \to \R$ is \emph{minimal} for a norm $n$ if $n(c(R_1)) \leq n(R_2)$    
    for all $R_1$ and $R_2$ that differ by potential shaping and $S'$-redistribution. 
\end{definition}

It is not a given that minimal canonicalisation functions 
are unique for a given norm. For example, they are not unique for the $L_\infty$-norm.\footnote{To see this, note that for any reward $R$, the set of all rewards that differ from $R$ by $\PS \bigodot \SR$ forms an affine subspace of $\R$ (c.f.\ Proposition~\ref{prop:PS_SR_linear} in \ref{appendix:reward_transformation_properties}). A minimal canonicalisation function for a norm $n$ must map each reward in this space to an element that minimises $n$. However, due to the shape of $L_\infty$'s unit ball, this space will always contain multiple rewards with the same (minimal) $L_\infty$-length.}
However, for any weighted $L_2$-norm, the minimal canonicalisation function 
is unique:

\begin{restatable}[]{proposition}{minimalcanonicalisation}
    For any weighted $L_2$-norm, a minimal canonicalisation function exists and is unique.
\end{restatable}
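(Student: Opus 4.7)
The idea is to construct a minimal canonicalisation function explicitly as an orthogonal projection, and then observe that the weighted $L_2$ inner product forces uniqueness.

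By Proposition~\ref{prop:PS_SR_linear} (referenced in the footnote above), the set $V \subseteq \R$ consisting of all reward functions of the form $t(R_0)$ for $t \in \PS \bigodot \SR$ is a linear subspace of $\R$, and two rewards $R_1, R_2$ differ by potential shaping and $S'$-redistribution if and only if $R_1 - R_2 \in V$. Consequently, the equivalence classes of $\PS \bigodot \SR$ are the cosets $R + V$, which are affine subspaces of $\R$. A weighted $L_2$-norm $n$ arises from an inner product $\langle x, y \rangle_w = \sum_i w_i\, x_i\, y_i$ with $w_i > 0$, so $\R$ equipped with this inner product is a finite-dimensional Hilbert space. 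Let $P_V$ be the orthogonal projection onto $V$ with respect to $\langle \cdot, \cdot \rangle_w$, and define
$$
c(R) \;=\; R - P_V(R).
$$

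\textbf{Existence.} I would verify that $c$ is a canonicalisation function and is minimal. Linearity is immediate from linearity of $P_V$. Since $P_V(R) \in V$, the rewards $c(R)$ and $R$ differ by an element of $V$, i.e.\ by potential shaping and $S'$-redistribution. Moreover, $c(R_1) = c(R_2)$ iff $R_1 - R_2 \in V$, which is precisely the condition that $R_1$ and $R_2$ differ by $\PS \bigodot \SR$. For minimality, fix an equivalence class $R + V$; by the Hilbert projection theorem, the element of $R + V$ closest to the origin is unique and is given by $R - P_V(R) = c(R)$, so $n(c(R_1)) \leq n(R_2)$ whenever $R_1$ and $R_2$ lie in the same coset.

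\textbf{Uniqueness.} Suppose $c'$ is another minimal canonicalisation function. For any $R$, the reward $c'(R)$ lies in the coset $R + V$ and, by minimality applied with $R_1 = R_2 = R'$ ranging over this coset, must minimise $n$ on $R + V$. Because the weighted $L_2$-norm is strictly convex (its squared form is strictly convex on the affine subspace $R+V$), the minimiser on any coset is unique; hence $c'(R)$ coincides with $c(R)$ for every $R$, giving $c' = c$.

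The only non-routine ingredient is identifying $V$ as a linear subspace, which is exactly what Proposition~\ref{prop:PS_SR_linear} supplies; once that is in hand, the rest is just the standard uniqueness of orthogonal projection in a Hilbert space, and the linearity of $c$ falls out for free. The main thing to be careful about is verifying the \enquote{if and only if} clause in Definition~\ref{def:canonicalisation_function}, but this reduces directly to the coset characterisation of $V$.
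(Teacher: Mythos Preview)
Your proposal is correct and takes essentially the same approach as the paper: both identify the subspace $V$ (the paper calls it $\mathcal{Y}$) via Proposition~\ref{prop:PS_SR_linear}, define $c$ as the orthogonal projection onto its complement, verify the canonicalisation axioms, and obtain uniqueness from strict convexity of the weighted $L_2$-norm on each coset. You are slightly more explicit than the paper in taking the orthogonal complement with respect to the \emph{weighted} inner product (which is indeed what is needed for the Pythagorean step in the minimality argument), but otherwise the arguments coincide.
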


Finally, we will introduce one more canonicalisation function, which will be useful for illustrative purposes (c.f.\ \ref{appendix:understand_starc}): 

\begin{restatable}[]{proposition}{occupancymeasureprojectioniscanonicalisation}
\label{prop:occupancy_measure_projection_is_canonicalisation}
Let $\Omega = \{\eta^\pi : \pi \in \Pi\}$ be the set of all occupancy measures, and let $c : \R \to \R$ be the function that projects each reward onto the linear subspace of $\R$ that is parallel to $\Omega$. Then $c$ is a canonicalisation function.
\end{restatable}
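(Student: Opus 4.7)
The plan is to exploit the identity $J(\pi) = \eta^\pi \cdot R$ (viewing rewards and occupancy measures as vectors in $\mathbb{R}^{|\States||\Actions||\States|}$) to translate the projection condition into a statement about trivial rewards, and then invoke Proposition~\ref{prop:PS_SR_linear} to identify the subspace of trivial rewards with the linear subspace $W$ of $\R$ associated with $\PS \bigodot \SR$.

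Let $L$ denote the linear subspace of $\R$ parallel to $\Omega$, so that $c$ is the orthogonal projection onto $L$. Linearity of $c$, which is condition~(1) of Definition~\ref{def:canonicalisation_function}, is immediate from the linearity of orthogonal projection. The crucial observation is that $L^\perp = W$. First I would verify that $L^\perp$ equals the subspace of trivial rewards: since $L$ is spanned by the differences $\eta^\pi - \eta^{\pi'}$, a reward $R$ lies in $L^\perp$ iff $R \cdot (\eta^\pi - \eta^{\pi'}) = J(\pi) - J(\pi') = 0$ for every pair $\pi, \pi'$, which is precisely the condition that $R$ is trivial. Then I would invoke Proposition~\ref{prop:PS_SR_linear} to identify the subspace of trivial rewards with $W$.

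Given $L^\perp = W$, the remaining conditions of Definition~\ref{def:canonicalisation_function} follow mechanically. For condition~(2), $R - c(R) \in L^\perp = W$ by the definition of orthogonal projection, so $R$ and $c(R)$ differ by a transformation in $\PS \bigodot \SR$. For condition~(3), by linearity, $c(R_1) = c(R_2)$ iff $c(R_1 - R_2) = 0$ iff $R_1 - R_2 \in L^\perp = W$, i.e.\ iff $R_1$ and $R_2$ differ by PS+SR.

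The main obstacle is the non-trivial direction of identifying trivial rewards with $W$, i.e.\ showing that every trivial reward can be written as a PS+SR transformation of the zero reward. The heart of this is the classical claim that any $(s,a)$-only reward $\bar R(s,a)$ which is policy-invariant must take the form $\gamma \mathbb{E}_{S' \sim \tfunc(s,a)}[\Phi(S')] - \Phi(s)$ for some $\Phi$; this is where the standing assumption that all states are reachable enters, and it is part of the structural analysis underlying Proposition~\ref{prop:PS_SR_linear}.
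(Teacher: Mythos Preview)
Your approach is correct and is essentially a repackaging of the paper's argument: you make the identification $L^\perp = W$ explicit once and then read off all three conditions mechanically, whereas the paper verifies each condition separately by observing that $\eta^\pi \cdot R$ and $\eta^\pi \cdot c(R)$ differ by a constant $k$ (since $\Omega$ is an affine translate of $L$) and then invoking Proposition~\ref{prop:change_from_potentials} together with Lemma~\ref{lemma:ambiguity_of_J}. One correction on citations: Proposition~\ref{prop:PS_SR_linear} only records that $W = S+Z$ is a linear subspace of the stated dimension; it does \emph{not} assert that $W$ coincides with the set of trivial rewards. The hard direction you flag in your last paragraph (every trivial reward lies in $W$) is precisely what Lemma~\ref{lemma:ambiguity_of_J} provides, once you absorb the constant shift of $J$ into potential shaping with a constant potential (recall $\CS \subseteq \PS$); equivalently, it is Theorem~\ref{thm:policy_ordering} specialised to $R_2 = R_0$, noting that positive linear scaling of $R_0$ is vacuous. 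Cite one of those rather than Proposition~\ref{prop:PS_SR_linear}, and your argument goes through cleanly.
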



A STARC metric can use any canonicalisation function $c$. Moreover, the normalisation step can use any function $n$ that is a norm on $\mathrm{Im}(c)$. This does of course include the $L_1$-norm, $L_2$-norm, $L_\infty$-norm, and so on. We next show that $\max_\pi J(\pi) - \min_\pi J(\pi)$ also is a norm on $\mathrm{Im}(c)$:

\begin{restatable}[]{proposition}{Jnorm}
\label{prop:J_norm}
    If $c$ is a canonicalisation function, then the function $n : \R \to \R$ given by $n(R) = \max_\pi J(\pi) - \min_\pi J(\pi)$ is a norm on $\mathrm{Im}(c)$. Here $J$ is computed under the reward $R$ given as input to $c$.
\end{restatable}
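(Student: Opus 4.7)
The plan is to verify each of the four norm axioms for $n$ on $\mathrm{Im}(c)$. Non-negativity is immediate from $\max_\pi J(\pi) \geq \min_\pi J(\pi)$. Absolute homogeneity and the triangle inequality will follow from the linearity of $J$ in $R$ (since $J(\pi) = \eta^\pi \cdot R$): for a scalar $k \geq 0$ both $\max$ and $\min$ scale by $k$, while for $k < 0$ they swap and pick up a sign, yielding $n(kR) = |k|\,n(R)$ in both cases. The triangle inequality will then use the standard sublinearity $\max_\pi (J_1 + J_2)(\pi) \leq \max_\pi J_1(\pi) + \max_\pi J_2(\pi)$ together with the dual inequality for $\min$, applied to $J_{R_1+R_2} = J_{R_1} + J_{R_2}$.

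The main obstacle is positive definiteness on $\mathrm{Im}(c)$: I must show that if $R \in \mathrm{Im}(c)$ and $n(R) = 0$, then $R = R_0$. First note that $n(R) = 0$ is equivalent to $R$ being trivial. Any trivial reward induces the same (trivial) policy ordering as $R_0$, so Theorem~\ref{thm:policy_ordering} yields some $t \in \SR \bigodot \PS \bigodot \LS$ with $R = t(R_0)$. I will then argue by induction on the length of the composition expressing $t$ that $t(R_0)$ always lies in the linear subspace $V \subseteq \R$ consisting of rewards that differ from $R_0$ by a combination of potential shaping and $S'$-redistribution. The subspace $V$ is closed under $\LS$ (scaling a potential-shaping reward by a constant rescales the potential, and scaling a zero-expectation $S'$-redistribution term preserves the zero-expectation condition), under further $\PS$ (potential shaping is linear in the potential), and under further $\SR$ (zero-expectation rewards are closed under addition); the base case $R_0 \in V$ is immediate.

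It follows that $R = t(R_0)$ differs from $R_0$ by potential shaping and $S'$-redistribution, so by the defining property of a canonicalisation function (Definition~\ref{def:canonicalisation_function}), $c(R) = c(R_0)$. Linearity of $c$ gives $c(R_0) = 0$, hence $c(R) = 0$. Finally, $c$ is idempotent: since $c(R)$ and $R$ always differ by potential shaping and $S'$-redistribution, the canonicalisation property forces $c(c(R)) = c(R)$, so any $R \in \mathrm{Im}(c)$ satisfies $c(R) = R$. Combining these gives $R = c(R) = 0$, as required.
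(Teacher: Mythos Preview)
Your proof is correct and follows essentially the same approach as the paper. The paper handles positive definiteness slightly more tersely---it observes directly that since $R_0 = 0$, the positive linear scaling in $\SR \bigodot \PS \bigodot \LS$ can be dropped (implicitly using Proposition~\ref{prop:reward_transformation_commutativity} to apply $\LS$ first to $R_0$)---whereas you spell out an equivalent closure argument by induction on the composition length; and you make the idempotency of $c$ explicit where the paper leaves it implicit.
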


For the final step we of course have that any norm is an admissible metric, though some other metrics are admissible as well. For example, if $m(R_1,R_2)$ is the \emph{angle} between $R_2$ and $R_2$ when $R_1,R_2 \neq R_0$, and we define $m(R_0,R_0) = 0$ and $m(R,R_0) = \pi/2$ for $R \neq R_0$, then $m$ is also admissible.
To obtain a STARC metric, we then pick any canonicalisation function $c$, norm $n$, and admissible metric $m$, and combine them as described in Definition~\ref{def:reward_metric}.

Some of our results will apply to \emph{any} pseudometric on $\R$, and most of our other results will apply to any pseudometric on $\R$ that is both sound and complete (including any STARC metric). However, to obtain \emph{specific} quantitative results, we will need to use a specific pseudometric. Therefore, we will use the following STARC metric as our \enquote{standard} pseudometric:

\begin{definition}\label{def:standard_starc}
    Let $\starc$ be the STARC metric for which $n$ is the $L_2$-norm, $c$ is the canonicalisation function that is minimal for the $L_2$-norm, and $m$ is the metric given by $m(x,y) = 0.5 \cdot L_2(x,y)$.  
\end{definition}

We will use $c^\mathrm{STARC}_{\tfunc, \gamma}$ to denote the canonicalisation function of $\starc$ (i.e., the minimal canonicalisation function for the $L_2$-norm). Moreover, we will use $s^\mathrm{STARC}_{\tfunc, \gamma} : \R \to \R$ to denote the function that is equal to 
$$
\left(\frac{c^\mathrm{STARC}_{\tfunc, \gamma}(R)}{L_2(c^\mathrm{STARC}_{\tfunc, \gamma}(R))}\right)
$$
when $L_2(c^\mathrm{STARC}_{\tfunc, \gamma}(R)) > 0$, and $c^\mathrm{STARC}_{\tfunc, \gamma}(R)$ otherwise.\footnote{This means that $\starc(R_1, R_2) = 0.5\cdot L_2(s^\mathrm{STARC}_{\tfunc, \gamma}(R_1), s^\mathrm{STARC}_{\tfunc, \gamma}(R_2))$.} We let $m$ be equal to half the $L_2$-distance, to ensure that $\starc$ is bounded between 0 and 1. We will use $\starc$ as our \enquote{standard} STARC metric as it is easy to work with when constructing proofs. However  this choice is not very consequential, 
since all STARC metrics are bilipschitz equivalent.

In ~\ref{appendix:understand_starc}, we provide a geometric intuition for how STARC metrics work. This will make it easier to understand STARC metrics, and may also make it easier to understand why STARC metrics induce bounds on worst-case regret.
\section{Partial Identifiability}\label{sec:partial_identifiability}

In this section, we present our results about the partial identifiability of the reward function in IRL, relative to the standard behavioural models.
Specifically, we derive the ambiguity of the Boltzmann-rational model, the MCE model, and the optimality model. We also discuss the ambiguity tolerance of a number of different applications, and analyse the question of transfer learning. Note that our results in this section cover both the case where reward functions are compared using equivalence relations, and the case where they are compared using pseudometrics.

\subsection{Invariances of Policies}

In this section, we derive the invariances of the three behavioural models that are most common in the current IRL literature. 
These results will be expressed in terms the sets of reward transformations that we introduced in Section~\ref{sec:reward_transformations}.
We begin with the Boltzmann-rational model:

\begin{restatable}[]{theorem}{ambiguityboltzmannrational}
\label{thm:ambiguity-boltzmann-rational}
    For any transition function $\tfunc$, discount $\gamma$, and temperature $\beta$, we have that $b_{\tfunc, \gamma, \beta}$ determines $R$ up to $\PS \bigodot \SR$.
\end{restatable}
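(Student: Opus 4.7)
The plan is to prove both directions separately, showing that (i) any transformation in $\PS \bigodot \SR$ preserves the Boltzmann-rational policy, and (ii) if two rewards induce the same Boltzmann-rational policy, then they must be related by such a transformation.

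For the sufficiency direction, I would establish two lemmas about how $\QStar$ transforms. First, $S'$-redistribution preserves the expected reward $\mathbb{E}_{S' \sim \tfunc(s,a)}[R(s,a,S')]$ at every $(s,a)$, and the optimal Bellman equation (Equation~\ref{equation:optimal_Q_recursion}) depends on $R$ only through this expectation; so its unique fixed point $\QStar$ is unchanged. Second, if $R_2$ is obtained from $R_1$ by potential shaping with potential $\Phi$ and discount $\gamma$, then a standard computation (verifying the Bellman fixed point) gives $\QStar_2(s,a) = \QStar_1(s,a) - \Phi(s)$. Since the softmax is invariant under adding an action-independent constant to its inputs, applying the softmax with temperature $\beta$ to $\QStar_1$ and $\QStar_2$ yields the same distribution at every state, i.e.\ $b_{\tfunc,\gamma,\beta}(R_1) = b_{\tfunc,\gamma,\beta}(R_2)$. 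Composition of such transformations is then immediate.

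For the necessity direction, assume $b_{\tfunc, \gamma, \beta}(R_1) = b_{\tfunc, \gamma, \beta}(R_2)$. I would invoke the well-known fact that the softmax is injective modulo a constant shift: if $\softmax(\beta x) = \softmax(\beta y)$ then $y = x + c\mathbf{1}$. Applied pointwise in $s$, this yields a function $f : \States \to \mathbb{R}$ with $\QStar_2(s,a) = \QStar_1(s,a) + f(s)$ for all $(s,a)$. Taking the max over $a$ gives $\VStar_2(s) = \VStar_1(s) + f(s)$. Plugging both into the Bellman equation~\ref{equation:optimal_Q_recursion} and subtracting yields
\[
\mathbb{E}_{S'\sim\tfunc(s,a)}\bigl[R_2(s,a,S') - R_1(s,a,S')\bigr] \;=\; f(s) - \gamma\,\mathbb{E}_{S'\sim\tfunc(s,a)}[f(S')].
\]
Setting $\Phi = -f$ and letting $R' = R_1 + \gamma\Phi(s') - \Phi(s)$, the right-hand side matches the expected change produced by potential shaping with $\Phi$, so $\mathbb{E}_{S'\sim\tfunc(s,a)}[R_2(s,a,S')] = \mathbb{E}_{S'\sim\tfunc(s,a)}[R'(s,a,S')]$ for every $(s,a)$. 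By Definition~\ref{def:sprime-redistribution}, $R_2$ and $R'$ then differ only by $S'$-redistribution, so $R_2 = t(R_1)$ for some $t \in \PS \bigodot \SR$.

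I do not expect any serious obstacle. The one place requiring care is the injectivity-of-softmax step: one must note that it is temperature $\beta > 0$ (so the softmax genuinely is injective up to a constant), and that the constant $c$ can depend on the state $s$, which is precisely what furnishes the potential $f$. The rest amounts to a routine Bellman-equation manipulation together with invoking the characterisations of $S'$-redistribution and potential shaping from Definitions~\ref{def:sprime-redistribution} and \ref{def:potential_shaping}.
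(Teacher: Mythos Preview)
Your proof is correct and rests on exactly the same ingredients as the paper's: softmax injectivity up to constant shift (Proposition~\ref{prop:softmax_shift}), invariance of $Q^\star$ under $S'$-redistribution, and the $-\Phi(s)$ shift of $Q^\star$ under potential shaping. The only difference is organisational: the paper factors through the optimal advantage function, first showing that $b_{\tfunc,\gamma,\beta}$ and $A^\star$ have identical ambiguity (using $\max_a A^\star(s,a)=0$ to kill the state-wise shift outright), and then invoking the separately-proved Lemma~\ref{lemma:ambiguity_optimal_A_function} on the invariances of $A^\star$. Your argument instead keeps the state-dependent shift $f$ on $Q^\star$ and subtracts the two Bellman equations directly; this is precisely the content of that lemma (and of Lemma~\ref{lemma:ambiguity_optimal_Q_function}) unrolled inline. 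The paper's decomposition is more modular and reusable across the other behavioural models, while yours is more self-contained for this single theorem; mathematically they are the same proof.
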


Stated differently, $\mathrm{Am}(b_{\tfunc, \gamma, \beta})$ is given by $\PS \bigodot \SR$, so two reward functions have the same Boltzmann-rational policy if and only if they differ by potential shaping and $S'$-redistribution. We next consider the MCE model:

\begin{restatable}[]{theorem}{ambiguityMCE}
\label{thm:ambiguity-MCE}
    For any transition function $\tfunc$, discount $\gamma$, and weight $\alpha$, we have that $c_{\tfunc, \gamma, \alpha}$ determines $R$ up to $\PS \bigodot \SR$.
\end{restatable}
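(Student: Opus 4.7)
} The plan is to prove both directions of the claim \enquote{$c_{\tfunc,\gamma,\alpha}(R_1) = c_{\tfunc,\gamma,\alpha}(R_2)$ iff $R_2 = t(R_1)$ for some $t \in \PS \bigodot \SR$}, relying primarily on the soft Bellman recursion (Equation~\ref{equation:soft_Q_recursion}) together with the shift-invariance of the softmax. In spirit the proof closely parallels the argument for Theorem~\ref{thm:ambiguity-boltzmann-rational}, with the hard Bellman operator replaced by the soft one; the main technical work is to check that potential shaping shifts $\QSoft$ by exactly $-\Phi(s)$, and that the only indeterminacy in recovering $R$ from the soft Bellman equation is precisely potential shaping and $S'$-redistribution.

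For the \emph{sufficiency} direction, I would handle the two generating classes separately. If $R_1$ and $R_2$ differ by $S'$-redistribution then $\mathbb{E}_{S'\sim\tfunc(s,a)}[R_1(s,a,S')] = \mathbb{E}_{S'\sim\tfunc(s,a)}[R_2(s,a,S')]$ by definition, so the right-hand sides of the soft Bellman recursions coincide and, by uniqueness of the fixed point (Equation~\ref{equation:soft_Q_recursion} is a contraction), $\QSoft_1 = \QSoft_2$ and the MCE policies agree. If instead $R_2(s,a,s') = R_1(s,a,s') + \gamma \Phi(s') - \Phi(s)$, I would verify by direct substitution that $\QSoft_2(s,a) = \QSoft_1(s,a) - \Phi(s)$ solves the soft Bellman equation for $R_2$; the key step is
\[
\alpha\log\sum_{a'}\exp\!\left(\tfrac{1}{\alpha}\bigl(\QSoft_1(s',a') - \Phi(s')\bigr)\right) = -\Phi(s') + \alpha\log\sum_{a'}\exp\!\left(\tfrac{1}{\alpha}\QSoft_1(s',a')\right),
\]
which lets the $\Phi$ terms telescope against the shaping contribution in $R_2$. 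Uniqueness of the fixed point then yields the claimed relation, and since softmax is invariant under adding a state-only constant to its logits, the MCE policies coincide. Composing these two cases handles any $t \in \PS\bigodot\SR$.

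For the \emph{necessity} direction, suppose $c_{\tfunc,\gamma,\alpha}(R_1) = c_{\tfunc,\gamma,\alpha}(R_2)$. From the closed form of the MCE policy, equality of the softmaxes at each state forces $\QSoft_2(s,a) - \QSoft_1(s,a) = f(s)$ for some $f: \States \to \mathbb{R}$ independent of $a$. Substituting this into the soft Bellman recursions for $R_1$ and $R_2$ and subtracting, the same log-sum-exp identity as above reduces the difference of the entropy terms to $\gamma \mathbb{E}_{S'\sim\tfunc(s,a)}[f(S')]$, giving
\[
f(s) = \mathbb{E}_{S'\sim\tfunc(s,a)}\bigl[R_2(s,a,S') - R_1(s,a,S')\bigr] + \gamma\,\mathbb{E}_{S'\sim\tfunc(s,a)}[f(S')].
\]
Setting $\Phi = -f$ rearranges this to $\mathbb{E}_{S'}[R_2(s,a,S')] = \mathbb{E}_{S'}[R_1(s,a,S') + \gamma\Phi(S') - \Phi(s)]$, which says that $R_2$ and the $\Phi$-shaping of $R_1$ have identical expected rewards under $\tfunc$, i.e.\ they differ by $S'$-redistribution. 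Hence $R_2 = t(R_1)$ for a composition of a potential shaping and an $S'$-redistribution, completing the argument.

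The main obstacle I anticipate is the necessity direction, specifically the clean extraction of a potential $\Phi$ from the state-dependent offset $f$ between the two soft $Q$-functions; the computation hinges on the exactness of the log-sum-exp manipulation, and one has to be careful that the resulting equation on $f$ holds \emph{uniformly in $a$}, so that the right-hand side (which a priori is only an average over $S'$) must equal a function depending on $s$ alone --- which is precisely what guarantees that the residual discrepancy is purely $S'$-redistribution rather than a genuinely action-dependent change in $R$. Everything else is either a contraction-map uniqueness argument or a direct algebraic verification on the soft Bellman recursion.
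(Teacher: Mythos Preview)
Your proposal is correct and follows essentially the same approach as the paper: the sufficiency direction uses the soft Bellman contraction together with the log-sum-exp shift identity (which the paper packages as Lemma~\ref{lemma:ambiguity_soft_Q_function} and Proposition~\ref{prop:potentials_to_soft_Q}), and the necessity direction likewise extracts a state-only offset $f(s)$ from the softmax equality, substitutes into the soft Bellman equations, and reads off the potential $\Phi = -f$ plus an $S'$-redistribution residual. Your worry about the offset equation holding uniformly in $a$ is not a real obstacle: the subtracted Bellman equations hold for each $(s,a)$ separately, and the left-hand side is already $a$-independent precisely because Proposition~\ref{prop:softmax_shift} forced the soft-$Q$ difference to be state-only.
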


Stated differently, $\mathrm{Am}(c_{\tfunc, \gamma, \alpha})$ is given by $\PS \bigodot \SR$, so two reward functions have the same MCE policy if and only if they differ by potential shaping and $S'$-redistribution. We next consider the optimality model:

\begin{restatable}[]{theorem}{ambiguityoptimal}
\label{thm:ambiguity-optimal}
    For any transition function $\tfunc$ and discount $\gamma$, we have that $o_{\tfunc, \gamma}^\star$ determines $R$ up to $\OP$.
\end{restatable}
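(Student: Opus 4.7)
The plan is to reduce the claim to Theorem~\ref{thm:OPT_ambiguity} by showing that $\mathrm{Am}(o_{\tfunc,\gamma}^\star)$ coincides with the equivalence relation $\OPT$. Recall that $o_{\tfunc,\gamma}^\star(R)$ is the function that assigns to each state $s$ the set $\mathrm{argmax}_{a}Q^\star(s,a)$ of all optimal actions at $s$. So by unfolding definitions, $o_{\tfunc,\gamma}^\star(R_1) = o_{\tfunc,\gamma}^\star(R_2)$ holds if and only if for every $s \in \States$ we have $\mathrm{argmax}_{a}Q^\star_1(s,a) = \mathrm{argmax}_{a}Q^\star_2(s,a)$. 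Call this latter condition ($\star$).

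Next, I would prove that ($\star$) is equivalent to $R_1 \eq{\OPT} R_2$, i.e.\ to $R_1$ and $R_2$ having identical sets of optimal policies (recall from Section~\ref{sec:RL_preliminaries} that a policy is optimal iff it gives positive probability only to actions in $\mathrm{argmax}_{a} Q^\star(s,a)$ at every state). The forward direction is immediate: if the set of optimal actions at every state coincides for $R_1$ and $R_2$, then the sets of policies giving support only to those actions also coincide. For the converse, I would argue contrapositively: suppose there is some state $s_0$ and action $a_0$ with $a_0 \in \mathrm{argmax}_{a}Q^\star_1(s_0,a)$ but $a_0 \notin \mathrm{argmax}_{a}Q^\star_2(s_0,a)$. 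Then I construct a deterministic policy $\pi$ by choosing $\pi(s_0) = a_0$ and, at every other state $s$, picking an arbitrary action in $\mathrm{argmax}_{a}Q^\star_1(s,a)$ (which is nonempty since the Bellman optimality equation always admits a solution). This $\pi$ is optimal under $R_1$ but not under $R_2$, so the sets of optimal policies differ.

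Finally, combining ($\star$) $\Leftrightarrow$ $R_1 \eq{\OPT} R_2$ with Theorem~\ref{thm:OPT_ambiguity}, we obtain that $o_{\tfunc,\gamma}^\star(R_1) = o_{\tfunc,\gamma}^\star(R_2)$ if and only if $R_2 = t(R_1)$ for some $t \in \OP$, which is exactly the statement that $o_{\tfunc,\gamma}^\star$ determines $R$ up to $\OP$.

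The main obstacle is essentially all absorbed into Theorem~\ref{thm:OPT_ambiguity}, whose proof is where the real work about optimality-preserving transformations lives. The only nontrivial bookkeeping here is the reverse direction of ($\star$) $\Leftrightarrow$ $R_1 \eq{\OPT} R_2$, where one must verify that a discrepancy in $\mathrm{argmax}$ sets at even a single state suffices to produce a concrete optimal policy for $R_1$ that fails to be optimal for $R_2$; the assumption that all states are reachable, together with the fact that an optimal policy exists for any reward, makes this construction routine.
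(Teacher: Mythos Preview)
Your proposal is correct and follows exactly the same approach as the paper: reduce to Theorem~\ref{thm:OPT_ambiguity} via the observation that $o_{\tfunc,\gamma}^\star(R_1)=o_{\tfunc,\gamma}^\star(R_2)$ if and only if $R_1$ and $R_2$ have the same optimal policies. The paper compresses this into a one-line ``immediate from Theorem~\ref{thm:OPT_ambiguity}'' remark, whereas you have spelled out the equivalence between matching argmax sets and matching optimal-policy sets; but this is the same argument, and your extra detail is just bookkeeping rather than a different route.
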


Stated differently, $\mathrm{Am}(o_{\tfunc, \gamma}^\star)$ is given by $\OP$, so two reward functions have the same maximally supportive optimal policies if and only if they differ by an optimality-preserving transformation. 
These results exactly characterise the partial identifiability of the reward function $R$ under IRL which uses any of these three behavioural models. 

\subsection{Ambiguity Tolerance and Applications}

Now that we have derived the ambiguity of $R$ under each of the three standard behavioural models, it may be worth reflecting on the implications of these results. First of all, both $b_{\tfunc,\gamma,\beta}$ and $c_{\tfunc,\gamma,\alpha}$ determine $R$ up to $S'$-redistribution (with $\tau$) and potential shaping (with $\gamma$). From this, we can straightforwardly derive the following:

\begin{restatable}[]{corollary}{diameterofamforBRandMCE}
\label{cor:diameter_of_am_for_BR_and_MCE}
    If $f$ is $b_{\tfunc, \gamma, \beta}$ or $c_{\tfunc, \gamma, \alpha}$, then we have that:
    \begin{enumerate}
        \item $\mathrm{Am}(f) \refines \ORD$.
        \item $\mathrm{Am}(f) \refines \OPT$.
        \item If $d^\R$ is a pseudometric on $\R$ that is both sound and complete, then the upper and lower diameter of $\mathrm{Am}(f)$ under $d^\R$ is $0$.
    \end{enumerate} 
\end{restatable}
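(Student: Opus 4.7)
The plan is to chain together the characterisation results already established earlier in the paper, without any new technical work. From Theorems~\ref{thm:ambiguity-boltzmann-rational} and \ref{thm:ambiguity-MCE}, both $b_{\tfunc,\gamma,\beta}$ and $c_{\tfunc,\gamma,\alpha}$ determine $R$ up to $\PS \bigodot \SR$; that is, $R_1 \eq{f} R_2$ iff there exists $t \in \PS \bigodot \SR$ with $R_2 = t(R_1)$. For part~1, I would combine this with Theorem~\ref{thm:policy_ordering}, which characterises $\eq{\ORD}$ as differing by some $t \in \SR \bigodot \PS \bigodot \LS$. Since $\LS$ contains the identity scaling ($c = 1$) and $\bigodot$ is commutative and associative, $\PS \bigodot \SR$ is contained in $\SR \bigodot \PS \bigodot \LS$. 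Hence any two reward functions equivalent under $f$ are also $\ORD$-equivalent, giving $\mathrm{Am}(f) \refines \ORD$.

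For part~2, I would invoke the remark stated just before Theorem~\ref{thm:policy_ordering}: if $R_1 \eq{\ORD} R_2$ then $R_1 \eq{\OPT} R_2$ (equal policy orderings have, in particular, equal sets of maximisers), which by Definition~\ref{def:refinement} means $\ORD \refines \OPT$. Combining this with part~1 and the transitivity of $\refines$ noted in Section~\ref{sec:frameworks_ambiguity} yields $\mathrm{Am}(f) \refines \OPT$.

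For part~3, I would appeal to Proposition~\ref{prop:sound_and_complete_means_distance_0_iff_same_order}, which says that any sound and complete pseudometric $d^\R$ satisfies $d^\R(R_1, R_2) = 0$ iff $R_1 \eq{\ORD} R_2$. By part~1, any two reward functions lying in the same cell of $\mathrm{Am}(f)$ are $\ORD$-equivalent and therefore have $d^\R$-distance zero. Consequently, every cell $S \in \mathrm{Am}(f)$ has $\mathrm{diam}(S) = 0$, so both the upper diameter $\sup\{\mathrm{diam}(S) : S \in \mathrm{Am}(f)\}$ and the lower diameter $\inf\{\mathrm{diam}(S) : S \in \mathrm{Am}(f)\}$ collapse to zero.

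There is no substantive obstacle here; the corollary is essentially a bookkeeping consequence of Theorems~\ref{thm:ambiguity-boltzmann-rational}, \ref{thm:ambiguity-MCE}, \ref{thm:policy_ordering} and Proposition~\ref{prop:sound_and_complete_means_distance_0_iff_same_order}. The only thing to be careful about is the direction of $\refines$: a \emph{stronger} equivalence relation yields a \emph{finer} partition, which \emph{refines} the coarser one in the sense of Definition~\ref{def:refinement}.
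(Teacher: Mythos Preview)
Your proposal is correct and follows essentially the same route as the paper's proof: invoke Theorems~\ref{thm:ambiguity-boltzmann-rational} and~\ref{thm:ambiguity-MCE} to get $\PS \bigodot \SR$, use Theorem~\ref{thm:policy_ordering} (and the containment $\PS \bigodot \SR \subseteq \SR \bigodot \PS \bigodot \LS$) for part~1, the inclusion $\ORD \refines \OPT$ for part~2, and Proposition~\ref{prop:sound_and_complete_means_distance_0_iff_same_order} for part~3. Your write-up is slightly more explicit than the paper's about why the transformation-set containment holds, but there is no substantive difference.
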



This means that for any transition function $\tfunc$, any discount factor $\gamma$, and any true reward function $R^\star$, if an IRL algorithm $\mathcal{L}$ for Boltzmann-rational policies or MCE policies is trained on data that in fact is generated by a Boltzmann-rational policy or an MCE policy, then $\mathcal{L}$ will converge to a reward function $R_H$ such that $R^\star$ and $R_H$ have the same policy ordering (and optimal policies) under $\tfunc$ and $\gamma$, and that for any STARC metric $d^\R$, we have that $d^\R(R^\star, R_H) = 0$. This means that the ambiguity of these models is unproblematic.

Of course, there are a few caveats here that it is important to be cognisant of. First of all, this result relies on the assumption that the training data in fact comes from a Boltzmann-rational policy or MCE policy (i.e., that there is no misspecification). In reality, this assumption is unrealistic. In Sections~\ref{sec:misspecification_1} and \ref{sec:misspecification_2}, we will loosen this assumption. Moreover, we are only guaranteed that $R^\star$ and $R_H$ have the same policy order under $\tfunc$ and $\gamma$. In other words, we assume that $R_H$ will be applied in the same environment where it is learnt (or, stated differently, that there is no distributional shift after the learning process). In Section~\ref{sec:ambiguity_transfer_learning}, we will loosen this assumption, and see that we fail to obtain similar guarantees in that setting. Nonetheless, even with these caveats, Corollary~\ref{cor:diameter_of_am_for_BR_and_MCE} is still good news.

Our results also show that the invariances of $o_{\tfunc,\gamma}^\star$ preserve $\mathrm{OPT}_{\tau,\gamma}$. This is perhaps obvious --- the information that is contained in an optimal policy is of course sufficient to construct an optimal policy. Nonetheless, it is good to assimilate this result into our framework, and express it in the same terminology as our other results. Moreover, this result is of course also subject to the caveat that the training data in fact must come from an optimal policy, and the caveat that it only applies for the $\tfunc$ and $\gamma$ that were used during training. Next, we will show that the invariances of $o_{\tfunc,\gamma}^\star$ do \emph{not} preserve $\mathrm{ORD}_{\tau,\gamma}$, except in highly constrained environments:

\begin{restatable}[]{proposition}{optimalpoliciesdonotpreserveord}
\label{prop:optimal_policies_do_not_preserve_ord}
Unless $|\States| = 1$ and $|\Actions| = 2$, then for any $\tau$ and any $\gamma$ there are reward functions $R_1$, $R_2$ such that $o_{\tfunc,\gamma}^\star(R_1) = o_{\tfunc,\gamma}^\star(R_2)$ but $R_1 \not\eq{\mathrm{ORD}_{\tau,\gamma}} R_2$.
\end{restatable}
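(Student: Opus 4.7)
The plan is to invoke Theorem~\ref{thm:policy_ordering}, which says $R_1 \eq{\ORD} R_2$ iff $R_2 = c R_1 + \Psi + h$ for some $c > 0$, some potential term $\Psi(s,a,s') = \gamma \Phi(s') - \Phi(s)$, and some $S'$-redistribution term $h$ with $\mathbb{E}_{s' \sim \tau(s,a)}[h(s,a,s')] = 0$. Taking expectations over $s'$ eliminates $h$ and reduces $\ORD$-equivalence to the linear system $\bar{R}_2(\cdot, a) = c\,\bar{R}_1(\cdot, a) + (\gamma T_a - I)\Phi$ for every action $a$, where $T_a$ is the row-stochastic transition matrix and $\bar{R}(s, a) := \mathbb{E}_{s' \sim \tau(s,a)}[R(s,a,s')]$. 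I will exhibit $R_1, R_2$ sharing the same optimal actions and making this system inconsistent, thereby forcing $R_1 \not\eq{\ORD} R_2$.

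For the case $|\States| \geq 2$ and $|\Actions| \geq 2$, pick distinct $a_1, a_2 \in \Actions$ and some $s_1 \in \States$. Let $R_1(s, a_1, s') = 1$ and $R_1(s, a, s') = 0$ for $a \neq a_1$; then $a_1$ is uniquely optimal at every state with $V^\star_1 \equiv 1/(1-\gamma)$. Let $R_2$ agree with $R_1$ except that $R_2(s_1, a_2, s') = -1$; this only lowers $Q^\star$ at $(s_1, a_2)$, so $a_1$ remains uniquely optimal and $o^\star_{\tau,\gamma}(R_1) = o^\star_{\tau,\gamma}(R_2)$. Suppose $R_2 \eq{\ORD} R_1$. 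The $a_1$-equations read $(I - \gamma T_{a_1})\Phi = (c-1)\mathbf{1}$; since every row of $T_{a_1}$ sums to $1$ we have $T_{a_1}\mathbf{1} = \mathbf{1}$, so $\Phi = \tfrac{c-1}{1-\gamma}\mathbf{1}$. Substituting into the $a_2$-equations $(\gamma T_{a_2} - I)\Phi = -e_{s_1}$ and using $T_{a_2}\mathbf{1} = \mathbf{1}$ yields $(c-1)\mathbf{1} = e_{s_1}$, which has no solution when $|\States| \geq 2$, because the left-hand side is a constant vector while the right-hand side is the non-constant indicator of $s_1$. (If $|\Actions| \geq 3$ one obtains a quicker contradiction from any third-action equation, which forces $\Phi = 0$ via the iteration $\Phi = \gamma^k T_{a_3}^k \Phi \to 0$, and then $c = 1$, making $-e_{s_1} = 0$ impossible.)

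For the case $|\States| = 1$ and $|\Actions| \geq 3$, $\tau$ is the forced self-loop, $\SR$ is trivial, $\PS$ reduces to adding a constant, and $\ORD$-equivalence collapses to $R_2 = c R_1 + k\mathbf{1}$ for some $c > 0, k \in \R$. Let $R_1(s, a_1) = 1$ and $R_1(s, a) = 0$ for $a \neq a_1$, and let $R_2$ agree with $R_1$ except $R_2(s, a_2) = -1/3$ and $R_2(s, a_3) = -1/2$; both values are strictly below $1$, so $a_1$ is uniquely optimal under both and $o^\star_{\tau,\gamma}(R_1) = o^\star_{\tau,\gamma}(R_2)$. Any $c, k$ with $R_2 = c R_1 + k\mathbf{1}$ would simultaneously force $k = -1/3$ and $k = -1/2$, a contradiction.

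The main obstacle is the sub-case $|\States| \geq 2$ with $|\Actions| = 2$, where no third action is available to force $\Phi = 0$ directly. The resolution rests on the identity $T_a \mathbf{1} = \mathbf{1}$ holding uniformly in $\tau$ (because each row of a stochastic matrix sums to $1$), which pins $\Phi$ down to a scalar multiple of $\mathbf{1}$ using the $a_1$-equations and then forces the $a_2$-equations to demand equality between a constant vector and the non-constant indicator $e_{s_1}$, producing a contradiction uniformly over all transition functions.
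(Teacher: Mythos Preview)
Your proof is correct, but it takes a genuinely different route from the paper's. The paper gives a three-line counting argument: by Theorem~\ref{thm:policy_ordering} the $\ORD$-equivalence classes are parametrised by $\SR \bigodot \PS \bigodot \LS$, and a dimension count shows that unless $|\States|=1$ and $|\Actions|=2$ there are uncountably many such classes; on the other hand $\mathrm{Im}(o^\star_{\tfunc,\gamma})$ is finite (at most $(2^{|\Actions|}-1)^{|\States|}$ values). Pigeonhole then forces two $\ORD$-inequivalent rewards to share the same optimal-action set.

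Your approach is constructive: you exhibit explicit $R_1,R_2$ and then use the characterisation from Theorem~\ref{thm:policy_ordering} to rule out $\ORD$-equivalence by solving the resulting linear system. The key step---using $T_a\mathbf{1}=\mathbf{1}$ to pin $\Phi$ to a constant vector via the $a_1$-equations, and then deriving $(c-1)\mathbf{1}=e_{s_1}$ from the $a_2$-equations---is clean and handles all $\tfunc$ at once. The paper's argument is shorter and more conceptual, while yours yields concrete witnesses and makes the role of the boundary case $|\States|=1,|\Actions|=2$ more transparent. Your parenthetical about a third action forcing $\Phi=0$ via the Neumann-series iteration is correct but redundant, since your main $|\States|\geq 2$ argument already covers $|\Actions|=2$.
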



We can thus summarise our results about the ambiguity of $o_{\tfunc,\gamma}^\star$ as follows:

\begin{restatable}[]{corollary}{diameterofop}
\label{cor:diameter_of_op}
    Unless $|\States| = 1$ and $|\Actions| = 2$, we have that:
    \begin{enumerate}
        \item $\mathrm{Am}(o_{\tfunc,\gamma}^\star) \not\refines \ORD$.
        \item $\mathrm{Am}(o_{\tfunc,\gamma}^\star) \refines \OPT$.
        \item If $d^\R$ is a pseudometric on $\R$ that is both sound and complete, then the lower diameter of $\mathrm{Am}(o_{\tfunc,\gamma}^\star)$ under $d^\R$ is $0$, but the upper diameter is greater than $0$.
    \end{enumerate} 
\end{restatable}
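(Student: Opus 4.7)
The plan is to assemble Corollary~\ref{cor:diameter_of_op} out of results already established in the paper, treating the three clauses separately but with considerable overlap between clauses 1 and 3.

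For clause 2, I would observe that Theorem~\ref{thm:ambiguity-optimal} says $o_{\tfunc,\gamma}^\star$ determines $R$ up to $\OP$, i.e.\ $R_1 \eq{\mathrm{Am}(o_{\tfunc,\gamma}^\star)} R_2$ iff there is $t \in \OP$ with $R_2 = t(R_1)$. Theorem~\ref{thm:OPT_ambiguity} says precisely the same characterisation for $\OPT$. So $\mathrm{Am}(o_{\tfunc,\gamma}^\star)$ and $\OPT$ are \emph{identical} partitions of $\R$, whence $\mathrm{Am}(o_{\tfunc,\gamma}^\star) \refines \OPT$ trivially.

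Clause 1, and the upper-diameter half of clause 3, both follow from Proposition~\ref{prop:optimal_policies_do_not_preserve_ord}. Under the hypothesis that we are not in the degenerate case $|\States|=1, |\Actions|=2$, that proposition produces reward functions $R_1, R_2$ with $o_{\tfunc,\gamma}^\star(R_1) = o_{\tfunc,\gamma}^\star(R_2)$ but $R_1 \not\eq{\ORD} R_2$. The first condition says $R_1, R_2$ sit in a common cell of $\mathrm{Am}(o_{\tfunc,\gamma}^\star)$, while the second witnesses $\mathrm{Am}(o_{\tfunc,\gamma}^\star) \not\refines \ORD$. Moreover, because $d^\R$ is sound and complete, Proposition~\ref{prop:sound_and_complete_means_distance_0_iff_same_order} gives $d^\R(R_1, R_2) > 0$, so the cell containing $R_1$ and $R_2$ has strictly positive $d^\R$-diameter; hence the upper diameter of $\mathrm{Am}(o_{\tfunc,\gamma}^\star)$ exceeds $0$.

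For the lower-diameter half of clause 3, I would exhibit a single cell of $\mathrm{Am}(o_{\tfunc,\gamma}^\star)$ with diameter zero, using the cell of the zero reward $R_0$ as the natural candidate. The key observation is that this cell consists exactly of the trivial reward functions: if $o_{\tfunc,\gamma}^\star(R') = o_{\tfunc,\gamma}^\star(R_0)$ then every action is optimal under $R'$ in every state, so every policy is optimal and $\J_{R'}$ is constant on $\Pi$; conversely any trivial $R'$ has $Q^\star_{R'}$ constant in $a$ so every action is optimal. (Alternatively, one can apply Theorem~\ref{thm:OPT_ambiguity}: the $\OPT$-cell of $R_0$ equals the image of $R_0$ under $\OP$, which unwinds to rewards of the form $\psi(s) - \gamma\mathbb{E}_{S' \sim \tfunc(s,a)}[\psi(S')]$ up to $S'$-redistribution, i.e.\ exactly the trivial rewards.) Since all trivial rewards induce the same (constant) policy ordering, Proposition~\ref{prop:sound_and_complete_means_distance_0_iff_same_order} gives $d^\R(R', R'') = 0$ for any two of them, so this cell has diameter $0$ and the lower diameter of $\mathrm{Am}(o_{\tfunc,\gamma}^\star)$ is $0$.

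The main technical wrinkle is the characterisation of the $\mathrm{Am}(o_{\tfunc,\gamma}^\star)$-cell of $R_0$ as the set of trivial rewards; everything else is a direct citation. This wrinkle is mild because it only needs the elementary fact that a reward is trivial iff all actions are optimal in all states, which follows from the Bellman optimality equation.
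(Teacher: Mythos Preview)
Your proposal is correct and follows essentially the same route as the paper's own proof: clause~1 from Proposition~\ref{prop:optimal_policies_do_not_preserve_ord}, clause~2 from Theorem~\ref{thm:ambiguity-optimal} (the paper leaves the link to $\OPT$ via Theorem~\ref{thm:OPT_ambiguity} implicit, but your explicit identification $\mathrm{Am}(o_{\tfunc,\gamma}^\star)=\OPT$ is exactly right), and clause~3 by combining Proposition~\ref{prop:sound_and_complete_means_distance_0_iff_same_order} with the witnesses from clause~1 for the upper diameter and with the cell of $R_0$ (all trivial rewards) for the lower diameter. The only cosmetic difference is that the paper phrases the lower-diameter argument as $d^\R(R,R_0)=0$ rather than pairwise $d^\R(R',R'')=0$, which is immaterial.
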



Note that there are some special cases where $o_{\tfunc,\gamma}^\star(R)$ does allow us to infer the policy order of $R$, even if $|\States| \geq 2$ or $|\Actions| \geq 3$. As a simple example, if we have a one-state MDP with three actions $a_1,a_2,a_3$, and $o_{\tfunc,\gamma}^\star(R)$ shows that action $a_1$ and $a_2$ are optimal, then we can also infer the policy order of $R$. Alternatively, if $o_{\tfunc,\gamma}^\star(R)$ shows that all actions are optimal in all states, then all policies must have the same value --- this is why the lower diameter of $\mathrm{Am}(o_{\tfunc,\gamma}^\star)$ is $0$. Nonetheless, these cases are marginal, and in most situations, we will not be able to infer the policy order of $R$ from $o_{\tfunc,\gamma}^\star(R)$. Also note that the exact value of the upper diameter will depend on which pseudometric we use, as well as on the transition function $\tfunc$ and discount factor $\gamma$. Calculating this value exactly would be quite difficult, but we expect it to typically be quite large (since two reward functions may have the same optimal policies, and yet have wildly different policy orderings).

Let us also briefly note that the behavioural models in $\mathcal{O}_{\tfunc,\gamma}$ other than $o_{\tfunc,\gamma}^\star$ 
are too ambiguous to identify even the correct equivalence class of $\OPT$:

\begin{restatable}[]{theorem}{nonmaxsupportiveoptveryambiguous}
\label{thm:non_max_supportive_opt_very_ambiguous}
If $o \in \mathcal{O}_{\tfunc,\gamma}$ but $o \neq o_{\tfunc,\gamma}^\star$, then $\mathrm{Am}(o_{\tfunc,\gamma}) \not\refines \OPT$.
\end{restatable}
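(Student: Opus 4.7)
The plan is to argue by contrapositive: I will show that if $\mathrm{Am}(o) \refines \OPT$ then $o = o_{\tfunc,\gamma}^\star$. By Theorem~\ref{thm:ambiguity-optimal} we have $\OPT = \mathrm{Am}(o_{\tfunc,\gamma}^\star)$, so the hypothesis says that $o_{\tfunc,\gamma}^\star$ factors through $o$: there is some map $g$ on $\mathrm{Im}(o)$ such that $o_{\tfunc,\gamma}^\star(R) = g(o(R))$ for every $R$. Since $o(R)(s) \subseteq o_{\tfunc,\gamma}^\star(R)(s)$ by the definition of $\mathcal{O}_{\tfunc,\gamma}$, this gives $v(s) \subseteq g(v)(s)$ pointwise for every $v \in \mathrm{Im}(o)$. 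It therefore suffices to show $g(v) = v$ for every such $v$, since then $o(R) = g(o(R)) = o_{\tfunc,\gamma}^\star(R)$.

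The key auxiliary ingredient is a realisability lemma: for every $A : \States \to \mathcal{P}(\Actions) \setminus \{\varnothing\}$ there exists $R \in \R$ with $o_{\tfunc,\gamma}^\star(R) = A$. I would prove this by an explicit shaped-indicator construction: pick any $V : \States \to \mathbb{R}$ and set $R(s,a,s')$ equal to $V(s) - \gamma V(s')$ if $a \in A(s)$ and $V(s) - \gamma V(s') - 1$ otherwise. A direct check against the Bellman optimality equation shows that $V^\star = V$ is a self-consistent fixed point, so $Q^\star(s,a)$ equals $V(s)$ exactly when $a \in A(s)$ and $V(s) - 1$ otherwise, giving argmax $A(s)$ at every state. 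Because the shaping term cancels in expectation under $\tfunc$, the construction works uniformly over all transition functions.

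With realisability in hand, I would pick a minimal counterexample. Since $\mathrm{Im}(o)$ is finite, if any $v$ satisfies $v \subsetneq g(v)$ pointwise then there exists one, call it $v^\star$, that is minimal under pointwise containment among such $v$. Apply the lemma to obtain $R^{\star\star}$ with $o_{\tfunc,\gamma}^\star(R^{\star\star}) = v^\star$, and set $v^{\star\star} = o(R^{\star\star})$; then $g(v^{\star\star}) = o_{\tfunc,\gamma}^\star(R^{\star\star}) = v^\star$, and pointwise $v^{\star\star} \subseteq g(v^{\star\star}) = v^\star$. If $v^{\star\star} = v^\star$ then $g(v^\star) = g(v^{\star\star}) = v^\star$, contradicting $v^\star \subsetneq g(v^\star)$. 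If instead $v^{\star\star} \subsetneq v^\star$, then $v^{\star\star} \subsetneq g(v^{\star\star})$, so $v^{\star\star}$ is another counterexample strictly below $v^\star$, contradicting the minimality of $v^\star$. Either branch yields a contradiction, so no such counterexample $v$ exists and $g$ is the identity on $\mathrm{Im}(o)$.

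The main obstacle is the realisability lemma: without the ability to exhibit a reward whose optimal-action set at each state is any prescribed non-empty subset, the descent above cannot even start, and it is not obvious a priori that this works for arbitrary transition functions. The shaped-indicator construction resolves it in one line, after which the remaining argument is a single-step contradiction extracted from the minimal element of the finite set $\mathrm{Im}(o)$.
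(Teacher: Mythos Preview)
Your argument is correct and follows essentially the same approach as the paper's proof: both rely on the realisability of any prescribed optimal-action map $A:\States\to\mathcal{P}(\Actions)\setminus\{\varnothing\}$ together with a finite descent using the pointwise containment $o(R)\subseteq o_{\tfunc,\gamma}^\star(R)$. The differences are cosmetic. You work contrapositively and package the hypothesis as a factorisation $o_{\tfunc,\gamma}^\star = g\circ o$, then argue via a minimal element of $\mathrm{Im}(o)$; the paper instead assumes $o\neq o_{\tfunc,\gamma}^\star$ directly and builds an explicit strictly descending chain $o(R_1)\supset o(R_2)\supset\cdots$ until it hits two rewards with $o(R_n)=o(R_{n-1})$ but different $\OPT$-classes. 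Your version is arguably tidier: you supply an explicit shaped-indicator construction for the realisability lemma (which the paper merely asserts), and you avoid the opening pigeonhole remark in the paper's proof, which is not actually needed once the descent is in place.
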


As described in Section~\ref{sec:frameworks_ambiguity}, we can use the invariances of different reward objects to place them in a lattice structure, which graphically explains the relationship between their respective ambiguity and ambiguity tolerance --- see Figure~\ref{fig:ambiguity_lattice}. Other data sources can be placed in the same graph, using similar techniques to what we have used in this section.

\begin{figure}
\centering
\begin{tikzpicture}[shorten >=1pt,node distance=2.6cm,on grid,auto]
   \node[] (R)   {$R$}; 
   \node[] (Q) [below=of R, yshift=1cm] {$Q^\star$, $Q^\pi$, $\QSoft$};
   \node[] (A) [below=of Q, yshift=1cm] {$b_{\tfunc,\gamma,\beta}, c_{\tfunc,\gamma,\alpha}, A^\star, A^\pi$};
   \node[] (ORD) [below=of A, yshift=1cm] {$\ORD$};
   \node[] (OPT) [below=of ORD, yshift=1cm] {$\OPT$, $o^\star_{\tfunc,\gamma}$};
   \node[] (R_text)  [right=of R, xshift=1.5cm] {id};
   \node[] (Q_text)  [below=of R_text, yshift=1cm] {$\SR$}; 
   \node[] (A_text)  [below=of Q_text, yshift=1cm] {$\SR \bigodot \PS$}; 
   \node[] (ORD_text)  [below=of A_text, yshift=1cm] {$\SR \bigodot \PS \bigodot \LS$}; 
   \node[] (OPT_text)  [below=of ORD_text, yshift=1cm] {$\OP$}; 
    \path[->] 
    (R) edge [swap] node {} (Q)
    (Q) edge [swap] node {} (A)
    (A) edge [swap] node {} (ORD)
    (ORD) edge [swap] node {} (OPT)
    ;
\end{tikzpicture}
\caption{This figure summarises our results from Section~\ref{sec:partial_identifiability} (also incorporating some results from \ref{appendix:reward_transformation_properties}). On the left-hand side, we list several reward objects and equivalence relations on $\R$. We write $f \to g$ if $\mathrm{Am}(f) \refines \mathrm{Am}(g)$. Since ambiguity refinement is transitive and antisymmetric, this lets us place all reward objects in a lattice structure. Using this structure, we can read out several important relationships graphically: if $f \to g$, then a data source that is based on $g$ is at least as ambiguous as a data source based on $f$, the information contained in a data source based on $f$ is sufficient to derive the value of $g$ as an application, and it is in principle possible to compute $g$ based on $f$. Note that the lattice structure in this case forms a linear order --- this is a special property of the reward objects and equivalence relations we have studied, and does not hold in general. On the right-hand side of the figure we list the reward transformations that characterise the ambiguity of the reward objects to the left.}
\label{fig:ambiguity_lattice}
\end{figure}
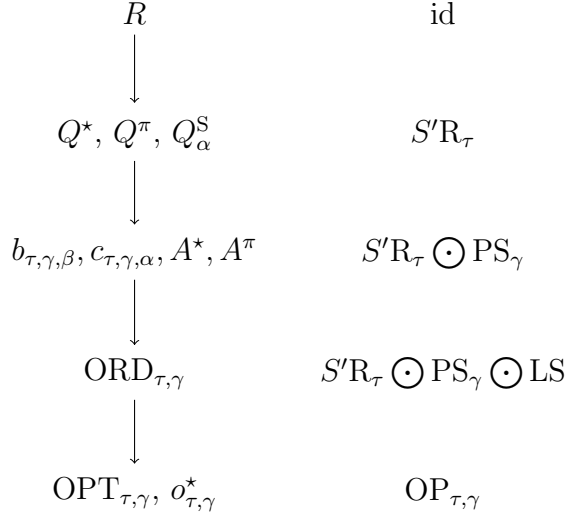

\subsection{Transfer Learning}\label{sec:ambiguity_transfer_learning}

It is interesting to consider the setting where a reward function is learnt in one MDP, but used in a different MDP.
For example, a quite natural setup is when we may learn the reward under one transition function $\tfunc_1$, but wish to use it under another transition function $\tfunc_2$. Alternatively, the observed agent may discount using one discount factor $\gamma_1$, but we wish to use the reward with a different discount factor $\gamma_2$. In this section, we will demonstrate that it is impossible to guarantee robust transfer in this setting. Our first result shows that a wide class of behavioural models are too ambiguous to ensure that the learnt reward has the same optimal policies as the true reward under transition dynamics that are different from those of the training environment:

\begin{restatable}[]{theorem}{wrongtautooambiguous}
\label{thm:wrong_tau_too_ambiguous}
If $f_{\tfunc_1}$ is invariant to $S'$-redistribution with $\tfunc_1$, and $\tfunc_1 \neq \tfunc_2$, then we have that $\mathrm{Am}(f_{\tfunc_1}) \not\refines \mathrm{OPT}_{\tfunc_2,\gamma}$.
\end{restatable}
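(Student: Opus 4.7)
The plan is to produce an explicit counterexample witnessing $\mathrm{Am}(f_{\tfunc_1}) \not\refines \mathrm{OPT}_{\tfunc_2,\gamma}$: a pair of reward functions $R_1, R_2 \in \R$ that differ only by an $S'$-redistribution with $\tfunc_1$, so that $f_{\tfunc_1}(R_1) = f_{\tfunc_1}(R_2)$ by the invariance hypothesis, yet whose sets of optimal policies under $\tfunc_2$ and $\gamma$ differ, so that $R_1 \not\eq{\mathrm{OPT}_{\tfunc_2,\gamma}} R_2$. (I tacitly assume $|\Actions| \geq 2$, since otherwise $\mathrm{OPT}_{\tfunc_2,\gamma}$ is the trivial partition and the stated conclusion would be vacuously false.)

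First I would use $\tfunc_1 \neq \tfunc_2$ to fix a state-action pair $(s_0, a_1)$ at which $\tfunc_1(s_0, a_1, \cdot) \neq \tfunc_2(s_0, a_1, \cdot)$. Since both are probability distributions on $\States$, their difference is nonzero with coordinate sum zero, so there exist distinct $s_1', s_2' \in \States$ with $\tfunc_1(s_0, a_1, s_1') > \tfunc_2(s_0, a_1, s_1')$ and $\tfunc_1(s_0, a_1, s_2') < \tfunc_2(s_0, a_1, s_2')$; in particular $\tfunc_1(s_0, a_1, s_1') > 0$ and $\tfunc_2(s_0, a_1, s_2') > 0$. Taking $R_1 \equiv 0$ and letting $R_2$ agree with $R_1$ everywhere except
\[
R_2(s_0, a_1, s_1') = \tfunc_1(s_0, a_1, s_2'), \qquad R_2(s_0, a_1, s_2') = -\tfunc_1(s_0, a_1, s_1'),
\]
a direct calculation yields $\mathbb{E}_{S' \sim \tfunc_1(s_0, a_1)}[R_2(s_0, a_1, S')] = 0$, so $R_1$ and $R_2$ indeed differ only by $S'$-redistribution with $\tfunc_1$; meanwhile, the scalar $\mu := \mathbb{E}_{S' \sim \tfunc_2(s_0, a_1)}[R_2(s_0, a_1, S')] = \tfunc_2(s_0, a_1, s_1') \tfunc_1(s_0, a_1, s_2') - \tfunc_2(s_0, a_1, s_2') \tfunc_1(s_0, a_1, s_1')$ is strictly negative by the chosen inequalities.

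It then remains to show that $R_1$ and $R_2$ induce different optimal policies under $\tfunc_2$. All policies are optimal under $R_1 \equiv 0$, so it suffices to show that $a_1$ is strictly suboptimal at $s_0$ under $R_2$. Because $R_2$ is supported only on transitions of the form $(s_0, a_1, \cdot)$, for any policy $\pi$ I would obtain
\[
V^\pi_2(s_0) \;=\; \mu \sum_{t=0}^\infty \gamma^t \, \mathbb{P}_\pi\!\left(S_t = s_0,\; A_t = a_1 \,\middle|\, S_0 = s_0\right),
\]
by interchanging expectation and summation and noting that every transition other than $(s_0, a_1, \cdot)$ contributes zero reward. Since $\mu < 0$, any policy with $\pi(a_1 \mid s_0) > 0$ satisfies $V^\pi_2(s_0) < 0$ (the $t=0$ term alone is strictly positive), whereas any policy that takes some $a_2 \neq a_1$ at $s_0$ attains $V^\pi_2(s_0) = 0$. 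Hence $V^\star_2(s_0) = 0$ and the optimal actions at $s_0$ under $R_2$ are exactly $\Actions \setminus \{a_1\}$, strictly smaller than $\Actions$ — the set of optimal actions at $s_0$ under $R_1$ — so the two sets of optimal policies are distinct.

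The main obstacle is this last step: changing a reward in a controlled way and guaranteeing that the change actually shifts the set of optimal policies, not merely some $Q$-values, in an MDP whose transition function $\tfunc_2$ we do not control. Concentrating the entire perturbation at the single pair $(s_0, a_1)$ and using the baseline $R_1 \equiv 0$ is precisely what makes the Bellman recursion for $V^\pi_2$ collapse into the clean closed form above, forcing $V^\star_2 \equiv 0$ and letting us read off the optimal actions at $s_0$; without these two simplifications the argument would need to track a much more delicate interaction between the perturbation and the induced value function.
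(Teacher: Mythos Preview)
Your proof is correct and takes a more elementary, fully explicit route than the paper. The paper first establishes a general existence lemma (Lemma~\ref{lemma:s-redistribution_and_transfer}): because the distribution vectors $\tfunc_1(s,a,\cdot)$ and $\tfunc_2(s,a,\cdot)$ are linearly independent whenever they differ, one can always find an $R_2$ agreeing with any prescribed $R_1$ in $\tfunc_1$-expectation while matching an \emph{arbitrary} target for its $\tfunc_2$-expectation at that pair. The paper then invokes this freedom to flip the $\mathrm{argmax}$ of $A^\star$ at $(s,a)$ starting from an arbitrary $R_1$. You instead fix $R_1 \equiv 0$ and write down the perturbation by hand, so that the Bellman recursion collapses to $V^\star_2 \equiv 0$ and the optimal-action set at $s_0$ can be read off directly from $Q^\star_2(s_0,\cdot) = \mathbb{E}_{\tfunc_2}[R_2(s_0,\cdot,S')]$. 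What you gain is self-containment: no auxiliary lemma, no appeal to linear independence, and no implicit step from ``control the expected reward'' to ``control the optimal advantage''. What the paper's route buys is generality in $R_1$, which it immediately reuses to prove the stronger quantitative statement (Theorem~\ref{thm:wrong_tau_large_diameter}) that the \emph{lower} diameter of $\mathrm{Am}(f_{\tfunc_1})$ is $1$---a result that genuinely needs the counterexample to sit in the fibre through every $R^\star$, not just through $R_0$.
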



Recall that if $\mathrm{Am}(f) \not\refines \OPT$ then $\mathrm{Am}(f) \not\refines \ORD$.
Also recall that $b_{\tfunc,\gamma,\beta}$, $c_{\tfunc,\gamma,\alpha}$, and $o^\star_{\tfunc,\gamma}$ all are invaraint to $S'$-redistribution, and hence subject to Theorem~\ref{thm:wrong_tau_too_ambiguous}.
We can also extend this result to a stronger statement, expressed in terms of Definition~\ref{def:ambiguity_diameter}. Recall that $\starc$ is the STARC metric described in Definition~\ref{def:standard_starc}:

\begin{restatable}[]{theorem}{wrongtaulargediameter}
\label{thm:wrong_tau_large_diameter}
    If $f_{\tfunc_1}$ is invariant to $S'$-redistribution with $\tfunc_1$, and $\tfunc_1 \neq \tfunc_2$, then the lower and upper diameter of $\mathrm{Am}(f_{\tfunc_1})$ under $d^\mathrm{STARC}_{\tfunc_2, \gamma}$ is 1.
\end{restatable}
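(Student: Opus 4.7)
My plan is to show that every equivalence class of $\mathrm{Am}(f_{\tfunc_1})$ has diameter exactly $1$ under $\starc$. Since $\starc$ is bounded above by $1$ (by Definition~\ref{def:standard_starc}), no diameter can exceed $1$, so it suffices to exhibit, for each $R^\star \in \R$, two sequences of reward functions lying in the equivalence class of $R^\star$ under $\mathrm{Am}(f_{\tfunc_1})$ whose mutual $\starc$-distance tends to $1$. This would simultaneously pin the lower and upper diameter at $1$.

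The key object is a perturbation $\delta$ that is an $S'$-redistribution with $\tfunc_1$ (so that $R^\star$ and $R^\star + k\delta$ lie in the same equivalence class for any $k$, since $f_{\tfunc_1}$ is invariant to $\SR$), but whose canonicalisation $c^{\mathrm{STARC}}_{\tfunc_2, \gamma}(\delta)$ is nonzero. I would construct $\delta$ as follows. Since $\tfunc_1 \neq \tfunc_2$, pick $(s_0, a_0)$ with $\tfunc_1(\cdot \mid s_0, a_0) \neq \tfunc_2(\cdot \mid s_0, a_0)$. Two distinct probability distributions on $\States$ are linearly independent as vectors in $\mathbb{R}^{\States}$ (if $p = cq$ with $p, q$ probability vectors, summing gives $c=1$), so there exists $v \in \mathbb{R}^{\States}$ with $\langle v, \tfunc_1(\cdot \mid s_0, a_0)\rangle = 0$ and $\langle v, \tfunc_2(\cdot \mid s_0, a_0)\rangle \neq 0$. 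Define $\delta(s_0, a_0, s') = v(s')$ and $\delta(s,a,s') = 0$ otherwise; by construction $\delta$ is an $S'$-redistribution with $\tfunc_1$. To see $c^{\mathrm{STARC}}_{\tfunc_2, \gamma}(\delta) \neq 0$, I would use that the kernel of this linear canonicalisation is precisely the subspace spanned by $S'$-redistribution with $\tfunc_2$ together with potential shaping with $\gamma$, and that any reward in this subspace yields a policy-independent $\Evaluation$ under $\tfunc_2$ (the first component contributes $0$ at each $(s,a)$, the second contributes the telescoping constant $-\mathbb{E}_{\init}[\Phi]$). Computing directly, $\Evaluation_\delta(\pi) = \eta^\pi(s_0, a_0) \cdot \langle v, \tfunc_2(\cdot \mid s_0, a_0)\rangle$, and this varies with $\pi$ because $\eta^\pi(s_0, a_0)$ is not constant across policies under the paper's reachability assumption; hence $\delta$ lies outside the kernel.

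With $\delta$ in hand, for any $R^\star$ set $R_k^{\pm} = R^\star \pm k\delta$ for $k > 0$. Both lie in the equivalence class of $R^\star$. By linearity of $c^{\mathrm{STARC}}_{\tfunc_2, \gamma}$,
\[
c^{\mathrm{STARC}}_{\tfunc_2, \gamma}(R_k^{\pm}) = c^{\mathrm{STARC}}_{\tfunc_2, \gamma}(R^\star) \pm k \cdot c^{\mathrm{STARC}}_{\tfunc_2, \gamma}(\delta),
\]
and as $k \to \infty$ the normalised versions satisfy $s^{\mathrm{STARC}}_{\tfunc_2, \gamma}(R_k^+) \to w$ and $s^{\mathrm{STARC}}_{\tfunc_2, \gamma}(R_k^-) \to -w$, where $w = c^{\mathrm{STARC}}_{\tfunc_2, \gamma}(\delta)/L_2(c^{\mathrm{STARC}}_{\tfunc_2, \gamma}(\delta))$ is a unit vector. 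Consequently $\starc(R_k^+, R_k^-) \to \tfrac{1}{2}\|w - (-w)\|_2 = 1$, so the diameter of $R^\star$'s class equals $1$. As this holds for every $R^\star$, both the lower and upper diameter under $\starc$ equal $1$.

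The hard part will be Step 2, i.e. ensuring $c^{\mathrm{STARC}}_{\tfunc_2, \gamma}(\delta) \neq 0$. Showing only that $\delta$ differs nontrivially from an $S'$-redistribution with $\tfunc_2$ is insufficient, because potential shaping can in principle absorb such differences; the clean resolution is to characterise the full kernel of the canonicalisation via policy-constancy of $\Evaluation$, and to exploit the fact that a perturbation supported at a single $(s_0, a_0)$ where $\tfunc_1$ and $\tfunc_2$ disagree produces an occupancy-dependent evaluation. The other steps are straightforward linear algebra and a clean limit computation that only requires $c^{\mathrm{STARC}}_{\tfunc_2, \gamma}(\delta) \neq 0$, with no restriction on the relative direction of $c^{\mathrm{STARC}}_{\tfunc_2, \gamma}(R^\star)$, which is why the antipodal pair $R_k^{\pm}$ works uniformly across $R^\star$.
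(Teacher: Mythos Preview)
Your proposal is correct and follows essentially the same approach as the paper: construct a reward $\delta$ that is an $S'$-redistribution with $\tfunc_1$ but non-trivial under $\tfunc_2$ and $\gamma$, then observe that the line $\{R^\star + \alpha\delta : \alpha \in \mathbb{R}\}$ lies in a single equivalence class of $\mathrm{Am}(f_{\tfunc_1})$ and, after canonicalisation and normalisation under $d^{\mathrm{STARC}}_{\tfunc_2,\gamma}$, limits to antipodal unit vectors. The paper packages this into Lemma~\ref{lemma:SR_additive_invariance} (existence of such a $\delta$, via Lemma~\ref{lemma:s-redistribution_and_transfer}) and Lemma~\ref{lemma:non_trivial_additive_big_diameter} (the line-to-antipodes diameter argument), whereas you build $\delta$ more concretely at a single $(s_0,a_0)$ and verify non-triviality via the occupancy-measure computation of $\Evaluation_\delta$; the content is the same.
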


Note that $1$ is the maximal distance that is possible under $\starc$. This result may be somewhat surprising, as if $\tfunc_1 \approx \tfunc_2$, then one might expect that a reward function that is learnt under $\tfunc_1$ should be guaranteed to be mostly accurate under $\tfunc_2$. We provide an intuitive explanation for why this is not the case in \ref{appendix:understand_transfer_learning}. Note also that Theorem~\ref{thm:wrong_tau_large_diameter} applies 
as long as
$\tau_1 \neq \tau_2$ \emph{in any state}, hence it is not required that $\tau_1 \neq \tau_2$ in \emph{every} state. 

We will next show that any behavioural model that is invariant to potential shaping is unable to guarantee transfer learning to a different discount factor $\gamma$. We say that $\tfunc$ is \emph{trivial} if for each $s \in \States$, $\tfunc(s,a) = \tfunc(s,a')$ for all $a,a' \in \Actions$. We can now state the following result:

\begin{restatable}[]{theorem}{wronggammatooambiguous}
\label{thm:wrong_gamma_too_ambiguous}
If $f_{\gamma_1}$ is invariant to potential shaping with $\gamma_1$, $\gamma_1 \neq \gamma_2$, and $\tfunc$ is non-trivial, then we have that $\mathrm{Am}(f_{\gamma_1}) \not\refines \mathrm{OPT}_{\tfunc,\gamma_2}$.
\end{restatable}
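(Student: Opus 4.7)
The plan is to exhibit, for any non-trivial $\tfunc$ and any $\gamma_1 \neq \gamma_2$, a pair of reward functions $R_1, R_2$ with $f_{\gamma_1}(R_1) = f_{\gamma_1}(R_2)$ but differing optimal policies under $\tfunc, \gamma_2$. I take $R_1 = R_0$, for which every policy is optimal regardless of $\gamma$, and $R_2(s,a,s') = \gamma_1\Phi(s') - \Phi(s)$ for a potential $\Phi : \States \to \mathbb{R}$ to be chosen. Since $R_2$ is $R_1$ shaped with $\gamma_1$, the assumed invariance of $f_{\gamma_1}$ to potential shaping with $\gamma_1$ gives $f_{\gamma_1}(R_1) = f_{\gamma_1}(R_2)$. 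It therefore suffices to pick $\Phi$ so that $R_2$ is \emph{non-trivial} under $\tfunc, \gamma_2$, since $R_1 = R_0$ makes every policy optimal and a non-trivial $R_2$ does not (having all policies optimal is equivalent to triviality, as $A^\star \equiv 0$ forces $V^\pi = V^\star$ for every $\pi$).

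Next, observe that $R_2$ and $R_3(s,a,s') := (\gamma_1 - \gamma_2)\Phi(s')$ differ by potential shaping with $\gamma_2$ using potential $-\Phi$; hence by Theorem~\ref{thm:policy_ordering} we have $R_2 \eq{\mathrm{ORD}_{\tfunc,\gamma_2}} R_3$, so $R_2$ is trivial under $\gamma_2$ iff $R_3$ is. Because $\gamma_1 - \gamma_2 \neq 0$ and triviality is invariant under non-zero scalar multiplication, the problem reduces to finding some $\Phi$ such that the state-based reward $R(s,a,s') := \Phi(s')$ is non-trivial under $\tfunc, \gamma_2$. Applying Theorem~\ref{thm:policy_ordering} to $R_0$ (using that $\LS$ fixes $R_0$), such an $R$ is trivial iff it can be written as $r(s,a,s') + \gamma_2 \Psi(s') - \Psi(s)$ for some $\Psi : \States \to \mathbb{R}$ and some $r$ with $\Expect{S' \sim \tfunc(s,a)}{r(s,a,S')} = 0$ for all $(s,a)$.

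The last step is a linear-algebraic dimension count. Taking $\tfunc(s,a)$-expectations of the identity $\Phi(s') = r(s,a,s') + \gamma_2\Psi(s') - \Psi(s)$ gives $\Expect{S' \sim \tfunc(s,a)}{(\Phi - \gamma_2\Psi)(S')} = -\Psi(s)$ for every $s,a$, so $F := \Phi - \gamma_2\Psi$ must lie in the subspace
$$
W := \left\{F \in \mathbb{R}^{|\States|} : \Expect{S' \sim \tfunc(s,a_1)}{F(S')} = \Expect{S' \sim \tfunc(s,a_2)}{F(S')} \text{ for all } s, a_1, a_2\right\},
$$
and moreover $\Phi = (I - \gamma_2 P)F$, where $(PF)(s) := \Expect{S' \sim \tfunc(s,a)}{F(S')}$ is well-defined on $W$. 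Non-triviality of $\tfunc$ makes $W$ a \emph{proper} subspace of $\mathbb{R}^{|\States|}$, and a standard contraction argument using $\gamma_2 < 1$ shows $I - \gamma_2 P$ is injective on $W$. Hence the set of trivial $\Phi$ is a proper subspace of $\mathbb{R}^{|\States|}$, and any $\Phi$ outside it yields the required $R_2$. The main obstacle I anticipate is the bookkeeping: pinning down the trivial-reward characterisation starting from $R_0$ and verifying injectivity of $I - \gamma_2 P$ on $W$; once those are in place, the dimension count delivers the counterexample.
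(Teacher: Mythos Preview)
Your argument is correct. The overall architecture is the same as the paper's---take $R_1 = R_0$, set $R_2(s,a,s') = \gamma_1\Phi(s') - \Phi(s)$, and show that $R_2$ is non-trivial under $\tfunc,\gamma_2$ for a suitable $\Phi$---but the two proofs diverge at the last step. The paper introduces the notion of a \emph{controllable} state (one whose discounted visitation frequency under $\gamma_2$ can be influenced by the policy), proves via a Bellman-inversion argument that any non-trivial $\tfunc$ admits such a state, sets $\Phi$ to the indicator of that state, and then computes $J_2(\pi)$ explicitly as $X n^\pi(\gamma_1-\gamma_2) - pX$ to read off non-triviality directly. You instead reduce (via the $\gamma_2$-shaping $R_2 \mapsto R_3 = (\gamma_1-\gamma_2)\Phi(s')$) to showing that some state-based reward $\Phi(s')$ is non-trivial, characterise the trivial ones through Theorem~\ref{thm:policy_ordering} as the image $(I-\gamma_2 P)W$ of the ``action-independent expectation'' subspace $W$, and conclude by a dimension count since non-trivial $\tfunc$ makes $W$ proper. (Minor note: the shaping potential taking $R_3$ to $R_2$ is $\Phi$, not $-\Phi$; and injectivity of $I-\gamma_2 P$ on $W$ is not actually needed for the dimension bound, though it is true.)

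Your route is arguably cleaner and avoids the explicit value computation. The paper's route, on the other hand, yields an explicit formula for $J_2(\pi)$ that is immediately reused in the proofs of Lemma~\ref{lemma:PS_additive_invariance} and Theorem~\ref{thm:wrong_gamma_large_diameter} (the quantitative diameter statement), so the extra work there is not wasted.
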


Note that if $\tfunc$ is trivial, then there can never be any situations where the agent has to decide between obtaining a smaller reward sooner or a greater reward later, which means that the discount factor has no impact on which policies are optimal.
This requirement is therefore necessary, although it is very mild.
We can also extend this result to a stronger statement, expressed in terms of Definition~\ref{def:ambiguity_diameter}: 

\begin{restatable}[]{theorem}{wronggammalargediameter}
\label{thm:wrong_gamma_large_diameter}
    If $f_{\gamma_1}$ is invariant to potential shaping with $\gamma_1$, $\gamma_1 \neq \gamma_2$, and $\tfunc$ is non-trivial, then the lower and upper diameter of $\mathrm{Am}(f_{\gamma_1})$ under $d^\mathrm{STARC}_{\tfunc, \gamma_2}$ is 1.
\end{restatable}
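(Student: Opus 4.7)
The plan is to prove that for every reward $R \in \R$, the equivalence class $[R] \in \mathrm{Am}(f_{\gamma_1})$ has $\starc_{\tfunc, \gamma_2}$-diameter equal to 1. Since $\starc \leq 1$ always, this pins both the lower and upper diameter of $\mathrm{Am}(f_{\gamma_1})$ at exactly 1.

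By the assumed invariance of $f_{\gamma_1}$ under $\gamma_1$-potential shaping, for any potential $\Phi : \States \to \mathbb{R}$ and any $k \in \mathbb{R}$, the rewards $R_k^\pm := R \pm k X_\Phi$, where $X_\Phi(s,a,s') := \gamma_1 \Phi(s') - \Phi(s)$, both lie in $[R]$. I would compute $\starc_{\tfunc, \gamma_2}(R_k^+, R_k^-)$ as $k \to \infty$. Using linearity of the STARC canonicalisation function $c^{\mathrm{STARC}}_{\tfunc, \gamma_2}$ (Definition~\ref{def:canonicalisation_function}) gives $c^{\mathrm{STARC}}_{\tfunc, \gamma_2}(R_k^\pm) = c^{\mathrm{STARC}}_{\tfunc, \gamma_2}(R) \pm k \cdot c^{\mathrm{STARC}}_{\tfunc, \gamma_2}(X_\Phi)$. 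Provided that $c^{\mathrm{STARC}}_{\tfunc, \gamma_2}(X_\Phi) \neq 0$, dividing by $L_2$-norms and letting $k \to \infty$ yields $s^{\mathrm{STARC}}_{\tfunc, \gamma_2}(R_k^\pm) \to \pm c^{\mathrm{STARC}}_{\tfunc, \gamma_2}(X_\Phi)/L_2(c^{\mathrm{STARC}}_{\tfunc, \gamma_2}(X_\Phi))$, which are diametrically opposite unit vectors. Therefore $\starc_{\tfunc, \gamma_2}(R_k^+, R_k^-) \to 0.5 \cdot 2 = 1$, so for any $\epsilon > 0$ one can choose $k$ so that this pair in $[R]$ has STARC distance $\geq 1 - \epsilon$, giving $\mathrm{diam}([R]) = 1$.

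The main obstacle is the key existence lemma: for non-trivial $\tfunc$ and $\gamma_1 \neq \gamma_2$, there is a potential $\Phi$ with $c^{\mathrm{STARC}}_{\tfunc, \gamma_2}(X_\Phi) \neq 0$. By Definition~\ref{def:canonicalisation_function}, this reduces to showing that not every $X_\Phi$ lies in the subspace $W$ of rewards generated by $\gamma_2$-potential-shaping plus $S'$-redistribution with $\tfunc$. Taking $\mathbb{E}_{S' \sim \tfunc(s,a)}[\cdot]$ quotients out $S'$-redistribution, reducing the condition to $B_{\gamma_1}\Phi \in \mathrm{Im}(B_{\gamma_2})$, where $B_\gamma\Phi(s,a) := \gamma \mathbb{E}_{\tfunc(s,a)}[\Phi(S')] - \Phi(s)$. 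I would argue $\mathrm{Im}(B_{\gamma_1}) \not\subseteq \mathrm{Im}(B_{\gamma_2})$ by contradiction: if inclusion held, injectivity of $B_{\gamma_2}$ (valid since $\gamma_2 \in (0,1)$ and all states are reachable) would yield a linear map $T : \mathbb{R}^{|\States|} \to \mathbb{R}^{|\States|}$ with $B_{\gamma_1} = B_{\gamma_2} \circ T$. Reading off the equation action-by-action, and using invertibility of $\gamma_2 P_a - I$ (where $P_a$ is the transition matrix under action $a$), forces $T = (\gamma_2 P_a - I)^{-1}(\gamma_1 P_a - I)$; requiring the same $T$ for every pair $a, a'$ simplifies algebraically (using that polynomials in a single $P_a$ commute) to $(\gamma_1 - \gamma_2)(P_a - P_{a'}) = 0$, which fails for $\gamma_1 \neq \gamma_2$ whenever some pair $(a, a')$ satisfies $P_a \neq P_{a'}$, i.e.\ precisely when $\tfunc$ is non-trivial.

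Combining the construction with the lemma yields $\mathrm{diam}([R]) = 1$ for every $R$, and therefore both the lower and upper diameter of $\mathrm{Am}(f_{\gamma_1})$ under $\starc_{\tfunc, \gamma_2}$ equal 1.
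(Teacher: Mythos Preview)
Your proof is correct and structurally mirrors the paper: both arguments show that for every $R$ the class $[R]$ contains a line $\{R+\alpha X_\Phi:\alpha\in\mathbb{R}\}$ along a $\gamma_1$-shaping direction whose image under $c^{\mathrm{STARC}}_{\tfunc,\gamma_2}$ is a genuine line (because $X_\Phi$ is non-trivial under $(\tfunc,\gamma_2)$), and then use the same elementary geometry---a line through or near the origin projects onto the unit sphere to (near-)antipodal points---to conclude the STARC diameter is $1$. In the paper this geometric step is packaged as Lemma~\ref{lemma:non_trivial_additive_big_diameter}; you have simply inlined it.

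The genuine difference lies in how you establish the key existence fact that some $X_\Phi$ is non-trivial under $(\tfunc,\gamma_2)$. The paper does this constructively: Lemma~\ref{lemma:tau_to_control} shows that a non-trivial $\tfunc$ admits a \emph{controllable} state $s_1$, and taking $\Phi=\mathbf{1}_{s_1}$ one computes $J_{\gamma_2}(\pi)$ explicitly and sees it is non-constant (the core of Theorem~\ref{thm:wrong_gamma_too_ambiguous}). Your route is purely linear-algebraic: you reduce to $\mathrm{Im}(B_{\gamma_1})\not\subseteq\mathrm{Im}(B_{\gamma_2})$, use injectivity of each $B_\gamma$ (valid since $I-\gamma P_a$ is invertible for $\gamma\in(0,1)$), and from the resulting per-action identity $(\gamma_2 P_a-I)^{-1}(\gamma_1 P_a-I)=T$ (independent of $a$) deduce $(\gamma_1-\gamma_2)(P_a-P_{a'})=0$. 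This last step is valid because $(\alpha P_a-I)$ and $(\beta P_a-I)^{-1}$ commute (both are power series in $P_a$), so the cross-multiplication goes through cleanly. Your argument is more elegant and avoids the auxiliary controllable-state machinery; the paper's is more concrete and yields an explicit witness $\Phi$. Both are complete.
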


Again, recall that $1$ is the maximal distance that is possible under $\starc$. This result may also be surprising; if $\gamma_1 \approx \gamma_2$, then one might expect that a reward function that is learnt under $\gamma_1$ should be guaranteed to be mostly accurate under $\gamma_2$. We provide an intuitive explanation of this result in \ref{appendix:understand_transfer_learning}. Also recall that $b_{\tfunc,\gamma,\beta}$, $c_{\tfunc,\gamma,\alpha}$, and $o^\star_{\tfunc,\gamma}$ all are invaraint to potential shaping, and hence subject to Theorems~\ref{thm:wrong_gamma_too_ambiguous} and \ref{thm:wrong_gamma_large_diameter}.    
\section{Misspecification With Equivalence Relations}\label{sec:misspecification_1}

In this section, we present our results about how robust IRL is to misspecified behavioural models, using the formalisation provided by Definition~\ref{def:misspecification_eq}. First, we will derive necessary and sufficient conditions that describe all forms of misspecification that are tolerated by the Boltzmann-rational model, the MCE model, and the optimality model. In so doing, we will also define some broader equivalence classes of behavioural models that are internally robust to misspecification, and which can thus include more behavioural models than the standard three we are familiar with. After this, we will discuss how to generalise some of our results to even wider classes of behavioural models, and show that some of our results should be expected to apply with some universality, to be made more precise. After this, we will discuss the case where the environment model is misspecified, as well as the important issue of transfer learning. Most of our results in this section are expressed in terms of the two equivalence relations $\ORD$ and $\OPT$ on $\R$, which were introduced in Section~\ref{sec:comparing_reward_functions}.

\subsection{Necessary and Sufficient Conditions}\label{sec:misspecification_1_policies}

In this section, we will present necessary and sufficient conditions that describe all forms of misspecification that are tolerated by the Boltzmann-rational model, the MCE model, and the optimality model.

Let $\Pi^+$ be the set of all policies such that $\pi(a \mid s) > 0$ for all $s,a$, and let $F_{\tfunc,\gamma}$ be the set of all functions $f_{\tfunc,\gamma} : \mathcal{R} \rightarrow \Pi^+$ 
that, given $R$, return a policy $\pi$ that satisfies
\begin{equation*}
    \mathrm{argmax}_{a \in \Actions} \pi(a \mid s) = \mathrm{argmax}_{a \in \Actions} \QStar(s,a), 
\end{equation*}
where $\QStar$ is the optimal $Q$-function for $R$ under $\tfunc$ and $\gamma$.
In other words, $F_{\tfunc,\gamma}$ is the set of all functions that generate policies which take each action with positive probability, and that take the optimal actions with the highest probability. 
This class is quite large, and includes e.g.\ Boltzmann-rational policies (for any $\beta$), but it does not include optimal policies (since they do not take all actions with positive probability) or MCE policies (since they may take suboptimal actions with higher probability).

\begin{restatable}[]{theorem}{boltzmannweakrobustness}
\label{thm:boltzmann_weak_robustness}
Let $f_{\tfunc,\gamma} \in F_{\tfunc,\gamma}$ be surjective onto $\Pi^+$. Then $f_{\tfunc,\gamma}$ is $\mathrm{OPT}_{\tfunc,\gamma}$-robust to misspecification with $g$ if and only if $g \in F_{\tfunc,\gamma}$ and $g \neq f_{\tfunc,\gamma}$.
\end{restatable}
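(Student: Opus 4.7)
\medskip

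\noindent\textbf{Proof proposal.} The plan is to unpack each of the four conditions of Definition~\ref{def:misspecification_eq} and verify them one by one, in both directions. The central observation is that membership in $F_{\tfunc,\gamma}$ is exactly a statement about the set-valued argmax of the policy, which in turn is exactly what determines the $\mathrm{OPT}_{\tfunc,\gamma}$-class of a reward function. So the proof essentially reduces to tracking argmaxes. Before starting, I would note the key fact that for any $R$ with optimal $Q$-function $Q^\star$, the set of $\OPT$-equivalent reward functions is exactly the set of rewards $R'$ whose optimal $Q^\star{}'$ has the same argmax in every state as $Q^\star$; this is immediate from the definition of $\OPT$ together with the characterisation of optimal policies in Section~\ref{sec:RL_preliminaries}.

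For the ``if'' direction, assume $g \in F_{\tfunc,\gamma}$ and $g \neq f_{\tfunc,\gamma}$. Condition~4 is immediate. For condition~2, note that $\mathrm{Im}(g) \subseteq \Pi^+ = \mathrm{Im}(f_{\tfunc,\gamma})$ by the hypothesised surjectivity. For condition~3, suppose $f_{\tfunc,\gamma}(R_1) = f_{\tfunc,\gamma}(R_2)$; then their pointwise argmaxes over actions agree, and by the definition of $F_{\tfunc,\gamma}$ this gives $\mathrm{argmax}_a Q^\star_1(s,a) = \mathrm{argmax}_a Q^\star_2(s,a)$ for every $s$, hence $R_1 \eq{\OPT} R_2$. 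For condition~1, if $f_{\tfunc,\gamma}(R_1) = g(R_2)$, the same argument with $f_{\tfunc,\gamma} \in F_{\tfunc,\gamma}$ applied to $R_1$ and $g \in F_{\tfunc,\gamma}$ applied to $R_2$ yields identical argmaxes for $Q^\star_1$ and $Q^\star_2$, and hence $R_1 \eq{\OPT} R_2$.

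For the ``only if'' direction, assume $f_{\tfunc,\gamma}$ is $\mathrm{OPT}_{\tfunc,\gamma}$-robust to misspecification with $g$. Condition~4 directly gives $g \neq f_{\tfunc,\gamma}$, so it remains to show $g \in F_{\tfunc,\gamma}$. Fix an arbitrary $R$. By condition~2, $g(R) \in \mathrm{Im}(f_{\tfunc,\gamma}) \subseteq \Pi^+$, so $g(R)$ is fully supported. By surjectivity of $f_{\tfunc,\gamma}$ onto $\Pi^+$, there is some $R'$ with $f_{\tfunc,\gamma}(R') = g(R)$; condition~1 then forces $R' \eq{\OPT} R$, so $R$ and $R'$ share the same set of optimal actions in every state. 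Since $f_{\tfunc,\gamma} \in F_{\tfunc,\gamma}$, we have $\mathrm{argmax}_a g(R)(a\mid s) = \mathrm{argmax}_a f_{\tfunc,\gamma}(R')(a\mid s) = \mathrm{argmax}_a Q^{\star}_{R'}(s,a) = \mathrm{argmax}_a Q^\star_R(s,a)$ for every $s$, which is exactly the condition for $g \in F_{\tfunc,\gamma}$.

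The main obstacle I anticipate is minor rather than substantive: one must be careful that \emph{having the same optimal actions in every state} is genuinely the same as \emph{being $\OPT$-equivalent}, and not accidentally confuse this with the stronger policy-ordering relation $\ORD$. This is justified by the paper's definition of optimal policies as those supported only on $\mathrm{argmax}_a Q^\star(s,\cdot)$, so the set of optimal policies is uniquely determined by the argmax map. Everything else is a direct unpacking of Definition~\ref{def:misspecification_eq} combined with the definition of $F_{\tfunc,\gamma}$ and the use of surjectivity to realise arbitrary targets in $\Pi^+$ as images under $f_{\tfunc,\gamma}$.
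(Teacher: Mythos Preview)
Your proposal is correct and follows essentially the same approach as the paper's proof: both directions hinge on the observation that $\OPT$-equivalence is exactly ``same $\mathrm{argmax}_a Q^\star(s,a)$ in every state,'' so that tracking argmaxes through the definition of $F_{\tfunc,\gamma}$ verifies the four conditions of Definition~\ref{def:misspecification_eq} in the forward direction, and surjectivity plus condition~1 forces the argmax condition on $g$ in the reverse direction. The only cosmetic difference is that the paper isolates condition~3 ($\mathrm{Am}(f_{\tfunc,\gamma}) \refines \OPT$) as a preliminary step before treating the two directions, whereas you fold it into the ``if'' direction; the logic is otherwise identical.
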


Boltzmann-rational policies are surjective onto $\Pi^+$. To see this, note that if a policy $\pi$ takes each action with positive probability, then its action probabilities are always the softmax of some $Q$-function, and any $Q$-function corresponds to some reward function (via Equation~\ref{equation:optimal_Q_recursion}). Therefore, Theorem~\ref{thm:boltzmann_weak_robustness} exactly characterises all forms of misspecification to which the Boltzmann-rational model is $\mathrm{OPT}_{\tfunc,\gamma}$-robust. 

Let us briefly comment on the requirement that $\pi(a \mid s) > 0$, which corresponds to the condition that $\mathrm{Im}(g) \subseteq \mathrm{Im}(f)$ in Definition~\ref{def:misspecification_eq}. If a learning algorithm $\mathcal{L}$ is based on a model $f : \mathcal{R} \rightarrow \Pi^+$ then it assumes that the observed policy takes each action with positive probability in every state. What happens if such an algorithm $\mathcal{L}$ is given data from a policy that takes some action with probability $0$? This depends on $\mathcal{L}$, but for most sensible algorithms the result should simply be that $\mathcal{L}$ assumes that (or acts as if) those actions are taken with a positive but low probability. This means that it should be possible to drop the requirement that $\pi(a \mid s) > 0$ for many reasonable learning algorithms $\mathcal{L}$.

We next consider the misspecification to which the Boltzmann-rational model is $\mathrm{ORD}_{\tfunc,\gamma}$-robust. 
Let $\psi : \mathcal{R} \rightarrow \mathbb{R}^+$ be any function from reward functions to positive real numbers, and let $b_{\tfunc,\gamma,\psi} : \mathcal{R} \rightarrow \Pi^+$ be the function that, given $R$, returns the Boltzmann-rational policy with temperature $\psi(R)$ given transition function $\tfunc$ and discount $\gamma$. Moreover, let $B_{\tfunc,\gamma} = \{b_{\tfunc,\gamma,\psi} : \psi \in \mathcal{R} \rightarrow \mathbb{R}^+\}$ be the set of all such functions $b_{\tfunc,\gamma,\psi}$. 
This set includes Boltzmann-rational policies; just let $\psi$ return a constant $\beta$ for all $R$.

\begin{restatable}[]{theorem}{boltzmannordrobustness}
\label{thm:boltzmann_ord_robustness}
For any $\beta > 0$, $b_{\tfunc,\gamma,\beta}$ is $\mathrm{ORD}_{\tfunc,\gamma}$-robust to misspecification with $g$ if and only if $g \in B_{\tfunc,\gamma}$ and $g \neq b_{\tfunc,\gamma,\beta}$.
\end{restatable}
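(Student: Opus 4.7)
The plan is to derive the theorem by applying Lemma~\ref{lemma:P_robustness_function_composition} with $f = b_{\tfunc,\gamma,\beta}$ and $P = \mathrm{ORD}_{\tfunc,\gamma}$. That lemma is directly applicable once we verify that $\mathrm{Am}(b_{\tfunc,\gamma,\beta}) \refines \mathrm{ORD}_{\tfunc,\gamma}$, which follows immediately from Theorem~\ref{thm:ambiguity-boltzmann-rational} (Boltzmann invariances are $\PS \bigodot \SR$) and Theorem~\ref{thm:policy_ordering} ($\mathrm{ORD}_{\tfunc,\gamma}$ is characterised by $\SR \bigodot \PS \bigodot \LS$), since $\PS \bigodot \SR \subseteq \SR \bigodot \PS \bigodot \LS$. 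Thus the theorem reduces to showing that $g = b_{\tfunc,\gamma,\beta} \circ t$ for some $t$ with $R \eq{\ORD_{\tfunc,\gamma}} t(R)$ for all $R$ if and only if $g \in B_{\tfunc,\gamma}$.

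The key calculation underpinning both directions is that Boltzmann-rational policies transform predictably under positive linear scaling of the reward. Because the optimal value-function Bellman operator commutes with positive scalar multiplication (max pulls through positive scalars), we have $Q^\star_{cR} = c \cdot Q^\star_R$ for every $c > 0$, so
\begin{equation*}
b_{\tfunc,\gamma,\beta}(c \cdot R) \;=\; \softmax\bigl(\beta \cdot c \cdot Q^\star_R\bigr) \;=\; b_{\tfunc,\gamma,c\beta}(R).
\end{equation*}
Combined with the fact that $b_{\tfunc,\gamma,\beta}$ is invariant under $\PS \bigodot \SR$ (Theorem~\ref{thm:ambiguity-boltzmann-rational}), this means that for any $R$ and any $R'$ differing from $R$ by a transformation in $\SR \bigodot \PS \bigodot \LS$ with scaling coefficient $c(R) > 0$, we have $b_{\tfunc,\gamma,\beta}(R') = b_{\tfunc,\gamma,c(R)\beta}(R)$.

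For the \emph{necessary} direction, suppose $g = b_{\tfunc,\gamma,\beta} \circ t$ where $t(R) \eq{\ORD_{\tfunc,\gamma}} R$ for every $R$. By Theorem~\ref{thm:policy_ordering}, for each $R$ there exists $c(R) > 0$ such that $t(R)$ differs from $c(R) \cdot R$ only by potential shaping and $S'$-redistribution. The identity above then yields $g(R) = b_{\tfunc,\gamma,c(R)\beta}(R)$, so defining $\psi(R) = c(R)\beta$ gives $\psi : \R \to \R^+$ with $g = b_{\tfunc,\gamma,\psi} \in B_{\tfunc,\gamma}$. For the \emph{sufficient} direction, given any $g = b_{\tfunc,\gamma,\psi} \in B_{\tfunc,\gamma}$ with $g \neq b_{\tfunc,\gamma,\beta}$, define $t(R) = (\psi(R)/\beta) \cdot R$; then $t$ is pointwise a positive linear scaling so $t(R) \eq{\ORD_{\tfunc,\gamma}} R$, and by the same identity $b_{\tfunc,\gamma,\beta}(t(R)) = b_{\tfunc,\gamma,\psi(R)}(R) = g(R)$. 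Applying Lemma~\ref{lemma:P_robustness_function_composition} completes both directions.

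The main conceptual obstacle is cleanly disentangling the three reward transformations inside $\SR \bigodot \PS \bigodot \LS$: we need to recognise that the $\PS$ and $\SR$ components act trivially on Boltzmann policies, while only the $\LS$ component translates (reciprocally) into a change of temperature. Once this trichotomy is clear, the rest is a direct application of the framework established in Section~\ref{sec:frameworks_properties_of_definitions}. A minor care-point is ensuring $c(R)$ is well-defined; it is, because any two reward functions in the same $\ORD_{\tfunc,\gamma}$-class are related by a \emph{unique} positive scalar modulo the PS+SR subspace (as $\LS$ acts freely on the quotient $\R / (\PS \bigodot \SR)$ away from trivial rewards, and the trivial-reward case can be handled by setting $c(R)$ arbitrarily since then $b_{\tfunc,\gamma,\beta}(R)$ is the uniform policy regardless of temperature).
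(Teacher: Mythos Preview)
Your proposal is correct and follows essentially the same approach as the paper: both verify $\mathrm{Am}(b_{\tfunc,\gamma,\beta}) \refines \mathrm{ORD}_{\tfunc,\gamma}$ via Theorems~\ref{thm:ambiguity-boltzmann-rational} and~\ref{thm:policy_ordering}, invoke Lemma~\ref{lemma:P_robustness_function_composition}, and then reduce the characterisation to the observation that positive linear scaling of the reward is equivalent to rescaling the Boltzmann temperature (the paper phrases this via the advantage function $A^\star$ rather than $Q^\star$, but this is immaterial since softmax is shift-invariant). Your closing remark on the well-definedness of $c(R)$ is more than the paper bothers with but is harmless, since any valid choice of $c(R)$ yields a $\psi$ that works.
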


Recall that if $b_{\tfunc,\gamma,\beta}$ is the Boltzmann-rational model for temperature $\beta$ then $b_{\tfunc,\gamma,\beta}$ is surjective onto $\Pi^+$, which means that $\mathrm{Im}(g) \subseteq \mathrm{Im}(b_{\tfunc,\gamma,\beta})$ for all $g \in B_{\tfunc, \gamma}$. Theorem~\ref{thm:boltzmann_ord_robustness} thus says that the Boltzmann-rational model is $\mathrm{ORD}_{\tfunc,\gamma}$-robust to misspecification of the temperature parameter $\beta$, but not to any other form of misspecification (with the only complication being that the misspecification of $\beta$ is allowed to depend arbitrarily on the underlying reward function).
%
%
We next turn our attention to optimal policies.

\begin{restatable}[]{theorem}{optimalpolicyrobustness}
\label{thm:optimal_policy_robustness}
For each $o \in \mathcal{O}_{\tfunc,\gamma}$, we have that $\mathrm{Am}(o) \not\refines \mathrm{ORD}_{\tfunc,\gamma}$, unless $|\States| = 1$ and $|\Actions| = 2$.
The only function $o \in \mathcal{O}_{\tfunc,\gamma}$ such that $\mathrm{Am}(o) \refines \mathrm{OPT}_{\tfunc,\gamma}$ is $o_{\tfunc,\gamma}^\star$, but there is no function $g$ such that $o_{\tfunc,\gamma}^\star$ is $\mathrm{OPT}_{\tfunc,\gamma}$-robust to misspecification with $g$.
\end{restatable}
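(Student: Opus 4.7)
The statement decomposes into three claims, which I would handle sequentially.

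For the first claim --- that $\mathrm{Am}(o) \not\refines \ORD$ for every $o \in \mathcal{O}_{\tfunc,\gamma}$ (except in the degenerate regime $|\States|=1$, $|\Actions|=2$) --- my plan is to construct $R_1, R_2 \in \R$ with two properties: (i) at every state they share the same \emph{strictly} unique optimal action, and (ii) they induce different policy orderings. Property (i) is what lets the argument bypass the arbitrariness of $o$: since every $o \in \mathcal{O}_{\tfunc,\gamma}$ must map $R$ to a non-empty subset of $\mathrm{argmax}_a \AStar(s,a)$, and in the unique-optimum case this set is forced to be a singleton, $o(R_1)(s) = o(R_2)(s)$ holds in every state automatically. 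This reduces the problem to a slight strengthening of Proposition~\ref{prop:optimal_policies_do_not_preserve_ord}. When $|\Actions| \geq 3$, I would permute the rewards assigned to two non-optimal actions in some state, preserving strict optimality while flipping the ranking of policies that mix those two non-optimal actions. When $|\Actions|=2$ and $|\States| \geq 2$, I would perturb the non-optimal action's reward asymmetrically across states in a direction that reorders a well-chosen cycling pair of policies, while keeping the perturbation compatible with the intended action remaining strictly optimal.

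The second and third claims follow quickly from earlier results. For the second, the forward direction $\mathrm{Am}(o_{\tfunc,\gamma}^\star) \refines \OPT$ is immediate from Theorem~\ref{thm:ambiguity-optimal}, which in fact identifies $\mathrm{Am}(o_{\tfunc,\gamma}^\star)$ exactly with $\OPT$, and the converse --- that any $o \neq o_{\tfunc,\gamma}^\star$ has $\mathrm{Am}(o) \not\refines \OPT$ --- is precisely Theorem~\ref{thm:non_max_supportive_opt_very_ambiguous}. For the third claim, I would apply Lemma~\ref{lemma:less_ambiguity_less_robustness} with $f = o_{\tfunc,\gamma}^\star$ and $P = \OPT$; since Theorem~\ref{thm:ambiguity-optimal} yields $\mathrm{Am}(o_{\tfunc,\gamma}^\star) = \OPT$, the lemma immediately gives that no $g$ can make $o_{\tfunc,\gamma}^\star$ be $\OPT$-robust to misspecification with $g$.

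The main obstacle is the first claim in the constrained regime $|\Actions|=2$, $|\States|\geq 2$, where only one non-optimal reward per state is available to tune and it is not \emph{a priori} obvious that perturbing it reorders policies rather than merely rescaling $\J$. The plan here is to choose perturbations that cause $\J(\pi) - \J(\pi')$ to change sign for a cycling pair $\pi, \pi'$ differing in their occupancy of one non-optimal transition versus another, calibrated so that the optimal action remains strictly dominant in every state; a short Bellman computation should suffice to verify the sign flip. Once this construction is secured, the reduction to the unique-optimum case absorbs any dependence on $o$, and the rest of the theorem follows from the cited machinery.
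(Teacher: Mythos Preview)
Your handling of the second and third claims matches the paper exactly: it invokes Theorem~\ref{thm:ambiguity-optimal} and Theorem~\ref{thm:non_max_supportive_opt_very_ambiguous} for the second, and Lemma~\ref{lemma:less_ambiguity_less_robustness} together with $\mathrm{Am}(o_{\tfunc,\gamma}^\star) = \OPT$ for the third.

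For the first claim your route is correct but genuinely different from the paper's. You force $o(R_1)=o(R_2)$ for \emph{every} $o$ by engineering a strictly unique optimal action at each state, and then argue case-by-case ($|\Actions|\geq 3$ versus $|\Actions|=2$, $|\States|\geq 2$) that the policy ordering can still be perturbed. The paper instead cites Proposition~\ref{prop:optimal_policies_do_not_preserve_ord}, whose proof is a one-line pigeonhole argument: when $|\States|\geq 2$ or $|\Actions|\geq 3$ there are uncountably many distinct policy orderings (by Theorem~\ref{thm:policy_ordering}), while the codomain of any $o\in\mathcal{O}_{\tfunc,\gamma}$ is finite, so two rewards with distinct orderings must collide under $o$. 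Although Proposition~\ref{prop:optimal_policies_do_not_preserve_ord} is stated only for $o_{\tfunc,\gamma}^\star$, the identical argument works verbatim for any $o$ since $\mathrm{Im}(o)$ is still finite. This bypasses the case split and the Bellman computation you flag as the main obstacle. Your constructive approach buys explicit witnesses and the uniform statement that a \emph{single} pair $(R_1,R_2)$ works for all $o$ simultaneously; the paper's buys brevity and avoids the delicate $|\Actions|=2$ perturbation entirely.
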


This essentially means that the optimality model is not robust to any form of misspecification (regardless of whether that is measured using $\ORD$ or $\OPT$).
We finally turn our attention to causal entropy maximising policies. 
As before, let $\psi : \mathcal{R} \rightarrow \mathbb{R}^+$ be any function from reward functions to positive real numbers, and let $c_{\tfunc,\gamma,\psi} : \mathcal{R} \rightarrow \Pi^+$ be the function that, given $R$, returns the MCE policy with weight $\psi(R)$ given $\tfunc$ and $\gamma$. Furthermore, let $C_{\tfunc,\gamma} = \{c_{\tfunc,\gamma,\psi} : \psi \in \mathcal{R} \rightarrow \mathbb{R}^+\}$ be the set of all such functions $c_{\tfunc,\gamma,\psi}$. 
Moreover, as usual, we let $c_{\tfunc,\gamma,\alpha} : \mathcal{R} \rightarrow \Pi^+$ be the function that, given $R$, returns the MCE policy with weight $\alpha$ given $\tfunc$ and $\gamma$. Also note that $c_{\tfunc,\gamma,\alpha} \in C_{\tfunc,\gamma}$ for each $\alpha$, since we may let $\psi$ return a constant $\alpha$ for all $R$.

\begin{restatable}[]{theorem}{mcestrongrobustness}
\label{thm:mce_strong_robustness}
For any $\alpha \in \mathbb{R}^+$, we have that $c_{\tfunc,\gamma,\alpha}$ is $\ORD$-robust to misspecification with $g$ if and only if $ g \in C_{\tfunc,\gamma}$ and $g \neq c_{\tfunc,\gamma,\psi}$.
\end{restatable}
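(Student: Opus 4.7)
The plan is to apply Lemma~\ref{lemma:P_robustness_function_composition} with $f = c_{\tfunc,\gamma,\alpha}$ and $P = \mathrm{ORD}_{\tfunc,\gamma}$. The required hypothesis $\mathrm{Am}(c_{\tfunc,\gamma,\alpha}) \refines \mathrm{ORD}_{\tfunc,\gamma}$ is supplied by Corollary~\ref{cor:diameter_of_am_for_BR_and_MCE}, so the theorem reduces to showing that the set
$$\{\, c_{\tfunc,\gamma,\alpha} \circ t \;:\; t : \R \to \R \text{ with } t(R) \eq{\mathrm{ORD}_{\tfunc,\gamma}} R \text{ for all } R \,\}$$
coincides with $C_{\tfunc,\gamma}$ (with the $g \neq c_{\tfunc,\gamma,\alpha}$ condition tracked separately).

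The technical crux is a scaling identity for MCE policies: for every $c > 0$ and every reward $R$,
$$c_{\tfunc,\gamma,\alpha}(cR) \;=\; c_{\tfunc,\gamma,\alpha/c}(R).$$
I would prove this by verifying that $c$ times the soft $Q$-function of $R$ at weight $\alpha/c$ satisfies the soft Bellman recursion (Equation~\ref{equation:soft_Q_recursion}) for $cR$ at weight $\alpha$: multiplying the recursion for $R$ at weight $\alpha/c$ through by $c$ turns $R$ into $cR$, turns the $\gamma(\alpha/c)\log$ prefactor into $\gamma\alpha\log$, and leaves the argument of each exponential as $(c/\alpha)\cdot\QSoft$, which coincides with $(1/\alpha)$ times the rescaled soft $Q$-function. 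Uniqueness of the soft-Bellman fixed point then identifies the two soft $Q$-functions up to the factor $c$, and the induced softmax policies agree because $(1/\alpha)$ times the first equals $(c/\alpha)$ times the second.

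For the ``$\Leftarrow$'' direction, take $g = c_{\tfunc,\gamma,\psi} \neq c_{\tfunc,\gamma,\alpha}$ and set $t(R) = (\alpha/\psi(R)) \cdot R$. This is pure positive linear scaling, so $t(R) \eq{\mathrm{ORD}_{\tfunc,\gamma}} R$ by Theorem~\ref{thm:policy_ordering}, and the scaling identity gives $c_{\tfunc,\gamma,\alpha}(t(R)) = c_{\tfunc,\gamma,\psi(R)}(R) = g(R)$; Lemma~\ref{lemma:P_robustness_function_composition} then delivers $\mathrm{ORD}_{\tfunc,\gamma}$-robustness. For the ``$\Rightarrow$'' direction, Lemma~\ref{lemma:P_robustness_function_composition} produces a $t$ with $t(R) \eq{\mathrm{ORD}_{\tfunc,\gamma}} R$ such that $g = c_{\tfunc,\gamma,\alpha} \circ t$. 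By Theorem~\ref{thm:policy_ordering} each $t(R)$ decomposes as $c(R) \cdot R''$ with $c(R) > 0$ and $R''$ differing from $R$ only by potential shaping and $S'$-redistribution; then Theorem~\ref{thm:ambiguity-MCE} (invariance of MCE policies under $\PS \bigodot \SR$) together with the scaling identity yields
$$g(R) \;=\; c_{\tfunc,\gamma,\alpha}(c(R)\, R'') \;=\; c_{\tfunc,\gamma,\alpha/c(R)}(R'') \;=\; c_{\tfunc,\gamma,\alpha/c(R)}(R),$$
so setting $\psi(R) := \alpha/c(R)$ exhibits $g \in C_{\tfunc,\gamma}$.

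The main obstacle I expect is the scaling identity itself: although elementary, it is easy to get off by a factor, and it is where the MCE proof genuinely differs from the parallel Boltzmann proof (Theorem~\ref{thm:boltzmann_ord_robustness}), whose $Q^\star$ recursion is fully scale-linear rather than mixing rescaling with a temperature change. A minor subtlety is that when $R$ lies in the linear subspace spanned by potential shaping and $S'$-redistribution the decomposition $t(R) = c(R)\,R''$ is not unique, but any positive choice of $c(R)$ works: on that subspace the scaling identity forces $c_{\tfunc,\gamma,\alpha}(R) = c_{\tfunc,\gamma,\alpha/c}(R)$ for all $c > 0$, so the MCE policy is $\alpha$-independent there and the choice of $\psi(R)$ is immaterial.
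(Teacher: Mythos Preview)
Your proof is correct and follows essentially the same approach as the paper: both reduce to Lemma~\ref{lemma:P_robustness_function_composition}, decompose $\mathrm{ORD}_{\tfunc,\gamma}$-preserving transformations as $\PS \bigodot \SR \bigodot \LS$, invoke MCE invariance under $\PS \bigodot \SR$, and establish the scaling identity $c_{\tfunc,\gamma,\alpha}(cR)=c_{\tfunc,\gamma,\alpha/c}(R)$. The only minor difference is that you verify this scaling identity via the soft Bellman recursion, whereas the paper argues directly from the MCE objective (scaling $R$ by $c$ and dividing the objective through by $c$ yields the objective with weight $\alpha/c$); both routes are valid and equally elementary.
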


In other words, the maximal causal entropy model is $\ORD$-robust to misspecification of the weight $\alpha$, but not to any other kind of misspecification (with the only complication being that the misspecification of $\alpha$ is allowed to depend arbitrarily on the underlying reward function). This is similar to what is the case for Boltzmann-rational policies.

Finally, let us briefly discuss the misspecification to which the maximal causal entropy model is $\mathrm{OPT}_{\tfunc,\gamma}$-robust. 
Lemma~\ref{lemma:P_robustness_function_composition} tells us that $c_{\tfunc,\gamma,\alpha}$ is $\mathrm{OPT}_{\tfunc,\gamma}$-robust to misspecification with $g$ if $g = c_{\tfunc,\gamma,\alpha} \circ t$ for some $t \in \OPT$.
In other words, if $g(R_1) = \pi$ then there must exist an $R_2$ such that $\pi$ maximises causal entropy with respect to $R_2$, and such that $R_1$ and $R_2$ have the same optimal policies. It seems hard to express this as an intuitive property of $g$, so we have refrained from stating this result as a theorem.

\subsection{Wider Classes of Policies}\label{sec:misspecification_1_wider_classes}

At this point, it is worth remarking on the fact that there are several noteworthy parallels between the invariances and the misspecification robustness of $b_{\tfunc,\gamma,\beta}$ and $c_{\tfunc,\gamma,\alpha}$. In particular, both are invariant to potential shaping and $S'$-redistribution, and no other transformations. Moreover, both are defined in terms of a parameter ($\beta$ or $\alpha$), and both are $\mathrm{ORD}_{\tfunc,\gamma}$-robust to misspecification of this parameter, and no other forms of misspecification. Additionally, misspecification of this parameter results in positive linear scaling of the learnt reward function. Is this a coincidence, or should we expect the same result to generalise to a wider class of behavioural models? Before moving on, we will discuss this question in some more depth.

First of all, $Q$-functions and advantage functions are invariant to $S'$-redistribution (Lemma~\ref{lemma:ambiguity_Q_function}-\ref{lemma:ambiguity_optimal_A_function} in \ref{appendix:reward_transformation_properties}). It is easy to show that value functions and policy evaluation functions, etc, also are invariant to $S'$-redistribution. This means that any behavioural model which can be computed via one of these objects also will share this invariance, as per Lemma~\ref{lemma:ambiguity_inherited}. More generally, since $S'$-redistribution does not change the expected value of any policy in any state, it is quite natural for a behavioural policy to be invariant to such transformations. It is also quite natural for a behavioural model to be invariant to potential shaping, if that behavioural model uses exponential discounting, considering the properties of potential shaping discussed in \ref{appendix:reward_transformation_properties}.

Lemma~\ref{lemma:less_ambiguity_less_robustness} tells us that for $f$ to be $\mathrm{ORD}_{\tfunc,\gamma}$-robust to some forms of misspecification, it has to be the case that $f$ is sensitive to some reward transformations which do not affect the policy order of the reward function. For example, $b_{\tfunc,\gamma,\beta}$ and $c_{\tfunc,\gamma,\alpha}$ are sensitive to positive linear scaling, even though this does not affect the policy order. Lemma~\ref{lemma:P_robustness_function_composition} then tells us that $f$ can be composed with these transformations, to produce the forms of misspecification that $f$ will tolerate. Composing $b_{\tfunc,\gamma,\beta}$ or $c_{\tfunc,\gamma,\alpha}$ with positive linear scaling is equivalent to scaling $\beta$ or $\alpha$; hence $b_{\tfunc,\gamma,\beta}$ and $c_{\tfunc,\gamma,\alpha}$ are $\mathrm{ORD}_{\tfunc,\gamma}$-robust to such misspecification. But is it reasonable to expect a behavioural model to be sensitive to some order-preserving transformations? We next show that if $f : \mathcal{R} \to \Pi^+$ is continuous, surjective onto $\Pi^+$, and satisfies $\mathrm{Am}(f) \refines \mathrm{ORD}_{\tfunc,\gamma}$, then $f$ must be sensitive to some such transformations:

\begin{restatable}[]{proposition}{continuousnotsurjective}
If $f : \mathcal{R} \to \Pi^+$ is continuous, and $f(R_1) = f(R_2)$ if and only if $R_1 \eq{\mathrm{ORD}_{\tfunc, \gamma}} R_2$, then $f$ is not surjective onto $\Pi^+$.
\end{restatable}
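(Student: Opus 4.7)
The plan is to show that $f(\R)$ is contained in a compact subset of $\Pi^+$, while $\Pi^+$ itself is not compact, which immediately rules out surjectivity.

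By Theorem~\ref{thm:policy_ordering}, $R_1 \eq{\mathrm{ORD}_{\tfunc,\gamma}} R_2$ iff $R_1$ and $R_2$ differ by a composition of $S'$-redistribution, potential shaping, and positive linear scaling. The set $K \subseteq \R$ of differences induced by PS and SR alone forms a linear subspace (Proposition~\ref{prop:PS_SR_linear}), so the quotient $V := \R/K$ is a finite-dimensional vector space. I would fix a linear section $\sigma: V \to \R$ of the quotient projection $p: \R \to V$, equip $V$ with any norm $\|\cdot\|$, and let $B_V := \{v \in V : \|v\| \leq 1\}$ denote the closed unit ball.

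The key claim is that every $\mathrm{ORD}_{\tfunc,\gamma}$-equivalence class has a representative in the compact set $\sigma(B_V)$. Given $R \in \R$, set $v := p(R)$. If $v = 0$ then $R \in K$, so $R$ is ORD-equivalent to $0 = \sigma(0) \in \sigma(B_V)$. If $v \neq 0$, set $u := v/\|v\| \in B_V$; by linearity of $\sigma$ we have $\sigma(v) = \|v\|\,\sigma(u)$, so $\sigma(u)$ differs from $\sigma(v)$ by positive linear scaling, while $\sigma(v) - R \in K$ since both project to $v$. Composing, $\sigma(u)$ is ORD-equivalent to $R$. Since the hypothesis on $f$ forces $f$ to be constant on ORD-classes, it follows that $f(\R) = f(\sigma(B_V))$.

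Now $B_V$ is compact, $\sigma$ is continuous (being linear on a finite-dimensional space), and $f$ is continuous by assumption, so $f(\sigma(B_V))$ is compact in $\Pi^+$. On the other hand, $\Pi^+$ is not compact: fixing any $\pi_0 \in \Pi^+$ and any deterministic policy $\pi_d \in \Pi \setminus \Pi^+$, the sequence $\pi_n := (1-1/n)\pi_0 + (1/n)\pi_d$ lies in $\Pi^+$ but converges in $\Pi$ to $\pi_d \notin \Pi^+$, so $\Pi^+$ is not closed in the compact space $\Pi$ and therefore not compact. A compact set cannot equal a non-compact set, so $f(\R) \subsetneq \Pi^+$, contradicting surjectivity. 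The main technical hurdle is the structural claim that every ORD-class meets the compact set $\sigma(B_V)$; this reduces to the linearity of $K$ (so that we can pick a linear section) and the ability to rescale any nonzero representative into the unit sphere of $V$, both of which are afforded by the linear-algebraic structure of PS, SR, and LS.
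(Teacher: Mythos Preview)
Your argument is correct and more direct than the paper's. Both proofs exploit the fact that the ORD-classes admit a compact set of representatives, but you conclude immediately from the observation that the continuous image of this compact set cannot equal the non-compact $\Pi^+$. The paper instead restricts $f$ to $\mathrm{Im}(s^\mathrm{STARC}_{\tfunc,\gamma})$ (the unit sphere in $\mathrm{Im}(c^\mathrm{STARC}_{\tfunc,\gamma})$ together with the origin), notes that a continuous bijection from a compact space to a Hausdorff space is a homeomorphism, and then derives a contradiction from a topological invariant: $\mathrm{Im}(s^\mathrm{STARC}_{\tfunc,\gamma})$ has an isolated point (the origin, representing the trivial rewards) whereas $\Pi^+$ does not. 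Your route avoids both the homeomorphism theorem and the isolated-point check at the cost of constructing your own linear section $\sigma$; the paper's route reuses the STARC standardisation it has already developed.

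One small slip: your sequence $\pi_n = (1-1/n)\pi_0 + (1/n)\pi_d$ converges to $\pi_0 \in \Pi^+$, not to $\pi_d$. Swap the coefficients and take $\pi_n = (1/n)\pi_0 + (1-1/n)\pi_d$; then each $\pi_n$ lies in $\Pi^+$ (every action has probability at least $(1/n)\pi_0(a\mid s)>0$) and $\pi_n \to \pi_d \notin \Pi^+$, so $\Pi^+$ is not closed in the compact set $\Pi$ and hence not compact.
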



Thus, if we want $f$ to be both continuous and surjective onto $\Pi^+$, then there must either be some order-preserving reward transformations to which $f$ is not invariant (i.e.\ $\mathrm{Am}(f) \refinesStrict \mathrm{ORD}_{\tfunc,\gamma}$), or $f$ must be invariant to some transformations which are not order preserving (i.e.\ $\mathrm{Am}(f) \not\refines \mathrm{ORD}_{\tfunc,\gamma}$). The latter case would imply that $f$ violates condition 3 in Definition~\ref{def:misspecification_eq}, and thus that $f$ is not robust to any misspecification. In the former case, which transformations would it be reasonable to pick? We next show that linear scaling is a natural choice:

\begin{restatable}[]{proposition}{continuousandinvarianttolinear}
If $f : \mathcal{R} \to \Pi^+$ is continuous and invariant to positive linear scaling, then $f(R_1) = f(R_2)$ for all $R_1$, $R_2$.
\end{restatable}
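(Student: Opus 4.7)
The plan is to exploit the fact that positive linear scaling of a reward $R$ by $c > 0$ can make $cR$ arbitrarily close to the zero reward $R_0$, and then use continuity of $f$ at $R_0$ to conclude.

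First I would observe that $\mathcal{R}$, being the space of functions $\SxAxS \to \mathbb{R}$, is naturally a finite-dimensional real vector space endowed with (any equivalent) norm topology, and $\Pi^+$ inherits a topology as a subset of $[0,1]^{|\States||\Actions|}$. Fix any reward function $R \in \mathcal{R}$. By the assumption that $f$ is invariant to positive linear scaling, we have $f(cR) = f(R)$ for every $c > 0$. Now consider the sequence $c_n = 1/n$; then $c_n R \to R_0$ in $\mathcal{R}$ as $n \to \infty$, where $R_0$ denotes the zero reward. By continuity of $f$ at $R_0$,
\[
f(R_0) \;=\; \lim_{n \to \infty} f(c_n R) \;=\; \lim_{n \to \infty} f(R) \;=\; f(R).
\]
Since $R$ was arbitrary, this shows $f(R) = f(R_0)$ for every $R \in \mathcal{R}$, i.e.\ $f$ is constant.

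There is essentially no obstacle here, but one mild subtlety worth flagging in the write-up: the argument requires that $f$ be continuous at the specific point $R_0$, and it relies on $R_0$ itself belonging to the domain $\mathcal{R}$ so that $f(R_0) \in \Pi^+$ is well-defined (which it is, since $R_0$ is a perfectly good reward function and $f$ is assumed to be defined on all of $\mathcal{R}$). The conclusion is that the common value of $f$ is necessarily $f(R_0)$, i.e.\ whichever strictly positive policy is assigned to the zero reward. This also makes transparent why the proposition is sharp: it is the combination of total domain plus scaling invariance plus continuity that forces the collapse, since any small perturbation of $R_0$ lies on a scaling ray through some nonzero $R$, and continuity forces these rays to agree with their limit.
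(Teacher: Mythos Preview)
Your proof is correct and follows essentially the same approach as the paper: scale an arbitrary reward toward $R_0$, use continuity at $R_0$ together with scaling invariance to conclude $f(R) = f(R_0)$, and deduce that $f$ is constant. The paper's version phrases it with two arbitrary rewards $R_1, R_2$ rather than one, but the argument is identical.
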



Of course, if $f(R_1) = f(R_2)$ for all $R_1$, $R_2$, then $f$ is completely trivial. Thus, a continuous behavioural model $f$ should not be (everywhere) invariant to positive linear scaling. Of course, it could be the case that $f$ is sensitive to positive linear scaling in the vicinity of $R_0$, but otherwise invariant to positive linear scaling, although this seems somewhat unnatural. This then suggests that if a behavioural model $f : \R \to \Pi^+$ is continuous, surjective, and satisfies $\mathrm{Am}(f) \refines \mathrm{ORD}_{\tfunc,\gamma}$, then it is natural for $f$ to be sensitive to positive linear scaling, in which case $f$ can be composed with positive linear scaling to produce forms of misspecification to which $f$ is robust.\footnote{Very roughly and informally, a set of reward functions in which no two reward functions share the same policy order will be one dimension short of being able to cover the set of all policies. Therefore, if $f$ \emph{does} cover all policies, then it must be sensitive to one \enquote{dimension} of order-preserving transformations. This, in turn, translates to one \enquote{dimension} of misspecification to which $f$ is $\mathrm{ORD}_{\tfunc, \gamma}$-robust.} This is exemplified by $b_{\tfunc,\gamma,\beta}$ and $c_{\tfunc,\gamma,\alpha}$, and the above argument suggests that we should expect a similar result to hold for many other behavioural models.

\subsection{Misspecified Parameters}\label{sec:misspecification_1_MDPs}

A behavioural model will typically be parameterised by a $\gamma$ or $\tfunc$, implicitly or explicitly. 
In this section, we explore what happens if these parameters are misspecified.
We show that a wide class of behavioural models lack robustness to this type of misspecification.

Theorems~\ref{thm:boltzmann_weak_robustness}-\ref{thm:mce_strong_robustness} already tell us that the standard behavioural models are not ($\mathrm{ORD}_{\tfunc,\gamma}$ or $\mathrm{OPT}_{\tfunc,\gamma}$) robust to misspecified $\discount$ or $\tfunc$, since the sets $F_{\tfunc,\gamma}$, $B_{\tfunc,\gamma}$, and $C_{\tfunc,\gamma}$, all are parameterised by $\discount$ and $\tfunc$.
We will generalise this further. First, we note that any behavioural model that is invariant to $S'$-redistribution will lack robustness to a misspecified $\tfunc$. Recall Theorem~\ref{thm:wrong_tau_too_ambiguous}; if $f_{\tfunc_1}$ is invariant to $S'$-redistribution with $\tfunc_1$, and $\tfunc_1 \neq \tfunc_2$, then we have that $\mathrm{Am}(f_{\tfunc_1}) \not\refines \mathrm{OPT}_{\tfunc_2,\gamma}$. Using this, we can prove the following:

\begin{restatable}[]{theorem}{notProbusttomisspecifiedt}
\label{thm:not_P_robust_to_misspecified_t}
If $f_{\tfunc}$ is invariant to $S'$-redistribution with $\tfunc$, and $\tfunc_1 \neq \tfunc_2$, then $f_{\tfunc_1}$ is not $\mathrm{OPT}_{\tfunc_3,\gamma}$-robust to misspecification with $f_{\tfunc_2}$ for any $\tfunc_3$ or $\gamma$.
\end{restatable}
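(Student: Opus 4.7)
The plan is to reduce the theorem to two applications of Theorem~\ref{thm:wrong_tau_too_ambiguous}, depending on how $\tfunc_3$ sits relative to $\tfunc_1$ and $\tfunc_2$. The hypothesis $\tfunc_1 \neq \tfunc_2$ forces at least one of $\tfunc_3 \neq \tfunc_1$ and $\tfunc_3 \neq \tfunc_2$ to hold, and in each case a different clause of Definition~\ref{def:misspecification_eq} fails. The first case falls to condition~3 directly; the second case falls to the derived condition $\mathrm{Am}(g) \refines P$ supplied by Lemma~\ref{lemma:P_robustness_implies_refinement}.

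First I would handle the case $\tfunc_3 \neq \tfunc_1$. By assumption $f_{\tfunc_1}$ is invariant to $S'$-redistribution with $\tfunc_1$, so Theorem~\ref{thm:wrong_tau_too_ambiguous} immediately yields $\mathrm{Am}(f_{\tfunc_1}) \not\refines \mathrm{OPT}_{\tfunc_3,\gamma}$. This is precisely the negation of condition~3 in Definition~\ref{def:misspecification_eq}, so $f_{\tfunc_1}$ is not $\mathrm{OPT}_{\tfunc_3,\gamma}$-robust to misspecification with \emph{any} $g$, in particular not with $f_{\tfunc_2}$.

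The remaining case is $\tfunc_3 = \tfunc_1$, which combined with $\tfunc_1 \neq \tfunc_2$ gives $\tfunc_3 \neq \tfunc_2$. Since $f_{\tfunc_2}$ is invariant to $S'$-redistribution with $\tfunc_2$, Theorem~\ref{thm:wrong_tau_too_ambiguous} applied to $f_{\tfunc_2}$ yields $\mathrm{Am}(f_{\tfunc_2}) \not\refines \mathrm{OPT}_{\tfunc_3,\gamma}$. Taking the contrapositive of Lemma~\ref{lemma:P_robustness_implies_refinement} (if $\mathrm{Am}(g) \not\refines P$ then no $f$ is $P$-robust to misspecification with $g$), we conclude that $f_{\tfunc_1}$ is not $\mathrm{OPT}_{\tfunc_3,\gamma}$-robust to misspecification with $f_{\tfunc_2}$. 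The two cases together cover every $\tfunc_3$, completing the proof.

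There is no real obstacle in this argument once Theorem~\ref{thm:wrong_tau_too_ambiguous} and Lemma~\ref{lemma:P_robustness_implies_refinement} are in hand: the only substantive move is the case split on which side of the pair $(f_{\tfunc_1},f_{\tfunc_2})$ inherits too much ambiguity relative to $\tfunc_3$. The asymmetry between conditions~1/3 (which constrain $f$) and the derived condition on $g$ is what makes covering both cases effortless, rather than requiring any bespoke construction of reward functions $R_1, R_2$ to witness the failure.
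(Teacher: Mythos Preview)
Your proof is correct and follows essentially the same approach as the paper: both arguments observe that $\tfunc_1 \neq \tfunc_2$ forces $\tfunc_3 \neq \tfunc_1$ or $\tfunc_3 \neq \tfunc_2$, then invoke Theorem~\ref{thm:wrong_tau_too_ambiguous} on the relevant $f_{\tfunc_i}$ to violate either condition~3 of Definition~\ref{def:misspecification_eq} directly or the conclusion of Lemma~\ref{lemma:P_robustness_implies_refinement}. The only cosmetic difference is that the paper phrases it as a proof by contradiction with a disjunctive split, whereas you do a direct exhaustive case analysis.
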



Recall that the three standard behavioural models are invariant to $S'$-redistribution, and thus subject to Theorem~\ref{thm:not_P_robust_to_misspecified_t}.
More generally, since $S'$-redistribution does not change the expected value of any policy in any state, it is quite natural for a behavioural model to be invariant to $S'$-redistribution. We should therefore expect Theorem~\ref{thm:not_P_robust_to_misspecified_t} to apply very broadly. Also recall that if $f_{\tfunc_1}$ is not $\mathrm{OPT}_{\tfunc_3,\gamma}$-robust to misspecification with $f_{\tfunc_2}$, then it is also not $\mathrm{ORD}_{\tfunc_3,\gamma}$-robust to misspecification with $f_{\tfunc_2}$.

Similarly, we can show that a behavioural model which is invariant to potential shaping will not be robust to misspecification of $\gamma$. Recall Theorem~\ref{thm:wrong_gamma_too_ambiguous}; if $f_{\gamma_1}$ is invariant to potential shaping with $\gamma_1$, $\gamma_1 \neq \gamma_2$, and $\tfunc$ is non-trivial, then $\mathrm{Am}(f_{\gamma_1}) \not\refines \mathrm{OPT}_{\tfunc,\gamma_2}$. This implies the following:

\begin{restatable}[]{theorem}{notProbusttomisspecifiedgamma}
\label{thm:not_P_robust_to_misspecified_gamma}
If $f_{\gamma}$ is invariant to potential shaping with $\gamma$, $\gamma_1 \neq \gamma_2$, and $\tfunc$ is non-trivial, then $f_{\gamma_1}$ is not $\mathrm{OPT}_{\tfunc,\gamma_3}$-robust to misspecification with $f_{\gamma_2}$ for any $\gamma_3$.
\end{restatable}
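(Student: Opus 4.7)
The plan is to mimic the expected structure of the proof of Theorem~\ref{thm:not_P_robust_to_misspecified_t}, parlaying Theorem~\ref{thm:wrong_gamma_too_ambiguous} into a statement about misspecification via the conditions in Definition~\ref{def:misspecification_eq} and Lemma~\ref{lemma:P_robustness_implies_refinement}. The idea is that $\mathrm{OPT}_{\tfunc,\gamma_3}$-robustness imposes two simultaneous refinement conditions on $\mathrm{Am}(f_{\gamma_1})$ and $\mathrm{Am}(f_{\gamma_2})$, and Theorem~\ref{thm:wrong_gamma_too_ambiguous} essentially pins down the discount factor of the behavioural model from such a refinement.

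More concretely, I would proceed by contradiction: assume some $\gamma_3$ exists such that $f_{\gamma_1}$ is $\mathrm{OPT}_{\tfunc,\gamma_3}$-robust to misspecification with $f_{\gamma_2}$. Condition 3 of Definition~\ref{def:misspecification_eq} then gives $\mathrm{Am}(f_{\gamma_1}) \refines \mathrm{OPT}_{\tfunc,\gamma_3}$. Since $f_{\gamma_1}$ is invariant to potential shaping with $\gamma_1$ and $\tfunc$ is non-trivial, Theorem~\ref{thm:wrong_gamma_too_ambiguous} (contrapositively, with the roles of $\gamma_1$ and $\gamma_2$ in that theorem played by $\gamma_1$ and $\gamma_3$ here) forces $\gamma_3 = \gamma_1$. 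Independently, Lemma~\ref{lemma:P_robustness_implies_refinement} applied to the assumed robustness gives $\mathrm{Am}(f_{\gamma_2}) \refines \mathrm{OPT}_{\tfunc,\gamma_3}$; since $f_{\gamma_2}$ is invariant to potential shaping with $\gamma_2$ and $\tfunc$ is non-trivial, the same appeal to Theorem~\ref{thm:wrong_gamma_too_ambiguous} forces $\gamma_3 = \gamma_2$. Combining these two equalities yields $\gamma_1 = \gamma_3 = \gamma_2$, contradicting the hypothesis $\gamma_1 \neq \gamma_2$.

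I expect no genuine obstacle: the heavy lifting is done entirely by Theorem~\ref{thm:wrong_gamma_too_ambiguous} together with the general framework lemmas, and the argument is essentially a two-line pigeonhole on the discount factor. The only mild subtlety to check is that the hypothesis \enquote{$f_{\gamma}$ is invariant to potential shaping with $\gamma$} is meant to hold for both instances $\gamma_1$ and $\gamma_2$ (so that Theorem~\ref{thm:wrong_gamma_too_ambiguous} applies to each of $f_{\gamma_1}$ and $f_{\gamma_2}$), which matches the natural reading of the theorem statement. The resulting argument is strictly parallel to what one would write for Theorem~\ref{thm:not_P_robust_to_misspecified_t}, swapping $S'$-redistribution for potential shaping and Theorem~\ref{thm:wrong_tau_too_ambiguous} for Theorem~\ref{thm:wrong_gamma_too_ambiguous}, with the additional non-triviality hypothesis on $\tfunc$ carried along as needed.
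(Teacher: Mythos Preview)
Your proposal is correct and matches the paper's proof essentially line for line: proceed by contradiction, invoke condition~3 of Definition~\ref{def:misspecification_eq} for $\mathrm{Am}(f_{\gamma_1}) \refines \mathrm{OPT}_{\tfunc,\gamma_3}$ and Lemma~\ref{lemma:P_robustness_implies_refinement} for $\mathrm{Am}(f_{\gamma_2}) \refines \mathrm{OPT}_{\tfunc,\gamma_3}$, then use Theorem~\ref{thm:wrong_gamma_too_ambiguous} on each to force $\gamma_3 = \gamma_1$ and $\gamma_3 = \gamma_2$. The paper phrases the case split as \enquote{$\gamma_3 \neq \gamma_1$ or $\gamma_3 \neq \gamma_2$} rather than your contrapositive form, but the logic is identical, and your remark about the parallel with Theorem~\ref{thm:not_P_robust_to_misspecified_t} is exactly right.
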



In other words, if a behavioural model is invariant to $S'$-redistribution, then that model is not $\mathrm{OPT}_{\tfunc,\gamma}$-robust (and therefore also not $\mathrm{ORD}_{\tfunc,\gamma}$-robust) to misspecification of the transition function $\tau$. 
Similarly, if the behavioural model is invariant to potential shaping, then that model is not $\mathrm{OPT}_{\tfunc,\gamma}$-robust (and therefore also not $\mathrm{ORD}_{\tfunc,\gamma}$-robust) to misspecification of the discount parameter $\gamma$. 
These results should apply to most behavioural models. For example, we can derive the following corollary:

\begin{restatable}[]{corollary}{notProbustbroadcase}
\label{cor:not_P_robust_to_misspecified_gamma}
Let $f_{\tfunc, \gamma} : \R \to (\SxA \to \mathbb{R})$ be the function that, given a reward $R$, returns the optimal $Q$-function $Q^\star$ for $R$ under $\tfunc$ and $\gamma$. Suppose $g_{\tfunc,\gamma} = h \circ f_{\tfunc, \gamma}$ for some $h$, and let $\tfunc_1 \neq \tfunc_2$. Then $g_{\tfunc_1,\gamma}$ is not $\mathrm{OPT}_{\tfunc_3,\gamma}$-robust to misspecification with $g_{\tfunc_2,\gamma}$ for any $\tfunc_3$ or $\gamma$. 
\end{restatable}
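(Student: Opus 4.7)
The plan is to derive this corollary as a direct consequence of Theorem~\ref{thm:not_P_robust_to_misspecified_t} combined with the inheritance property in Lemma~\ref{lemma:P_robustness_inheritance}. The structure is straightforward: first establish the claim at the level of the optimal $Q$-function $f_{\tfunc,\gamma}$ itself, then push it through $h$ to obtain the statement about $g_{\tfunc,\gamma}$.

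First I would verify that $f_{\tfunc,\gamma}$ is invariant to $S'$-redistribution with $\tfunc$. This is immediate from the Bellman recursion in Equation~\ref{equation:optimal_Q_recursion}: since $S'$-redistribution preserves $\mathbb{E}_{S' \sim \tfunc(s,a)}[R(s,a,S')]$ for every $(s,a)$, any reward $R'$ obtained from $R$ by $S'$-redistribution with $\tfunc$ satisfies exactly the same fixed-point equation as $R$, and by uniqueness of the fixed point we get $Q^\star_{R'} = Q^\star_R$. With this invariance in hand, Theorem~\ref{thm:not_P_robust_to_misspecified_t} immediately yields that $f_{\tfunc_1,\gamma}$ is not $\mathrm{OPT}_{\tfunc_3,\gamma}$-robust to misspecification with $f_{\tfunc_2,\gamma}$ for any $\tfunc_3$.

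Next I would lift the non-robustness through $h$ using Lemma~\ref{lemma:P_robustness_inheritance}. That lemma requires $\mathrm{Im}(f_{\tfunc_2,\gamma}) \subseteq \mathrm{Im}(f_{\tfunc_1,\gamma})$, so I would check that $\mathrm{Im}(f_{\tfunc,\gamma})$ is in fact the full space of functions $\SxA \to \mathbb{R}$ for \emph{every} choice of $\tfunc$ (making the inclusion trivial). For an arbitrary $Q : \SxA \to \mathbb{R}$, define the reward
\[
R(s,a,s') = Q(s,a) - \gamma \max_{a' \in \Actions} Q(s',a').
\]
Then $\mathbb{E}_{S' \sim \tfunc(s,a)}[R(s,a,S') + \gamma \max_{a'} Q(S',a')] = Q(s,a)$, so $Q$ satisfies Equation~\ref{equation:optimal_Q_recursion} for this $R$, and by uniqueness of the solution $Q = Q^\star_R = f_{\tfunc,\gamma}(R)$. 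Hence $f_{\tfunc,\gamma}$ is surjective onto its codomain for any $\tfunc$, so $\mathrm{Im}(f_{\tfunc_2,\gamma}) = \mathrm{Im}(f_{\tfunc_1,\gamma})$.

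Applying Lemma~\ref{lemma:P_robustness_inheritance} with $f = f_{\tfunc_1,\gamma}$, $g = f_{\tfunc_2,\gamma}$, and the given $h$, we conclude that $h \circ f_{\tfunc_1,\gamma} = g_{\tfunc_1,\gamma}$ is not $\mathrm{OPT}_{\tfunc_3,\gamma}$-robust to misspecification with $h \circ f_{\tfunc_2,\gamma} = g_{\tfunc_2,\gamma}$. The only mildly subtle step is the surjectivity of $f_{\tfunc,\gamma}$, which is needed for the image inclusion; everything else is purely bookkeeping on top of the earlier theorems. No part of the argument is genuinely hard, and indeed the corollary is essentially just a packaging of Theorem~\ref{thm:not_P_robust_to_misspecified_t} through the inheritance lemma.
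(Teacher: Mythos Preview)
Your proposal is correct and follows essentially the same route as the paper: invoke the $S'$-redistribution invariance of the optimal $Q$-function (the paper cites Lemma~\ref{lemma:ambiguity_optimal_Q_function} for this), apply Theorem~\ref{thm:not_P_robust_to_misspecified_t} at the $f$ level, verify surjectivity of $f_{\tfunc,\gamma}$ via the Bellman equation to get the image inclusion, and then push through $h$ with Lemma~\ref{lemma:P_robustness_inheritance}. Your explicit formula for the reward realizing a given $Q$ is just a spelled-out version of what the paper states in words.
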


In other words, any policy which can be derived from $Q^\star$ is not robust to misspecification of $\tfunc$. This is because $Q^\star$ already is invariant to $S'$-redistribution, and so this follows from Lemma~\ref{lemma:P_robustness_inheritance} and Theorem~\ref{thm:not_P_robust_to_misspecified_t}. Lemma~\ref{lemma:P_robustness_inheritance} could also be used to derive analogous results for other intermediate reward objects, such as those discussed in Appendix~\ref{appendix:intermediate_invariances}.

\subsection{Transfer Learning}\label{sec:misspecification_1_transfer_learning}

The equivalence relations we have worked with ($\mathrm{OPT}_{\tfunc,\gamma}$ and $\mathrm{ORD}_{\tfunc,\gamma}$) only guarantee that the learnt reward function $R_H$ has the same optimal policies, or ordering of policies, as the true reward $R^\star$ for a given choice of $\tfunc$ and $\gamma$. A natural question is what happens if we strengthen this requirement, and demand that $R_H$ has the same optimal policies, or ordering of policies, as $R^\star$, for any choice of $\tfunc$ or $\discount$. We briefly discuss this setting here.

In short, it is impossible to guarantee transfer to any $\tfunc$ or $\gamma$. This is already implied by the results in Section~\ref{sec:ambiguity_transfer_learning}. In particular, if $f_{\tfunc, \gamma}$ is invariant to $S'$-redistribution (with $\tfunc$) and potential shaping (with $\gamma$), then 
$$
\mathrm{Am}(f_{\tfunc_1, \gamma_1}) \not\refines \mathrm{OPT}_{\tfunc_2,\gamma_2}
$$
if either $\tfunc_1 \neq \tfunc_2$, or $\gamma_1 \neq \gamma_2$ and $\tfunc_2$ is non-trivial. Then $f_{\tfunc_1, \gamma_1}$ will violate condition 3 in Definition~\ref{def:misspecification_eq}.
\section{Misspecification With Metrics}\label{sec:misspecification_2}

In this section, we present our results about how robust IRL is to misspecified behavioural models, using the formalisation provided by Definition~\ref{def:misspecification_metric}. First, we will derive necessary and sufficient conditions that describe all forms of misspecification that the Boltzmann-rational model and the MCE model are robust to, and discuss the issue of how to derive similar results for the optimality model. 
After this, we analyse a particular form of misspecification, which we refer to as \emph{perturbation}, provide necessary and sufficient conditions for a behavioural model to be robust to such misspecification, and show that none of the three main behavioural models meet these conditions. After this, we will discuss the case where the environment model is misspecified, as well as the issue of transfer learning.
Section~\ref{sec:misspecification_2_nas} is quite dense, but \ref{sec:misspecification_2_perturbation} and \ref{sec:misspecification_2_parameters} both provide more intuitive takeaways. 

Our results in this section are expressed in terms of pseudometrics on $\R$. Most of these results apply for any choice of pseudometric, but when we need to select a specific pseudometric, we will use the newly introduced STARC metric $\starc$, as specified in Definition~\ref{def:standard_starc}.


\subsection{Necessary and Sufficient Conditions}\label{sec:misspecification_2_nas}

If $f$ is either $b_{\tfunc,\gamma,\beta}$ or $c_{\tfunc,\gamma,\alpha}$, and $f(R_1) = f(R_2)$, then $\starc(R_1,R_2) = 0$. This means that $b_{\tfunc,\gamma,\beta}$ and $c_{\tfunc,\gamma,\alpha}$ satisfy the assumptions for Lemma~\ref{lemma:weak_epsilon_robustness_function_composition}, and so we can use it to characterise the forms of misspecification that these models will tolerate.
To do this, we need to find the set $T_\epsilon$ of all transformations $t : \R \to \R$ such that $\starc(R, t(R)) \leq \epsilon$ for all $R$:

\begin{restatable}[]{proposition}{nasforsmallstarc}
\label{prop:nas_for_small_starc}
For any $\epsilon < 0.5$, $t : \mathcal{R} \to \mathcal{R}$ satisfies that 
$$
\starc(R,t(R)) \leq \epsilon
$$
for all $R \in \mathcal{R}$ if and only if $t$ can be expressed as $t_1 \circ t_2 \circ t_3$ where 
$$
L_2(R, t_2(R)) \leq L_2(c^\mathrm{STARC}_{\tfunc,\gamma}(R)) \cdot \sin(2 \arcsin (\epsilon))
$$
for all $R$, and where $t_1,t_3 \in \SR \bigodot \PS \bigodot \LS$.
\end{restatable}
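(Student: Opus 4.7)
The proof is a biconditional, and my plan is to handle the two directions separately: the sufficient direction ($\Leftarrow$) via a geometric triangle-inequality argument, and the necessary direction ($\Rightarrow$) via an explicit construction of the decomposition, built around projecting onto $\ORD$-classes inside the canonical subspace.

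For the sufficient direction, I would suppose $t = t_1 \circ t_2 \circ t_3$ with $t_1, t_3 \in \SR \bigodot \PS \bigodot \LS$ and the stated $L_2$ bound on $t_2$. Writing $c := c^{\mathrm{STARC}}_{\tfunc,\gamma}$ and $R' := t_3(R)$, Theorem~\ref{thm:policy_ordering} together with Proposition~\ref{prop:STARC_distance_zero} give $\starc(R, R') = 0$ and $\starc(t_2(R'), t(R)) = 0$, so by the triangle inequality it suffices to bound $\starc(R', t_2(R'))$. By Definition~\ref{def:standard_starc} this distance equals $\sin(\theta/2)$, where $\theta$ is the angle between $c(R')$ and $c(t_2(R'))$. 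Since $c$ is the minimal canonicalisation for $L_2$, it is the orthogonal projection onto $V = \mathrm{Im}(c)$ and hence $1$-Lipschitz; combining this with the law of sines in the triangle $0, c(R'), c(t_2(R'))$ gives $\sin\theta \leq L_2(c(R'), c(t_2(R')))/L_2(c(R')) \leq \sin(2\arcsin\epsilon)$. Because $\epsilon < 0.5$ forces $\sin(2\arcsin\epsilon) < 1$, one checks that $\theta > \pi/2$ would force $L_2(c(R'), c(t_2(R'))) > L_2(c(R'))$, contradicting the hypothesis; hence $\theta \in [0, 2\arcsin\epsilon]$ by monotonicity of $\sin$ on $[0, \pi/2]$, yielding $\sin(\theta/2) \leq \epsilon$ as required.

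For the necessary direction I would construct the decomposition explicitly. Take $t_3 := \mathrm{id}$, which trivially lies in $\SR \bigodot \PS \bigodot \LS$. For each $R$, the hypothesis $\starc(R, t(R)) \leq \epsilon$ forces the angle $\theta_R$ between $c(R)$ and $c(t(R))$ to satisfy $\theta_R \leq 2\arcsin\epsilon < \pi/2$. Using the orthogonal splitting $\R = V \oplus \mathrm{Ker}(c)$ induced by the minimal $c$, the $\ORD$-class $[t(R)]_{\ORD}$ decomposes as $\{\lambda c(t(R)) + w : \lambda > 0,\, w \in \mathrm{Ker}(c)\}$. I would set $t_2(R)$ to be the unique $L_2$-closest element of this set to $R$: the minimisation is well-posed because $\cos\theta_R > 0$, and a direct computation (separating the $V$ and $\mathrm{Ker}(c)$ components) yields $L_2(R, t_2(R)) = L_2(c(R))\sin\theta_R \leq L_2(c(R))\sin(2\arcsin\epsilon)$, matching the target bound. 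Since $t_2(R) \eq{\ORD} t(R)$, Theorem~\ref{thm:policy_ordering} supplies, for each $R$, a transformation in $\SR \bigodot \PS \bigodot \LS$ realising the passage $t_2(R) \mapsto t(R)$, and I would define $t_1$ to implement this on the image of $t_2$ and as the identity elsewhere.

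The principal obstacle will be establishing that $t_1$ is a well-defined function on $\R$ that genuinely lies in $\SR \bigodot \PS \bigodot \LS$: distinct $R_1, R_2$ may produce the same $t_2(R_1) = t_2(R_2)$ while $t(R_1) \neq t(R_2)$, in which case no single $t_1$ can agree with both. My plan to overcome this is to refine $t_2$ by appending a small $\mathrm{Ker}(c)$-directed perturbation that injectively encodes the needed information about $R$, which does not change $t_2(R)$'s $\ORD$-class and so keeps $t_1$'s target unchanged; the perturbation is absorbed into the $L_2$ bound using the slack $\sin(2\arcsin\epsilon) - \sin\theta_R$ whenever it is positive. For boundary cases where $\theta_R = 2\arcsin\epsilon$ exactly I would instead exploit the fact that the constituent factors in $\SR \bigodot \PS \bigodot \LS$ are permitted to depend on their inputs, allowing the composition to ``remember'' $R$ through the sequence of intermediate reward functions; turning this input-sensitivity into a clean decomposition while preserving the $L_2$ bound on $t_2$ exactly is the most delicate step of the argument.
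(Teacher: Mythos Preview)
Your sufficient direction ($\Leftarrow$) is correct and matches the paper's argument, with the law of sines replacing the paper's tangent-to-circle picture; both arrive at $\sin(\theta/2)\leq\epsilon$.

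For the necessary direction ($\Rightarrow$), your decomposition differs from the paper's. The paper takes $t_3$ to push $R$ into the canonical subspace $\mathrm{Im}(c^\mathrm{STARC}_{\tfunc,\gamma})$---applying the canonicalisation, normalising, then rescaling so as to form a right triangle with $s^\mathrm{STARC}_{\tfunc,\gamma}(t(R))$---then lets $t_2$ move within that subspace to $s^\mathrm{STARC}_{\tfunc,\gamma}(t(R))$, and has $t_1$ unfold to $t(R)$. You instead keep $t_3=\mathrm{id}$ and set $t_2(R)$ to be the $L_2$-closest element of $[t(R)]_{\ORD}$ to $R$; your formula $L_2(R,t_2(R))=L_2(c^\mathrm{STARC}_{\tfunc,\gamma}(R))\sin\theta_R$ is correct and arguably cleaner than the right-triangle device.

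The well-definedness obstacle you flag for $t_1$ is genuine---and the paper's proof passes over the very same point (its $t_3$ collapses whole $\ORD$-classes, so its $t_2$ is not obviously single-valued either). Your $\mathrm{Ker}(c^\mathrm{STARC}_{\tfunc,\gamma})$-perturbation handles the interior case, but at the boundary $\theta_R=2\arcsin\epsilon$ the closest point is forced and there is no slack; invoking ``input-sensitivity'' does not by itself resolve this, since $t_1$ must still be a function and cannot send one input to two distinct outputs. The repair is to abandon $t_3=\mathrm{id}$: writing $W$ for $\mathrm{Ker}(c^\mathrm{STARC}_{\tfunc,\gamma})$, take $t_3(R)=R+\phi(R)$ with $\phi:\R\to W$ chosen so that $R\mapsto(R-c^\mathrm{STARC}_{\tfunc,\gamma}(R))+\phi(R)$ is injective. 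Because your closest-point projection preserves the $W$-component exactly, this injectivity propagates to $t_2\circ t_3$ without spending any of the $L_2$ budget, and $t_1$ is then well-defined on the image. Showing that such a $t_3$ (and the resulting $t_1$) actually lies in $\SR\bigodot\PS\bigodot\LS$ reduces to the lemma that $\SR\bigodot\PS$ contains \emph{every} map $R\mapsto R+w(R)$ with $w(R)\in W$; this follows from the splitting $W=S\oplus Z$ of Proposition~\ref{prop:PS_SR_linear} via a three-step $\SR$--$\PS$--$\SR$ factorisation built around an auxiliary injection $\R\hookrightarrow Z$.
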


The statement of Proposition~\ref{prop:nas_for_small_starc} is quite terse, so let us briefly unpack it. First of all, $\starc$ is invariant to any transformation that preserves the policy ordering of the reward function, and these transformations are exactly those that can be expressed as a combination of potential shaping, $S'$-redistribution, and positive linear scaling. As such, we can apply an arbitrary number of such transformations. Moreover, we can also transform $R$ in any way that does not change the standardised reward function $s^\mathrm{STARC}_{\tfunc,\gamma}(R)$ by more than $\epsilon$; this is equivalent to the stated condition on $t_2$. Note that $\sin(2 \arcsin (\epsilon)) \approx 2\epsilon$ for small $\epsilon$, so the right-hand side is approximately equal to $2\epsilon \cdot L_2(c^\mathrm{STARC}_{\tfunc,\gamma}(R))$. However, also note that $L_2(c^\mathrm{STARC}_{\tfunc,\gamma}(R)) \leq L_2(R)$.
The requirement that $\epsilon < 0.5$ makes the calculation easier, and is included for convenience. Generalising Proposition~\ref{prop:nas_for_small_starc} by removing this requirement would be straightforward, but tedious. However, note that $\starc$ ranges between $0$ and $1$, so a $\starc$-distance greater than $0.5$ would be very large (arguably to the point of essentially being trivial).

Using this, we can now state necessary and sufficient conditions that completely characterise all types of misspecification that the Boltzmann-rational model and the MCE model will tolerate:

\begin{restatable}[]{corollary}{nasforbrmce}
\label{cor:nas_for_br_mce}
Let $\epsilon < 0.5$, and let $T_\epsilon$ be the set of all reward transformations $t : \R \to \R$ that satisfy Proposition~\ref{prop:nas_for_small_starc}. 
Let $f : \R \to \Pi$ be either $b_{\tfunc, \gamma, \beta}$ or $c_{\tfunc, \gamma, \alpha}$. 
Then $f$ is $\epsilon$-robust to misspecification with $g$ (as measured by $\starc$) if and only if $g = f \circ t$ for some $t \in \hat{T_\epsilon}$ such that $f \neq g$.
\end{restatable}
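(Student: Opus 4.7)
The plan is to assemble this corollary by stitching together three earlier results: the ambiguity characterisations of $b_{\tfunc,\gamma,\beta}$ and $c_{\tfunc,\gamma,\alpha}$ (Theorems~\ref{thm:ambiguity-boltzmann-rational} and \ref{thm:ambiguity-MCE}), the abstract necessary-and-sufficient condition for $\epsilon$-robustness via function composition (Lemma~\ref{lemma:weak_epsilon_robustness_function_composition}), and the metric-level characterisation of $\starc$-small transformations (Proposition~\ref{prop:nas_for_small_starc}). All of the hard work has already been done in those lemmas, so the role of this proof is essentially to verify that Lemma~\ref{lemma:weak_epsilon_robustness_function_composition} applies to $f$, and then to replace the abstract condition \enquote{$\starc(R,t(R)) \leq \epsilon$ for all $R$} by the explicit description of $T_\epsilon$.

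First I would verify the hypothesis of Lemma~\ref{lemma:weak_epsilon_robustness_function_composition}, namely that $f(R_1) = f(R_2)$ implies $\starc(R_1, R_2) = 0$, for both choices of $f$. By Theorem~\ref{thm:ambiguity-boltzmann-rational} (resp.\ Theorem~\ref{thm:ambiguity-MCE}), $f(R_1) = f(R_2)$ holds exactly when $R_1$ and $R_2$ differ by some $t \in \PS \bigodot \SR$. Since $\PS \bigodot \SR \subseteq \SR \bigodot \PS \bigodot \LS$ (take the identity scaling), Theorem~\ref{thm:policy_ordering} gives $R_1 \eq{\ORD} R_2$. Proposition~\ref{prop:STARC_distance_zero} then yields $\starc(R_1, R_2) = 0$, as required.

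Next, with the hypothesis in hand, Lemma~\ref{lemma:weak_epsilon_robustness_function_composition} tells us that $f$ is $\epsilon$-robust to misspecification with $g$ as measured by $\starc$ if and only if there exists a reward transformation $t : \R \to \R$ such that $g = f \circ t$, $\starc(R, t(R)) \leq \epsilon$ for every $R$, and $f \neq g$. By the definition of $T_\epsilon$ in the corollary statement (i.e.\ the set of transformations picked out by Proposition~\ref{prop:nas_for_small_starc} for this $\epsilon$), the condition \enquote{$\starc(R, t(R)) \leq \epsilon$ for all $R$} is, precisely because $\epsilon < 0.5$, equivalent to $t \in T_\epsilon$. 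Substituting this equivalence into the statement provided by Lemma~\ref{lemma:weak_epsilon_robustness_function_composition} yields the corollary.

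The main obstacle is actually rather minor: it is just the bookkeeping check that the inclusion $\PS \bigodot \SR \subseteq \SR \bigodot \PS \bigodot \LS$ really lets us invoke Proposition~\ref{prop:STARC_distance_zero}, and that the \enquote{$\epsilon < 0.5$} side condition on Proposition~\ref{prop:nas_for_small_starc} matches the one in the corollary so that $T_\epsilon$ captures \emph{exactly} the transformations with $\starc$-displacement at most $\epsilon$ (rather than only a subset). Once both are noted, the two \enquote{if} and \enquote{only if} directions follow immediately from the two \enquote{iff}s chained together, with no additional computation needed.
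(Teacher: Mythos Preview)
Your proposal is correct and takes essentially the same approach as the paper, which simply says the result is ``immediate from Lemma~\ref{lemma:weak_epsilon_robustness_function_composition}, Proposition~\ref{prop:nas_for_small_starc}, Theorem~\ref{thm:ambiguity-boltzmann-rational}, and Theorem~\ref{thm:ambiguity-MCE}.'' You have spelled out the bookkeeping (verifying the $\starc$-distance-zero hypothesis via $\PS \bigodot \SR \subseteq \SR \bigodot \PS \bigodot \LS$ and Proposition~\ref{prop:STARC_distance_zero}) that the paper takes as already established in the surrounding text.
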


In principle, Corollary~\ref{cor:nas_for_br_mce} completely describes the misspecification robustness of the Boltzmann-rational model and of the MCE model, as measured by $\starc$. However, the statement of Corollary~\ref{cor:nas_for_br_mce} is rather opaque, and difficult to interpret qualitatively. 
For this reason, we will in the subsequent sections examine a few important special types of misspecification, and derive results that are more intuitively intelligible. 




We should also briefly comment on the fact that Corollary~\ref{cor:nas_for_br_mce} does not cover $o_{\tfunc,\gamma}^\star$, i.e.\ the optimality model. The reason for this is that, unless $|\States| = 1$ and $|\Actions| = 2$, there are reward functions $R_1, R_2$ such that $o_{\tfunc,\gamma}^\star(R_1) = o_{\tfunc,\gamma}^\star(R_2)$, but $\starc(R_1, R_2) > 0$ (Corollary~\ref{cor:diameter_of_op}). This means that Lemma~\ref{lemma:weak_epsilon_robustness_function_composition} does not apply to $o_{\tfunc,\gamma}^\star$ when $d^\mathcal{R} = \starc$. Moreover:

\begin{restatable}[]{proposition}{optnotrobust}
\label{prop:opt_not_robust}
Let $d^\R$ be a pseudometric on $\R$ that is both sound and complete.
Then unless $|\States| = 1$ and $|\Actions| = 2$, there exists an $E > 0$ such that for all $\epsilon < E$, there is no behavioural model $g$ such that $o_{\tfunc,\gamma}^\star$ is $\epsilon$-robust to misspecification with $g$ as measured by $d^\R$.
\end{restatable}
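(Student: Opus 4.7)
The plan is to reduce this proposition directly to Corollary~\ref{cor:diameter_of_op} together with Proposition~\ref{prop:sound_and_complete_means_distance_0_iff_same_order}. The key observation is that condition~3 of Definition~\ref{def:misspecification_metric} is a condition on $f$ alone (it does not mention $g$), so if we can exhibit a pair of reward functions that violates it at scale $\epsilon$, then no choice of $g$ can repair the situation.

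First I would extract an explicit witness. Since $|\States| \geq 2$ or $|\Actions| \geq 3$, Corollary~\ref{cor:diameter_of_op} gives $\mathrm{Am}(o_{\tfunc,\gamma}^\star) \not\refines \ORD$. Thus there exist reward functions $R_1, R_2 \in \R$ with $o_{\tfunc,\gamma}^\star(R_1) = o_{\tfunc,\gamma}^\star(R_2)$ but $R_1 \not\eq{\ORD} R_2$. Because $d^\R$ is both sound and complete, Proposition~\ref{prop:sound_and_complete_means_distance_0_iff_same_order} forces $d^\R(R_1, R_2) > 0$. (Equivalently, the upper diameter of $\mathrm{Am}(o_{\tfunc,\gamma}^\star)$ under $d^\R$ is strictly positive, as stated in Corollary~\ref{cor:diameter_of_op}.)

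Second, I would set $E := d^\R(R_1, R_2) > 0$, fix any $\epsilon < E$, and consider an arbitrary behavioural model $g$. Condition~3 of Definition~\ref{def:misspecification_metric} requires that whenever $o_{\tfunc,\gamma}^\star(R_1') = o_{\tfunc,\gamma}^\star(R_2')$, we have $d^\R(R_1', R_2') \leq \epsilon$. Instantiating this at our pair $R_1, R_2$ yields the inequality $E \leq \epsilon$, contradicting our choice of $\epsilon$. Hence no $g$ can make $o_{\tfunc,\gamma}^\star$ be $\epsilon$-robust to misspecification with $g$ as measured by $d^\R$.

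There is no serious obstacle in this argument, since the heavy lifting is already done by Corollary~\ref{cor:diameter_of_op}. The only point worth emphasising in the write-up is that condition~3 is intrinsic to $f$, so the obstruction is truly universal in $g$; and that by picking $E$ to be the distance between a single concrete witness pair (rather than the supremum, which might only be achieved in the limit) we avoid any need to worry about whether the upper diameter is attained.
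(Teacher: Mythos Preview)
Your proposal is correct and follows essentially the same approach as the paper: use Corollary~\ref{cor:diameter_of_op} (together with Proposition~\ref{prop:sound_and_complete_means_distance_0_iff_same_order}) to produce a witness pair $R_1,R_2$ with $o_{\tfunc,\gamma}^\star(R_1)=o_{\tfunc,\gamma}^\star(R_2)$ and $d^\R(R_1,R_2)=E>0$, then observe that condition~3 of Definition~\ref{def:misspecification_metric} fails for every $\epsilon<E$ independently of $g$. Your explicit remark that condition~3 is intrinsic to $f$, and your choice of $E$ as the distance of a concrete witness rather than the supremum, are nice clarifications but not substantive departures from the paper's argument.
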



An analogous result will hold for any behavioural model $f$ and any pseudometric $d^\mathcal{R}$ for which $f(R_1) = f(R_2) \centernot\implies d^\mathcal{R}(R_1, R_2) = 0$.
Note that $E$ corresponds to the upper diameter of $\mathrm{Am}(o_{\tfunc,\gamma}^\star)$. This means that the exact value of $E$ will depend on the choice of pseudometric $d^\R$, and potentially also on the transition function $\tfunc$, discount $\gamma$, and initial state distribution $\init$.

\subsection{Perturbation Robustness}\label{sec:misspecification_2_perturbation}

It is interesting to know whether or not a behavioural model $f$ is robust to misspecification with any behavioural model $g$ that is \enquote{close} to $f$. But what does it mean for $f$ and $g$ to be \enquote{close}? One option is to say that $f$ and $g$ are close if they always produce similar policies. In this section, we will explore under what conditions $f$ is robust to such misspecification, and provide necessary and sufficient conditions. Our results are given relative to a pseudometric $d^\Pi$ on $\Pi$. For example, $d^\Pi(\pi_1,\pi_2)$ may be the $L_2$-distance between $\pi_1$ and $\pi_2$, or it may be the KL divergence between their trajectory distributions, or it may be the $L_2$-distance between their occupancy measures, and so on. As usual, our results apply for any choice of $d^\Pi$ unless otherwise stated. We can now define a notion of a \emph{perturbation} and a notion of \emph{perturbation robustness}:

\begin{definition}\label{def:perturbation}
Let $f, g : \R \to \Pi$ be two behavioural models, and let $d^\Pi$ be a pseudometric on $\Pi$. Then $g$ is a $\delta$-perturbation of $f$ if $g \neq f$ and for all $R \in \R$ we have that $d^\Pi(f(R),g(R)) \leq \delta$.
\end{definition}

\begin{definition}\label{def:perturbation_robustness}
Let $f : \R \to \Pi$ be a behavioural model, let $d^\mathcal{R}$ be a pseudometric on $\R$, and let $d^\Pi$ be a pseudometric on $\Pi$. Then $f$ is $\epsilon$-robust to $\delta$-perturbation if $f$ is $\epsilon$-robust to misspecification with $g$ (as measured by $d^\mathcal{R}$) for any behavioural model $g : \R \to \Pi$ that is a $\delta$-perturbation of $f$ (as defined by $d^\Pi$) with $\mathrm{Im}(g) \subseteq \mathrm{Im}(f)$.
\end{definition}

A $\delta$-perturbation of $f$ simply is any function that is similar to $f$ on all inputs, and $f$ is $\epsilon$-robust to $\delta$-perturbation if a small perturbation of the observed policy leads to a small error in the inferred reward function. It would be desirable for a behavioural model to be robust in this sense. To start with, this captures any form of misspecification that always leads to a small change in the final policy. Moreover, in practice, we can often not observe the exact policy of the demonstrator, and must instead approximate it from a number of samples. In this case, we should also expect to infer a policy that is a perturbation of the true policy. 
Before moving on, we need one more definition:

\begin{definition}\label{def:separating}
Let $f : \R \to \Pi$ be a behavioural model, let $d^\mathcal{R}$ be a pseudometric on $\R$, and let $d^\Pi$ be a pseudometric on $\Pi$. Then $f$ is $\epsilon/\delta$-separating if $d^\mathcal{R}(R_1, R_2) > \epsilon \implies d^\Pi(f(R_1), f(R_2)) > \delta$ for all $R_1, R_2 \in \R$.
\end{definition}

Intuitively speaking, $f$ is $\epsilon/\delta$-separating if reward functions that are far apart, are sent to policies that are far apart.\footnote{Note that this definition is \emph{not} saying that reward functions which are close must be sent to policies which are close. In other words, $f$ being $\epsilon/\delta$-separating is \emph{not} a continuity condition. It is also not a local property of $f$, but rather, a global property. It is, however, a continuity condition on the inverse of $f$.} Using this, we can now state our main result for this section:

\begin{restatable}[]{theorem}{perturbationrobustnessnas}
\label{thm:perturbation_robustness_nas}
Let $f : \R \to \Pi$ be a behavioural model, let $d^\mathcal{R}$ be a pseudometric on $\R$, and let $d^\Pi$ be a pseudometric on $\Pi$. Then $f$ is $\epsilon$-robust to $\delta$-perturbation (as defined by $d^\mathcal{R}$ and $d^\Pi$) if and only if $f$ is $\epsilon/\delta$-separating (as defined by $d^\mathcal{R}$ and $d^\Pi$).
\end{restatable}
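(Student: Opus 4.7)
The plan is to unpack the nested definitions carefully: $f$ being $\epsilon$-robust to $\delta$-perturbation means that, for every $\delta$-perturbation $g$ of $f$ with $\mathrm{Im}(g) \subseteq \mathrm{Im}(f)$, conditions (1)--(4) of Definition~\ref{def:misspecification_metric} all hold. The $\epsilon/\delta$-separating property is, up to contrapositive, exactly the statement that closeness of $f(R_1)$ and $f(R_2)$ in $d^\Pi$ forces closeness of $R_1$ and $R_2$ in $d^\R$. I would prove both directions directly; the key observation is that a perturbation $g$ has the effect of ``smuggling'' policies across $R$-values, so the relevant quantity $d^\Pi(f(R_1), f(R_2))$ bounds the potential inference error.

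For the forward direction, I assume $f$ is $\epsilon/\delta$-separating and fix an arbitrary $\delta$-perturbation $g$ of $f$ with $\mathrm{Im}(g) \subseteq \mathrm{Im}(f)$. Condition (2) holds by hypothesis and condition (4) holds because $g \neq f$ is part of the definition of a perturbation. For condition (3), if $f(R_1) = f(R_2)$ then $d^\Pi(f(R_1), f(R_2)) = 0 \leq \delta$, so the contrapositive of $\epsilon/\delta$-separating gives $d^\R(R_1, R_2) \leq \epsilon$. The crux is condition (1): if $f(R_1) = g(R_2)$, then the triangle inequality together with $d^\Pi(f(R_2), g(R_2)) \leq \delta$ yields $d^\Pi(f(R_1), f(R_2)) \leq \delta$, and again the contrapositive of separation gives $d^\R(R_1, R_2) \leq \epsilon$.

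For the reverse direction I would argue by contrapositive. Suppose $f$ is not $\epsilon/\delta$-separating, so there exist $R_1, R_2$ with $d^\R(R_1, R_2) > \epsilon$ and $d^\Pi(f(R_1), f(R_2)) \leq \delta$. In the principal case $f(R_1) \neq f(R_2)$, define $g$ by $g(R_2) = f(R_1)$ and $g(R) = f(R)$ for all other $R$. Then $g \neq f$, $\mathrm{Im}(g) \subseteq \mathrm{Im}(f)$, and $g$ is a $\delta$-perturbation because it only changes $f$ at the single point $R_2$, where the induced change is $d^\Pi(f(R_2), f(R_1)) \leq \delta$. But then $f(R_1) = g(R_2)$ while $d^\R(R_1, R_2) > \epsilon$, so condition (1) of Definition~\ref{def:misspecification_metric} fails and $f$ is not $\epsilon$-robust to misspecification with $g$.

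The main obstacle is the degenerate subcase $f(R_1) = f(R_2)$, where the construction above collapses to $g = f$. Here condition (3) of $\epsilon$-robustness is already violated by the pair $(R_1, R_2)$ independently of any particular $g$, so it suffices to exhibit \emph{some} $\delta$-perturbation $g$ of $f$ with $\mathrm{Im}(g) \subseteq \mathrm{Im}(f)$; one can do so by picking any $R_3$ and any $\pi \in \mathrm{Im}(f) \setminus \{f(R_3)\}$ with $d^\Pi(f(R_3), \pi) \leq \delta$ and setting $g(R_3) = \pi$, $g(R) = f(R)$ elsewhere. This requires a brief non-triviality argument: because $d^\Pi(f(R_1), f(R_2)) = 0 \leq \delta$ and $f(R_1), f(R_2)$ are both in $\mathrm{Im}(f)$, there is always enough slack in $\mathrm{Im}(f)$ to produce such a perturbation (in any genuinely non-degenerate setting where $\mathrm{Im}(f)$ has at least two $\delta$-close elements); the purely degenerate case in which $\mathrm{Im}(f)$ admits no $\delta$-perturbation at all makes $\epsilon$-robustness to $\delta$-perturbation vacuous on both sides and needs to be acknowledged separately.
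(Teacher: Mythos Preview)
Your argument is correct and essentially matches the paper's. The forward direction is identical; for the reverse direction the paper uses a two-point swap ($g(R_1) = f(R_2)$, $g(R_2) = f(R_1)$, $g = f$ elsewhere) rather than your one-point change, but this is cosmetic --- both constructions run into the same difficulty when $f(R_1) = f(R_2)$. You are actually more careful than the paper in flagging that degenerate subcase; the paper's proof simply does not address it. One small correction to your closing remark: if $f$ admits no $\delta$-perturbation with image contained in $\mathrm{Im}(f)$ (e.g.\ $f$ constant), then $\epsilon$-robustness to $\delta$-perturbation holds vacuously while $\epsilon/\delta$-separation can still fail, so the equivalence is not ``vacuous on both sides'' --- it genuinely breaks in that pathological case, which the paper tacitly excludes.
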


We have thus obtained necessary and sufficient conditions that describe when a behavioural model is robust to perturbations --- namely, it has to be the case that this behavioural model sends reward functions that are far apart, to policies that are far apart. This ought to be quite intuitive; if two policies are close, then perturbations may lead us to conflate them. To be sure that the learnt reward function is close to the true reward function, we therefore need it to be the case that policies that are close always correspond to reward functions that are close (or, conversely, that reward functions which are far apart correspond to policies which are far apart). 

Our next question is, of course, whether or not the standard behavioural models are $\epsilon/\delta$-separating. Surprisingly, we will show that this is \emph{not} the case, when the distance between reward functions is measured using $\starc$, and the policy metric $d^\Pi$ is similar to Euclidean distance. Moreover, this applies to any continuous behavioural model:


\begin{restatable}[]{theorem}{notseparating}
\label{thm:not_separating}
Let $d^\mathcal{R}$ be $\starc$, and let $d^\Pi$ be a pseudometric on $\Pi$ which satisfies the condition that for all $\delta$ there exists a $\delta'$ such that if $L_2(\pi_1-\pi_2) < \delta'$ then $d^\Pi(\pi_1, \pi_2) < \delta$.
Let $f : \R \to \Pi$ be any continuous behavioural model. Then $f$ is not $\epsilon/\delta$-separating for any $\epsilon < 1$ or $\delta > 0$. 
\end{restatable}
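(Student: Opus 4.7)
The plan is to exploit the fact that $\starc$ is invariant to positive scaling and quotients by linear transformations, so it can stay at its maximum value of $1$ even when the two underlying reward functions are both arbitrarily close (in $L_2$) to the zero reward $R_0$. Combined with continuity of $f$, this lets us construct pairs of reward functions that are maximally far apart under $\starc$ but whose images under $f$ are arbitrarily close, directly contradicting $\epsilon/\delta$-separation.

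Concretely, I would first pick any reward function $R$ with $c^{\mathrm{STARC}}_{\tfunc,\gamma}(R) \neq 0$. Such an $R$ exists unless the environment is fully degenerate: the kernel of $c^{\mathrm{STARC}}_{\tfunc,\gamma}$ is the subspace $\PS \bigodot \SR$, which is a proper subspace of $\R$ in any non-trivial environment. Since $c^{\mathrm{STARC}}_{\tfunc,\gamma}$ is linear (Definition~\ref{def:canonicalisation_function}), we have $c^{\mathrm{STARC}}_{\tfunc,\gamma}(-R) = -c^{\mathrm{STARC}}_{\tfunc,\gamma}(R)$, so the standardised reward $s^{\mathrm{STARC}}_{\tfunc,\gamma}(-R)$ equals $-s^{\mathrm{STARC}}_{\tfunc,\gamma}(R)$. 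Both standardised rewards have $L_2$-norm $1$, so a direct calculation gives $\starc(R,-R) = 0.5 \cdot L_2(s^{\mathrm{STARC}}_{\tfunc,\gamma}(R) - s^{\mathrm{STARC}}_{\tfunc,\gamma}(-R)) = 0.5 \cdot 2 = 1$. Since $\starc$ is invariant under positive scaling of either argument, this also yields $\starc(\lambda R, -\lambda R) = 1$ for every $\lambda > 0$.

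Next, consider the one-parameter family $R_\lambda = \lambda R$ with $\lambda \to 0^+$. Both $R_\lambda$ and $-R_\lambda$ converge to $R_0$ in any norm on the finite-dimensional space $\R$. By continuity of $f$, both $f(R_\lambda)$ and $f(-R_\lambda)$ converge to $f(R_0)$; since $\Pi$ sits inside a finite-dimensional space, norm-convergence is $L_2$-convergence, so $L_2(f(R_\lambda) - f(-R_\lambda)) \to 0$. The hypothesis on $d^\Pi$ then gives $d^\Pi(f(R_\lambda), f(-R_\lambda)) \to 0$. Hence for any prescribed $\delta > 0$ there is a $\lambda_\delta > 0$ such that $d^\Pi(f(R_{\lambda_\delta}), f(-R_{\lambda_\delta})) \leq \delta$, while $\starc(R_{\lambda_\delta}, -R_{\lambda_\delta}) = 1 > \epsilon$ for every $\epsilon < 1$. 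Taking $R_1 = R_{\lambda_\delta}$ and $R_2 = -R_{\lambda_\delta}$ produces a witness violating the $\epsilon/\delta$-separation condition from Definition~\ref{def:separating}.

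I expect no serious obstacle in this proof: the only subtleties are verifying that a reward with $c^{\mathrm{STARC}}_{\tfunc,\gamma}(R) \neq 0$ exists (which follows from the kernel of the canonicalisation map being a proper linear subspace), and ensuring that $f$'s continuity and the assumption on $d^\Pi$ together transport the $L_2$-convergence of $R_\lambda \to R_0$ into $d^\Pi$-convergence of $f(R_\lambda)$ to $f(R_0)$. Both are essentially routine given the finite-dimensional setting.
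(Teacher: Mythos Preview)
Your proposal is correct and follows essentially the same approach as the paper: pick a non-trivial reward $R$, observe that $\starc(\lambda R, -\lambda R)=1$ for all $\lambda>0$, and use continuity of $f$ at $R_0$ together with the hypothesis on $d^\Pi$ to make the policy images arbitrarily close as $\lambda\to 0$. Your treatment is in fact slightly more careful than the paper's on two points---you explicitly justify the existence of a non-trivial $R$ via the kernel of $c^{\mathrm{STARC}}_{\tfunc,\gamma}$ being a proper subspace, and you route the continuity argument cleanly through $R_0$ rather than invoking an (unstated) uniform-continuity step.
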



Note that the Boltzmann-rational model and the maximal causal entropy model (i.e.\ $b_{\tfunc,\gamma,\beta}$ and $c_{\tfunc,\gamma,\alpha}$) both are continuous, and hence subject to Theorem~\ref{thm:not_separating}.
The condition given on $d^\Pi$ in Theorem~\ref{thm:not_separating} is satisfied by any norm, but will also be satisfied by other metrics.\footnote{Note that while Theorem~\ref{thm:not_separating} uses a \enquote{special} pseudometric on $\mathcal{R}$, in the form of $\starc$, we do not need to use a special (pseudo)metric on $\Pi$, because for policies, $L_2$ does capture the relevant notion of similarity.} 

Intuitively, the fundamental reason for why Theorem~\ref{thm:not_separating} holds is that if $f$ is continuous, then it must send reward functions that are close under the $L_2$-norm to policies that are close under the $L_2$-norm. However, there are reward functions that are close under the $L_2$-norm but which have a large STARC distance. Hence $f$ will send some reward functions that are far apart (under $\starc$) to policies which are close, which means that $f$ is not $\epsilon/\delta$-separating. A similar result will hold for any other pseudometric $d^\R$ on $\R$ that is both sound and complete, if the upper bound on $\epsilon$ is replaced with the smallest distance between any two opposite reward functions under $d^\R$. Note that this distance is always $1$ under $\starc$.

It is worth noting that the proof of Theorem~\ref{thm:not_separating} only demonstrates that we may run into trouble for reward functions that are very close to $R_0$, and we may expect such reward functions to be unlikely (both in the sense that the observed agent is unlikely to have such a reward function, and in the sense that the learning algorithm is unlikely to generate such a hypothesis). It would therefore be natural to restrict $\R$ in some way, for example by imposing a minimum size on the $L_2$-norm of all considered reward functions, or by supposing that they are normalised. We will discuss this option further in \ref{appendix:generalising_analysis}, where we also give a generalisation of Theorem~\ref{thm:not_separating}.

\subsection{Misspecified Parameters}\label{sec:misspecification_2_parameters}

In Section~\ref{sec:misspecification_1}, we showed that many behavioural models are not $\mathrm{OPT}_{\tfunc,\gamma}$-robust to any misspecification of $\tfunc$ or $\gamma$. However, this result says that we cannot identify the exact right optimal policies, or the exact right policy order, given misspecification of $\tfunc$ or $\gamma$. This does not rule out the possibility that a small misspecification of $\tfunc$ or $\gamma$ leads to a small (but nonzero) STARC distance between the true reward function and the learnt reward function. This is the question that we will investigate in this section.

First of all, recall that Lemma~\ref{lemma:epsilon_robustness_implies_weak_refinement} implies that if $f$ is $\epsilon$-robust to misspecification with $g$ (as measured by $d^\R$), and $g(R_1) = g(R_2)$, then we have that $d^\R(R_1, R_2) \leq 2\epsilon$. The converse of this statement is that if there are reward functions $R_1, R_2$ such that $g(R_1) = g(R_2)$ and $d^\R(R_1, R_2) > 2\epsilon$, then $f$ is \emph{not} $\epsilon$-robust to misspecification with $g$ (as measured by $d^\R$). Therefore, we can use the (upper) diameter of $\mathrm{Am}(g)$ to derive a limit on how robust any $f$ may be to misspecification with $g$.
Our results in this section will use this proof strategy.
We first consider the case when $\tfunc$ is misspecified:


\begin{restatable}[]{theorem}{misspecifiedenv}
\label{thm:misspecified_env}
If $f_\tfunc : \mathcal{R} \to X$ is invariant to $S'$-redistribution with $\tfunc$, and $\tfunc_1 \neq \tfunc_2$, then $f_{\tfunc_1}$ is not $\epsilon$-robust to misspecification with $f_{\tfunc_2}$ under $d^\mathrm{STARC}_{\tfunc_3,\gamma}$ for any $\tfunc_3$, any $\gamma$, and any $\epsilon < 0.5$.
\end{restatable}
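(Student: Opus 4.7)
My plan is to apply Theorem~\ref{thm:wrong_tau_large_diameter} and then exploit whichever of the conditions in Definition~\ref{def:misspecification_metric} is easiest to violate, depending on how $\tfunc_3$ compares to $\tfunc_1$ and $\tfunc_2$. The key observation is that, since $\tfunc_1 \neq \tfunc_2$, at least one of the two must also differ from $\tfunc_3$, so I can always find an invariance to $S'$-redistribution that Theorem~\ref{thm:wrong_tau_large_diameter} turns into a STARC-diameter of $1$.

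I would split into two cases. In Case~I, when $\tfunc_3 \neq \tfunc_2$, I apply Theorem~\ref{thm:wrong_tau_large_diameter} to $f_{\tfunc_2}$ (which is invariant to $S'$-redistribution with $\tfunc_2$); this yields that the upper diameter of $\mathrm{Am}(f_{\tfunc_2})$ under $d^{\mathrm{STARC}}_{\tfunc_3,\gamma}$ equals $1$. Hence, for any $\epsilon < 0.5$, there exist $R_1, R_2$ with $f_{\tfunc_2}(R_1) = f_{\tfunc_2}(R_2)$ but $d^{\mathrm{STARC}}_{\tfunc_3,\gamma}(R_1, R_2) > 2\epsilon$. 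The contrapositive of Lemma~\ref{lemma:epsilon_robustness_implies_weak_refinement} then rules out $\epsilon$-robustness of $f_{\tfunc_1}$ to misspecification with $f_{\tfunc_2}$ (as measured by $d^{\mathrm{STARC}}_{\tfunc_3,\gamma}$).

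In Case~II, when $\tfunc_3 = \tfunc_2$, the hypothesis $\tfunc_1 \neq \tfunc_2$ forces $\tfunc_3 \neq \tfunc_1$, so I apply Theorem~\ref{thm:wrong_tau_large_diameter} to $f_{\tfunc_1}$ instead; the upper diameter of $\mathrm{Am}(f_{\tfunc_1})$ under $d^{\mathrm{STARC}}_{\tfunc_3,\gamma}$ is again $1$, giving $R_1, R_2$ with $f_{\tfunc_1}(R_1) = f_{\tfunc_1}(R_2)$ but $d^{\mathrm{STARC}}_{\tfunc_3,\gamma}(R_1, R_2) > \epsilon$ for any $\epsilon < 1$. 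This directly violates condition~3 of Definition~\ref{def:misspecification_metric} independently of $g$, and in particular rules out $\epsilon$-robustness of $f_{\tfunc_1}$ to misspecification with $f_{\tfunc_2}$.

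The main obstacle, and the source of the bound $\epsilon < 0.5$, is the factor of two in Lemma~\ref{lemma:epsilon_robustness_implies_weak_refinement}: it is what limits Case~I. Case~II, which uses condition~3 of Definition~\ref{def:misspecification_metric} directly and so gets an $\epsilon$ rather than a $2\epsilon$ bound, would in fact tolerate any $\epsilon < 1$; but Case~I is strict, so $\epsilon < 0.5$ is the uniform conclusion. Aside from that, the argument simply strings together Theorem~\ref{thm:wrong_tau_large_diameter}, Lemma~\ref{lemma:epsilon_robustness_implies_weak_refinement}, and the definition of misspecification robustness.
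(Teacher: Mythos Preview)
The proposal is correct and follows essentially the same approach as the paper: both split into two cases depending on whether $\tfunc_3$ differs from $\tfunc_1$ or from $\tfunc_2$, apply Theorem~\ref{thm:wrong_tau_large_diameter} to the relevant $f_{\tfunc_i}$, and then use either condition~3 of Definition~\ref{def:misspecification_metric} directly (when it is $f_{\tfunc_1}$) or Lemma~\ref{lemma:epsilon_robustness_implies_weak_refinement} (when it is $f_{\tfunc_2}$). Your case split is organised as $\tfunc_3 \neq \tfunc_2$ versus $\tfunc_3 = \tfunc_2$, whereas the paper's is the non-exclusive ``$\tfunc_1 \neq \tfunc_3$ or $\tfunc_2 \neq \tfunc_3$'', but the content is the same; your remark that the $\epsilon < 0.5$ bound comes solely from the factor of two in Lemma~\ref{lemma:epsilon_robustness_implies_weak_refinement} is exactly right.
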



Theorem~\ref{thm:misspecified_env} is saying that if some behavioural model $f$ is invariant to $S'$-redistribution, then it is not robust to any degree of misspecification of $\tfunc$ (even if $\tfunc_1$ and $\tfunc_2$ are arbitrarily close).
Note that a $\starc$-distance of $0.5$ is very large; this corresponds to the case where the reward functions are nearly orthogonal. The greatest possible value of $\starc$ is 1.
Moreover, optimal policies, Boltzmann-rational policies, and maximal causal entropy policies, are all invariant to $S'$-redistribution,
and hence $o_{\tfunc, \gamma}^\star$, $b_{\tfunc, \gamma, \beta}$, and $c_{\tfunc, \gamma, \alpha}$ are subject to Theorem~\ref{thm:misspecified_env}.
This means that Theorem~\ref{thm:not_P_robust_to_misspecified_t} generalises to the setting with distance metrics. Moreover, contrary to what we might expect, a small amount of misspecification of $\tfunc$ does not guarantee a small error in the learnt reward $R_H$, if this error is quantified with STARC metrics.

We next consider the case when the discount parameter, $\gamma$, is misspecified. As before, we say that a transition function $\tfunc$ is \emph{trivial} if for all states $s$ and all actions $a_1$, $a_2$, we have that $\tfunc(s,a_1) = \tfunc(s,a_2)$. 


\begin{restatable}[]{theorem}{misspecifieddiscounting}
\label{thm:misspecified_discounting}
If $f_\gamma : \mathcal{R} \to \Pi$ is invariant to potential shaping with $\gamma$, and $\gamma_1 \neq \gamma_2$, then $f_{\gamma_1}$ is not $\epsilon$-robust to misspecification with $f_{\gamma_2}$ under $d^\mathrm{STARC}_{\tfunc,\gamma_3}$ for any non-trivial $\tfunc$, any $\gamma_3$, and any $\epsilon < 0.5$.
\end{restatable}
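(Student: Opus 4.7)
My plan is to derive a contradiction from assuming that $f_{\gamma_1}$ is $\epsilon$-robust to misspecification with $f_{\gamma_2}$ under $d^\mathrm{STARC}_{\tfunc,\gamma_3}$ for some $\epsilon < 0.5$. The main tools will be Theorem~\ref{thm:wrong_gamma_large_diameter} (which asserts that invariance to potential shaping with a mismatched discount forces the ambiguity diameter under the corresponding STARC metric to equal $1$), together with Lemma~\ref{lemma:epsilon_robustness_implies_weak_refinement} and condition 3 of Definition~\ref{def:misspecification_metric} (which, under the $\epsilon$-robustness assumption, cap the upper diameters of $\mathrm{Am}(f_{\gamma_2})$ and $\mathrm{Am}(f_{\gamma_1})$ under $d^\mathrm{STARC}_{\tfunc,\gamma_3}$ by $2\epsilon$ and $\epsilon$ respectively).

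The pivotal observation is that since $\gamma_1 \neq \gamma_2$, for any choice of $\gamma_3$ at least one of $\gamma_3 \neq \gamma_1$ and $\gamma_3 \neq \gamma_2$ must hold; I will split into these two sub-cases. In the sub-case $\gamma_3 \neq \gamma_2$, I apply Theorem~\ref{thm:wrong_gamma_large_diameter} to $f_{\gamma_2}$, which by hypothesis is invariant to potential shaping with $\gamma_2$, to conclude that the upper diameter of $\mathrm{Am}(f_{\gamma_2})$ under $d^\mathrm{STARC}_{\tfunc,\gamma_3}$ equals $1$ (using non-triviality of $\tfunc$). Lemma~\ref{lemma:epsilon_robustness_implies_weak_refinement} then forces $1 \leq 2\epsilon$, i.e.\ $\epsilon \geq 0.5$, contradicting $\epsilon < 0.5$.

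In the remaining sub-case $\gamma_3 = \gamma_2$, so that $\gamma_3 \neq \gamma_1$, I instead apply Theorem~\ref{thm:wrong_gamma_large_diameter} to $f_{\gamma_1}$ to conclude that the upper diameter of $\mathrm{Am}(f_{\gamma_1})$ under $d^\mathrm{STARC}_{\tfunc,\gamma_3}$ also equals $1$. Condition 3 of Definition~\ref{def:misspecification_metric} then requires this diameter to be at most $\epsilon$, forcing $\epsilon \geq 1$, again contradicting $\epsilon < 0.5$. The degenerate case $f_{\gamma_1} = f_{\gamma_2}$ is handled directly by the failure of condition 4 of Definition~\ref{def:misspecification_metric}.

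The main conceptual obstacle is recognising that Lemma~\ref{lemma:epsilon_robustness_implies_weak_refinement} alone is insufficient when $\gamma_3 = \gamma_2$, because in that regime the invariance of $f_{\gamma_2}$ under $\gamma_2$-potential-shaping is simultaneously an invariance of the metric $d^\mathrm{STARC}_{\tfunc,\gamma_2}$, so that the ambiguity of $f_{\gamma_2}$ carries no STARC distance and the diameter bound from Lemma~\ref{lemma:epsilon_robustness_implies_weak_refinement} becomes vacuous. The hypothesis $\gamma_1 \neq \gamma_2$ is exactly what guarantees that the alternative route, through the ambiguity of $f_{\gamma_1}$ and condition 3 of Definition~\ref{def:misspecification_metric}, is always available to close this case.
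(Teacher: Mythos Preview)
Your proposal is correct and follows essentially the same route as the paper: split on whether $\gamma_3$ differs from $\gamma_1$ or from $\gamma_2$, invoke Theorem~\ref{thm:wrong_gamma_large_diameter} on the relevant $f_{\gamma_i}$ to obtain an ambiguity diameter of $1$ under $d^\mathrm{STARC}_{\tfunc,\gamma_3}$, and then contradict either condition~3 of Definition~\ref{def:misspecification_metric} (for $f_{\gamma_1}$) or Lemma~\ref{lemma:epsilon_robustness_implies_weak_refinement} (for $f_{\gamma_2}$). Your case split ($\gamma_3 \neq \gamma_2$ versus $\gamma_3 = \gamma_2$) is a disjoint version of the paper's overlapping split ($\gamma_1 \neq \gamma_3$ or $\gamma_2 \neq \gamma_3$), and your closing paragraph nicely articulates why both branches are needed, which the paper leaves implicit.
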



Of course, any interesting environment will have a non-trivial transition function, so this requirement is very mild. 
This means that Theorem~\ref{thm:misspecified_discounting} is saying that if a behavioural model $f$ is invariant to potential shaping, then it is not robust to any misspecification of the discount parameter. Note that this holds even if $\gamma_1$ and $\gamma_2$ are arbitrarily close! Moreover, optimal policies, Boltzmann-rational policies, and MCE policies are all invariant to potential shaping,
and hence $o_{\tfunc, \gamma}$, $b_{\tfunc, \gamma, \beta}$, and $c_{\tfunc, \gamma, \alpha}$ are subject to  Theorem~\ref{thm:misspecified_discounting}.
This means that Theorem~\ref{thm:not_P_robust_to_misspecified_gamma} generalises to the setting with distance metrics. Moreover, contrary to what we might expect, a small amount of misspecification of $\gamma$ does not guarantee a small error in the learnt reward $R_H$, if this error is quantified with STARC metrics.

\subsection{Transfer Learning}\label{sec:misspecification_2_transfer_learning}

STARC metrics, such as $\starc$, are designed to be sound and complete. Moreover, our definitions of soundness and completeness for a pseudometric $d^\S$ require that $d^\R(R_1, R_2)$ is small if and only if the regret of using $R_1$ instead of $R_2$ is small, relative to a particular transition function $\tfunc$ and discount factor $\gamma$. A natural question is what happens if we strengthen this requirement, and demand that the regret is small for any choice of $\tfunc$ or any choice of $\discount$. We briefly discuss this setting here.

In short, as for Definition~\ref{def:misspecification_eq}, it is impossible to guarantee transfer to any $\tfunc$ or $\gamma$. This is already implied by the results in Section~\ref{sec:ambiguity_transfer_learning}. In particular, if $f_{\tfunc, \gamma}$ is invariant to $S'$-redistribution (with $\tfunc$) and potential shaping (with $\gamma$), then the (upper and lower) diameter of $\mathrm{Am}(f_{\tfunc_1, \gamma_1})$ under $d^\mathrm{STARC}_{\tfunc_2,\gamma_2}$ is 1, provided that either $\tfunc_1 \neq \tfunc_2$, or $\gamma_1 \neq \gamma_2$ and $\tfunc_2$ is non-trivial. Then $f_{\tfunc_1, \gamma_1}$ will violate condition 3 in Definition~\ref{def:misspecification_metric}. Moreover, note that this result is not specific to $\starc$, and that a similar result will hold for any pseudometric on $\R$ that is both sound and complete.          
\section{Discussion}

Here we  provide a discussion of the impact and significance of our results, 
their limitations, and how they may be extended.

\subsection{Impact and Significance}

We have shown that both the partial identifiability as well as the misspecification robustness of behavioural models in IRL can be quantified and understood. Specifically, we have fully characterised the partial identifiability (or the \emph{ambiguity}) of the reward function under the three standard behavioural models, and we have fully characterised all forms of misspecification that these behavioural models are robust to. Moreover, we have shown that these results can be used to gain an intuitive insight into the practical consequences of partial identifiability and misspecification in IRL.

Our results show that the ambiguity of the reward function under the Boltzmann-rational model and the MCE model is low as long as the learnt reward function is evaluated in the training environment, whereas the ambiguity under the optimality model is larger. Moreover, and perhaps surprisingly, we have shown that each of these models can be too ambiguous to guarantee that the learnt reward function robustly leads to desirable behaviour in new environments (i.e., in environments where the transition function $\tfunc$ or the discount factor $\gamma$ differ from the training environment). 

We have shown that the optimality model lacks robustness to any kind of misspecification, whereas both the Boltzmann-rational model and the MCE model are robust to several forms of misspecification --- the exact forms of misspecification they are robust to depends on how we quantify the error in the learnt reward function. However, we have shown that none of these behavioural models are robust to even slight misspecification of the transition function $\tfunc$ or the discount function $\gamma$. Moreover, we have shown that very minimal assumptions about the behavioural model are needed to obtain this negative result, 
which means that new behavioural models are likely to also have this limitation.
We find this quite surprising, as in the reinforcement learning literature the discount $\gamma$ is typically selected in a somewhat arbitrary way, and it can often be difficult to establish post-facto which $\gamma$ was used to compute a given policy. The fact that $\tfunc$ must be specified correctly is somewhat less surprising \citep[considering, for instance, the examples discussed by ][]{choicesetmisspecification}, yet important to have established. We have also shown that none of these behavioural models are robust to arbitrarily small perturbations of the observed policy. We have similarly needed very minimal assumptions about the behavioural model to obtain this result, which again means that this result also is likely to generalise to new behavioural models. 

In addition to these contributions, we have also derived several powerful mathematical tools that can be used in the analysis of reward learning algorithms. First of all, in Section~\ref{sec:comparing_reward_functions}, we have provided a wide range of useful results about the properties of reward functions. Notably, we have introduced STARC metrics, shown that these pseudometrics induce both an upper and a lower bound on worst-case regret (see Definitions~\ref{def:soundness} and \ref{def:STARC_complete}), and that they are unique in doing so. Thus, STARC metrics are an appropriate tool for analysing the properties and performance of reward learning algorithms. We have also provided necessary and sufficient conditions that describe when two reward functions have the same optimal policies, or the same ordering of policies, and we have elucidated the properties of many important forms of reward transformations. In addition to this, we have provided several powerful lemmas in Section~\ref{sec:frameworks}, that are useful for proving results about partial identifiability and misspecification robustness. We expect these results and contributions to be useful for further theoretical analysis of reward learning algorithms, beyond the analysis that we have carried out in this paper.

Our analysis provides a first step towards answering the more general question of how sensitive IRL is to misspecification of the behavioural model. Our results show that a very wide range of behavioural models --- including all the three behavioural models that are most common in the current IRL literature --- can be highly sensitive to some types of misspecification,  namely misspecification of the transition function $\tfunc$ or discount factor $\gamma$, or perturbations of the observed policy.  
These results 
indicate that IRL in general can be highly sensitive to misspecification of the behavioural model. 
This provides a cautionary lesson on the prospects of IRL as a tool for accurate preference elicitation:  
the relationship between human preferences and human behaviour is very complex, and while it is certainly possible to create increasingly accurate models of human behaviour, it will never be realistically possible to create a behavioural model that is completely free forms of misspecification. Therefore, if IRL is unable to guarantee accurate inferences under even mild misspecification of the behavioural model, then we should expect to be very difficult 
to guarantee that IRL reliably will produce accurate inferences in real-world situations. 
Our results thus suggest that IRL should be used cautiously, and that the learnt reward functions should be carefully evaluated \citep[as done by e.g.\ ][]{michaud2020understanding,jenner2022preprocessing}. This also means that we need IRL algorithms that are specifically designed to be more robust to misspecification, such as e.g.\ that proposed by \citet{viano2021robust}. It may also be fruitful to combine IRL with other data sources, as done by e.g.\ \cite{ibarz2018}, or consider policy optimisation algorithms that conservatively assume that the reward may be misspecified, as done by e.g.\ \citet{krakovnaside, krakovnaside2, turnerside, griffin2022alls}.

\subsection{Limitations and Further Work}

There are several ways to extend our work. First of all, our analysis has primarily focused on the three behavioural models that are most common in the current IRL literature (namely optimality, Boltzmann-rationality, and MCE optimality). One way to extend our work is to consider broader classes of behavioural models, or behavioural models that are more realistic. For example, there is an extensive body of work in the behavioural sciences that suggests that human behaviour (and that of many other animals) is better modelled using hyperbolic discounting, rather than exponential discounting \cite[for example, see ][]{inconsistencyempirical,mazur1987adjusting, green1996exponential, againstnormativediscounting, discounting_review}. As the three standard behavioural models are all based on exponential discounting,  
it would be interesting to extend our analysis to behavioural models that are based on hyperbolic discounting (or alternative kinds of discounting). 
Similarly, humans typically exhibit risk-averse behaviour, according to which losses are given a greater weight than gains, but this is likewise not modelled by any of the three standard behavioural models. It would thus also be interesting to extend our analysis to behavioural models that incorporate current models of human risk-aversion, such as \emph{prospect theory} \citep{prospecttheory}. 
Alternatively, our analysis could also be extended by deriving results that apply to very wide classes of behavioural models, obtained by raising minimal assumptions 
(as we do in e.g.\ Section~\ref{sec:ambiguity_transfer_learning} and \ref{sec:misspecification_1_wider_classes}). 

Another way to extend our work is to consider other equivalence relations or other pseudometrics on $\R$. Much of our analysis has been based on the equivalence relations given by $\ORD$ and $\OPT$, as well as on STARC metrics. 
Note that we have shown that any pseudometric on $\R$ that gives rise to both an upper and a lower bound on worst-case regret must be bilipschitz equivalent to STARC metrics, so these pseudometrics must have a degree of canonicity. However, our definition of regret is quite strong: it may thus be possible to create more permissive pseudometrics, by allowing them to induce guarantees that are weaker than a worst-case regret bound.

Next, our work has assumed that the environment 
is described by a single-agent MDP. An interesting extension would be to consider more general classes of environments, such as multi-agent environments, environments with partial observability, environments with non-Markovian dynamics \citep{limitations_of_Markov_rewards},  
or environments where the actions of the agent may be predicted in advance \citep[as done by e.g.\ ][]{newcomb_rl}.
Note that such extensions would require the results in Section~\ref{sec:comparing_reward_functions} to be generalised as well.

Another interesting direction for future work is to extend our analysis in \ref{appendix:generalising_analysis}, by more carefully considering the consequences of imposing restrictions on the set of rewards $\R$,
or the consequences of using a \emph{probability distribution} over $\R$, and demanding that the learnt reward $R_H$ is close to the true reward $R^\star$ \emph{with high probability}. 
We provide a range of preliminary results regarding these settings in \ref{appendix:generalising_analysis}.  

Furthermore, our analysis primarily concerns the asymptotic behaviour of IRL algorithms, in the limit of infinite data. Thus an interesting extension could study the properties of IRL algorithms in the case of finite data \citep[as done by e.g.\ ][]{towardstheoreticalunderstandingofIRL}. Finally, whilst our analysis has been theoretical, it could be insightful to study the impact of misspecification in IRL from an empirical angle \citep[as done by e.g.\ ][]{chan2021human}.                  



\bibliographystyle{elsarticle-harv} 
\bibliography{bibliography.bib}

\newpage
\appendix
\section{Motivating and Generalising Our Frameworks}\label{appendix:generalising_analysis}

This appendix has two purposes. The first purpose is to provide additional motivation for our core definitions provided in Section~\ref{sec:frameworks}, beyond what could be given in the main text. In particular, we will provide an extended explanation of the third condition in Definitions~\ref{def:misspecification_eq} and \ref{def:misspecification_metric}, and also discuss the assumption that behavioural models are \emph{functions}. The second purpose of this section is to discuss how to generalise our analysis, and extend the frameworks presented in Section~\ref{sec:frameworks}. In particular, the definitions that we have worked with so far quantify over all reward functions, both for the true reward function $R^\star$ and the learnt reward function $R_H$. In some cases, we may have some prior knowledge about the true reward function $R^\star$, or we may know that the inductive bias of the learning algorithm is unlikely to generate certain reward functions $R_H$, even if they are compatible with the training data. Consequently, we may wish to incorporate assumptions about the true reward or about the inductive bias of the learning algorithm into our analysis. We will discuss these extensions, and show that our analysis remains largely unchanged by such generalisations. We will also discuss some alternative equivalence relations on $\R$.

\subsection{Explaining the Third Condition For Misspecification Robustness}\label{appendix:additional_comments_on_definitions}

In this section, we provide some additional discussion regarding the third condition in Definition~\ref{def:misspecification_eq} and Definition~\ref{def:misspecification_metric}. This condition informally says that for $f$ to be robust to misspecification with $g$, it is necessary that a learning algorithm which is based on $f$ should be guaranteed to learn a reward function that is close to the true reward function when there is no misspecification. It may not be immediately obvious why this assumption is included, since we assume that the data is generated by $g$, where $f \neq g$. We will explain the motivation for this condition in more detail.

Let $R_1, R_2, R_3, R_4$ be four reward functions such that $d^\mathcal{R}(R_1, R_2) < \epsilon$, $d^\mathcal{R}(R_3, R_4) < \epsilon$, and $d^\mathcal{R}(R_2, R_3) \gg \epsilon$ (or, alternatively, $R_1 \eq{P} R_2$, $R_3 \eq{P} R_4$, and $R_2 \not\eq{P} R_3$). Moreover, let $f,g : \R \to \Pi$ be two behavioural models where $f(R_1) = \pi_1$, $f(R_2) = f(R_3) = \pi_2$, $f(R_4) = \pi_3$, and $g(R_1) = g(R_2) = \pi_1$, $g(R_3) = g(R_4) = \pi_3$. We may assume that $f = g$ for all other reward functions. This is illustrated in the diagram below:

\begin{figure}[H]
    \centering
    \includegraphics[width=\textwidth/2]{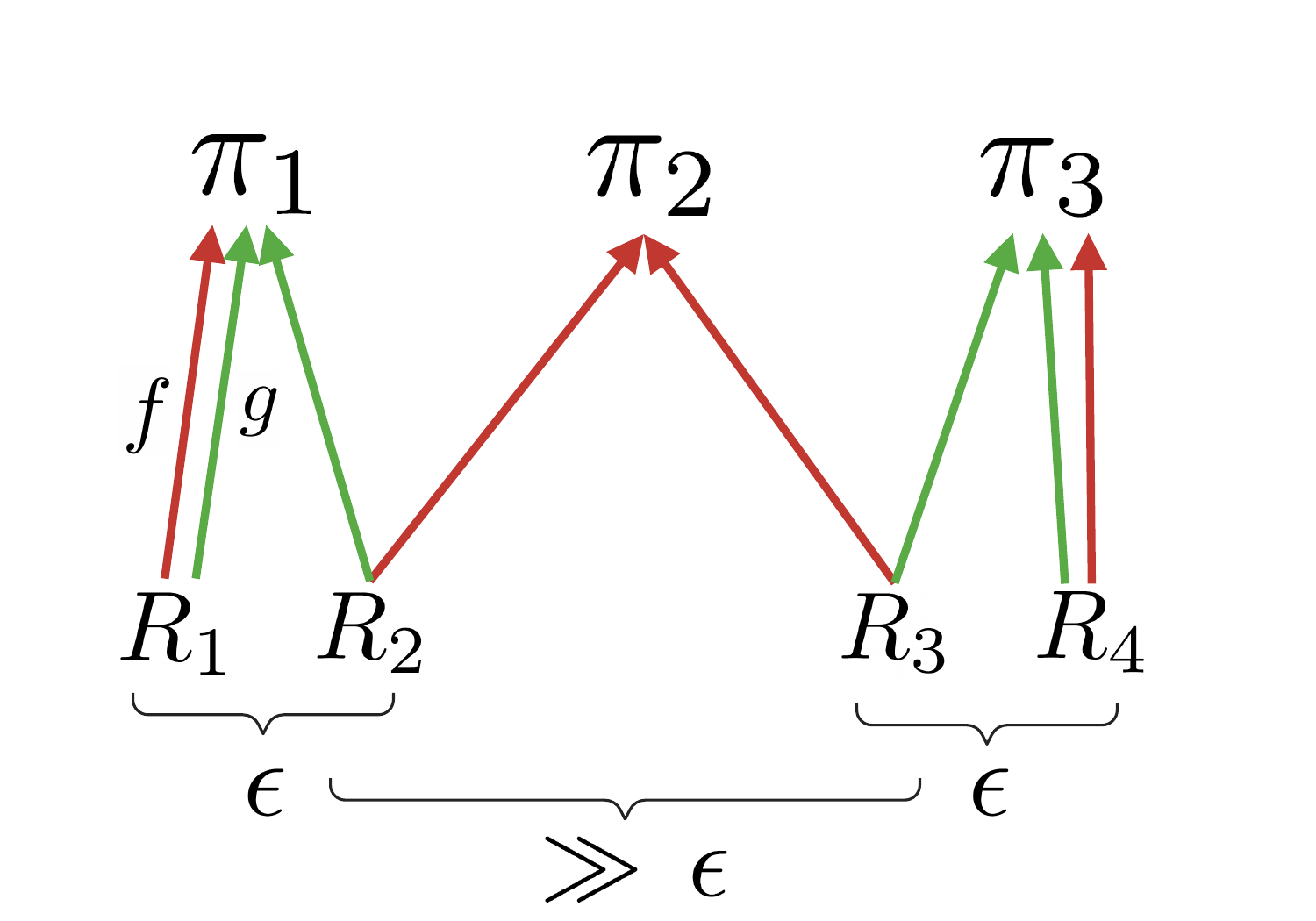}
\end{figure}

In this case, we have that $f(R_2) = f(R_3)$, but $d^\mathcal{R}(R_2, R_3) \gg \epsilon$. As such, $f$ violates the third condition in Definition~\ref{def:misspecification_metric}; a learning algorithm $\mathcal{L}$ based on $f$ is \emph{not} guaranteed to learn a reward function that has distance at most $\epsilon$ to the true reward function when there is no misspecification, because $f$ cannot distinguish between $R_2$ and $R_3$, which have a large distance. However, if $f(R) = g(R')$, it does in this case follow that $d^\mathcal{R}(R,R') \leq \epsilon$. 
In other words, if the training data is coming from $g$, then a learning algorithm $\mathcal{L}$ based on $f$ \emph{is} guaranteed to learn a reward function that has distance at most $\epsilon$ to the true reward function.
As such, we could define misspecification robustness in such a way that $f$ would be considered to be robust to misspecification with $g$ in this case. However, this seems unsatisfactory, because $g$ essentially has to be carefully designed specifically to avoid certain blind spots in $f$. 
In other words, while condition 1 in Definition~\ref{def:misspecification_eq}/\ref{def:misspecification_metric} is met, it is only met \emph{spuriously}.
The third condition is included to rule out these edge cases.

\subsection{On the Functionality of Behavioural Models}\label{appendix:functional_behavioural_models}

We should also comment on the fact that behavioural models are assumed to be \emph{functions}; i.e., we assume that a behavioural model associates each reward function $R$ with a unique policy $\pi$. This is true for the Boltzmann-rational model and the maximal causal entropy model, but it may not be a natural assumption in all cases. For example, there may in general be more than one optimal policy. Thus, an optimal agent could associate some reward functions $R$ with multiple policies $\pi$. This particular example is not too problematic, because all optimal policies still form a convex set. As such, it is natural to assume that an optimal agent would take all optimal actions with equal probability, which is what we have done in the definition of $o_{\tau,\gamma}^\star$. However, we could imagine alternative criteria which would associate some rewards with multiple policies, and where there may not be any canonical way to select a single policy among them. Such criteria may then not straightforwardly translate into a functional behavioural model.

There are several ways to handle such cases within our framework. First of all, we may simply assume that the observed agent still has some fixed method for breaking ties between policies that it considers to be equivalent. In that case, we still ultimately end up with a function from $\mathcal{R}$ to $\Pi$, in which case our framework can be applied without modification. We expect this approach to be satisfactory in most cases.

It is worth noting that this approach does not necessarily require us to actually know how the observed agent breaks ties between equivalent policies. To see this, let $G : \mathcal{R} \to \mathcal{P}(\Pi)$ be a function that associates each reward function with a set of policies. We can then say that a behavioural model $g : \mathcal{R} \to \Pi$ \emph{implements} $G$ if $g(R) \in G(R)$ for all $R \in \mathcal{R}$. Using this definition, we could then say that $f : \mathcal{R} \to \Pi$ is robust to misspecification with $G : \mathcal{R} \to \mathcal{P}(\Pi)$ if $f$ is robust to misspecification with each $g$ that implements $G$, where $f$ being robust to misspecification with $g$ is defined as in Definition~\ref{def:misspecification_eq} or \ref{def:misspecification_metric}. In other words, we assume that the observed agent has a fixed method for breaking ties between policies in $G$, but without making any assumptions about what this method is. Using that definition, our framework can then be applied without modification.

An alternative approach is to generalise the definition of behavioural models to allow them to return a set of policies, i.e.\ $f : \mathcal{R} \to \mathcal{P}(\Pi)$. Most of our results can be extended to cover this case in a mostly straightforward manner, since many results do not make any assumptions about the co-domain of the reward objects. However, this approach is somewhat unsatisfactory, because we would then assume that the learning algorithm $\mathcal{L}$ gets to observe all policies in the set $f(R^\star)$. However, in reality, it seems more realistic to assume that $\mathcal{L}$ only gets to observe a single element of $f(R^\star)$, unless perhaps $\mathcal{L}$ gets data from multiple agents.


\subsection{Incorporating Assumptions About Inductive Bias}\label{appendix:inductive_bias}

It is worth noting that 
none of the definitions in Section~\ref{sec:frameworks}
make any assumptions about the \emph{inductive bias} of the learning algorithm. 
For example, let the true reward function be $R^\star$, let the true data generating process be described by $g$, and let $f$ be the assumed model of the data generating process. Then both Definition~\ref{def:misspecification_eq} and \ref{def:misspecification_metric} require that \emph{every} reward function $R_H$ that is compatible with the training data (in the sense that $f(R_H) = g(R^\star)$) must be equivalent or similar to the true reward (in the sense that $R_H \eq{} R^\star$ or $d^\R(R_H, R^\star) \leq \epsilon$). This requirement may seem unnecessarily strong, because some reward functions $R_H$ such that $f(R_H) = g(R^\star)$ may be very unlikely to be generated under the inductive bias of the learning algorithm, $\mathcal{L}$. This, in turn, raises the question of whether we may be able to create weaker, more permissive formalisations of ambiguity tolerance and misspecification robustness by also making assumptions about the inductive bias of the learning algorithm. In this section, we discuss this option.

Let us first focus on Definition~\ref{def:refinement}, which formalises when a given application tolerates the ambiguity of a given data source.  In this case, it does not seem like anything can be gained from incorporating assumptions about inductive bias. To see this, consider the following modified definition: 

\begin{definition}\label{def:inductive_bias}
    Given a reward object $f : \R \to X$, we say that $I : X \to \R$ is an \emph{inductive bias} for $f$ if $f(I(x)) = x$ for all $x \in X$.
\end{definition}

\begin{definition}\label{def:refinement_with_inductive_bias}
Given two reward objects $f : \R \to X$, $g : \R \to Y$, and an \emph{inductive bias} $I : X \to \R$ for $f$, we say that $g$ tolerates the ambiguity of $f$ with $I$ if, for all $R^\star \in \R$, if $R_H = I(f(R^\star))$, then $g(R_H) = g(R^\star)$.
\end{definition}

Note that $I : X \to \R$ is an inductive bias for $f : \R \to X$ if, for any $x \in X$, $I$ maps $x$ to a reward function $R$ such that $f(R) = x$. In other words, $I$ is a function that, for any possible observable data $x$, picks a reward function $R$ that is compatible with $x$ under $f$. Definition~\ref{def:refinement_with_inductive_bias} then says that $g$ tolerates the ambiguity of $f$ with $I$ if, for any true reward function $R^\star$ and any corresponding data distribution $f(R^\star)$, $I$ always picks a reward function $R_H$ such that $g(R_H) = g(R^\star)$. This is directly analogous to Definition~\ref{def:refinement}, except that we assume that the learning algorithm uses a particular inductive bias $I$. Using these definitions, we can now derive the following result:

\begin{theorem}\label{thm:inductive_bias_refinement}
    Let $f : \R \to X$, $g : \R \to Y$ be any two reward objects, and let $I : X \to \R$ be any inductive bias for $f$. Then $g$ tolerates the ambiguity of $f$ with $I$ (in the sense of Definition~\ref{def:refinement_with_inductive_bias}) if and only if $\mathrm{Am}(f) \refines \mathrm{Am}(g)$ (in the sense of Definition~\ref{def:refinement}).
\end{theorem}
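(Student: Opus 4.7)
The plan is to prove the biconditional by two short implications, both of which hinge on the defining property of an inductive bias, namely that $f \circ I = \mathrm{id}_X$.

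For the ($\Leftarrow$) direction, I would assume $\mathrm{Am}(f) \refines \mathrm{Am}(g)$ and pick an arbitrary $R^\star \in \R$. Setting $R_H = I(f(R^\star))$, the inductive bias property gives $f(R_H) = f(I(f(R^\star))) = f(R^\star)$, so $R_H \eq{f} R^\star$. The refinement $\mathrm{Am}(f) \refines \mathrm{Am}(g)$ then forces $g(R_H) = g(R^\star)$, which is exactly the content of Definition~\ref{def:refinement_with_inductive_bias}.

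For the ($\Rightarrow$) direction, I would assume $g$ tolerates the ambiguity of $f$ with $I$, and take any $R_1, R_2 \in \R$ with $f(R_1) = f(R_2)$. Letting $x = f(R_1) = f(R_2)$ and $R_H = I(x)$, I would apply Definition~\ref{def:refinement_with_inductive_bias} twice: once with $R^\star = R_1$ to conclude $g(R_H) = g(R_1)$, and once with $R^\star = R_2$ to conclude $g(R_H) = g(R_2)$. Combining these yields $g(R_1) = g(R_2)$, which is exactly what $\mathrm{Am}(f) \refines \mathrm{Am}(g)$ requires.

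There is no real obstacle here; the proof is essentially a diagram chase. The substantive content of the theorem is conceptual rather than technical: it shows that introducing an inductive bias $I$ (a right inverse of $f$) does not weaken the condition for ambiguity tolerance at all. This is because $I$ is required to pick \emph{some} reward function compatible with the observed data, and the worst-case $R^\star$ forces $I$ to land on each fibre of $f$, so the per-fibre constancy of $g$ is both necessary and sufficient regardless of how $I$ breaks ties within each fibre.
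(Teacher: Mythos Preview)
Your proof is correct and takes essentially the same approach as the paper's own proof: both directions use the right-inverse property $f \circ I = \mathrm{id}_X$ to show that $I(f(R))$ lies in the same $f$-fibre as $R$, and then apply either the refinement or the tolerance hypothesis to conclude equality under $g$. The paper's argument is identical in structure, differing only in notation.
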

\ifshowproofs
\begin{proof}
    For the first direction, assume that $\mathrm{Am}(f) \refines \mathrm{Am}(g)$.
    Note that $f(I(x)) = x$ for all $x \in X$. This means that $I(f(R))$ is a reward function such that $f(I(f(R))) = f(R)$. Since $\mathrm{Am}(f) \refines \mathrm{Am}(g)$, this means that $g(I(f(R))) = g(R)$. This completes the first direction.
    
    For the other direction, assume that $g$ tolerates the ambiguity of $f$ with $I$, in the sense of Definition~\ref{def:refinement_with_inductive_bias}. 
    Suppose $f(R_1) = f(R_2)$. Since $g(I(f(R))) = g(R)$ for all $R$, we have that $g(I(f(R_1))) = g(R_1)$ and $g(I(f(R_2))) = g(R_2)$. Moreover, since $f(R_1) = f(R_2)$, this means that $g(I(f(R_1))) = g(I(f(R_2)))$. By transitivity, this then implies that $g(R_1) = g(R_2)$, and so $\mathrm{Am}(f) \refines P$. This completes the proof.
\end{proof}
\fi

Thus, Definition~\ref{def:refinement_with_inductive_bias} is functionally equivalent to Definition~\ref{def:refinement}. In other words, for the purposes of ambiguity tolerance, it does not make any difference which inductive bias $I$ the learning algorithm uses. Also note that, while Definition~\ref{def:inductive_bias} defines $I$ to be a function that deterministically picks a fixed $R$ for each $x \in X$, we would obtain a result analogous to Theorem~\ref{thm:inductive_bias_refinement} if we instead defined $I$ to be a set-valued function, etc. Thus, the analysis in Section~\ref{sec:partial_identifiability} which is based on Definition~\ref{def:refinement} would remain unchanged if we defined ambiguity tolerance relative to a particular choice of inductive bias for the learning algorithm.


Let us next consider Definition~\ref{def:misspecification_eq}, which defines misspecification robustness relative to equivalence relations on $\R$. In this case, we similarly find that the inductive bias does not affect our results. To see this, consider the following modified definition of misspecification robustness:

\begin{definition}\label{def:misspecification_eq_with_inductive_bias}
    Given a partition $P$ of $\R$, two reward objects $f, g : \R \to X$, and an \emph{inductive bias} $I : X \to \R$ for $f$, we say that $f$ is \emph{$P$-robust to misspecification} with $g$ using $I$ if each of the following conditions are satisfied:
    \begin{enumerate}
        \item $I(g(R)) \eq{P} R$ for all $R$.
        \item $\mathrm{Im}(g) \subseteq \mathrm{Im}(f)$.
        \item $I(f(R)) \eq{P} R$ for all $R$.
        \item $f \neq g$.
    \end{enumerate}
\end{definition}

Definition~\ref{def:misspecification_eq_with_inductive_bias} is simply directly analogous to Definition~\ref{def:misspecification_eq}, except that it makes the assumption that the learning algorithm $\mathcal{L}$ uses the inductive bias described by $\mathcal{L}$. Using this definitions, we then get the following result:

\begin{theorem}\label{thm:inductive_bias_eq}
    Let $P$ be any partition of $\R$, let $f, g : \R \to X$ be any two reward objects, and let $I : X \to \R$ be any inductive bias for $f$. Then $f$ is $P$-robust to misspecification with $g$ (in the sense of Definition~\ref{def:misspecification_eq}) if and only if $f$ is \emph{$P$-robust to misspecification} with $g$ using $I$ (in the sense of Definition~\ref{def:misspecification_eq_with_inductive_bias}).
\end{theorem}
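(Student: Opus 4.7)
The plan is to prove the biconditional by checking each of the four conditions in Definitions~\ref{def:misspecification_eq} and \ref{def:misspecification_eq_with_inductive_bias} separately. Conditions 2 and 4 are literally identical in both definitions, so they transfer trivially. Thus the real content lies in showing that condition 1 of Def.~\ref{def:misspecification_eq} together with condition 3 of Def.~\ref{def:misspecification_eq} are jointly equivalent to conditions 1 and 3 of Def.~\ref{def:misspecification_eq_with_inductive_bias}. The bridge between the two formulations will be the defining property of the inductive bias, namely that $f(I(x)) = x$ for all $x \in X$, which says that $I$ is a right inverse of $f$.

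For the forward direction, assume $f$ is $P$-robust to misspecification with $g$ in the sense of Def.~\ref{def:misspecification_eq}. To obtain condition 1', fix any $R$ and observe that $f(I(g(R))) = g(R)$, so applying condition 1 of Def.~\ref{def:misspecification_eq} with $R_1 \coloneqq I(g(R))$ and $R_2 \coloneqq R$ yields $I(g(R)) \eq{P} R$. To obtain condition 3', fix any $R$ and observe that $f(I(f(R))) = f(R)$, so applying condition 3 of Def.~\ref{def:misspecification_eq} (namely $\mathrm{Am}(f) \refines P$) gives $I(f(R)) \eq{P} R$. This handles the forward direction in two lines each.

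For the reverse direction, assume $f$ is $P$-robust to misspecification with $g$ using $I$. To recover condition 1 of Def.~\ref{def:misspecification_eq}, suppose $f(R_1) = g(R_2)$. Applying $I$ to both sides gives $I(f(R_1)) = I(g(R_2))$. Condition 3' gives $R_1 \eq{P} I(f(R_1))$ and condition 1' gives $I(g(R_2)) \eq{P} R_2$, so by transitivity $R_1 \eq{P} R_2$. To recover condition 3 of Def.~\ref{def:misspecification_eq}, suppose $f(R_1) = f(R_2)$. Then $I(f(R_1)) = I(f(R_2))$, and applying condition 3' to both sides together with transitivity yields $R_1 \eq{P} R_2$.

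I do not expect a genuine obstacle in this proof: the argument is a direct unpacking of definitions, and the only non-trivial ingredient is the right-inverse property of $I$. The one subtlety worth flagging is that the biconditional in the theorem must be read as holding for \emph{any} (equivalently, every) choice of inductive bias $I$ — and indeed a corollary of the proof is that the truth value of Def.~\ref{def:misspecification_eq_with_inductive_bias} does not depend on which inductive bias is selected, which is precisely the intended message: inductive bias adds no expressive power at the level of $P$-robustness.
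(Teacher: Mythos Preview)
Your proof is correct and follows essentially the same approach as the paper: both directions are handled by unpacking the definitions and using the right-inverse property $f(I(x))=x$ together with transitivity of $\eq{P}$. Your reverse-direction argument for condition~1 is in fact slightly cleaner than the paper's, since you invoke condition~3' directly rather than first re-establishing $\mathrm{Am}(f)\refines P$ and then appealing to it.
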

\ifshowproofs
\begin{proof}
    For the first direction, assume that $f$ is $P$-robust to misspecification in the sense of Definition~\ref{def:misspecification_eq}. We then have that:
    \begin{enumerate}
        \item If $f(R_1) = g(R_2)$ then $R_1 \eq{P} R_2$.
        \item $\mathrm{Im}(g) \subseteq \mathrm{Im}(f)$.
        \item $\mathrm{Am}(f) \refines P$.
        \item $f \neq g$.
    \end{enumerate}
    To show that $f$ is $P$-robust to misspecification with $g$ using $I$, we must show that the following two conditions hold:
    \begin{enumerate}
        \item $I(g(R)) \eq{P} R$ for all $R$.
        \item $I(f(R)) \eq{P} R$ for all $R$.
    \end{enumerate}
    Note that $f(I(x)) = x$ for all $x \in X$. This means that $I(g(R))$ is a reward function such that $f(I(g(R))) = g(R)$. Since $R_1 \eq{P} R_2$ whenever $f(R_1) = g(R_2)$, this means that $I(g(R)) \eq{P} R$. Similarly, $I(f(R))$ is a reward function such that $f(I(f(R))) = f(R)$. Since $\mathrm{Am}(f) \refines P$, this means that $I(f(R)) \eq{P} R$. This completes the first direction.
    
    For the other direction, assume that $f$ is $P$-robust to misspecification with $g$ using $I$ in the sense of Definition~\ref{def:misspecification_eq_with_inductive_bias}. We then have that 
    \begin{enumerate}
        \item $I(g(R)) \eq{P} R$ for all $R$.
        \item $\mathrm{Im}(g) \subseteq \mathrm{Im}(f)$.
        \item $I(f(R)) \eq{P} R$ for all $R$.
        \item $f \neq g$.
    \end{enumerate}
    To show that $f$ is $P$-robust to misspecification with $g$, we must show that the following two conditions hold:
    \begin{enumerate}
        \item If $f(R_1) = g(R_2)$ then $R_1 \eq{P} R_2$.
        \item $\mathrm{Am}(f) \refines P$.
    \end{enumerate}
    First, suppose $f(R_1) = f(R_2)$. Since $I(f(R)) \eq{P} R$ for all $R$, we have that $I(f(R_1)) \eq{P} R_1$ and $I(f(R_2)) \eq{P} R_2$. Moreover, since $f(R_1) = f(R_2)$, this means that $I(f(R_1)) = I(f(R_2))$. By transitivity, this then implies that $R_1 \eq{P} R_2$, and so $\mathrm{Am}(f) \refines P$. Similarly, suppose $f(R_1) = g(R_2)$. Since $I(g(R)) \eq{P} R$ for all $R$, we have that $I(g(R_2)) \eq{P} R_2$. Moreover, since $f(I(f(R_1))) = f(R_1)$, and since $\mathrm{Am}(f) \refines P$, we have that $I(f(R_1)) \eq{P} R_1$. Since $f(R_1) = g(R_2)$, this means that $I(g(R_2)) \eq{P} R_1$. By transitivity, we thus have that $R_1 \eq{P} R_2$. This completes the proof.
\end{proof}
\fi

In other words, our analysis of misspecification robustness in terms of equivalence relations on $\R$ is also not affected by the inductive bias of the learning algorithm. As such, all of our results in Section~\ref{sec:misspecification_1} would be identical if we used Definition~\ref{def:misspecification_eq_with_inductive_bias} instead of Definition~\ref{def:misspecification_eq}.

The case is less straightforward when we characterise the difference between reward functions in terms of pseudometrics on $\R$, rather than equivalence relations on $\R$ (as for Definition~\ref{def:ambiguity_diameter} and Definition~\ref{def:misspecification_metric})
For example, we can have a reward object $f : \R \to X$ for which the lower diameter of $\mathrm{Am}(f)$ is greater than $\epsilon$, but where there exists an inductive bias $I$ for $f$ such that $d^\R(I(f(R)),R) \leq \epsilon$ for all $R$. To see this, suppose the (upper and lower) diameter of $\mathrm{Am}(f)$ is $2\epsilon$, but that the inductive bias $I$ always picks a reward function that is in the \enquote{middle} of each set in $\mathrm{Am}(f)$, such that the distance between this reward function and all other reward functions in the same set of $\mathrm{Am}(f)$ always is at most $\epsilon$. In this way, the worst-case error between the learnt reward function $R_H$ and the true reward function $R^\star$ may be more than $\epsilon$ for data generated under $f$, even though it can be guaranteed to be at most $\epsilon$ under a particular inductive bias $I$. In a similar way, the conditions for when $f$ is $\epsilon$-robust to misspecification with $g$ (under Definition~\ref{def:misspecification_metric}) may also be affected by the inductive bias $I$ of the learning algorithm.

However, note that this generalisation cannot affect the derived results by a substantial amount. To see this, first note that if the upper diameter of $\mathrm{Am}(f)$ under $d^\R$ is $\delta$, then for any inductive bias $I$ for $f$, there is a reward function $R$ such that $d^\R(R, I(f(R))) \geq \delta/2$, by the triangle inequality. This means that we will still have to require that the upper diameter of $\mathrm{Am}(f)$ is small.
Furthermore, if there are reward functions $R_1, R_2$ such that $f(R_1) = g(R_2)$ and $d^\R(R_1, R_2) = \epsilon$, and if the upper diameter of $\mathrm{Am}(f)$ is $\delta$, then for any inductive bias $I$ for $f$, we have that $d^\R(R_2, I(g(R_2))) \geq \epsilon - \delta$, again by the triangle inequality.\footnote{For clarity, let us spell this out. First, note that if $g(R_2) = f(R_1)$, and $I$ is an inductive bias for $f$, then $I(g(R_2))$ is some reward function $R_3$ such that $f(R_1) = f(R_3)$. Since the upper diameter of $\mathrm{Am}(f)$ is $\delta$, we have that $d^\R(R_1, R_3) \leq \delta$. We also have that $d^\R(R_1, R_2) = \epsilon$. By the triange inequality, $d^\R(R_1, R_2) \leq d^\R(R_1, R_3) + d^\R(R_3, R_2)$, so $\epsilon \leq x + d^\R(R_3, R_2)$, where $x \leq \delta$. This implies that $\epsilon - \delta \leq d^\R(R_2, R_3)$.} Thus, if the upper diameter of $\mathrm{Am}(f)$ is small, then the inductive bias cannot have a large impact on the misspecification robustness of the algorithm. In other words, we must require the upper diameter of $\mathrm{Am}(f)$ to be small, but if this is the case, then the inductive bias cannot matter much.

More generally, the inductive bias of the learning algorithm could make a meaningful difference to the derived results primarily when the (upper) diameters of $\mathrm{Am}(f)$ and $\mathrm{Am}(g)$ are small, but greater than zero. 
However, in most of the cases we have analysed, the diameter of $\mathrm{Am}(f)$ is either zero, or very large. 
For example, the upper diameter of both $\mathrm{Am}(b_{\tfunc, \gamma, \beta})$ and $\mathrm{Am}(c_{\tfunc, \gamma, \alpha})$ is zero (Corollary~\ref{cor:diameter_of_am_for_BR_and_MCE}), and the upper diameter of $\mathrm{Am}(o^\star_{\tfunc,\gamma})$ is large (Corollary~\ref{cor:diameter_of_op}). 
Similarly, when $f$ is invariant to potential shaping with $\gamma$ or $S'$-redistribution with $\tfunc$, and $\gamma$ or $\tfunc$ is misspecified, then the upper diameter of $\mathrm{Am}(f)$ is large (Theorem~\ref{thm:wrong_tau_large_diameter} and 
\ref{thm:wrong_gamma_large_diameter}). Therefore, we should not expect any of the results we have derived using Definition~\ref{def:misspecification_metric} to change substantially if we modify Definition~\ref{def:misspecification_metric} to also incorporate assumptions about the inductive bias of the learning algorithm.
Nonetheless, carrying out this analysis in more detail may be an interesting direction for future work.

\subsection{Incorporating Assumptions About the True Reward}\label{appendix:assumptions_about_true_reward}

It is worth noting that our definitions in Section~\ref{sec:frameworks} make no assumptions about the true reward function, $R^\star$. While this is reasonable, it does raise the worry that some of our negative results (e.g., those in Sections~\ref{sec:ambiguity_transfer_learning}, \ref{sec:misspecification_1_MDPs}, and \ref{sec:misspecification_2_parameters}) may be caused by specific edge-cases that are unlikely to arise in practice. For example, in order for $f$ to be $P$-robust to misspecification with $g$, it is required that there for \emph{every} reward function $R^\star$ is no $R_H$ such that $f(R_H) = g(R^\star)$, but $R_H \not\eq{P} R^\star$. This may seem unnecessarily strong, if we have reason to believe that certain reward functions $R^\star$ are unlikely to come up in practice. A natural question is therefore if we may be able to obtain stronger results by incorporating some assumptions about $R^\star$. In this section, we will discuss this question.

In short, most of our negative results would not change if we incorporate assumptions about the true reward function $R^\star$.
This is largely ensured by the fact that we distinguish between the upper and lower diameter of $\mathrm{Am}(f)$.
For example, let us first consider the results in Section~\ref{sec:ambiguity_transfer_learning}, which show that a wide range of behavioural models are unable to guarantee robust transfer to a different transition function $\tfunc$ or discount factor $\gamma$. These results are not merely saying that there exists \emph{some} $R^\star$ and \emph{some} $R_H$ such that $R^\star$ and $R_H$ are indistinguishable by the learning algorithm, but such that $R^\star$ and $R_H$ are qualitatively different after transfer to a different $\tfunc$ or $\gamma$. Rather, they are saying that there for \emph{every} $R^\star$ is an $R_H$ such that $R^\star$ and $R_H$ are indistinguishable by the learning algorithm, but such that $R^\star$ and $R_H$ are qualitatively different after transfer. In other words, it is not just the \emph{upper} diameter of $\mathrm{Am}(f)$ that is too large, but also the \emph{lower} diameter. Every reward function $R^\star$ is indistinguishable from some reward $R_H$ such that $R^\star$ and $R_H$ are too different to guarantee robust transfer.

The negative results in Section~\ref{sec:misspecification_1_MDPs} and \ref{sec:misspecification_2_parameters}, which concern misspecified $\tfunc$ or $\gamma$, will generalise for a similar reason. 
Recall that these results are saying, roughly, that if $f_{\tfunc,\gamma}$ is invariant to $S'$-redistribution with $\tfunc$ or potential shaping with $\gamma$, and either $\tfunc$ or $\gamma$ is misspecified, then $f_{\tfunc_1,\gamma_1}$ is not robust to misspecification with $f_{\tfunc_2,\gamma_2}$. Intuitively speaking, the reason for why this is true is that at least one of $\mathrm{Am}(f_{\tfunc_1,\gamma_1})$ or $\mathrm{Am}(f_{\tfunc_2,\gamma_2})$ will be too large. This result is derived from the ambiguity results in Section~\ref{sec:ambiguity_transfer_learning}, which means that it is not just the worst-case size (i.e.\ the upper diameter) of $\mathrm{Am}(f_{\tfunc_1,\gamma_1})$ or $\mathrm{Am}(f_{\tfunc_2,\gamma_2})$ that is too large, but the best-case size (i.e.\ the lower diameter) as well.
These results will therefore also not be affected by any assumptions we could make about $R^\star$.

The only exception to this is Theorem~\ref{thm:not_separating}, which says that no continuous behavioural model is $\epsilon/\delta$-separating relative to $\starc$, which also means that no continuous behavioural model is perturbation robust relative to $\starc$. The proof of Theorem~\ref{thm:not_separating} finds a specific counterexample in the vicinity of the zero reward, $R_0$, which could be ruled out by making certain assumptions about $R^\star$. This issue will also be discussed in \ref{appendix:restricted_reward_functions}.

\subsection{Restricting the Space of Reward Functions}\label{appendix:restricted_reward_functions}

In \ref{appendix:inductive_bias}, we discussed the option of incorporating assumptions about the inductive bias of the learning algorithm, and in \ref{appendix:assumptions_about_true_reward}, we discussed the option of incorporating assumptions about the true reward function. Moreover, we argued that neither of these generalisations would make a meaningful difference to our results. But what if we do both at the same time? 
Formally, suppose that instead of quantifying over all reward functions in $\R$, we restrict the reward functions to lie in some set $\Rspace \subseteq \R$. We can then assume that the true reward function $R^\star$ is in $\Rspace$, and that the learning algorithm $\mathcal{L}$ only will generate reward functions $R_H$ that are also in $\Rspace$. In this section, we will discuss this option.
As we will see, our results are largely unaffected by this generalisation, though it does open up some avenues for further analysis.

We first need to generalise the definitions in Section~\ref{sec:frameworks}, which is straightforward; simply replace any quantifier which ranges over all of $\R$ with one that only ranges over $\Rspace$ (where $\Rspace$ is permitted to be any subset of $\R$). For example, given partitions $P$, $Q$ of $\R$, we say that $P \preceq Q$ on $\Rspace$ if $R_1 \eq{P} R_2$ implies that $R_1 \eq{Q} R_2$ for all $R_1,R_2 \in \Rspace$. Similarly, we say that $f$ is $P$-robust to misspecification with $g$ on $\Rspace$ if $f(R_1) = g(R_2)$ implies that $R_1 \eq{P} R_2$ for all $R_1, R_2 \in \Rspace$, $\mathrm{Im}(g |_\Rspace) \subseteq \mathrm{Im}(f |_\Rspace)$, $\mathrm{Am}(f) \refines P$ on $\Rspace$, and $f |_\Rspace \neq g |_\Rspace$. In a similar manner, it is straightforward to generalise all definitions in Section~\ref{sec:frameworks} to be relative to a set of permitted reward functions $\Rspace$.

We should first note that all lemmas in Section~\ref{sec:frameworks} apply with these more general definitions for any arbitrary subset $\Rspace \subseteq \R$. We will not spell out these proofs explicitly, but they are directly analogous to the proofs given in Section~\ref{sec:frameworks}, since none of these proofs rely on any assumptions about $\R$. We should also note that the propositions given in Section~\ref{sec:frameworks_properties_of_definitions} (in particular, Proposition~\ref{prop:no_epsilon_robustness_implies_refinement}, \ref{prop:no_epsilon_robustness_symmetry}, and \ref{prop:no_epsilon_robustness_function_composition}) do not hold for arbitrary sets $\Rspace$. However, none of our later results rely on these propositions, since their purpose primarily is to highlight some of the differences between Definition~\ref{def:misspecification_eq} and \ref{def:misspecification_metric}.

In addition to this, most of our results in Section~\ref{sec:partial_identifiability}, \ref{sec:misspecification_1}, and \ref{sec:misspecification_2} carry over very directly to the setting with restricted spaces of reward functions. To make this clear, we provide the following results:

\begin{proposition}\label{prop:refinement_with_subspaces}
Let $P, Q$ be any partitions on $\R$, and let $\Rspace$ be any subset of $\R$. We then have that if $P \refines Q$ on $\R$, then $P \refines Q$ on $\Rspace$.
\end{proposition}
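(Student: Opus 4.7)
The plan is to unfold the generalised definition of refinement given immediately above the statement, and observe that the claim is essentially the monotonicity of universal quantification under restriction to a subset. Recall that $P \refines Q$ on $\Rspace$ means: for all $R_1, R_2 \in \Rspace$, if $R_1 \eq{P} R_2$ then $R_1 \eq{Q} R_2$. The hypothesis $P \refines Q$ on $\R$ gives us the same implication quantified over all of $\R$.

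First I would fix arbitrary $R_1, R_2 \in \Rspace$ and assume $R_1 \eq{P} R_2$. Since $\Rspace \subseteq \R$, we have $R_1, R_2 \in \R$ as well. Applying the hypothesis $P \refines Q$ on $\R$ to this pair yields $R_1 \eq{Q} R_2$. As $R_1, R_2$ were arbitrary in $\Rspace$, this establishes $P \refines Q$ on $\Rspace$.

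There is no real obstacle here: the result is a one-line consequence of the fact that restricting a universally quantified implication to a subset of its domain can only weaken the hypothesis set, never invalidate the implication. The proposition is stated primarily to make explicit that the generalised framework on $\Rspace$ inherits refinement statements that have already been established on the full space $\R$, so that the later results in Sections~\ref{sec:partial_identifiability}--\ref{sec:misspecification_2} can be transported to the restricted setting without reproof.
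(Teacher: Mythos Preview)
Your proof is correct and follows essentially the same approach as the paper's own proof: both unfold the generalised definition of refinement and observe that a universally quantified implication over $\R$ specialises immediately to any subset $\Rspace \subseteq \R$.
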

\ifshowproofs
\begin{proof}
Note that $P \refines Q$ on $\R$ if, for all $R_1, R_2 \in \R$, if $R_1 \eq{P} R_2$ then $R_1 \eq{Q} R_2$. This directly implies that for all $R_1, R_2 \in \Rspace \subseteq \R$, if $R_1 \eq{P} R_2$ then $R_1 \eq{Q} R_2$, and so $P \refines Q$ on $\Rspace$.
\end{proof}
\fi

\begin{proposition}\label{prop:ambiguity_diameter_with_subspaces}
Let $f : \R \to X$ be any reward object, let $d^\R$ be any pseudometric on $\R$, and let $\Rspace$ be any subset of $\R$. Then the upper diameter of $\mathrm{Am}(f)$ on $\Rspace$ is no greater than the upper diameter of $\mathrm{Am}(f)$ on $\R$, and likewise for the lower diameter.
\end{proposition}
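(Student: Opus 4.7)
The plan is to compare the two partitions class-by-class using the natural inclusion of reward sets. Every class $T$ of $\mathrm{Am}(f)$ on $\Rspace$ arises as $T = S \cap \Rspace$ for a unique class $S \in \mathrm{Am}(f)$ on $\R$ (the ambient $\eq{f}$-class of any representative of $T$), and in particular $T \subseteq S$. Since the diameter of a set is a supremum taken over pairs of points drawn from the set, this inclusion immediately yields the monotonicity property $\mathrm{diam}(T) \leq \mathrm{diam}(S)$. This single observation is the only substantive ingredient in the argument; everything else reduces to bookkeeping with suprema and infima.

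For the upper diameter the conclusion follows directly by chaining inequalities: $\sup_{T} \mathrm{diam}(T) \leq \sup_{T} \mathrm{diam}(S(T)) \leq \sup_{S} \mathrm{diam}(S)$, where $S(T)$ denotes the ambient class of $T$ and the final step holds because $\{S(T) : T \in \mathrm{Am}(f|_\Rspace)\}$ is a sub-collection of $\mathrm{Am}(f)$ on $\R$. No delicate steps are required here.

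For the lower diameter one needs the reverse correspondence: for each $S \in \mathrm{Am}(f)$ on $\R$, a class $T \in \mathrm{Am}(f|_\Rspace)$ with $\mathrm{diam}(T) \leq \mathrm{diam}(S)$. This holds, with $T = S \cap \Rspace$, precisely when $S$ meets $\Rspace$, and taking infima over such matched pairs then gives $\inf_T \mathrm{diam}(T) \leq \inf_S \mathrm{diam}(S)$. The main obstacle, and the only genuinely delicate point in the proof, is the case of classes $S$ that are disjoint from $\Rspace$: removing such a class can in principle raise the infimum, especially if it happened to be a small-diameter class achieving the infimum on $\R$. The cleanest resolution is to read the statement under the convention that $\mathrm{Am}(f|_\Rspace)$ is indexed only by those classes actually populated by $\Rspace$ (which is how the restricted partition is defined anyway), under which the class-wise monotonicity argument closes immediately; otherwise an auxiliary surjectivity hypothesis, namely that every class of $\mathrm{Am}(f)$ on $\R$ meets $\Rspace$, should be flagged alongside the claim.
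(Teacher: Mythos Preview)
Your treatment of the upper diameter is correct and coincides with the paper's argument (modulo a typo in the paper, which writes the basic set-inclusion inequality the wrong way round). You are also right to single out the lower-diameter case as the only delicate point: in fact the lower-diameter claim is \emph{false} as stated without an extra hypothesis. Take $\mathrm{Am}(f)$ on $\R$ to have two classes $S_1,S_2$ with $\mathrm{diam}(S_1)=0$ and $\mathrm{diam}(S_2)=10$, and set $\Rspace=S_2$; then the lower diameter on $\R$ is $0$ while on $\Rspace$ it is $10$. The paper's one-line proof simply does not address this.

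Where your proposal goes astray is the ``cleanest resolution.'' Reading $\mathrm{Am}(f|_\Rspace)$ as the collection of non-empty intersections $S\cap\Rspace$ is indeed the natural convention, but under precisely that convention the counterexample above still applies. Class-wise monotonicity only gives $\inf_T\mathrm{diam}(T)\le\inf_T\mathrm{diam}(S(T))$, and since $T\mapsto S(T)$ ranges over a \emph{subcollection} of $\mathrm{Am}(f)$ (namely, those classes that meet $\Rspace$), this last infimum can strictly exceed $\inf_S\mathrm{diam}(S)$ --- exactly the phenomenon you yourself diagnosed one sentence earlier. Your fallback --- the surjectivity hypothesis that every class of $\mathrm{Am}(f)$ meets $\Rspace$ --- is the correct repair and should be promoted to the primary fix rather than offered as an afterthought.
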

\ifshowproofs
\begin{proof}
For any two sets $S_1, S_2$, we have that $\mathrm{diam}(S_1) \leq \mathrm{diam}(S_1 \cap S_2)$. Thus, if $S \in \mathrm{Am}(f)$, then $\mathrm{diam}(S) \leq \mathrm{diam}(S \cap \Rspace)$. This implies that the upper diameter of $\mathrm{Am}(f)$ on $\Rspace$ is no greater than the upper diameter of $\mathrm{Am}(f)$ on $\R$, and likewise for the lower diameter.
\end{proof}
\fi

\begin{theorem}\label{thm:misspecification_eq_with_subspaces}
For any $\Rspace \subseteq \R$ and any partition $P$ of $\R$, if $f$ is $P$-robust to misspecification with $g$ on $\R$ then $f$ is $P$-robust to misspecification with $g$ on $\Rspace$, unless $f|_\Rspace = g|_\Rspace$. Similarly, if $f$ is $P$-robust to misspecification with $g$ on $\Rspace$ then $f$ is $P$-robust to misspecification with $g'$ on $\R$ for some $g'$ where $g'|_\Rspace = g|_\Rspace$, unless $\mathrm{Am}(f) \not\refines P$ on $\R$.
\end{theorem}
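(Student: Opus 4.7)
The plan is to verify, for each direction separately, the four conditions that make up the (restricted or unrestricted) definition of $P$-robustness, reducing each one either to its counterpart in the assumption or to an elementary consequence of Proposition~\ref{prop:refinement_with_subspaces}. The only substantive content is a suitable construction of the extension $g'$ in the second direction; everything else is bookkeeping.

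For the first direction, I would assume $f$ is $P$-robust to $g$ on $\R$ and $f|_\Rspace \neq g|_\Rspace$, and then check the four conditions on $\Rspace$. Condition~1 (if $f(R_1) = g(R_2)$ with $R_1,R_2 \in \Rspace$ then $R_1 \eq{P} R_2$) is immediate by restricting the quantifier in the same condition on $\R$. Condition~3 ($\mathrm{Am}(f) \refines P$ on $\Rspace$) follows from Proposition~\ref{prop:refinement_with_subspaces}. Condition~4 is exactly the non-exception case. The image condition~2, $\mathrm{Im}(g|_\Rspace) \subseteq \mathrm{Im}(f|_\Rspace)$, is the delicate one: given $y = g(R)$ with $R \in \Rspace$, the unrestricted hypothesis only guarantees some preimage $R'$ of $y$ under $f$ in $\R$, and one must appeal to conditions~1 and 3 on $\R$ to locate a preimage actually inside $\Rspace$. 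This is the main obstacle, and I expect the clean route is to invoke the fact that any $R'$ with $f(R') = g(R)$ satisfies $R' \eq{P} R$ and use this together with $\mathrm{Am}(f) \refines P$ to reach a preimage in $\Rspace$ (or else make explicit that this step uses a mild regularity assumption on $\Rspace$, which the informal generalisation paragraph seems to presuppose).

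For the second direction, the central idea is to extend $g$ from $\Rspace$ to $\R$ by $g'(R) = g(R)$ for $R \in \Rspace$ and $g'(R) = f(R)$ for $R \in \R \setminus \Rspace$. I would then verify the four conditions of Definition~\ref{def:misspecification_eq} on $\R$. For condition~1, I would split on whether $R_2 \in \Rspace$: if not, then $f(R_1) = g'(R_2) = f(R_2)$ and the exception hypothesis $\mathrm{Am}(f) \refines P$ on $\R$ gives $R_1 \eq{P} R_2$; if $R_2 \in \Rspace$, use the restricted image condition to pick $R_2' \in \Rspace$ with $f(R_2') = g(R_2)$, then combine $f(R_1) = f(R_2')$ (giving $R_1 \eq{P} R_2'$ via $\mathrm{Am}(f) \refines P$ on $\R$) with $R_2' \eq{P} R_2$ (from condition~1 on $\Rspace$) by transitivity. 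Condition~2 holds because $\mathrm{Im}(g') \subseteq \mathrm{Im}(g|_\Rspace) \cup \mathrm{Im}(f) \subseteq \mathrm{Im}(f)$ by the restricted image condition. Condition~3 is precisely the exception hypothesis. Condition~4 follows since $f|_\Rspace \neq g|_\Rspace = g'|_\Rspace$ witnesses $f \neq g'$.

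The main obstacle is the image-condition step in the first direction. The construction for the second direction is transparent once one sees that defining $g'$ to agree with $f$ outside $\Rspace$ both preserves $\mathrm{Im}(g') \subseteq \mathrm{Im}(f)$ for free and, crucially, lets condition~1 be discharged by $\mathrm{Am}(f) \refines P$ on $\R$ on the complement of $\Rspace$, which is why that refinement is needed as the exception clause. I would also remark that if this exception fails then condition~3 of $\R$-robustness is already violated for any extension, and if the $f|_\Rspace = g|_\Rspace$ exception holds in the first direction then condition~4 is already violated on $\Rspace$, so both exception clauses are necessary and sufficient to make the equivalence go through cleanly.
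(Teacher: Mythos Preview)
Your approach matches the paper's proof closely: the same construction of $g'$ (agreeing with $g$ on $\Rspace$ and with $f$ off $\Rspace$) and the same case split for condition~1 in the second direction. Your second-direction argument is in fact slightly cleaner than the paper's, since your single case ``$R_2 \in \Rspace$'' subsumes the paper's two subcases (whether $R_1 \in \Rspace$ or not).

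Where you and the paper diverge is on the image condition in the first direction, and here you are right and the paper is wrong. The paper's proof simply asserts that $\mathrm{Im}(g) \subseteq \mathrm{Im}(f)$ ``directly implies'' $\mathrm{Im}(g|_\Rspace) \subseteq \mathrm{Im}(f|_\Rspace)$, which is false in general: take $\Rspace = \{R_1\}$ with $f(R_1) = a$, $f(R_2) = b$, $g(R_1) = b$, $g(R_2) = a$, and $P$ the trivial partition. All four conditions hold on $\R$, yet $\mathrm{Im}(g|_\Rspace) = \{b\} \not\subseteq \{a\} = \mathrm{Im}(f|_\Rspace)$. So the first direction of the theorem is not true as stated for arbitrary $\Rspace$, and the paper's proof has a genuine gap exactly where you flagged the obstacle.

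Your proposed repair (pull back $g(R)$ to some $R'$ with $f(R') = g(R)$, note $R' \eq{P} R$, then use $\mathrm{Am}(f) \refines P$) does not close the gap either: knowing $R' \eq{P} R$ with $R \in \Rspace$ does not force $R' \in \Rspace$ or produce any other $f$-preimage inside $\Rspace$. Your parenthetical is the correct diagnosis: one needs an additional hypothesis on $\Rspace$, for instance that $\Rspace$ is a union of $P$-equivalence classes (which then makes your argument go through, since $R' \eq{P} R \in \Rspace$ gives $R' \in \Rspace$). The paper does not state such an assumption, so strictly speaking you have found an error in the theorem rather than a difficulty with your own proof.
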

\ifshowproofs
\begin{proof}
For the first claim, suppose that $f$ is $P$-robust to misspecification with $g$ on $\R$, and that $f|_\Rspace \neq g|_\Rspace$. Since $f$ is $P$-robust to misspecification with $g$ on $\R$, we have that for all $R_1, R_2 \in \R$, if $f(R_1) = g(R_2)$ then $R_1 \eq{P} R_2$. Since $\Rspace \subseteq \R$, this directly implies that for all $R_1, R_2 \in \Rspace$, if $f(R_1) = g(R_2)$ then $R_1 \eq{P} R_2$. We also have that $\mathrm{Im}(f)  \subseteq \mathrm{Im(g)}$, which directly implies that $\mathrm{Im}(g |_\Rspace) \subseteq \mathrm{Im}(f |_\Rspace)$. Moreover, we have that $\mathrm{Am}(f) \refines P$ on $\R$, which (as shown in Proposition~\ref{prop:refinement_with_subspaces}) implies that $\mathrm{Am}(f) \refines P$ on $\Rspace$. We assume that $f|_\Rspace \neq g|_\Rspace$. Thus, $f$ is $P$-robust to misspecification with $g$ on $\Rspace$.

For the second claim, suppose $f$ is $P$-robust to misspecification with $g$ on $\Rspace$, and that $\mathrm{Am}(f) \refines P$ on $\R$. We construct a $g'$ as follows; let $g'(R) = g(R)$ for all $R \in \Rspace$, and let $g'(R) = f(R)$ for all $R \not\in \Rspace$. 
By construction, we have that $g(R) = g'(R)$ for all $R \in \Rspace$. Moreover, $f$ is $P$-robust to misspecification with $g'$ on $\R$. To see this, first note that if $f |_\Rspace \neq g |_\Rspace$, and $g |_\Rspace = g' |_\Rspace$, then $f \neq g'$. Moreover, if $\mathrm{Im}(g |_\Rspace) \subseteq \mathrm{Im}(f |_\Rspace)$, and $g'$ is equal to $g$ on $\Rspace$ and equal to $f$ outside $\Rspace$, then $\mathrm{Im}(g') \subseteq \mathrm{Im(f)}$. Moreover, we have assumed that $\mathrm{Am}(f) \refines P$ on $\R$. Finally, suppose that $f(R_1) = g'(R_2)$. If $R_2 \not\in \Rspace$, then $g'(R_2) = f(R_2)$. Since $\mathrm{Am}(f) \refines P$ on $\R$, this implies that $R_1 \eq{P} R_2$. Next, if $R_1, R_2 \in \Rspace$, then $R_1 \eq{P} R_2$, since $f$ is $P$-robust to misspecification with $g$ on $\Rspace$. Finally, if $R_2 \in \Rspace$, $R_1 \not\in \Rspace$, let $R_3$ be a reward function such that $R_3 \in \Rspace$ and $f(R_3) = g'(R_2)$. Since $\mathrm{Im}(g |_\Rspace) \subseteq \mathrm{Im}(f |_\Rspace)$, such a reward function $R_3$ does exist. Since $f$ is $P$-robust to misspecification with $g$ on $\Rspace$, we have that $R_2 \eq{P} R_3$. Moreover, since $\mathrm{Am}(f) \refines P$ on $\R$, and $f(R_1) = f(R_3)$, we have that $R_1 \eq{P} R_3$. By transitivity, this implies that $R_1 \eq{P} R_2$. This covers all cases, which means that if $f(R_1) = g'(R_2)$ then $R_1 \eq{P} R_2$. Thus $f$ is $P$-robust to misspecification with $g'$ on $\R$.
\end{proof}
\fi

\begin{theorem}\label{thm:misspecification_metric_with_subspaces}
For any $\Rspace \subseteq \R$ and any pseudometric $d^\R$ on $\R$, if $f$ is $\epsilon$-robust to misspecification with $g$ on $\R$ using $d^\R$, then $f$ is $\epsilon$-robust to misspecification with $g$ on $\Rspace$ using $d^\R$, unless $f|_\Rspace = g|_\Rspace$. Similarly, if $f$ is $\epsilon$-robust to misspecification with $g$ on $\Rspace$ using $d^\R$ then $f$ is $2\epsilon$-robust to misspecification with $g'$ on $\R$ using $d^\R$ for some $g'$ where $g'|_\Rspace = g|_\Rspace$, unless there are $R_1, R_2 \in \R$ such that $f(R_1) = f(R_2)$ but $d^\R(R_1, R_2) > 2\epsilon$. 
\end{theorem}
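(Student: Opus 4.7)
The proof will follow the same two-direction structure as Theorem~\ref{thm:misspecification_eq_with_subspaces}, with the equivalence-class arguments replaced by triangle-inequality bookkeeping. I expect the first direction to be essentially routine restriction and the second to require a careful case analysis to control how constants accumulate when extending $g$ back out to all of $\R$.

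For the first direction, assume $f$ is $\epsilon$-robust to misspecification with $g$ on $\R$ using $d^\R$, and that $f|_\Rspace \neq g|_\Rspace$. I would verify the restricted versions of the four conditions in Definition~\ref{def:misspecification_metric} one at a time. Conditions~1 and 3, giving $d^\R(R_1,R_2)\le\epsilon$ whenever $f(R_1)=g(R_2)$ or $f(R_1)=f(R_2)$, restrict immediately since $\Rspace\subseteq\R$ (this is essentially Proposition~\ref{prop:ambiguity_diameter_with_subspaces} applied to $\mathrm{Am}(f)$). Condition~2 on $\Rspace$ asks that $\mathrm{Im}(g|_\Rspace)\subseteq\mathrm{Im}(f|_\Rspace)$; this is the only condition that does not descend trivially, so I would use the original $\mathrm{Im}(g)\subseteq\mathrm{Im}(f)$ on $\R$ together with condition~1 (which forces preimages of $g(R)$ under $f$ to be within $\epsilon$ of $R$, pinning them into a neighborhood within $\Rspace$ under reasonable inductive bias, or by suitably adapting the definition as discussed in~\ref{appendix:inductive_bias}). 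Condition~4 is given by hypothesis.

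For the second direction, assume $f$ is $\epsilon$-robust to misspecification with $g$ on $\Rspace$ and that there are no $R_1,R_2\in\R$ with $f(R_1)=f(R_2)$ but $d^\R(R_1,R_2)>2\epsilon$. The plan is to construct $g':\R\to X$ by setting $g'(R)=g(R)$ for $R\in\Rspace$ and $g'(R)=f(R)$ for $R\notin\Rspace$ (mirroring exactly the construction used in the proof of Theorem~\ref{thm:misspecification_eq_with_subspaces}), and then verify the four conditions of $2\epsilon$-robustness on $\R$. Conditions~2 and 4 are immediate from the construction and from $f|_\Rspace\neq g|_\Rspace$; condition~3 is precisely the blanket hypothesis. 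The real work lies in condition~1: given $f(R_1)=g'(R_2)$, I would split into three cases according to whether $R_1,R_2$ lie in $\Rspace$. If $R_2\notin\Rspace$ then $g'(R_2)=f(R_2)$, so condition~3 on $\R$ gives $d^\R(R_1,R_2)\le 2\epsilon$. If both $R_1,R_2\in\Rspace$ then robustness on $\Rspace$ gives $d^\R(R_1,R_2)\le\epsilon$. The delicate case is $R_2\in\Rspace$, $R_1\notin\Rspace$: here I would pick a witness $R_3\in\Rspace$ with $f(R_3)=g(R_2)$ using $\mathrm{Im}(g|_\Rspace)\subseteq\mathrm{Im}(f|_\Rspace)$, combine $d^\R(R_2,R_3)\le\epsilon$ (from robustness on $\Rspace$) with $d^\R(R_1,R_3)\le 2\epsilon$ (from condition~3 on $\R$) via the triangle inequality.

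The main obstacle is exactly this third case: the naive triangle inequality yields $d^\R(R_1,R_2)\le 3\epsilon$ rather than $2\epsilon$, so achieving the stated constant will require either choosing $R_3$ more carefully (e.g.\ minimising $d^\R(R_1,R_3)$ over witnesses in $f^{-1}(g(R_2))\cap\Rspace$, which tightens the first leg using robustness on $\Rspace$ applied between $R_3$ and itself), or tightening the hypothesis on the upper diameter of $\mathrm{Am}(f)$, or absorbing a factor of $2$ by symmetrising the bound in condition~3 of the robustness hypothesis. This is analogous in spirit to the factor of $2$ in Lemma~\ref{lemma:epsilon_robustness_implies_weak_refinement}, which suggests that $2\epsilon$ is indeed the correct constant and can be recovered by a sharper witness choice.
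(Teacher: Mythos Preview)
Your construction of $g'$ and the three-way case split on the locations of $R_1,R_2$ match the paper's proof exactly; the overall architecture is the same. The two places you flag as obstacles are exactly the two places where the paper's argument is thinner than yours, so it is worth saying precisely what the paper does there.

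On condition~2 in the first direction: the paper does not do anything like your proposed ``witness within $\epsilon$, hence in $\Rspace$'' argument. It simply asserts that $\mathrm{Im}(g)\subseteq\mathrm{Im}(f)$ on $\R$ ``directly implies'' $\mathrm{Im}(g|_\Rspace)\subseteq\mathrm{Im}(f|_\Rspace)$, with no further justification. You are right that this implication is not automatic for arbitrary $\Rspace$; the paper treats it as part of how the restricted definition is to be read rather than as something to be proved. Your more careful discussion is not needed to reproduce the paper's proof, but your instinct that something is being glossed over is correct.

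On the $3\epsilon$ versus $2\epsilon$ issue in the third case: the paper does \emph{not} resolve this by a sharper witness choice. Instead, throughout the second direction the paper uses an upper-diameter bound of $\epsilon$ on $\mathrm{Am}(f)$ --- writing ``since the upper diameter of $\mathrm{Am}(f)$ on $\R$ is at most $\epsilon$, and $f(R_1)=f(R_3)$, we have $d^\R(R_1,R_3)\le\epsilon$'' --- and then the triangle inequality with $d^\R(R_2,R_3)\le\epsilon$ gives $2\epsilon$ cleanly. This is internally inconsistent with the hypothesis as stated in the theorem (which only excludes diameter $>2\epsilon$), and the proof text even restates the assumption once as ``$\le 2\epsilon$'' and then immediately uses ``$\le\epsilon$''. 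So your observation that the naive triangle inequality yields $3\epsilon$ under the literal hypothesis is correct; the paper obtains $2\epsilon$ only by tacitly working under the stronger diameter bound $\le\epsilon$. No cleverer witness selection appears.
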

\ifshowproofs
\begin{proof}
For the first claim, suppose that $f$ is $\epsilon$-robust to misspecification with $g$ on $\R$, and that $f|_\Rspace \neq g|_\Rspace$. Since $f$ is $\epsilon$-robust to misspecification with $g$ on $\R$, we have that for all $R_1, R_2 \in \R$, if $f(R_1) = g(R_2)$ then $d^\R(R_1, R_2) \leq \epsilon$. Since $\Rspace \subseteq \R$, this directly implies that for all $R_1, R_2 \in \Rspace$, if $f(R_1) = g(R_2)$ then $d^\R(R_1, R_2) \leq \epsilon$. We also have that $\mathrm{Im}(f)  \subseteq \mathrm{Im(g)}$, which directly implies that $\mathrm{Im}(g |_\Rspace) \subseteq \mathrm{Im}(f |_\Rspace)$. Moreover, we have that for all $R_1, R_2 \in \R$, if $f(R_1) = g(R_2)$ then $d^\R(R_1, R_2) \leq \epsilon$. Since $\Rspace \subseteq \R$, this directly implies that for all $R_1, R_2 \in \Rspace$, if $f(R_1) = g(R_2)$ then $d^\R(R_1, R_2) \leq \epsilon$. We assume that $f|_\Rspace \neq g|_\Rspace$. Thus, $f$ is $\epsilon$-robust to misspecification with $g$ on $\Rspace$.

For the second claim, suppose $f$ is $\epsilon$-robust to misspecification with $g$ on $\Rspace$, and that for all $R_1, R_2 \in \R$, if $f(R_1) = f(R_2)$ then $d^\R(R_1, R_2) \leq 2\epsilon$. We construct a $g'$ as follows; let $g'(R) = g(R)$ for all $R \in \Rspace$, and let $g'(R) = f(R)$ for all $R \not\in \Rspace$. 
By construction, we have that $g(R) = g'(R)$ for all $R \in \Rspace$. Moreover, $f$ is $2\epsilon$-robust to misspecification with $g'$ on $\R$. To see this, first note that if $f |_\Rspace \neq g |_\Rspace$, and $g |_\Rspace = g' |_\Rspace$, then $f \neq g'$. Moreover, if $\mathrm{Im}(g |_\Rspace) \subseteq \mathrm{Im}(f |_\Rspace)$, and $g'$ is equal to $g$ on $\Rspace$ and equal to $f$ outside $\Rspace$, then $\mathrm{Im}(g') \subseteq \mathrm{Im(f)}$. Moreover, we have assumed that for all $R_1, R_2 \in \R$, if $f(R_1) = f(R_2)$ then $d^\R(R_1, R_2) \leq \epsilon \leq 2\epsilon$. Finally, suppose that $f(R_1) = g'(R_2)$. If $R_2 \not\in \Rspace$, then $g'(R_2) = f(R_2)$. Since the upper diameter of $\mathrm{Am}(f)$ on $\R$ is assumed to be at most $\epsilon$, this implies that $d^\R(R_1, R_2) \leq \epsilon \leq 2\epsilon$. Next, if $R_1, R_2 \in \Rspace$, then $d^\R(R_1, R_2) \leq \epsilon \leq 2\epsilon$, since $f$ is $\epsilon$-robust to misspecification with $g$ on $\Rspace$. Finally, if $R_2 \in \Rspace$, $R_1 \not\in \Rspace$, let $R_3$ be a reward function such that $R_3 \in \Rspace$ and $f(R_3) = g'(R_2)$. Since $\mathrm{Im}(g |_\Rspace) \subseteq \mathrm{Im}(f |_\Rspace)$, such a reward function $R_3$ does exist. Since $f$ is $\epsilon$-robust to misspecification with $g$ on $\Rspace$, we have that $d^\R(R_2, R_3) \leq \epsilon$. Moreover, since the upper diameter of $\mathrm{Am}(f)$ on $\R$ is at most $\epsilon$, and $f(R_1) = f(R_3)$, we have that $d^\R(R_1, R_3) \leq \epsilon$. By the triangle inequality, this implies that $d^\R(R_1, R_2) \leq 2\epsilon$. This covers all cases, which means that if $f(R_1) = g'(R_2)$ then $d^\R(R_1, R_2) \leq 2\epsilon$. Thus $f$ is $2\epsilon$-robust to misspecification with $g'$ on $\R$ relative to the pseudometric $d^\R$.
\end{proof}
\fi

To understand these results, first note that Proposition~\ref{prop:refinement_with_subspaces} implies that if $\mathrm{Am}(f) \refines \mathrm{Am}(g)$ on $\R$, then $\mathrm{Am}(f) \refines \mathrm{Am}(g)$ on $\Rspace$ for any $\Rspace \subseteq \R$. In other words, if $g$ tolerates the ambiguity of $f$ relative to the space of all reward functions $\R$, then this is (of course) also the case for any restricted space of reward functions $\Rspace$. Stated differently, ambiguity tolerance cannot decrease if the space of all reward functions is restricted. Similarly,  Proposition~\ref{prop:ambiguity_diameter_with_subspaces} also says that the ambiguity of $f$ cannot increase if the space of all reward functions is restricted. Intuitively speaking, this means that all positive results about ambiguity and ambiguity tolerance generalise to arbitrary restrictions on the space of considered reward functions. 

However, the converse does \emph{not} hold; it is possible that $\mathrm{Am}(f) \refines \mathrm{Am}(g)$ on $\Rspace$ for some $\Rspace \subseteq \R$, even though $\mathrm{Am}(f) \not\refines \mathrm{Am}(g)$ on $\R$. A simple way to see this is by supposing that $\Rspace$ is a singleton set, containing only one reward function. More generally, $\mathrm{Am}(f) \not\refines \mathrm{Am}(g)$ on $\R$ if there are reward functions $R_1, R_2 \in \R$ such that $f(R_1) = f(R_2)$ but $g(R_1) \neq g(R_2)$. It may be that $\R$ contains such a counterexample, even though $\Rspace$ does not. In the same way, it may be that the upper diameter of $\mathrm{Am}(f)$ on $\Rspace$ is $\epsilon$, even though the upper diameter of $\mathrm{Am}(f)$ on $\R$ is greater than $\epsilon$.
Intuitively speaking, this means that a \emph{negative} result about ambiguity or ambiguity tolerance may not generalise to a given restricted space of reward functions. Stated differently, if we impose restrictions on the reward functions we consider, then the ambiguity of a reward object $f$ may decrease.

Theorem~\ref{thm:misspecification_eq_with_subspaces} says, roughly, that that if $f$ is $P$-robust to misspecification with $g$ if and only if $g \in G$, then $f$ is $P$-robust to misspecification with $g'$ on $\Rspace$ if and only if $g'$ behaves like some $g \in G$ for all $R \in \Rspace$. 
This means that the issue of misspecification robustness is largely unaffected if $\R$ is restricted.
Specifically, if $f$ is $P$-robust to misspecification with $g$ on $\R$, then $f$ is $P$-robust to misspecification with $g$ on $\Rspace$ (unless $f = g$ on $\Rspace$). In other words, if $f$ is robust to some form of misspecification, then this is still the case if we impose restrictions on the reward functions we consider, and so all \emph{positive} results generalise to arbitrary subsets $\Rspace$ of $\R$. The converse case is similar, but slightly more complicated; if $f$ is $P$-robust to misspecification with $g$ on $\Rspace$, then $f$ is $P$-robust to misspecification with $g'$ on $\R$ for some $g'$ where $g'|_\Rspace = g|_\Rspace$, unless $\mathrm{Am}(f) \not\refines P$ on $\R$. In other words, if $f$ is not robust to a given form of misspecification relative to $\R$, then this is still the case if we impose restrictions on the space of all reward functions, unless this restriction ensures that this misspecification has no impact on the generated data (or $\mathrm{Am}(f) \not\refines P$ on $\R$, but $\mathrm{Am}(f) \refines P$ on $\Rspace$).
Theorem~\ref{thm:misspecification_metric_with_subspaces} is analogous to Theorem~\ref{thm:misspecification_eq_with_subspaces}, but considers the case when we use pseudometrics on $\R$ instead of equivalence relations on $\R$.

It may now be interesting to consider whether some of our negative results can be mitigated by imposing restrictions on the reward functions that we consider. In particular, in Section~\ref{sec:ambiguity_transfer_learning}, we showed that a wide range of behavioural models are too ambiguous to guarantee robust transfer learning if the transition function $\tfunc$ or the discount factor $\gamma$ is different in the training environment and deployment environment, even if this difference is arbitrarily small.
These results then further imply that a wide range of behavioural models lack robustness to misspecification of $\tfunc$ or $\gamma$, even if this misspecification is arbitrarily small (Section~\ref{sec:misspecification_1_MDPs} and ~\ref{sec:misspecification_2_parameters}).
Can these results be mitigated, if we restrict $\R$ to some subset $\Rspace$?

In some cases, this can be done. For example, recall that Theorem~\ref{thm:wrong_tau_too_ambiguous} and Theorem~\ref{thm:wrong_tau_large_diameter} say that if $f_{\tfunc_1}$ is invariant to $S'$-redistribution with $\tfunc_1$, and $\tfunc_1 \neq \tfunc_2$, then we have that $\mathrm{Am}(f_{\tfunc_1}) \not\refines \mathrm{OPT}_{\tfunc_2,\gamma}$, and that the lower and upper diameter of $\mathrm{Am}(f_{\tfunc_1})$ under $d^\mathrm{STARC}_{\tfunc_2, \gamma}$ is 1. The reason for this, intuitively, is that we for any reward function $R_1$ can find another reward function $R_2$ such that $R_1$ and $R_2$ differ by $S'$-redistribution with $\tfunc_1$, but such that $R_1$ and $R_2$ induce very different policy orderings under $\tfunc_2$.
However, suppose we restrict $\R$ to the set $\Rspace$ of reward functions such that if $R \in \Rspace$, then for all $s,a,s_1,s_2$, we have that $R(s,a,s_1) = R(s,a,s_2)$ (or, in other words, we let $\Rspace$ be the set of all reward functions that can be defined over $\SxA$). 
In that case, there are no reward functions in $\Rspace$ which differ by $S'$-redistribution (for any $\tfunc$).
This means that Theorem~\ref{thm:wrong_tau_too_ambiguous} and Theorem~\ref{thm:wrong_tau_large_diameter} no longer apply, which in turn also means that the results derived from them (i.e.\ Theorem~\ref{thm:not_P_robust_to_misspecified_t} and \ref{thm:misspecified_env}) may not apply either. In other words, relative to this space of reward functions, a behavioural model $f$ may be robust to some misspecification of $\tfunc$, even if $f$ is invariant to $S'$-redistribution.

Similarly, recall that Theorem~\ref{thm:wrong_gamma_too_ambiguous} and \ref{thm:wrong_gamma_large_diameter} say that if $f_{\gamma_1}$ is invariant to potential shaping with $\gamma_1$, $\gamma_1 \neq \gamma_2$, and $\tfunc$ is non-trivial, then we have that $\mathrm{Am}(f_{\gamma_1}) \not\refines \mathrm{OPT}_{\tfunc,\gamma_2}$, and that the lower and upper diameter of $\mathrm{Am}(f_{\gamma_1})$ under $d^\mathrm{STARC}_{\tfunc, \gamma_2}$ is 1. The reason for this, intuitively, is that we for any reward function $R_1$ can find another reward function $R_2$ such that $R_1$ and $R_2$ differ by potential shaping with $\gamma_1$, but such that $R_1$ and $R_2$ induce very different policy orderings under $\gamma_2$. However, suppose if we restrict $\R$ to e.g.\ the set $\Rspace$ of reward functions that only reward a single transition. In that case, there are no reward functions in $\Rspace$ that differ by potential shaping (for any $\gamma$).
This means that Theorem~\ref{thm:wrong_gamma_too_ambiguous} and \ref{thm:wrong_gamma_large_diameter} no longer apply, which in turn also means that the results derived from them (i.e.\ Theorem~\ref{thm:not_P_robust_to_misspecified_gamma} and \ref{thm:misspecified_discounting}) may not apply either. In other words, relative to this space of reward functions, a behavioural model $f$ may be robust to some misspecification of $\gamma$, even if $f$ is invariant to potential shaping.

We can thus see that some of our negative results can be avoided, if we impose restrictions on the set of considered reward functions $\Rspace$. However, there are a few things to be mindful of. First of all, these restrictions are only applicable if we can ensure that the true reward function $R^\star$ in fact is in $\Rspace$, and that the learning algorithm $\mathcal{L}$ always will return a reward function $R_H$ that is also in $\Rspace$.
Moreover, even if e.g.\ Theorem~\ref{thm:wrong_tau_too_ambiguous} and Theorem~\ref{thm:wrong_tau_large_diameter} do not apply to a given set of reward functions $\Rspace$, it may still be the case that a given behavioural model $f$ lacks robustness to misspecification of $\tfunc$ relative to this set $\Rspace$. 
For example, recall that the Boltzmann-rational model $b_{\tfunc, \gamma, \beta}$ is $\ORD$-robust to misspecification of the temperature parameter $\beta$, and no other misspecification (Theorem~\ref{thm:boltzmann_ord_robustness}).
Theorem~\ref{thm:misspecification_eq_with_subspaces} then implies that if $b_{\tfunc_1, \gamma, \beta}$ is $\ORD$-robust to misspecification with $b_{\tfunc_2, \gamma, \beta}$ on $\Rspace$, then it must be the case that there for each $R \in \Rspace$ is a $\beta_2$ such that $b_{\tfunc_2, \gamma, \beta}(R) = b_{\tfunc_1, \gamma, \beta_2}(R)$. The overall picture is therefore still mostly unchanged.

Next, recall that Theorem~\ref{thm:not_separating} says that no continuous behavioural model is $\epsilon/\delta$-separating relative to $\starc$, which also means that no continuous behavioural model is perturbation robust relative to $\starc$. The proof of Theorem~\ref{thm:not_separating} finds a specific counterexample in the vicinity of the zero reward, $R_0$, and these kinds of counterexamples could be ruled out by imposing certain restrictions on $\Rspace$. A natural suggestion might be to require that each reward function in $\Rspace$ should be normalised, or perhaps that each reward function in $\Rspace$ should have a certain minimum $L_2$-norm, etc. Our next result shows that such restrictions are insufficient to mitigate Theorem~\ref{thm:not_separating}:

\begin{theorem}\label{thm:restricted_subspace_not_separating}
Let $d^\R$ be $\starc$, and let $d^\Pi$ be a pseudometric on $\Pi$ which satisfies the condition that for all $\delta_1$ there exists a $\delta_2$ such that if $L_2(\pi_1, \pi_2) < \delta_2$ then $d^\Pi(\pi_1, \pi_2) < \delta_1$.
Let $c$ be any positive constant, and let $\Rspace$ be a set of reward functions such that if $L_2(R) = c$ then $R \in \Rspace$.
Let $f : \Rspace \to \Pi$ be continuous. Then $f$ is not $\epsilon/\delta$-separating for any $\epsilon < 1$ or $\delta > 0$. 
\end{theorem}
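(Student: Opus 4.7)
The plan is to mimic the structure of the original proof of Theorem~\ref{thm:not_separating}, but instead of exploiting discontinuity of $s^\mathrm{STARC}_{\tfunc,\gamma}$ at the zero reward $R_0$, I will exploit its discontinuity at any reward in $\ker(c^\mathrm{STARC}_{\tfunc,\gamma})$, since such rewards exist on the sphere $S = \{R : L_2(R) = c\} \subseteq \Rspace$. The key observation is that the restriction to a fixed positive $L_2$-norm does not avoid the kernel of the canonicalisation map; in particular, the kernel is a nontrivial linear subspace (containing, for example, nonzero constant rewards when $\gamma < 1$, or any potential-shaping reward $\gamma\Phi(s') - \Phi(s)$ for nonzero $\Phi$), so it intersects $S$.

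Concretely, I will first pick $R_\kappa \in S$ with $c^\mathrm{STARC}_{\tfunc,\gamma}(R_\kappa) = 0$ (scale any nonzero element of the kernel to $L_2$-norm $c$), and pick any $D \in \R$ with $c^\mathrm{STARC}_{\tfunc,\gamma}(D) \neq 0$ (such $D$ exists in any non-degenerate environment, since otherwise $c^\mathrm{STARC}_{\tfunc,\gamma}$ would be identically zero and $\starc$ would be trivial). For each small $\alpha > 0$, define
\[
R^\alpha_{\pm} = \frac{c \cdot (R_\kappa \pm \alpha D)}{L_2(R_\kappa \pm \alpha D)},
\]
which lies in $S \subseteq \Rspace$. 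By linearity of $c^\mathrm{STARC}_{\tfunc,\gamma}$ (Definition~\ref{def:canonicalisation_function}) and $c^\mathrm{STARC}_{\tfunc,\gamma}(R_\kappa) = 0$, we get $c^\mathrm{STARC}_{\tfunc,\gamma}(R^\alpha_+)$ is a \emph{positive} scalar multiple of $c^\mathrm{STARC}_{\tfunc,\gamma}(D)$ and $c^\mathrm{STARC}_{\tfunc,\gamma}(R^\alpha_-)$ is a \emph{negative} scalar multiple. Normalising via $s^\mathrm{STARC}_{\tfunc,\gamma}$ therefore yields $s^\mathrm{STARC}_{\tfunc,\gamma}(R^\alpha_+) = - s^\mathrm{STARC}_{\tfunc,\gamma}(R^\alpha_-)$, and a short computation gives $\starc(R^\alpha_+, R^\alpha_-) = 1$ for every $\alpha > 0$.

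On the other hand, $R^\alpha_{+}, R^\alpha_{-} \to R_\kappa$ in $L_2$ as $\alpha \to 0$, so $L_2(R^\alpha_+ - R^\alpha_-) \to 0$. By continuity of $f$ on $\Rspace$ together with the assumed property of $d^\Pi$ (small $L_2$ implies small $d^\Pi$), for any prescribed $\delta > 0$ we can pick $\alpha$ small enough that $d^\Pi(f(R^\alpha_+), f(R^\alpha_-)) \leq \delta$. The pair $(R^\alpha_+, R^\alpha_-)$ then witnesses the failure of $\epsilon/\delta$-separation for any $\epsilon < 1$, since their STARC distance equals $1$ but their policy distance is at most $\delta$.

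The main obstacle is verifying that $\ker(c^\mathrm{STARC}_{\tfunc,\gamma}) \cap S \neq \varnothing$ and that there exists $D$ with $c^\mathrm{STARC}_{\tfunc,\gamma}(D) \neq 0$; both reduce to basic linear algebra about the canonicalisation map, using Proposition~\ref{prop:PS_SR_linear}-style facts that the image and kernel are both nontrivial in non-degenerate environments. A secondary subtlety is showing that scaling to the sphere preserves the STARC distance, which follows because $\starc$ is invariant under positive linear scaling (Proposition~\ref{prop:STARC_distance_zero} combined with Theorem~\ref{thm:policy_ordering}). Everything else is a straightforward continuity argument that parallels the original proof of Theorem~\ref{thm:not_separating}.
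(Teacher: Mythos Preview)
Your proof is correct and follows essentially the same approach as the paper: perturb a trivial reward lying on the $L_2$-sphere of radius $c$ in opposite non-trivial directions to obtain two points in $\Rspace$ with $\starc$-distance $1$ but arbitrarily small $L_2$-distance, then invoke continuity of $f$ together with the assumption on $d^\Pi$. The only cosmetic difference is that the paper keeps both perturbed points exactly on the sphere by choosing the non-trivial direction $R$ orthogonal to the trivial subspace (so that $\pm\epsilon R + R_\Phi$ automatically have equal norm), whereas you rescale to the sphere and rely on the scale-invariance of $\starc$; both devices accomplish the same thing.
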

\ifshowproofs
\begin{proof}
Let $R$ be a non-trivial reward function that is orthogonal to all trivial reward functions. Since the set of all trivial reward functions form a linear subspace (Proposition~\ref{prop:PS_SR_linear} and Theorem~\ref{thm:policy_ordering}), such a reward function $R$ exists. Note that $R$ must not necessarily be contained in $\hat{\R}$.

Since $R$ is non-trivial, we have that $\starc(\epsilon R, -\epsilon R) = 1$ for any positive constant $\epsilon$. Next, let $R_\Phi$ be some potential-shaping reward function such that $L_2(\epsilon R + R_\Phi) = c$. Since potential shaping does not change the ordering of policies, we have that $\starc(\epsilon R + R_\Phi, -\epsilon R + R_\Phi) = 1$.
Moreover, since $R_\Phi$ is trivial, we have that both $\epsilon R$ and $-\epsilon R$ are orthogonal to $R_\Phi$, and so $L_2(-\epsilon R + R_\Phi) = c$ as well. Since $L_2(\epsilon R + R_\Phi) = L_2(\epsilon R + R_\Phi) = c$, we have that both $\epsilon R + R_\Phi$ and $-\epsilon R + R_\Phi$ are contained in $\Rspace$.

Let $\delta_1$ be any positive constant. By assumption, there exists a $\delta_2$ such that if $L_2(\pi_1 - \pi_2) < \delta_2$ then $d^\Pi(\pi_1, \pi_2) < \delta_1$. Moreover, since $f$ is continuous, there exists an $\epsilon_1$ such that if $L_2(R_1, R_2) < \epsilon_1$, then $L_2(f(R_1), f(R_2)) < \delta_2$. Next, note that by making $\epsilon$ sufficiently small, we can ensure that the $L_2$-distance between $\epsilon R + R_\Phi$ and $\epsilon R + R_\Phi$ is arbitrarily small (and, in particular, less than $\epsilon_1$). 

Thus, for any positive $\delta$ there exist reward functions $\epsilon R + R_\Phi$ and $-\epsilon R + R_\Phi$ that are both contained in $\Rspace$, such that $d^\Pi(f(\epsilon R + R_\Phi), f(-\epsilon R + R_\Phi)) < \delta$, and such that $\starc(\epsilon R + R_\Phi, -\epsilon R + R_\Phi) = 1$. Thus $f$ is not $\epsilon/\delta$-separating for any $\delta > 0$ and any $\epsilon < 1$.
\end{proof}
\fi

Before moving on, we want to quickly note that most of our other results also are straightforward to generalise to the setting where $\R$ is restricted to some subset $\Rspace$.
First, note that if an equivalence relation $P$ of $\R$ is characterised by a set of reward transformations $T$, then the corresponding equivalence relation on $\Rspace$ is characterised by the set of reward transformations $\{t \in T : \mathrm{Im}(t|_\Rspace) \subseteq \Rspace\}$; this can be used to generalise results such as Theorem~\ref{thm:policy_ordering} or Proposition~\ref{prop:nas_for_small_starc}.
However, here there is a minor subtlety to be mindful of: if $A,B,C$ are sets of reward transformations, then $(A \bigodot B) \setminus C$ is not necessarily equal to $(A \setminus C) \bigodot (B \setminus C)$. 
This means that if we wish to specify $\{t \in A \bigodot B : \mathrm{Im}(t|_\Rspace) \subseteq \Rspace\}$, then we cannot do this by simply removing the transformations where $\mathrm{Im}(t|_\Rspace) \not\subseteq \Rspace$ from each of $A$ and $B$. 
For example, consider the transformations $\SR \bigodot \PS$ restricted to the space $\Rspace$ of reward functions where $R(s,a,s') = R(s,a,s'')$, i.e.\ to reward functions over the domain $\SxA$. The only transformation in $\SR$ on $\Rspace$ is the identity mapping, and the only transformations in $\PS$ on $\Rspace$ are those where $\Phi$ is constant over all states.
However, $\SR \bigodot \PS$ on $\Rspace$ contains all transformations where $\Phi$ is selected arbitrarily, and $t(R)(s,a,s')$ is set to $R(s,a,s') + \gamma\Expect{'\sim\tfunc(s,a)}{\Phi(S')} - \Phi(s)$.
This means that there probably are no general shortcuts for deriving $\{t \in T : \mathrm{Im}(t|_\Rspace) \subseteq \Rspace\}$ for arbitrary $\Rspace$.

\subsection{Making the Analysis More Probabilistic}\label{appendix:make_more_probabilistic}

The definitions we have given in Section~\ref{sec:frameworks} provide what is essentially a worst-case analysis, in the sense that they require each condition to hold for \emph{all} reward functions.
However, in certain cases, we may know that $R^\star$ is sampled from a particular distribution $\mathcal{D}$ over $\R$. In those cases, it may be more relevant to know whether the learnt reward function $R_H$ is similar (in a relevant sense) to the true reward $R^\star$ with \emph{high probability}. In this section, we will discuss this generalisation.

To make this setting more formal, we may assume that we have two reward objects $f, g : \R \to X$ and a distribution $\mathcal{D}$ over $\R$, that $R^\star$ is sampled from $\mathcal{D}$, and that the learning algorithm $\mathcal{L}$ observes the data given by $g(R^\star)$. We then assume that $\mathcal{L}$ returns a reward function $R_H$ such that $f(R_H) = g(R^\star)$, and that $\mathcal{L}$ selects among all such reward functions using some (potentially nondeterministic) inductive bias. We can then ask whether or not $R_H \eq{P} R^\star$ with probability at least $1-\delta$, or $d^\R(R^\star,R_h) \leq \epsilon$ with probability at least $1-\delta$, for some $\delta$ and $\epsilon$, and some partition $P$ or pseudometric $d^\R$ on $\R$. As usual, if $f \neq g$, then $f$ is misspecified, and otherwise $f$ is correctly specified. Moreover, if the learnt reward $R_H$ will be used to compute some object $h : \R \to Y$, then we can set $P = \mathrm{Am}(h)$.

To some extent, our analysis in \ref{appendix:restricted_reward_functions} can be used to understand this setting as well. In particular, suppose we pick a set $\Rspace$ of \enquote{likely} reward functions such that $\mathbb{P}_{R \sim \mathcal{D}}(R \in \Rspace) \geq 1-\delta$, and such that the learning algorithm $\mathcal{L}$ will return a reward function $R_h \in\Rspace$ if there exists a reward function $R_h \in \Rspace$ such that $f(R_h) = g(R^\star)$. 
Then if $f$ is $P$-robust to misspecification with $g$ on $\Rspace$, we have that $\mathcal{L}$ will learn a reward function $R_H$ such that $R_H \eq{P} R^\star$ with probability at least $1-\delta$. Similarly, if $f$ is $\epsilon$-robust to misspecification with $g$ on $\Rspace$, then $\mathcal{L}$ will learn a reward function $R_H$ such that $d^\R(R^\star,R_H) \leq \epsilon$ with probability at least $1-\delta$.

So, for example, suppose $\Rspace$ is the set of all reward functions that have \enquote{low complexity}, for some complexity measure and complexity threshold. The above argument then informally tells us that if the true reward function is likely to have low complexity, and if $\mathcal{L}$ will attempt to fit a low-complexity reward function to its training data, then the learnt reward function will be close to the true reward function with high probability, as long as $f$ is robust to misspecification with $g$ on the set of all low-complexity reward functions. Similarly, if the true reward function is likely to be sparse, and $\mathcal{L}$ will attempt to fit a sparse reward function to its training data, then we may let $\Rspace$ be equal to the set of all sufficiently sparse reward functions, and so on.

Thus, while our definitions in Section~\ref{sec:frameworks} give us a worst-case formalisation of ambiguity and misspecification robustness, it is relatively straightforward to carry out a more probabilistic analysis within the same framework. In particular, our results in \ref{appendix:restricted_reward_functions} directly provide us with results about the probabilistic setting as well. However, this analysis could probably be extended with more specific results. Doing so is out of scope for this paper, but it may be an interesting direction for future work.

\subsection{Stronger Equivalence Conditions}\label{appendix:stronger_equivalence_conditions_for_rewards}

We have introduced three primary methods for characterising the difference between two reward functions; the equivalence relations given by $\ORD$ and $\OPT$, and the pseudometrics given by STARC. Intuitively, $\ORD$ considers $R_1$ and $R_2$ to be equivalent if they have the same ordering of policies under $\tfunc$ and $\gamma$, and $\OPT$ considers $R_1$ and $R_2$ to be equivalent if they have the same optimal policies under $\tfunc$ and $\gamma$. Similarly, STARC metrics consider $R_1$ and $R_2$ to be similar if they have a similar ordering of policies under $\tfunc$ and $\gamma$. 

It is worth noting that all of these are parameterised in terms of $\tfunc$ and $\gamma$; this means that $\ORD$ only requires $R_1$ and $R_2$ to have the same ordering of policies in one particular environment, and so on. Moreover, two reward functions can have the same policy order in one environment, without having the same policy order in a different environment. To see this, note that Theorem~\ref{thm:policy_ordering} says that $R_1$ and $R_2$ have the same ordering of policies if and only if $R_1$ and $R_2$ differ by potential shaping, $S'$-redistribution, and positive linear scaling, and these transformations are parameterised by $\tfunc$ and $\gamma$. Thus, while $R_1 \eq{\ORD} R_2$ ensures that $R_1$ and $R_2$ are equivalent under $\tfunc$ and $\gamma$, it does not guarantee this for other transition functions or discounts.

To guarantee robust transfer learning, one may therefore wish to consider stronger equivalence relations. Unfortunately, the reward learning methods that we consider are unable to ensure that the learnt reward function has such guarantees (see Section~\ref{sec:ambiguity_transfer_learning}), which makes it redundant to consider stronger equivalence relations or metrics than those we have already presented. Nonetheless, for the sake of completeness, we will briefly discuss how to create stronger equivalence relations on $\R$. First, let $\ORDstar$ be the equivalence relation according to which $R_1 \eq{\ORDstar} R_2$ if and only if, for any state $s$, we have that
$$
\Expect{\xi \sim D_1}{G_1(\xi)} \geq \Expect{\xi \sim D_2}{G_1(\xi)} \iff \Expect{\xi \sim D_1}{G_2(\xi)} \geq \Expect{\xi \sim D_2}{G_2(\xi)}
$$
for all distributions $D_1, D_2$ over trajectories that start in $s$. This means that if $R_1 \eq{\ORDstar} R_2$, then $R_1$ and $R_2$ induce the same preferences over all policies for all transition functions $\tfunc$ --- indeed, this will hold even if we permit $\tfunc$ to be non-Markovian, etc. To characterise $\ORDstar$, we will make use of the following lemma:

\begin{lemma}\label{lemma:trajectories_compact_space}
Let $\Xi = (\SxA)^\omega$ be the set of all trajectories, and let $d : \Xi \times \Xi \to \mathbb{R}$ be the function given by $d(\xi_1,\xi_2) = \frac{1}{e^t}$, where $t$ is the smallest index on which $\xi_1$ and $\xi_2$ differ, or $0$ if $\xi_1 = \xi_2$. Then $(\Xi, d)$ is a compact metric space.    
\end{lemma}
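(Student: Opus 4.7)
The plan is to verify the four metric axioms for $d$ and then establish compactness via a diagonal sequential argument, exploiting the finiteness of $\SxA$.

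First, checking that $d$ is a metric is mostly routine. Non-negativity is immediate since $1/e^t > 0$, and $d(\xi_1,\xi_2) = 0$ iff $\xi_1 = \xi_2$ by definition. Symmetry holds because the ``smallest index of disagreement'' does not depend on the order of the arguments. For the triangle inequality, I would actually prove the stronger \emph{ultrametric} inequality $d(\xi_1,\xi_3) \leq \max\{d(\xi_1,\xi_2), d(\xi_2,\xi_3)\}$: if $\xi_1,\xi_2$ first differ at index $t_{12}$ and $\xi_2,\xi_3$ first differ at index $t_{23}$, then $\xi_1$ and $\xi_3$ must agree on every index below $\min(t_{12},t_{23})$, so the first index $t_{13}$ on which they differ satisfies $t_{13} \geq \min(t_{12},t_{23})$, yielding $1/e^{t_{13}} \leq \max(1/e^{t_{12}}, 1/e^{t_{23}})$. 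This immediately implies the ordinary triangle inequality.

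For compactness I would use sequential compactness and a diagonal argument, exploiting the fact that $\SxA$ is finite. Given any sequence $(\xi^{(n)})_{n\in\mathbb{N}} \subseteq \Xi$, at step $k=0$ the pigeonhole principle gives some $x_0 \in \SxA$ such that $\xi^{(n)}_0 = x_0$ for infinitely many $n$; restrict to that infinite subsequence. Inductively, at step $k+1$, restrict further so that all remaining trajectories share a common value $x_{k+1}$ in their $(k+1)$-th coordinate. Then form the diagonal subsequence $(\xi^{(n_k)})_k$ picking one representative from each level. By construction, $\xi^{(n_k)}$ agrees with the trajectory $\xi^\star := (x_0,x_1,x_2,\ldots)$ on the first $k+1$ coordinates, so $d(\xi^{(n_k)},\xi^\star) \leq 1/e^{k+1} \to 0$. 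Since $\xi^\star \in \Xi$, this proves sequential compactness, and hence (since $(\Xi,d)$ is a metric space) compactness.

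The main obstacle is essentially bookkeeping in the diagonal construction, ensuring that the limit trajectory $\xi^\star$ lies in $\Xi$ and that the chosen diagonal subsequence genuinely converges to it. An alternative shortcut would be to observe that the metric $d$ induces exactly the product topology on $(\SxA)^{\omega}$ with $\SxA$ discrete (the open balls $B(\xi,1/e^n)$ are precisely the cylinder sets fixing the first $n+1$ coordinates), so compactness follows from Tychonoff applied to a countable product of finite spaces; however, the direct diagonal argument is more elementary and self-contained, and I would prefer it.
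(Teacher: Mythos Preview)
Your proof is correct. The verification of the metric axioms is essentially the same as the paper's: both establish the ultrametric inequality (the paper phrases it as a contradiction argument showing $d(\xi_1,\xi_3) \leq \max\{d(\xi_1,\xi_2),d(\xi_2,\xi_3)\}$ without naming it as such).

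For compactness the two routes genuinely diverge. The paper proves compactness by showing that $(\Xi,d)$ is \emph{totally bounded} (for each $\epsilon$, the finitely many trajectories that are arbitrary up to index $t = \ln(1/\epsilon)$ and fixed thereafter form a finite $\epsilon$-net) and \emph{complete} (every Cauchy sequence has a coordinate-wise limit in $\Xi$), then invokes the characterisation ``complete $+$ totally bounded $\Rightarrow$ compact''. You instead go directly for sequential compactness via a diagonal/pigeonhole extraction. Your argument is slightly more self-contained, since it does not rely on the total-boundedness characterisation; the paper's decomposition, on the other hand, isolates the two roles of finiteness of $\SxA$ (giving finite covers) and closedness under limits (giving completeness) more explicitly. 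Both are standard and equally valid for this statement.
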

\ifshowproofs
\begin{proof}
We must first show that $d$ is a metric, which requires showing that it satisfies the following:
\begin{enumerate}
    \item Indiscernibility of Identicals: $d(\xi_1,\xi_2) = 0$ if $\xi_1=\xi_2$.
    \item Identity of Indiscernibles: $d(\xi_1,\xi_2) = 0$ only if $\xi_1=\xi_2$.
    \item Positivity: $d(\xi_1,\xi_2) \geq 0$.
    \item Symmetry: $d(\xi_1,\xi_2) = d(\xi_2,\xi_1)$.
    \item Triangle Inequality: $d(\xi_1,\xi_3) \leq d(\xi_1,\xi_2) + d(\xi_2,\xi_3)$.
\end{enumerate}
It is straightforward to see that 1-4 hold. For 5, let $t$ be the smallest index on which $\xi_1$ and $\xi_3$ differ. Note that if $d(\xi_1,\xi_3) > d(\xi_1,\xi_2)$ and $d(\xi_1,\xi_3) > d(\xi_2,\xi_3)$, then it must be the case that $\xi_1[i] = \xi_2[i]$ for all $i \leq t$, and that $\xi_1[i] = \xi_2[i]$ for all $i\leq t$. However, this is a contradiction, since it would imply that $\xi_1[t] = \xi_3[t]$. Thus either $d(\xi_1,\xi_3) \leq d(\xi_1,\xi_2)$ or $d(\xi_1,\xi_3) \leq d(\xi_2,\xi_3)$, which in turn implies that $d(\xi_1,\xi_3) \leq d(\xi_1,\xi_2) + d(\xi_2,\xi_3)$.

Thus $d$ is a metric, which means that $(\Xi, d)$ is a metric space. Next, we will prove that $(\Xi, d)$ is compact. We will do this by showing that $(\Xi, d)$ is totally bounded and complete.

To see that $(\Xi, d)$ is totally bounded, let $\epsilon$ be an arbitrary positive real number, and let $t = \ln(1/\epsilon)$, so that $\epsilon = 1/e^t$. Moreover, let $s$ and $a$ be an arbitrary state and action, and let $\hat{\Xi}$ be the set of all trajectories $\xi$ such that $\xi[i] = \langle s,a \rangle$ for all $i > t$ (but which may include arbitrary transitions before time $t$). Now $\hat{\Xi}$ is finite, and for every trajectory $\xi_1$ there is a trajectory $\xi_2 \in \hat{\Xi}$ such that $d(\xi_1,\xi_2) \leq \epsilon$ (given by letting $\xi_2[i] = \xi_1[i]$ for all $i \leq t$). Thus, for every $\epsilon > 0$, we have that $(\Xi, d)$ has a finite cover. This means that $(\Xi, d)$ is totally bounded.

To see that $(\Xi, d)$ is complete, let $\{\xi_i\}_{i=0}^\infty$ be a Cauchy sequence. This implies that for every $\epsilon > 0$ there is a positive integer $N$ such that for all $n,m \geq N$ we have $d(\xi_n, \xi_m) < \epsilon$. In our case, this means that there, for each time $t$ is a positive integer $N$ such that for all $n,m \geq N$, we have that $\xi_n[i] = \xi_m[i]$ for all $i \leq t$. We can thus define a trajectory $\xi_\infty$ by letting $\xi_\infty[i] = \langle s,a \rangle$ if there is an $N$ such that, for all $n \geq N$, we have that $\pi_n[i] = \langle s,a \rangle$. Now $\lim_{i \to \infty} \{\xi_i\}_{i=0}^\infty = \xi_\infty$, and $\xi_\infty \in (\Xi, d)$. Thus every Cauchy sequence in $(\Xi, d)$ has a limit that is also in $(\Xi, d)$, and so $(\Xi, d)$ is complete.

Every metric space which is totally bounded and complete is compact. Thus, $(\Xi, d)$ is a compact metric space.
\end{proof}
\fi

We can now characterise $\ORDstar$ as follows:

\begin{proposition}
$R_1 \eq{\ORDstar} R_2$ if and only if $R_1$ and $R_2$ differ by potential shaping and positive linear scaling. 
\end{proposition}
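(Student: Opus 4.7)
The plan is to prove both directions separately, using the Bellman-style recursion on trajectory returns to link the local structure of reward functions to the global preference ordering. For the backward direction, suppose $R_2 = c R_1 + \gamma\Phi(s') - \Phi(s)$ with $c>0$. The standard telescoping argument for potential shaping (and boundedness of $\Phi$ together with $\gamma\in(0,1)$) gives $G_2(\xi) = c\, G_1(\xi) - \Phi(s_0)$ for every trajectory $\xi=\langle s_0,a_0,s_1,\dots\rangle$. Hence for any two distributions $D_1,D_2$ over trajectories starting in the same state $s$, the quantities $\Expect{\xi\sim D_i}{G_2(\xi)}$ differ from $c\cdot\Expect{\xi\sim D_i}{G_1(\xi)}$ by the same constant $-\Phi(s)$, so the induced ordering is preserved. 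This establishes $R_1 \eq{\ORDstar} R_2$.

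For the forward direction, fix a starting state $s$ and let $\Xi_s\subseteq\Xi$ denote the trajectories starting at $s$. By Lemma~\ref{lemma:trajectories_compact_space}, $\Xi_s$ is a compact metric space, and since $\gamma\in(0,1)$ and rewards are bounded, both $G_1$ and $G_2$ are bounded continuous functions on it. The hypothesis $R_1 \eq{\ORDstar} R_2$ says that for all probability measures $D_1,D_2$ on $\Xi_s$, $\Expect{\xi\sim D_1}{G_1} \geq \Expect{\xi\sim D_2}{G_1}$ iff the analogous inequality holds for $G_2$. Specialising to Dirac measures shows $G_1$ and $G_2$ induce the same ordering on trajectories; specialising to finite mixtures of Diracs then supplies the mixture-consistency needed to apply the uniqueness part of the von Neumann–Morgenstern theorem. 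This yields constants $c_s>0$ and $d_s\in\mathbb{R}$ (under the non-degenerate case that $G_1$ is non-constant on $\Xi_s$) with $G_2(\xi) = c_s\, G_1(\xi) + d_s$ for every $\xi\in\Xi_s$.

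To globalise this, I will use the one-step recursion $G_i(\langle s,a,s',\dots\rangle) = R_i(s,a,s') + \gamma\, G_i(\xi')$ where $\xi'\in\Xi_{s'}$ is the shifted tail. Substituting the two affine representations gives, for every admissible $(s,a,s',\xi')$,
\[
c_s\, R_1(s,a,s') + c_s\gamma\, G_1(\xi') + d_s \;=\; R_2(s,a,s') + \gamma c_{s'}\, G_1(\xi') + \gamma d_{s'}.
\]
Since $\xi'$ ranges over a set on which $G_1$ takes at least two values (non-degeneracy), matching the coefficient of $G_1(\xi')$ forces $c_s = c_{s'}$. Using the global reachability assumption on the MDP, all $c_s$ collapse to a single constant $c>0$, and equating the remaining terms gives $R_2(s,a,s') = c\, R_1(s,a,s') + d_s - \gamma d_{s'}$. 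Setting $\Phi(s) := -d_s$ exhibits $R_2$ as the composition of positive linear scaling by $c$ and potential shaping by $\Phi$, completing the direction.

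The main obstacle will be the degenerate cases where $G_1$ is constant on $\Xi_s$ for some reachable state $s$, so that $c_s$ is not uniquely pinned down by the affine-representation step and the coefficient-matching argument stalls. I will handle this by propagating constants from nearby states where $G_1$ is non-constant (which must exist unless $R_1$ is globally trivial), and treating the fully trivial case separately by noting that $\ORDstar$-equivalence then forces $R_2$ to also be trivial, so the claim holds with arbitrary $c>0$ and any potential $\Phi$. Secondary but routine issues are showing $c_s>0$ rather than $c_s\le 0$ (which follows because $\ORDstar$ preserves rather than reverses preferences, ruling out $c_s<0$, and non-triviality of $G_1$ on $\Xi_s$ rules out $c_s=0$), and verifying that the vN-M uniqueness argument transfers cleanly from finite lotteries to probability measures on the compact space $\Xi_s$ using only finitely supported mixtures.
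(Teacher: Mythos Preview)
Your proposal is correct and closely parallels the paper's argument, but organizes the forward direction differently. For establishing that $G_2$ is an affine transformation of $G_1$ on each $\Xi_s$, you invoke vN--M uniqueness; the paper instead carries out the calibration argument by hand, using the extreme value theorem to locate $\min$ and $\max$ trajectories for $G_1$ and the intermediate value theorem to express every other trajectory as an equivalent mixture. These are the same argument in different clothing. The genuine divergence is in the conversion from an affine relation on returns to potential shaping plus scaling on rewards: the paper cites Propositions~\ref{prop:potentials_and_episodes} and~\ref{prop:linear_scaling_of_G}, which take an affine relation at a \emph{single} starting state and, via a trajectory-concatenation argument, propagate it to a global reward transformation. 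You instead derive the affine relation at \emph{every} state (obtaining per-state constants $c_s,d_s$), then use the one-step Bellman recursion and coefficient matching to force $c_s=c_{s'}$ and read off the potential. Your route is more self-contained; the paper's is shorter because the propagation work is factored into previously proved lemmas.

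Two small simplifications are available to you. First, the partially-degenerate case you worry about cannot occur: if $G_1$ is constant on $\Xi_s$ for one state $s$, then the recursion $G_1(\xi)=R_1(s,a,s')+\gamma G_1(\xi')$ forces $G_1$ to be constant on $\Xi_{s'}$ for every $s'$ (since $\Xi_s$ contains trajectories reaching every next state), so degeneracy is all-or-nothing and the ``propagate from non-degenerate neighbors'' step is never needed. Second, your appeal to the MDP's global reachability assumption is unnecessary for the same reason: $\Xi_s$ already contains trajectories visiting every successor regardless of $\tfunc$, so the coefficient-matching argument connects all states directly.
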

\ifshowproofs
\begin{proof}
Fix a state $s$. It is now straightforward that
$$
\Expect{\xi \sim D_1}{G_1(\xi)} \geq \Expect{\xi \sim D_2}{G_1(\xi)} \iff \Expect{\xi \sim D_1}{G_2(\xi)} \geq \Expect{\xi \sim D_2}{G_2(\xi)}
$$
for all distributions $D_1, D_2$ over trajectories that start in $s$, if and only if $G_1$ and $G_2$ differ by an affine transformation for all trajectories starting in $s$. Moreover, this corresponds exactly to potential shaping and positive linear scaling of $R$, as per Proposition~\ref{prop:potentials_and_episodes} and \ref{prop:linear_scaling_of_G}. Furthermore, since this condition holds for all $s \in \States$ whenever it holds for one $s \in \States$, this means that $R_1 \eq{\ORDstar} R_2$ if and only if $R_1$ and $R_2$ differ by potential shaping and positive linear scaling. 

We will next show rigorously that
$$
\Expect{\xi \sim D_1}{G_1(\xi)} \geq \Expect{\xi \sim D_2}{G_1(\xi)} \iff \Expect{\xi \sim D_1}{G_2(\xi)} \geq \Expect{\xi \sim D_2}{G_2(\xi)}
$$
for all distributions $D_1, D_2$ over trajectories that start in $s$, if and only if $G_1$ and $G_2$ differ by an affine transformation for all trajectories starting in $s$. For the first direction, suppose there is an $a \in \mathbb{R}^+$ and a $b \in \mathbb{R}$ such that $G_2(\xi) = a \cdot G_1(\xi) + b$ for all trajectories $\xi$ that start in $s$. Then $\Expect{\xi \sim D}{G_2(\xi)} = a \cdot \Expect{\xi \sim D}{G_1(\xi)} + b$ for all distributions $D$ over trajectories that start in $s$, by the linearity of expectation. This in turn implies that 
$$
\Expect{\xi \sim D_1}{G_1(\xi)} \geq \Expect{\xi \sim D_2}{G_1(\xi)} \iff \Expect{\xi \sim D_1}{G_2(\xi)} \geq \Expect{\xi \sim D_2}{G_2(\xi)}
$$
for all distributions $D_1, D_2$ over trajectories that start in $s$, since 
$$
\Expect{\xi \sim D_1}{G_1(\xi)} \geq \Expect{\xi \sim D_2}{G_1(\xi)} \iff a \cdot \Expect{\xi \sim D_1}{G_1(\xi)} + b \geq a \cdot \Expect{\xi \sim D_2}{G_1(\xi)} + b.
$$
For the other direction, suppose 
$$
\Expect{\xi \sim D_1}{G_1(\xi)} \geq \Expect{\xi \sim D_2}{G_1(\xi)} \iff \Expect{\xi \sim D_1}{G_2(\xi)} \geq \Expect{\xi \sim D_2}{G_2(\xi)}
$$
for all distributions $D_1, D_2$ over trajectories that start in $s$. Next, let $\Xi = (\SxA)^\omega$ be the set of all trajectories, and let $d : \Xi \times \Xi \to \mathbb{R}$ be the function given by $d(\xi_1,\xi_2) = \frac{1}{e^t}$, where $t$ is the smallest index on which $\xi_1$ and $\xi_2$ differ, or $0$ if $\xi_1 = \xi_2$. As per Lemma~\ref{lemma:trajectories_compact_space}, $(d,\Xi)$ is a compact metric space. Moreover, it is easy to see that this still holds if we restrict $\Xi$ to the set of trajectories $\Xi_s$ that start in $s$, and that $G_1$ is continuous with respect to the metric $d$. As per the extreme value theorem, this implies that there are two trajectories $\xi_1, \xi_2 \in \Xi_s$ such that $G_1(\xi_1) \leq G_1(\xi) \leq G_1(\xi_2)$ for all $\xi \in \Xi_s$. Next, if $G_1(\xi_1) = G_1(\xi_2)$, then $R_1$ is trivial, in which case $R_2$ is trivial as well. In this case, simply let $a = 1$ and $b = G_2(\xi_1) - G_1(\xi_1)$. Otherwise, let 
$$
a = \frac{G_2(\xi_2) - G_2(\xi_1)}{G_1(\xi_2) - G_1(\xi_1)},
$$
and let $b = G_2(\xi_1) - a \cdot G_1(\xi_1)$. By rearranging, it is easy to see that $G_2(\xi_1) = a \cdot G_1(\xi_1) + b$ and $G_2(\xi_2) = a \cdot G_1(\xi_2) + b$. Moreover, let $\xi$ be an arbitrary trajectory in $\Xi_s$. The intermediate value theorem now implies that there is a $p \in [0,1]$ such that
$$
(1-p) \cdot G_1(\xi_1) + p \cdot G_1(\xi_2) = G_1(\xi).
$$
Moreover, since $(1-p) \cdot G_1(\xi_1) + p \cdot G_1(\xi_2)$ grows monotonically in $p$, this value must be unique. We can now consider the distribution $D_1$ over $\Xi_s$ that returns $\xi_2$ with probability $p$, and $\xi_1$ otherwise, and the distribution $D_2$ that returns $\xi$ with probability 1. This means that $\Expect{\xi \sim D_1}{G_1(\xi)} = \Expect{\xi \sim D_2}{G_1(\xi)}$, and so $\Expect{\xi \sim D_1}{G_2(\xi)} = \Expect{\xi \sim D_2}{G_2(\xi)}$. In other words,
$$
(1-p) \cdot G_2(\xi_1) + p \cdot G_2(\xi_2) = G_2(\xi),
$$
which in turn implies that
$$
(1-p) \cdot (a \cdot G_1(\xi_1) + b) + p \cdot (a \cdot G_1(\xi_2) + b) = G_2(\xi).
$$
By rearranging, and simplifying, we get that
$$
a \cdot ((1-p) \cdot G_1(\xi_1) + p \cdot G_1(\xi_2)) + b = G_2(\xi)
$$
and so $G_2(\xi) = a \cdot G_1(\xi) + b$. Since $\xi$ was chosen arbitrarily, this holds for all $\xi \in \Xi_s$. This completes the other direction.
\end{proof}
\fi

Moreover, we would like to remark on a minor subtlety. One might expect that the equivalence relation $\eq{\Omega}$ according to which $R_1 \eq{\Omega} R_2$ if and only if $R_1 \eq{\ORD} R_2$ for all $\tfunc$, is the same as $\ORDstar$. However, this is not the case.
To see this, consider the rewards $R_1$, $R_2$ where $R_1(s_1,a_1,s_1) = 1$, $R_1(s_1,a_1,s_2) = 0.5$, $R_2(s_1,a_1,s_1) = 0.5$, and $R_2(s_1,a_1,s_2) = 1$, and where $R_1$ and $R_2$ are $0$ for all other transitions. Now $R_1$ and $R_2$ do not differ by potential shaping and linear scaling, yet they have the same policy order for all $\tfunc$. 
The reason for this is that $\ORDstar$ considers all distributions over $\xi$, and that not all of these distributions can be realised by some policy $\pi$ and some transition function $\tfunc$. Characterising $\eq{\Omega}$ would therefore require a more careful analysis of the impact of changing the transition function $\tfunc$.
\section{Explanations and Examples}\label{appendix:explanations_and_examples}

The purpose of this appendix is to make some of our results more intuitive. In particular, we provide a more extended discussion of STARC metrics, including several geometric intuitions, that may make them easier to understand. We also illustrate Theorem~\ref{thm:wrong_tau_large_diameter} and \ref{thm:wrong_gamma_large_diameter} with two examples, that may help to explain why robust transfer learning is difficult to guarantee.

\subsection{Understanding STARC Metrics}\label{appendix:understand_starc}

In this Appendix, we provide a geometric intuition for how STARC metrics work. This will make it easier to understand STARC metrics, and may also make it easier to understand our proofs.

First of all, note that the space of all reward functions $\mathcal{R}$ forms an $|\States||\Actions||\States|$-dimensional vector space. Next, recall that if two reward functions $R_1$ and $R_2$ differ by (some combination of) potential shaping and $S'$-redistribution, then $R_1$ and $R_2$ induce the same ordering of policies. Moreover, these transformations correspond to a linear subspace of $\R$ (see Proposition~\ref{prop:PS_SR_linear} in \ref{appendix:reward_transformation_properties}). A canonicalisation function is simply a linear map that removes the dimensions that are associated with potential shaping and $S'$-redistribution. In other words, they map $\mathcal{R}$ to an $|\States|(|\Actions|-1)$-dimensional subspace of $\mathcal{R}$ in which no reward functions differ by potential shaping or $S'$-redistribution. The canonicalisation function that is minimal for the $L_2$-norm is the orthogonal map that satisfies these properties, whereas other canonicalisation functions are non-orthogonal.
When we normalise the resulting reward functions by dividing by a norm $n$, we project the entire vector space onto the unit ball of $n$ (except the zero reward, which remains at the origin). The metric $m$ then measures the distance between the resulting reward functions on the surface of this sphere:

\begin{figure}[H]
    \centering
    \includegraphics[width=\textwidth/3]{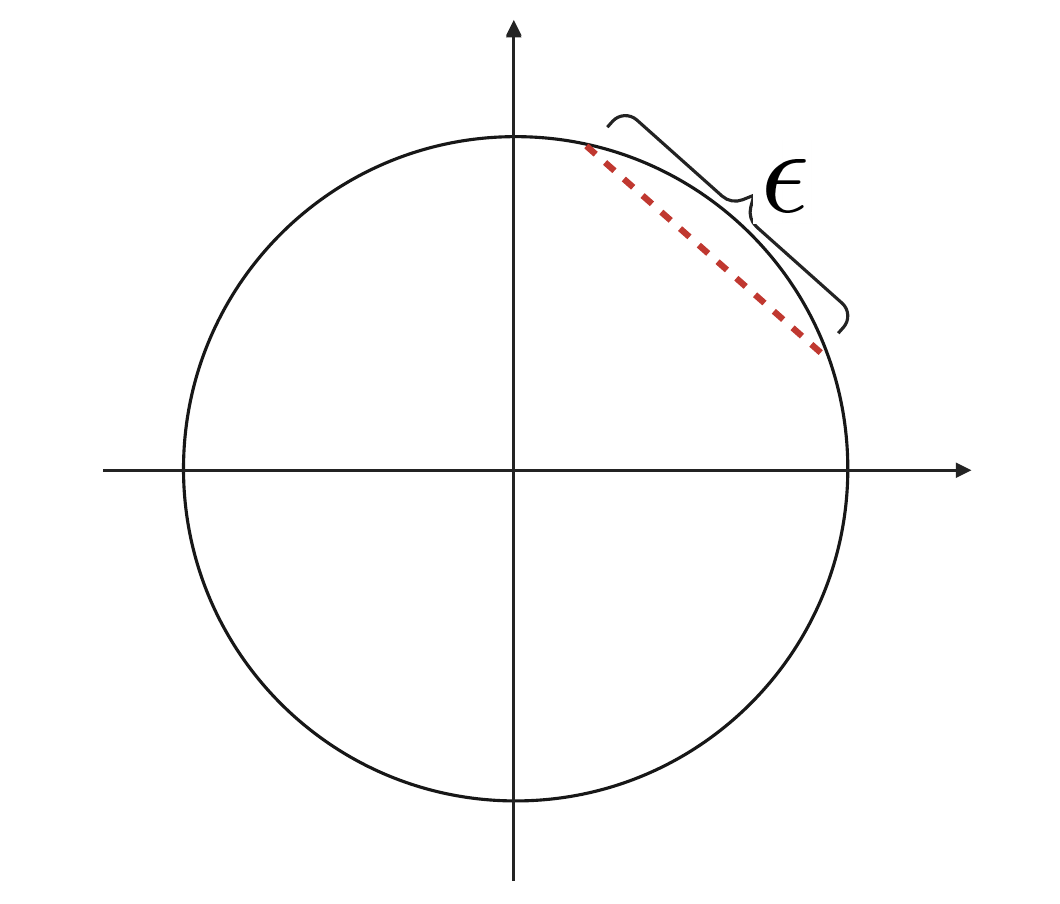}
    \label{fig:circle_diagram}
\end{figure}

To make this more clear, it may be worth considering the case of non-sequential decision making. Suppose we have a finite set of \emph{choices} $C$, and a utility function $U : C \to \mathbb{R}$. Given two distributions $D_1$, $D_2$ over $C$, we say that we prefer $D_1$ over $D_2$ if $\mathbb{E}_{c \sim D_1}[U(c)] > \mathbb{E}_{c \sim D_2}[U(c)]$. The set of all utility functions over $C$ forms a $|C|$-dimensional vector space. Moreover, in this setting, it is well-known that two utility functions $U_1$, $U_2$ induce the same preferences between all possible distributions over $C$ if and only if they differ by an affine transformation. Therefore, if we wanted to represent the set of all \emph{non-equivalent} utility functions over $C$, we may consider requiring that $U(c_0) = 0$ for some $c_0 \in C$, and that $L_2(U) = 1$ unless $U(c) = 0$ for all $c \in C$. Any utility function over $C$ is equivalent to some utility function in this set, and this set can in turn be represented as the surface of a $(|C|-1)$-dimensional sphere, together with the origin. 
This is essentially analogous to the standardisation that the canonicalisation function $c$ and the normalisation function $n$ perform for STARC metrics. Here $C$ is analogous to the set of all trajectories, the trajectory return function $G$ is analogous to $U$, and a policy $\pi$ induces a distribution over trajectories. Affine transformations of the trajectory return function, $G$, correspond exactly to potential shaping and positive linear scaling of $R$ (Proposition~\ref{prop:potentials_and_episodes} and \ref{prop:linear_scaling_of_G} in \ref{appendix:reward_transformation_properties}). However, it is also important to note that while the cases are analogous, it is not a direct correspondence, because not all distributions over trajectories can be realised as a policy in a given MDP.

Another perspective that may help with understanding STARC metrics comes from considering the occupancy measures of policies. Recall that the occupancy measure $\eta^\pi$ of a policy $\pi$ is the $|\States||\Actions||\States|$-dimensional vector in which the value of the $(s,a,s')$'th dimension is
$$
\sum_{t=0}^\infty \gamma^t \mathbb{P}_{\xi \sim \pi}(S_t = s, A_t = a, S_{t+1} = s').
$$
Also recall that $J(\pi) = \eta^\pi \cdot R$. Therefore, by computing occupancy measures, we can divide the computation of $J$ into two parts, the first of which is independent of $R$, and the second of which is a linear function. Moreover, let $\Omega = \{\eta^\pi : \pi \in \Pi\}$ be the set of all occupancy measures. We now have that the policy value function $J$ of a reward function $R$ can be visualised as a linear function on this set. Moreover, if we have two reward functions $R_1$, $R_2$, then they can be visualised as two different linear functions on this set:

\begin{figure}[H]
    \centering
    \includegraphics[width=\textwidth/3]{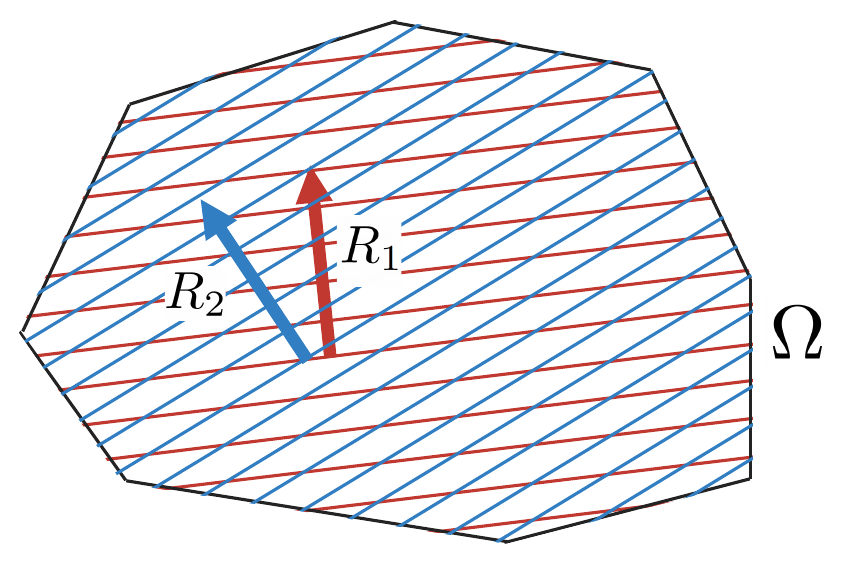}
    \label{fig:occupancy_measure}
\end{figure}


From this image, it is visually clear that the worst-case regret of maximising $R_1$ instead of $R_2$, should be proportional to the angle between the linear functions that $R_1$ and $R_2$ induce on $\Omega$. Moreover, this is what STARC metrics measure.
In particular, the function $c$ that projects each $R$ onto the linear subspace of $\R$ that is parallel to $\Omega$ is a canonicalisation function (Proposition~\ref{prop:occupancy_measure_projection_is_canonicalisation}). Normalising these reward functions, and measuring their distance using a metric that is bilipschitz equivalent to a norm, is bilipschitz equivalent to measuring their angle.
This should in turn give an intuition for why the STARC distance between two rewards provide both an upper and lower bound on their worst-case regret. 

\subsection{Understanding the Difficulty of Transfer Learning}\label{appendix:understand_transfer_learning}

In this appendix, we will provide an intuitive explanation for Theorem~\ref{thm:wrong_tau_large_diameter} and \ref{thm:wrong_gamma_large_diameter}. First, recall Theorem~\ref{thm:wrong_tau_large_diameter}:

\wrongtaulargediameter*

Note that $1$ is the maximal distance that is possible under $\starc$. This result may be surprising; if $\tfunc_1 \approx \tfunc_2$, then one might expect that a reward function that is learnt under $\tfunc_1$ should be guaranteed to be mostly accurate under $\tfunc_2$.
To see more intuitively why this is not the case, consider the following example. Suppose we have a simple $N \times N$ gridworld environment, as illustrated in Figure~\ref{fig:gridworld}. We assume that the agent has four actions, \texttt{up}, \texttt{down}, \texttt{left}, and \texttt{right}. We assume that $\tau_2$ is deterministic, so that if the agent takes action \texttt{up}, then it moves one step up, etc. Moreover, we assume that $\tau_1$ is \enquote{slippery}, so that if the agent takes action \texttt{up}, then it moves up, up-left, and up-right with equal probability, and that if it takes action \texttt{right}, then it moves right, up-right, and down-right with equal probability, etc. For simplicity, we will also assume that the environment wraps around itself (like a torus), so that if the agent moves up from the top of the environment, then it ends up at the bottom, and so on.

\begin{figure}[H]
    \centering
    \includegraphics[angle=180,width=\textwidth/3]{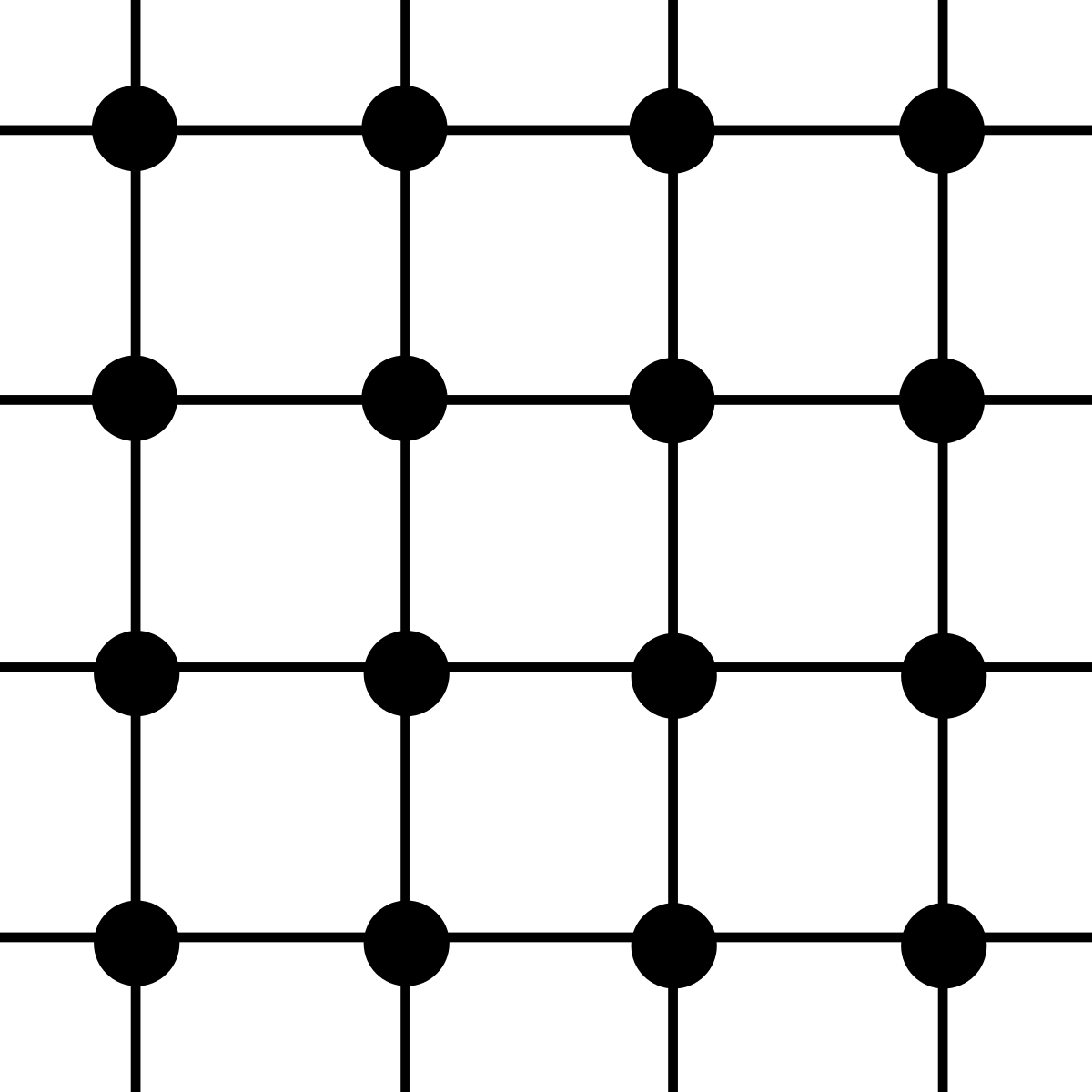}
    \caption{A simple illustration of a gridworld environment.}
    \label{fig:gridworld}
\end{figure}

Now suppose that $R_1$ and $R_2$ reward each transition $(s,a,s')$ depending on the relative location of $s$ and $s'$, according to the following schemas:

\begin{center}
\begin{tikzpicture}[shorten >=1pt,node distance=2.6cm,on grid,auto]
   \node[state] (r1_center)   {}; 
   \node[] (r1_label) [below=of r1_center, yshift=-30]  {$R_1$};
   \node[]         (1_u) [above=of r1_center] {$0$};
   \node[]         (1_ur) [above right=of r1_center] {$1$};
   \node[]         (1_r) [right=of r1_center] {$1$};
   \node[]         (1_dr) [below right=of r1_center] {$1$};
   \node[]         (1_d) [below=of r1_center] {$0$};
   \node[]         (1_dl) [below left=of r1_center] {$-1$};
   \node[]         (1_l) [left=of r1_center] {$-1$};
   \node[]         (1_ul) [above left=of r1_center] {$-1$};
   \node[state] (r2_center) [right=of r1_center, xshift=130]  {}; 
   \node[] (r2_label) [below=of r2_center, yshift=-30]  {$R_2$};
   \node[]         (2_u) [above=of r2_center] {$0$};
   \node[]         (2_ur) [above right=of r2_center] {$3$};
   \node[]         (2_r) [right=of r2_center] {$-1$};
   \node[]         (2_dr) [below right=of r2_center] {$-1$};
   \node[]         (2_d) [below=of r2_center] {$0$};
   \node[]         (2_dl) [below left=of r2_center] {$1$};
   \node[]         (2_l) [left=of r2_center] {$1$};
   \node[]         (2_ul) [above left=of r2_center] {$-3$};
    \path[->] 
    (r1_center) edge [] node {} (1_u)
    (r1_center) edge [] node {} (1_ur)
    (r1_center) edge [] node {} (1_r)
    (r1_center) edge [] node {} (1_dr)
    (r1_center) edge [] node {} (1_d)
    (r1_center) edge [] node {} (1_dl)
    (r1_center) edge [] node {} (1_l)
    (r1_center) edge [] node {} (1_ul)
    (r2_center) edge [] node {} (2_u)
    (r2_center) edge [] node {} (2_ur)
    (r2_center) edge [] node {} (2_r)
    (r2_center) edge [] node {} (2_dr)
    (r2_center) edge [] node {} (2_d)
    (r2_center) edge [] node {} (2_dl)
    (r2_center) edge [] node {} (2_l)
    (r2_center) edge [] node {} (2_ul)
    ;
\end{tikzpicture}
\end{center}


These two reward functions are equivalent under $\tau_1$, and give the agent $1$ reward for going right, $-1$ for going left, and $0$ for going up or down. However, under $\tau_2$, they are opposites; $R_1$ rewards the agent for going right, and $R_2$ rewards the agent for going left. Thus, if we observe a policy computed under $\tau_1$, then we will not be able to distinguish between $R_1$ and $R_2$, even though they induce very different behaviour under $\tau_2$. For this reason, it is difficult to obtain guarantees for transfer learning in IRL.

Next, recall Theorem~\ref{thm:wrong_gamma_large_diameter}:

\wronggammalargediameter*

Again, recall that $1$ is the maximal distance that is possible under $\starc$. This result may also be surprising; if $\gamma_1 \approx \gamma_2$, then one might expect that a reward function that is learnt under $\gamma_1$ should be guaranteed to be mostly accurate under $\gamma_2$. To see more intuitively why this is not the case, consider a simple environment with three states $s_0, s_1, s_2$, where $s_0$ is the initial state, and where the agent can choose to either go directly from $s_0$ to $s_2$, or choose to first visit state $s_1$: 

\begin{center}
\begin{tikzpicture}[shorten >=1pt,node distance=2.6cm,on grid,auto]
   \node[state, initial] (s_0)   {$s_0$}; 
   \node[state]         (s_1) [above right=of s_0] {$s_1$};
   \node[state]         (s_2) [above left=of s_1] {$s_2$};
    \path[->] 
    (s_0) edge [swap] node {} (s_1) 
          edge [swap, sloped] node {} (s_2)
    (s_1) edge [swap] node {} (s_2) 
    (s_2) edge [loop] node {} (s_2)
    ;
\end{tikzpicture}
\end{center}

Let $R_1$ be any reward function over this environment, and let $R_2$ be the reward function that we get if we take $R_1$ and \emph{increase} the reward of going from $s_0$ to $s_1$ by $\gamma_1 \cdot X$, and \emph{decrease} the reward of going from $s_1$ to $s_2$ by $X$. Now, the policy order under discounting with $\gamma_1$ is completely unchanged. 
This transformation corresponds to potential shaping where $\Phi(s_1) = X$ and $\Phi(s_0) = \Phi(s_2) = 0$. Therefore, if $f : \mathcal{R} \to \Pi$ is invariant to potential shaping with $\gamma_1$, then $f(R_1) = f(R_2)$.
However, if we discount with $\gamma_2$, then $R_1$ and $R_2$ have a different policy order.
In particular, the value of going from $s_0$ to $s_1$ is changed by $\gamma_1 \cdot X - \gamma_2 \cdot X = (\gamma_1-\gamma_2) \cdot X \neq 0$. Thus, if the optimal action under $R_1$ at $s_0$ is to go to $s_1$, then by making $X$ sufficiently large or sufficiently small (depending on whether $\gamma_1 > \gamma_2$, or vice versa), then we can create a reward function $R_2$ for which the optimal action instead is to go to $s_2$, and vice versa. Thus, if we observe a policy computed under $\gamma_1$, then we will not be able to distinguish between $R_1$ and $R_2$, even though they induce different behaviours when discounting with $\gamma_2$. This makes it difficult to ensure robust transfer to a new $\gamma$.

Intuitively speaking, we can use potential shaping to move reward around in the MDP (so that the agent receives a larger immediate reward at the cost of a lower reward later, or vice versa). However, to cancel out the effect of the discounting, later rewards must be made larger than immediate rewards. If the discount values do not match, then this \enquote{compensation} will also not match, leading to a distortion of the policy ordering. Indeed, we can make it so that this distortion dominates the rest of the reward function. 

\section{Proofs}\label{appendix:proofs}

This appendix contains the proofs of all theorems provided in the main text. Its organisation largely mirrors the sections of the main text, but with additional sections for supporting lemmas and auxiliary results.

\subsection{Properties of Our Frameworks}

In this section we will provide proofs of the basic results concerning our theoretical frameworks, first stated in Sections~\ref{sec:frameworks_ambiguity} and \ref{sec:frameworks_properties_of_definitions}. 

\subsubsection{Partial Identifiability}

\ambiguityinherited*

\begin{proof}
If $f(R_1) = f(R_2)$, then $h \circ f(R_1) = h \circ f(R_2)$, so $g(R_1) = g(R_2)$. Thus $f(R_1) = f(R_2) \implies g(R_1) = g(R_2)$, so $\mathrm{Am}(f) \refines \mathrm{Am}(g)$.
\end{proof}

\subsubsection{Misspecification With Equivalence Relations}

\Probustnessinheritance*

\begin{proof}
If $f$ is not $P$-robust to misspecification with $g$, and $\mathrm{Im}(g) \subseteq \mathrm{Im}(f)$, then either $f \not \refines P$, or $f = g$, or $f(R_1) = g(R_2)$ but $R_1 \not\equiv_P R_2$ for some $R_1, R_2$.
In the first case, if $f \not \refines P$ then $h \circ f \not \refines P$, as per Lemma~\ref{lemma:ambiguity_inherited}. This implies that $h \circ f$ is not $P$-robust to misspecification with any reward object (including with $h \circ g$).
In the second case, if $f = g$ then $h \circ f = h \circ g$. This implies that $h \circ f$ is not $P$-robust to misspecification with $h \circ g$.
In the last case, suppose $f(R_1) = g(R_2)$ but $R_1 \not\equiv_P R_2$ for some $R_1, R_2$. If $f(R_1) = g(R_2)$ then $h \circ f(R_1) = h \circ g(R_2)$, so there are $R_1, R_2$ such that $h \circ f(R_1) = h \circ g(R_2)$, but $R_1 \not\equiv_P R_2$. This implies that $h \circ f$ is not $P$-robust to misspecification with $h \circ g$.
\end{proof}

\Probustnessimpliesrefinement*

\begin{proof}
Suppose $f$ is $P$-robust to misspecification with $g$, and let $R_1,R_2$ be any two reward functions such that $g(R_1) = g(R_2)$.
Since $\mathrm{Im}(g) \subseteq \mathrm{Im}(f)$ there is an $R_3$ such that $f(R_3) = g(R_1) = g(R_2)$. 
Since $f$ is $P$-robust to misspecification with $g$, it must be the case that $R_3 \equiv_P R_1$ and $R_3 \equiv_P R_2$. By transitivity, we thus have that $R_1 \equiv_P R_2$. Since $R_1$ and $R_2$ were chosen arbitrarily, it must be that $R_1 \equiv_P R_2$ whenever $g(R_1) = g(R_2)$.
\end{proof}

\Probustnesssymmetry*

\begin{proof}
If $f$ is $P$-robust to misspecification with $g$ then this immediately implies that $f \neq g$, and that if $f(R_1) = g(R_2)$ for some $R_1,R_2$ then $R_1 \equiv_P R_2$. Lemma~\ref{lemma:P_robustness_implies_refinement} implies that $\mathrm{Am}(g) \refines P$, and if $\mathrm{Im}(f) = \mathrm{Im}(g)$ then $\mathrm{Im}(f) \subseteq \mathrm{Im}(g)$. This means that $g$ is $P$-robust to misspecification with $f$.
\end{proof}

\lessambiguitylessrobustness*

\begin{proof}
First consider the case when $\mathrm{Am}(f) = P$, and assume for contradiction that $f$ is $P$-robust to misspecification with $g$. Let $R_1$ be any reward function, and consider $g(R_1)$. Since $\mathrm{Im}(g) \subseteq \mathrm{Im}(f)$, there is an $R_2$ such that $f(R_2) = g(R_1)$. Since $f$ is $P$-robust to misspecification with $g$, this implies that $R_2 \equiv_P R_1$. Moreover, if $\mathrm{Am}(f) = P$ then $R_2 \equiv_P R_1$ if and only if $f(R_2) = f(R_1)$, so it must be the case that $f(R_2) = f(R_1)$. Now, since $f(R_2) = f(R_1)$ and $f(R_2) = g(R_1)$, we have that $g(R_1) = f(R_1)$. Since $R_1$ was chosen arbitrarily, this implies that $f = g$, which is a contradiction.
Hence, if $\mathrm{Am}(f) = P$ then there is no $g$ such that $f$ is $P$-robust to misspecification with $g$.

Next, consider the case when $\mathrm{Am}(f) \refines P$ and $\mathrm{Am}(f) \neq P$. This implies that there are $R_1, R_2$ such that $R_1 \equiv_P R_2$ but $f(R_1) \neq f(R_2)$. We can then construct a $g$ as follows; let $g(R_1) = f(R_2)$, $g(R_2) = f(R_1)$, and $g(R) = f(R)$ for all $R \notin \{R_1,R_2\}$. Now $f$ is $P$-robust to misspecification with $g$. Hence, if $\mathrm{Am}(f) \refines P$ and $\mathrm{Am}(f) \neq P$ then there is a $g$ such that $f$ is $P$-robust to misspecification with $g$, which in turn implies that if $\mathrm{Am}(f) \refines P$ and there is no $g$ such that $f$ is $P$-robust to misspecification with $g$ then $\mathrm{Am}(f) = P$.
\end{proof}

\Probustnessfunctioncomposition*

\begin{proof}
First suppose that $f$ is $P$-robust to misspecification with $g$ --- we will construct a $t$ that fits our description.
Since $\mathrm{Im}(g) \subseteq \mathrm{Im}(f)$, we have that there for each $R$ exists an $R'$ such that $g(R) = f(R')$. Let $t : \R \to \R$ be a function that maps each $R$ to one such $R'$.
Since by construction $g(R) = f(t(R))$ for each $R$, we have that $g = f \circ t$.
Moreover, since $f$ is $P$-robust to misspecification with $g$, we have that $R \eq{P} t(R)$. This completes the first direction.

For the other direction, suppose $f \neq g$ and $g = f \circ t$ for some $t : \R \to \R$ such that $R \eq{P} t(R)$ for all $R$. 
By assumption we have that $\mathrm{Am}(f) \refines P$.
Moreover, we clearly have that $\mathrm{Im}(g) \subseteq \mathrm{Im}(f)$. 
Finally, if $g(R_1) = f(R_2)$ then $f(t(R_1)) = f(R_2)$. Since $\mathrm{Am}(f) \refines P$, this means that $t(R_1) \equiv_P R_2$. Moreover, since $R \eq{P} t(R)$ for all $R$, we have that $R_1 \eq{P} t(R_1)$. By transitivity, this means that $R_1 \eq{P} R_2$.
Thus $f$ is $P$-robust to misspecification with $g$, and we are done.
\end{proof}

It is worth noting that Lemma~\ref{lemma:less_ambiguity_less_robustness} also can be derived as a corollary of Lemma~\ref{lemma:P_robustness_function_composition}, by noting that if $\mathrm{Am}(f) = P$, then $f = f \circ t$ for all $t : \R \to \R$ such that $R \eq{P} t(R)$, and vice versa.

\subsubsection{Misspecification With Distance Metrics}

\metricequivalencerelationconnection*

\begin{proof}
    Immediate from Definition~\ref{def:misspecification_eq} and \ref{def:misspecification_metric}.
\end{proof}

\metricequivalencerelationconnectiontwo*

\begin{proof}
    Immediate from Definition~\ref{def:misspecification_metric}.
\end{proof}

\noepsilonrobustnessimpliesrefinement*

\begin{proof}
For example, let $d^\R$ be the metric induced by the $L_2$-norm, let $X$ be any set such that $|X| \geq |\R|$, and let $f : \R \to X$ be any injective function.
Pick two reward functions $R_1, R_2$ such that $d^\R(R_1,R_2) = 2\epsilon$, let $g(R_1) = g(R_2) = f((R_1 + R_2)/2)$, and let $g(R) = f(R)$ for $R \neq R_1,R_2$.
\begin{figure}[H]
    \centering
    \includegraphics[width=3\textwidth/4]{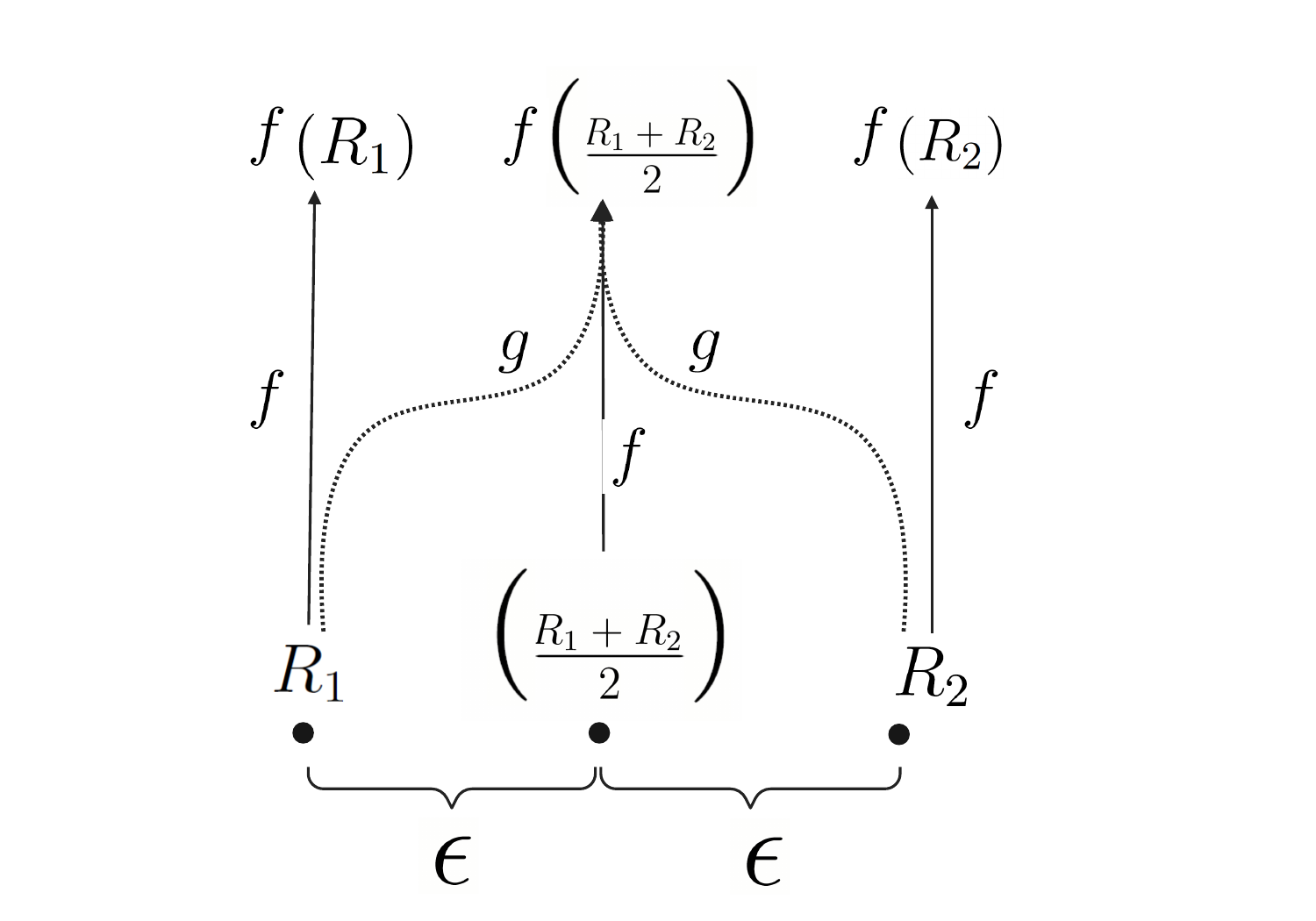}
    \label{fig:frameworks_prop17}
\end{figure}
\end{proof}

\epsilonrobustnessimpliesweakrefinement*

\begin{proof}
Let $R_1, R_2$ be any two reward functions such that $g(R_1) = g(R_2)$. From condition 2 in Definition~\ref{def:misspecification_metric}, we have that there is a reward $R_3$ such that $f(R_3) = g(R_1) = g(R_2)$. From condition 1 in Definition~\ref{def:misspecification_metric}, we have that $d^\mathcal{R}(R_3, R_1) \leq \epsilon$ and that $d^\mathcal{R}(R_3, R_2) \leq \epsilon$. The triangle inequality then implies that $d^\mathcal{R}(R_1, R_2) \leq 2\epsilon$. 
\end{proof}

\noepsilonrobustnesssymmetry*

\begin{proof}
For example, let $d^\R$ be the metric induced by the $L_2$-norm, let $X$ be any set such that $|X| \geq |\R|$, and let $h : \R \to X$ be any injective function. Pick four reward functions $R_1,R_2,R_a,R_b$ such that $d^\R(R_1,R_2) = 2\epsilon$, $d^\R(R_1,R_a) < \epsilon$, and $d^\R(R_2,R_b) < \epsilon$. Let $g(R_1) = g(R_2) = h((R_1 + R_2)/2)$, and let $g(R) = h(R)$ for $R \neq R_1,R_2$. Let $f(R_1) = h(R_a)$, $f(R_2) = h(R_b)$, and $f(R) = h(R)$ for $R \neq R_1,R_2$.

\begin{figure}[H]
    \centering
    \includegraphics[width=3\textwidth/4]{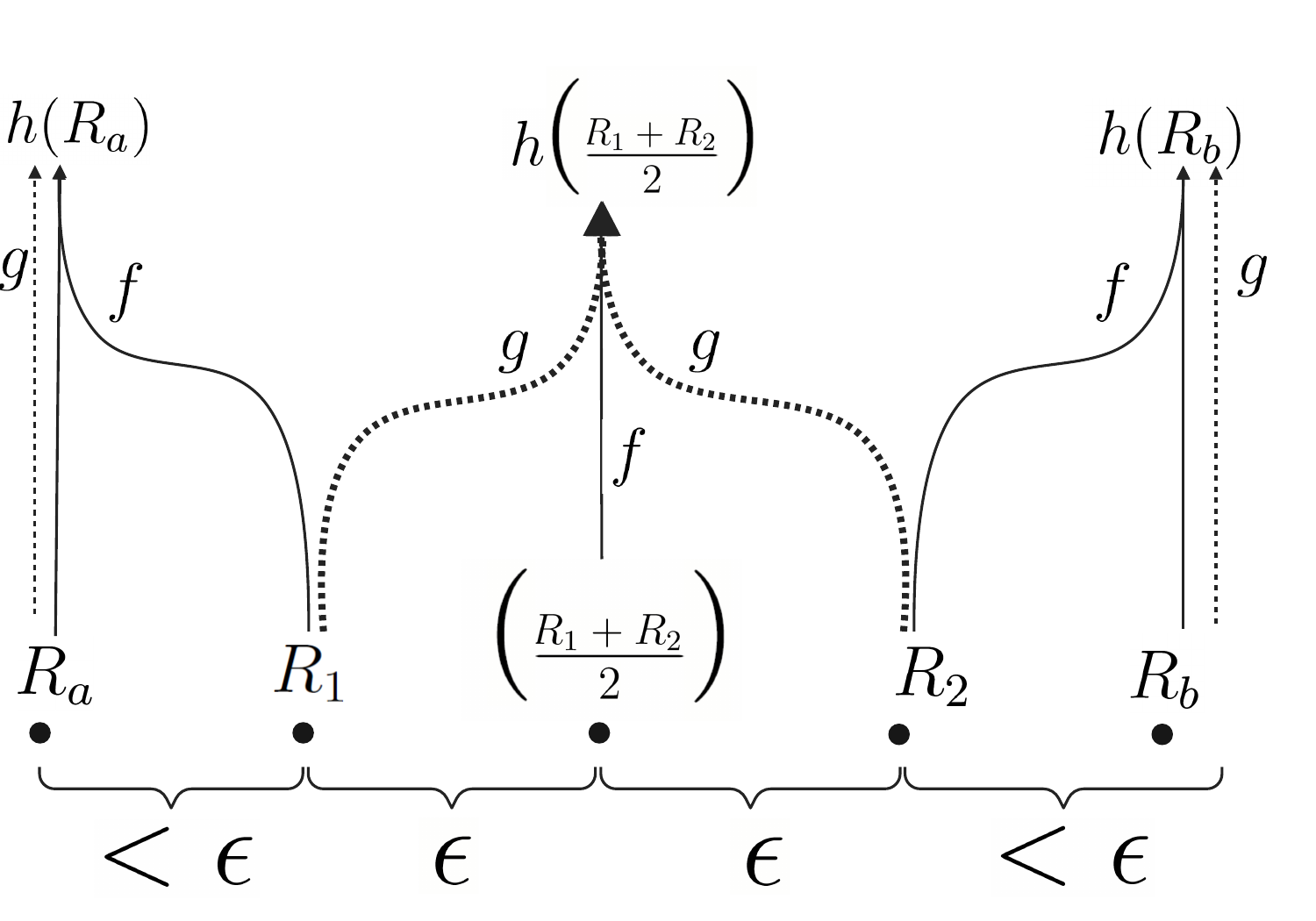}
    \label{fig:frameworks_prop19}
\end{figure}

Now $g$ is not $\epsilon$-robust to misspecification with $f$, since $g(R_1) = g(R_2)$ even though $d^\R(R_1, R_2) = 2\epsilon$. 
However, $f$ is $\epsilon$-robust to misspecification with $g$. 
First, if $f(R) = g(R')$, then either $R = R'$, or $R' = (R_1 + R_2)/2$ and $R$ is either $R_1$ or $R_2$. In the former case $d^\R(R, R') = 0$, and in the latter $d^\R(R_1, R_2) = \epsilon$. Moreover, if $f(R) = f(R')$, then either $R = R'$, or $R = R_1$ and $R' = R_a$ (or vice versa), or $R = R_2$ and $R' = R_b$ (or vice versa). In the first case $d^\R(R, R') = 0$, and in the latter two cases $d^\R(R, R') < \epsilon$. Next, $f \neq g$, since $f(R_1) \neq g(R_1)$ and $f(R_2) \neq g(R_2)$. Finally, $\mathrm{Im}(f) = \mathrm{Im}(g)$, since both $\mathrm{Im}(f)$ and $\mathrm{Im}(g)$ are equal to $\mathrm{Im}(h) \setminus \{h(R_1), h(R_2)\}$.
\end{proof}

\noepsilonrobustnessfunctioncomposition*

\begin{proof}
For example, let $d^\R$ be the metric induced by the $L_2$-norm, and let $X$ be any set such that $|X| \geq |\R|$. Pick three reward functions $R_1, R_2, R_3$ such that $d^\R(R_1, R_2) = \epsilon$, $d^\R(R_2, R_3) = \epsilon$, and $d^\R(R_1, R_3) = 2\epsilon$. Let $f$ be injective, except that $f(R_1) = f(R_2)$, and let $t(R) = R$ for all $R$, except that $t(R_3) = R_2$. Now $f(R_1) = f \circ t(R_3)$, even though $d^\R(R_1,R_2) = 2\epsilon > \epsilon$, and so $f$ is not $\epsilon$-robust to misspecification with $f \circ t$ (as measured by $d^\R$).
\begin{figure}[H]
    \centering
    \includegraphics[width=3\textwidth/4]{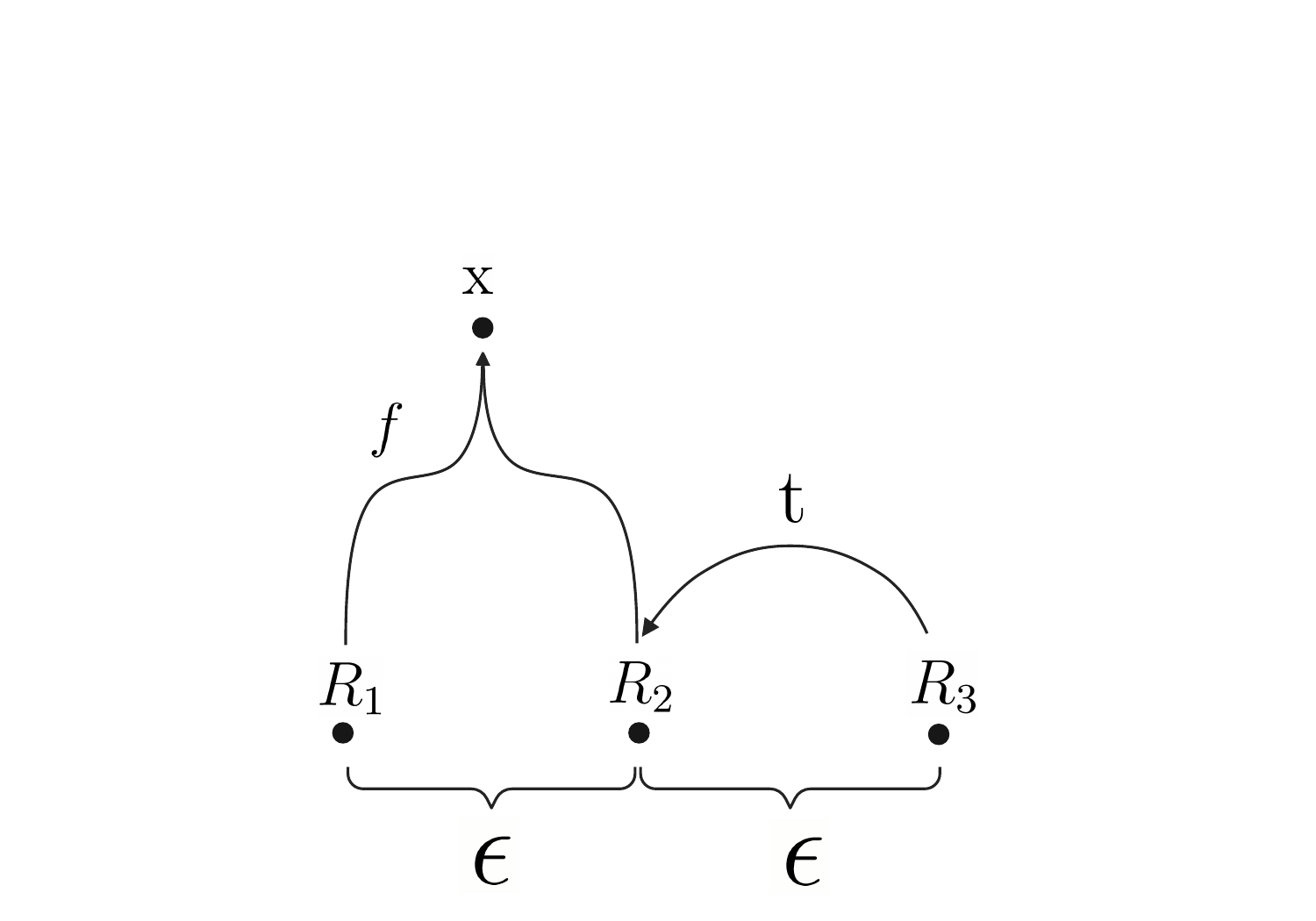}
    \label{fig:frameworks_prop20}
\end{figure}
\end{proof}

\weakepsilonrobustnessfunctioncomposition*

\begin{proof}
For the first direction, let $t : \R \to \R$ be a transformation such that $d^\mathcal{R}(R, t(R)) \leq \epsilon$ for all $R$, and let $g = f \circ t$. Suppose $f \neq g$. To show that $f$ is $\epsilon$-robust to misspecification with $g$, we need to show that:
\begin{enumerate}
    \item If $f(R_1) = g(R_2)$ then $d^\mathcal{R}(R_1, R_2) \leq \epsilon$.
    \item $\mathrm{Im}(g) \subseteq \mathrm{Im}(f)$.
    \item If $f(R_1) = f(R_2)$ then $d^\mathcal{R}(R_1, R_2) \leq \epsilon$.
    \item $f \neq g$.
\end{enumerate}
For the first condition, suppose $f(R_1) = g(R_2)$, which implies that $f(R_1) = f \circ t(R_2)$. By assumption, we have that if $f(R) = f(R')$, then $d^\mathcal{R}(R, R') = 0$. This implies that $d^\mathcal{R}(R_1, t(R_2)) = 0$. Moreover, we have that $d^\mathcal{R}(R, t(R)) \leq \epsilon$ for all $R$; this implies that $d^\mathcal{R}(R_2, t(R_2)) \leq \epsilon$. By the triangle inequality, we then have that $d^\mathcal{R}(R_1, R_2) \leq 0 + \epsilon = \epsilon$. Since $R_1$ and $R_2$ were chosen arbitrarily, this means that condition 1 holds. Condition 2 holds straightforwardly, from the construction of $g$. For condition 3, note that we by assumption have that if $f(R_1) = f(R_2)$, then $d^\mathcal{R}(R_1, R_2) = 0 < \epsilon$. Condition 4 is satisfied by direct assumption. This proves the first direction.

\begin{figure}[H]
    \centering
    \includegraphics[width=3\textwidth/4]{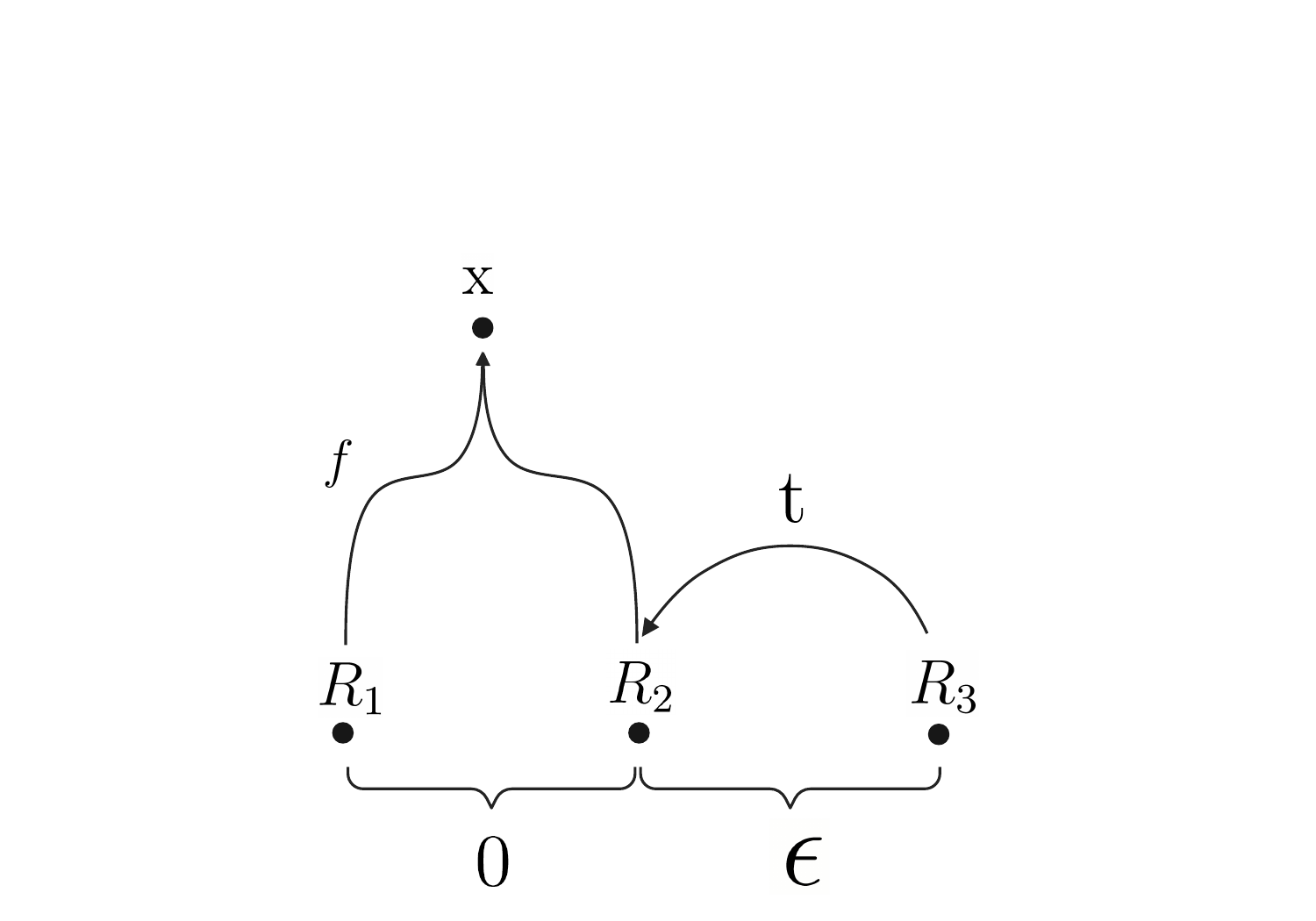}
    \label{fig:frameworks_lemma21}
\end{figure}

For the other direction, let $f$ be $\epsilon$-robust to misspecification with $g$ (as measured by $d^\mathcal{R}$).
Since $\mathrm{Im}(g) \subseteq \mathrm{Im}(f)$, we have that there for each $R$ exists an $R'$ such that $g(R) = f(R')$. Let $t : \R \to \R$ be a function that maps each $R$ to one such $R'$.
Since by construction $g(R) = f(t(R))$ for each $R$, we have that $g = f \circ t$.
Moreover, since $f$ is $\epsilon$-robust to misspecification with $g$ as measured by $d^\R$, we have that $d^\R(R,t(R)) \leq \epsilon$. This completes the proof of the other direction, which means that we are done.
\end{proof}

\epsilonrobustnessinheritance*

\begin{proof}
If $f$ is not $\epsilon$-robust to misspecification with $g$ as measured by $d^\R$, and $\mathrm{Im}(g) \subseteq \mathrm{Im}(f)$, then either there are $R_1, R_2$ such that $f(R_1) = g(R_2)$ but $d^\R(R_1, R_2) > \epsilon$, or there are $R_1, R_2$ such that $f(R_1) = f(R_2)$ but $d^\R(R_1, R_2) > \epsilon$, or $f = g$. 
In the first case, if $f(R_1) = g(R_2)$ but $d^\R(R_1, R_2) > \epsilon$ then $h \circ f(R_1) = h \circ g(R_2)$ but $d^\R(R_1, R_2) > \epsilon$. In the second case, if $f(R_1) = f(R_2)$ but $d^\R(R_1, R_2) > \epsilon$ then $h \circ f(R_1) = h \circ f(R_2)$ but $d^\R(R_1, R_2) > \epsilon$. In the third case, if $f = g$ then $h \circ f = h \circ g$. In each case, we thus have that $h \circ f$ is not $\epsilon$-robust to misspecification with $h \circ g$ as measured by $d^\R$.
\end{proof}

\subsection{Key Properties of Reward Transformations}\label{appendix:reward_transformation_properties}

In this section, we will provide a few key properties of the reward transformations we introduced in Section~\ref{sec:reward_transformations}. These properties will help to provide some intuition for how these transformations work, and will also be used to prove some of our later results.

First, it is worth noting that $\CS \subseteq \PS$; to see this, note that we for any constant $c$ and any discount factor $\gamma$ can define a potential function $\Phi$ such that $\Phi(s) = c/(\gamma-1)$ for all states $s$. Moreover, each of $\PS$, $\SR$, $\LS$ and $\CS$ are subsets of $\OP$. First note that $\OP$ is exactly the set of all reward transformations that preserve optimal policies  (c.f.\ Theorem~\ref{thm:OPT_ambiguity}). Next, it should be clear that positive linear scaling of the reward preserves the set of optimal policies, which means that $\LS \subseteq \OP$. Moreover, using the linearity of expectation, it is also easy to see that $S'$-redistribution preserves optimal policies, which means that $\SR \subseteq \OP$. Finally, \cite{ng1999} show that potential shaping preserves optimal policies, which means that $\CS \subseteq \PS \subseteq \OP$ (c.f.\ also Proposition~\ref{prop:change_from_potentials}).


We will next prove a number of important properties of potential shaping, which are not explicitly discussed in \cite{ng1999}. These properties will be important for our later results, and will also help with providing more intuition for what potential shaping does and how it behaves.

\begin{proposition}\label{prop:change_from_potentials}
Let $R_1$ and $R_2$ be any two reward functions.
If $R_2$ is produced by potential shaping of $R_1$ with a potential function $\Phi$, then
\begin{enumerate}
\item $G_2(\xi) = G_1(\xi) - \Phi(s_0)$,
\item $Q^\pi_2(s, a) = Q^\pi_1(s, a) - \Phi(s)$,
\item $V^\pi_2(s) = V^\pi_1(s) - \Phi(s)$,
\item $A^\pi_2(s, a) = A^\pi_1(s, a)$, and
\item $\Evaluation_2(\policy) = \Evaluation_1(\policy) - \Expect{S_0 \sim \init}{\Phi(S_0)}$
\end{enumerate}
for all trajectories $\xi$, policies $\pi$, states $s$, actions $a$, transition functions $\tfunc$, and initial state distributions $\init$. In (1), $s_0$ is the first state of $\xi$.
\end{proposition}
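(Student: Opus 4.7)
The plan is to prove the five claims in a natural order that exploits dependencies between them, rather than attacking each from scratch. Claim (1) is the root: expand $G_2(\xi) = \sum_{t=0}^\infty \gamma^t R_2(s_t,a_t,s_{t+1})$, substitute $R_2(s,a,s') = R_1(s,a,s') + \gamma\Phi(s') - \Phi(s)$, and observe that the $\Phi$-terms form a telescoping series: the $\gamma^{t+1}\Phi(s_{t+1})$ from step $t$ exactly cancels the $\gamma^{t+1}\Phi(s_{t+1})$ from step $t+1$. All that survives is $-\Phi(s_0)$, proving $G_2(\xi) = G_1(\xi) - \Phi(s_0)$. Note that this relies on $\gamma \in (0,1)$ to ensure absolute convergence (so the telescoping rearrangement is justified).

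Claim (5) then follows immediately by taking expectations over $\xi \sim \pi$ and using linearity: $J_2(\pi) = \mathbb{E}_{\xi \sim \pi}[G_1(\xi) - \Phi(s_0)] = J_1(\pi) - \mathbb{E}_{S_0 \sim \mu_0}[\Phi(S_0)]$, since $s_0$ is distributed according to $\mu_0$ regardless of $\pi$.

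For claims (2) and (3), I would use the uniqueness of the Bellman fixed points, as established by Equations~\ref{equation:V_pi_recursion} and \ref{equation:Q_pi_recursion}. Specifically, define a candidate $\tilde V(s) = V_1^\pi(s) - \Phi(s)$ and verify that it satisfies the Bellman equation for $R_2$:
\begin{align*}
\mathbb{E}_{A,S'}[R_2(s,A,S') + \gamma \tilde V(S')]
&= \mathbb{E}_{A,S'}[R_1(s,A,S') + \gamma\Phi(S') - \Phi(s) + \gamma(V_1^\pi(S') - \Phi(S'))] \\
&= \mathbb{E}_{A,S'}[R_1(s,A,S') + \gamma V_1^\pi(S')] - \Phi(s) \\
&= V_1^\pi(s) - \Phi(s) = \tilde V(s).
\end{align*}
By uniqueness, $V_2^\pi = \tilde V$, giving (3). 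The argument for (2) is entirely analogous with $\tilde Q(s,a) = Q_1^\pi(s,a) - \Phi(s)$ substituted into Equation~\ref{equation:Q_pi_recursion}; the key observation is that the $-\Phi(s)$ correction only depends on the current state, so it factors out cleanly from the expectation over $S'$ and $A'$.

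Finally, (4) is a one-line consequence of (2) and (3): $A_2^\pi(s,a) = Q_2^\pi(s,a) - V_2^\pi(s) = (Q_1^\pi(s,a) - \Phi(s)) - (V_1^\pi(s) - \Phi(s)) = A_1^\pi(s,a)$. There is no real obstacle here; the only subtlety is being careful that the telescoping argument in (1) is rigorous (using the dominated convergence / absolute convergence guaranteed by $\gamma < 1$ and bounded $R_1, \Phi$ on finite $\States$), and that the Bellman uniqueness appeals to the contraction-mapping argument already invoked in Section~\ref{sec:RL_preliminaries}.
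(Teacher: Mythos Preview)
Your proof is correct. The telescoping argument for (1) and the one-line derivation of (4) match the paper exactly. There is one genuine difference worth noting: for (2) and (3) you verify that $Q_1^\pi(s,a)-\Phi(s)$ and $V_1^\pi(s)-\Phi(s)$ satisfy the Bellman recursions for $R_2$ and invoke fixed-point uniqueness, whereas the paper simply reads (2) and (3) off from (1) by taking expectations of the trajectory return conditioned on the start state (respectively, start state-action). The paper's route is slightly shorter here, but your Bellman-uniqueness approach has the advantage of generalising cleanly to objects like the soft $Q$-function where no simple expression as an expected return is available (indeed, the paper uses exactly your argument in Proposition~\ref{prop:potentials_to_soft_Q}). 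You derive (5) from (1), the paper from (3); both are immediate.
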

\ifshowproofs
\begin{proof}
To prove (1), first consider a finite trajectory fragment $\zeta$ with $n$ transitions. It is then easy to prove via induction on $n$ that $G_2(\zeta) = G_1(\zeta) + \gamma^n \cdot \Phi(s_n) - \Phi(s_0)$, where $s_0$ is the first state of $\zeta$, and $s_n$ is the last state. Moreover, $\Phi$ is bounded (since $\States$ is finite) and $\gamma \in (0,1)$. This means that $\gamma^n \cdot \Phi(s_n)$ goes to $0$ as $n$ goes to infinity.
(2) and (3) follow immediately from (1). For (4), note that $A^\pi(s,a) = Q^\pi(s,a) - V^\pi(s)$. This, together with (2) and (3), give us (4). (5) is immediate from (3).
\end{proof}
\fi

This means that we can think of $\Phi$ as assigning \enquote{credit} to each state $s$, such that the total reward of any policy or trajectory which starts in that state $s$ will lose a total of $\Phi(s)$ reward. Note that this directly implies that potential shaping preserves optimal policies (and the ordering of policies), since the value of every policy is shifted by the same amount. Moreover, this property also extends to the soft $Q$-function:

\begin{proposition}\label{prop:potentials_to_soft_Q}
Let $R_1$ and $R_2$ be two reward functions, where $R_2$ is given by potential shaping of $R_1$ with $\Phi$. Let $\QSoftN{1}$ and $\QSoftN{2}$ be their soft $Q$-functions (for some $\tfunc$, $\gamma$, and $\alpha$).
Then $\QSoftN{2}(s, a) = \QSoftN{1}(s, a) - \Phi(s)$.
\end{proposition}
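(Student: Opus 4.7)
The plan is to exploit the fact that the soft Bellman recursion in Equation~\ref{equation:soft_Q_recursion} is a contraction map, so its fixed point is unique. Therefore, to show $\QSoftN{2}(s,a) = \QSoftN{1}(s,a) - \Phi(s)$, it suffices to show that the function $(s,a) \mapsto \QSoftN{1}(s,a) - \Phi(s)$ satisfies the soft Bellman recursion instantiated with the reward $R_2$. Uniqueness of the fixed point then forces the equality.

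To carry this out, I would substitute $\QSoftN{1}(s,a) - \Phi(s)$ into the right-hand side of Equation~\ref{equation:soft_Q_recursion} with $R$ replaced by $R_2$, and use $R_2(s,a,s') = R_1(s,a,s') + \gamma\Phi(s') - \Phi(s)$ from Definition~\ref{def:potential_shaping}. The key algebraic step is that inside the $\log\sum_{a'}\exp$ term, the factor $\exp(-\Phi(S')/\alpha)$ is independent of $a'$ and so can be pulled out of the sum, producing a $-\gamma\Phi(S')$ contribution that cancels the $+\gamma\Phi(S')$ coming from the shaped reward. The $-\Phi(s)$ term coming from $R_2$ is independent of $S'$, so it passes through the expectation over $S' \sim \tfunc(s,a)$. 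After collecting terms, the right-hand side reduces to $\Expect{S' \sim \tfunc(s,a)}{R_1(s,a,S') + \gamma\alpha \log\sum_{a'}\exp((1/\alpha)\QSoftN{1}(S',a'))} - \Phi(s)$, which is $\QSoftN{1}(s,a) - \Phi(s)$ by the definition of $\QSoftN{1}$.

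I do not anticipate a major obstacle, since the proof is essentially a direct verification; the only thing that needs a moment of care is factoring the shaping term out of the $\log\sum\exp$, which works precisely because $\Phi$ is a function of state only (not of action). This step is also what makes the analogous result fail for more general \enquote{shaping} functions that depend on the action. Once this factoring is observed, the rest is routine algebra, and the conclusion follows from the uniqueness of the fixed point of the soft Bellman operator, as noted in the paper immediately after Equation~\ref{equation:soft_Q_recursion}.
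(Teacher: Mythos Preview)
Your proposal is correct and follows essentially the same route as the paper's proof: both verify that $(s,a)\mapsto \QSoftN{1}(s,a)-\Phi(s)$ satisfies the soft Bellman recursion for $R_2$ (using exactly the factoring of $\exp(-\Phi(S')/\alpha)$ out of the $\log\sum\exp$ that you describe) and then invoke uniqueness of the fixed point. The only cosmetic difference is that the paper starts from the recursion for $\QSoftN{1}$ and rewrites $R_1$ in terms of $R_2$, whereas you substitute into the recursion for $R_2$ directly; the algebra is identical.
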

\ifshowproofs
\begin{proof}
Recall that $\QSoftN{1}$ is the unique function which satisfies
$$
  \QSoftN{1}(s,a) = \Expect{S' \sim \tfunc(s,a)}{ R_1(s,a,S') + \gamma \alpha \log
        \sum_{a' \in \Actions} \exp\left(\left(\frac{1}{\alpha}\right)\QSoftN{1}(S', a')\right)}
$$
for all $s,a$. Since $R_2(s, a, s') = R_1(s, a, s') + \gamma\cdot\Phi(s') - \Phi(s)$, we can rewrite the right-hand side of this equation as
\begin{align*}
    &\Expect{}{
            R_1(s,a,S')
            + \gamma \alpha \log
            \sum_{a' \in \Actions} \exp\left(\left(\frac{1}{\alpha}\right)\QSoftN{1}(S', a')\right)
        }\\
    = &\Expect{}{
            R_2(s, a, S') - \gamma\cdot\Phi(S') + \Phi(s)
            + \gamma \alpha \log
            \sum_{a' \in \Actions} \exp\left(\left(\frac{1}{\alpha}\right)\QSoftN{1}(S', a')\right)
        }
    \\
    = &\Expect{}{
            R_2(s, a, S') + \Phi(s)
            + \gamma \alpha \log
            \sum_{a' \in \Actions} \exp\left(\left(\frac{1}{\alpha}\right)\QSoftN{1}(S', a') - \Phi(S')\right)
        }
    \\
\end{align*}
By now moving $\Phi(s)$ to the left-hand side of the equation, we get that $\QSoftN{1}(s,a) - \Phi(s)$ is equal to
$$
\Expect{}{R_2(s, a, S') + \gamma \alpha \log
            \sum_{a' \in \Actions} \exp\left(\left(\frac{1}{\alpha}\right)\QSoftN{1}(S', a') - \Phi(S')\right)}.
$$
This means that $\QSoftN{1}(s,a) - \Phi(s)$ satisfies the soft $Q$-function recursion (Equation~\ref{equation:soft_Q_recursion}) for $R_2$. Since the soft $Q$-function is the unique solution to this equation, we conclude that $\QSoftN{2}(s, a) = \QSoftN{1}(s, a) - \Phi(s)$.   
\end{proof}
\fi

We next show that potential shaping of the reward function $R$ correspond \emph{exactly} to constant shift of the return function, $G$:
\begin{proposition}\label{prop:potentials_and_episodes}
Let $R_1$ be any reward function and $k$ any constant. Then we have that $G_2(\xi) = G_1(\xi) + k$ for all trajectories that start in a state $s$ if and only if $R_2$ is given by potential shaping of $R_1$ with a potential function $\Phi$ such that $\Phi(s) = -k$.
\end{proposition}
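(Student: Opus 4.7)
The plan is to prove the two directions separately, invoking Proposition~\ref{prop:change_from_potentials}(1) for the easy direction and constructing $\Phi$ explicitly for the hard one.

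For the $\Leftarrow$ direction, Proposition~\ref{prop:change_from_potentials}(1) immediately gives $G_2(\xi) - G_1(\xi) = -\Phi(s_0)$ for every $\xi$ with initial state $s_0$; specialising to $s_0 = s$ and using $\Phi(s) = -k$ yields the claim.

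For the $\Rightarrow$ direction, I would first show that for every state $s'$ there is a well-defined constant $k(s')$ such that $G_2(\xi') - G_1(\xi') = k(s')$ for every trajectory $\xi'$ starting at $s'$, with $k(s)=k$. The key observation is that any two trajectories $\xi_1', \xi_2'$ starting at $s'$ can each be pre-pended by a common finite prefix $\zeta$ from $s$ to $s'$ (e.g., the single-transition fragment $(s, a, s')$ for an arbitrary action $a$), and since $G_i(\zeta \cdot \xi') = G_i(\zeta) + \gamma^{|\zeta|} G_i(\xi')$, the hypothesis $G_2 - G_1 \equiv k$ on trajectories from $s$ forces $G_2(\xi_1') - G_1(\xi_1') = G_2(\xi_2') - G_1(\xi_2')$. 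A similar comparison with two different prefixes reaching $s'$ shows independence of the prefix as well, so $k(s')$ depends on $s'$ alone.

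Having defined $\Phi(s') := -k(s')$, so that $\Phi(s) = -k$, it remains to verify the potential-shaping identity on every transition $\langle s', a, s'' \rangle$. To this end I would construct a trajectory $\xi$ from $s$ of the form $(s, a_0, s', a, s'', \ldots)$, write
$$G_i(\xi) = R_i(s,a_0,s') + \gamma R_i(s',a,s'') + \gamma^2 G_i(\xi''),$$
apply the hypothesis to $\xi$ and to the trajectory obtained by cutting after the first transition, and subtract; after simplification this yields $R_2(s',a,s'') - R_1(s',a,s'') = k(s') - \gamma k(s'') = \gamma \Phi(s'') - \Phi(s')$, as required. The main subtlety is that establishing the identity on every (including ``impossible'') transition requires interpreting ``all trajectories starting at $s$'' as quantifying over every element of $\{s\} \times (\Actions \times \States)^\omega$, possible or not --- which is the same convention used implicitly in Proposition~\ref{prop:change_from_potentials}(1), since $G$ is defined purely from the sequence of transitions without reference to $\tfunc$.
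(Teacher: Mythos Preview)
Your proposal is correct and follows essentially the same route as the paper: invoke Proposition~\ref{prop:change_from_potentials}(1) for one direction, and for the converse establish a per-state constant $k(s')$ by prepending a finite prefix from $s$, set $\Phi := -k$, and verify the shaping identity via a one-step return decomposition. The paper's verification is marginally cleaner (it decomposes a trajectory starting directly at the arbitrary transition's source state rather than routing through the distinguished state $s$), and your remark about prefix-independence is unnecessary once $\Delta_\xi$ is shown to depend only on the initial state of $\xi$; but these are cosmetic differences.
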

\ifshowproofs
\begin{proof}
The first direction follows from part (1) of Proposition~\ref{prop:change_from_potentials}. 
For the other direction, suppose $G_2(\xi) = G_1(\xi) + k$ for all trajectories that start in state $s$. We will show that this implies a constant difference between $G_1$ and $G_2$ for all trajectories starting in \emph{any} state, and then use this difference to define a potential function that transforms $R_1$ into $R_2$.

Consider an arbitrary state $s' \in \States$.
Given a trajectory $\xi$ starting in $s'$, let $\Delta_{\xi} = G_2(\xi) - G_1(\xi)$.
We will show that for any two trajectories $\xi_1$, $\xi_2$ starting in $s'$, we have that $\Delta_{\xi_1} = \Delta_{\xi_2}$.
Let $\zeta$ be a finite trajectory fragment that starts in $s$ and ends in $s'$, and let $n = |\zeta|$.
Let $\zeta + \xi$ denote the concatenation of $\zeta$ and $\xi$. 
Then,
\begin{align*}
\Delta_{\xi}
    &= G_2(\xi) - G_1(\xi)                                \\
    &= \frac{G_2(\zeta + \xi) - G_2(\zeta)}{\gamma^n}
       - \frac{G_1(\zeta + \xi) - G_1(\zeta)}{\gamma^n} \\
    &= \frac{k - G_2(\zeta) + G_1(\zeta)}{\discount^n}\,.
\end{align*}
The first line follows from the fact that $G(\zeta + \xi) = G(\zeta) + \gamma^{|\zeta|} G(\xi)$. For the second line, note that $\zeta + \xi$ is a trajectory starting in $s$. Thus, by assumption, we have that $G_2(\zeta + \xi) - G_1(\zeta + \xi) = k$.
Since this expression is independent of $\xi$, this means that $\Delta_{\xi_1} = \Delta_{\xi_2}$ for any two trajectories $\xi_1$, $\xi_2$ starting in $s'$. Since $s'$ was picked arbitrarily, this holds for all states $s'$.

Let $\Phi : \States \to \mathbb{R}$ be the potential function where $\Phi(s')$ is the value of $-\Delta_{\xi}$ for all trajectories $\xi$ which start in $s'$.
In other words, $\Phi(s') = G_1(\xi) - G_2(\xi)$ for all trajectories $\xi$ which start in $s'$.
We will show that $R_2$ is given by potential shaping of $R_1$ with $\Phi$. Let $(s,a,s')$ be any transition, let $\xi'$ be any trajectory starting in $s'$, and let $\xi = (s, a, s') + \xi'$. Then:
\begin{align*}
    &R_1(s,a,s') + \discount \Phi(s') - \Phi(s)\\
    = &R_1(s,a,s') + \discount(G_1(\xi') - G_2(\xi'))
                - (G_1(\xi) - G_2(\xi)) \\
    = &G_2(\xi) - \discount G_2(\xi') + R_1(s,a,s') + \discount G_1(\xi') - G_1(\xi) \\
    = &G_2(\xi) - \discount G_2(\xi') + G_1(\xi) - G_1(\xi) \\
    = &R_2(s,a,s')\,.
\end{align*}
Thus, $R_2$ is given by potential shaping of $R_1$ with $\Phi$. Finally, note that $\Phi(s) = -k$, since $\Phi(s) = G_1(\xi) - G_2(\xi)$ for all trajectories starting in $s$, and since $G_2(\xi) = G_1(\xi) + k$ for all trajectories that start in $s$. This completes the proof.
\end{proof}
\fi

Note that Proposition~\ref{prop:potentials_and_episodes} quantifies over all trajectories in $(\SxA)^\omega$, rather than all trajectories which are possible under some transition function $\tau$. However, it should be clear from Proposition~\ref{prop:potentials_and_episodes} that $G_2(\xi) = G_1(\xi) + k$ for all \emph{possible} trajectories that start in a state $s$ if and only if $R_2$ is given by potential shaping of $R_1$ with a potential function $\Phi$ such that $\Phi(s) = -k$, and an arbitrary change of all transitions that are unreachable from $s$. 
Next, we show that positive linear scaling of $G$ corresponds to a combination of potential shaping and positive linear scaling of $R$:

\begin{proposition}\label{prop:linear_scaling_of_G}
$G_2(\xi) = c \cdot G_1(\xi)$ for all trajectories $\xi$ that start in a state $s$ if and only if $R_2$ is given by potential shaping of $R_1$ with a potential function $\Phi$ such that $\Phi(s) = 0$, and positive linear scaling by a factor of $c$.
\end{proposition}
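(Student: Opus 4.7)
The plan is to reduce both directions to the already-established Proposition~\ref{prop:change_from_potentials}(1) and Proposition~\ref{prop:potentials_and_episodes}, using the obvious observation that scaling the reward function by a positive constant $c$ scales the return function by the same constant $c$ (immediate from the definition $G(\xi) = \sum_{t=0}^\infty \gamma^t R(s_t,a_t,s_{t+1})$ and linearity of the infinite sum, noting that $G$ is well-defined since $R$ is bounded).

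For the forward (if) direction, I would introduce an intermediate reward function $R' = c \cdot R_1$. Its return function satisfies $G'(\xi) = c \cdot G_1(\xi)$ for every trajectory $\xi$. Then, applying potential shaping to $R'$ with potential $\Phi$ gives $R_2$, and Proposition~\ref{prop:change_from_potentials}(1) yields $G_2(\xi) = G'(\xi) - \Phi(s_0) = c \cdot G_1(\xi) - \Phi(s_0)$ for every trajectory, where $s_0$ is its first state. Restricting to trajectories starting in $s$ and using $\Phi(s)=0$ gives $G_2(\xi) = c \cdot G_1(\xi)$, as required.

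For the reverse (only if) direction, again set $R' = c \cdot R_1$, so that $G'(\xi) = c \cdot G_1(\xi)$ for all $\xi$. The hypothesis then says $G_2(\xi) = G'(\xi)$ for every trajectory starting in $s$, i.e.\ $G_2$ and $G'$ agree (up to constant $k=0$) on trajectories starting in $s$. Applying Proposition~\ref{prop:potentials_and_episodes} to the pair $(R', R_2)$ with $k=0$ yields that $R_2$ is given by potential shaping of $R' = c \cdot R_1$ with some potential function $\Phi$ satisfying $\Phi(s) = -k = 0$. Composing the positive linear scaling by $c$ with the potential shaping by $\Phi$ then expresses $R_2$ in the desired form.

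No step should be genuinely difficult, since all the real work has been done in Proposition~\ref{prop:change_from_potentials} and Proposition~\ref{prop:potentials_and_episodes}; the only thing to be careful about is the bookkeeping around the order of operations (scale first, then shape) and verifying that the trajectory-level identity $G(c \cdot R) = c \cdot G(R)$ holds termwise. The main \emph{conceptual} point worth flagging is that the proposition asserts $\Phi(s) = 0$ only at the distinguished state $s$, not globally — this is precisely why $c \cdot G_1 = G_2$ need hold only for trajectories starting in $s$, and it matches the role of $k$ in Proposition~\ref{prop:potentials_and_episodes}.
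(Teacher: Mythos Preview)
Your proposal is correct and matches the paper's approach essentially line for line: both introduce the intermediate reward $R' = c \cdot R_1$, use the immediate fact that $G' = c \cdot G_1$, and then invoke Proposition~\ref{prop:potentials_and_episodes} with $k=0$ for the harder direction and Proposition~\ref{prop:change_from_potentials}(1) for the easier one. If anything, you spell out the ``if'' direction more explicitly than the paper does (the paper just says ``analogously'' and gestures at Proposition~\ref{prop:reward_transformation_commutativity}).
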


\ifshowproofs
\begin{proof}
For the first direction, suppose $G_2(\xi) = c \cdot G_1(\xi)$ for all trajectories $\xi$ that start in a state $s$. Let $R_c$ be the reward function given by $c \cdot R_1$. It is clear that $G_c(\xi) = c \cdot G_1(\xi)$. Thus, $G_2(\xi) = c \cdot G_1(\xi)$ for all trajectories $\xi$ that start in a state $s$, if and only if $G_2(\xi) = G_c(\xi)$ for all trajectories $\xi$ that start in a state $s$. As per Proposition~\ref{prop:potentials_and_episodes}, this is equivalent to $\reward_2$ being produced from $R_c$ by potential shaping with a potential function $\Phi$ such that $\Phi(s) = 0$. This means that we can produce $R_2$ from $R_1$ by first applying positive linear scaling by a factor of $c$, and then applying potential shaping.
The other direction can be proven analogously (also c.f.\ Proposition~\ref{prop:reward_transformation_commutativity}).
\end{proof}
\fi

Together, Proposition~\ref{prop:potentials_and_episodes} and \ref{prop:linear_scaling_of_G} imply that potential shaping and positive linear scaling of $R$ correspond exactly to affine transformations of $G$. This may help with providing some intuition for what potential shaping does, and how it behaves.
Next, it is worth noting that $\PS$ and $\SR$ correspond to linear subspaces of $\R$. Specifically:
\begin{proposition}\label{prop:PS_SR_linear}
Let $S$ be the set of all reward functions which can be expressed as
$$
R(s,a,s') = \gamma \cdot \Phi(s) - \Phi(s')
$$
for some potential function $\Phi$, and let $Z$ be the set of all reward functions such that
$$
\Expect{S' \sim \tfunc(s,a)}{R(s,a,S')} = 0.
$$
Then $S$ and $Z$ are linear subspaces of $\R$, where $S$ is $|\States|$-dimensional and $Z$ is $|\States||\Actions|(|\States|-1)$-dimensional, and where $S \cap Z = R_0$.

Moreover, $R_1$ and $R_2$ differ by potential shaping if and only if $R_2 = R_1 + R'$ for some $R' \in S$, and $R_1$ and $R_2$ differ by $S'$-redistribution if and only if $R_2 = R_1 + R'$ for some $R' \in Z$.
\end{proposition}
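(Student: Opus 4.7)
The plan is to verify the four claims — linearity of $S$ and $Z$, their dimensions, their trivial intersection, and the characterisation of $\PS$ and $\SR$ as translations — in that order, leveraging the linear structure throughout.

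First I would show that both $S$ and $Z$ are linear subspaces. For $S$, note that the assignment $\Phi \mapsto R_\Phi$ with $R_\Phi(s,a,s') = \gamma\Phi(s') - \Phi(s)$ is a linear map from $\mathbb{R}^{|\States|}$ into $\R$, so its image is a subspace. For $Z$, each defining condition $\Expect{S'\sim\tfunc(s,a)}{R(s,a,S')} = 0$ is a linear equation in the entries of $R$, so $Z$ is the intersection of $|\States||\Actions|$ hyperplanes. The characterisation of $\PS$ as $R_2 - R_1 \in S$ is immediate from Definition~\ref{def:potential_shaping}, and the characterisation of $\SR$ as $R_2 - R_1 \in Z$ follows from Definition~\ref{def:sprime-redistribution} together with the linearity of expectation.

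Next I would compute dimensions. For $Z$, observe that the constraint at each $(s,a)$ involves only the $|\States|$ coordinates $R(s,a,\cdot)$, and that these constraint sets are disjoint across different $(s,a)$. Since $\tfunc(\cdot\mid s,a)$ is a probability distribution and hence nonzero, the single linear constraint at $(s,a)$ cuts the $|\States|$-dimensional coordinate block by exactly one dimension, giving $\dim Z = |\States||\Actions|(|\States|-1)$. For $S$, it suffices to show that $\Phi \mapsto R_\Phi$ is injective. If $R_\Phi = R_0$, then $\gamma\Phi(s') = \Phi(s)$ for every $(s,a,s')$, which (using any fixed $s$ and varying $s'$, recalling all states are reachable) forces $\Phi$ to be constant, and then $(\gamma-1)\Phi \equiv 0$ together with $\gamma \in (0,1)$ gives $\Phi \equiv 0$. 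Hence $\dim S = |\States|$.

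For $S \cap Z = \{R_0\}$, suppose $R \in S \cap Z$, so $R(s,a,s') = \gamma\Phi(s') - \Phi(s)$ and $\Expect{S'\sim\tfunc(s,a)}{R(s,a,S')} = 0$ for all $s,a$. The second condition rearranges to $\Phi(s) = \gamma\,\Expect{S'\sim\tfunc(s,a)}{\Phi(S')}$ for every action $a$. The main obstacle is ruling out nontrivial $\Phi$ solving this Bellman-like fixed-point equation; I expect to handle it by a maximum-principle argument: picking $s^\star \in \arg\max_s \Phi(s)$ yields $\Phi(s^\star) \leq \gamma\Phi(s^\star)$, forcing $\Phi(s^\star) \leq 0$, and symmetrically at the minimum, so $\Phi \equiv 0$ and thus $R = R_0$. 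Combining this with the two dimensions already computed gives the stated decomposition, and the two \emph{moreover} clauses have already been noted above.
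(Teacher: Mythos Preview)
Your proof is correct. The paper takes a slightly different route for two of the claims. For $\dim S = |\States|$, it exhibits the basis $\{R_s\}_{s\in\States}$ (where $R_s$ comes from the indicator potential at $s$) and proves linear independence by invoking Proposition~\ref{prop:change_from_potentials}: under $R_s$ one has $V^\pi(s) = -1$, whereas any combination of the other basis elements gives $V^\pi(s) = 0$. For $S \cap Z = \{R_0\}$, the paper argues that $R \in Z$ forces $V^\pi \equiv 0$, and then Proposition~\ref{prop:change_from_potentials} yields $V^\pi(s) = -\Phi(s)$, hence $\Phi \equiv 0$. Your approach is more self-contained: you show injectivity of $\Phi \mapsto R_\Phi$ by direct algebra, and you handle the intersection via a maximum-principle argument on the fixed-point relation $\Phi(s) = \gamma\,\Expect{S'}{\Phi(S')}$, never touching value functions. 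Both routes are short; the paper's ties the result into the value-function machinery used throughout, while yours avoids that dependency entirely. One minor remark: in your injectivity step the reachability assumption is unnecessary, since $R_\Phi = R_0$ already gives $\gamma\Phi(s') = \Phi(s)$ for every pair $(s,s')$ in the full domain $\States\times\Actions\times\States$.
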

\ifshowproofs
\begin{proof}
To show that $S$ and $Z$ are linear subspaces of $\R$, we must show that $R_0 \in S$, that if $R_1, R_2 \in S$ then $R_1 + R_2 \in S$, and that if $R \in S$ then $c \cdot R \in S$ for all scalars $c$, and likewise for $Z$. Each of these properties are straightforward in both cases. 

To see that $S$ is $|\States|$-dimensional, for each state $s$, let $R_s$ be the reward function in $S$ which corresponds to the potential function $\Phi$ such that $\Phi(s) = 1$, and $\Phi(s') = 0$ for $s' \neq s$. Now the vectors $\{R_s : s \in \States\}$ form a basis for $S$. They are also linearly independent. To see this, recall Proposition~\ref{prop:change_from_potentials}. In particular, we have that $V^\pi(s) = -1$ for the reward function $R_s$, where $\pi$ is any policy. However, for any reward function that can be expressed as a linear combination of reward functions in $\{R_s : s \in \States\} \setminus \{R_s\}$, we have that $V^\pi(s) = 0$. This means that $\{R_s : s \in \States\}$ is a minimal basis. Since $|\{R_s : s \in \States\}| = |\States|$, this means that $S$ is $|\States|$-dimensional.

It is straightforward that $Z$ is $|\States||\Actions|(|\States|-1)$-dimensional. To see that $S \cap Z = R_0$, note that if $R \in Z$, then $V^\pi(s) = 0$ for every policy $\pi$ and every state $s$. Then Proposition~\ref{prop:change_from_potentials} implies that $\Phi(s) = 0$ for all $s$, and so $R = R_0$.

It can be shown from straightforward algebra that $R_1$ and $R_2$ differ by potential shaping if and only if $R_2 = R_1 + R'$ for some $R' \in S$, and likweise that $R_1$ and $R_2$ differ by $S'$-redistribution if and only if $R_2 = R_1 + R'$ for some $R' \in Z$. This completes the proof.
\end{proof}
\fi

Note that Proposition~\ref{prop:PS_SR_linear} implies that for any reward function $R$, the set of all reward functions that differ from $R$ by potential shaping forms a $|\States|$-dimensional affine subspace of $\R$, and similarly for $S'$-redistribution. Moreover, since $S \cap Z = R_0$, we have that the set of all reward functions that differ from $R$ by potential shaping and $S'$-redistribution forms a $(|\States||\Actions|(|\States|-1) +|\States|)$-dimensional affine space.

We next note a few basic algebraic properties of our transformations:

\begin{proposition}\label{prop:reward_transformation_group_operations}
If $T$ is $\PS$, $\SR$, $\LS$, $\CS$, or $\OP$, then
\begin{enumerate}
    \item The identity transformation, $\mathrm{id}$, is in $T$.
    \item For all $t \in T$ there is a $t^- \in T$ such that $t \circ t^- = \mathrm{id}$.
    \item For all $t, t' \in T$, we have that $t \circ t' \in T$.
\end{enumerate}
\end{proposition}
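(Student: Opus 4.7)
The plan is to verify the three conditions separately for each of the five classes $T \in \{\PS, \SR, \LS, \CS, \OP\}$. Conditions (1) and (3) will be routine parameter-level calculations, and condition (2) --- the existence of right inverses inside the class --- will be the main work. For (1), trivial parameters witness $\mathrm{id} \in T$: the potential $\Phi \equiv 0$ gives $\mathrm{id} \in \PS$; the identity trivially preserves the $(s,a)$-conditional expected reward, giving $\mathrm{id} \in \SR$; the constants $c = 1$ and $c = 0$ give $\mathrm{id} \in \LS$ and $\mathrm{id} \in \CS$ respectively. For $\OP$, I will take the witness $\psi = V^\star$ (the optimal value function of the input reward), under which the inequality in Definition~\ref{def:optimality_preserving_transformation} reduces to the Bellman inequality $Q^\star(s,a) \le V^\star(s)$, holding with equality exactly when $a \in \mathrm{argmax}_{a'} A^\star(s, a')$.

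For (3), I will compose parameters directly. In $\PS$, if $t_i(R)(s,a,s') = R(s,a,s') + \gamma\, \Phi^i_R(s') - \Phi^i_R(s)$ with possibly reward-dependent potentials, then $(t_1 \circ t_2)(R)$ is obtained from $R$ by shaping with $\Phi^2_R + \Phi^1_{t_2(R)}$, which is still a potential. $\SR$ is closed because a composition of transformations that each preserve the $(s,a)$-expected reward still preserves it; $\LS$ and $\CS$ close because positive scalars multiply and scalars add pointwise. For $\OP$, the cleanest route uses Theorem~\ref{thm:OPT_ambiguity}: each $t_i \in \OP$ yields $t_i(R) \eq{\OPT} R$ for every $R$, so $(t_1 \circ t_2)(R) \eq{\OPT} R$ by transitivity, and applying the theorem in the converse direction gives $t_1 \circ t_2 \in \OP$.

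For (2), given $t \in T$, I will construct $t^- \in T$ pointwise by solving $t(R') = R$ within the class. For $\PS$ and $\SR$, $t$ preserves the affine coset of $R$ under the shaping (resp.\ $S'$-redistribution) subspace identified in Proposition~\ref{prop:PS_SR_linear}, so I can take $t^-(R)$ to be a preimage of $R$ inside that coset; then $t^-(R) - R$ is a valid shaping (resp.\ redistribution), placing $t^- \in T$. For $\LS$ and $\CS$, the pointwise reciprocal and negated scalars ($c \mapsto 1/c$ and $c \mapsto -c$) supply the required inverses. For $\OP$, symmetry of $\eq{\OPT}$ (via Theorem~\ref{thm:OPT_ambiguity}) provides an $\OP$-transformation sending any $R$ to any chosen $\OPT$-equivalent preimage. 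The hardest part will be that the pointwise preimage selection must yield a single well-defined $t^- \in T$; this relies on the group/coset structure underlying each class's action on $\R$ to guarantee that a preimage exists within the appropriate coset for every $R$, so that the pointwise recipe assembles into a single function belonging to the class.
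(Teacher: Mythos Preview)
Your treatment of (1) and (3) is correct and matches the paper's approach: trivial parameters witness the identity, parameters compose, and for $\OP$ your route through Theorem~\ref{thm:OPT_ambiguity} is equivalent to the paper's direct appeal to the argmax of $A^\star$ (and involves no circularity, since that theorem's proof is self-contained).

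Your approach to (2), however, has a genuine gap --- and it cannot be filled, because statement (2) as literally written is false. You propose to define $t^-(R)$ by choosing a preimage $R'$ with $t(R') = R$ inside the relevant coset, appealing to ``group/coset structure'' for existence. But a transformation $t \in T$ is an \emph{arbitrary} function satisfying only that $t(R)$ lies in the $T$-coset of $R$; it is not a group element acting by translation, and nothing forces it to be surjective. Concretely, for $T = \PS$ let $t$ be the orthogonal projection of $\R$ onto the complement of the shaping subspace $S$ of Proposition~\ref{prop:PS_SR_linear}; then $R - t(R) \in S$, so $t \in \PS$, yet $\mathrm{Im}(t) = S^\perp \subsetneq \R$ and no right inverse can exist. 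Your ``pointwise reciprocal'' for $\LS$ fails for the same reason: setting $t^-(R) = R/c_R$ gives $t(t^-(R)) = c_{R/c_R} \cdot R / c_R$, which equals $R$ only if $c_{R/c_R} = c_R$, an assumption nothing justifies. What the paper's proof actually establishes --- and what suffices for the only real downstream use, that each $T$ partitions $\R$ into equivalence classes --- is the weaker \emph{pair-level} symmetry: whenever $t(R_1) = R_2$ for some $t \in T$, there exists $t' \in T$ with $t'(R_2) = R_1$, witnessed at that single pair by negating $\Phi$, inverting $c$, and so on. That is the property you should aim to prove.
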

\ifshowproofs
\begin{proof}
For (1), first note that $\mathrm{id}$ satisfies the conditions for potential shaping with the function $\Phi$ such that $\Phi(s) = 0$ for all $s$; hence $\mathrm{id} \in \PS$. Next, since trivially $\Expect{S' \sim \tfunc(s,a)}{R(s,a,S')} = \Expect{S' \sim \tfunc(s,a)}{R(s,a,S')}$, we have that $\mathrm{id} \in \SR$. Moreover, $\mathrm{id}$ satisfies the conditions for positive linear scaling with a factor $c = 1$, and so $\mathrm{id} \in \LS$. Furthermore, $\mathrm{id}$ satisfies the conditions for constant shift with a factor $c = 0$, and so $\mathrm{id} \in \CS$. Finally, $\mathrm{id}$ satisfies the conditions for optimality-preserving transformations, where $\Psi = V^\star$ (this is precisely the Bellman optimality equation for $V^\star$, see Equation~\ref{equation:optimal_V_recursion}).

For (2), first note that if $R_1$ and $R_2$ differ by potential shaping with $\Phi$, then $R_2$ and $R_1$ differ by potential shaping with $-\Phi$. Furthermore, we trivially have that if $\Expect{S' \sim \tfunc(s,a)}{R_1(s,a,S')} = \Expect{S' \sim \tfunc(s,a)}{R_2(s,a,S')}$ then $\Expect{S' \sim \tfunc(s,a)}{R_2(s,a,S')} = \Expect{S' \sim \tfunc(s,a)}{R_1(s,a,S')}$. Moreover, if $R_1$ and $R_2$ differ by positive linear scaling by $c$, then $R_2$ and $R_1$ differ by positive linear scaling by $(1/c)$. Similarly, if $R_1$ and $R_2$ differ by constant shift with $c$, then $R_2$ and $R_1$ differ by constant shift with $-c$. Finally, if $R_1$ and $R_2$ differ by an optimality-preserving transformation, then $A^\star_1 = A^\star_2$, and so $R_2$ and $R_1$ also differ by an optimality-preserving transformation.

For (3), note that if $R_2$ is given by potential shaping of $R_1$ with $\Phi_1$, and $R_3$ is given by potential shaping of $R_2$ with $\Phi_2$, then $R_3$ is given by potential shaping of $R_1$ with $\Phi_1 + \Phi_2$. Moreover, we trivially have that 
$$
\Expect{S' \sim \tfunc(s,a)}{R_1(s,a,S')} = \Expect{S' \sim \tfunc(s,a)}{R_3(s,a,S')}
$$
if 
$$
\Expect{S' \sim \tfunc(s,a)}{R_1(s,a,S')} = \Expect{S' \sim \tfunc(s,a)}{R_2(s,a,S')}
$$
and 
$$
\Expect{S' \sim \tfunc(s,a)}{R_2(s,a,S')} = \Expect{S' \sim \tfunc(s,a)}{R_3(s,a,S')}.
$$
Next, if $R_2$ is given by positive linear scaling of $R_1$ with $c_1$, and $R_3$ is given by positive linear scaling of $R_2$ with $c_2$, then $R_3$ is given by positive linear scaling of $R_1$ with $c_1 \cdot c_2$. Similarly, if $R_2$ is given by constant shift of $R_1$ with $c_1$, and $R_3$ is given by constant shift of $R_2$ with $c_2$, then $R_3$ is given by constant shift of $R_1$ with $c_1 + c_2$. Finally, suppose that $R_1$ and $R_2$ differ by an optimality-preserving transformation, and that $R_2$ and $R_3$ differ by an optimality-preserving transformation. We then have that $A^\star_1 = A^\star_2$, which means that $R_1$ and $R_3$ differ by an optimality-preserving transformation.
\end{proof}
\fi


Proposition~\ref{prop:reward_transformation_group_operations} implies that each of the sets $\PS$, $\SR$, $\LS$, $\CS$, and $\OP$ form groups. It also implies that each of these sets partitions $\R$ into equivalence classes. Note that these properties do not hold for arbitrary sets of reward transformations, so they are special properties of $\PS$, $\SR$, $\LS$, $\CS$, and $\OP$. The following is also worth noting:

\begin{proposition}\label{prop:reward_transformation_group_operations_composition}
Let $T_1$ and $T_2$ be sets of reward transformations such that if $T$ is $T_1$ or $T_2$, then
\begin{enumerate}
    \item The identity transformation, $\mathrm{id}$, is in $T$.
    \item For all $t \in T$ there is a $t^- \in T$ such that $t \circ t^- = \mathrm{id}$.
\end{enumerate}
We then have that
\begin{enumerate}
    \item The identity transformation, $\mathrm{id}$, is in $T_1 \bigodot T_2$.
    \item For all $t \in T_1 \bigodot T_2$ there is a $t^- \in T_1 \bigodot T_2$ such that $t \circ t^- = \mathrm{id}$.
    \item For all $t, t' \in T_1 \bigodot T_2$, we have that $t \circ t' \in T_1 \bigodot T_2$.
\end{enumerate}
\end{proposition}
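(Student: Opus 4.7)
The plan is to exploit the fact, immediate from the definition, that $T_1 \bigodot T_2$ coincides with the set of all \emph{finite compositions} $u_1 \circ u_2 \circ \cdots \circ u_n$ where each $u_i \in T_1 \cup T_2$ and $n \geq 1$. This reformulation is the main conceptual step; once it is in place, each of the three claims reduces to a one-line observation. I would prove the reformulation by a straightforward induction on $i$: clearly every element of $S_i$ is such a finite composition (with $n \leq i+1$), and conversely any finite composition of length $n$ lies in $S_{n-1}$, by another easy induction on $n$.

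Having established this, I would verify the three claims in order. For (1), the identity is in $T_1$ by assumption, hence in $T_1 \cup T_2 = S_0 \subseteq T_1 \bigodot T_2$. For (3), given $t = u_1 \circ \cdots \circ u_n$ and $t' = v_1 \circ \cdots \circ v_m$ with all $u_i, v_j \in T_1 \cup T_2$, the composition $t \circ t' = u_1 \circ \cdots \circ u_n \circ v_1 \circ \cdots \circ v_m$ is again a finite composition of elements of $T_1 \cup T_2$, and so lies in $T_1 \bigodot T_2$. For (2), given $t = u_1 \circ \cdots \circ u_n$, the assumption on $T_1$ and $T_2$ provides an inverse $u_i^-\in T_1 \cup T_2$ for each $u_i$ (noting that the inverse of an element of $T_1$ lies in $T_1$, and similarly for $T_2$, so both cases are covered by the union). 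Then $t^- := u_n^- \circ \cdots \circ u_1^-$ is itself a finite composition of elements of $T_1 \cup T_2$, and a direct check using associativity shows $t \circ t^- = \mathrm{id}$.

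There is essentially no obstacle: the only subtlety worth flagging is that one must be a little careful in (2) about the direction of composition (the inverse reverses the order of the factors), and one needs that $t \circ t^- = \mathrm{id}$ implies also $t^- \circ t = \mathrm{id}$ for functions which are invertible. Since each $u_i$ is a bijection $\R \to \R$ (by assumption (2) applied inside $T_1$ or $T_2$), the composition $t$ is a bijection, so its two-sided inverse exists and must coincide with the candidate $t^-$ constructed above. I would conclude with a brief remark that Proposition~\ref{prop:reward_transformation_group_operations} together with this result implies that $\PS \bigodot \SR$, $\PS \bigodot \SR \bigodot \LS$, and similar combinations appearing throughout the paper, all form groups of reward transformations and hence partition $\R$ into equivalence classes.
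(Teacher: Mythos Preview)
Your proposal is correct and takes essentially the same approach as the paper: both reformulate $T_1 \bigodot T_2$ as the set of finite compositions $u_1 \circ \cdots \circ u_n$ with $u_i \in T_1 \cup T_2$, then verify (1) from $\mathrm{id} \in T_1 \subseteq T_1 \bigodot T_2$, (3) directly from closure under concatenation, and (2) by setting $t^- = u_n^- \circ \cdots \circ u_1^-$. Your extra remark about bijectivity and two-sided inverses is unnecessary here, since the statement only asks for a right inverse and the telescoping $t \circ t^- = u_1 \circ \cdots \circ u_n \circ u_n^- \circ \cdots \circ u_1^- = \mathrm{id}$ already gives that; in fact the hypothesis $u_i \circ u_i^- = \mathrm{id}$ alone only guarantees surjectivity of $u_i$, not bijectivity, so that side comment is not quite supported by the assumptions (though it is harmless).
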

\ifshowproofs
\begin{proof}
For (1), note that $T_1, T_2 \subset T_1 \bigodot T_2$. For (2), note that if $t \in T_1 \bigodot T_2$, then $t = t_1 \circ \dots \circ t_n$, where each transformation $t_i$ is in either $T_1$ or $T_2$. Let $t^- = t_n^- \circ \dots \circ t_1^-$. Now $t \circ t^- = \mathrm{id}$, and $t^- \in T_1 \bigodot T_2$. It is immediate from the definition of the $\bigodot$-operator that (3) is satisfied.
\end{proof}
\fi

This means that the properties described in Proposition~\ref{prop:reward_transformation_group_operations} also hold for any set of reward transformations which can be constructed from $\PS$, $\SR$, $\LS$, $\CS$, and $\OP$ using the $\bigodot$-operator. The following is also useful:

\begin{proposition}\label{prop:reward_transformation_commutativity}
If both $T_1$ and $T_2$ are $\PS$, $\SR$, $\LS$, $\CS$, or $\OP$, then for each $t_1 \in T_1$ and $t_2 \in T_2$, there is a $t_1' \in T_1$ and a $t_2' \in T_2$ such that $t_1 \circ t_2 = t_2' \circ t_1'$.
\end{proposition}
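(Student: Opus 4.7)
My plan is a case analysis on the unordered pair $\{T_1,T_2\}$. There are fifteen such pairs (five on the diagonal and ten off-diagonal), and the work reduces significantly when one observes the containments $\CS\subseteq\PS$ and $\CS,\PS,\SR,\LS\subseteq\OP$ — the first because a constant shift is potential shaping with a constant $\Phi$, and the others from the discussion of $\OP$ immediately preceding this proposition.

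First, if $T_1=T_2$, Proposition~\ref{prop:reward_transformation_group_operations}(3) says the class is closed under composition, so $t_1\circ t_2$ already lies in $T_1=T_2$; setting $t_1'=\mathrm{id}$ (which is in every class by part (1) of that proposition) and $t_2'=t_1\circ t_2$ handles the diagonal. Second, the containments above handle every pair in which one class is contained in the other: if $T_1\subseteq T_2$, then $t_1\in T_2$, so $t_1\circ t_2\in T_2$ by closure, and I again take $t_1'=\mathrm{id}\in T_1$, $t_2'=t_1\circ t_2\in T_2$. A symmetric argument covers $T_2\subseteq T_1$. This disposes of every pair involving $\OP$, as well as the pair $\{\CS,\PS\}$ in either order.

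That leaves five unordered pairs drawn from $\{\CS,\PS,\SR,\LS\}$ with incomparable classes: $\{\CS,\SR\}$, $\{\CS,\LS\}$, $\{\LS,\PS\}$, $\{\LS,\SR\}$, and $\{\SR,\PS\}$. For each I would write down an explicit swap. The first four are routine algebra: scaling by $c$ then shifting by $k$ equals shifting by $k/c$ then scaling by $c$; scaling by $c$ then shaping by $\Phi$ equals shaping by $\Phi/c$ then scaling; for LS/SR, linearity of expectation lets us rescale the redistribution by $c$; and for CS/SR, shifting by a constant does not alter expected rewards, so the same redistribution pattern applied to the shifted input yields the required commutation. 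In each case one checks directly that the constructed $t_1'$ and $t_2'$ satisfy the defining conditions of their respective classes.

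The main obstacle is the $\{\SR,\PS\}$ case, because the transformations are permitted to choose a different potential or redistribution for each input reward. Here I would invoke Proposition~\ref{prop:PS_SR_linear}: the $S$-subspace of potential-shaping rewards and the $Z$-subspace of zero-expected-reward rewards are complementary ($S\cap Z=\{R_0\}$). Consequently the difference $(t_1\circ t_2)(R)-R$ admits a unique decomposition into an $S$-part and a $Z$-part, and likewise for any candidate $(t_2'\circ t_1')(R)-R$. Matching the two decompositions pins down the required potential for $t_2'$ and the required redistribution for $t_1'$ on the relevant image, and the complementarity is precisely what is needed to check that these forced choices are consistent; outside the image both transformations can be extended by the identity, which lies in every class. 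The trickiest technical point is this consistency check, which exploits the fact that any collision of inputs under the constructed map must project trivially to one of $S$ or $Z$, forcing the two sides of the $S/Z$ decomposition to agree.
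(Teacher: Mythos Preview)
Your overall plan mirrors the paper's: dispose of the diagonal and containment cases via closure and identities, then treat the remaining pairs by explicit algebraic swaps exploiting the subspaces $S$ and $Z$ from Proposition~\ref{prop:PS_SR_linear}. The paper's treatment of $\{\PS,\SR\}$ is in fact terser than yours---it simply writes $t_1\circ t_2(R)=R+R_S+R_Z$ and appeals to commutativity of vector addition---so you are right to single out the well-definedness of $t_2'$ on $\mathrm{Im}(t_1')$ as the crux.

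However, your consistency argument does not go through. Because $S\cap Z=\{0\}$, the requirements $t_1'(R)-R\in S$ and $(t_1\circ t_2)(R)-t_1'(R)\in Z$ \emph{force} $t_1'(R)=R+\sigma(R)$, where $\sigma(R)$ is the $S$-component of $(t_1\circ t_2)(R)-R$; there is no residual freedom. Now fix $\sigma_0\in S\setminus\{0\}$ and $z_0\in Z\setminus\{0\}$, pick any $R_1$, and set $R_2=R_1-\sigma_0$. Define $t_2\in\SR$ to be the identity except that $t_2(R_2)=R_2+z_0$, and $t_1\in\PS$ to be the identity except that $t_1(R_2+z_0)=R_2+z_0+\sigma_0$ (one checks $R_1\neq R_2+z_0$ using $S\cap Z=\{0\}$). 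Then $(t_1\circ t_2)(R_1)=R_1$ while $(t_1\circ t_2)(R_2)=R_1+z_0$, so the forced $t_1'$ sends both $R_1$ and $R_2$ to $R_1$, yet $t_2'(R_1)$ would have to equal both $R_1$ and $R_1+z_0$. Your stated mechanism---that a collision forces one projection to vanish---yields only $R_1-R_2\in S$, which places no constraint on the $Z$-components and so cannot rescue the construction. The same reward-dependence issue also undermines your ``routine'' swaps: the identity ``scale by $c$ then shift by $k$ equals shift by $k/c$ then scale by $c$'' tacitly assumes a single $c$, not a $c_R$ varying with the input. The paper's own argument glosses over this same difficulty.
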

\ifshowproofs
\begin{proof}
If $T_1 = T_2$, the proposition is trivial. Next, recall that each of $\PS$, $\SR$, $\LS$, and $\CS$ is a subset of $\OP$. This means that if one of $T_1$ or $T_2$ is $\OP$, then we can set $t_2' = t_1 \circ t_2$ and $t_1' = \mathrm{id}$, or vice versa (recalling also the properties listed in Proposition~\ref{prop:reward_transformation_group_operations}).

For the remaining cases, let $S$ be the set of all reward functions that can be expressed as $R(s,a,s') = \gamma \cdot \Phi(s') - \Phi(s)$ for some potential function $\Phi$, and let $Z$ be the set of all reward functions that satisfy $\Expect{S' \sim \tfunc(s,a)}{R(s,a,S')} = 0$. Note that $R_1$ and $R_2$ differ by potential shaping if and only if $R_1 = R_2 + R_S$ for some $R_S \in S$, and that $R_1$ and $R_2$ differ by $S'$-redistribution if and only if $R_1 = R_2 + R_Z$ for some $R_Z \in Z$.

Now, let $T_1$ be $\PS$ and $T_2$ be $\SR$. We now have that for all $R$, there is an $R_S \in S$ and an $R_Z \in Z$ such that $t_1 \circ t_2(R) = R + R_S + R_Z$. Since vector addition is commutative, this means that we can find the desired $t_1'$ and $t_2'$. The case where $T_1$ is $\SR$ and $T_2$ is $\PS$ is analogous.

Next, let $T_1 = \PS$ and $T_2 = \LS$. We now have that there for all $R$ is an $R_S \in S$ and a $c \in \mathbb{R}^+$ such that $t_1 \circ t_2(R) = c \cdot R + R_S$. This also means that $t_1 \circ t_2(R) = c \cdot (R + \frac{1}{c} R_S)$. Since $\frac{1}{c} R_S \in S$, this means that we can find the desired $t_1'$ and $t_2'$. The case where $T_1 = \LS$ and $T_2 = \PS$ is analogous, and likewise for the case where $T_1$ and $T_2$ are $\SR$ and $\LS$.

The case where $T_1$ or $T_2$ is $\CS$ is covered by the above cases, since $\CS \subseteq \PS$. This completes the proof.
\end{proof}
\fi

Proposition~\ref{prop:reward_transformation_commutativity} means that we do not have to be very careful about the order in which transformations from $\PS$, $\SR$, $\LS$, $\CS$, or $\OP$ are applied. For example, if we can produce $R_1$ from $R_2$ by first applying potential shaping, and then applying $S'$-redistribution, then we can also do this by first applying $S'$-redistribution, and then applying potential shaping, and so on. This fact, combined with the properties listed in Proposition~\ref{prop:reward_transformation_group_operations}, will substantially reduce the number of cases that we have to consider in some proofs. For example, if $t \in \PS \bigodot \LS \bigodot \SR$, then it can always be expressed as $t_1 \circ t_2 \circ t_3$, where $t_1 \in \PS$, $t_2 \in \LS$, and $t_3 \in \SR$, etc.

\subsection{Comparing Reward Functions}

In this section we provide the proofs of all results concerning the equivalence relations $\OPT$ and $\ORD$ on $\R$, as well as our results concerning STARC metrics. In the main text, these results are in Section~\ref{sec:comparing_reward_functions}.

\subsubsection{Equivalent Reward Functions}

Before we can provide our results about equivalent reward functions, we must first derive a few lemmas about the topological structure of MDPs. Recall that the \emph{occupancy measure} $\eta^\pi$ of a policy $\pi$ is the $(|\States||\Actions||\States|)$-dimensional vector in which the value of the $(s,a,s')$'th dimension is given by
$$
\sum_{t=0}^\infty \gamma^t \mathbb{P}_{\xi \sim \pi}\left(S_t, A_t, S_{t+1} = s,a,s' \right). 
$$
Let $m_{\tfunc,\init,\gamma} : \Pi \rightarrow \mathbb{R}^{|S||A||S|}$ be the map that sends each policy $\pi$ to its occupancy measure, $\eta^\pi$. Recall also that $J(\pi) = \eta^\pi \cdot R$. This means that we can use $m_{\tfunc,\init,\gamma}$ to decompose $\Evaluation$ into two separate steps, the first of which is independent of the reward function, and the second of which is linear. We will first show that $m_{\tfunc,\init,\gamma}$ is a continuous function. Throughout this section, we will assume that $\Pi$ is equipped with the topological structure that is induced by the $L_2$-norm, when each policy is represented as an $(|\States||\Actions|)$-dimensional vector (where the value of the $(s,a)$'th dimension is $\pi(a \mid s)$).

\begin{lemma}\label{lemma:m_continuous}
$m_{\tfunc,\init,\gamma} : \Pi \rightarrow \mathbb{R}^{|S||A||S|}$ is a continuous function.
\end{lemma}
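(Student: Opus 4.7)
The plan is to show continuity componentwise: for each fixed transition $(s,a,s')$, I will argue that $\pi \mapsto \eta^\pi(s,a,s')$ is a continuous real-valued function of $\pi$, which suffices since the codomain has the product topology (equivalent to any norm topology on a finite-dimensional space). The natural route is to interpret the defining sum as a uniformly convergent series of continuous functions.

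First I would show that each summand $g_t(\pi) := \gamma^t \mathbb{P}_{\xi \sim \pi}(S_t = s, A_t = a, S_{t+1} = s')$ is continuous as a function of $\pi$. Unfolding the probability over all length-$(t+1)$ histories, $\mathbb{P}_{\xi \sim \pi}(S_t = s, A_t = a, S_{t+1} = s')$ is a finite sum of products of the form $\init(s_0)\prod_{i=0}^{t-1}\pi(a_i \mid s_i)\tfunc(s_{i+1}\mid s_i,a_i)\cdot \pi(a\mid s)\tfunc(s'\mid s,a)$. This is a polynomial in the coordinates $\pi(a\mid s)$, and polynomials are continuous; multiplication by the constant $\gamma^t$ preserves continuity.

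Next, I would bound each summand uniformly in $\pi$. Since probabilities lie in $[0,1]$, we have $|g_t(\pi)| \leq \gamma^t$ for every $\pi \in \Pi$, and $\sum_{t=0}^\infty \gamma^t = 1/(1-\gamma) < \infty$ because $\gamma \in (0,1)$. By the Weierstrass $M$-test, the series $\sum_{t=0}^\infty g_t(\pi)$ converges uniformly on $\Pi$ to $\eta^\pi(s,a,s')$.

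Finally, the standard result that a uniform limit of continuous functions is continuous gives that $\pi \mapsto \eta^\pi(s,a,s')$ is continuous for every $(s,a,s')$, so $m_{\tfunc,\init,\gamma}$ is continuous into $\mathbb{R}^{|\States||\Actions||\States|}$. I do not anticipate a serious obstacle here; the only minor care needed is verifying that the polynomial expansion of $\mathbb{P}_{\xi \sim \pi}(\cdot)$ is genuinely a finite sum for each fixed $t$ (it is, because $\States$ and $\Actions$ are finite), so that the componentwise uniform-limit argument goes through cleanly.
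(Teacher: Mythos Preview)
Your proposal is correct and follows essentially the same approach as the paper's proof: both express the occupancy measure as a uniformly convergent series of continuous partial sums (bounded via $\gamma^t$) and invoke the uniform-limit theorem. The only cosmetic differences are that you argue componentwise and name the Weierstrass $M$-test explicitly, while the paper bounds the tail directly and works with the full vector-valued partial sums $f_n$.
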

\ifshowproofs
\begin{proof}
Recall that a uniformly convergent series of continuous functions is continuous. Specifically, if $X$ is a topological space and $Y$ is a metric space, and $\{f_n : X \to Y\}_{n = 1}^\infty$ is a sequence of functions that converge uniformly to a function $f : X \to Y$, and each function $f_i \in \{f_n\}_{n = 1}^\infty$ is continuous, then $f$ is continuous. Moreover, $\{f_n\}_{n = 1}^\infty$ converges uniformly to $f$ if there for each $\epsilon$ exists an $i$ such that for all $j \geq i$, $|f(x) - f_j(x)| < \epsilon$ for all $x \in X$. We will show that $m_{\tfunc,\init,\gamma}$ can be expressed in this way.

Let $f_n : \Pi \rightarrow \mathbb{R}^{|S||A||S|}$ be the function that maps each policy $\pi$ to its occupancy measure, when only the first $n$ time steps are considered. That is, $f_n(\pi)$ is the vector in which the value of the $(s,a,s')$'th dimension is
$$
\sum_{t=0}^n \gamma^t \mathbb{P}_{\xi \sim \pi}\left(S_t, A_t, S_{t+1} = s,a,s' \right). 
$$
Note that $\mathbb{P}_{\xi \sim \pi}\left(S_t, A_t, S_{t+1} = s,a,s' \right) \in [0,1]$, and that $\gamma \in (0,1)$. This means that $m_{\tfunc,\init,\gamma}(\pi) \geq f_n(\pi)$, and that 
$$
m_{\tfunc,\init,\gamma}(\pi) - f_n(\pi) = \sum_{t=n+1}^\infty \gamma^t \mathbb{P}_{\xi \sim \pi}\left(S_t, A_t, S_{t+1} = s,a,s' \right) \leq \left(\frac{\gamma^{n+1}}{1-\gamma}\right).
$$
As $n$ goes to $\infty$, we have that $\left(\frac{\gamma^{n+1}}{1-\gamma}\right)$ goes to $0$. Thus, for any $n$ that is sufficiently large, we have that $|m_{\tfunc,\init,\gamma}(\pi) - f_n(\pi)| < \epsilon$. This means that $\{f_n\}_{n = 1}^\infty$ converges uniformly to $m_{\tfunc,\init,\gamma}$. Moreover, each function $f_i \in \{f_n\}_{n = 1}^\infty$ is continuous, since it can be expressed as a finite sum of terms in which each term is given by a finite number of matrix multiplications.
\end{proof}
\fi

Next, let $\bar{\Pi} \subset \Pi$ be the set of all policies that visit each state with positive probability. We then have that:

\begin{lemma}\label{lemma:m_injective}
$m_{\tfunc,\init,\gamma}$ is injective on $\bar{\Pi}$.
\end{lemma}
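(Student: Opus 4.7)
The plan is to show that, for any $\pi \in \bar{\Pi}$, the policy $\pi$ can be uniquely reconstructed from its occupancy measure $\eta^\pi$; injectivity of $m_{\tfunc,\init,\gamma}$ on $\bar{\Pi}$ follows immediately.

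First I would introduce the state-action and state occupancy measures by marginalising $\eta^\pi$:
\begin{align*}
\eta^\pi(s,a) &= \sum_{s' \in \States} \eta^\pi(s,a,s') = \sum_{t=0}^\infty \gamma^t \,\mathbb{P}_{\xi \sim \pi}(S_t = s, A_t = a),\\
\eta^\pi(s) &= \sum_{a \in \Actions} \eta^\pi(s,a) = \sum_{t=0}^\infty \gamma^t \,\mathbb{P}_{\xi \sim \pi}(S_t = s).
\end{align*}
Both of these are determined by $\eta^\pi$ alone.

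Next, using the fact that $\pi$ is stationary, I would observe that $\mathbb{P}_{\xi \sim \pi}(S_t = s, A_t = a) = \mathbb{P}_{\xi \sim \pi}(S_t = s)\,\pi(a \mid s)$ for every $t$, and therefore
\[
\eta^\pi(s,a) = \pi(a \mid s)\,\eta^\pi(s).
\]
Because $\pi \in \bar{\Pi}$, every state $s$ is visited with positive probability, so $\eta^\pi(s) > 0$ for every $s \in \States$ (recall also that every state is assumed reachable under $\tfunc$ and $\init$). Hence we may divide to obtain the inversion formula
\[
\pi(a \mid s) = \frac{\eta^\pi(s,a)}{\eta^\pi(s)}.
\]

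Finally, suppose $\pi_1, \pi_2 \in \bar{\Pi}$ with $m_{\tfunc,\init,\gamma}(\pi_1) = m_{\tfunc,\init,\gamma}(\pi_2)$. Then the marginalised quantities $\eta^{\pi_1}(s,a)$ and $\eta^{\pi_1}(s)$ coincide with $\eta^{\pi_2}(s,a)$ and $\eta^{\pi_2}(s)$ respectively. Since both denominators are strictly positive, applying the inversion formula to each side yields $\pi_1(a \mid s) = \pi_2(a \mid s)$ for all $s, a$, so $\pi_1 = \pi_2$. No real obstacle is expected here; the only subtle point is justifying that $\eta^\pi(s) > 0$ on $\bar{\Pi}$, which follows directly from the definition of $\bar{\Pi}$ together with the standing reachability assumption on $(\tfunc, \init)$.
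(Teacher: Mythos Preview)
Your proof is correct and essentially identical to the paper's: both marginalise the occupancy measure to a state measure $\eta^\pi(s)$ (the paper calls it $w_\pi(s)$) and a state-action measure, establish the factorisation $\eta^\pi(s,a)=\pi(a\mid s)\,\eta^\pi(s)$, and then invert using $\eta^\pi(s)>0$ on $\bar{\Pi}$.
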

\ifshowproofs
\begin{proof}
Suppose $m_{\tfunc,\init,\gamma}(\pi) = m_{\tfunc,\init,\gamma}(\pi')$ for some $\pi,\pi' \in \bar{\Pi}$. Next, given $\tfunc,\init$, define $w_\pi$ as
$$
w_\pi(s) = \sum_{t=0}^\infty \gamma^t \mathbb{P}_{\xi \sim \pi}(S_t = s).
$$
Note that if $m_{\tfunc,\init,\gamma}(\pi) = m_{\tfunc,\init,\gamma}(\pi')$ then $w_\pi = w_{\pi'}$, and moreover that
$$
\sum_{s' \in \States} m_{\tfunc,\init,\gamma}(\pi)[s,a,s'] = w_\pi(s)\pi(a \mid s).
$$
This means that if $w_{\pi}(s) \neq 0$ for all $s$, which is the case for all $\pi \in \bar{\Pi}$, then we can express $\pi$ as
$$
\pi(a \mid s) = \frac{\sum_{s' \in \States} m_{\tfunc,\init,\gamma}(\pi)[s,a,s']}{w_\pi(s)}.
$$
This means that if $m_{\tfunc,\init,\gamma}(\pi) = m_{\tfunc,\init,\gamma}(\pi')$ for some $\pi,\pi' \in \bar{\Pi}$ then $\pi = \pi'$.
\end{proof}
\fi

Note that $m_{\tfunc,\init,\gamma}$ is \emph{not} injective on $\Pi$; if there is some state $s$ that $\pi$ reaches with probability $0$, then we can alter the behaviour of $\pi$ at $s$ without changing $m_{\tfunc,\init,\gamma}(\pi)$. 
Note also that Lemma~\ref{lemma:m_injective} holds for all $\tfunc$ and $\init$, assuming that all states are reachable (which we assume throughout the paper). We will also need the following lemma:


\begin{lemma}\label{lemma:image_dimension}
$\mathrm{Im}(m_{\tfunc,\init,\gamma})$ is located in an affine space with no more than $|\States|(|\Actions|-1)$ dimensions.
\end{lemma}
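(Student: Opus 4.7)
The plan is to factor the map $m = m_{\tfunc,\init,\gamma}$ through the state-action occupancy measure and then use Bellman flow constraints to cut down the dimension. First I would define $p : \Pi \to \mathbb{R}^{|\States||\Actions|}$ by $p(\pi)[s,a] = \sum_{t=0}^{\infty} \gamma^t \mathbb{P}_{\xi \sim \pi}(S_t = s, A_t = a)$. Since a transition $(s,a,s')$ occurs at time $t$ precisely when we are in $(s,a)$ at time $t$ and sample $s'$ from $\tfunc(s,a)$, the Markov property yields
$$m(\pi)[s,a,s'] \;=\; \tfunc(s' \mid s,a) \cdot p(\pi)[s,a].$$
Hence $m = L \circ p$, where $L : \mathbb{R}^{|\States||\Actions|} \to \mathbb{R}^{|\States||\Actions||\States|}$ is the \emph{linear} map $L(\mu)[s,a,s'] = \tfunc(s'|s,a)\,\mu(s,a)$. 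It therefore suffices to show that $\mathrm{Im}(p)$ is contained in an affine subspace of $\mathbb{R}^{|\States||\Actions|}$ of dimension at most $|\States|(|\Actions|-1)$, since the linear image of such a set is an affine set of no greater dimension.

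Next I would verify the $|\States|$ \emph{Bellman flow} equations for $p(\pi)$:
$$\sum_{a} p(\pi)[s,a] \;-\; \gamma \sum_{s',a} \tfunc(s \mid s',a)\, p(\pi)[s',a] \;=\; \init(s) \quad \text{for each } s \in \States.$$
These are obtained by splitting $\sum_a p(\pi)[s,a] = \init(s) + \sum_{t=1}^{\infty} \gamma^t \mathbb{P}(S_t = s)$ and re-indexing using $\mathbb{P}(S_t = s) = \sum_{s',a} \tfunc(s|s',a)\,\mathbb{P}(S_{t-1}=s', A_{t-1}=a)$. This gives $|\States|$ affine equations that every $p(\pi)$ satisfies.

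The main obstacle is to show these $|\States|$ affine constraints are linearly independent, so that they really do carve out a codimension-$|\States|$ affine subspace. Encoding the constraints as $(E - \gamma P)\mu = \init$, where $E[s,(s',a)] = \mathbb{1}[s=s']$ and $P[s,(s',a)] = \tfunc(s|s',a)$, I would suppose $c^\top(E - \gamma P) = 0$. The entry at column $(s',a)$ reads $c_{s'} - \gamma \sum_s c_s \tfunc(s|s',a) = 0$, i.e.\ $c_{s'} = \gamma\, \mathbb{E}_{S \sim \tfunc(s',a)}[c_S]$ for every $(s',a)$. Taking absolute values and maximising over $s'$ yields $\|c\|_\infty \leq \gamma \|c\|_\infty$; since $\gamma \in (0,1)$, this forces $c = 0$. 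Hence the rows of $E - \gamma P$ are linearly independent, so $\mathrm{Im}(p)$ lies in an affine subspace of $\mathbb{R}^{|\States||\Actions|}$ of dimension $|\States||\Actions| - |\States| = |\States|(|\Actions|-1)$. Applying $L$ then places $\mathrm{Im}(m)$ inside an affine subspace of dimension at most $|\States|(|\Actions|-1)$, as required.
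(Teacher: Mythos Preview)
Your proof is correct and takes a genuinely different route from the paper's. The paper proves the bound by a \emph{duality} argument: since $J(\pi) = m(\pi)\cdot R$, any reward transformation that leaves $J$ unchanged for every policy lies in the annihilator of $\mathrm{Im}(m)$. The paper then invokes the reward-transformation machinery developed elsewhere in the article---potential shaping with $\mathbb{E}_{S_0\sim\init}[\Phi(S_0)]=0$ (contributing $|\States|-1$ dimensions) and $S'$-redistribution (contributing $|\States||\Actions|(|\States|-1)$ dimensions)---to exhibit a $(|\States||\Actions|(|\States|-1)+|\States|-1)$-dimensional space of rewards with identical $J$, which caps the number of linearly independent vectors in $\mathrm{Im}(m)$ at $|\States|(|\Actions|-1)+1$. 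A final observation that $\sum_{s,a,s'} m(\pi)[s,a,s'] = 1/(1-\gamma)$ for all $\pi$ (so the origin is not in the affine hull) shaves off the last dimension.

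Your argument is more direct and more elementary: factor through the state--action occupancy measure, write down the $|\States|$ Bellman flow constraints explicitly, and prove their linear independence via a clean $\gamma$-contraction argument on $\|c\|_\infty$. This is closer to the standard LP treatment of discounted MDPs and is entirely self-contained---it needs nothing about potential shaping or $S'$-redistribution. The trade-off is that the paper's route reuses exactly the reward-transformation lemmas that drive the rest of the article (Proposition~\ref{prop:PS_SR_linear}, Proposition~\ref{prop:change_from_potentials}) and immediately sets up the converse direction needed for Lemma~\ref{lemma:ambiguity_of_J}, so it integrates more tightly with the surrounding narrative even though it is less transparent in isolation.
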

\ifshowproofs
\begin{proof}
We wish to establish an \emph{upper} bound on the number of linearly independent vectors in $\mathrm{Im}(m_{\tfunc,\init,\gamma})$.
We can do this by establishing a \emph{lower} bound on the size of the space of all reward functions that share the same policy evaluation function, $\Evaluation$.
To see this, consider the fact that $\Evaluation(\pi) = m_{\tfunc,\init,\gamma}(\pi)\cdot R$, and note that $R$ is an $|\States||\Actions||\States|$-dimensional vector.
Let $R_1$ be a reward function, and let $X$ be the space of all reward functions $R_2$ such that $R_1 \cdot \eta = R_2 \cdot \eta$ for all $\eta \in \mathrm{Im}(m_{\tfunc,\init,\gamma})$.
It is then a straightforward consequence of linear algebra that if $\mathrm{Im}(m_{\tfunc,\init,\gamma})$ contains $n$ linearly independent vectors, then $X$ forms an affine space with $|\States||\Actions||\States| - n$ dimensions. We can thus obtain an upper bound on the number of linearly independent vectors in $\mathrm{Im}(m_{\tfunc,\init,\gamma})$ from a lower bound on the dimensionality of $X$.

Next, recall that if $R_1$ and $R_2$ differ by potential shaping with $\Phi$, and $\mathbb{E}_{S_0 \sim \init}\left[\Phi(S_0)\right] = 0$, then $\Evaluation_1(\pi) = \Evaluation_2(\pi)$ for all $\pi$ (Proposition~\ref{prop:change_from_potentials}). 
Also recall that if $R_1$ and $R_2$ differ by $S'$-redistribution, then $\Evaluation_1(\pi) = \Evaluation_2(\pi)$ for all $\pi$.
This means that for any $R_1$, we have that $X$ contains all reward functions $R_2$ that differ from $R_1$ by $S'$-redistribution and potential shaping with a potential function $\Phi$ such that $\mathbb{E}_{S_0 \sim \init}\left[\Phi(S_0)\right] = 0$. The space of all such reward vectors is an affine space with $|\States||\Actions|(|\States|-1) + |\States| - 1$ dimensions (Proposition~\ref{prop:PS_SR_linear}). This means that $\mathrm{Im}(m_{\tfunc,\init,\gamma})$ contains at most $|\States|(|\Actions|-1) + 1$ linearly independent vectors.

Next, note that there is no $\pi$ such that $m_{\tfunc,\init,\gamma}(\pi)$ is the zero vector. In fact, $\sum m_{\tfunc,\init,\gamma}(\pi) = 1/(1-\gamma)$ for all $\pi$. This means that the smallest affine space which contains $\mathrm{Im}(m_{\tfunc,\init,\gamma})$ does not contain the origin. Therefore, $\mathrm{Im}(m_{\tfunc,\init,\gamma})$ is located in an affine space with no more than $|\States|(|\Actions|-1)$ dimensions.
\end{proof}
\fi

For the next lemma, let $\Pi^+ \subset \Pi$ be the set of all policies that take all actions with positive probability in each state. Note that $\Pi^+ \subset \bar{\Pi}$ (i.e., a policy that takes every action with positive probability in each state visits every state with positive probability), since we assume that all states are reachable under $\init$ and $\tfunc$. We then have that:

\begin{lemma}\label{lemma:homeomorphism}
$\mathrm{Im}(m_{\tfunc,\init,\gamma})$ is located in an affine space with $|\States|(|\Actions|-1)$ dimensions, in which $m_{\tfunc,\init,\gamma}(\Pi^+)$ is an open set, and $m_{\tfunc,\init,\gamma}$ is a homeomorphism between $\Pi^+$ and $m_{\tfunc,\init,\gamma}(\Pi^+)$.
\end{lemma}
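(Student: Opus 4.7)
The plan is to combine the three preceding lemmas (continuity, injectivity on $\bar{\Pi}$, and the upper bound on affine dimension) with Brouwer's invariance of domain. The overall idea is that $m_{\tfunc,\init,\gamma}$ restricted to $\Pi^+$ is a continuous injection between spaces of the same dimension, and invariance of domain then gives both the exact affine dimension and the homeomorphism onto an open subset.

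First I would observe that $\Pi^+$, being the product over $s \in \States$ of open $(|\Actions|-1)$-simplices, is homeomorphic to an open subset $U$ of $\mathbb{R}^{|\States|(|\Actions|-1)}$; fix such a homeomorphism $\iota : U \to \Pi^+$. By Lemma~\ref{lemma:m_continuous}, the composition $m_{\tfunc,\init,\gamma} \circ \iota$ is continuous, and by Lemma~\ref{lemma:m_injective} (using $\Pi^+ \subseteq \bar{\Pi}$) it is injective. By Lemma~\ref{lemma:image_dimension}, the image lands in an affine subspace $A \subseteq \mathbb{R}^{|\States||\Actions||\States|}$ of some dimension $d \leq |\States|(|\Actions|-1)$; let $\phi : A \to \mathbb{R}^d$ be an affine homeomorphism.

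Next I would establish the reverse dimension bound $d \geq |\States|(|\Actions|-1)$ by a topological argument. Suppose for contradiction that $d < |\States|(|\Actions|-1)$. Extend $\phi$ to an affine embedding $\tilde{\phi} : A \hookrightarrow \mathbb{R}^{|\States|(|\Actions|-1)}$ whose image is the coordinate subspace $\mathbb{R}^d \times \{0\}$. Then $\tilde{\phi} \circ m_{\tfunc,\init,\gamma} \circ \iota : U \to \mathbb{R}^{|\States|(|\Actions|-1)}$ is a continuous injection from an open subset of $\mathbb{R}^{|\States|(|\Actions|-1)}$ into $\mathbb{R}^{|\States|(|\Actions|-1)}$. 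By Brouwer's invariance of domain, its image must be open in $\mathbb{R}^{|\States|(|\Actions|-1)}$; but this image is contained in $\mathbb{R}^d \times \{0\}$, which has empty interior, a contradiction. Hence $d = |\States|(|\Actions|-1)$.

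Finally, with dimensions matched, I would apply invariance of domain directly to $\phi \circ m_{\tfunc,\init,\gamma} \circ \iota : U \to \mathbb{R}^{|\States|(|\Actions|-1)}$: this map is a continuous injection between open subsets of Euclidean spaces of equal dimension, so its image is open in $\mathbb{R}^{|\States|(|\Actions|-1)}$ and the map is a homeomorphism onto that image. Pulling back along $\phi^{-1}$ and $\iota^{-1}$ yields that $m_{\tfunc,\init,\gamma}(\Pi^+)$ is open in $A$ and that $m_{\tfunc,\init,\gamma} : \Pi^+ \to m_{\tfunc,\init,\gamma}(\Pi^+)$ is a homeomorphism, completing the lemma.

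The main obstacle is the dimension-matching step: we only get an inequality from Lemma~\ref{lemma:image_dimension}, and the matching lower bound requires the topological argument above rather than an explicit algebraic computation. Once dimensions match, the homeomorphism and openness conclusions follow almost mechanically from invariance of domain, and the verification that $\Pi^+$ really is an $|\States|(|\Actions|-1)$-dimensional open manifold is routine.
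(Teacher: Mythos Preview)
Your proposal is correct and follows essentially the same route as the paper: both embed $\Pi^+$ as an open subset of $\mathbb{R}^{|\States|(|\Actions|-1)}$, use the three preceding lemmas to obtain a continuous injection into an affine space of dimension at most $|\States|(|\Actions|-1)$, and then invoke Brouwer's invariance of domain. The only difference is that you make the dimension-matching step explicit via a separate contradiction argument, whereas the paper applies invariance of domain directly to the map into (an embedding of the affine space in) $\mathbb{R}^{|\States|(|\Actions|-1)}$ and lets openness of the image force $d = |\States|(|\Actions|-1)$ implicitly; your version is a bit more carefully spelled out but not substantively different.
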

\ifshowproofs
\begin{proof}
By the Invariance of Domain theorem, if 
\begin{enumerate}
    \item $U$ is an open subset of $\mathbb{R}^n$, and
    \item $f : U \rightarrow \mathbb{R}^n$ is an injective continuous map,
\end{enumerate}
then $f(U)$ is open in $\mathbb{R}^n$, and $f$ is a homeomorphism between $U$ and $f(U)$. We will show that $m$ and $\Pi^+$ satisfy the requirements of this theorem.

We begin by noting that $\Pi$ can be represented as a set of points in $\mathbb{R}^{|\States|(|\Actions|-1)}$.
We do this by considering 
each policy $\pi$ as a vector $\vec{\pi}$ of length $|\States||\Actions|$, where $\vec{\pi}[s,a] = \pi(a \mid s)$. 
Moreover, since $\sum_{a \in A} \pi(a \mid s) = 1$ for all $s$, 
we can remove $|\Actions|$ dimensions, and embed $\Pi$ in $\mathbb{R}^{|\States|(|\Actions|-1)}$.

$\Pi^+$ is an open set in $\mathbb{R}^{|\States|(|\Actions|-1)}$. By Lemma~\ref{lemma:image_dimension}, we have that $m_{\tfunc,\init,\gamma}$ is a mapping 
from $\Pi^+$ to an affine space with no more than $|\States|(|\Actions|-1)$ dimensions.
By Lemma~\ref{lemma:m_injective}, we have that $m_{\tfunc,\init,\gamma}$ is injective on $\Pi^+$. Finally, by Lemma~\ref{lemma:m_continuous}, we have that $m_{\tfunc,\init,\gamma}$ is continuous. We can therefore apply the Invariance of Domain theorem, and conclude that $m_{\tfunc,\init,\gamma}(\Pi^+)$ is open in this $|\States|(|\Actions|-1)$-dimensional affine space, and that $m_{\tfunc,\init,\gamma}$ is a homeomorphism between $\Pi^+$ and $m_{\tfunc,\init,\gamma}(\Pi^+)$.
\end{proof}
\fi

Note that lemma~\ref{lemma:homeomorphism} holds for all $\tfunc$ and $\init$ (for which all states are reachable). Using these results, we can now state necessary and sufficient conditions that describe when $\Evaluation_1(\pi) = \Evaluation_2(\pi)$ for all policies $\pi$:

\begin{lemma}\label{lemma:ambiguity_of_J}
$\Evaluation_1 = \Evaluation_2$ if and only if $R_1$ and $R_2$ differ by $S'$-redistribution and potential shaping with a potential $\Phi$ such that $\Expect{S_0 \sim \init}{\Phi(S_0)} = 0$.
\end{lemma}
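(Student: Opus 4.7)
The plan is to prove both directions by exploiting the decomposition $J(\pi) = \eta^\pi \cdot R = m_{\tfunc, \init, \gamma}(\pi) \cdot R$, so that $J_1 = J_2$ is equivalent to $(R_1 - R_2) \cdot \eta = 0$ for every $\eta \in \mathrm{Im}(m_{\tfunc, \init, \gamma})$. The forward direction is a short direct calculation; the reverse direction is a dimension-counting argument combining Lemmas~\ref{lemma:image_dimension} and \ref{lemma:homeomorphism} with Proposition~\ref{prop:PS_SR_linear}.

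For the ``if'' direction, suppose $R_2 = R_1 + R_S + R_Z$, where $R_S$ arises from potential shaping with a potential $\Phi$ satisfying $\mathbb{E}_{S_0 \sim \init}[\Phi(S_0)] = 0$, and $R_Z$ arises from $S'$-redistribution. Proposition~\ref{prop:change_from_potentials}(5) shows that the $R_S$ contribution shifts $J$ by $-\mathbb{E}_{S_0 \sim \init}[\Phi(S_0)] = 0$. For the $R_Z$ contribution, factor $\eta^\pi[s,a,s'] = w_\pi(s,a) \cdot \tfunc(s' \mid s, a)$ to write
\begin{equation*}
\eta^\pi \cdot R_Z = \sum_{s,a} w_\pi(s,a) \sum_{s'} \tfunc(s' \mid s, a) R_Z(s,a,s') = 0,
\end{equation*}
using the defining property of $\SR$. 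Hence $J_1 = J_2$.

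For the ``only if'' direction, let $W = \{R' : R' \cdot \eta = 0 \text{ for all } \eta \in \mathrm{Im}(m_{\tfunc, \init, \gamma})\}$; we need $R_1 - R_2 \in W$ to imply the stated decomposition. Let $V$ be the linear subspace parallel to the affine hull of $\mathrm{Im}(m_{\tfunc, \init, \gamma})$. By Lemma~\ref{lemma:homeomorphism}, the image $m_{\tfunc, \init, \gamma}(\Pi^+)$ is an open subset of a $|\States|(|\Actions|-1)$-dimensional affine space, so $\dim V = |\States|(|\Actions|-1)$. Moreover, $\mathrm{Im}(m_{\tfunc, \init, \gamma})$ does not contain the origin (since $\sum \eta^\pi = 1/(1-\gamma)$ for every $\pi$), so requiring a linear functional to vanish on this affine hull imposes $\dim V + 1$ independent constraints on $\R$, giving
\begin{equation*}
\dim W = |\States||\Actions||\States| - |\States|(|\Actions|-1) - 1 = |\States||\Actions|(|\States|-1) + |\States| - 1.
\end{equation*}
On the other hand, by Proposition~\ref{prop:PS_SR_linear}, the subspace $S$ of potential-shaping perturbations has dimension $|\States|$ and the subspace $Z$ of $S'$-redistribution perturbations has dimension $|\States||\Actions|(|\States|-1)$, with $S \cap Z = \{R_0\}$; hence $S + Z$ has dimension $|\States||\Actions|(|\States|-1) + |\States|$. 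Imposing the single additional linear constraint $\mathbb{E}_{S_0 \sim \init}[\Phi(S_0)] = 0$ (which is non-trivial on $S$ because one can choose $\Phi$ concentrated on any state in $\mathrm{supp}(\init)$) cuts the dimension by exactly one, yielding the same value $|\States||\Actions|(|\States|-1) + |\States| - 1$. The ``if'' direction already shows that this constrained subspace is contained in $W$, so by matching dimensions they coincide, and the decomposition follows.

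The main obstacle is justifying the dimension computation cleanly: verifying that $V$ really attains its upper-bound dimension $|\States|(|\Actions|-1)$ (this is precisely what Lemma~\ref{lemma:homeomorphism} gives), that the affine hull of $\mathrm{Im}(m_{\tfunc, \init, \gamma})$ misses the origin so that the ``$+1$'' in the codimension is correctly counted, and that the constraint $\mathbb{E}_{S_0 \sim \init}[\Phi(S_0)] = 0$ genuinely cuts the dimension of $S + Z$ rather than being automatically satisfied or redundant.
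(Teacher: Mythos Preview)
Your proof is correct and follows essentially the same strategy as the paper's own argument: establish the ``if'' direction by direct computation using Proposition~\ref{prop:change_from_potentials}, and then for the ``only if'' direction count dimensions, using Lemma~\ref{lemma:homeomorphism} to show that $\mathrm{span}(\mathrm{Im}(m_{\tfunc,\init,\gamma}))$ has dimension $|\States|(|\Actions|-1)+1$, and Proposition~\ref{prop:PS_SR_linear} to show that the subspace of admissible perturbations has the matching complementary dimension $|\States||\Actions|(|\States|-1)+|\States|-1$. The only cosmetic difference is that the paper phrases the argument in terms of the affine space of reward functions $R_2$ agreeing with a fixed $R_1$ on all occupancy measures, whereas you work with the linear annihilator $W$ of differences $R_1-R_2$; these are equivalent formulations.
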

\ifshowproofs
\begin{proof}
For the first direction, suppose $R_1$ and $R_2$ differ by $S'$-redistribution and potential shaping with a potential $\Phi$ such that $\Expect{S_0 \sim \init}{\Phi(S_0)} = 0$. Then $V_2^\pi(s) = V_1^\pi(s) - \Phi(s)$, as per Proposition~\ref{prop:change_from_potentials}. Hence $\Evaluation_1(\pi) = \Evaluation_2(\pi) - \mathbb{E}_{s_0 \sim \init}[\Phi(s_0)] = \Evaluation_2(\pi)$, and so we have proven the first direction.

For the other direction, first recall that $\Evaluation(\pi) = m_{\tfunc,\init,\gamma}(\pi) \cdot R$.
Next, Lemma~\ref{lemma:homeomorphism} implies that $\mathrm{Im}(m_{\tfunc,\init,\gamma})$ contains $|S|(|A|-1) + 1$ linearly independent vectors. 
It is then a straightforward fact of linear algebra that, for any reward function $R_1$, the space $X$ of all reward functions $R_2$ such that $R_1 \cdot \eta = R_2 \cdot \eta$ for all $\eta \in \mathrm{Im}(m_{\tfunc,\init,\gamma})$, forms an affine space with $|\States||\Actions||\States| - (|S|(|A|-1) + 1) = |\States||\Actions|(|\States|-1) + |\States|-1$ dimensions. 

We have already shown that $\Evaluation_1(\pi) = \Evaluation_2(\pi)$ for all policies $\pi$ if $R_1$ and $R_2$ differ by $S'$-redistribution and potential shaping with a potential $\Phi$ such that $\Expect{S_0 \sim \init}{\Phi(S_0)} = 0$. Next, given $R_1$, the space of all reward functions $R_2$ such that $R_1$ and $R_2$ differ by $S'$-redistribution and potential shaping with a potential $\Phi$ such that $\Expect{S_0 \sim \init}{\Phi(S_0)} = 0$, forms an affine space with $|\States||\Actions|(|\States|-1) + |\States|-1$ dimensions. Since this space is contained in $X$, and since they have the same number of dimensions, they must be one and the same. Therefore, if $\Evaluation_1 = \Evaluation_2$, then it must be the case that $R_1$ and $R_2$ differ by $S'$-redistribution and potential shaping with a potential $\Phi$ such that $\Expect{S_0 \sim \init}{\Phi(S_0)} = 0$. We have thus proven the other direction, which completes the proof.
\end{proof}
\fi


\policyordering*

\begin{proof}
The first direction is straightforward. First, if $R_1 = t(R_2)$ for some $t \in \SR$ then $\Evaluation_1 = \Evaluation_2$. Next, if $R_1 = t(R_2)$ for some $t \in \PS$ then $\Evaluation_1 = \Evaluation_2 - \mathbb{E}_{S_0 \sim \init}[\Phi_t(S_0)]$ (Proposition~\ref{prop:change_from_potentials}). Finally, if $R_1 = t(R_2)$ for some $t \in \LS$ then $\Evaluation_1 = c \cdot \Evaluation_2$ for some $c \in \mathbb{R}^+$. Hence if $R_1 = t(R_2)$ for some $t \in \SR \bigodot \PS \bigodot \LS$, then $\Evaluation_1 = a \cdot \Evaluation_2 + b$ for some $a \in \mathbb{R}^+, b \in \mathbb{R}$. This means that $\Evaluation_1$ and $\Evaluation_2$ differ by a strictly monotonic transformation, and so $R_1$ and $R_2$ have the same ordering of policies.

For the other direction, first note that $R_1$ and $R_2$ have the same ordering of policies only if $\Evaluation_1$ is a monotonic transformation of $\Evaluation_2$. Moreover, since $\Evaluation(\pi) = m_{\tfunc,\init,\gamma}(\pi) \cdot R$, we have that all possible monotonic transformations of $\Evaluation$ are affine. Hence $R_1$ and $R_2$ have the same ordering of policies only if $\Evaluation_1 = a \cdot \Evaluation_2 + b$ for some $a \in \mathbb{R}^+, b \in \mathbb{R}$.

Now suppose $\Evaluation_1 = a \cdot \Evaluation_2 + b$ for some $a \in \mathbb{R}^+, b \in \mathbb{R}$. Consider the reward function $R_3$ given by first scaling $R_2$ by $a$, and then shaping the resulting reward with the potential function $\Phi$ that is equal to $-b$ for all initial states, and equal to $0$ elsewhere. Now $\Evaluation_3 = \Evaluation_1$, so (by Lemma~\ref{lemma:ambiguity_of_J}) we can produce $R_1$ from $R_3$ by $S'$-redistribution and potential shaping with some potential function $\Phi$ such that $\Expect{S_0 \sim \init}{\Phi(S_0)} = 0$. By composing these transformations with the transformation that produced $R_3$ from $R_2$, we obtain a $t \in \SR \bigodot \PS \bigodot \LS$ such that $R_1 = t(R_2)$. Hence if $R_1$ and $R_2$ have the same ordering of policies then $R_1 = t(R_2)$ for some $t \in \SR \bigodot \PS \bigodot \LS$. We have thus proven both directions.
\end{proof}

\OPTambiguity*

\begin{proof}
Suppose $R_1$ and $R_2$ differ by an optimality-preserving transformation.
Let $\Psi$ be the corresponding value-bounding function, that is, a
function $\Psi : \States \to \mathbb{R}$ satisfying, for all $s \in \States$ and
$a\in\Actions$,
\begin{equation*}
    \Expect{S' \sim \tfunc(s,a)}{R_2(s, a, S') + \gamma \cdot \Psi(S')}
    \leq
    \Psi(s)
\,,
\end{equation*}
with equality if and only if $a \in \mathrm{argmax} A_1^\star(s,\_)$. This gives us that
\begin{equation*}
    \Psi(s)
    =
    \max_{a\in\Actions}
    \left(
        \Expect
            {S' \sim \tfunc(s,a)}
            {R_2(s, a, S') + \gamma \cdot \Psi(S')}    
    \right)
\,.
\end{equation*}
This recursive condition on $\Psi$ is the Bellman optimality equation for the
unique optimal value function $V^\star_2$ for $R_2$.
Therefore, $\Psi(s) = V^\star_2(s)$ for all $s \in \States$, and we can
rewrite the above as
\begin{equation*}
    \Expect{S' \sim \tfunc(s,a)}{R_2(s, a, S') + \gamma \cdot V^\star_2(S')}
    \leq
    V^\star_2(s)
\,,
\end{equation*}
with equality if and only if $a \in \mathrm{argmax} A_1^\star(s,\_)$. This means that the actions which are optimal under $R_1$ are optimal under $R_2$, and vice versa, which in turn means that $R_1 \eq{\OPT} R_2$.

Conversely, let $R_1$ and $R_2$ be any rewards such that $R_1 \eq{\OPT} R_2$.
This means that $R_1$ and $R_2$ share the same optimal actions.
Let $V^\star_2$ and $A^\star_2$ denote the optimal value and advantage
functions for $R_2$.
The Bellman optimality equation for $R_2$ ensures that, for $s\in\States$, 
\begin{equation*}
    V^\star_2(s)
    =
    \max_{a\in\Actions}
    \left(
        \Expect
            {S' \sim \tfunc(s,a)}
            {R_2(s, a, S') + \gamma \cdot V^\star_2(S')}
    \right)
\end{equation*}
with the maximum attained precisely when
$a \in \mathrm{argmax}_{a\in\Actions}(A^\star_2(s, a))$. We thus have
\begin{equation*}
    \Expect
        {S' \sim \tfunc(s,a)}
        {R_2(s, a, S') + \gamma \cdot V^\star_2(S')}
    \leq
    \V^\star_2(s)
\end{equation*}
for all $s\in\States$ and $a\in\Actions$, with equality if and only if $a \in \mathrm{argmax}_{a\in\Actions}(A^\star_2(s, a))$. Note also that $\mathrm{argmax}_{a\in\Actions}(A^\star_2(s, a)) = \mathrm{argmax}_{a\in\Actions}(A^\star_1(s, a))$.
This means that $R_2$ is produced from $R_1$ by an optimality-preserving transformation (with $\Psi(s) = V^\star_2(s)$), which completes the proof.
\end{proof}

\subsubsection{STARC Metrics}

\soundandcompletemeansdistancezeroiffsameorder*

\begin{proof}
For the first direction, assume that $R_1$ and $R_2$ have the same ordering of policies. If both $R_1$ and $R_2$ are trivial, then the definition of completeness directly implies that $d(R_1, R_2) = 0$. Next, assume that $R_1$ and $R_2$ are not trivial, and assume for contradiction that $d(R_1, R_2) > 0$. Since $d$ is complete, there exists two policies $\pi_1$ and $\pi_2$ such that $J_2(\pi_2) \geq J_2(\pi_1)$ and
$$
J_1(\pi_1) - J_1(\pi_2) \geq L \cdot (\max_\pi J_1(\pi) - \min_\pi J_1(\pi)) \cdot d(R_1, R_2)
$$
for some $L > 0$. Since $R_1$ is non-trivial, we have that $(\max_\pi J_1(\pi) - \min_\pi J_1(\pi) > 0$, which means that the right-hand side of this expression is positive. This implies that there are policies $\pi_1$ and $\pi_2$ such that $J_2(\pi_2) \geq J_2(\pi_1)$ but $J_1(\pi_2) < J_1(\pi_1)$, which is a contradiction, since $R_1$ and $R_2$ have the same policy order. Thus, if $R_1$ and $R_2$ have the same policy order, then $d(R_1, R_2) = 0$.

For the other direction, assume that $d(R_1, R_2) = 0$. Since $d$ is sound, we have that there exists a positive constant $U$ such that if two policies $\pi_1$ and $\pi_2$ satisfy that $J_2(\pi_2) \geq J_2(\pi_1)$, then
$$
J_1(\pi_1) - J_1(\pi_2) \leq U \cdot (\max_\pi J_1(\pi) - \min_\pi J_1(\pi)) \cdot d(R_1, R_2).
$$
Since $d(R_1, R_2) = 0$, this means that $J_1(\pi_1) - J_1(\pi_2) \leq 0$, which means that $J_1(\pi_2) \geq J_1(\pi_1)$. Since $\pi_1$ and $\pi_2$ were chosen arbitrarily, this means that if $J_2(\pi_2) \geq J_2(\pi_1)$, then $J_1(\pi_2) \geq J_1(\pi_1)$. As such, $R_1$ and $R_2$ have the same policy order.
\end{proof}

\bilipschitz*

\begin{proof}
Assume that $d_1$ and $d_2$ are pseudometrics on $\R$ that are both sound and complete. Since $d_1$ is complete, we have that
$$
    L_1 \cdot d_1(R_1, R_2) \cdot (\max_\pi J_1(\pi) - \min_\pi J_1(\pi)) \leq \max_{\pi_1,\pi_2 : J_2(\pi_2) \geq J_2(\pi_1)} J_1(\pi_1) - J_1(\pi_2).
$$
Similarly, since $d_2$ is sound, we also have that
$$
\max_{\pi_1,\pi_2 : J_2(\pi_2) \geq J_2(\pi_1)} J_1(\pi_1) - J_1(\pi_2) \leq U_2 \cdot d_2(R_1, R_2) \cdot (\max_\pi J_1(\pi) - \min_\pi J_1(\pi)).
$$
This implies that
\begin{align*}
    &L_1 \cdot d_1(R_1, R_2) \cdot (\max_\pi J_1(\pi) - \min_\pi J_1(\pi))\\
    \leq &U_2 \cdot d_2(R_1, R_2) \cdot (\max_\pi J_1(\pi) - \min_\pi J_1(\pi)).
\end{align*}
First suppose that $(\max_\pi J_1(\pi) - \min_\pi J_1(\pi)) > 0$. We can then divide both sides, and obtain that
$$
    d_1(R_1, R_2) \leq \left(\frac{U_2}{L_1}\right) d_2(R_1, R_2).
$$
Similarly, we also have that 
$$
    \left(\frac{L_2}{U_1}\right) d_2(R_1, R_2) \leq d_1(R_1, R_2).
$$
This means that we have constants $\left(\frac{U_2}{L_1}\right)$ and $\left(\frac{L_2}{U_1}\right)$ not depending on $R_1$ or $R_2$, such that
$$
\left(\frac{L_2}{U_1}\right) d_2(R_1, R_2) \leq d_1(R_1, R_2) \leq \left(\frac{U_2}{L_1}\right) d_2(R_1, R_2)
$$
for all $R_1$ and $R_2$ such that $(\max_\pi J_1(\pi) - \min_\pi J_1(\pi)) > 0$.

Next, assume that we have that $(\max_\pi J_1(\pi) - \min_\pi J_1(\pi)) = 0$, but $(\max_\pi J_2(\pi) - \min_\pi J_2(\pi)) > 0$. Since $d_1$ and $d_2$ are pseudometrics, we have that $d_1(R_1, R_2) = d_1(R_2, R_1)$ and $d_2(R_1, R_2) = d_2(R_2, R_1)$. Therefore, $\left(\frac{L_2}{U_1}\right) d_2(R_1, R_2) \leq d_1(R_1, R_2) \leq \left(\frac{U_2}{L_1}\right) d_2(R_1, R_2)$ in this case as well, as already shown above.

Finally, assume that $(\max_\pi J_1(\pi) - \min_\pi J_1(\pi)) = 0$ and $(\max_\pi J_2(\pi) - \min_\pi J_2(\pi)) = 0$. In this case, $R_1$ and $R_2$ induce the same policy order. This in turn means that $d_1(R_1, R_2) = d_2(R_1, R_2) = 0$, and so 
$$
\left(\frac{L_2}{U_1}\right) d_2(R_1, R_2) \leq d_1(R_1, R_2) \leq \left(\frac{U_2}{L_1}\right) d_2(R_1, R_2)
$$
in this case as well. This completes the proof.
\end{proof}

\valuecanon*

\begin{proof}
To prove that $c$ is a canonicalisation function, we must show
\begin{enumerate}
    \item that $c$ is linear,
    \item that $c(R)$ and $R$ differ by potential shaping and $S'$-redistribution, and
    \item that $c(R_1) = c(R_2)$ if $R_1$ and $R_2$ differ by potential shaping and $S'$-redistribution.
\end{enumerate}
We first show that $c$ is linear. Given a state $s$, let $v_s$ be the $|\States||\Actions||\States|$-dimensional vector where the $(s',a,s'')$'th dimension is given by
$$
\sum_{i=0}^\infty \gamma^i \cdot \mathbb{P}(S_i = s', A_i = a, S_{i+1} = s''),
$$
where the probability is given for a trajectory that is generated from $\pi$ and $\tau$, starting in $s$. Now note that $V^\pi(s) = v_s \cdot R$, where $R$ is represented as a vector. Using these vectors $\{v_s\}$, it is possible to express $c$ as a linear transformation.


To see that $c(R)$ and $R$ differ by potential shaping and $S'$-redistribution, it is sufficient to note that $V^\pi$ acts as a potential function, and that setting $R_2(s,a,s') = \mathbb{E}_{S' \sim \tau(s,a)}[R_1(s,a,S')]$ is a form of $S'$-redistribution.

To see that $c(R_1) = c(R_2)$ if $R_1$ and $R_2$ differ by potential shaping and $S'$-redistribution, first note that if $R_1$ and $R_2$ differ by potential shaping, so that $R_2(s,a,s') = R_1(s,a,s') + \gamma \Phi(s') - \Phi(s)$ for some $\Phi$, then $V^\pi_2(s) = V^\pi_1(s) - \Phi(s)$ (Proposition~\ref{prop:change_from_potentials}). This means that 
\begin{align*}
    c(R_2)(s,a,s') = &\mathbb{E}[R_2(s,a,S') + \gamma \cdot V^\pi_2(S') - V^\pi_2(s)]\\
    = &\mathbb{E}[R_1(s,a,S') + \gamma \cdot \Phi(S') - \Phi(s)\\ 
    &+ \gamma \cdot (V^\pi_1(S') - \Phi(S')) - (V^\pi_1(s) - \Phi(s))]\\
    = &\mathbb{E}[R_1(s,a,S') + \gamma \cdot V^\pi_1(S') - V^\pi_1(s)]\\
    = &c(R_1)(s,a,s').
\end{align*}
To see that $c(R_1) = c(R_2)$ only if $R_1$ and $R_2$ differ by potential shaping and $S'$-redistribution, first note that we have already shown that $R$ and $c(R)$ differ by potential shaping and $S'$-redistribution for all $R$. This implies that $R_1$ and $c(R_1)$ differ by potential shaping and $S'$-redistribution, and likewise for $R_2$ and $c(R_2)$. Then if $c(R_1) = c(R_2)$, we can combine these transformations, and obtain that $R_1$ and $R_2$ also differ by potential shaping and $S'$-redistribution.
This completes the proof.
\end{proof}

\minimalcanonicalisation*

\begin{proof}
Let $R_0$ be the reward function that is $0$ for all transitions.
First recall that the set of all reward functions that differ from $R_0$ by potential shaping and $S'$-redistribution form a linear subspace of $\R$ (Proposition~\ref{prop:PS_SR_linear}).
Let this space be denoted by $\mathcal{Y}$, and let $\mathcal{X}$ denote the orthogonal complement of $\mathcal{Y}$ in $\R$. Now any reward function $R \in \R$ can be uniquely expressed in the form $R_\mathcal{X} + R_\mathcal{Y}$, where $R_\mathcal{X} \in \mathcal{X}$ and $R_\mathcal{Y} \in \mathcal{Y}$.
Consider the function $c : \R \to \R$ where $c(R) = R_\mathcal{X}$. Now this function is a canonicalisation function such that $n(c(R)) \leq R'$ for all $R'$ such that $c(R) = c(R')$, assuming that $n$ is a weighted $L_2$-norm.
To see this, we must show that
\begin{enumerate}
    \item $c$ is linear,
    \item $c(R)$ and $R$ differ by potential shaping and $S'$-redistribution for all $R$,
    \item $c(R_1) = c(R_2)$ for all $R_1$ and $R_2$ which differ by potential shaping and $S'$-redistribution, and
    \item $n(c(R)) \leq n(R')$ for all $R'$ such that $c(R) = c(R')$.
\end{enumerate}
It follows directly from the construction that $c$ is linear. 
To see that $c(R)$ and $R$ differ by potential shaping and $S'$-redistribution, simply note that $c(R) = R - R_\mathcal{Y}$, where $R_\mathcal{Y}$ is given by a combination of potential shaping and $S'$-redistribution of $R_0$. 
To see that $c(R_1) = c(R_2)$ if $R_1$ and $R_2$ differ by potential shaping and $S'$-redistribution, let $R_2 = R_1 + R'$, where $R'$ is given by potential shaping and $S'$-redistribution of $R_0$, and let $R_1 = R_\mathcal{X} + R_\mathcal{Y}$, where $R_\mathcal{X} \in \mathcal{X}$ and $R_\mathcal{Y} \in \mathcal{Y}$. Now $c(R_1) = R_\mathcal{X}$. 
Moreover, $R_2 = R_\mathcal{X} + R_\mathcal{Y} + R'$. We also have that $R' \in \mathcal{Y}$, which means that $R_2$ can be expressed as $R_\mathcal{X} + (R_\mathcal{Y} + R')$, where $R_\mathcal{X} \in \mathcal{X}$ and $(R_\mathcal{Y} + R') \in \mathcal{Y}$. This implies that $c(R_2) = R_\mathcal{X}$, so if $R_1$ and $R_2$ differ by potential shaping and $S'$-redistribution, then $c(R_1) = c(R_2)$. To see that $c(R_1) = c(R_2)$ only if $R_1$ and $R_2$ differ by potential shaping and $S'$-redistribution, first note that we have already shown that $R$ and $c(R)$ differ by potential shaping and $S'$-redistribution for all $R$. This implies that $R_1$ and $c(R_1)$ differ by potential shaping and $S'$-redistribution, and likewise for $R_2$ and $c(R_2)$. Then if $c(R_1) = c(R_2)$, we can combine these transformations, and obtain that $R_1$ and $R_2$ also differ by potential shaping and $S'$-redistribution.

To see that $n(c(R)) \leq n(R')$ for all $R'$ such that $c(R) = c(R')$, first note that if $c(R) = c(R')$, then $R = R_\mathcal{X} + R_\mathcal{Y}$ and $R' = R_\mathcal{X} + R_\mathcal{Y}'$, where $R_\mathcal{X} \in \mathcal{X}$ and $R_\mathcal{Y}, R_\mathcal{Y}' \in \mathcal{Y}$. This means that $n(c(R)) = n(R_\mathcal{X})$, and $n(R') = n(R_\mathcal{X} + R_\mathcal{Y}')$. Moreover, since $n$ is a weighted $L_2$-norm, and since $R_\mathcal{X}$ and $R_\mathcal{Y}'$ are orthogonal, we have that $n(R_\mathcal{X} + R_\mathcal{Y}) = \sqrt{n(R_\mathcal{X})^2 + n(R_\mathcal{Y})^2} \geq n(R_\mathcal{X})$. This means that $n(c(R)) \leq n(R')$.

To see that this canonicalisation function is the unique minimal canonicalisation function for any weighted $L_2$-norm $n$, consider an arbitrary reward function $R$. Now, the set of all reward functions that differ from $R$ by potential shaping and $S'$-redistribution forms an affine space of $\R$, and a minimal canonicalisation function must map $R$ to a point $R'$ in this space such that $n(R') \leq n(R'')$ for all other points $R''$ in that space. If $n$ is a weighted $L_2$-norm, then this specifies a convex optimisation problem with a unique solution.
\end{proof}

\occupancymeasureprojectioniscanonicalisation*

\begin{proof}
To show that $c$ is a canonicalisation function, we must show that 
\begin{enumerate}
    \item $c$ is linear,
    \item $c(R)$ and $R$ differ by potential shaping and $S'$-redistribution for all $R$, and
    \item $c(R_1) = c(R_2)$ for all $R_1$ and $R_2$ which differ by potential shaping and $S'$-redistribution.
\end{enumerate}
It is straightforward that $c$ is linear, since it is a projection map. To see that $R$ and $c(R)$ differ by potential shaping and $S'$-redistribution, note that there is a constant $k$ such that $\eta^\pi \cdot R = \eta^\pi \cdot c(R) + k$ for all policies $\pi$, since $\mathrm{Im}(c)$ is parallel to $\Omega$. This means that the policy evaluation functions of $R$ and $c(R)$ differ by a constant $k$. 
By Proposition~\ref{prop:change_from_potentials}, this means that we can create a reward function $R'$ which has the same policy evaluation function as $c(R)$, by applying potential shaping to $R$ with a potential function such that $\Phi(s) = -k$ for all $s \in \mathrm{supp}(\init)$. By Lemma~\ref{lemma:ambiguity_of_J}, this implies that $R'$ and $c(R)$ differ by potential shaping and $S'$-redistribution. Thus $R$ and $c(R)$ differ by potential shaping and $S'$-redistribution.
Finally, note that if $R_1$ and $R_2$ differ by potential shaping and $S'$-redistribution, then there is a constant $k$ such that $\eta^\pi \cdot R_2 = \eta^\pi \cdot R_1 + k$ for all policies $\pi$ (Proposition~\ref{prop:change_from_potentials}). This in turn means that $c(R_1) = c(R_2)$.
\end{proof}

\Jnorm*

\begin{proof}
To show that a function $n$ is a norm on $\mathrm{Im}(c)$, we must show that it satisfies:
\begin{enumerate}
    \item $n(R) \geq 0$ for all $R \in \mathrm{Im}(c)$.
    \item $n(R) = 0$ if and only if $R = R_0$ for all $R \in \mathrm{Im}(c)$.
    \item $n(\alpha \cdot R) = \alpha \cdot n(R)$ for all $R \in \mathrm{Im}(c)$ and all scalars $\alpha$.
    \item $n(R_1 + R_2) \leq n(R_1) + n(R_2)$ for all $R_1, R_2 \in \mathrm{Im}(c)$.
\end{enumerate}

Here $R_0$ is the reward function that is $0$ everywhere.
It is trivial to show that Axioms 1 and 3 are satisfied by $n$. For Axiom 2, note that $n(R) = 0$ exactly when $\max_\pi J(\pi) = \min_\pi J(\pi)$. If $R$ is $R_0$, then $J(\pi) = 0$ for all $\pi$, and so the \enquote{if} part holds straightforwardly. For the \enquote{only if} part, let $R$ be a reward function such that $\max_\pi J(\pi) = \min_\pi J(\pi)$. Then $R$ and $R_0$ induce the same policy ordering under $\tau$ and $\mu_0$, which means that they differ by potential shaping, $S'$-redistribution, and positive linear scaling (Theorem~\ref{thm:policy_ordering}). 
Moreover, since $R_0$ is $0$ everywhere, this means that $R$ and $R_0$ in fact differ by potential shaping and $S'$-redistribution.
However, from the definition of canonicalisation functions, if $R_1, R_2 \in \mathrm{Im}(c)$ differ by potential shaping and $S'$-redistribution, then it must be that $R_1 = R_2$. Hence Axiom 2 holds as well.
We can show that Axiom 4 holds algebraically:
\begin{align*}
    n(R_1 + R_2) &= \max_\pi (J_1(\pi) + J_2(\pi)) - \min_\pi (J_1(\pi) + J_2(\pi))\\
    &\leq \max_\pi J_1(\pi) + \max_\pi J_2(\pi) - \min_\pi J_1(\pi) - \min_\pi J_2(\pi)\\
    &= (\max_\pi J_1(\pi) - \min_\pi J_1(\pi)) + (\max_\pi J_2(\pi) - \min_\pi J_2(\pi))\\
    &= n(R_1) + n(R_2)
\end{align*}
This means that $n(R) = \max_\pi J(\pi) - \min_\pi J(\pi)$ is a norm on $\mathrm{Im}(c)$.
\end{proof}

\subsubsection{Regret Bounds For STARC Metrics}

\STARCpseudometrics*

\begin{proof}
To show that $d$ is a pseudometric, we must show that
\begin{enumerate}
    \item $d(R,R) = 0$
    \item $d(R_1, R_2) = d(R_2, R_1)$
    \item $d(R_1, R_3) \leq d(R_1, R_2) + d(R_2, R_3)$
\end{enumerate}
1 follows from the fact that $m$ is a metric, and 2 follows directly from the fact that the definition of STARC metrics is symmetric in $R_1$ and $R_2$. For 3, the fact that $m$ is a metric again implies that $d(R_1, R_3) = m(s(R_1), s(R_3)) \leq m(s(R_1), s(R_2)) + m(s(R_2), s(R_3)) = d(R_1, R_2) + d(R_2, R_3)$. This completes the proof.
\end{proof}

\STARCdistancezero*

\begin{proof}
This is immediate from Theorem~\ref{thm:policy_ordering}, together with the definition of STARC metrics.
\end{proof}

We will next show that all STARC metrics are sound. To do this, we must first prove a number of supporting lemmas:

\begin{lemma}\label{lemma:same_policy_bound}
For any rewards $R_1$ and $R_2$, and any policy $\pi$, we have that
$$
|\Evaluation_1(\pi) - \Evaluation_2(\pi)| \leq \left(\frac{1}{1-\gamma}\right) L_\infty(R_1, R_2).
$$ 
\end{lemma}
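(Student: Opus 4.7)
The plan is to bound the difference $|J_1(\pi) - J_2(\pi)|$ by passing through the return function $G$ and applying the geometric series bound. First I would recall that $J_i(\pi) = \mathbb{E}_{\xi \sim \pi}[G_i(\xi)]$, where $G_i(\xi) = \sum_{t=0}^{\infty} \gamma^t R_i(s_t, a_t, s_{t+1})$, and that the expectation is over trajectories generated by following $\pi$ from an initial state drawn from $\mu_0$ with transitions from $\tau$. This same distribution is used for both $J_1$ and $J_2$, which is the key point that makes the argument go through.

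Next I would apply linearity of expectation and the triangle inequality to write
$$|J_1(\pi) - J_2(\pi)| = \left|\mathbb{E}_{\xi \sim \pi}[G_1(\xi) - G_2(\xi)]\right| \leq \mathbb{E}_{\xi \sim \pi}\left[\,|G_1(\xi) - G_2(\xi)|\,\right].$$
Then for any fixed trajectory $\xi = \langle s_0, a_0, s_1, \dots \rangle$, I would bound the pointwise difference:
$$|G_1(\xi) - G_2(\xi)| \leq \sum_{t=0}^{\infty} \gamma^t |R_1(s_t,a_t,s_{t+1}) - R_2(s_t,a_t,s_{t+1})| \leq \sum_{t=0}^{\infty} \gamma^t L_\infty(R_1, R_2),$$
where the last step uses that $|R_1(s,a,s') - R_2(s,a,s')| \leq L_\infty(R_1, R_2)$ for every transition by the definition of $L_\infty$. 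Summing the geometric series gives $L_\infty(R_1,R_2)/(1-\gamma)$, which is a uniform bound independent of $\xi$, so taking the expectation preserves it.

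There is no real obstacle here — the argument is essentially a two-line calculation once the decomposition $J(\pi) = \mathbb{E}[G(\xi)]$ is invoked. The only mild subtlety is confirming that the infinite sum and the expectation can be exchanged (or equivalently, that $G_i$ is well-defined and integrable), which follows from absolute convergence: the tail is dominated by $\sum_t \gamma^t L_\infty(R_1,R_2) < \infty$, so dominated convergence or Fubini applies without issue. The result then follows immediately.
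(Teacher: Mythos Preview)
Your proposal is correct and essentially identical to the paper's proof: both use linearity of expectation, the triangle inequality (Jensen), the pointwise bound $|R_1-R_2|\leq L_\infty(R_1,R_2)$, and the geometric series. The only cosmetic difference is that the paper swaps the sum and expectation before applying the absolute value, while you bound $|G_1(\xi)-G_2(\xi)|$ trajectory-wise first; the content is the same.
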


\begin{proof}
This follows from straightforward algebra:
\begin{align*}
|\Evaluation_1(\pi) - \Evaluation_2(\pi)| = &\Big|\mathbb{E}_{\xi \sim \pi}\left[\sum_{t=0}^\infty \gamma^ t R_1(S_t,A_t,S_{t+1})\right] \\
& -\mathbb{E}_{\xi \sim \pi}\left[\sum_{t=0}^\infty \gamma^ t R_2(S_t,A_t,S_{t+1})\right]\Big|\\
= &\Big|\sum_{t=0}^\infty \gamma^t \mathbb{E}_{\xi \sim \pi}[R_1(S_t,A_t,S_{t+1}) - R_2(S_t,A_t,S_{t+1})]\Big|\\
\leq &\sum_{t=0}^\infty \gamma^t \mathbb{E}_{\xi \sim \pi}[ |R_1(S_t,A_t,S_{t+1}) - R_2(S_t,A_t,S_{t+1})|]\\
\leq &\sum_{t=0}^\infty \gamma^t L_\infty(R_1, R_2) = \left(\frac{1}{1-\gamma}\right) L_\infty(R_1, R_2).
\end{align*}
Here the second line follows from the linearity of expectation, and the third line follows from Jensen's inequality.
\end{proof}

Thus, the $L_\infty$-distance between two rewards bounds the difference between their policy evaluation functions. Since all norms are bilipschitz equivalent on any finite-dimensional vector space, this extends to all norms:

\begin{lemma}\label{lemma:norm_to_distance_fixed_pi}
If $p$ is a \emph{norm}, then there is a positive constant $K_p$ such that, for any reward functions $R_1$ and $R_2$, and any policy $\pi$, $|\Evaluation_1(\pi) - \Evaluation_2(\pi)| \leq K_p \cdot p(R_1, R_2)$.
\end{lemma}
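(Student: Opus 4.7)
The plan is to reduce this statement directly to Lemma~\ref{lemma:same_policy_bound} by invoking the bilipschitz equivalence of norms on finite-dimensional vector spaces. Recall from Section~\ref{sec:prelims_metrics_and_norms} that any two norms on a finite-dimensional real vector space are bilipschitz equivalent, so in particular $L_\infty$ and the arbitrary norm $p$ are bilipschitz equivalent on the finite-dimensional space $\R \cong \mathbb{R}^{|\States||\Actions||\States|}$.

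First, I would apply Lemma~\ref{lemma:same_policy_bound} to obtain the bound $|\Evaluation_1(\pi) - \Evaluation_2(\pi)| \leq (1/(1-\gamma)) \cdot L_\infty(R_1, R_2)$ for any policy $\pi$ and any pair of rewards. Next, since $\R$ is finite-dimensional, the bilipschitz equivalence of norms gives us a positive constant $C_p$ (depending only on $p$, not on the particular rewards) such that $L_\infty(R_1, R_2) \leq C_p \cdot p(R_1, R_2)$ for all reward functions $R_1, R_2$ — note here that $L_\infty(R_1, R_2)$ and $p(R_1, R_2)$ should be understood as the metrics induced by the respective norms evaluated at the vector $R_1 - R_2$, as per the convention established in Section~\ref{sec:prelims_metrics_and_norms}.

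Combining these two inequalities, I would then set $K_p = C_p/(1-\gamma)$, which is a positive constant depending only on $p$ and $\gamma$ (and not on $R_1$, $R_2$, or $\pi$), and conclude that
\[
|\Evaluation_1(\pi) - \Evaluation_2(\pi)| \leq \left(\frac{1}{1-\gamma}\right) L_\infty(R_1, R_2) \leq \left(\frac{C_p}{1-\gamma}\right) p(R_1, R_2) = K_p \cdot p(R_1, R_2).
\]

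There is no real obstacle here — the argument is essentially a one-line composition of a previously established bound with a standard fact from functional analysis. The only mild subtlety to flag is that the constant $K_p$ must be shown to be uniform in $\pi$, $R_1$, and $R_2$, which follows immediately because both the bilipschitz constant $C_p$ (a property of $p$ alone on the ambient vector space) and the factor $1/(1-\gamma)$ are independent of these quantities.
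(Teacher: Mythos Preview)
Your proposal is correct and follows essentially the same approach as the paper: apply Lemma~\ref{lemma:same_policy_bound} to obtain the $L_\infty$ bound, invoke the equivalence of norms on the finite-dimensional space $\R$ to pass from $L_\infty$ to $p$, and set $K_p = C_p/(1-\gamma)$.
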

\ifshowproofs
\begin{proof}
If $p$ and $q$ are norms on a finite-dimensional vector space, then there are constants $k$ and $K$ such that 
$k \cdot p(x) \leq q(x) \leq K \cdot p(x)$.
Since $\States$ and $\Actions$ are finite, $\mathcal{R}$ is a finite-dimensional vector space. This means that there is a constant $K$ such that $L_\infty(R_1, R_2) \leq K \cdot p(R_1, R_2)$. Together with Lemma~\ref{lemma:same_policy_bound}, this implies that 
$$
|\Evaluation_1(\pi) - \Evaluation_2(\pi)| \leq \left(\frac{1}{1-\gamma}\right) \cdot K \cdot m(R_1, R_2).
$$ 
Letting $K_p = \left( \frac{K}{1-\gamma}\right)$ completes the proof.
\end{proof}
\fi

Next, we show that if the difference between two policy evaluation functions can be bounded, then we can derive a regret bound:


\begin{lemma}\label{lemma:policy_optimisation_bound}
Let $R_1$ and $R_2$ be reward functions, and $\pi_1, \pi_2$ be two policies. If $|\Evaluation_1(\pi) - \Evaluation_2(\pi)| \leq U$ for $\pi \in \{\pi_1, \pi_2\}$, and if $\Evaluation_2(\pi_2) \geq \Evaluation_2(\pi_1)$, then
$$
\Evaluation_1(\pi_1) - \Evaluation_1(\pi_2) \leq 2 \cdot U.
$$
\end{lemma}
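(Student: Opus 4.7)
The plan is to prove this by a standard three-term telescoping identity combined with the triangle inequality, which is the essentially canonical way to convert pointwise closeness of value functions into a regret bound.

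First I would write the difference of interest as a telescoping sum:
\[
J_1(\pi_1) - J_1(\pi_2) = \bigl[J_1(\pi_1) - J_2(\pi_1)\bigr] + \bigl[J_2(\pi_1) - J_2(\pi_2)\bigr] + \bigl[J_2(\pi_2) - J_1(\pi_2)\bigr].
\]
Then I would bound each of the three bracketed terms separately. The first and third terms are bounded in absolute value by $U$ directly from the hypothesis $|J_1(\pi) - J_2(\pi)| \leq U$ applied at $\pi = \pi_1$ and $\pi = \pi_2$ respectively. The middle term is non-positive by the hypothesis $J_2(\pi_2) \geq J_2(\pi_1)$, so it contributes at most $0$. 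Summing these bounds gives $J_1(\pi_1) - J_1(\pi_2) \leq U + 0 + U = 2U$, which is the desired conclusion.

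There is essentially no main obstacle: the argument is a one-line manipulation. The only minor care needed is to observe that we only need the signed inequality on the middle term (not its absolute value), so that the sign of $J_2(\pi_2) - J_2(\pi_1) \leq 0$ is exploited rather than wasted. I would not bother stating a symmetric lower bound, since the lemma is phrased one-sidedly in terms of the regret of using $\pi_2$ in place of $\pi_1$ under $R_1$.
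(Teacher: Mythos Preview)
Your proof is correct, and in fact takes a cleaner route than the paper. The paper decomposes
\[
J_1(\pi_1) - J_1(\pi_2) = \bigl[J_1(\pi_1) - J_2(\pi_2)\bigr] + \bigl[J_2(\pi_2) - J_1(\pi_2)\bigr],
\]
which requires a separate argument (using both $J_2(\pi_2) \geq J_2(\pi_1)$ and a case split on the sign of $J_1(\pi_1) - J_1(\pi_2)$) to bound the cross-term $|J_1(\pi_1) - J_2(\pi_2)|$ by $U$. Your three-term telescope inserts $J_2(\pi_1)$ instead of $J_2(\pi_2)$, so each outer term is bounded by the hypothesis at a \emph{single} policy and the middle term is nonpositive directly from $J_2(\pi_2) \geq J_2(\pi_1)$; no case analysis is needed. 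Your version is the standard ``simulation lemma'' manipulation and is strictly more economical here, though both arrive at the same $2U$ bound.
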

\ifshowproofs
\begin{proof}
First note that $U$ must be non-negative.
Next, note that if $\Evaluation_1(\pi_1) < \Evaluation_1(\pi_2)$ then $\Evaluation_1(\pi_1) - \Evaluation_1(\pi_2) < 0$, and so the lemma holds. 
Now consider the case when $\Evaluation_1(\pi_1) \geq \Evaluation_1(\pi_2)$: 
\begin{align*}
\Evaluation_1(\pi_1) - \Evaluation_1(\pi_2) &= 
\Evaluation_1(\pi_1) - \Evaluation_2(\pi_2) + \Evaluation_2(\pi_2) - \Evaluation_1(\pi_2)\\
&\leq |\Evaluation_1(\pi_1) - \Evaluation_2(\pi_2)| + |\Evaluation_2(\pi_2) - \Evaluation_1(\pi_2)|\\
\end{align*}
Our assumptions imply that $|\Evaluation_2(\pi_2) - \Evaluation_1(\pi_2)| \leq U$. We will next show that $|\Evaluation_1(\pi_1) - \Evaluation_2(\pi_2)| \leq U$ as well.
Our assumptions imply that
\begin{align*}
&|\Evaluation_1(\pi_1) - \Evaluation_2(\pi_1)| \leq U\\
\implies &\Evaluation_2(\pi_1) \geq \Evaluation_1(\pi_1) - U\\
\implies &\Evaluation_2(\pi_2) \geq \Evaluation_1(\pi_1) - U
\end{align*}
Here the last implication uses the fact that $\Evaluation_2(\pi_2) \geq \Evaluation_2(\pi_1)$. A symmetric argument also shows that $\Evaluation_1(\pi_1) \geq \Evaluation_2(\pi_2) - U$ (recall that we assume that $\Evaluation_1(\pi_1) \geq \Evaluation_1(\pi_2)$). Together, this implies that $|\Evaluation_1(\pi_1) - \Evaluation_2(\pi_2)| \leq U$. We have thus shown that if $\Evaluation_1(\pi_1) \geq \Evaluation_1(\pi_2)$ then
$$
|\Evaluation_1(\pi_1) - \Evaluation_2(\pi_2)| + |\Evaluation_2(\pi_2) - \Evaluation_1(\pi_2)| \leq 2 \cdot U,$$
and so the lemma holds. This completes the proof.
\end{proof}
\fi

Note that Lemma~\ref{lemma:same_policy_bound} and \ref{lemma:policy_optimisation_bound} together imply that, for any two reward functions $R_1, R_2$, and any two policies $\pi_1, \pi_2$, if $\Evaluation_2(\pi_2) \geq \Evaluation_2(\pi_1)$, then 
$$
\Evaluation_1(\pi_2) - \Evaluation_1(\pi_2) \leq \left(\frac{2}{1-\gamma}\right) L_\infty(R_1, R_2).
$$
Moreover, Lemma~\ref{lemma:norm_to_distance_fixed_pi} says that a similar bound can be derived for any norm. To turn this into a regret bound for STARC metrics, we must consider the effect of the canonicalisation function and normalisation. Our next lemma will be used to account for the difference between the size of a reward function before and after canonicalisation:

\begin{lemma}\label{lemma:smallest_n_ratio}
For any linear function $c : \mathbb{R}^n \to \mathbb{R}^n$ and any norm $n$, there is a positive constant $K_n$ such that $n(c(v)) \leq K_n \cdot n(v)$ for all $v \in \mathbb{R}^n$.
\end{lemma}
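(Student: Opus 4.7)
The plan is to reduce the statement to the classical fact that any linear map between finite-dimensional normed spaces is bounded, and to extract the constant $K_n$ via a compactness argument on the unit sphere of $n$. The case $v = 0$ is immediate since $c$ is linear, so $c(0) = 0$ and $n(c(0)) = 0$, which is bounded by $K_n \cdot n(0) = 0$ for any choice of $K_n > 0$. So the work is entirely for $v \neq 0$.

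First I would introduce the unit sphere $S = \{u \in \mathbb{R}^n : n(u) = 1\}$ of the norm $n$. The key observation is that $S$ is compact: since all norms on the finite-dimensional space $\mathbb{R}^n$ are bilipschitz equivalent (as noted in Section~\ref{sec:prelims_metrics_and_norms}), the set $S$ is closed and bounded with respect to the standard Euclidean topology, hence compact by the Heine--Borel theorem. Next I would observe that the function $u \mapsto n(c(u))$ is continuous on $\mathbb{R}^n$: the linear map $c$ is continuous (again because $\mathbb{R}^n$ is finite-dimensional), and $n$ is continuous as a norm, so the composition is continuous. By the extreme value theorem, this continuous function attains its supremum on the compact set $S$. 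Define $K_n$ to be this maximum value, i.e., $K_n = \max_{u \in S} n(c(u))$.

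The final step is to extend the bound from $S$ to all of $\mathbb{R}^n \setminus \{0\}$ by homogeneity. For any $v \neq 0$, set $u = v/n(v)$, which lies in $S$. Using the linearity of $c$ and the absolute homogeneity of $n$, we compute $n(c(v)) = n(c(n(v) \cdot u)) = n(n(v) \cdot c(u)) = n(v) \cdot n(c(u)) \leq n(v) \cdot K_n$, as required. One small care point is to ensure $K_n > 0$ as the statement asks for a \emph{positive} constant: if $c$ is the zero map then $n(c(u)) = 0$ everywhere, and we can simply pick any $K_n > 0$; otherwise the maximum is strictly positive since $c(u) \neq 0$ for some $u \in S$ (as $c$ is linear and nonzero means it has a nonzero image).

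There is no serious obstacle here: the entire argument hinges on compactness of the unit sphere and continuity of linear maps, both of which are standard in finite dimensions. The only mild subtlety is keeping track of the use of equivalence of norms to justify that $S$ is compact with respect to $n$'s topology (rather than, say, the Euclidean one), but this is essentially free given the paper's standing assumption that $\States$ and $\Actions$ are finite and the background already established in Section~\ref{sec:prelims_metrics_and_norms}.
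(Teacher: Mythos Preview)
Your proof is correct and follows essentially the same approach as the paper's own proof: both reduce to the unit sphere via homogeneity, invoke compactness of that sphere, and apply the extreme value theorem to the continuous map $u \mapsto n(c(u))$. The only cosmetic difference is that the paper phrases the argument in terms of bounding the scale-invariant ratio $n(c(v))/n(v)$, whereas you work directly on the unit sphere; you also take slightly more care than the paper does to ensure $K_n$ is strictly positive.
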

\ifshowproofs
\begin{proof}
First consider the case when $n(v) > 0$. In this case, we can find an upper bound for $n(c(v))$ in terms of $n(v)$ by finding an upper bound for $\frac{n(c(R))}{n(R)}$.
Since $c$ is linear, and since $n$ is absolutely homogeneous, we have that for any $v \in \mathbb{R}^n$ and any non-zero $\alpha \in \mathbb{R}$,
$$
\frac{n(c(\alpha \cdot v))}{n(\alpha \cdot v)} = \left(\frac{\alpha}{\alpha}\right)\frac{n(c(v))}{n(v)} = \frac{n(c(v))}{n(v)}.
$$
In other words, $\frac{n(c(v))}{n(v)}$ is unaffected by scaling of $v$. We may thus restrict our attention to the unit ball of $n$.
Next, since the surface of the unit ball of $n$ is a compact set, and since $\frac{n(c(v))}{n(v)}$ is continuous on this surface, the extreme value theorem implies that $\frac{n(c(v))}{n(v)}$ must take on some maximal value $K_n$ on this domain.
Together, the above implies that $n(c(v)) \leq K_n \cdot n(v)$ for all $R$ such that $n(v) > 0$.


Next, suppose $n(v) = 0$. In this case, $v$ is the zero vector. Since $c$ is linear, this implies that $c(v) = v$, which means that $n(c(v)) = 0$ as well. Therefore, if $n(v) = 0$, then the statement holds for any $K_n$. In particular, it holds for the value $K_n$ selected above. 
\end{proof}
\fi

Note that the value of $K_n$ depends on how \enquote{tilted} $\mathrm{Im}(c)$ is. If $c$ is an orthogonal projection (as is the case if $c$ is the minimal canonicalisation function for an $L_2$-norm), then $K_n = 1$. Our next lemma has a somewhat complicated statement, but its purpose is simply to derive a bound on the difference between the policy evaluation functions $\J_1, \J_2$ of two reward functions $R_1, R_2$, based on the difference between the policy evaluation functions of their standardised counterparts:



\begin{lemma}\label{lemma:standardised_to_unstandardised_bound}
Let $c$ be a canonicalisation function, and let $n$ be a norm on $\mathrm{Im}(c)$. 
Let $R$ be any reward function, and let $R_S = \left(\frac{c(R)}{n(c(R))}\right)$ if $n(c(R)) > 0$, and $c(R)$ otherwise. Then $\J(\pi_1) - \J(\pi_2) = n(c(R)) \cdot (\J_S(\pi_1) - \J_S(\pi_2))$, where $J_S$ is the policy evaluation function of $R_S$.
\end{lemma}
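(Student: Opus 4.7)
The plan is to reduce the statement to a simple calculation on the canonicalised reward, $c(R)$, by exploiting two facts: (i) $R$ and $c(R)$ differ only by potential shaping and $S'$-redistribution, which shifts the policy evaluation function by a constant, and (ii) $R_S$ is obtained from $c(R)$ by a positive linear rescaling by $1/n(c(R))$ when $n(c(R)) > 0$, so $J_S$ is just $J_{c(R)}$ rescaled by the same factor.

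First I would handle the case $n(c(R)) > 0$. By Definition~\ref{def:canonicalisation_function}, $R$ and $c(R)$ differ by some combination of potential shaping (with some potential $\Phi$) and $S'$-redistribution. Since $S'$-redistribution leaves $J$ unchanged, and potential shaping with $\Phi$ shifts $J$ by a constant $-\Expect{S_0 \sim \init}{\Phi(S_0)}$ (by part (5) of Proposition~\ref{prop:change_from_potentials}), we get $J(\pi) = J_{c(R)}(\pi) + k$ for some constant $k$ depending only on $R$ (not on $\pi$). Hence $J(\pi_1) - J(\pi_2) = J_{c(R)}(\pi_1) - J_{c(R)}(\pi_2)$. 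Next, since $R_S = c(R)/n(c(R))$ is a positive scalar multiple of $c(R)$, and $J$ depends linearly on the reward (recall $J(\pi) = \eta^\pi \cdot R$), we have $J_S(\pi) = J_{c(R)}(\pi)/n(c(R))$, so $J_{c(R)}(\pi_1) - J_{c(R)}(\pi_2) = n(c(R)) \cdot (J_S(\pi_1) - J_S(\pi_2))$, yielding the claim.

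For the case $n(c(R)) = 0$, the right-hand side is $0$, so I need to show $J(\pi_1) = J(\pi_2)$ for all policies. Since $n$ is a norm on $\mathrm{Im}(c)$, the fact that $n(c(R)) = 0$ forces $c(R) = R_0$. Then $R$ and $R_0$ differ by potential shaping and $S'$-redistribution, so by the same argument as above, $J(\pi) = J_{R_0}(\pi) + k = k$ is constant across policies, giving $J(\pi_1) - J(\pi_2) = 0$, as required.

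There is no real obstacle here; the lemma is essentially a bookkeeping exercise that unpacks the definitions of canonicalisation, potential shaping, and the rescaling by $n(c(R))$. The only mild care needed is in the degenerate case $n(c(R)) = 0$, where one must invoke the norm axiom (positive definiteness on $\mathrm{Im}(c)$) to conclude $c(R) = R_0$ before applying Proposition~\ref{prop:change_from_potentials}.
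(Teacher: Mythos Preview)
Your proposal is correct and takes essentially the same approach as the paper's own proof: split into the cases $n(c(R)) > 0$ and $n(c(R)) = 0$, use that $R$ and $c(R)$ differ by potential shaping and $S'$-redistribution (hence $J$ and $J_{c(R)}$ differ by a constant via Proposition~\ref{prop:change_from_potentials}), and then use linearity of $J$ in the reward to pass from $c(R)$ to $R_S$. The paper handles the degenerate case first and phrases the constant shift slightly differently, but the argument is the same.
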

\ifshowproofs
\begin{proof}
Let us first consider the case where $n(c(R)) = 0$. Since $n$ is a norm, $c(R)$ must be the reward function that is $0$ everywhere. Since $c$ is a canonicalisation function, we have that $R$ and $c(R)$ have the same ordering of policies. Thus $R$ is \emph{trivial}, which means that $J(\pi_1) = J(\pi_2)$ for all $\pi_1,\pi_2$. Thus $J(\pi_1) - J(\pi_2) = 0$, and so the statement holds.

Let us next consider the case when $n(c(R)) > 0$. Let $R_C = c(R)$. Since $c$ is a canonicalisation function, we have that $R$ and $R_C$ differ by potential shaping and $S'$-redistribution. Thus, for all $\pi$, $J_C(\pi) = J(\pi) - \Expect{S_0 \sim \init}{\Phi(S_0)}$ for some potential function $\Phi$ (Proposition~\ref{prop:change_from_potentials}). Moreover, $J_S = J_C \cdot \left(\frac{1}{n(c(R))}\right)$. This means that
$$
\Evaluation_S(\pi) = \left(\frac{1}{n(c(R))}\right) (\Evaluation(\pi) - \Expect{S_0 \sim \init}{\Phi(S_0)})
$$
for all $\pi$. This further implies that
$$
\Evaluation_S(\pi_1) - \Evaluation_S(\pi_2) = \left(\frac{1}{n(c(R))}\right) (\Evaluation(\pi_1) - \Evaluation(\pi_2))
$$
since the $\Expect{S_0 \sim \init}{\Phi(S_0)}$-terms cancel out. By rearranging, we get that 
$$
\Evaluation(\pi_1) - \Evaluation(\pi_2) =
n(c(R))(\Evaluation_S(\pi_1) - \Evaluation_S(\pi_2)).
$$
This completes the proof.
\end{proof}
\fi

\STARCsound*

\begin{proof}
Consider any transition function $\tau$ and any initial state distribution $\init$, and let $d$ be a STARC metric. We wish to show that there exists a positive constant $U$, such that for any $R_1$ and $R_2$, and any pair of policies $\pi_1$ and $\pi_2$ such that $J_2(\pi_2) \geq J_2(\pi_1)$, we have that
$$
\J_1(\pi_1) - \J_1(\pi_2) \leq (\max_\pi J_1(\pi) - \min_\pi J_1(\pi)) \cdot U \cdot d(R_1, R_2).
$$ 
Recall that $d(R_1, R_2) = m(s(R_1), s(R_2))$, where $m$ is an admissible metric. Since $m$ is admissible, we have that $p(s(R_1), s(R_2)) \leq K_m \cdot m(s(R_1), s(R_2))$ for some norm $p$ and constant $K_m$.
Moreover, since $p$ is a norm, we can apply Lemma~\ref{lemma:norm_to_distance_fixed_pi} to conclude that there is a constant $K_p$ such that for any policy $\pi$, we have that
$$
|\J_1^S(\pi) - \J_2^S(\pi)| \leq K_p \cdot p(s(R_1), s(R_2)),
$$ 
where $\J_1^S$ is the policy evaluation function of $s(R_1)$, and $\J_2^S$ is the policy evaluation function of $s(R_2)$.
Combining this with the fact that $p(s(R_1), s(R_2)) \leq K_m \cdot m(s(R_1), s(R_2))$, we get
\begin{align*}
|\J_1^S(\pi) - \J_2^S(\pi)| &\leq K_p \cdot p(s(R_1), s(R_2))\\
  &\leq K_p \cdot K_m \cdot m(s(R_1), s(R_2))\\  
  &= K_{mp} \cdot d(R_1, R_2)
\end{align*}
where $K_{mp} = K_p \cdot K_m$.
We have thus established that, for any $\pi$, we have
$$
|\J_1^S(\pi) - \J_2^S(\pi)| \leq K_{mp} \cdot d(R_1, R_2).
$$

Let $\pi_1$ and $\pi_2$ be any two policies such that $\J_2(\pi_2) \geq \J_2(\pi_1)$.
Note that $\J_2(\pi_2) \geq \J_2(\pi_1)$ if and only if $\J_2^S(\pi_2) \geq \J_2^S(\pi_1)$. 
We can therefore apply Lemma~\ref{lemma:policy_optimisation_bound} and conclude that 
$$
\J_1^S(\pi_1) - \J_1^S(\pi_2) \leq 2 \cdot K_{mp} \cdot d(R_1, R_2).
$$
%
We can now apply Lemma~\ref{lemma:standardised_to_unstandardised_bound}:
$$
\J_1(\pi_1) - \J_1(\pi_2) \leq n(c(R_1)) \cdot 2 \cdot  K_{mp} \cdot d(R_1, R_2).
$$ 
We have that $n$ is a norm on $\mathrm{Im}(c)$. Moreover, $\max_\pi \J_1(\pi) - \min_\pi \J_1(\pi)$ is also a norm on $\mathrm{Im}(c)$ (Proposition~\ref{prop:J_norm}).
Since $\mathrm{Im}(c)$ is a finite-dimensional vector space, this means that there is a constant $K_s$ such that $n(c(R_1)) \leq K_s \cdot (\max_\pi \J_1(\pi) - \min_\pi \J_1(\pi))$ for all $R_1 \in \R$. 
Let $U = 2 \cdot K_{mp} \cdot K_s$. We have now established that, for any $\pi_1$ and $\pi_2$ such that $\J_2(\pi_2) \geq \J_2(\pi_1)$, we have
$$
\J_1(\pi_1) - \J_1(\pi_2) \leq (\max_\pi \J_1(\pi) - \min_\pi \J_1(\pi)) \cdot U \cdot d(R_1, R_2).
$$ 
This completes the proof.
\end{proof}

Our second main result for this section is that all STARC metrics are complete. To prove this, we must yet again first prove a number of supporting lemmas: 

\begin{lemma}\label{lemma:lower_bound_lemma_0}
    Let $S \subset \mathbb{R}^n$ be the boundary of a bounded convex set whose interior is non-empty and includes the origin. Then there is an $A > 0$ such that for any $x, y \in S$, if $x \neq y$ then the angle between $x$ and $y-x$ is at least $A$.
\end{lemma}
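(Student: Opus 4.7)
The plan is to avoid a pure compactness argument and instead produce an explicit lower bound $A$ by exploiting the supporting hyperplane at the point $x$. The geometric picture is that if $0$ lies well inside $C$, then every outward supporting normal at a boundary point $x$ must be tilted ``in the direction of $x$'', i.e.\ it makes an angle strictly less than $\pi/2$ with the vector $x$ itself. But every $y\in S$ lies on the ``$0$-side'' of this supporting hyperplane, so $y-x$ makes an angle of at least $\pi/2$ with the normal. The gap between these two bounds is exactly the uniform lower bound $A$ we want.

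First I would fix constants $r,R>0$ with $B_r(0)\subseteq C\subseteq B_R(0)$, which exist because $\mathrm{int}(C)$ contains $0$ and $C$ is bounded. Then for an arbitrary $x\in S=\partial C$, I would pick a supporting functional $n\neq 0$ at $x$, so that $n\cdot z\le n\cdot x$ for all $z\in C$. Plugging in $z=r\,n/|n|\in B_r(0)\subseteq C$ gives $r|n|\le n\cdot x$, and combining with $|x|\le R$ yields
\[
\cos\angle(n,x)\;=\;\frac{n\cdot x}{|n|\,|x|}\;\ge\;\frac{r}{R}\;>\;0,
\]
so $\angle(n,x)\le \theta_0:=\arccos(r/R)<\pi/2$, uniformly in $x$.

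Next, for any $y\in S\subseteq C$ with $y\neq x$, the defining inequality of the supporting functional gives $n\cdot(y-x)\le 0$, hence $\angle(n,y-x)\ge \pi/2$. Applying the reverse triangle inequality for angles in $\mathbb{R}^n$,
\[
\angle(x,y-x)\;\ge\;\angle(n,y-x)-\angle(n,x)\;\ge\;\tfrac{\pi}{2}-\theta_0,
\]
so setting $A:=\tfrac{\pi}{2}-\arccos(r/R)>0$ finishes the argument. The constant $A$ depends only on the inner and outer radii, not on the particular choice of $x,y$.

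The main obstacle I anticipate is being careful about degenerate configurations: the case $y=x$ is excluded by hypothesis, but I should also verify that the argument handles $x$ at ``corners'' of $C$ (where the supporting hyperplane is not unique) and does not implicitly assume smoothness of $\partial C$. This is harmless because the existence of \emph{some} supporting functional at every boundary point of a convex set is all that is used, and the bound $\angle(n,x)\le\theta_0$ applies to every such $n$. I would also double-check the sign conventions in the reverse triangle inequality for the angle metric on $\mathbb{R}^n\setminus\{0\}$, which is a genuine metric on the unit sphere, so the inequality is valid whenever all three vectors are nonzero (guaranteed here by $x\neq 0$, $n\neq 0$, and $y\neq x$).
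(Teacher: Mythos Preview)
Your proof is correct and takes a genuinely different route from the paper's argument. The paper works entirely in the plane through $0,x,y$: it labels the interior angles of the triangle with vertices $0,x,y$, splits into the cases $\angle(x,y)>\pi/2$ and $\angle(x,y)\le\pi/2$, and in the second case introduces an auxiliary point $z$ perpendicular to $y$ through $x$, uses convexity of $C$ to show $|z|\ge\ell$ (the inner radius), and then does explicit planar trigonometry to obtain the bound $A=\pi/2-\arctan(u/\ell)$, where $u,\ell$ are the outer and inner radii of $S$.

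Your argument via a supporting functional is both shorter and conceptually cleaner: it avoids the case split and the auxiliary construction entirely, and it works directly in $\mathbb{R}^n$ without first reducing to a plane. It also yields a sharper constant: with $r=\ell$ and $R=u$ your bound is $\pi/2-\arccos(\ell/u)$, which equals $\pi/2$ when $C$ is a ball (the true minimum in that case), whereas the paper's bound degrades to $\pi/4$. The price is that you invoke two facts the paper's argument does not need, namely the existence of a supporting hyperplane at every boundary point of a convex body and the triangle inequality for the angular metric on $S^{n-1}$; both are standard, and you correctly note that no smoothness of $\partial C$ is required. One tiny point worth tidying: $y$ lies in $\overline{C}$ rather than necessarily in $C$, but the supporting inequality $n\cdot z\le n\cdot x$ extends to the closure by continuity, so $n\cdot(y-x)\le 0$ still holds.
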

\begin{proof}
Let $x,y$ be two arbitrary points in $S$ such that $x \neq y$. Let $\alpha$ be the angle between $x$ and $y$, let $\beta$ be the angle between $-y$ and $x-y$, let $\gamma$ be the angle between $-x$ and $y-x$, and let $\delta$ be the angle between $x$ and $y-x$. Note that $\alpha + \beta + \gamma = \pi$, since these angles are the interior angles of the triangle whose corners lie at $x$, $y$, and the origin. We also have that $\gamma + \delta = \pi$, since these two angles add up to the angle between $x$ and $-x$. We seek a lower bound on $\delta$.

\begin{figure}[H]
    \centering
    \includegraphics[width=2\textwidth/3]{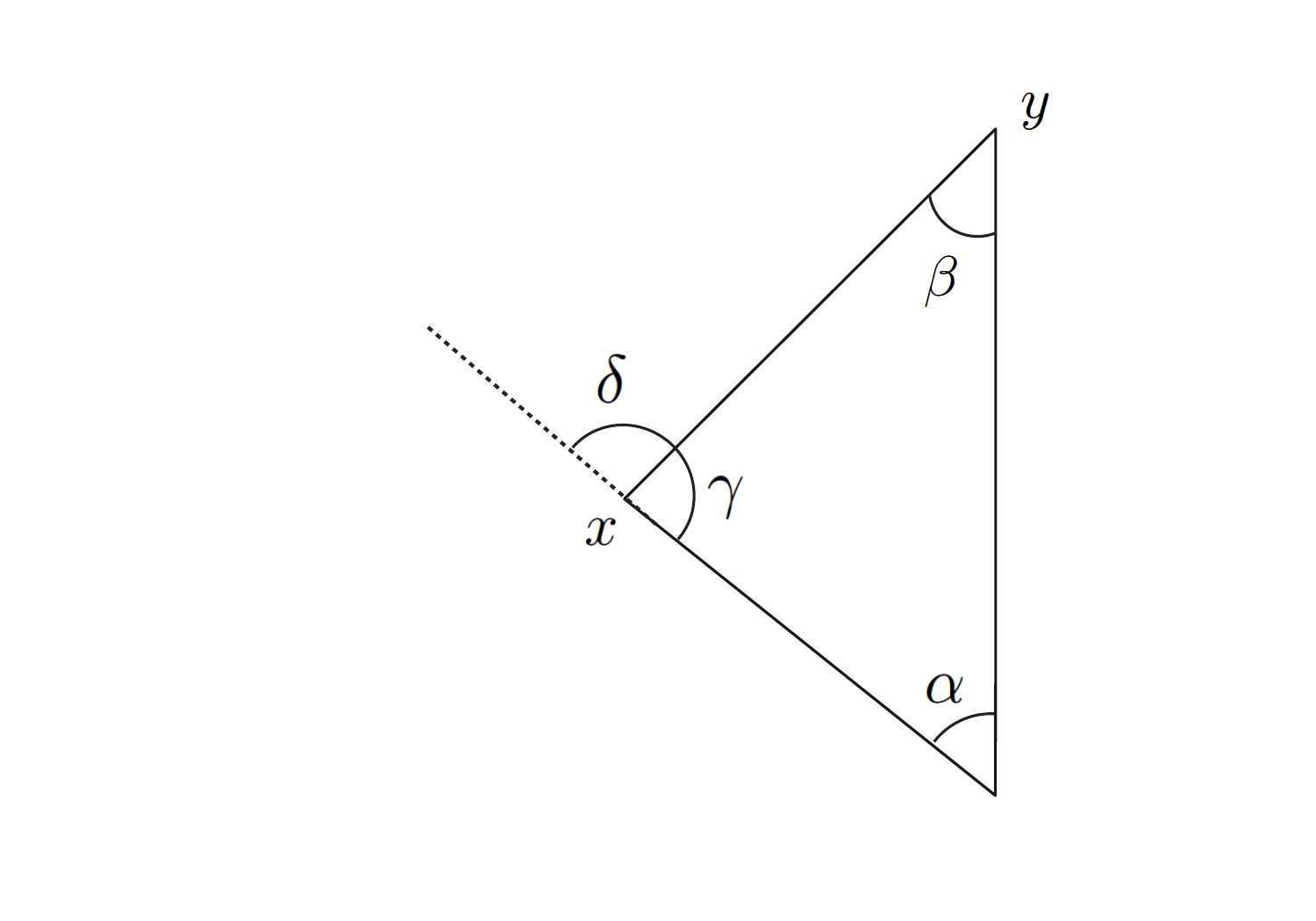}
    \label{fig:lemma_62_1}
\end{figure}

First note that if $\alpha > \pi/2$ then $\gamma < \pi/2$, since $\alpha + \beta + \gamma = \pi$. This means that $\delta > \pi/2$, since $\gamma + \delta = \pi$. Next, suppose $\alpha \leq \pi/2$. 
Since $\gamma + \delta = \pi$, we can derive a lower bound for $\delta$ by deriving an upper bound for $\gamma$.
Let $z$ be the point such that the angle between $y$ and $z$ is $\pi/2$, and such that $x$ lies on the line segment between $z$ and $y$. Let $\theta$ be the angle between $-z$ and $y-z$. 

\begin{figure}[H]
    \centering
    \includegraphics[width=2\textwidth/3]{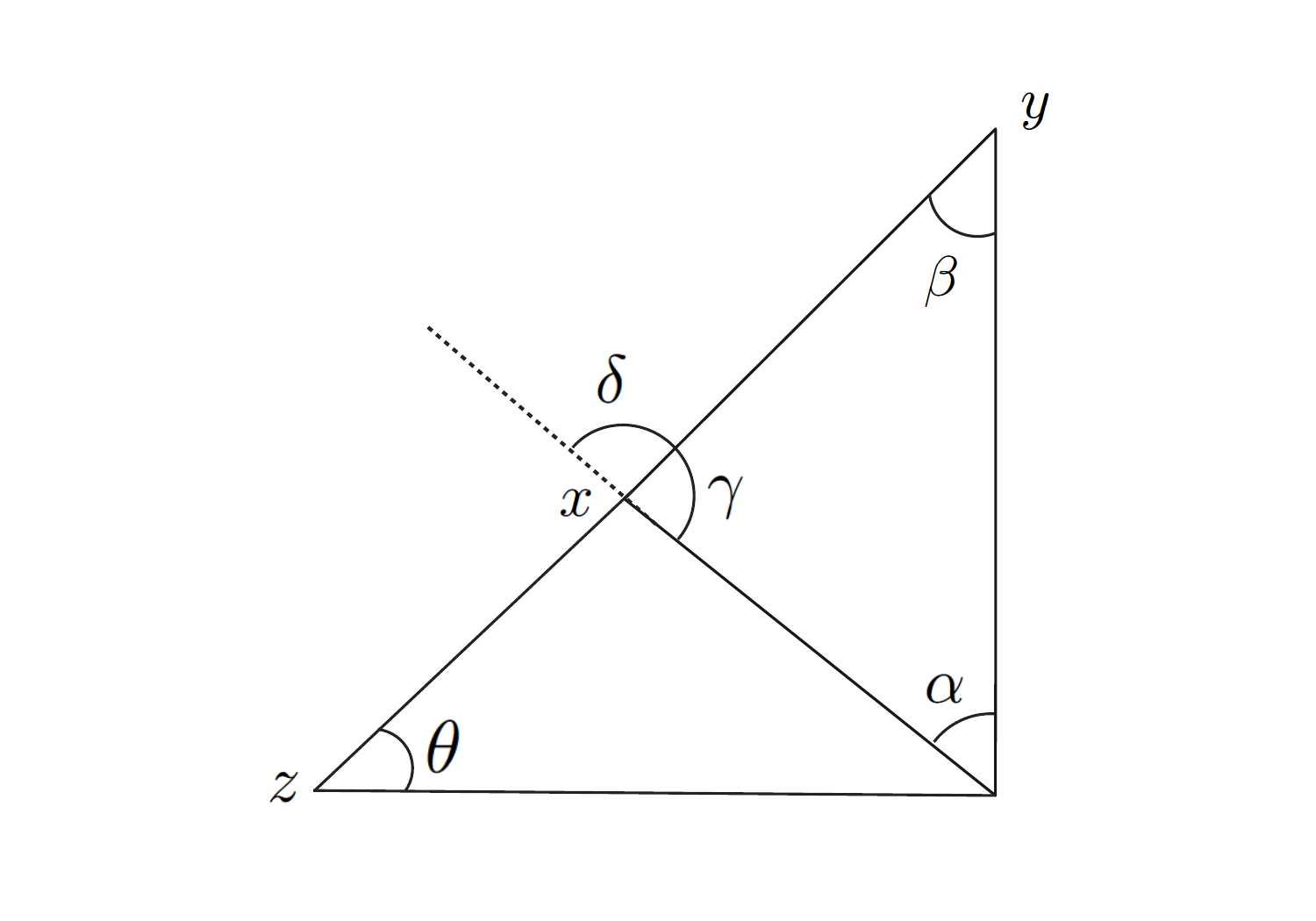}
    \label{fig:lemma_62_2}
\end{figure}

Now elementary trigonometry tells us that $\gamma < \pi/2 + \theta$.\footnote{In particular, $\gamma = \pi - \alpha - \beta$, and $\beta = \pi/2 - \theta$. Thus $\gamma$ is maximised when $\alpha = 0$, in which case $\gamma = \pi - (\pi/2 - \theta) = \pi/2 + \theta$. Moreover, if $\alpha = 0$ then $x=y$, which by assumption is not the case. Hence $\gamma < \pi/2 + \theta$.} By deriving an upper bound for $\theta$, we thus obtain an upper bound for $\gamma$ (and hence a lower bound for $\delta$).

Note that $\theta = \arctan(L_2(y)/L_2(z))$.
Moreover, since $S$ is the boundary of some set $X$, and since the interior of $X$ is non-empty, there must be some $\ell > 0$ such that $L_2(x) \geq \ell$ for all $x \in S$. Moreover, since $X$ is bounded, there must be some $u \geq \ell$ such that $L_2(x) \leq u$ for all $x \in S$. We have that $L_2(y) \leq u$, since $y \in S$.

It may be that $z \not\in S$. However, since $S$ is the boundary of a \emph{convex} set, it must still be the case that $L_2(z) \geq \ell$. To see this, suppose $L_2(z) < \ell$, and let $z'$ be the point in $S$ such that $z' = a \cdot z$ for some $a \in \mathbb{R}^+$. $L_2(z') \geq \ell$, since $s' \in S$, and so $L_2(z') > L_2(z)$. Consider the triangle that lies between $z'$, $y$, and the origin. Since $S$ is the boundary of a convex set $X$, every point that lies in the interior of this triangle must lie in the interior of $X$. But if $L_2(z') > L_2(z)$, then $x$ lies in the interior of this triangle. This is a contradiction, since $x$ lies on the boundary of $X$. Thus $L_2(z) \geq \ell$.  

We thus have that $\theta \leq \arctan(u/\ell)$, which means that $\gamma < \pi/2 + \arctan(u/\ell)$, and thus that $\delta > \pi - (\pi/2 + \arctan(u/\ell)) = \pi/2 - \arctan(u/\ell)$. Since this value does not depend on $x$ or $y$, we have that the angle $\delta$ between $x$ and $y-x$ is at least $\pi/2 - \arctan(u/\ell)$ for all $x,y \in S$ such that $x \neq y$, and such that the angle $\alpha$ between $x$ and $y$ is less than or equal to $\pi/2$. Also recall that if $\alpha > \pi/2$ then $\delta > \pi/2$. Since $u/\ell > 0$, we have that $\pi/2 - \arctan(u/\ell) < \pi/2$, and so $\delta > \pi/2 - \arctan(u/\ell)$ for all $x,y \in S$ such that $x \neq y$. Finally, since $\arctan(x) < \pi/2$, we have that $\pi/2 - \arctan(u/\ell) > 0$. Setting $A = \pi/2 - \arctan(u/\ell)$ thus completes the proof.
\end{proof}

Using this, we can now show that we can get a lower bound on the \emph{angle} between two standardised reward functions in terms of their STARC-distance:

\begin{lemma}\label{lemma:lower_bound_lemma_1}
    For any STARC metric $d$, there exist an $\ell_1 \in \mathbb{R}^+$ such that the angle $\theta$ between $s(R_1)$ and $s(R_2)$ satisfies $\ell_1 \cdot d(R_1, R_2) \leq \theta$ for all $R_1$, $R_2$ for which neither $s(R_1)$ or $s(R_2)$ is $0$.
\end{lemma}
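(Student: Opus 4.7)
The plan is to reduce the claim to a Euclidean geometry argument on the unit $n$-sphere in $\mathrm{Im}(c)$, and then invoke Lemma~\ref{lemma:lower_bound_lemma_0} via the law of sines. Set $x = s(R_1)$ and $y = s(R_2)$; by hypothesis both are nonzero, so $n(x) = n(y) = 1$ and both lie on the unit $n$-sphere $S \subset \mathrm{Im}(c)$. Since $m$ is admissible on $\mathrm{Im}(s)$ it is bilipschitz equivalent to a norm there, and since $\mathrm{Im}(c)$ is a subspace of the finite-dimensional space $\mathcal{R}$, all norms on $\mathrm{Im}(c)$ are bilipschitz equivalent. Composing these comparisons, there is a constant $K > 0$ depending only on $d$ such that $d(R_1, R_2) \leq K \cdot L_2(x - y)$. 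It therefore suffices to exhibit a constant $C > 0$, independent of $R_1, R_2$, with $L_2(x - y) \leq C\,\theta$, since one may then take $\ell_1 = 1/(KC)$.

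If $\theta = 0$, then $x$ and $y$ are positive scalar multiples, and $n(x) = n(y) = 1$ then forces $x = y$, so both sides of the desired inequality vanish; thus I assume $\theta > 0$ and consider the nondegenerate triangle with vertices $O$, $x$, $y$. Let $\psi_x$ denote the angle between $x$ and $y-x$, and $\psi_y$ the angle between $y$ and $x-y$. The unit ball of $n$ is a bounded convex subset of the finite-dimensional space $\mathrm{Im}(c)$ whose interior contains the origin, so its boundary $S$ satisfies the hypotheses of Lemma~\ref{lemma:lower_bound_lemma_0}. This yields a constant $A > 0$, not depending on $x, y$, such that $\psi_x, \psi_y \geq A$. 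The interior-angle sum of the triangle gives $\theta + (\pi - \psi_x) + (\pi - \psi_y) = \pi$, i.e. $\psi_x + \psi_y = \pi + \theta$, and in particular $\psi_y \leq \pi + \theta - A$.

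The law of sines applied to the triangle $O, x, y$ gives $L_2(x - y) = L_2(x)\sin\theta/\sin\psi_y$, where $L_2(x) \leq M := \sup_{v \in S} L_2(v) < \infty$ by compactness of $S$. The main obstacle is bounding $\sin\psi_y$ below, since Lemma~\ref{lemma:lower_bound_lemma_0} alone does not prevent $\psi_y$ from approaching $\pi$ when $\theta$ is comparable to $A$. I handle this with a case split on $\theta$. If $\theta \leq A/2$, then $\psi_y \in [A, \pi - A/2]$, and since $\sin$ is positive and concave on $(0, \pi)$ its minimum over that interval is $\mu := \min(\sin A, \sin(A/2)) > 0$, which gives $L_2(x - y) \leq (M/\mu)\,\theta$. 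If instead $\theta > A/2$, the trivial bound $L_2(x - y) \leq 2M$ is already enough, since $2M \leq (4M/A)\,\theta$ in that regime. In either case $L_2(x - y) \leq C\,\theta$ with $C = \max(M/\mu,\, 4M/A)$, and composing with the reduction of the first paragraph completes the argument with $\ell_1 = 1/(KC)$.
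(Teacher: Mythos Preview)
Your proof is correct and takes a genuinely different, more economical route than the paper's argument. Both proofs reduce to controlling the geometry of the triangle $O, s(R_1), s(R_2)$ inside the unit $n$-sphere in $\mathrm{Im}(c)$, and both invoke Lemma~\ref{lemma:lower_bound_lemma_0}, but they use it differently. The paper applies the lemma only at $s(R_1)$, obtaining a wedge constraint on the direction of $s(R_2)-s(R_1)$; it then separately exploits convexity of the unit ball to introduce a second angle bound $\beta$ coming from the aspect ratio of the sphere, takes $\gamma=\min(\alpha,\beta)$, and argues pictorially (via several diagrams) to isolate the worst-case position of $s(R_2)$ before applying the law of sines. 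Your argument instead invokes Lemma~\ref{lemma:lower_bound_lemma_0} symmetrically at \emph{both} $x$ and $y$, and combines these lower bounds with the interior-angle-sum identity $\psi_x+\psi_y=\pi+\theta$ to trap $\psi_y$ in an interval bounded away from both $0$ and $\pi$ when $\theta$ is small; a single application of the law of sines plus the trivial diameter bound for large $\theta$ then finishes. This eliminates the separate $\beta$ argument and the figure-based region analysis entirely. One small point worth making explicit: your interval $[A,\pi-A/2]$ is nonempty only when $A\le 2\pi/3$, which you do not justify; this is harmless since Lemma~\ref{lemma:lower_bound_lemma_0} only asserts existence of some $A>0$, so one may shrink $A$ as needed (and in fact the proof of that lemma yields $A\le\pi/4$).
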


\ifshowproofs
\begin{proof}
Let $d$ be an arbitrary STARC-metric, and let $R_1$ and $R_2$ be two arbitrary reward functions for which neither $s(R_1)$ or $s(R_2)$ is $0$. Recall that $d(R_1, R_2) = m(s(R_1),s(R_2))$, where $m$ is a metric that is bilipschitz equivalent to some norm. Since all norms are bilipschitz equivalent on any finite-dimensional vector space, this means that $m$ is bilipschitz equivalent to the $L_2$-norm. Thus, there are positive constants $p$, $q$ such that 
$$
p \cdot m(s(R_1), s(R_2)) \leq L_2(s(R_1), s(R_2)) \leq q \cdot m(s(R_1), s(R_2)).
$$
In particular, the $L_2$-distance between $s(R_1)$ and $s(R_2)$ is at least $\epsilon = p \cdot d(R_1, R_2)$. For the rest of our proof, it will be convenient to assume that $\epsilon < L_2(s(R_1))$; this can be ensured by picking a $p$ that is sufficiently small.

Let us plot the plane which contains $s(R_1)$, $s(R_2)$, and the origin, and orient it so that $s(R_1)$ points straight up, and so that $s(R_2)$ is not on the left-hand side:

\begin{figure}[H]
    \centering
    \includegraphics[width=2\textwidth/3]{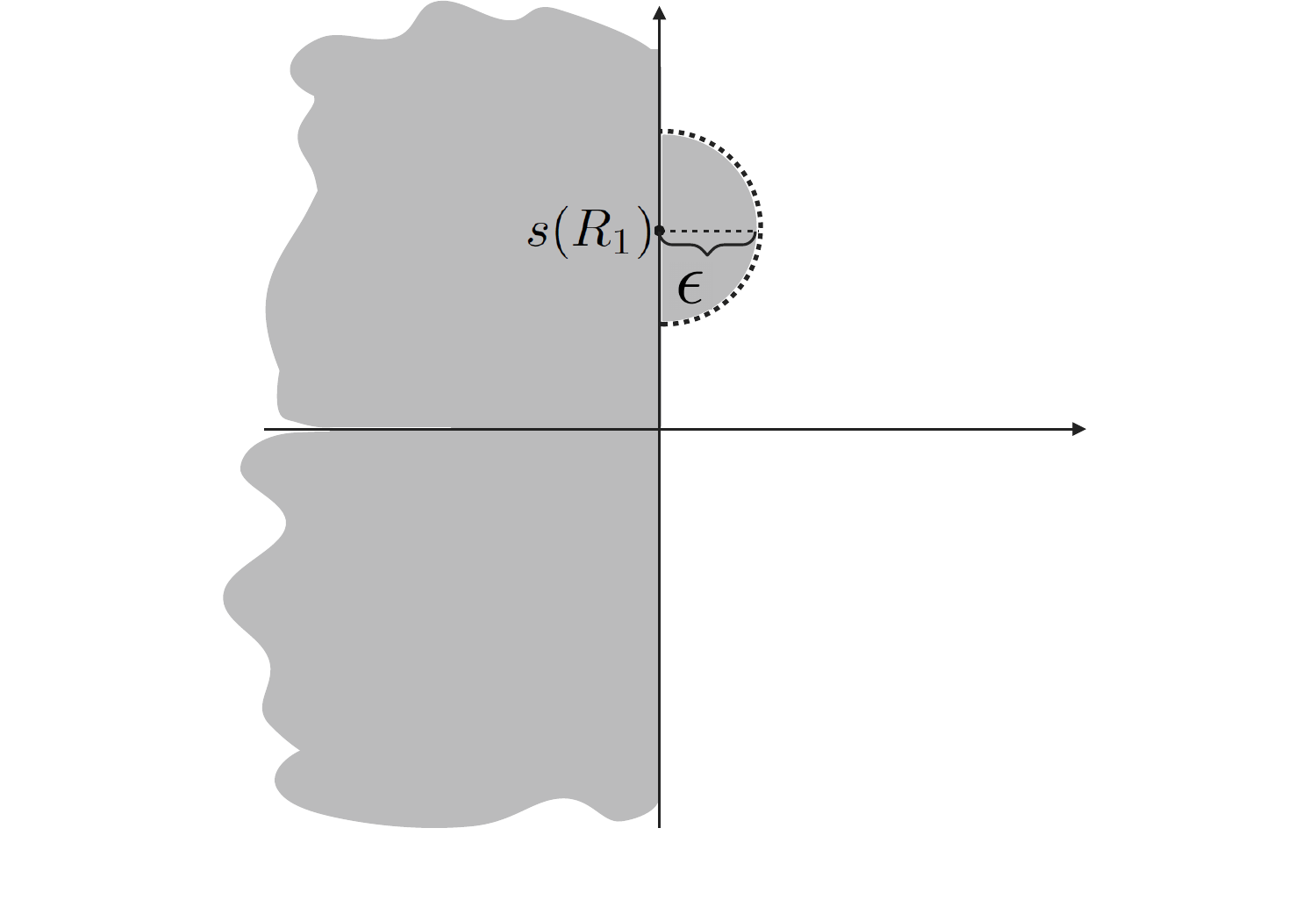}
    \label{fig:lemma_64_1}
\end{figure}

Since the distance between $s(R_1)$ and $s(R_2)$ is at least $\epsilon$, and since $s(R_2)$ is not on the left-hand side, we know that $s(R_2)$ cannot be inside of the region shaded grey in the figure above (though it may be on the boundary). Moreover, as per Lemma~\ref{lemma:lower_bound_lemma_0}, we know that there is an $\alpha > 0$ (named $A$ in the statement of Lemma~\ref{lemma:lower_bound_lemma_0}) such that the angle between $s(R_1)$ and $s(R_2) - s(R_1)$ is at least $\alpha$. This means that we also can rule out the following region:

\begin{figure}[H]
    \centering
    \includegraphics[width=2\textwidth/3]{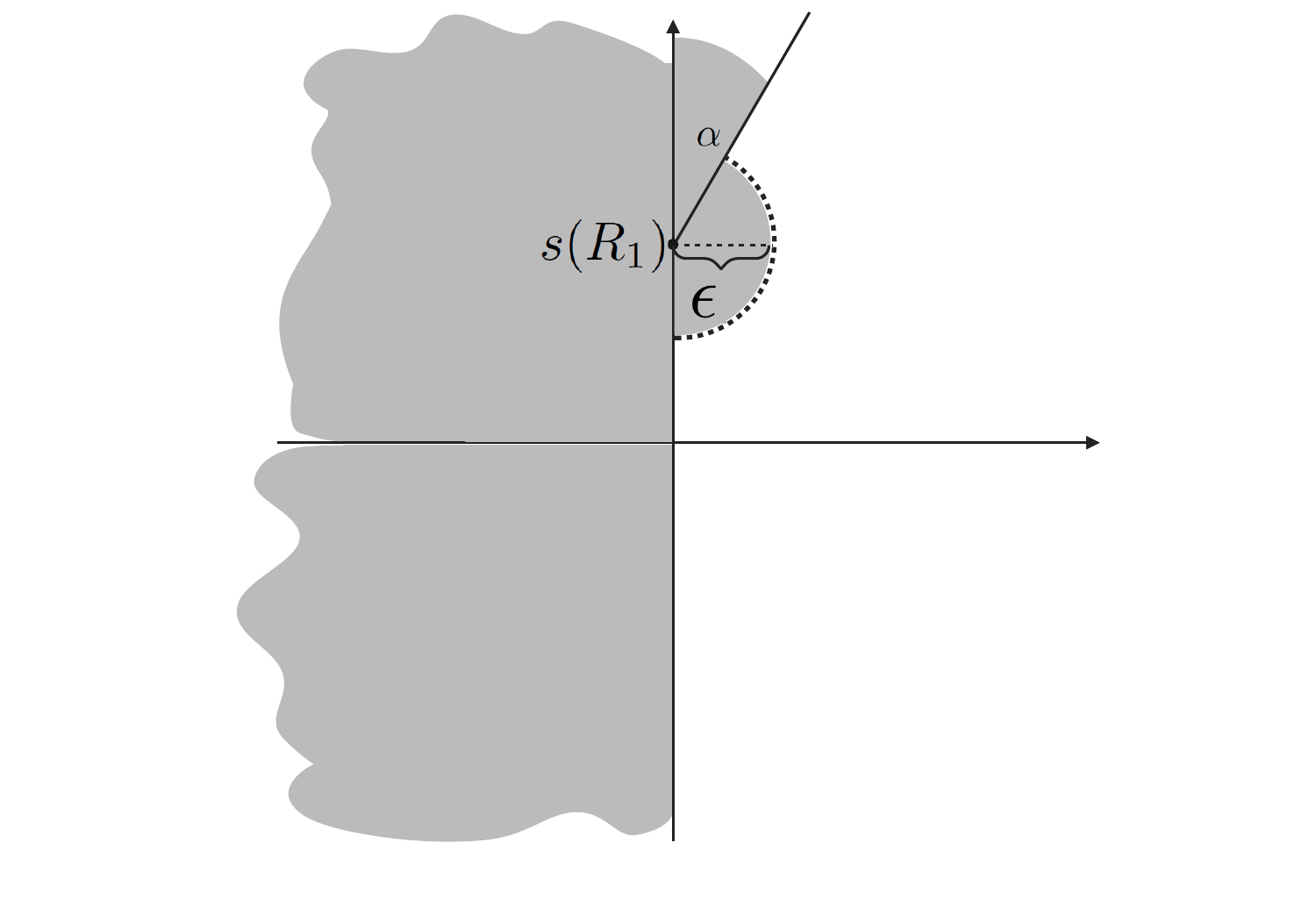}
    \label{fig:lemma_64_2}
\end{figure}

Moreover, let $v$ be the element of $\mathrm{Im}(s)$ that is perpendicular to $s(R_1)$, lies on a plane with $s(R_1)$, $s(R_2)$, and the origin, and points in the same direction as $s(R_2)$ within this plane. Since $\mathrm{Im}(s)$ is the boundary of a convex set, we know that $s(R_2)$ cannot lie within the triangle formed by the $x$-axis, the $y$-axis, and the line between $s(R_1)$ and $v$:

\begin{figure}[H]
    \centering
    \includegraphics[width=2\textwidth/3]{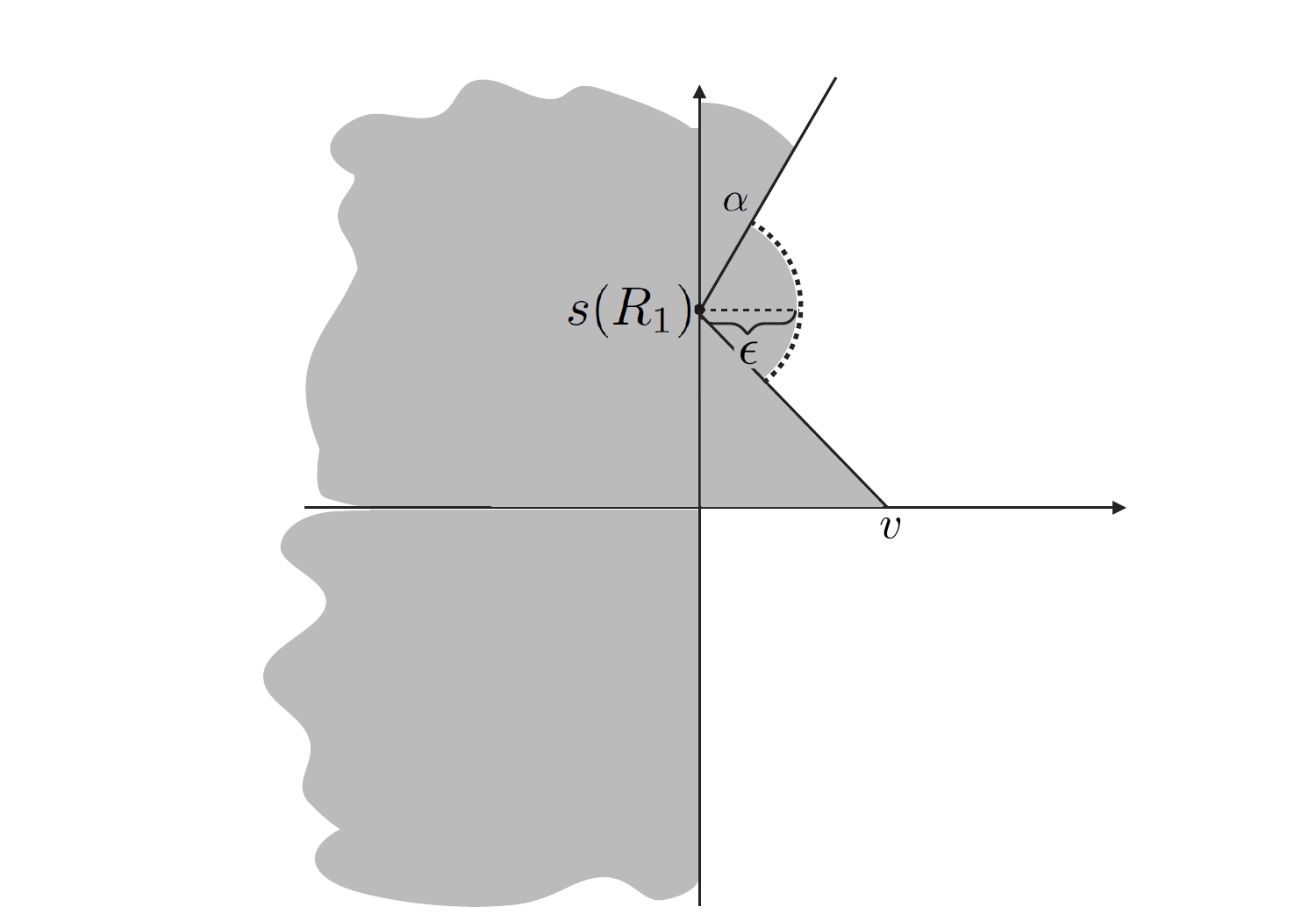}
    \label{fig:lemma_64_3}
\end{figure}

Since $\mathrm{Im}(s)$ is compact, we know that there is a vector $a$ in $\mathrm{Im}(s)$ whose $L_2$-norm is bigger than all other vectors in $\mathrm{Im}(s)$, and a (non-zero) vector $b$ in $\mathrm{Im}(s)$ whose $L_2$-norm is smaller than all other (non-zero) vectors in $\mathrm{Im}(s)$, by the extreme value theorem. From this, we can infer that the angle between $s(R_1)$ and $v - s(R_1)$ is at least $\beta = \mathrm{arctan}(b/a)$. Also note that $\beta > 0$.

We now have everything we need to derive a lower bound on the angle $\theta$ between $s(R_1)$ and $s(R_2)$. First note that this angle can be no greater than the angle between $s(R_1)$ and the points marked $A$ and $B$ in the figure below (whichever is smaller):

\begin{figure}[H]
    \centering
    \includegraphics[width=2\textwidth/3]{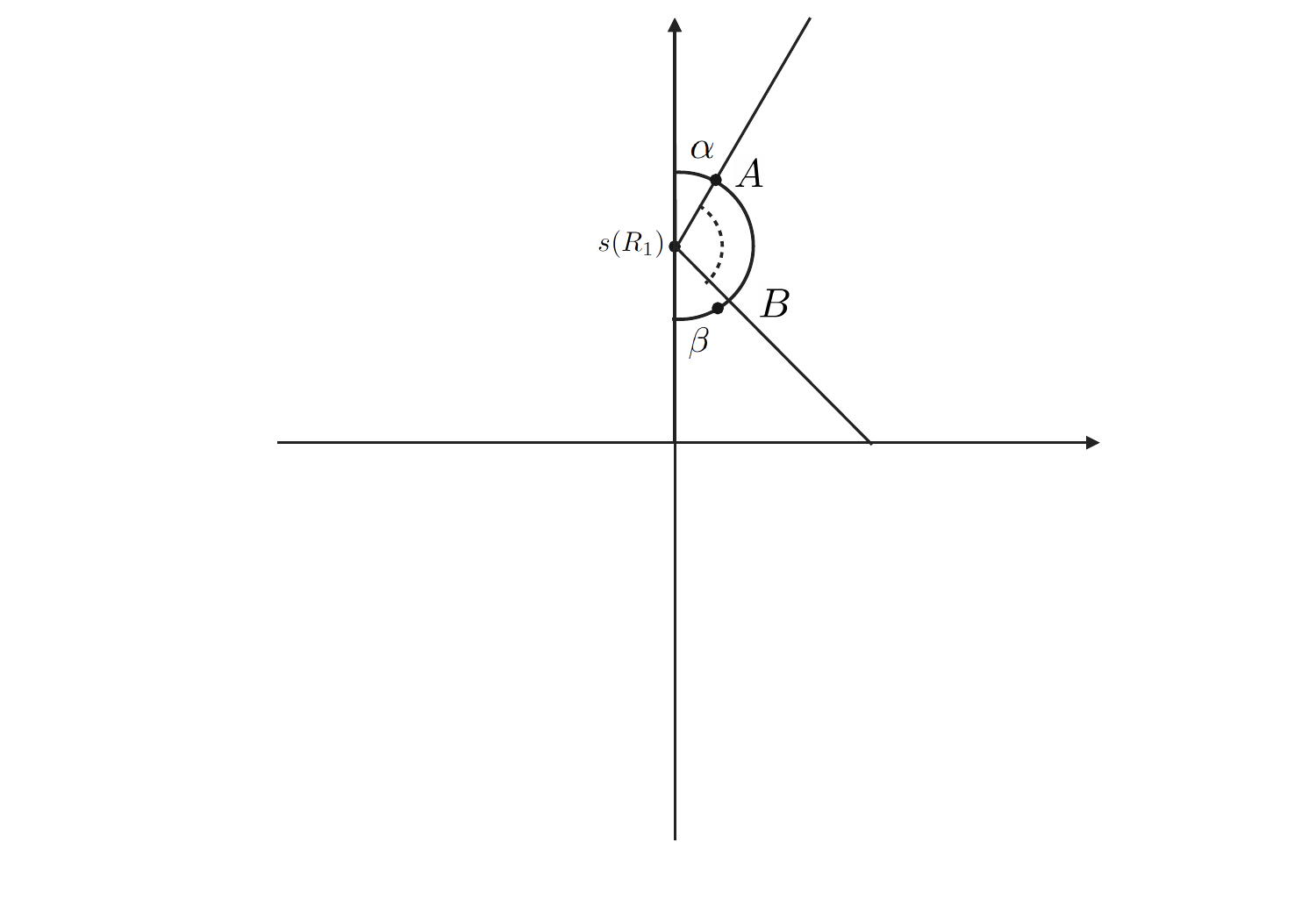}
    \label{fig:lemma_64_4}
\end{figure}

To make things easier, replace both $\alpha$ and $\beta$ with $\gamma = \mathrm{min}(\alpha, \beta)$. Since this makes the shaded region smaller, we still have that $s(R_2)$ cannot be in the interior of the new shaded region. Moreover, in this case, we know that the angle between $s(R_1)$ and $s(R_2)$ is no smaller than the angle $\theta'$ between $s(R_1)$ and the point marked $A$:

\begin{figure}[H]
    \centering
    \includegraphics[width=2\textwidth/3]{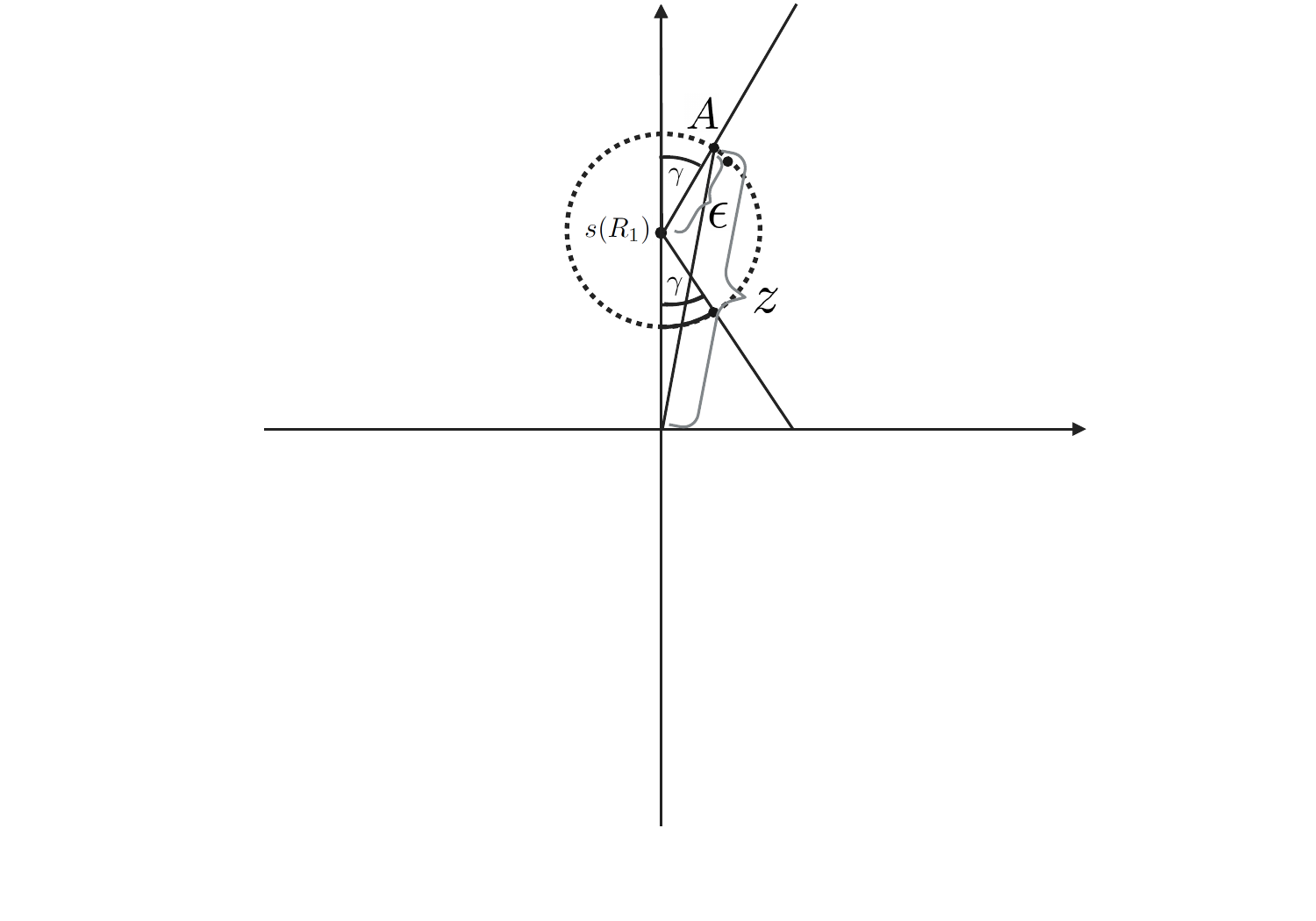}
    \label{fig:lemma_64_5}
\end{figure}

Deriving this angle is now just a matter of trigonometry. Letting $z$ denote $L_2(A)$, we have that:
$$
\frac{\epsilon}{\sin(x)} = \frac{z}{\sin(\pi - \gamma)} = \frac{z}{\sin(\gamma)}
$$
From this, we get that 
\begin{align*}
    \theta' &= \arcsin\left(\left(\frac{\epsilon}{z}\right) \sin(\gamma)\right)\\
            &\geq \left(\frac{\epsilon}{z}\right) \sin(\gamma)
\end{align*}
Moreover, it is also straightforward to find an upper bound $z'$ for $z$. Specifically, we have that $z^2 = L_2(s(R_1))^2 + \epsilon^2 - 2L_2(s(R_1))\epsilon \cos(\pi-\gamma)$. Since $\epsilon < L_2(s(R_1))$, this means that 
$$
z < \sqrt{2L_2(s(R_1))^2 - 2L_2(s(R_1))^2 \cos(\pi-\gamma)}.
$$
Moreover, since $\mathrm{Im}(s)$ is compact, there is a vector $a$ in $\mathrm{Im}(s)$ whose $L_2$-norm is bigger than all other vectors in $\mathrm{Im}(s)$. We thus know that 
$$
z < z' = \sqrt{2L_2(a)^2 - 2L_2(a)^2 \cos(\pi-\gamma)}.
$$
Putting this together, we have that 
$$
\theta \geq \theta' \geq \left(\frac{\epsilon}{z'}\right) \sin(\gamma) = m(s(R_1), s(R_2)) \cdot p \cdot \left(\frac{\sin(\gamma)}{z'}\right).
$$
Setting $\ell_1 = p \cdot \left(\frac{\sin(\gamma)}{z'}\right)$ thus completes the proof.
\end{proof}
\fi

Finally, before we can give the full proof, we will also need the following:

\begin{lemma}\label{lemma:lower_bound_lemma_2}
For any invertible matrix $M : \mathbb{R}^n \to \mathbb{R}^n$ there is an $\ell_2 \in (0, 1]$ such that for any $v, w \in \mathbb{R}^n$, the angle $\theta'$ between $M v$ and $M w$ satisfies $\theta' \geq \ell_2 \cdot \theta$, where $\theta$ is the angle between $v$ and $w$.
\end{lemma}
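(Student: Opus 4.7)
The plan is to view the angle-ratio $F(v,w) := \theta'/\theta$ as a continuous function on the domain $D = \{(v,w) \in S^{n-1} \times S^{n-1} : v \neq w\}$ (noting that $F$ is invariant under independent positive scaling of $v$ and $w$), and to show it has a strictly positive infimum. Taking $\ell_2 = \min(\inf_D F, 1)$ then gives the required constant. The domain $D$ is not compact --- its closure adds the diagonal $\{(v,v)\}$ --- so the argument splits into (i) controlling $F$ on regions where $v$ and $w$ are bounded away from each other, and (ii) controlling $F$ as $w \to v$.

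For (i), fix any $\delta > 0$ and restrict to the compact set $K_\delta = \{(v,w) \in S^{n-1}\times S^{n-1} : \text{angle}(v,w) \geq \delta\}$. On $K_\delta$, $F$ is continuous, and it is strictly positive because $M$ is invertible: if $Mv$ and $Mw$ were parallel then so would $v$ and $w$ be. By the extreme value theorem, $F \geq c_2 > 0$ on $K_\delta$.

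For (ii), I would linearise. Any $w \in S^{n-1}$ with $\text{angle}(v,w) = \theta$ can be written uniquely as $w = \cos(\theta) v + \sin(\theta) u$ with $u \in S^{n-1}$ and $u \perp v$. Decomposing $Mu = \alpha(v,u) \, Mv + z(v,u)$ with $z \perp Mv$, and using $Mw = (\cos\theta + \alpha \sin\theta)\, Mv + \sin(\theta)\, z$, elementary trigonometry gives
\[
\tan \theta' \;=\; \frac{\sin(\theta)\,\|z\|}{\bigl|\cos\theta + \alpha \sin\theta\bigr|\,\|Mv\|}.
\]
Define $\phi(v,u) = \|z(v,u)\|/\|Mv\|$ on the compact set $T = \{(v,u) \in S^{n-1}\times S^{n-1} : u \perp v\}$. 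Since $M$ is invertible and $u \not\parallel v$, $Mu$ is not parallel to $Mv$, so $z \neq 0$ and $\phi > 0$ on $T$. By continuity and compactness, $\phi \geq c_1 > 0$ on $T$; similarly $|\alpha| \leq A < \infty$ on $T$.

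Given these uniform bounds, choosing $\delta$ small enough that $\cos\delta - A \sin\delta \geq 1/2$ ensures that for all $(v,w)$ with $\theta = \text{angle}(v,w) < \delta$, $\tan\theta' \geq \frac{1}{2} c_1 \sin\theta$. Using $\sin\theta \geq \tfrac{2}{\pi}\theta$ on $[0,\pi/2]$ and $\arctan(x) \geq x/2$ for small $x$, this yields $\theta' \geq c_1 \theta/(2\pi)$ throughout the small-angle regime, uniformly in $v$ and $u$. Combining with step (i) applied to this same $\delta$, we obtain $F \geq \min(c_2, c_1/(2\pi))$ on all of $D$, and the conclusion follows.

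The main obstacle is the uniformity in (ii): we need a lower bound on $\theta'/\theta$ as $\theta \to 0$ that does not depend on the direction $u$ of approach to $v$, nor on $v$ itself. The invertibility of $M$ supplies a pointwise positive lower bound, and the compactness of the unit tangent bundle $T$ of $S^{n-1}$ upgrades this to a uniform one; the remaining bookkeeping (controlling the denominator $\cos\theta + \alpha\sin\theta$ and passing between $\tan\theta'$ and $\theta'$) is routine.
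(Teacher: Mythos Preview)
Your argument is correct and takes a genuinely different route from the paper's. You run a soft compactness argument: split the domain into a compact region $K_\delta$ where the angle is bounded away from zero, and a small-angle regime handled by linearising and invoking compactness of the unit tangent bundle $T = \{(v,u)\in S^{n-1}\times S^{n-1}: u\perp v\}$. The paper instead argues constructively: it first treats the two-dimensional case via the singular value decomposition $M=U\Sigma V^\top$, reducing to a diagonal matrix $\Sigma=\mathrm{diag}(\sigma,1)$, and then carries out an explicit trigonometric optimisation to show $\theta'\geq \min(\sigma,\sigma^{-1})\cdot\theta$; the $n$-dimensional case is obtained by restricting $M$ to the plane $\mathrm{span}(v,w)$ and bounding the singular values of the restriction by those of $M$. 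The payoff of the paper's approach is an explicit constant $\ell_2=\sigma_{\min}(M)/\sigma_{\max}(M)$, whereas yours only yields existence --- though for the downstream use of this lemma (in the completeness proof for STARC metrics) existence is all that is needed. One small bookkeeping point: the condition $\cos\delta - A\sin\delta\geq 1/2$ that you state gives a \emph{lower} bound on the denominator $|\cos\theta+\alpha\sin\theta|$ (ensuring positivity, hence the validity of the $\tan\theta'$ formula), but to conclude $\tan\theta'\geq\tfrac12 c_1\sin\theta$ you also need an \emph{upper} bound on that denominator, e.g.\ $|\cos\theta+\alpha\sin\theta|\leq 1+A\sin\delta\leq 2$, which holds once $\delta\leq 1/A$ as well. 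This is indeed routine, as you say.
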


\ifshowproofs
\begin{proof}
We will first prove that this holds in the $2$-dimensional case, and then extend this proof to the general $n$-dimensional case.

Let $M$ be an arbitrary invertible matrix $\mathbb{R}^2 \to \mathbb{R}^2$.
First note that we can factor $M$ via Singular Value Decomposition into three matrices $U$, $\Sigma$, $V$, such that $M = U \Sigma V^\top$, where $U$ and $V$ are orthogonal matrices, and $\Sigma$ is a diagonal matrix with non-negative real numbers on the diagonal. 
Since $M$ is invertible, we also have that $\Sigma$ cannot have any zeroes along its diagonal.
Next, recall that orthogonal matrices preserve angles.
This means that we can restrict our focus to just $\Sigma$.\footnote{If there are vectors $x,y$ such that the angle between $x$ and $y$ is $\theta$ and the angle between $Mx$ and $My$ is $\theta'$, then there are vectors $v,w$ such that the angle between $x$ and $y$ is $\theta$ and the angle between $\Sigma v$ and $\Sigma w$ is $\theta'$, and vice versa.}

Let $\alpha$ and $\beta$ be the singular values of $M$. We may assume, without loss of generality, that 
$$
\Sigma = \begin{pmatrix}\alpha & 0\\0 & \beta\end{pmatrix}.
$$ 
Moreover, since scaling the $x$ and $y$-axes uniformly will not affect the angle between any vectors after multiplication, we can instead equivalently consider the matrix 
$$
\Sigma = \begin{pmatrix}\alpha/\beta & 0\\0 & 1\end{pmatrix}.
$$
Let $v, w \in \mathbb{R}^2$ be two arbitrary vectors with angle $\theta$, and let $\theta'$ be the angle between $\Sigma v$ and $\Sigma w$. We will derive a lower bound on $\theta'$ expressed in terms of $\theta$. Moreover, since the angle between $v$ and $w$ is not affected by their magnitude, we will assume (without loss of generality) that both $v$ and $w$ have length 1 (under the $L_2$-norm).

First, note that if $\theta = \pi$ then $v = -w$. This means that $\Sigma v = -\Sigma w$, since $\Sigma$ is a linear transformation, which in turn means that $\theta' = \pi$. Thus $\theta' \geq \ell_2 \cdot \theta$ as long as $\ell_2 \leq 1$. Next, assume that $\theta < \pi$.

We may assume (without loss of generality) that the angle between $v$ and the $x$-axis is no bigger than the angle between $w$ and the $x$-axis. 
Let $\phi$ be the angle between the $x$-axis and the vector that is in the middle between $v$ and $w$. 
This means that we can express $v$ as $(\cos(\phi - \theta/2), \sin(\phi - \theta/2))$ and $w$ as $(\cos(\phi + \theta/2), \sin(\phi + \theta/2))$. Moreover, since reflection along either of the axes will not change the angle between either $v$ and $w$ or $\Sigma v$ and $\Sigma w$, we may assume (without loss of generality) that $\phi \in [0, \pi/2]$. For convenience, we will also let $\sigma = \alpha/\beta$.

\begin{figure}[H]
    \centering
    \includegraphics[width=3\textwidth/4]{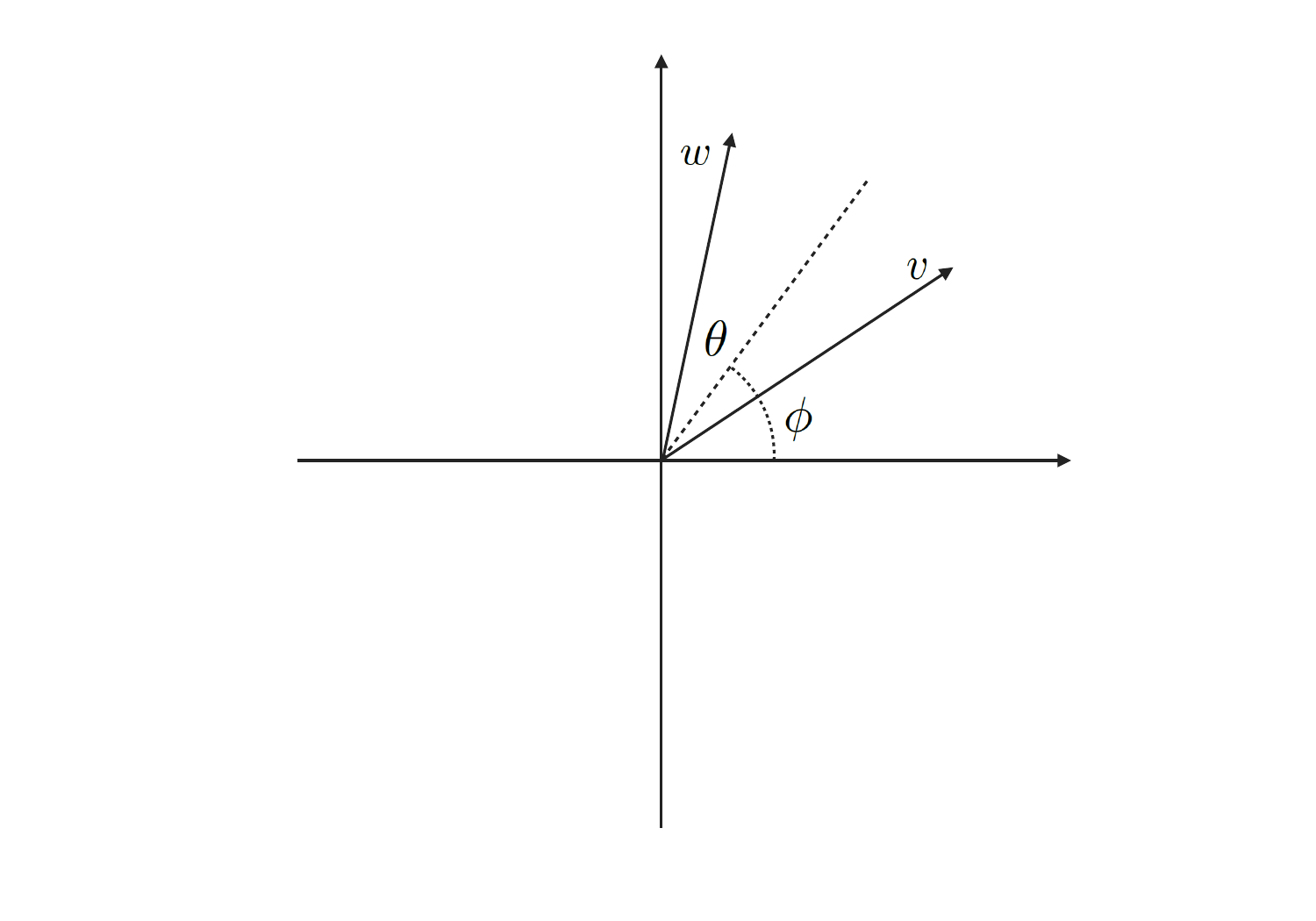}
    \label{fig:lemma_65_1}
\end{figure}

(Note that we can visualise the action of $\Sigma$ as scaling the $x$-axis in the figure above by $\sigma$.)

We now have that $\Sigma v = (\sigma \cos(\phi - \theta/2), \sin(\phi - \theta/2))$ and $\Sigma w = (\sigma \cos(\phi + \theta/2), \sin(\phi + \theta/2))$. 
Using the dot product, we get that $\cos(\theta')$ equals
$$
\frac{\sigma^2 \cos(\phi - \theta/2)\cos(\phi + \theta/2) + \sin(\phi - \theta/2)\sin(\phi + \theta/2)}{\sqrt{\sigma^2 \cos^2(\phi - \theta/2) + \sin^2(\phi - \theta/2)}\sqrt{\sigma^2 \cos^2(\phi + \theta/2) + \sin^2(\phi + \theta/2)}}.
$$
We next note that if $\theta \in [0,\pi)$ and $\phi \in [0, \pi/2]$, then the derivative of this expression with respect to $\phi$ can only be 0 when $\phi \in \{0, \pi/2\}$.\footnote{For example, this may be verified using tools such as Wolfram Alpha.} This means that $\cos(\theta')$ must be maximised or minimised when $\phi$ is either $0$ or $\pi/2$, which in turn means that the angle $\theta'$ must be minimised or maximised when $\phi$ is either $0$ or $\pi/2$.

It is now easy to see that if $\sigma >1$ then $\theta'$ is minimised when $\phi = 0$, and that if $\sigma < 1$ then $\theta'$ is minimised when $\phi = \pi/2$.
Moreover, if $\phi = \pi/2$, then 
$$
\theta' = 2 \arctan \left(\frac{\sigma \cos(\pi/2 -\theta/2)}{\sin(\pi/2 -\theta/2)}\right) = 2 \arctan \left(\sigma \tan(\theta/2)\right),
$$
which in turn is greater than $\theta \cdot \sigma$ when $\sigma < 1$.\footnote{To see this, let $x = \tan(\theta/2)$. Now $2 \arctan \left(\sigma \tan(\theta/2)\right) > \sigma \cdot \theta$ for all $\theta \in [0,\pi)$ if and only if $\arctan \left(\sigma x\right) > \sigma \cdot \arctan(x)$ for all $x \geq 0$. This is true, since $\arctan$ is strictly concave on $[0,\infty)$.}
Similarly, if $\phi = 0$, then 
$$
\theta' = 2 \arctan \left(\frac{\sin(\theta/2)}{\sigma \cos(\theta/2)}\right) = 2 \arctan \left(\sigma^{-1} \tan(\theta/2)\right),
$$
which is in turn greater than $\sigma^{-1} \cdot \theta$ when $\sigma > 1$. In either case, we thus have
$$
\theta' \geq \theta \cdot \min(\sigma, \sigma^{-1}) = \theta \cdot \min(\beta/\alpha, \alpha/\beta).
$$
We have therefore show that, for any invertible matrix $M : \mathbb{R}^2 \to \mathbb{R}^2$, there exists a positive constant $\min(\beta/\alpha, \alpha/\beta)$, where $\alpha$ and $\beta$ are the singular values of $M$, such that if $v, w \in \mathbb{R}^2$ have angle $\theta$, then the angle between $Mv$ and $Mw$ is at least $\theta \cdot \min(\beta/\alpha, \alpha/\beta)$.

To generalise this to the general $n$-dimensional case, let $v, w \in \mathbb{R}^n$ be two arbitrary vectors. Consider the 2-dimensional linear subspace given by $S = \mathrm{span}(v, w)$, and note that $M(S)$ also is a 2-dimensional linear subspace of $\mathbb{R}^n$ (since $M$ is linear and invertible).
The linear transformation which $M$ induces between $S$ and $M(S)$ is isomorphic to a linear transformation $M' : \mathbb{R}^2 \to \mathbb{R}^2$.\footnote{To see this, let $A$ be an orthonormal matrix that rotates $\mathbb{R}^2$ to align with $S$, and let $B$ be an orthonormal matrix that rotates $M(S)$ to align with $\mathbb{R}^2$. Now $M' = BMA$ is an invertible linear transformation $\mathbb{R}^2 \to \mathbb{R}^2$. Moreover, since orthonormal matrices preserve the angles between vectors, we have that $v, w \in S$ have angle $\theta$ and $Mv, Mw \in M(S)$ have angle $\theta'$, if and only if $A^{-1}v, A^{-1}w \in \mathbb{R}^2$ have angle $\theta$ and $BMv, BMw \in \mathbb{R}^2$ have angle $\theta'$. Note that $M'A^{-1}v = BMv$ and $M'A^{-1}w = BMw$. This means that there are $v, w \in S$ such that $v, w$ have angle $\theta$ and $Mv, Mw$ have angle $\theta'$, if and only if there are $v', w' \in \mathbb{R}^2$ such that $v', w'$ have angle $\theta$ and $M'v'$ and $M'w'$ have angle $\theta'$ (with $v' = A^{-1}v$ and $w' = A^{-1}w$).}
We can thus apply our previous result for the two-dimensional case, and conclude that if the angle between $v$ and $w$ is $\theta$, then the angle between $Mv$ and $Mw$ is at least $\theta \cdot \min(\beta/\alpha, \alpha/\beta)$, where $\alpha$ and $\beta$ are the singular values of $M'$. Next, note that the singular values of $M'$ cannot be smaller than the smallest singular values of $M$ or bigger than the biggest singular values of $M$. We can therefore let $\ell_2 = \alpha/\beta$, where $\alpha$ is the smallest singular value of $M$ and $\beta$ is the greatest singular value of $M$, and conclude that the angle between $Mv$ and $Mw$ must be at least $\ell_2 \cdot \theta$. Since the value of $\ell_2$ does not depend on $v$ or $w$, this completes the proof.
\end{proof}
\fi

\STARCcomplete*

\begin{proof}
Let $d$ be an arbitrary STARC metric. We need to show that there exists a positive constant $L$ such that, for any reward functions $R_1$ and $R_2$, there are two policies $\pi_1$, $\pi_2$ with $J_2(\pi_2) \geq J_2(\pi_1)$ and
$$
J_1(\pi_1) - J_1(\pi_2) \geq L \cdot (\max_\pi J_1(\pi) - \min_\pi J_1(\pi)) \cdot d(R_1, R_2),
$$
and if both $R_1$ and $R_2$ are trivial, then we have that $d(R_1, R_2) = 0$.

Let $c$ be the canonicalisation function of $d$, and let $s : \R \to \R$ be the function such that $s(R) = c(R)/n(c(R))$ if $n(c(R)) \neq 0$, and $c(R)$ otherwise, where $n$ is the norm used in the normalisation step of $c$.

First note that the last condition holds straightforwardly. 
If both $R_1$ and $R_2$ are trivial, then $c(R_1) = c(R_2) = R_0$, which implies that $d(R_1, R_2) = 0$. 

For the first condition, let us first consider the case when $R_1$ is trivial (and $R_2$ may be trivial or non-trivial). In this case the left-hand side is $0$ for all $\pi_1$ and $\pi_2$. Moreover, $\max_\pi J_1(\pi) - \min_\pi J_1(\pi) = 0$, and so the right-hand side is also $0$ (for any value of $L$). In this case, the inequality is therefore satisfied for any $L$.


Let us next consider the case where $R_2$ is trivial, but where $R_1$ is not. In this case, $J_2(\pi_2) \geq J_2(\pi_1)$ for all $\pi_1$ and $\pi_2$, which means that 
$$
\max_{\pi_1, \pi_2 : J_2(\pi_2) \geq J_2(\pi_1)} J_1(\pi_1) - J_1(\pi_2) = \max_\pi J_1(\pi) - \min_\pi J_1(\pi).
$$
Therefore, the inequality is satisfied as long as we pick an $L$ such that $L \cdot d(R_1, R_2) \leq 1$ for all $R_1$ and all trivial $R_2$. In other words, we need that $L \leq 1/\max_R d(R,R_0)$. This can be ensured by picking an $L$ that is sufficiently small (noting that any STARC metric $d$ is bounded).

Finally, let us consider the case where neither $R_1$ or $R_2$ is trivial, i.e., the case where $\max_\pi J_1(\pi) - \min_\pi J_1(\pi) > 0$ and $\max_\pi J_2(\pi) - \min_\pi J_2(\pi) > 0$. 
Let $m : \Pi \to \mathbb{R}^{|\States||\Actions||\States|}$ be the function that takes a policy $\pi$ and returns its occupancy measure $\eta^\pi$, and let $\Omega = \mathrm{Im}(m)$. Note that $m$ implicitly depends on $\tfunc$ and $\gamma$.
We will use $d$ to derive a lower bound on the angle $\theta$ between the level sets of $J_1$ and $J_2$ in $\Omega$. We will then show that $\Omega$ contains an open set with a certain diameter. From this, we can find two policies that incur a certain amount of regret.



First, by Lemma~\ref{lemma:lower_bound_lemma_1}, there exists an $\ell_1$ such that for any non-trivial $R_1$ and $R_2$, the angle between $s(R_1)$ and $s(R_2)$ is at least $\ell_1 \cdot d(R_1, R_2)$. To make our proof easier, we will assume that we pick an $\ell_1$ that is small enough to ensure that $\ell_1 \cdot d(R_1, R_2) \leq \pi/2$ for all $R_1, R_2$. Since $d$ is bounded, this is possible.

Note that $s(R_1)$ and $s(R_2)$ may not be parallel with $\Omega$, which means that the angle between $s(R_1)$ and $s(R_2)$ may not be the same as the angle between the level sets of $s(R_1)$ and $s(R_2)$ in $\Omega$.
Therefore, consider the matrix $M$ that projects $\R$ onto the linear subspace of $\R$ that is parallel to $\Omega$, where $c$ is the canonicalisation function of $d$.
Now the angle between $M s(R_1)$ and $M s(R_2)$ is the same as the angle between the level sets of the linear functions which $J_1$ and $J_2$ induce on $\Omega$.
Moreover, recall that $M$ is a canonicalisation function (Proposition~\ref{prop:occupancy_measure_projection_is_canonicalisation}). This means that the elements of $\mathrm{Im}(M)$ and $\mathrm{Im}(c)$ can be put in a one-to-one correspondence, and so $M$ is invertible when viewed as a function from $\mathrm{Im}(c)$. Also note that $s(R_1), s(R_2) \in \mathrm{Im}(c)$.
We can therefore apply Lemma~\ref{lemma:lower_bound_lemma_1}, and conclude that there exists an $\ell_2 \in (0,1]$, such that the angle $\theta$ between the normal vectors (and hence the level sets) of $s(R_1)$ and $s(R_2)$ in $\Omega$ is at least $\ell_2 \cdot \ell_1 \cdot d(R_1, R_2)$.
Moreover, since $\ell_1 \cdot d(R_1, R_2) \leq \pi/2$, and since $\ell_2 \leq 1$, we have that $\ell_2 \cdot \ell_1 \cdot d(R_1, R_2) \leq \pi/2$.

This gives us that, for any two policies $\pi_1, \pi_2$, we have:
\begin{align*}
    J_1(\pi_1) - J_1(\pi_2) &= J_1^C(\pi_1) - J_1^C(\pi_2)\\
    &=c(R_1) \cdot m(\pi_1) - c(R_1) \cdot m(\pi_2)\\
    &= c(R_1) \cdot (m(\pi_1) - m(\pi_2))\\
    &= M(c(R_1)) \cdot (m(\pi_1) - m(\pi_2))\\
    &= L_2(M(c(R_1))) \cdot L_2(m(\pi_1) - m(\pi_2)) \cdot \cos(\phi)
\end{align*}
where $\phi$ is the angle between $M(c(R_1))$ and $m(\pi_1) - m(\pi_2)$, and $J_1^C$ is the evaluation function of $c(R_1)$. Note that the first and fourth line follow from Proposition~\ref{prop:change_from_potentials}. We can thus derive a lower bound on worst-case regret by deriving a lower bound for the greatest value of this expression.

By Lemma~\ref{lemma:homeomorphism}, we have that $\Omega$ contains a set that is open in the smallest affine space which contains $\Omega$. 
This means that there is an $\epsilon$, such that $\Omega$ contains a sphere of diameter $\epsilon$. 
We will show that we always can find two policies within this sphere that incur a certain amount of regret.
Consider the 2-dimensional cut which goes through the middle of this sphere and is parallel with the normal vectors of the level sets of $J_1$ and $J_2$ in $\Omega$. The intersection between this cut and the $\epsilon$-sphere forms a 2-dimensional circle with diameter $\epsilon$.
Let $\pi_1, \pi_2$ be the two policies for which $m(\pi_1)$ and $m(\pi_2)$ lie opposite to each other on this circle, and satisfy that $J_2(\pi_1) = J_2(\pi_2)$ (or, equivalently, that $Mc(R_1) \cdot m(\pi_1) = Mc(R_1) \cdot m(\pi_2)$). Without loss of generality, we may assume that $J_1(\pi_1) \geq J_1(\pi_2)$.

\begin{figure}[H]
    \centering
    \includegraphics[width=2\textwidth/3]{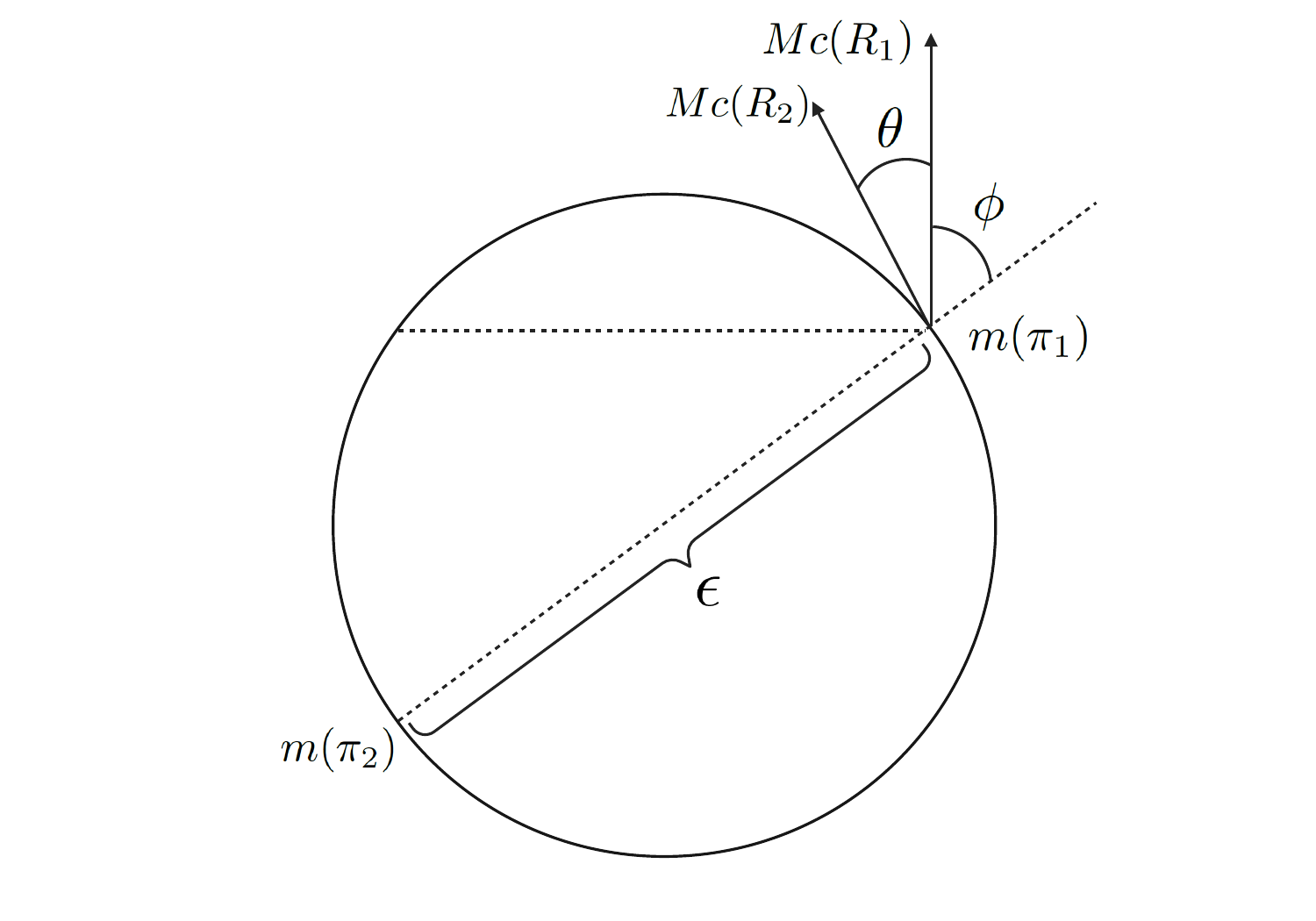}
    \label{fig:thm_66_1}
\end{figure}

Now note that $L_2(m(\pi_1) - m(\pi_2)) = \epsilon$. Moreover, recall that the angle $\theta$ between $Mc(R_1)$ and $Mc(R_2)$ is at least $\theta' = \ell_1 \cdot \ell_2 \cdot d(R_1, R_2)$, and that this quantity is at most $\pi/2$. This means that the angle $\phi$ is at most $\pi/2-\theta'$, and so $\cos(\phi)$ is at least $\cos(\pi/2 - \theta') = \cos(\pi/2 - \ell_1 \cdot \ell_2 \cdot d(R_1, R_2))$.
This means that we have two policies $\pi_1, \pi_2$ where $J_2(\pi_2) = J_2(\pi_1)$ and such that 
\begin{align*}
    J_1(\pi_1) - J_1(\pi_2) &= L_2(M(c(R_1))) \cdot L_2(m(\pi_1) - m(\pi_2)) \cdot \cos(\phi)\\
    &\geq L_2(M(c(R_1))) \cdot \epsilon \cdot \cos(\pi/2 - \ell_2 \cdot \ell_1 \cdot d(R_1, R_2))\\
    &= L_2(M(c(R_1))) \cdot \epsilon \cdot \sin(\ell_2 \cdot \ell_1 \cdot d(R_1, R_2)).
\end{align*}
Note that $\sin(x) \geq x \cdot 2/\pi$ when $x \leq \pi/2$, and that $\ell_2 \cdot \ell_1 \cdot d(R_1, R_2) \leq \pi/2$.
Putting this together, we have that there exists $\pi_1$, $\pi_2$ with $J_2(\pi_2) = J_2(\pi_1)$ such that
$$
J_1(\pi_1) - J_1(\pi_2) \geq L_2(M(c(R_1))) \cdot \left( \frac{\epsilon \cdot \ell_1 \cdot \ell_2 \cdot 2}{\pi} \right) \cdot d(R_1, R_2).
$$
Next, note that, if $p$ is a norm and $M$ is an invertible matrix, then $p \circ M$ is also a norm. Furthermore, recall that $\max_\pi J_1(\pi) - \min_\pi J_1(\pi)$ is a norm on $\mathrm{Im}(c)$, when $c$ is a canonicalisation function (Proposition~\ref{prop:J_norm}). 
Since all norms are equivalent on a finite-dimensional vector space, this means that there must exist a positive constant $\ell_3$ such that 
$$
L_2(M(c(R_1))) \geq \ell_3 \cdot (\max_\pi J_1^C(\pi) - \min_\pi J_1^C(\pi)) = \ell_3 \cdot (\max_\pi J_1(\pi) - \min_\pi J_1(\pi)).
$$
We can therefore set $L \leq \left( \epsilon \cdot \ell_1 \cdot \ell_2 \cdot \ell_3 \cdot 2 / \pi \right)$, and obtain the result we want:
$$
J_1(\pi_1) - J_1(\pi_2) \geq L \cdot (\max_\pi J_1(\pi) - \min_\pi J_1(\pi)) \cdot d(R_1, R_2).
$$
This completes the proof.
\end{proof}

\subsection{Partial Identifiability}

In this section, we provide the proofs of our results concerning partial identifiability. In the main text, these results are presented in Section~\ref{sec:partial_identifiability}.

\subsubsection{Invariances of Intermediate Objects}\label{appendix:intermediate_invariances}

Many types of policies can be computed via some intermediate objects. For example, the Boltzmann-rational policy can be computed by first computing the optimal advantage function $A^\star$, and then applying the softmax function.
Moreover, recall that for any two reward objects $f : \mathcal{R} \to X$ and $g : \mathcal{R} \to Y$, if there exists a function $h : X \to Y$ such that $h \circ f = g$, then $\mathrm{Am}(f) \refines \mathrm{Am}(g)$ (Lemma~\ref{lemma:ambiguity_inherited}).
This means that if $g(R)$ can be computed by first computing some intermediate object $f(R)$, then $g$ inherits all of the invariances of $f$. For example, $b_{\tfunc,\init,\beta}$ inherits the invariances of $A^\star$. For this reason, it will be useful to catalogue the invariances of a number of such objects, which we will do in this section.

We begin by deriving the invariances of different forms of $Q$-functions:

\begin{lemma}\label{lemma:ambiguity_Q_function}
For any transition function $\tfunc$, any discount factor $\gamma$, and any policy $\pi$, the $Q$-function $Q^\pi$ determines $R$ up to $\SR$.
\end{lemma}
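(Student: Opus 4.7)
The plan is to exploit the Bellman equation for $Q^\pi$ (Equation~\ref{equation:Q_pi_recursion}), together with the fact that it has a unique fixed point, to reduce the claim to an equality of expected rewards, which is exactly the definition of $S'$-redistribution. The key observation is that if we rewrite the Bellman equation as
\begin{equation*}
Q^\pi(s,a) = \mathbb{E}_{S' \sim \tfunc(s,a)}[R(s,a,S')] + \gamma \, \mathbb{E}_{S' \sim \tfunc(s,a), A' \sim \pi(S')}[Q^\pi(S',A')],
\end{equation*}
then the only way $R$ appears on the right-hand side is through the expectation $\mathbb{E}_{S' \sim \tfunc(s,a)}[R(s,a,S')]$. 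This quantity is precisely what is preserved by $S'$-redistribution.

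For the forward direction, I would assume $R_1$ and $R_2$ differ by $S'$-redistribution with $\tfunc$, so that $\mathbb{E}_{S' \sim \tfunc(s,a)}[R_1(s,a,S')] = \mathbb{E}_{S' \sim \tfunc(s,a)}[R_2(s,a,S')]$ for all $s,a$. Substituting into the Bellman equation shows that $Q^\pi_1$ satisfies the Bellman equation for $R_2$ (and vice versa). Since the Bellman operator for $Q^\pi$ is a contraction and hence has a unique fixed point (as noted just after Equation~\ref{equation:Q_pi_recursion}), we conclude $Q^\pi_1 = Q^\pi_2$.

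For the reverse direction, I would assume $Q^\pi_1 = Q^\pi_2$ and subtract the two Bellman equations. The $\gamma\,\mathbb{E}[Q^\pi(S',A')]$ terms are identical on both sides since $Q^\pi_1 = Q^\pi_2$ and both use the same $\tfunc$ and $\pi$, so they cancel. What remains is
\begin{equation*}
\mathbb{E}_{S' \sim \tfunc(s,a)}[R_1(s,a,S')] = \mathbb{E}_{S' \sim \tfunc(s,a)}[R_2(s,a,S')]
\end{equation*}
for all $s,a$, which is exactly the definition of $S'$-redistribution (Definition~\ref{def:sprime-redistribution}). By Definition~\ref{def:transformations_and_invariances}, this establishes the claim.

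I do not expect any real obstacle here; the argument is essentially a one-line application of the structure of the Bellman equation. The only minor subtlety worth being careful about is making sure that the cancellation in the reverse direction is legitimate — it relies on the fact that the "future-value" term depends on $R$ solely through $Q^\pi$ itself, together with the fixed transition $\tfunc$ and fixed policy $\pi$, so equality of $Q^\pi_1$ and $Q^\pi_2$ really does let us isolate the one-step expected reward.
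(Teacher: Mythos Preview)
Your argument is correct and matches the paper's proof essentially line-for-line: both rewrite the Bellman equation for $Q^\pi$ to isolate $\mathbb{E}_{S' \sim \tfunc(s,a)}[R(s,a,S')]$, then use uniqueness of the fixed point to conclude that $Q^\pi$ and this expected-reward function determine each other, which is precisely the $S'$-redistribution equivalence. The only cosmetic difference is that you split into two explicit directions whereas the paper phrases it as a single bijection between $Q^\pi$ and the expected-reward values.
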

\ifshowproofs\begin{proof}
Recall that $\Q^\pi$ is the only function which satisfies the Bellman equation
(Equation~\ref{equation:Q_pi_recursion}) for all $s\in\States$ and $a\in\Actions$:
$$
Q^\pi(s,a)
=
\Expect{
    S' \sim \tfunc(s,a),
    A' \sim \policy(S')
}
{
    R(s,a,S') + \gamma \cdot Q^\pi(S', A')
}.
$$
This equation can be rewritten as
$$
\Expect
    {S' \sim \tfunc(s,a)}
    {R(s,a,S')}
=
Q^\pi(s,a) - \gamma \cdot \Expect
    {S' \sim \tfunc(s,a), A' \sim \pi(S')} {Q^\pi(S', A')}.
$$
Since $Q^\pi$ is the only function which satisfies this equation for all
$s \in \States, a \in \Actions$, we have that the values of the left-hand
side for each $s \in \States, a \in \Actions$ together determine
$Q^\pi$, and vice versa. Since the left-hand side values are
preserved by $S'$-redistribution of $R$, and no other transformations,
we have that $Q^\pi$ is preserved by $S'$-redistribution of $R$, and no
other transformations.
\end{proof}\fi

\begin{lemma}\label{lemma:ambiguity_optimal_Q_function}
For any transition function $\tfunc$ and any discount factor $\gamma$, the optimal $Q$-function $Q^\star$ determines $R$ up to $\SR$.
\end{lemma}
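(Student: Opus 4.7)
The plan is to mirror the proof of Lemma~\ref{lemma:ambiguity_Q_function} almost verbatim, replacing the policy-expectation on the right-hand side of the Bellman equation with a max. Recall that $Q^\star$ is the unique fixed point of the Bellman optimality recursion (Equation~\ref{equation:optimal_Q_recursion}):
\[
Q^\star(s,a) = \Expect{S' \sim \tfunc(s,a)}{R(s,a,S') + \gamma \max_{a' \in \Actions} Q^\star(S', a')}.
\]
Rearranging isolates the only place where $R$ enters, namely the expected immediate reward:
\[
\Expect{S' \sim \tfunc(s,a)}{R(s,a,S')} = Q^\star(s,a) - \gamma \Expect{S' \sim \tfunc(s,a)}{\max_{a' \in \Actions} Q^\star(S', a')}.
\]

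From this rearranged form, both directions fall out. For sufficiency, if $R_1$ and $R_2$ differ by $S'$-redistribution (so their expected immediate rewards under $\tfunc$ agree on every $(s,a)$), then the recursion defining $Q^\star$ depends on $R$ only through these expectations; hence $Q^\star_1$ and $Q^\star_2$ satisfy the same equation, and uniqueness of the fixed point gives $Q^\star_1 = Q^\star_2$. For necessity, if $Q^\star_1 = Q^\star_2$, then the right-hand side above is identical for $R_1$ and $R_2$, so $\Expect{S' \sim \tfunc(s,a)}{R_1(s,a,S')} = \Expect{S' \sim \tfunc(s,a)}{R_2(s,a,S')}$ for every $(s,a)$, which is exactly the condition that $R_1$ and $R_2$ differ by $S'$-redistribution (Definition~\ref{def:sprime-redistribution}).

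I do not anticipate any real obstacle: the $\max$ term is a function purely of $Q^\star$ and $\tfunc$ (not of $R$ directly), so the same "the only $R$-dependent quantity is $\Expect{S' \sim \tfunc(s,a)}{R(s,a,S')}$" argument that worked for $Q^\pi$ works here. The one thing worth being slightly careful about is phrasing: in Lemma~\ref{lemma:ambiguity_Q_function} the right-hand side was linear in $R$, whereas here the rearranged equation involves $Q^\star$ on both sides, so the argument should be stated as "the map from expected-immediate-reward vectors to $Q^\star$ is a well-defined bijection onto its image" rather than as a direct algebraic inversion. This is immediate from the contraction-mapping characterisation of $Q^\star$ and does not require additional machinery.
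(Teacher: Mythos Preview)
Your proposal is correct and follows essentially the same approach as the paper's own proof, which simply says the argument is analogous to Lemma~\ref{lemma:ambiguity_Q_function} using the Bellman optimality equation in place of the policy Bellman equation. Your write-up is in fact more careful and explicit than the paper's (which contains minor typos), particularly in noting that the bijection between expected-immediate-reward vectors and $Q^\star$ follows from the contraction-mapping characterisation.
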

\ifshowproofs\begin{proof}
Analogous to Lemma~\ref{lemma:ambiguity_Q_function}, noting that $Q^\star$ is the only function which satisfies the Bellman optimality equation for all $s\in\States$ and $a\in\Actions$:
$$
Q^\star(s,a)
=
\Expect{
    S' \sim \tfunc(s,a),
    A' \sim \policy(S')
}
{
    R(s,a,S') + \gamma \cdot \max_{a' \in \Actions}Q^\pi(S', a')
}.
$$
\end{proof}\fi

\begin{lemma}\label{lemma:ambiguity_soft_Q_function}
For any transition function $\tfunc$, any discount $\gamma$, and any weight $\alpha$, the soft $Q$-function $\QSoft$ determines $R$ up to $\SR$.
\end{lemma}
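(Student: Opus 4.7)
The plan is to mirror the proofs of Lemma~\ref{lemma:ambiguity_Q_function} and Lemma~\ref{lemma:ambiguity_optimal_Q_function} almost verbatim, with the soft Bellman recursion (Equation~\ref{equation:soft_Q_recursion}) playing the role that the standard Bellman equation plays in those proofs. The key structural observation is that in the soft Bellman equation, the reward $R$ enters only through the term $\Expect{S' \sim \tfunc(s,a)}{R(s,a,S')}$; the remaining term on the right-hand side depends on $R$ only through $\QSoft$ itself.

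Concretely, I would start by isolating the reward-dependent term in the recursion. Rearranging Equation~\ref{equation:soft_Q_recursion} gives
\[
\Expect{S' \sim \tfunc(s,a)}{R(s,a,S')} = \QSoft(s,a) - \gamma \alpha \Expect{S' \sim \tfunc(s,a)}{\log \sum_{a' \in \Actions} \exp\!\left(\tfrac{1}{\alpha}\QSoft(S', a')\right)}.
\]
Since this recursion is a contraction map, $\QSoft$ is the unique fixed point for each reward $R$. Therefore the function $(s,a) \mapsto \Expect{S' \sim \tfunc(s,a)}{R(s,a,S')}$ determines $\QSoft$ (as the unique solution), and conversely $\QSoft$ determines this expectation via the rearrangement above. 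Hence two rewards $R_1, R_2$ produce the same $\QSoft$ if and only if they produce the same values of $\Expect{S' \sim \tfunc(s,a)}{R(s,a,S')}$ for every $(s,a)$.

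By Definition~\ref{def:sprime-redistribution}, this equality of expected immediate rewards is exactly the condition that $R_1$ and $R_2$ differ by an element of $\SR$. Thus $\QSoft$ determines $R$ up to $\SR$ and no larger set of transformations.

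I do not expect any real obstacle here; the only mild care point is asserting that the soft Bellman operator is a contraction (so that $\QSoft$ is uniquely determined by the expected-immediate-reward function), which is standard and is already used implicitly by the excerpt when stating that Equation~\ref{equation:soft_Q_recursion} has a unique solution. The proof is essentially a one-paragraph analogue of the previous two lemmas.
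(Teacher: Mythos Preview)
Your proposal is correct and follows exactly the same approach as the paper: the paper's proof simply states that the argument is analogous to Lemma~\ref{lemma:ambiguity_Q_function}, using the soft Bellman recursion in place of the standard one, which is precisely what you have written out in detail.
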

\ifshowproofs\begin{proof}
Analogous to Lemma~\ref{lemma:ambiguity_Q_function}, noting that $\QSoft$ is the only function which satisfies the following modified Bellman equation for all $s\in\States$ and $a\in\Actions$:
$$
\QSoft(s,a) = 
    \Expect{S' \sim \tfunc(s,a)}
    {
        R(s,a,S') + \gamma \alpha \log \sum_{a' \in \Actions} \exp \left(\frac{1}{\alpha}\right)\QSoft (S', a')
    }\,.
$$
\end{proof}\fi

By Lemma~\ref{lemma:ambiguity_inherited}, this means that any type of policy which can be derived from one of these three $Q$-functions will be invariant to $S'$-redistribution. We next derive the invariances of different forms of advantage functions:

\begin{lemma}\label{lemma:ambiguity_A_function}
For any transition function $\tfunc$, any discount $\gamma$, and any policy $\policy$, the advantage function $A^\pi$ determines $R$ up to $\PS \bigodot \SR$.
\end{lemma}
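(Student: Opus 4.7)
The plan is to prove both directions by exploiting the decomposition $A^\pi = Q^\pi - V^\pi$ and reducing to the already established invariance of $Q^\pi$ under $S'$-redistribution (Lemma~\ref{lemma:ambiguity_Q_function}), together with the effect of potential shaping on $Q^\pi$ and $V^\pi$ described in Proposition~\ref{prop:change_from_potentials}.

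For the forward direction, suppose $R_2 = t(R_1)$ for some $t \in \PS \bigodot \SR$. By Proposition~\ref{prop:reward_transformation_commutativity}, we can decompose $t$ as a potential shaping step composed with an $S'$-redistribution step, so it suffices to check invariance of $A^\pi$ under each kind of transformation separately. Under $S'$-redistribution, Lemma~\ref{lemma:ambiguity_Q_function} gives $Q^\pi_1 = Q^\pi_2$, and since $V^\pi(s) = \Expect{A \sim \pi(s)}{Q^\pi(s,A)}$, this yields $V^\pi_1 = V^\pi_2$, hence $A^\pi_1 = A^\pi_2$. Under potential shaping with $\Phi$, Proposition~\ref{prop:change_from_potentials} gives $Q^\pi_2(s,a) = Q^\pi_1(s,a) - \Phi(s)$ and $V^\pi_2(s) = V^\pi_1(s) - \Phi(s)$, so the $-\Phi(s)$ terms cancel in $A^\pi_2 = Q^\pi_2 - V^\pi_2$.

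For the reverse direction, suppose $A^\pi_1 = A^\pi_2$. The key idea is to use potential shaping to align the value functions, and then invoke the $Q$-function result. Define the potential function $\Phi(s) = V^\pi_2(s) - V^\pi_1(s)$, and let $R_3$ be the reward obtained from $R_2$ by potential shaping with $\Phi$. Then by Proposition~\ref{prop:change_from_potentials}, $V^\pi_3(s) = V^\pi_2(s) - \Phi(s) = V^\pi_1(s)$ and $Q^\pi_3(s,a) = Q^\pi_2(s,a) - \Phi(s)$, so $A^\pi_3 = A^\pi_2 = A^\pi_1$. Combining, $Q^\pi_3 = A^\pi_3 + V^\pi_3 = A^\pi_1 + V^\pi_1 = Q^\pi_1$. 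By Lemma~\ref{lemma:ambiguity_Q_function}, $R_1$ and $R_3$ therefore differ by $S'$-redistribution. Chaining the two transformations yields that $R_1$ and $R_2$ differ by a transformation in $\PS \bigodot \SR$.

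I do not expect a serious obstacle here: both directions are essentially bookkeeping on the identities for $V^\pi$ and $Q^\pi$ under potential shaping and $S'$-redistribution that are already available from Proposition~\ref{prop:change_from_potentials} and Lemma~\ref{lemma:ambiguity_Q_function}. The only subtlety to keep in mind is that Lemma~\ref{lemma:ambiguity_Q_function} characterises invariances of $Q^\pi$ with respect to the \emph{same} policy $\pi$, which is exactly the setting here; and that the choice $\Phi = V^\pi_2 - V^\pi_1$ is well-defined as a function of $s$ alone (not $a$), which is what makes the potential-shaping step legitimate.
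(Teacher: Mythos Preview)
Your proof is correct and takes essentially the same approach as the paper: both directions rely on Proposition~\ref{prop:change_from_potentials} and Lemma~\ref{lemma:ambiguity_Q_function}, and the reverse direction uses the same key idea of potential shaping by the difference of value functions to align the $Q$-functions before invoking the $Q^\pi$ invariance result. The only cosmetic difference is the sign convention you choose for $\Phi$ (you take $\Phi = V^\pi_2 - V^\pi_1$, the paper takes the opposite), which is immaterial.
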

\ifshowproofs\begin{proof}
First, recall that $A^\pi$ can be derived from $Q^\pi$, via $A^\pi(s,a) = Q^\pi(s,a) - \Expect{A \sim \pi(s)}{Q(s,A)}$. Since $Q^\pi$ is invariant to $S'$-redistribution (Lemma~\ref{lemma:ambiguity_Q_function}), this means that $A^\pi$ is invariant to $S'$-redistribution (Lemma~\ref{lemma:ambiguity_inherited}). Next, recall that if $R_1$ and $R_2$ differ by potential shaping with $\Phi$, then $Q_1(s,a) = Q_2(s,a) - \Phi(s)$ (Proposition~\ref{prop:change_from_potentials}). This means that
\begin{align*}
    A^\pi_1(s,a) &= Q^\pi_1(s,a) - \Expect{A \sim \pi(s)}{Q_1(s,A)}\\
    &= Q^\pi_2(s,a) - \Phi(s) - \Expect{A \sim \pi(s)}{Q_2(s,A) - \Phi(s)}\\
    &= Q^\pi_2(s,a) - \Expect{A \sim \pi(s)}{Q_2(s,A)}\\
    &= A^\pi_2(s,a).
\end{align*}
Together, this means that $A^\pi$ is invariant to both $S'$-redistribution and potential shaping.

We next need to show that $A^\pi$ is not invariant to any transformations that cannot be expressed as a combination of $S'$-redistribution and potential shaping.
Let $R_1$ and $R_2$ be two reward functions such that $A^\pi_1 = A^\pi_2$, and let $Q^\pi_1$, $Q^\pi_2$ be their $Q$-functions. Define
$$
\Phi(s) := \Expect{A\sim\pi(s)}{Q^\pi_1(s,A) - Q^\pi_2(s,A)}
$$
and let 
$$
R_3(s,a,s') = R_2(s,a,s') + \gamma \cdot \Phi(s) - \Phi(s').
$$
Now $R_2$ and $R_3$ differ by potential shaping (with $\Phi$). Moreover, by Proposition~\ref{prop:change_from_potentials}, we have that $Q^\pi_3 = Q^\pi_1$. Therefore, by Lemma~\ref{lemma:ambiguity_Q_function}, we have that $R_3$ and $R_1$ differ by $S'$-redistribution. This implies that $R_1$ and $R_2$ differ by a combination of potential shaping and $S'$-redistribution.
\end{proof}\fi

\begin{lemma}\label{lemma:ambiguity_optimal_A_function}
For any transition function $\tfunc$ and any discount $\gamma$, the optimal advantage function $A^\star$ determines $R$ up to $\PS \bigodot \SR$.
\end{lemma}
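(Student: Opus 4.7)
The plan is to mirror the structure of the proof of Lemma~\ref{lemma:ambiguity_A_function} but substitute the optimal $Q$- and value functions for the policy-dependent ones. The proof naturally splits into two directions: first showing that $A^\star$ is preserved by any transformation in $\PS \bigodot \SR$, and then showing that any two rewards with the same $A^\star$ must differ only by such a transformation.

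For the first direction, invariance under $S'$-redistribution follows from Lemma~\ref{lemma:ambiguity_optimal_Q_function} together with Lemma~\ref{lemma:ambiguity_inherited}, since $V^\star(s) = \max_a Q^\star(s,a)$ and $A^\star(s,a) = Q^\star(s,a) - V^\star(s)$ are both computable from $Q^\star$. For potential shaping, I would invoke the analogue of Proposition~\ref{prop:change_from_potentials}(2)-(3) for optimal quantities: if $R_2$ is obtained from $R_1$ by potential shaping with $\Phi$, then $Q^\star_2(s,a) = Q^\star_1(s,a) - \Phi(s)$ and $V^\star_2(s) = V^\star_1(s) - \Phi(s)$. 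This follows because Ng et al.'s result (noted in the discussion after Definition~\ref{def:potential_shaping}) guarantees that potential shaping preserves optimal policies, so for any optimal $\pi^\star$ we can use Proposition~\ref{prop:change_from_potentials}(2)-(3) directly on $Q^{\pi^\star}$ and $V^{\pi^\star}$. The two $\Phi(s)$ terms then cancel in the definition of $A^\star$, yielding $A^\star_1 = A^\star_2$.

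For the other direction, suppose $A^\star_1 = A^\star_2$. Following the template of Lemma~\ref{lemma:ambiguity_A_function}, define
\[
\Phi(s) := V^\star_2(s) - V^\star_1(s),
\]
and let $R_3(s,a,s') = R_2(s,a,s') + \gamma \Phi(s') - \Phi(s)$, so that $R_2$ and $R_3$ differ by potential shaping. By the invariance established in the first direction, $A^\star_3 = A^\star_2 = A^\star_1$, and by the potential-shaping identity above, $V^\star_3(s) = V^\star_2(s) - \Phi(s) = V^\star_1(s)$. Since $Q^\star = A^\star + V^\star$, it follows that $Q^\star_3 = Q^\star_1$. Applying Lemma~\ref{lemma:ambiguity_optimal_Q_function} then gives that $R_1$ and $R_3$ differ by $S'$-redistribution, and composing with the potential shaping that produced $R_3$ from $R_2$ shows that $R_1$ and $R_2$ differ by a transformation in $\PS \bigodot \SR$.

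The main obstacle, and the only nontrivial piece, is cleanly justifying that potential shaping shifts $V^\star$ and $Q^\star$ by exactly $\Phi(s)$, since Proposition~\ref{prop:change_from_potentials} is stated for policy-dependent quantities. The cleanest way is to note that potential shaping preserves the set of optimal policies (Ng et al.), so any policy $\pi^\star$ optimal under $R_1$ is also optimal under $R_2$, and then apply parts (2)-(3) of Proposition~\ref{prop:change_from_potentials} to $Q^{\pi^\star}$ and $V^{\pi^\star}$, using that $Q^{\pi^\star}_i = Q^\star_i$ and $V^{\pi^\star}_i = V^\star_i$ for $i \in \{1,2\}$. Everything else is routine bookkeeping of the kind already carried out in Lemma~\ref{lemma:ambiguity_A_function}.
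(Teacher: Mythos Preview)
Your proposal is correct and follows exactly the route the paper intends: the paper's proof of this lemma consists solely of the words ``Analogous to Lemma~\ref{lemma:ambiguity_A_function},'' and what you have written is precisely that analogue, with the policy-dependent $Q^\pi,V^\pi$ replaced by $Q^\star,V^\star$ and Lemma~\ref{lemma:ambiguity_Q_function} replaced by Lemma~\ref{lemma:ambiguity_optimal_Q_function}. Your handling of the one genuine wrinkle---that Proposition~\ref{prop:change_from_potentials} is stated for $Q^\pi,V^\pi$ rather than $Q^\star,V^\star$---via the observation that any $\pi^\star$ optimal under $R_1$ remains optimal under $R_2$ after potential shaping, so that $Q^{\pi^\star}_i=Q^\star_i$ on both sides, is clean and correct.
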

\ifshowproofs\begin{proof}
Analogous to Lemma~\ref{lemma:ambiguity_A_function}.
\end{proof}\fi


By Lemma~\ref{lemma:ambiguity_inherited}, this means that any type of policy which can be derived from one of these two advantage functions will be invariant to $S'$-redistribution and potential shaping.

\subsubsection{Partial Identifiability of Policies}

We first note that the softmax function is invariant to constant shift, and to no other transformations: 
\begin{proposition}\label{prop:softmax_shift}
Let $v, w \in \mathbb{R}^n$ be two vectors, and let $\beta \in \mathbb{R}^+$. Then
$$
\frac{\exp \beta v_i}{\sum_{j=1}^n \exp \beta v_j} = \frac{\exp \beta w_i}{\sum_{j=1}^n \exp \beta w_j}
$$
for all $i \in \{1 \dots n\}$ if and only if there is a constant scalar $c$ such that $v_i = w_i + c$ for all $i \in \{1 \dots n\}$.
\end{proposition}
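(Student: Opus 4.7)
The plan is to prove both directions by direct manipulation of the softmax formula, exploiting the fact that the exponential is a positive, strictly monotone homomorphism from addition to multiplication.

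For the forward (``if'') direction, I would suppose $v_i = w_i + c$ for all $i$. Then $\exp(\beta v_i) = \exp(\beta c) \exp(\beta w_i)$, and substituting into both numerator and denominator of the softmax yields a common factor $\exp(\beta c)$ that cancels, giving equality of the softmax outputs componentwise. This step is routine.

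For the converse (``only if'') direction, I would fix any index $k \in \{1,\dots,n\}$ and, assuming the softmax outputs agree at every index, take the ratio of the equation at index $i$ to the equation at index $k$. The common denominators cancel, leaving
\[
\frac{\exp \beta v_i}{\exp \beta v_k} \;=\; \frac{\exp \beta w_i}{\exp \beta w_k},
\]
which simplifies to $\exp(\beta(v_i - v_k)) = \exp(\beta(w_i - w_k))$. Since $\exp$ is injective and $\beta > 0$, this gives $v_i - v_k = w_i - w_k$, equivalently $v_i - w_i = v_k - w_k$. Since $i$ was arbitrary, the quantity $c := v_k - w_k$ is independent of the index, so $v_i = w_i + c$ for all $i$.

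I do not anticipate any real obstacle: the only thing to be mildly careful about is ensuring the denominators are nonzero (which they automatically are, as sums of positive exponentials) and that the choice of the reference index $k$ does not affect the derived constant (which follows from the argument above). A short remark that $c$ is well-defined regardless of which $k$ is picked will round off the proof.
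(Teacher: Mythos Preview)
Your proof is correct and follows essentially the same approach as the paper: the ``if'' direction is identical, and for the converse both arguments show that $v_i - w_i$ is index-independent by a one-line algebraic manipulation of the softmax identity (the paper rearranges to $\exp(\beta(v_i - w_i)) = \big(\sum_j e^{\beta v_j}\big)/\big(\sum_j e^{\beta w_j}\big)$ and notes the right side is constant in $i$, whereas you divide the equations at indices $i$ and $k$ to cancel the sums). These are minor algebraic variants of the same idea.
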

\ifshowproofs
\begin{proof}
For the first direction, suppose there is a constant scalar $c$ such that $v_i = w_i + c$ for all $i \in \{1 \dots n\}$. Then
\begin{align*}
    \frac{\exp \beta v_i}{\sum_{j=1}^n \exp \beta v_j} &= \frac{\exp \beta (w_i + c)}{\sum_{j=1}^n \exp \beta (w_j + c)}\\
    &= \frac{\exp (\beta c) \cdot \exp \beta w_i}{\exp (\beta c) \cdot \sum_{j=1}^n \exp \beta w_j}\\
    &= \frac{\exp \beta w_i}{\sum_{j=1}^n \exp \beta w_j}.
\end{align*}
For the other direction, suppose
$$
\frac{\exp \beta v_i}{\sum_{j=1}^n \exp \beta v_j} = \frac{\exp \beta w_i}{\sum_{j=1}^n \exp \beta w_j}
$$
for all $i \in \{1 \dots n\}$. Note that this can be rewritten as follows:
\begin{align*}
    \frac{\exp \beta v_i}{\exp \beta w_i} &= \frac{\sum_{j=1}^n \exp \beta v_j}{\sum_{j=1}^n \exp \beta w_j}\\
    \exp (\beta v_i - \beta w_i) &= \frac{\sum_{j=1}^n \exp \beta v_j}{\sum_{j=1}^n \exp \beta w_j}\\
    v_i - w_i &= \left(\frac{1}{\beta}\right) \log \left(\frac{\sum_{j=1}^n \exp \beta v_j}{\sum_{j=1}^n \exp \beta w_j}\right)
\end{align*}
Since the right-hand side of this expression does not depend on $i$, it follows that $v_i - w_i$ is constant for all $i$.
\end{proof}
\fi

\ambiguityboltzmannrational*

\begin{proof}
Let $a_{\tfunc, \gamma} : \R \to (\SxA \to \mathbb{R})$ be the function that takes $R$ and returns the optimal advantage function $A^\star$ for $R$, given transition function $\tfunc$ and discount factor $\gamma$. We will show that $\mathrm{Am}(b_{\tfunc, \gamma, \beta}) = \mathrm{Am}(a_{\tfunc, \gamma})$, by showing that the Boltzmann-rational policy can be derived from the optimal advantage function, and vice versa.

First, recall that the Boltzmann-rational policy is given by applying the softmax function (with temperature $\beta$) to the optimal $Q$-function, $Q^\star$, in each state. Moreover, since the the softmax function is invariant to constant shift (Proposition~\ref{prop:softmax_shift}), and since $A^\star$ and $Q^\star$ differ by constant shift in each state (given by $V^\star$), we have that the Boltzmann-rational policy also can be obtained by applying the softmax function with temperature $\beta$ to the advantage function in each state. This means that there exists a function $h$ such that $b_{\tfunc, \gamma, \beta} = h \circ a_{\tfunc, \gamma}$. Thus, by Lemma~\ref{lemma:ambiguity_inherited}, we have that $\mathrm{Am}(a_{\tfunc, \gamma}) \refines \mathrm{Am}(b_{\tfunc, \gamma, \beta})$.

For the other direction, recall that any softmax function is invariant to constant shift, and no other transformations (Proposition~\ref{prop:softmax_shift}). Since the Boltzmann-rational policy can be derived from $A^\star$ by applying a softmax function in each state, this means that if $b_{\tfunc, \gamma, \beta}(R_1) = b_{\tfunc, \gamma, \beta}(R_2)$, then there is a function $B : \States \to \mathbb{R}$ such that $A^\star_1(s,a) = A^\star_2(s,a) + B(s)$ for all $s,a$. Moreover, since $\max_{a \in \Actions} A^\star(s,a) = 0$ for all $s$, it follows that $B(s) = 0$ for all $s$, which means that $A^\star_1 = A^\star_2$. In other words, if $b_{\tfunc, \gamma, \beta}(R_1) = b_{\tfunc, \gamma, \beta}(R_2)$ then $a_{\tfunc, \gamma}(R_1) = a_{\tfunc, \gamma}(R_2)$, which means that there exists a function $h$ such that $a_{\tfunc, \gamma} = h \circ b_{\tfunc, \gamma, \beta}$. Thus, by Lemma~\ref{lemma:ambiguity_inherited}, we have that $\mathrm{Am}(b_{\tfunc, \gamma, \beta}) \refines \mathrm{Am}(a_{\tfunc, \gamma})$.

Since $\mathrm{Am}(a_{\tfunc, \gamma}) \refines \mathrm{Am}(b_{\tfunc, \gamma, \beta})$ and $\mathrm{Am}(b_{\tfunc, \gamma, \beta}) \refines \mathrm{Am}(a_{\tfunc, \gamma})$, we have that $\mathrm{Am}(b_{\tfunc, \gamma, \beta}) = \mathrm{Am}(a_{\tfunc, \gamma})$. In other words, since the Boltzmann-rational policy can be derived from the optimal advantage function, and vice versa, it must be the case that they have the same invariances. Applying Lemma~\ref{lemma:ambiguity_optimal_A_function} completes the proof.
\end{proof}

\ambiguityMCE*

\begin{proof}
First, recall that the MCE policy is given by applying the softmax function with temperature $(1/\alpha)$ to the soft $Q$-function $\QSoft$ in each state. Next, recall that any softmax function is invariant to constant shift, and no other transformations (Proposition~\ref{prop:softmax_shift}). This means that $c_{\tfunc, \gamma, \alpha}$ is invariant to all transformations that induce a constant shift of $Q^\mathrm{S}_\alpha$ in each state, and no other transformations.

The first direction is straightforward; Lemma~\ref{lemma:ambiguity_soft_Q_function} and Proposition~\ref{prop:potentials_to_soft_Q} together imply that $S'$-redistribution and potential shaping results in a constant shift of $Q^\mathrm{S}_\alpha$ in all states. This means that $c_{\tfunc, \gamma, \alpha}$ is invariant to these transformations.

For the other direction, let $R_1$ and $R_2$ be two reward functions such that the corresponding soft $Q$-functions satisfy $\QSoftN{1}(s,a) = \QSoftN{2}(s,a) + B(s)$ for some function $B : \States \to \mathbb{R}$. Recall that $\QSoftN{1}$ is the unique function which satisfies the following equation for all $s$ and $a$:
$$
\QSoftN{1}(s,a) = 
    \Expect{S' \sim \tfunc(s,a)}
    {
        R(s,a,S') + \gamma \alpha \log \sum_{a' \in \Actions} \exp \left(\frac{1}{\alpha}\right)\QSoftN{1}(S', a')
    }\,.
$$
This can be rewritten as
$$
\Expect{}{R_1(s,a,S')}
    = \QSoftN{1}(s,a)
        - \Expect{}{
            \gamma \alpha
            \log \sum_{a' \in A} \exp \left(\frac{1}{\alpha}\right) \QSoftN{1} (S', a')
        }.
$$
We can now rewrite the right-hand side as follows:
\begin{align*}
    &\QSoftN{1}(s,a)
        - \Expect{}{
            \gamma \alpha
            \log \sum_{a' \in A} \exp \left(\frac{1}{\alpha}\right) \QSoftN{1} (S', a')
        }
\\
    = &\QSoftN{2}(s,a) + B(s)
        - \Expect{}{
            \gamma \alpha
            \log \sum_{a' \in A} \exp \left(\frac{1}{\alpha}\right)\left(
                \QSoftN{2} (S', a') + B(S')
            \right)
        }
\\
    = &\QSoftN{2}(s,a) + B(s)
        - \Expect{}{
            \gamma \alpha \log \left(
                \sum_{a' \in A} \exp \left(\frac{1}{\alpha}\right)\QSoftN{2} (S', a')
            \right)
            + \gamma B(S')
        }
\\
    = &\Expect{}{
            R_2(s,a,S') + B(s) - \gamma B(S')
        }\,.
\end{align*}
Now set $\Phi(s) = -B(s)$, and we can see that the difference between $R_1$ and $R_2$ is described by potential shaping and $S'$-redistribution.
\end{proof}

\ambiguityoptimal*

\begin{proof}
Immediate from Theorem~\ref{thm:OPT_ambiguity}, since $o_{\tfunc, \gamma}^\star(R_1) = o_{\tfunc, \gamma}^\star(R_2)$ if and only if $R_1$ and $R_2$ have the same optimal policies.
\end{proof}

\subsubsection{Ambiguity Tolerance and Applications}

\diameterofamforBRandMCE*

\begin{proof}
    As per Theorem~\ref{thm:ambiguity-boltzmann-rational} and \ref{thm:ambiguity-MCE}, if $f$ is either $b_{\tfunc, \gamma, \beta}$ or $c_{\tfunc, \gamma, \alpha}$, and $f(R_1) = f(R_2)$, then $R_1$ and $R_2$ differ by a transformation in $\PS \bigodot \SR$. As per theorem~\ref{thm:policy_ordering}, this implies that $R_1 \eq{\ORD} R_2$, and so $\mathrm{Am}(f) \refines \ORD$. Since $\ORD \refines \OPT$, we also have that $\mathrm{Am}(f) \refines \OPT$. Finally, as per Proposition~\ref{prop:sound_and_complete_means_distance_0_iff_same_order}, all sound and complete pseudometrics metrics have the property that $d^\R(R_1, R_2) = 0$ if $R_1 \eq{\ORD} R_2$. Thus the upper (and hence also the lower) diameter of $\mathrm{Am}(f)$ under $d^\R$ is $0$. 
\end{proof}

\optimalpoliciesdonotpreserveord*

\begin{proof}
If $|\States| \geq 2$ or $|\Actions| \geq 3$, then there exists uncountably many reward functions that do not have the same ordering of policies (this is immediate from Theorem~\ref{thm:policy_ordering}). Moreover, $\mathrm{Im}(o_{\tau,\gamma}^\star)$ is finite. By the pigeonhole principle, this means that there must exist reward functions $R_1$, $R_2$ such that $o_{\tfunc,\gamma}^\star(R_1) = o_{\tfunc,\gamma}^\star(R_2)$ but $R_1 \not\eq{\mathrm{ORD}_{\tau,\gamma}} R_2$.
\end{proof}

\diameterofop*

\begin{proof}
    The first part follows from Proposition~\ref{prop:optimal_policies_do_not_preserve_ord}, and the second part follows from Theorem~\ref{thm:ambiguity-optimal}. For the third part, first note that Proposition~\ref{prop:sound_and_complete_means_distance_0_iff_same_order} implies that if $d^\R$ is both sound and complete, then $d^\R(R_1, R_2) = 0$ if and only if $R_1 \eq{\ORD} R_2$. Thus the fact that $\mathrm{Am}(o_{\tfunc,\gamma}^\star) \not\refines \ORD$ implies that the upper diameter of $\mathrm{Am}(o_{\tfunc,\gamma}^\star)$ under $d^\R$ is greater than $0$. To see that the lower diameter is $0$, consider the reward function $R_0$ that is $0$ everywhere. Then $o_{\tfunc,\gamma}^\star(R_0)$ must indicate that all actions are optimal in all states, which means any reward function $R$ such that $o_{\tfunc,\gamma}^\star(R) = o_{\tfunc,\gamma}^\star(R_0)$ must be trivial. All trivial reward functions have the same policy order, and so $d^\R(R,R_0) = 0$.
\end{proof}

\nonmaxsupportiveoptveryambiguous*
\begin{proof}
This can be demonstrated by a pigeonhole argument. Specifically, the codomain of each $o \in \mathcal{O}_{\tfunc,\gamma}$ has $(2^{|\Actions|}-1)^{|\States|}$ elements, and there are $(2^{|\Actions|}-1)^{|\States|}$ $\mathrm{OPT}_{\tfunc,\gamma}$-equivalence classes.
This means that if $\mathrm{Am}(o) \refines \mathrm{OPT}_{\tfunc,\gamma}$, then there must be a one-to-one correspondence between $\mathrm{OPT}_{\tfunc,\gamma}$-equivalence classes and elements of $o$'s codomain, so that there for each equivalence class $C \in \OPT$ is a $y_C \in \mathrm{Im}(o)$ such that $o(R) = y_C$ if and only if $R \in C$.
Further, say that if $f, g : X \rightarrow \mathcal{P}(Y)$ are set-valued functions, then $f \subseteq g$ if $f(x) \subseteq g(x)$ for all $x \in X$, and $f \subset g$ if $f \subseteq g$ but $g \not\subseteq f$.
Then if $o \in \mathcal{O}_{\tfunc,\gamma}$ we have that $o(R) \subseteq o_{\tfunc,\gamma}^\star(R)$ for all $R$ --- a policy is optimal if and only if it takes only optimal actions, but it need not take all optimal actions.
Moreover, if $o \neq o_{\tfunc,\gamma}^\star$ then there is an $R_1$ such that $o(R_1) \subset o_{\tfunc,\gamma}^\star(R_1)$.
Let $R_2$ be a reward function so that $o_{\tfunc,\gamma}^\star(R_2) = o(R_1)$ --- for any function $\States \rightarrow \mathcal{P}(\Actions) - \varnothing$, there is a reward function for which those are the optimal actions, so there is always some $R_2$ such that $o_{\tfunc,\gamma}^\star(R_2) = o(R_1)$.
Now either $o(R_2) = o(R_1)$ or $o(R_2) \subset o(R_1)$, since all actions that are optimal under $R_2$ are optimal under $R_1$. In the first case, since $o(R_1) = o(R_2)$ but $R_1 \not\eq{\mathrm{OPT}_{\tfunc,\gamma}} R_2$, we have that $\mathrm{Am}(o) \not\refines \mathrm{OPT}_{\tfunc,\gamma}$. In the second case, let $R_3$ be a reward function so that $o_{\tfunc,\gamma}^\star(R_3) = o(R_2)$, and repeat the same argument. Since there can only be a finite sequence $o(R_n) \subset \dots \subset o(R_2) \subset o(R_1)$, we have that we must eventually find two $R_n, R_{n-1}$ such that $o(R_n) = o(R_{n-1})$ but $R_n \not\eq{\mathrm{OPT}_{\tfunc,\gamma}} R_{n-1}$.
This means that it cannot be the case that $\mathrm{Am}(o) \refines \mathrm{OPT}_{\tfunc,\gamma}$.
\end{proof}

\subsubsection{Transfer Learning}

We will begin by proving the following intermediate result:


    


\begin{lemma}\label{lemma:s-redistribution_and_transfer}
Let $r : \SxA \to \mathbb{R}$ be any function, $R_1$ any reward function, and $\tfunc_1, \tfunc_2$ any transition functions.
Then there exists a reward function $R_2$ such that $\Expect{S' \sim \tfunc_1(s,a)}{R_2(s,a,S')} = \Expect{S' \sim \tfunc_1(s,a)}{R_1(s,a,S')}$ for all $s,a$, and
such that 
$\Expect{S' \sim \tfunc_2(s,a)}{R_2(s,a,S')} = r(s,a)$
for all $s,a$ such that $\tfunc_1(s,a) \neq \tfunc_2(s,a)$.
\end{lemma}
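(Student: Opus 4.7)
The plan is to observe that the two stated conditions decouple entirely across state-action pairs: each condition only constrains the values $R_2(s,a,\cdot)$ for a single $(s,a)$, so I can construct $R_2$ one $(s,a)$ at a time. Treating $R_2(s,a,\cdot)$ as an unknown vector in $\mathbb{R}^{|\States|}$ indexed by $s'$, the two conditions reduce to at most two linear equations in this vector, and I just need to argue that a solution always exists.

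First I would handle the trivial case: for each $(s,a)$ with $\tfunc_1(s,a) = \tfunc_2(s,a)$, simply set $R_2(s,a,s') = R_1(s,a,s')$ for all $s'$. Condition~1 then holds by definition, and Condition~2 is vacuous at that $(s,a)$.

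The substantive case is $(s,a)$ with $\tfunc_1(s,a) \neq \tfunc_2(s,a)$. Here I need to find $R_2(s,a,\cdot) \in \mathbb{R}^{|\States|}$ satisfying
\[
\tfunc_1(\cdot \mid s,a) \cdot R_2(s,a,\cdot) = \Expect{S' \sim \tfunc_1(s,a)}{R_1(s,a,S')}, \qquad \tfunc_2(\cdot \mid s,a) \cdot R_2(s,a,\cdot) = r(s,a).
\]
The key observation is that the two row vectors $\tfunc_1(\cdot \mid s,a)$ and $\tfunc_2(\cdot \mid s,a)$ in $\mathbb{R}^{|\States|}$ are linearly independent: both sum to $1$, so if one were a scalar multiple of the other the multiplier would have to be $1$, forcing equality, contradicting the assumption. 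Consequently the $2 \times |\States|$ coefficient matrix has full row rank (which in particular forces $|\States| \geq 2$), so the linear system is consistent and admits a solution $R_2(s,a,\cdot)$ (in fact an affine family of them when $|\States| > 2$). Picking any such solution for every such $(s,a)$ completes the construction of $R_2$.

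The main potential obstacle is just being careful about the linear-independence step, since two distinct vectors in $\mathbb{R}^{|\States|}$ need not in general be linearly independent; it is the shared normalisation $\sum_{s'} \tfunc_i(s' \mid s,a) = 1$ that saves us. Everything else is routine bookkeeping and a standard rank argument.
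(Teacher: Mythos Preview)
Your proof is correct and follows essentially the same route as the paper's own argument: decouple across $(s,a)$, set $R_2=R_1$ where $\tfunc_1(s,a)=\tfunc_2(s,a)$, and otherwise solve the two-equation linear system using the linear independence of the two probability vectors. Your justification of linear independence (``both sum to $1$, so any scalar multiple forces equality'') is in fact more explicit than the paper's, and your remark that this forces $|\States|\ge 2$ is the right dimensional observation (the paper somewhat confusingly writes $|\Actions|\ge 2$ at this point).
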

\ifshowproofs
\begin{proof}
The requirement that $R_2$ is produced from $R_1$ by $S'$-redistribution under $\tfunc$ is satisfied if, for all $s\in\States$ and $a\in\Actions$,
\begin{equation*}
\label{eq:thm:sr-transfer:condition-1}
    \Expect{S' \sim \tfunc(s,a)}{R_1(s,a,S')}
    =
    \Expect{S' \sim \tfunc(s,a)}{R_2(s,a,S')}
\,.
\end{equation*}
Let $s\in\States$ and $a\in\Actions$ be any state and action such that $\tfunc_1(s, a) \neq \tfunc_2(s, a)$.
Let $\Vec{\tfunc_1}_{s,a}$ and
$\Vec{\tfunc_2}_{s,a}$ be
$\tfunc_1(s, a)$ and $\tfunc_2(s, a)$
expressed as vectors,
and let $\Vec{R_1}_{s,a}$ be the vector where
$\Vec{R_1}_{s,a}^{(i)} = R_1(s,a,s_i)$.
The question is then if there is an analogous vector
$\Vec{R_2}_{s,a}$ such that:
\begin{equation*}
\begin{aligned}
    \Vec{\tfunc_1}_{s,a} \cdot \Vec{R_2}_{s,a}
    &=
    \Vec{\tfunc_1}_{s,a} \cdot \Vec{R_1}_{s,a}\,,
\\
    \Vec{\tfunc_2}_{s,a} \cdot \Vec{R_2}_{s,a}
    &= r(s,a).
\end{aligned}
\end{equation*}
Since $\Vec{\tfunc_1}_{s,a}$ and $\Vec{\tfunc_2}_{s,a}$ differ and are valid probability distributions, they are linearly independent (recall also that $|\Actions| \geq 2$).
Therefore, the system of equations always has a solution for $\Vec{R_2}_{s,a}$.
Form the required $R_2$ as $R_1$ modified to have the values of $\Vec{\reward_2}_{s,a}$ in these states where the $\tfunc_1$ and $\tfunc_2$ differ.
\end{proof}
\fi

To unpack this, let $R_1$ be the true reward function, $\tfunc_1$ be the transition dynamics of the training environment, and $\tfunc_2$ be the transition dynamics of the deployment environment. Lemma~\ref{lemma:s-redistribution_and_transfer} then says, roughly, that if the training data is invariant to $S'$-redistribution, and $\tfunc_1$ and $\tfunc_2$ differ for enough states, then the learnt reward function is essentially unconstrained in the deployment environment. Specifically, for every reward function $R_1$ there exists a reward function $R_2$ such that $R_1$ and $R_2$ are indistinguishable in the training environment, but such that $R_2$ may have any value for any state-action pair for which $\tfunc_1 \neq \tfunc_2$.
This means that no guarantees can be obtained.
Moreover, note that Lemmas~\ref{lemma:ambiguity_Q_function}-\ref{lemma:ambiguity_soft_Q_function} and Lemma~\ref{lemma:ambiguity_inherited} imply that this result extends to \emph{any} object that can be computed from a $Q$-function, which is a very broad class.
Lemma~\ref{lemma:s-redistribution_and_transfer} then suggests that any such data source is too ambiguous to guarantee transfer to a different environment. From this, we can immediately derive the following:

\wrongtautooambiguous*

\begin{proof}
Let $R_1$ be an arbitrary reward. If $\tfunc_1 \neq \tfunc_2$, then there exists some $s,a$ such that $\tfunc_1(s,a) \neq \tfunc_2(s,a)$. Using the construction in Lemma~\ref{lemma:s-redistribution_and_transfer}, we can find a reward function $R_2$ such that $R_1$ and $R_2$ differ by $S'$-redistribution with $\tfunc_1$, and such that $A^\star_2(s,a)$ has any arbitrary value when computed under $\tfunc_2$ (and any discount $\gamma$). In particular, if $a \not\in \mathrm{argmax}_{a'} A_1^\star(s,a')$ under $\tfunc_2$ and $\gamma$, then we can let $a \in \mathrm{argmax}_{a'} A_2^\star(s,a')$ under $\tfunc_2$ and $\gamma$, and vice versa. This means that $\mathrm{argmax}_{a} A_1^\star(s,a) \neq \mathrm{argmax}_{a} A_2^\star(s,a)$ under $\tfunc_2$ and $\gamma$, and so $R_1 \not\eq{\mathrm{OPT}_{\tfunc_2, \gamma}} R_2$. However, $R_1$ and $R_2$ differ by $S'$-redistribution with $\tfunc_1$, and so $f_{\tfunc_1(R_1)} = f_{\tfunc_1(R_2)}$.
\end{proof}

We can also extend this result to a stronger statement, expressed in terms of Definition~\ref{def:ambiguity_diameter}. To do this, we will need the following lemma: 

\begin{lemma}\label{lemma:non_trivial_additive_big_diameter}
Let $f$ be a reward object such that, for every reward $R$ there exists a reward $R^\dagger$ such that $R^\dagger$ is non-trivial, and such that $f(R) = f(R + \alpha R^\dagger)$ for all $\alpha \in \mathbb{R}$. Then the lower and upper diameter of $\mathrm{Am}(f)$ under $\starc$ is 1.
\end{lemma}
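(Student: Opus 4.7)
The plan is to show the lower diameter of $\mathrm{Am}(f)$ under $\starc$ is $1$; since $\starc$ is bounded above by $1$, this automatically forces the upper diameter to be $1$ as well. To establish the lower diameter bound, I must show that for every $x$ in the image of $f$, the preimage $f^{-1}(x)$ contains pairs of reward functions with $\starc$-distance arbitrarily close to $1$. Fix an arbitrary $R$ and, by hypothesis, pick a non-trivial $R^\dagger$ such that $f(R + \alpha R^\dagger) = f(R)$ for every $\alpha \in \mathbb{R}$. My candidate pair inside the $\mathrm{Am}(f)$-class of $R$ will be $R + \alpha R^\dagger$ and $R - \alpha R^\dagger$ for $\alpha$ large.

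The key algebraic step uses that the canonicalisation function $c^{\mathrm{STARC}}_{\tfunc,\gamma}$ is linear (Definition~\ref{def:canonicalisation_function}), so
\[
c^{\mathrm{STARC}}_{\tfunc,\gamma}(R \pm \alpha R^\dagger) = c^{\mathrm{STARC}}_{\tfunc,\gamma}(R) \pm \alpha\, c^{\mathrm{STARC}}_{\tfunc,\gamma}(R^\dagger).
\]
Next I need $c^{\mathrm{STARC}}_{\tfunc,\gamma}(R^\dagger) \neq R_0$. Since $R^\dagger$ is non-trivial and canonicalisation only shifts by potential shaping and $S'$-redistribution (which preserve the policy ordering, by Theorem~\ref{thm:policy_ordering}), $c^{\mathrm{STARC}}_{\tfunc,\gamma}(R^\dagger)$ still has a non-constant policy evaluation function, so in particular $L_2(c^{\mathrm{STARC}}_{\tfunc,\gamma}(R^\dagger)) > 0$. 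Dividing through by its $L_2$-norm,
\[
s^{\mathrm{STARC}}_{\tfunc,\gamma}(R \pm \alpha R^\dagger) = \frac{c^{\mathrm{STARC}}_{\tfunc,\gamma}(R) \pm \alpha\, c^{\mathrm{STARC}}_{\tfunc,\gamma}(R^\dagger)}{\bigl\lVert c^{\mathrm{STARC}}_{\tfunc,\gamma}(R) \pm \alpha\, c^{\mathrm{STARC}}_{\tfunc,\gamma}(R^\dagger)\bigr\rVert_2}\xrightarrow[\alpha\to\infty]{} \pm s^{\mathrm{STARC}}_{\tfunc,\gamma}(R^\dagger),
\]
because the bounded contribution of $c^{\mathrm{STARC}}_{\tfunc,\gamma}(R)$ becomes negligible against the unbounded $\alpha$-term in both numerator and denominator.

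Finally, since $R^\dagger$ and $-R^\dagger$ have opposite policy orderings (as $R^\dagger$ is non-trivial), their standardised forms are antipodal unit vectors, so $\starc(R^\dagger, -R^\dagger) = \tfrac{1}{2}L_2(s^{\mathrm{STARC}}_{\tfunc,\gamma}(R^\dagger) - s^{\mathrm{STARC}}_{\tfunc,\gamma}(-R^\dagger)) = \tfrac{1}{2}\cdot 2 = 1$. Combining this with the limit above (and continuity of $L_2$),
\[
\starc(R + \alpha R^\dagger,\, R - \alpha R^\dagger) \xrightarrow[\alpha\to\infty]{} 1.
\]
Hence for every $\epsilon > 0$ there is an $\alpha$ such that the set of $\mathrm{Am}(f)$ containing $R$ has diameter at least $1 - \epsilon$. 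Since $R$ was arbitrary, $\mathrm{sup}$ing over $\epsilon$ gives lower diameter $\geq 1$, and the upper bound $\leq 1$ holds because $\starc$ is bounded above by $1$, so both diameters equal $1$. The only non-routine step is verifying that $c^{\mathrm{STARC}}_{\tfunc,\gamma}(R^\dagger)$ remains non-zero — the rest is a direct limit computation made clean by the linearity of the canonicalisation and the fact that non-trivial rewards land on the unit sphere at antipodal points under standardisation.
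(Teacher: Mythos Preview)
Your proof is correct and follows essentially the same approach as the paper: both exploit that the affine line $\{R + \alpha R^\dagger : \alpha \in \mathbb{R}\}$ lies in a single $\mathrm{Am}(f)$-class, that its image under the linear canonicalisation is still a line (because $c^{\mathrm{STARC}}_{\tfunc,\gamma}(R^\dagger) \neq 0$ by non-triviality), and that projecting this line onto the unit sphere yields points approaching antipodes, hence $\starc$-distance approaching $1$. The only cosmetic difference is that the paper phrases the last step geometrically (a line projected onto the unit ball traces an arc whose endpoints are nearly antipodal), whereas you compute the limit $s^{\mathrm{STARC}}_{\tfunc,\gamma}(R \pm \alpha R^\dagger) \to \pm s^{\mathrm{STARC}}_{\tfunc,\gamma}(R^\dagger)$ algebraically; the content is the same.
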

\ifshowproofs
\begin{proof}
Let $R$ be an arbitrary reward function, and let $S$ be the set given by
$$
S = \{R + \alpha R^\dagger : \alpha \in \mathbb{R}\}.
$$
Note that $S$ forms a line through $\R$, and let us consider what happens when the canonicalisation and normalisation of $\starc$ is applied to $S$. Specifically, recall that $c^\mathrm{STARC}_{\tau,\gamma}$ only collapses dimensions along which every reward differs by potential shaping and $S'$-redistribution. Moreover, $R$ and $R + R'$ differ by potential shaping and $S'$-redistribution if and only if $R'$ is trivial. Since $R^\dagger$ is non-trivial, this means that $c^\mathrm{STARC}_{\tau,\gamma}(S)$ forms a line through $\mathrm{Im}(c^\mathrm{STARC}_{\tau,\gamma})$. After the normalisation step, we have that $c^\mathrm{STARC}_{\tau,\gamma}(S)$ is projected onto the unit ball of $n$, where $n$ is the norm used in the normalisation step of $d^\mathrm{STARC}_{\tau,\gamma}$. If $c^\mathrm{STARC}_{\tau,\gamma}(S)$ intersects the origin, then $s^\mathrm{STARC}_{\tau,\gamma}(S)$ will contain two points that are on the opposite sides of $\mathrm{Im}(s^\mathrm{STARC}_{\tau,\gamma})$, and these points have an $L_2$-distance of 2. If $c^\mathrm{STARC}_{\tau,\gamma}(S)$ does not intersect the origin, then $s^\mathrm{STARC}_{\tau,\gamma}(S)$ forms an arc along the surface of $\mathrm{Im}(s^\mathrm{STARC}_{\tau,\gamma})$. For every $\epsilon > 0$, there are two points on the far ends of this arch whose $L_2$-distance is at least $2-\epsilon$. Recall that the STARC-distance between $R_1$ and $R_2$ is \emph{half} of the $L_2$-distance between $s^\mathrm{STARC}_{\tau,\gamma}(R_1)$ and $s^\mathrm{STARC}_{\tau,\gamma}(R_2)$.

\begin{figure}[H]
    \centering
    \includegraphics[width=3\textwidth/4]{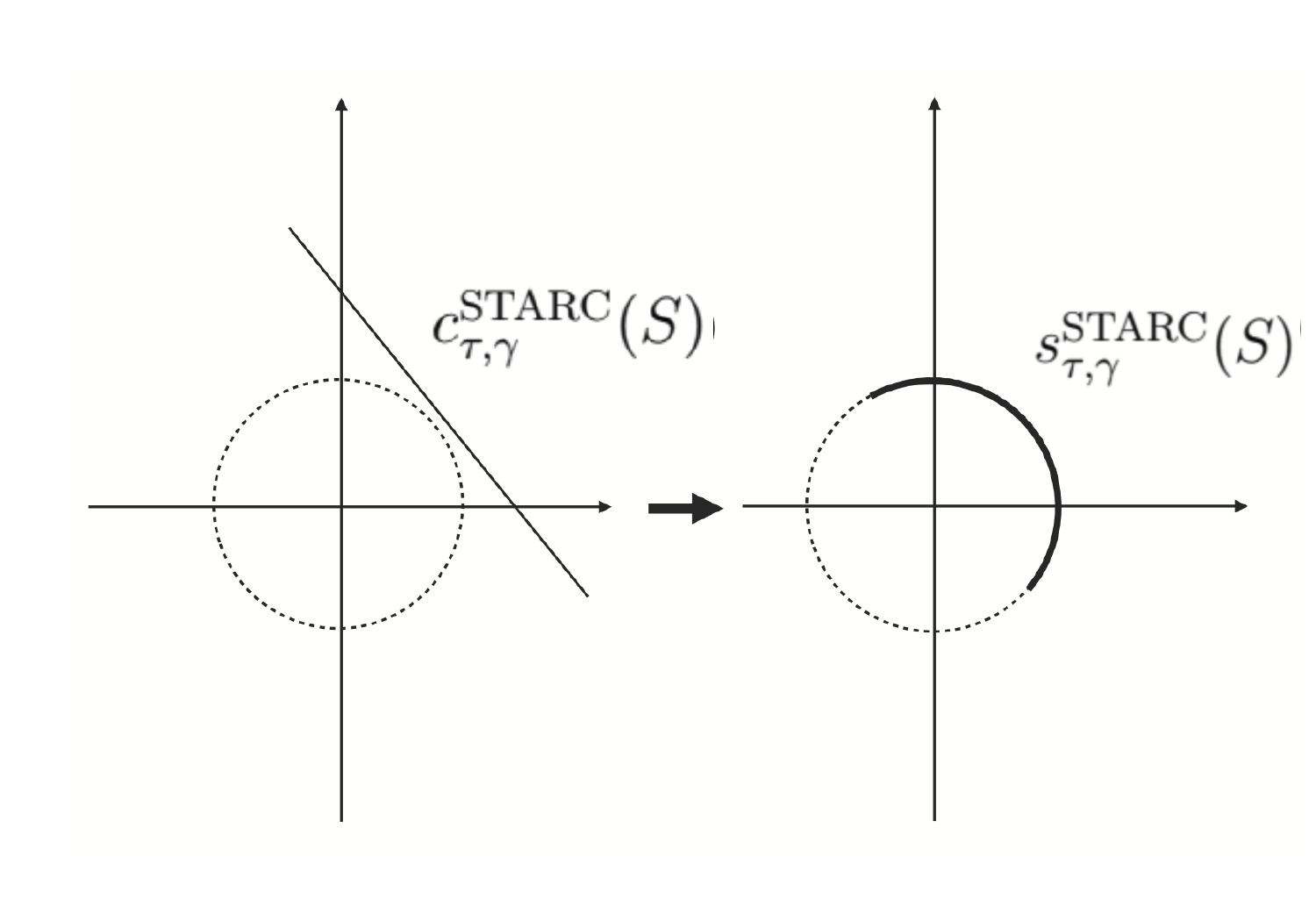}
    \label{fig:thm83fig}
\end{figure}

Since $R$ was chosen arbitrarily, we have that there for any $x \in \mathrm{Im}(f)$ and $\epsilon > 0$ exists reward functions $R_1, R_2$ such that $f(R_1) = f(R_2) = x$, and such that $\starc(R_1, R_2) > 1- \epsilon$. This means that the lower diameter of $\mathrm{Am}(f)$ under $\starc$ is 1. Since 1 is the maximal distance under $\starc$, we also have that the upper diameter of $\mathrm{Am}(f)$ under $\starc$ is 1.    
\end{proof}
\fi

We also need the following lemma:
\begin{lemma}\label{lemma:SR_additive_invariance}
If $f_{\tfunc_1}$ is invariant to $S'$-redistribution with $\tfunc_1$, and $\tau_1 \neq \tau_2$, then for all $\gamma$ and all reward functions $R$, there exists a reward function $R^\dagger$ such that $R \not\eq{\mathrm{ORD}_{\tfunc_2,\gamma}}R^\dagger$, such that $R^\dagger$ is non-trivial under $\tau_2$ and $\gamma$, and such that $f_{\tau_1}(R) = f_{\tau_1}(R + \alpha R^\dagger)$ for all $\alpha \in \mathbb{R}$.
\end{lemma}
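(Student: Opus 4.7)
My plan is to construct $R^\dagger$ by invoking Lemma~\ref{lemma:s-redistribution_and_transfer} on the zero reward $R_0$, choosing the target function $r : \SxA \to \mathbb{R}$ to be supported on a single state-action pair. Concretely, pick any $s^\ast, a^\ast$ with $\tfunc_1(s^\ast, a^\ast) \neq \tfunc_2(s^\ast, a^\ast)$ (this exists by assumption $\tfunc_1 \neq \tfunc_2$), and define $r(s^\ast, a^\ast) = c$ and $r(s, a) = 0$ elsewhere, where the sign and magnitude of the scalar $c \neq 0$ will be selected at the end. Lemma~\ref{lemma:s-redistribution_and_transfer} with input $R_0$ then supplies an $R^\dagger$ satisfying $\Expect{S' \sim \tfunc_1(s,a)}{R^\dagger(s,a,S')} = 0$ for every $(s,a)$, together with $\Expect{S' \sim \tfunc_2(s^\ast,a^\ast)}{R^\dagger(s^\ast,a^\ast,S')} = c$ and vanishing expectation under $\tfunc_2$ at every other pair (since $R^\dagger$ coincides with $R_0 = 0$ wherever $\tfunc_1 = \tfunc_2$).

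The first two required properties are then essentially free. For the invariance condition $f_{\tfunc_1}(R) = f_{\tfunc_1}(R + \alpha R^\dagger)$, note that $\Expect{S' \sim \tfunc_1(s,a)}{(R + \alpha R^\dagger)(s,a,S')} = \Expect{S' \sim \tfunc_1(s,a)}{R(s,a,S')}$ for every $(s,a)$ and every $\alpha$, so $R$ and $R + \alpha R^\dagger$ differ only by $S'$-redistribution with $\tfunc_1$, and the assumed invariance of $f_{\tfunc_1}$ delivers the equality. For non-triviality of $R^\dagger$ under $(\tfunc_2, \gamma)$, observe that the policy evaluation function collapses to $\Evaluation_2^\dagger(\pi) = c \cdot \eta^\pi_2(s^\ast, a^\ast)$, and since $s^\ast$ is reachable under $\tfunc_2$ and $|\Actions| \geq 2$, one easily builds two policies with different $(s^\ast, a^\ast)$-occupancies, so $\Evaluation_2^\dagger$ takes at least two distinct values.

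The main obstacle is verifying $R \not\eq{\mathrm{ORD}_{\tfunc_2,\gamma}} R^\dagger$, and it is here that the choice of sign of $c$ matters. Write $\succeq_+$ for the weak preorder on policies given by $\pi_1 \succeq_+ \pi_2 \iff \eta^{\pi_1}_2(s^\ast, a^\ast) \geq \eta^{\pi_2}_2(s^\ast, a^\ast)$, and $\succeq_-$ for its reverse. The order $\succeq_{R^\dagger}$ induced by $R^\dagger$ under $\tfunc_2, \gamma$ is $\succeq_+$ when $c > 0$ and $\succeq_-$ when $c < 0$. By the occupancy-distinctness established in the previous step, there exist $\pi_1, \pi_2$ with $\eta^{\pi_1}_2(s^\ast, a^\ast) > \eta^{\pi_2}_2(s^\ast, a^\ast)$, so $\succeq_+$ and $\succeq_-$ disagree strictly on this pair, hence $\succeq_+ \neq \succeq_-$. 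Consequently, if $\succeq_R$ denotes the ordering induced by $R$, it cannot coincide with both $\succeq_+$ and $\succeq_-$; pick the sign of $c$ corresponding to whichever of $\succeq_+, \succeq_-$ differs from $\succeq_R$. For that choice, $\succeq_R \neq \succeq_{R^\dagger}$, which by definition gives $R \not\eq{\mathrm{ORD}_{\tfunc_2,\gamma}} R^\dagger$, completing the construction.
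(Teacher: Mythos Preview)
Your proof is correct and follows essentially the same approach as the paper: both invoke Lemma~\ref{lemma:s-redistribution_and_transfer} applied to $R_0$ to manufacture an $R^\dagger$ with zero $\tfunc_1$-expectation but nonzero $\tfunc_2$-expectation, thereby guaranteeing $S'$-redistribution invariance under $\tfunc_1$ together with non-triviality under $\tfunc_2$. The paper simply asserts the existence of a target reward $R'$ that is non-trivial and not $\mathrm{ORD}_{\tfunc_2,\gamma}$-equivalent to $R$, whereas you make this concrete via a single state-action spike and a clean sign-choice argument; your version is more explicit but not materially different.
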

\ifshowproofs
\begin{proof}
This follows from Lemma~\ref{lemma:s-redistribution_and_transfer}; pick $R^\dagger$ such that
\begin{align*}
    \Expect{S' \sim \tfunc_1}{R^\dagger(s,a,S')} &= 0\\
    \Expect{S' \sim \tfunc_2}{R^\dagger(s,a,S')} &= \Expect{S' \sim \tfunc_2}{R'(s,a,S')}
\end{align*}
for all $s,a$, where $R'$ is some reward function such that $R'$ is non-trivial under $\tau_2$ and $\gamma$, and such that $R \not\eq{\mathrm{ORD}_{\tfunc_2,\gamma}} R'$. Since $R'$ is non-trivial under $\tau_2$ and $\gamma$, we have that $R^\dagger$ is non-trivial under $\tau_2$ and $\gamma$, and since $R \not\eq{\mathrm{ORD}_{\tfunc_2,\gamma}} R'$, we have that $R \not\eq{\mathrm{ORD}_{\tfunc_2,\gamma}}R^\dagger$. Moreover, for all $R$ and all $\alpha$, we have that $R$ and $R + \alpha R^\dagger$ differ by $S'$-redistribution (with $\tfunc_1$), and so $f_{\tau_1}(R) = f_{\tau_1}(R + \alpha R^\dagger)$.
\end{proof}
\fi

Using this lemma, we can now derive a quantitative result.

\wrongtaulargediameter*

\begin{proof}
Immediate from Lemma~\ref{lemma:non_trivial_additive_big_diameter} and \ref{lemma:SR_additive_invariance}.
\end{proof}

We say that a state $s$ is \emph{controllable} relative to a transition function $\tfunc$, initial state distribution $\init$, and discount $\gamma$, if there exist two policies $\pi$, $\pi'$ such that 
$$
\sum_{t=0}^\infty \gamma^t \mathbb{P}_{\xi \sim \pi}(S_t = s)
\neq
\sum_{t=0}^\infty \gamma^t \mathbb{P}_{\xi \sim \pi'}(S_t = s).
$$
In other words, a state is controllable if the agent can influence how often it visits that state in expectation. 


\begin{restatable}[]{lemma}{tautocontrol}
\label{lemma:tau_to_control}
For any $\init$, $\discount$, and $\tfunc$, there exists a controllable state if and only if $\tfunc$ is non-trivial.
\end{restatable}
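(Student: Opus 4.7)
The proof splits into two directions. The ``only if'' direction is immediate: if $\tfunc$ is trivial then for every policy $\pi$ the state-to-state kernel $P^\pi(s'\mid s) = \sum_a \pi(a\mid s)\tfunc(s'\mid s,a)$ collapses to the common value $\tfunc(s'\mid s,\cdot)$, so the induced Markov chain (and hence every time-$t$ marginal $\mathbb{P}_\pi(S_t=s)$) does not depend on $\pi$, ruling out any controllable state.

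For the ``if'' direction I would argue by contrapositive. Assume no state is controllable, so the discounted visitation $d^\pi(s) = \sum_{t\geq 0}\gamma^t\mathbb{P}_\pi(S_t=s)$ equals a common vector $d(s)$ for every policy $\pi$. The Bellman flow identity $d^\pi = \init + \gamma (P^\pi)^{\top} d^\pi$ then forces $(P^{\pi_1})^{\top} d = (P^{\pi_2})^{\top} d$ for every pair of policies. Specialising to two deterministic policies $\pi_1,\pi_2$ that coincide everywhere except at a fixed state $s_0$, where they take actions $a_1$ and $a_2$ respectively, the kernels agree on every row $s'\neq s_0$, so the identity collapses to
\[
\bigl(\tfunc(s\mid s_0,a_1) - \tfunc(s\mid s_0,a_2)\bigr)\,d(s_0) \;=\; 0 \qquad \text{for every } s\in\States.
\]

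The one remaining subtlety, which is the main thing to get right, is establishing $d(s_0)>0$. This uses the standing reachability assumption from Section~\ref{sec:RL_preliminaries}: there is some policy $\pi^\star$ under which $s_0$ lies on a possible trajectory, so $\mathbb{P}_{\pi^\star}(S_t=s_0)>0$ for some $t$ and hence $d^{\pi^\star}(s_0)>0$; non-controllability then transfers this strict positivity to $d(s_0)$. Given $d(s_0)>0$, the displayed equation forces $\tfunc(s_0,a_1) = \tfunc(s_0,a_2)$, and since $s_0, a_1, a_2$ were arbitrary this shows $\tfunc$ is trivial, completing the contrapositive.
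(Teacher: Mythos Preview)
Your proof is correct, and it takes a genuinely different route from the paper's. The paper argues via value functions of state-valued rewards: it writes $\vec{V}^\pi = (I-\gamma T^\pi)^{-1}\vec{R}$, claims that in the absence of controllable states every policy has the same $V^\pi$ for all state-valued $R$, and then deduces $(I-\gamma T^\pi)^{-1}$ is policy-independent, hence $T^\pi$ is policy-independent, hence $\tfunc$ is trivial. Your argument instead stays entirely on the occupancy-measure side, using the flow identity $d^\pi = \init + \gamma (P^\pi)^\top d^\pi$ and localising to policies that differ at a single state. This is more direct: it works with the very quantity that appears in the definition of controllability, avoids the detour through value functions, and isolates the role of the reachability assumption cleanly (you need it precisely to get $d(s_0)>0$ so that the equation $\bigl(\tfunc(\cdot\mid s_0,a_1)-\tfunc(\cdot\mid s_0,a_2)\bigr)d(s_0)=0$ actually pins down $\tfunc$). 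The paper's approach has the minor advantage of yielding the global matrix identity $T^{\pi_1}=T^{\pi_2}$ in one stroke, whereas yours proceeds state by state; but your single-state perturbation argument is shorter and arguably more transparent.
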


\begin{proof}
It is straightforward to see that if $\tfunc$ is trivial then there are no controllable states. For the other direction, suppose there are no controllable states. Let a \enquote{state-valued} reward function be a reward function $R$ such that for each $s \in \States$, we have that $R(s,a_1,s_1) = R(s,a_2,s_2)$ for all $s_1,s_2 \in \States, a_1, a_2 \in \Actions$. Given a state-valued reward $R$, let $\vec{R} \in \mathbb{R}^{|S|}$ be the vector such that $\vec{R}[s]$ is the reward that $R$ assigns to transitions leaving $s$. Moreover, given a policy $\pi$, let $T^\pi$ be the $|\States|\times|\States|$-dimensional transition matrix that describes the transitions of $\pi$ under $\tfunc$, so that $T^\pi[s,s'] = \mathbb{P}_{A \sim \pi(s), S' \sim \tfunc(s,A)}(S' = s')$, and let $\vec{V^\pi} \in \mathbb{R}^{|S|}$ be the vector such that $\vec{V^\pi}[s] = V^\pi(s)$.
Using the Bellman equation for $V^\pi$ (Equation~\ref{equation:V_pi_recursion}), we now have that that:
\begin{align*}
    \vec{V^\pi} &= \Vec{R} + \gamma T^\pi \vec{V^\pi}\\
    \vec{V^\pi} - \gamma T^\pi \vec{V^\pi}&= \vec{R}\\
    (I - \gamma T^\pi)\vec{V^\pi} &= \vec{R}\\
    \vec{V^\pi} &= (I - \gamma T^\pi)^{-1}\vec{R}
\end{align*}
To see that $(I - \gamma T^\pi)$ always is invertible, note that the identity $(I - \gamma T^\pi)\vec{V^\pi} = \vec{R}$ implies that we, for any value function $V^\pi$, can find a state-valued reward function $R$ such that $V^\pi$ is the value function for $R$. Moreover, via the Bellman optimality equation (Equation~\ref{equation:V_pi_recursion}), we have that we, for any state-valued reward function $R$, can find a value function $V^\pi$ such that $V^\pi$ is the value function for $R$. There is therefore a one-to-one correspondence between value functions and state-valued reward functions, and so  $(I - \gamma T^\pi)$ must be invertible.

Next, note that if there are no controllable states, and $R$ is state-valued, then every policy $\pi$ has the same value function $V^\pi$. This means that
$$
(I - \gamma T^\pi)^{-1}\vec{R} = (I - \gamma T^{\pi'})^{-1}\vec{R}
$$
for all policies $\pi, \pi'$. Next, since $R$ was chosen arbitrarily, this identity must hold for all state-valued reward functions (i.e., all vectors in $\mathbb{R}^{|\States|}$), which means that 
$$
(I - \gamma T^\pi)^{-1} = (I - \gamma T^{\pi'})^{-1}
$$
for all policies $\pi, \pi'$. From this, it follows that $T^\pi = T^{\pi'}$ for all $\pi, \pi'$, which in turn implies that the transition function $\tfunc$ must be trivial.
\end{proof}

\wronggammatooambiguous*

\begin{proof}
As per Lemma~\ref{lemma:tau_to_control}, if $\tfunc$ is non-trivial then there is a state $s$ that is controllable relative to $\tfunc$ and $\gamma_2$ (and any $\init$ under which all states are reachable).
Let $\Phi_x : \States \to \mathbb{R}$ be the function given by $\Phi_x(s) = X$ and $\Phi_x(s') = 0$ for all $s' \neq s$, where $X \in \mathbb{R}$ and $X \neq 0$, and let $R$ be the reward function given by
$$
R(s,a,s') = \gamma_1 \cdot \Phi_x(s') - \Phi_x(s).
$$
Now $R_0$ and $R$ differ by potential shaping with $\gamma_1$, where $R_0$ is the reward function that is zero everywhere, which  means that $f^{\gamma_1}(R_0) = f^{\gamma_1}(R)$. 
Let $\Evaluation$ be the policy value function of $R$, evaluated under $\tfunc$, $\gamma_2$, and some initial state distribution $\init$ under which all states are reachable.
Moreover, given a policy $\pi$, let
\begin{align*}
    n^\pi &= \sum_{t=0}^\infty \gamma_2^t \mathbb{P}_{\xi \sim \pi}(S_{t+1} = s),\\
    x^\pi &= \sum_{t=0}^\infty \gamma_2^t \mathbb{P}_{\xi \sim \pi}(S_t = s).  
\end{align*}
We then have that
$\Evaluation(\pi) = X \cdot (\gamma_1 n^\pi - x^\pi)$. Let $p$ denote $\init(s)$. If $\gamma_1 = \gamma_2$ then we know that 
$\Evaluation(\pi) = -X \cdot p$ (Proposition~\ref{prop:change_from_potentials}), which gives that
\begin{align*}
    X \cdot (\gamma_2 n^\pi - x^\pi) &= -X \cdot p\\
    \gamma_2 n^\pi - x^\pi &= - p\\
    x^\pi &= \gamma_2 n^\pi + p
\end{align*}
By plugging this into the above, and rearranging, we obtain 
$$
\Evaluation(\pi) = X n^\pi(\gamma_1 - \gamma_2) - pX.
$$
Moreover, if $s$ is controllable then there are $\pi_1, \pi_2$ such that $n^{\pi_1} \neq n^{\pi_2}$, which means that $\Evaluation(\pi_1) \neq \Evaluation(\pi_2)$. 
Thus $R$ is not trivial under $\tfunc$ and $\gamma_2$. Since $R_0$ is trivial under $\tfunc$ and $\gamma_2$, this means that $R \not\eq{\mathrm{OPT}_{\tfunc,\gamma_2}} R_0$. Thus, there exists reward functions $R, R_0$ such that $f_{\gamma_1}(R) = f_{\gamma_1}(R_0)$ but $R \not\eq{\mathrm{OPT}_{\tfunc,\gamma_2}} R_0$, which means that $\mathrm{Am}(f^{\gamma_1}) \not\refines \mathrm{OPT}_{\tfunc,\gamma_2}$.
\end{proof}


\begin{lemma}\label{lemma:PS_additive_invariance}
If $f_\gamma$ is invariant to potential shaping with $\gamma$, $\gamma_1 \neq \gamma_2$, and $\tfunc$ is non-trivial, then for all reward functions $R$, there exists a reward function $R^\dagger$ such that $R \not\eq{\mathrm{ORD}_{\tfunc,\gamma_2}}R^\dagger$, such that $R^\dagger$ is non-trivial under $\tfunc$ and $\gamma_2$, and such that $f_{\gamma_1}(R) = f_{\gamma_1}(R + \alpha R^\dagger)$ for all $R$ and all $\alpha \in \mathbb{R}$. 
\end{lemma}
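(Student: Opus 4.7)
The plan is to mimic the construction used in the proof of Theorem~\ref{thm:wrong_gamma_too_ambiguous}, since the only new requirements are the invariance under arbitrary scalar $\alpha$ and the ordering-distinction between $R$ and $R^\dagger$. Because $\tfunc$ is non-trivial, Lemma~\ref{lemma:tau_to_control} supplies a state $s^\star$ that is controllable under $\tfunc$ and $\gamma_2$. For a real parameter $X \neq 0$, define the potential $\Phi_X : \States \to \mathbb{R}$ by $\Phi_X(s^\star) = X$ and $\Phi_X(s) = 0$ otherwise, and set $R^\dagger(s,a,s') = \gamma_1 \Phi_X(s') - \Phi_X(s)$, so that $R^\dagger$ is obtained from $R_0$ by potential shaping with discount $\gamma_1$.

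First I would verify the invariance clause. For any $\alpha \in \mathbb{R}$, the reward $\alpha R^\dagger$ is itself potential shaping of $R_0$ with $\gamma_1$ using the potential $\alpha \Phi_X$. Therefore $R + \alpha R^\dagger$ differs from $R$ by potential shaping with $\gamma_1$, and the assumed invariance of $f_{\gamma_1}$ gives $f_{\gamma_1}(R) = f_{\gamma_1}(R + \alpha R^\dagger)$ for every $\alpha$.

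Next I would recycle the computation from the proof of Theorem~\ref{thm:wrong_gamma_too_ambiguous} to certify that $R^\dagger$ is non-trivial under $\tfunc$ and $\gamma_2$. Writing $n^\pi = \sum_t \gamma_2^t\,\mathbb{P}_{\xi \sim \pi}(S_{t+1} = s^\star)$ and $p = \init(s^\star)$, that argument yields $J^{R^\dagger}(\pi) = X n^\pi (\gamma_1 - \gamma_2) - pX$. Controllability of $s^\star$ provides two policies $\pi_1, \pi_2$ with $n^{\pi_1} \neq n^{\pi_2}$, and since $X \neq 0$ and $\gamma_1 \neq \gamma_2$ this forces $J^{R^\dagger}(\pi_1) \neq J^{R^\dagger}(\pi_2)$, so $R^\dagger$ is non-trivial as required.

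The only step that is not essentially a copy of earlier work is forcing $R \not\eq{\mathrm{ORD}_{\tfunc,\gamma_2}} R^\dagger$; this is the main obstacle, because $R$ is arbitrary and could in principle mirror the ordering of $R^\dagger$. I plan to handle this by exploiting the sign freedom in $X$: replacing $X$ by $-X$ keeps $R^\dagger$ a potential-shaping reward for $\gamma_1$ and preserves non-triviality, but reverses the sign of $J^{R^\dagger}(\pi_1) - J^{R^\dagger}(\pi_2)$. Hence of the two candidates $R^\dagger$ and $-R^\dagger$, at most one can agree with the $R$-ordering on the pair $(\pi_1, \pi_2)$: if $J^R(\pi_1) = J^R(\pi_2)$ both disagree (since $R^\dagger$-differences are strict), and otherwise exactly one sign choice produces the opposite strict inequality from $R$. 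Selecting the sign of $X$ accordingly yields an $R^\dagger$ meeting all three required conditions simultaneously, completing the proof.
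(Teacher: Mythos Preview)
Your proof is correct, and your route to the non-equivalence clause is different from (and arguably cleaner than) the paper's. The paper does not exploit the sign freedom in $X$; instead it argues that if $\tfunc$ is non-trivial there must be at least \emph{two} controllable states $s_1,s_2$ (since the discounted visitation totals sum to $1/(1-\gamma_2)$, an increase at one state forces a decrease at another), builds a potential-shaping reward $R_i$ at each, and then proves $R_1 \not\eq{\mathrm{ORD}_{\tfunc,\gamma_2}} R_2$ by exhibiting policies $\pi_1,\pi_2$ with $n_1^{\pi_1}>n_1^{\pi_2}$ but $n_2^{\pi_1}<n_2^{\pi_2}$ and checking that the sign of $\gamma_1-\gamma_2$ flips the comparison. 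Since $\eq{\mathrm{ORD}}$ is transitive, $R$ can be equivalent to at most one of $R_1,R_2$, and the other serves as $R^\dagger$. Your argument needs only one controllable state and replaces the second construction by the trivial observation that $R^\dagger$ and $-R^\dagger$ induce opposite strict orderings on $(\pi_1,\pi_2)$, so at least one of them must clash with $R$'s ordering on that pair. Both arguments ultimately produce two non-trivial candidates that cannot both be $\mathrm{ORD}_{\tfunc,\gamma_2}$-equivalent to $R$; yours just gets there with less overhead.
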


\ifshowproofs
\begin{proof}
Let $R$ be an arbitrary reward function.
As per Lemma~\ref{lemma:tau_to_control}, if $\tfunc$ is non-trivial then there is a state $s$ that is controllable relative to $\tfunc$ and $\gamma_2$ (and any $\init$ under which all states are reachable). Specifically, there is a state $s_1$ for which there are two policies $\pi_1$, $\pi_2$ such that 
$$
\sum_{t=0}^\infty \gamma_2^t \mathbb{P}_{\xi \sim \pi_1}(S_t = s_1)
>
\sum_{t=0}^\infty \gamma_2^t \mathbb{P}_{\xi \sim \pi_2}(S_t = s_1).
$$
Moreover, since 
$$
\sum_{s \in \States}\sum_{t=0}^\infty \gamma_2^t \mathbb{P}_{\xi \sim \pi}(S_t = s) = \frac{1}{1-\gamma_2}
$$
for all $\pi$, we have that there also must be another state $s_2$ such that
$$
\sum_{t=0}^\infty \gamma_2^t \mathbb{P}_{\xi \sim \pi_1}(S_t = s_2)
<
\sum_{t=0}^\infty \gamma_2^t \mathbb{P}_{\xi \sim \pi_2}(S_t = s_2).
$$
In other words, there must be at least two controllable states $s_1, s_2$. 
Let $\Phi_{x,1} : \States \to \mathbb{R}$ be the function given by $\Phi_{x,1}(s_1) = X$ and $\Phi_{x,1}(s') = 0$ for all $s' \neq s_1$, where $X \in \mathbb{R}$ and $X \neq 0$, and let $R_1$ be the reward given by
$$
R_1(s,a,s') = \gamma_1 \cdot \Phi_{x,1}(s') - \Phi_{x,1}(s).
$$
Define $\Phi_{x,2}$ and $R_2$ analogously, using $s_2$ instead of $s_1$. $R_1$ and $R_2$ are non-trivial under $\tfunc$ and $\gamma_2$, as shown in Theorem~\ref{thm:wrong_gamma_too_ambiguous}. Moreover, $R$ and $R + \alpha R_1$ differ by potential shaping with $\gamma_1$, and likewise for $R$ and $R + \alpha R_2$, for all $\alpha \in \mathbb{R}$. It remains to show that $R_1 \not\eq{\mathrm{ORD}_{\tfunc,\gamma_2}} R_2$, which implies that either $R \not\eq{\mathrm{ORD}_{\tfunc,\gamma_2}} R_1$ or $R \not\eq{\mathrm{ORD}_{\tfunc,\gamma_2}} R_2$. When we have shown this, we can set $R^\dagger$ to either $R_1$ or $R_2$, which completes the proof.

To see that $R_1 \not\eq{\mathrm{ORD}_{\tfunc,\gamma_2}} R_2$, for a given policy $\pi$, let
\begin{align*}
    n_1^\pi &= \sum_{t=0}^\infty \gamma_2^t \mathbb{P}_{\xi \sim \pi}(S_{t+1} = s_1),\\
    n_2^\pi &= \sum_{t=0}^\infty \gamma_2^t \mathbb{P}_{\xi \sim \pi}(S_{t+1} = s_2).  
\end{align*}
and recall that
\begin{align*}
\Evaluation_1(\pi) &= X n_1^\pi(\gamma_1 - \gamma_2) - pX\\
\Evaluation_2(\pi) &= X n_2^\pi(\gamma_1 - \gamma_2) - pX
\end{align*}
as shown in the proof of Theorem~\ref{thm:wrong_gamma_too_ambiguous}. The proof of Theorem~\ref{thm:wrong_gamma_too_ambiguous} also shows that $x^\pi_1 = \gamma_2 n^\pi_1 + p_1$, where $p_1$ is $\init(s_1)$ and $x_1^\pi = \sum_{t=0}^\infty \gamma_2^t \mathbb{P}_{\xi \sim \pi}(S_t = s_1)$. Thus, if 
$$
\sum_{t=0}^\infty \gamma_2^t \mathbb{P}_{\xi \sim \pi_1}(S_t = s_1)
>
\sum_{t=0}^\infty \gamma_2^t \mathbb{P}_{\xi \sim \pi_2}(S_t = s_1).
$$
then $x_1^{\pi_1} > x_1^{\pi_2}$, which means that $n_1^{\pi_1} > n_1^{\pi_2}$. Similarly, we also have that $n_2^{\pi_1} < n_2^{\pi_2}$. Thus, if $\gamma_1 > \gamma_2$, then we have that $J_1(\pi_1) > J_1(\pi_2)$ but $J_2(\pi_1) < J_2(\pi_2)$, and if $\gamma_1 < \gamma_2$, then we have that $J_1(\pi_1) < J_1(\pi_2)$ but $J_2(\pi_1) > J_2(\pi_2)$. In either case, we have that $J_1$ and $J_2$ induce different policy orderings, and so $R_1 \not\eq{\mathrm{ORD}_{\tfunc,\gamma_2}} R_2$. This completes the proof.
\end{proof}
\fi

Using this lemma, we can now derive the following:

\wronggammalargediameter*

\begin{proof}
In the proof of Theorem~\ref{thm:wrong_gamma_too_ambiguous} we show that if $\gamma_1 \neq \gamma_2$ and $\tfunc$ is non-trivial, then there exists a reward $R^\dagger$ such that $R$ and $R + \alpha R^\dagger$ differ by potential shaping with $\gamma_1$ (for all $R$ and all $\alpha \in \mathbb{R}$), and such that $R^\dagger$ is non-trivial under $\gamma_2$. We can thus apply Lemma~\ref{lemma:non_trivial_additive_big_diameter}.
\end{proof}

\subsection{Misspecification Analysis With Equivalence Relations}

In this section, we provide the proofs of our results concerning the analysis of misspecification robustness with equivalence relations. In the main text, these results are presented in Section~\ref{sec:misspecification_1}.

\subsubsection{Misspecification of Policies}

\boltzmannweakrobustness*

\begin{proof}
Let $f_{\tfunc,\gamma} \in F_{\tfunc,\gamma}$ be surjective onto $\Pi^+$.
By definition, we have that $f_{\tfunc,\gamma}$ is $\mathrm{OPT}_{\tfunc,\gamma}$-robust to misspecification with $g$ if and only if $\mathrm{Am}(f_{\tfunc,\gamma}) \refines \mathrm{OPT}_{\tfunc,\gamma}$, $g \neq f_{\tfunc,\gamma}$, $\mathrm{Im}(g) \subseteq \mathrm{Im}(f)$, and if $f_{\tfunc,\gamma}(R_1) = g(R_2)$ then $R_1$ and $R_2$ have the same optimal policies under transition function $\tfunc$ and discount $\gamma$.

Since $f_{\tfunc,\gamma} \in F_{\tfunc,\gamma}$, we have that for all $R$,
$$
\mathrm{argmax}_{a \in \Actions} f_{\tfunc,\gamma}(R)(a \mid s) = \mathrm{argmax}_{a \in \Actions} Q^\star(s,a).
$$
Moreover, $R_1$ and $R_2$ have the same optimal policies under $\tfunc$ and $\gamma$ if and only if $\mathrm{argmax}_{a \in \Actions} Q^\star_1(s,a) = \mathrm{argmax}_{a \in \Actions} Q^\star_2(s,a)$ under $\tfunc$ and $\gamma$. 
Thus, if $f_{\tfunc,\gamma}(R_1) = f_{\tfunc,\gamma}(R_2)$ then $R_1 \equiv_{\mathrm{OPT}_{\tfunc,\gamma}} R_2$, and so $\mathrm{Am}(f_{\tfunc,\gamma}) \refines \mathrm{OPT}_{\tfunc,\gamma}$.

Let $g \in F_{\tfunc,\gamma}$ and $g \neq f_{\tfunc,\gamma}$. Since $g$ is a function $\mathcal{R} \to \Pi^+$, and since $f_{\tfunc,\gamma}$ is surjective onto $\Pi^+$, we have that  $\mathrm{Im}(g) \subseteq \mathrm{Im}(f_{\tfunc,\gamma})$. Next, by the same argument as above, if $f_{\tfunc,\gamma}(R_1) = g(R_2)$ then $\mathrm{argmax}_{a \in \Actions} Q^\star_1(s,a) = \mathrm{argmax}_{a \in \Actions} Q^\star_2(s,a)$, which implies that $R_1 \equiv_{\mathrm{OPT}_{\tfunc,\gamma}} R_2$. This means that $f_{\tfunc,\gamma}$ is $\mathrm{OPT}_{\tfunc,\gamma}$-robust to misspecification with $g$.

Next, suppose $f_{\tfunc,\gamma}$ is $\mathrm{OPT}_{\tfunc,\gamma}$-robust to misspecification with $g$. This means that $\mathrm{Im}(g) \subseteq \mathrm{Im}(f)$, that $f_{\tfunc,\gamma} \neq g$, and that if $f_{\tfunc,\gamma}(R_1) = g(R_2)$ then $R_1 \equiv_{\mathrm{OPT}_{\tfunc,\gamma}} R_2$. First, note that $\mathrm{Im}(g) \subseteq \mathrm{Im}(f)$ implies that $g$ is a function $\mathcal{R} \to \Pi^+$.
Next, let $R_1$ be an arbitrary reward function, and let $R_2$ be a reward function such that $f_{\tfunc,\gamma}(R_2) = g(R_1)$. Since $\mathrm{Im}(g) \subseteq \mathrm{Im}(f)$, we have that such a reward function $R_2$ must exist. Next, since $f_{\tfunc,\gamma}(R_2) = g(R_1)$,
$$
\mathrm{argmax}_{a \in \Actions} f_{\tfunc,\gamma}(R_2)(a \mid s) = \mathrm{argmax}_{a \in \Actions} g(R_1)(a \mid s).
$$
Moreover, we have that $R_1 \equiv_{\mathrm{OPT}_{\tfunc,\gamma}} R_2$, since $f_{\tfunc,\gamma}$ is $\mathrm{OPT}_{\tfunc,\gamma}$-robust to misspecification with $g$. This means that
$$
\mathrm{argmax}_{a \in \Actions} Q^\star_1(s,a) = \mathrm{argmax}_{a \in \Actions} Q^\star_2(s,a).
$$
Now, since $f_{\tfunc,\gamma} \in F_{\tfunc,\gamma}$, we have that
$$
\mathrm{argmax}_{a \in \Actions} f_{\tfunc,\gamma}(R_2)(a \mid s) = \mathrm{argmax}_{a \in \Actions} Q^\star_2(s,a).
$$
By transitivity, this implies that
$$
\mathrm{argmax}_{a \in \Actions} g(R_1)(a \mid s) = \mathrm{argmax}_{a \in \Actions} Q^\star_1(s,a).
$$
Since $R_1$ was chosen arbitrarily, this must hold for all $R_1$. Thus $g \in F_{\tfunc,\gamma}$.
\end{proof}

\boltzmannordrobustness*

\begin{proof}
As per Theorem~\ref{thm:ambiguity-boltzmann-rational}, $\mathrm{Am}(b_{\tfunc,\gamma,\beta})$ is characterised by $\PS \bigodot \SR$, and as per Theorem~\ref{thm:policy_ordering}, $\ORD$ is characterised by $\PS \bigodot \SR \bigodot \LS$. 
Hence $\mathrm{Am}(b_{\tfunc,\gamma,\beta}) \refines \mathrm{ORD}_{\tfunc,\gamma}$, which means that Lemma~\ref{lemma:P_robustness_function_composition} implies that $b_{\tfunc,\gamma,\beta}$ is $\mathrm{ORD}_{\tfunc,\gamma}$-robust to misspecification with $g$
if and only if $g \neq b_{\tfunc,\gamma,\beta}$, and there exists a $t \in \PS \bigodot \SR \bigodot \LS$ such that $g = b_{\tfunc,\gamma,\beta} \circ t$.

For the first direction, assume that there exists a $t \in \PS \bigodot \SR \bigodot \LS$ such that $g = b_{\tfunc,\gamma,\beta} \circ t$ and $g \neq b_{\tfunc,\gamma,\beta}$.
Now $b_{\tfunc,\gamma,\beta}(R)$ is the policy given by
\begin{equation*}
    b_{\tfunc,\gamma,\beta}(R)(a \mid s) = \frac{\exp \beta A_R(s,a)}{\sum_{a \in \Actions} \exp \beta A_R(s,a)}.
\end{equation*}
where $A_R$ is the optimal advantage function of $R$ under $\tfunc$ and $\gamma$. If $g(R) = b_{\tfunc,\gamma,\beta} \circ t(R)$ for some $t \in \mathrm{PS}_\discount \bigodot \mathrm{LS} \bigodot S'\mathrm{R}_\tfunc$, then we have that
\begin{align*}
    g(R)(a \mid s) &= \frac{\exp \beta A_{t(R)}(s,a)}{\sum_{a \in \Actions} \exp \beta A_{t(R)}(s,a)} \\
    &= \frac{\exp \beta c_R A_R(s,a)}{\sum_{a \in \Actions} \exp \beta c_R A_R(s,a)},
\end{align*}
where $c_R$ is the linear scaling factor that $t$ applies to $R$. 
Note that the advantage function $\A$ is preserved by both potential shaping and $S'$-redistribution (Lemma~\ref{lemma:ambiguity_A_function}).
Now let $\psi(R) = \beta \cdot c_R$, and we can see that $g = b_{\tfunc,\gamma,\psi} \in B_{\tfunc,\gamma}$. Thus, if $b_{\tfunc,\gamma,\beta}$ is $\mathrm{ORD}_{\tfunc,\gamma}$-robust to misspecification with $g$, then $g \in B_{\tfunc,\gamma}$ and $g \neq b_{\tfunc,\gamma,\psi}$.

For the other direction, assume that $g \in B_{\tfunc,\gamma}$ and $g \neq b_{\tfunc,\gamma,\beta}$. Since $g \in B_{\tfunc,\gamma}$, there is a function $\psi : \R \to \mathbb{R}^+$ such that $g(R)$ is the policy given by applying a softmax function with temperature $\psi(R)$ to the optimal advantage function of $R$. Now let $t \in \LS$ be the function that scales each $R \in \R$ by a factor of $\psi(R)/\beta$, and we can see that $g = b_{\tfunc,\gamma,\beta} \circ t$. This completes the proof.
\end{proof}

\optimalpolicyrobustness*

\begin{proof}
The first part follows from Proposition~\ref{prop:optimal_policies_do_not_preserve_ord}, which says that $\mathrm{Am}(o) \not\refines \mathrm{ORD}_{\tfunc,\gamma}$, unless $|\States| = 1$ and $|\Actions| = 2$.
Moreover, $\mathrm{Am}(o_{\tfunc,\gamma}^\star) = \mathrm{OPT}_{\tfunc,\gamma}$ (Theorem~\ref{thm:ambiguity-optimal}). Therefore, by Lemma~\ref{lemma:less_ambiguity_less_robustness}, there is no function $g$ such that $o_{\tfunc,\gamma}^\star$ is $\mathrm{OPT}_{\tfunc,\gamma}$-robust to misspecification with $g$.
Finally, Theorem~\ref{thm:non_max_supportive_opt_very_ambiguous} says that if $o \in \mathcal{O}_{\tfunc,\gamma}$ but $o \neq o_{\tfunc,\gamma}^\star$, then $\mathrm{Am}(o) \not\refines \mathrm{OPT}_{\tfunc,\gamma}$.
\end{proof}

\mcestrongrobustness*

\begin{proof}
As per Theorem~\ref{thm:ambiguity-MCE}, $\mathrm{Am}(c_{\tfunc,\gamma,\alpha})$ is characterised by $\PS \bigodot \SR$, 
and as per Theorem~\ref{thm:policy_ordering}, 
$\ORD$ is characterised by $\PS \bigodot \SR \bigodot \LS$. Hence $\mathrm{Am}(c_{\tfunc,\gamma,\alpha}) \refines \ORD$, which means that Lemma~\ref{lemma:P_robustness_function_composition} implies that $c_{\tfunc,\gamma,\alpha}$ is $\ORD$-robust to misspecification with $g$
if and only if $g \neq c_{\tfunc,\gamma,\alpha}$, and there exists a $t \in \PS \bigodot \SR \bigodot \LS$ such that $g = c_{\tfunc,\gamma,\alpha} \circ t$.

For the first direction, assume that $g \neq c_{\tfunc,\gamma,\alpha}$, and that there exists a $t \in \PS \bigodot \SR \bigodot \LS$ such that $g = c_{\tfunc,\gamma,\alpha} \circ t$.
Recall that $c_{\tfunc,\gamma,\alpha}(R)$ is the unique policy that maximises the maximal causal entropy objective;
\begin{equation*}
    \Evaluation^\mathrm{MCE}_{R}(\pi) = \Evaluation_R(\pi) - \alpha \sum_{t=0}^\infty \mathbb{E}_{S_t \sim \pi, \tfunc,\init}[\discount^t H(\pi(S_t))],
\end{equation*}
where $J_R$ is the policy evaluation function for the reward function $R$. Therefore, if $g(R) = c_{\tfunc,\gamma,\psi} \circ t(R)$ then $g(R)$ is the policy 
\begin{align*}
    &\max_\pi\Evaluation^\mathrm{MCE}_{t(R)}(\pi)\\
    = &\max_\pi\Evaluation_{t(R)}(\pi) - \alpha \sum_{t=0}^\infty \mathbb{E}_{S_t \sim \pi, \tfunc,\init}[\discount^t H(\pi(S_t))] \\
    = &\max_\pi c_R \cdot \Evaluation_{R}(\pi) - \alpha \sum_{t=0}^\infty \mathbb{E}_{S_t \sim \pi, \tfunc,\init}[\discount^t H(\pi(S_t))]
\end{align*}
where $c_R$ is the linear scaling factor that $t$ applies to $R$. 
Note that $\Evaluation_R$ is preserved by $S'$-redistribution, and potential shaping can only change $\Evaluation_R$ by inducing a uniform constant shift of $\Evaluation_R$ for all policies (Proposition~\ref{prop:change_from_potentials}). Thus linear scaling is the only transformation in $\PS \bigodot \SR \bigodot \LS$ that could affect the MCE objective.
Finally, let $\psi$ be the function $\psi(R) = \alpha / c_R$, and we can see that $g = c_{\tfunc,\gamma,\psi} \in C_{\tfunc,\gamma}$. 

For the other direction, assume that $g \in C_{\tfunc,\gamma}$ and $g \neq c_{\tfunc,\gamma,\psi}$. Then there is a function $\psi : \R \to \mathbb{R}^+$ such that $g(R)$ is the unique policy that maximises the MCE objective given by 
\begin{equation*}
    \Evaluation_R(\pi) - \psi(R) \sum_{t=0}^\infty \mathbb{E}_{S_t \sim \pi, \tfunc,\init}[\discount^t H(\pi(S_t))].
\end{equation*}
Now let $t \in \LS$ be the function that applies a positive linear scaling factor of $\psi(R)/\alpha$ to each reward function $R$, and we can see that $g = c_{\tfunc,\gamma,\psi} \circ t$. Since $t \in \LS$, this completes the other direction, and the proof.
\end{proof}

\subsubsection{Wider Classes of Policies}

\continuousnotsurjective*

\begin{proof}
Assume for contradiction that $f : \mathcal{R} \to \Pi^+$ is continuous and surjective, and that $f(R_1) = f(R_2)$ if and only if $R_1 \eq{\mathrm{ORD}_{\tfunc, \gamma}} R_2$. 
Then $f$ is a continuous bijection from $\mathrm{Im}(s^\mathrm{STARC}_{\tfunc,\gamma})$ to $\Pi^+$, where $s^\mathrm{STARC}_{\tfunc,\gamma}$ is the standardisation function of $\starc$ (Definition~\ref{def:standard_starc}). Moreover, $\mathrm{Im}(s^\mathrm{STARC}_{\tfunc,\gamma})$ is compact (because it is a closed and bounded subset of a finite-dimensional Euclidean space), and $\Pi^+$ is Hausdorff. It thus follows that $f$ is a homeomorphism. This is a contradiction, since $\Pi^+$ is not homeomorphic to $\mathrm{Im}(s^\mathrm{STARC}_{\tfunc,\gamma})$. For example, $\mathrm{Im}(s^\mathrm{STARC}_{\tfunc,\gamma})$ contains an isolated point, which $\Pi^+$ does not.
\end{proof}

\continuousandinvarianttolinear*

\begin{proof}
This is straightforward.
Suppose $f : \mathcal{R} \to \Pi^+$ is continuous and invariant to positive linear scaling.
Let $R_1, R_2$ be two arbitrary reward functions, and consider a sequence $c_t$ where $c_t > 0$, but $c_t \to 0$ as $t \to \infty$. 
Next, consider the sequence given by $c_t \cdot R_1$. Since $c_t \cdot R_1 \to R_0$ as $t \to \infty$, and since $f$ is continuous, we have that $f(c_t \cdot R_1) \to f(R_0)$ as $t \to \infty$. 
Moreover, since $f$ is invariant to positive linear scaling, we have that $f(c_t \cdot R_1) = f(R_1)$ for all $c_t$. This implies that $f(R_1) = f(R_0)$. By an analogous argument, we also have that $f(R_2) = f(R_0)$, and hence that $f(R_1) = f(R_2)$.
\end{proof}

\subsubsection{Misspecified Environments}

\notProbusttomisspecifiedt*

\begin{proof}
Suppose for contradiction that $f_{\tfunc_1}$ is $\mathrm{OPT}_{\tfunc_3,\gamma}$-robust to misspecification with $f_{\tfunc_2}$. If $\tfunc_1 \neq \tfunc_2$, then $\tfunc_3 \neq \tfunc_1$, or $\tfunc_3 \neq \tfunc_2$, or both. Theorem~\ref{thm:wrong_tau_too_ambiguous} then implies that $\mathrm{Am}(f_{\tfunc_1}) \not\refines \mathrm{OPT}_{\tfunc_3,\gamma}$, or $\mathrm{Am}(f_{\tfunc_2}) \not\refines \mathrm{OPT}_{\tfunc_3,\gamma}$, or both. The former violates condition 3 in Definition~\ref{def:misspecification_eq}, and the latter violates Lemma~\ref{lemma:P_robustness_implies_refinement}. Thus $f_{\tfunc_1}$ cannot be $\mathrm{OPT}_{\tfunc_3,\gamma}$-robust to misspecification with $f_{\tfunc_2}$.
\end{proof}

\notProbusttomisspecifiedgamma*

\begin{proof}
Suppose for contradiction that $f_{\gamma_1}$ is $\mathrm{OPT}_{\tfunc,\gamma_3}$-robust to misspecification with $f_{\gamma_2}$. If $\gamma_1 \neq \gamma_2$, then $\gamma_3 \neq \gamma_1$, or $\gamma_3 \neq \gamma_2$, or both. Theorem~\ref{thm:wrong_gamma_too_ambiguous} then implies that $\mathrm{Am}(f_{\gamma_1}) \not\refines \mathrm{OPT}_{\tfunc,\gamma_3}$, or $\mathrm{Am}(f_{\gamma_2}) \not\refines \mathrm{OPT}_{\tfunc,\gamma_3}$, or both. The former violates condition 3 in Definition~\ref{def:misspecification_eq}, and the latter violates Lemma~\ref{lemma:P_robustness_implies_refinement}. Thus $f_{\gamma_1}$ cannot be $\mathrm{OPT}_{\tfunc,\gamma_3}$-robust to misspecification with $f_{\gamma_2}$.
\end{proof}

\notProbustbroadcase*

\begin{proof}
By Lemma~\ref{lemma:ambiguity_optimal_Q_function}, we have that $f_{\tfunc,\gamma}$ determines $R$ up to $\SR$. Thus, by Theorem~\ref{thm:not_P_robust_to_misspecified_t}, we have that $f_{\tfunc_1,\gamma}$ is not $\mathrm{OPT}_{\tfunc_3,\gamma}$-robust to misspecification with $f_{\tfunc_2,\gamma}$. Moreover, $\mathrm{Im}(f_{\tfunc_1,\gamma}) = \mathrm{Im}(f_{\tfunc_2,\gamma})$, since for any function $q : \SxA \to \mathbb{R}$, we can always find a reward function $R$ such that the optimal $Q$-function for $R$ is $q$ (via the Bellman optimality equation). We thus apply Lemma~\ref{lemma:P_robustness_inheritance}, and conclude the proof.
\end{proof}

\subsection{Misspecification Analysis With Metrics}

In this section, we provide the proofs of our results concerning the analysis of misspecification robustness with distance (pseudo)metrics. In the main text, these results are presented in Section~\ref{sec:misspecification_2}.

\subsubsection{Necessary and Sufficient Conditions}

\nasforsmallstarc*

\begin{proof}
For the first direction, suppose $\starc(R,t(R)) \leq \epsilon$ for all $R \in \mathcal{R}$, and let $R$ be an arbitrarily selected reward function. We will show that it is possible to navigate from $R$ to $t(R)$ using the described transformations.

Recall that $\starc(R, t(R))$ is computed by first applying $c^\mathrm{STARC}_{\tfunc,\gamma}$ to both $R$ and $t(R)$, normalising the resulting vectors, measuring their $L_2$-distance, and dividing the result by 2.
This means that if $\starc(R, t(R)) < 0.5$, then the $L_2$-distance between $s^\mathrm{STARC}_{\tfunc,\gamma}(R)$ and $s^\mathrm{STARC}_{\tfunc,\gamma}(t(R))$ is less than 1. Note also that if $R$ is trivial and $t(R)$ is non-trivial, or vice versa, then the $L_2$-distance between $s^\mathrm{STARC}_{\tfunc,\gamma}(R)$ and $s^\mathrm{STARC}_{\tfunc,\gamma}(t(R))$ is exactly 1. Thus, either $R$ and $t(R)$ are both trivial, or they are both non-trivial.

If $R$ and $t(R)$ are both trivial, then they differ by some transformation in $\PS \bigodot \SR$ (as implied by Theorem~\ref{thm:policy_ordering}), and so the theorem holds. Next, if $R$ and $t(R)$ are both non-trivial, then $s^\mathrm{STARC}_{\tfunc,\gamma}(R)$ and $s^\mathrm{STARC}_{\tfunc,\gamma}(t(R))$ can be placed in the following diagram, where $\epsilon' \leq 2\epsilon$:
\begin{figure}[H]
    \centering
    \hspace*{2cm}
    \includegraphics[width=1\textwidth/2]{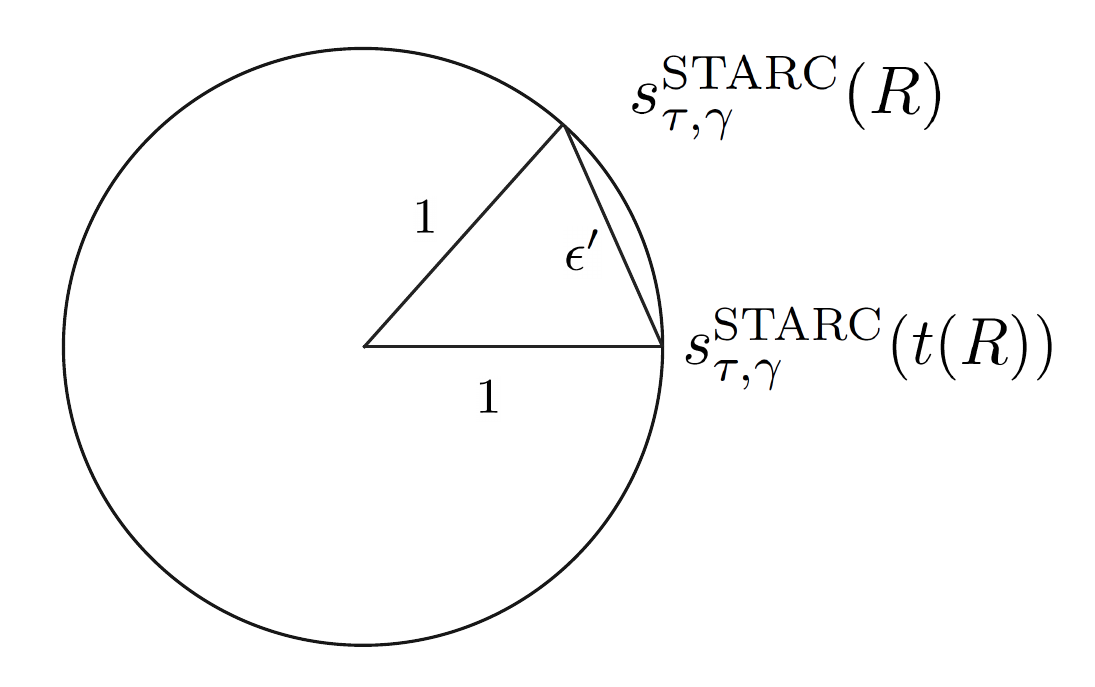}
    \label{fig:fig1}
\end{figure}
Now, elementary trigonometry tells us that the angle $\theta$ between $s^\mathrm{STARC}_{\tfunc,\gamma}(R)$ and $s^\mathrm{STARC}_{\tfunc,\gamma}(t(R))$ is $\theta = 2 \arcsin (\epsilon'/2)$.
\footnote{This can be seen by bisecting the triangle along the vertex between $s^\mathrm{STARC}_{\tfunc,\gamma}(R)$ and $s^\mathrm{STARC}_{\tfunc,\gamma}(t(R))$, to form two right triangles. Since the hypotenuse is $1$, we have that $\sin(\theta/2) = \epsilon'/2$.} 
Moreover, suppose we make a right triangle by extending $s^\mathrm{STARC}_{\tfunc,\gamma}(R)$ as follows:

\begin{figure}[H]
    \centering
    \includegraphics[width=1\textwidth/3]
    {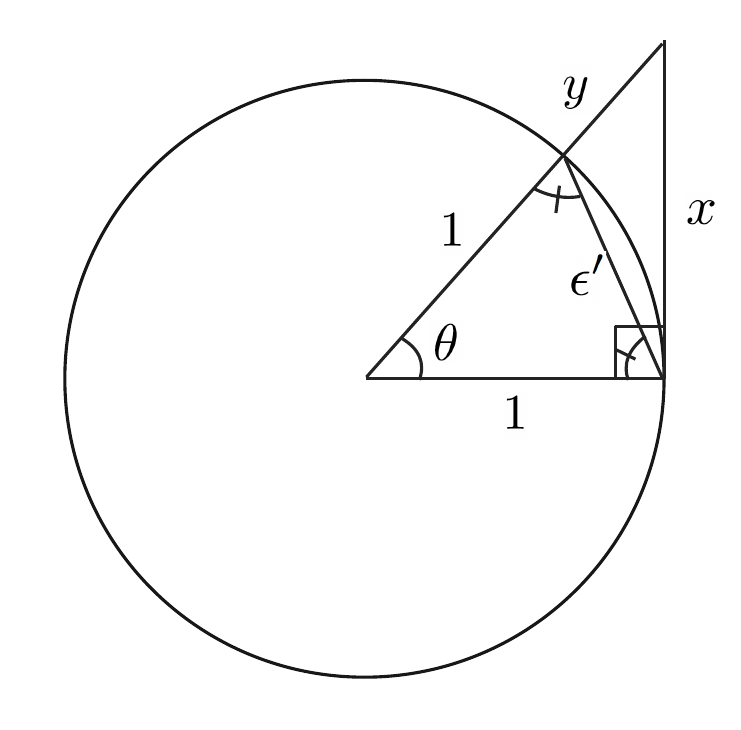}
    \label{fig:fig2}
\end{figure}

Note that this can be done, since $\epsilon' \leq 2\epsilon < 2 \cdot 0.5 = 1 < \sqrt{2}$. Here elementary trigonometry again tells us that 
$$
x/(1+y) = \sin(\theta) = \sin(2 \arcsin (\epsilon'/2)),
$$
or that $x = (1+y)\sin(2 \arcsin (\epsilon'/2))$. This means that we can go from $R$ to $t(R)$ as follows:
\begin{enumerate}
    \item Apply $c^\mathrm{STARC}_{\tfunc,\gamma}$. Since $R$ and $c^\mathrm{STARC}_{\tfunc,\gamma}(R)$ differ by potential shaping and $S'$-redistribution, this transformation can be expressed as a combination of potential shaping and $S'$-redistribution. Call the resulting vector $R'$.
    \item Normalise $R'$, so that its magnitude is 1. This transformation is an instance of positive linear scaling. Call the resulting vector $R''$.
    \item Scale $R''$ until it forms a right triangle with $s^\mathrm{STARC}_{\tfunc,\gamma}(t(R))$. This transformation is an instance of positive linear scaling. Call the resulting vector $R'''$.
    \item Move from $R'''$ to $s^\mathrm{STARC}_{\tfunc,\gamma}(t(R))$. This will move $R'''$ by a distance equal to $(1+y)\sin(2 \arcsin (\epsilon'/2))$, where $(1+y) = L_2(R''')$. Moreover, since $R'''$ is in the image of $c^\mathrm{STARC}_{\tfunc,\gamma}$, we have that $R''' = c^\mathrm{STARC}_{\tfunc,\gamma}(R''')$, and so $L_2(R''') = L_2(c^\mathrm{STARC}_{\tfunc,\gamma}(R'''))$. This means that $R'''$ is moved by $L_2(c^\mathrm{STARC}_{\tfunc,\gamma}(R''')) \cdot \sin(2 \arcsin (\epsilon'/2))$. Since $0 \leq \epsilon' \leq 2\epsilon < 1 < \sqrt{2}$, and since $\sin(2 \arcsin (x/2))$ is growing monotonically on $[0,\sqrt{2}]$ this means that 
    $$
    L_2(R''', s^\mathrm{STARC}_{\tfunc,\gamma}(t(R))) \leq L_2(c^\mathrm{STARC}_{\tfunc,\gamma}(R''')) \cdot \sin(2 \arcsin (\epsilon)).
    $$
    \item Move from $s^\mathrm{STARC}_{\tfunc,\gamma}(t(R))$ to $c^\mathrm{STARC}_{\tfunc,\gamma}(t(R))$. Since $s^\mathrm{STARC}_{\tfunc,\gamma}(t(R))$ is simply a normalised version of $c^\mathrm{STARC}_{\tfunc,\gamma}(t(R))$, this is an instance of positive linear scaling.
    \item Move from $c^\mathrm{STARC}_{\tfunc,\gamma}(t(R))$ to $t(R)$. Since $t(R)$ and $c^\mathrm{STARC}_{\tfunc,\gamma}(t(R))$ differ by potential shaping and $S'$-redistribution, this transformation can be expressed as a combination of potential shaping and $S'$-redistribution.
\end{enumerate}
Thus, for an arbitrary reward function $R$, we can find a series of transformations that fit the given description. This completes the first direction.

For the other direction, suppose $t$ can be expressed as $t_1 \circ t_2 \circ t_3$ where 
$$
L_2(R, t_2(R)) \leq L_2(c^\mathrm{STARC}_{\tfunc,\gamma}(R)) \cdot \sin(2 \arcsin (\epsilon))
$$
for all $R$, and where $t_1,t_3 \in \SR \bigodot \PS \bigodot \LS$.

Recall that $\starc$ is invariant order-preserving transformations, and that all transformations in $\SR \bigodot \PS \bigodot \LS$ are order-preserving. This means that $\starc(R, t_i(R)) = 0$ for $i \in \{1,3\}$. 

For $t_2$, let $R$ be an arbitrary reward function. First note that if $R$ is trivial, then $c^\mathrm{STARC}_{\tfunc,\gamma}(R) = R_0$, and so $L_2(c^\mathrm{STARC}_{\tfunc,\gamma}(R)) = 0$. This implies that $L_2(R, t_2(R)) = 0$, and so $\starc(R, t_2(R)) = 0$. By the triangle inequality, we then have that $\starc(R, t(R)) = 0 \leq \epsilon$.

Next, assume that $R$ is non-trivial. Recall that $c^\mathrm{STARC}_{\tfunc,\gamma}$ is a linear orthogonal projection; this means that $L_2(c^\mathrm{STARC}_{\tfunc,\gamma}(R_1), c^\mathrm{STARC}_{\tfunc,\gamma}(R_2)) \leq L_2(R_1, R_2)$. As such, if $L_2(R, t_2(R)) \leq L_2(c^\mathrm{STARC}_{\tfunc,\gamma}(R)) \cdot \sin(2 \arcsin (\epsilon))$, then 
$$
L_2(c^\mathrm{STARC}_{\tfunc,\gamma}(R), c^\mathrm{STARC}_{\tfunc,\gamma}(t_2(R))) \leq L_2(c^\mathrm{STARC}_{\tfunc,\gamma}(R)) \cdot \sin(2 \arcsin (\epsilon))
$$
as well. Consider the set of all reward functions in $\mathrm{Im}(c^\mathrm{STARC}_{\tfunc,\gamma})$ whose distance to $c^\mathrm{STARC}_{\tfunc,\gamma}(R)$ is at most $L_2(c^\mathrm{STARC}_{\tfunc,\gamma}(R)) \cdot \sin(2 \arcsin (\epsilon))$, as in the following diagram:

\begin{figure}[H]
    \centering
    \includegraphics[width=3\textwidth/4]{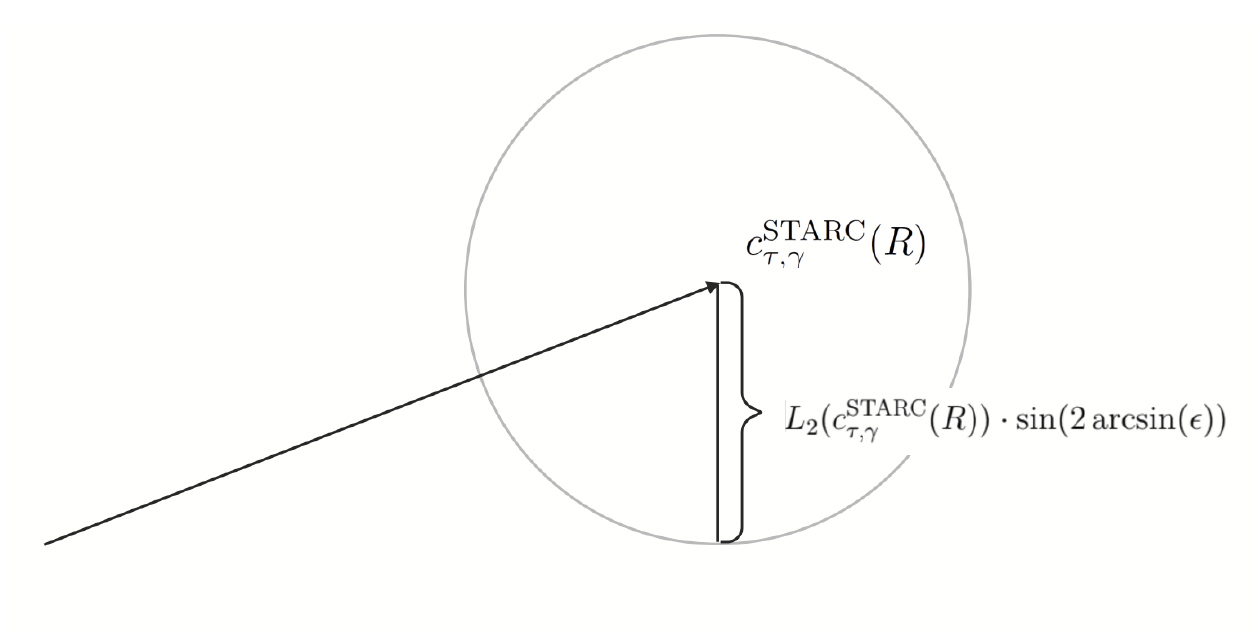}
    \label{fig:fig3}
\end{figure}

Now $c^\mathrm{STARC}_{\tfunc,\gamma}(t_2(R))$ is located within the sphere depicted in the diagram above.\footnote{Note that $L_2(c^\mathrm{STARC}_{\tfunc,\gamma}(R)) \cdot \sin(2 \arcsin (\epsilon)) < L_2(c^\mathrm{STARC}_{\tfunc,\gamma}(R))$ for $\epsilon < 0.5$.}
The vectors $R'$ within this sphere that maximise the distance to $c^\mathrm{STARC}_{\tfunc,\gamma}(R)$ \emph{after normalisation} lie on the tangents of this sphere and the origin. That is, we wish to find an $R' \in \mathrm{Im}(c^\mathrm{STARC}_{\tfunc,\gamma})$ which maximises 
$$
L_2\left(\frac{R'}{L_2(R')}, \frac{c^\mathrm{STARC}_{\tfunc,\gamma}(R)}{L_2(c^\mathrm{STARC}_{\tfunc,\gamma}(R))}\right),
$$
subject to the constraint that 
$$
L_2(c^\mathrm{STARC}_{\tfunc,\gamma}(R), R') \leq L_2(c^\mathrm{STARC}_{\tfunc,\gamma}(R)) \cdot \sin(2 \arcsin (\epsilon)).
$$
This optimisation problem is solved by the rewards $R'$ such that $L_2(c^\mathrm{STARC}_{\tfunc,\gamma}(R), R') = L_2(c^\mathrm{STARC}_{\tfunc,\gamma}(R)) \cdot \sin(2 \arcsin (\epsilon))$, and such that $R'$ forms a right triangle with $c^\mathrm{STARC}_{\tfunc,\gamma}(R)$ and the origin.

Let $R'$ be any such reward, and let $\delta$ be the distance between this reward and $c^\mathrm{STARC}_{\tfunc,\gamma}(R)$ after normalisation (i.e., $\delta = L_2\left(R'/L_2(R'),c^\mathrm{STARC}_{\tfunc,\gamma}(R)/L_2(c^\mathrm{STARC}_{\tfunc,\gamma(R))}\right)$). Let $\theta$ be the angle between $c^\mathrm{STARC}_{\tfunc,\gamma}(R)$ and $R'$. 

\begin{figure}[H]
    \centering
    \includegraphics[width=3\textwidth/4]{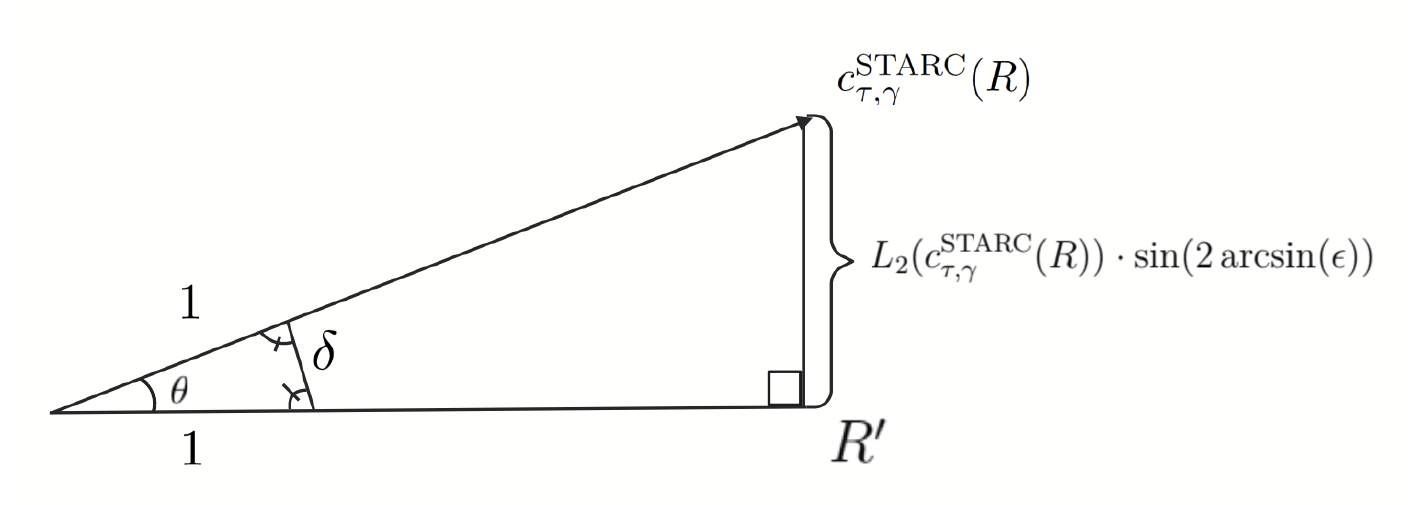}
    \label{fig:fig4}
\end{figure}

Note that $\starc(R, R') = 0.5\cdot \delta$, from the definition of $\starc$ (also noting that $R' = c(R')$).
Elementary trigonometry now tells us that 
$$
\sin(\theta) = \frac{L_2(c^\mathrm{STARC}_{\tfunc,\gamma}(R)) \cdot \sin(2 \arcsin (\epsilon))}{L_2(c^\mathrm{STARC}_{\tfunc,\gamma}(R))},
$$
which gives that $\theta = 2 \arcsin (\epsilon)$. From this, we have that $\delta = 2\epsilon$, and so $\starc(R, R') = \epsilon$. Since $R'$ was selected to maximise the $\starc$-distance to $R$ among all rewards $R''$ such that $L_2(R,R'') \leq L_2(c^\mathrm{STARC}_{\tfunc,\gamma}(R)) \cdot \sin(2 \arcsin(\epsilon))$, we conclude that if $L_2(R,R'') \leq L_2(c^\mathrm{STARC}_{\tfunc,\gamma}(R)) \cdot \sin(2 \arcsin(\epsilon))$, then $\starc(R,R'') \leq \epsilon$.
Since $R$ was selected arbitrarily, this proves that $\starc(R,t_2(R)) \leq \epsilon$ for all $R$. Since $\starc(R,t_1(R)) = 0$ and $\starc(R,t_3(R)) = 0$ for all $R$, and since $\starc$ is a pseudometric, we thus have that $\starc(R,t(R)) \leq \epsilon$ for all $R$. This completes the other direction, and hence the proof.
\end{proof}

\nasforbrmce*

\begin{proof}
    Immediate from Lemma~\ref{lemma:weak_epsilon_robustness_function_composition}, Proposition~\ref{prop:nas_for_small_starc}, Theorem~\ref{thm:ambiguity-boltzmann-rational}, and Theorem~\ref{thm:ambiguity-MCE}.
\end{proof}

\optnotrobust*

\begin{proof}
By Corollary~\ref{cor:diameter_of_op}, if $|\States| \geq 2$ or $|\Actions| \geq 3$, and $d^\R$ is both sound and complete, then there exists reward functions $R_1, R_2$ such that $o_{\tau,\gamma}^\star(R_1) = o_{\tau,\gamma}^\star(R_2)$, and such that $d^\R(R_1, R_2) > 0$. Thus $d^\R(R_1, R_2) = E > 0$, and so $o_{\tau,\gamma}^\star$ violates condition 3 of Definition~\ref{def:misspecification_metric} for all $\epsilon < E$.
\end{proof}

\subsubsection{Perturbation Robustness}

\perturbationrobustnessnas*

\begin{proof}
For the first direction, suppose $f$ is $\epsilon/\delta$-separating, and let $g$ be a $\delta$-perturbation of $f$ with $\mathrm{Im}(g) \subseteq \mathrm{Im}(f)$. We will show that $f$ and $g$ satisfy the conditions of Definition~\ref{def:misspecification_metric}.
For the first condition, let $R_1, R_2$ be two arbitrary reward functions such that $f(R_1) = g(R_2)$. Since $g$ is a $\delta$-perturbation of $f$, we have that $d^\Pi(g(R_2), f(R_2)) \leq \delta$. Since $f(R_1) = g(R_2)$, straightforward substitution thus gives us that $d^\Pi(f(R_1), f(R_2)) \leq \delta$. Since $f$ is $\epsilon/\delta$-separating, this means that $d^\mathcal{R}(R_1, R_2) \leq \epsilon$. Since $R_1$ and $R_2$ were chosen arbitrarily, this means that if $f(R_1) = g(R_2)$ then $d^\mathcal{R}(R_1, R_2) \leq \epsilon$. Thus, the first condition of Definition~\ref{def:misspecification_metric} holds. For the third condition, note that if $f(R_1) = f(R_2)$, then $d^\Pi(f(R_1), f(R_2)) = 0 \leq \delta$. Since $f$ is $\epsilon/\delta$-separating, this means that $d^\mathcal{R}(R_1, R_2) \leq \epsilon$, which means that the second condition is satisfied as well.
The second condition is satisfied, since we assume that $\mathrm{Im}(g) \subseteq \mathrm{Im}(f)$, and the fourth condition is satisfied by the definition of $\delta$-perturbations. This means that $f$ and $g$ satisfy all the conditions of Definition~\ref{def:misspecification_metric}, and thus $f$ is $\epsilon$-robust to misspecification with $g$. Since $g$ was chosen arbitrarily, this means that $f$ is $\epsilon$-robust to misspecification with any $\delta$-perturbation $g$ such that $\mathrm{Im}(g) \subseteq \mathrm{Im}(f)$. Thus $f$ is $\epsilon$-robust to $\delta$-perturbation.

For the second direction, suppose $f$ is \emph{not} $\epsilon/\delta$-separating. This means that there exist $R_1, R_2 \in \R$ such that $d^\mathcal{R}(R_1, R_2) > \epsilon$ and $d^\Pi(f(R_1), f(R_2)) \leq \delta$. Now let $g : \R \to \R$ be the behavioural model where $g(R_1) = f(R_2)$,  $g(R_2) = f(R_1)$, and $g(R) = f(R)$ for all $R \not\in \{R_1, R_2\}$. Now $g$ is a $\delta$-perturbation of $f$. However, $f$ is not $\epsilon$-robust to misspecification with $g$, since $g(R_1) = f(R_2)$, but $d^\mathcal{R}(R_1, R_2) > \epsilon$. Thus, if $f$ is not $\epsilon/\delta$-separating then $f$ is not $\epsilon$-robust to $\delta$-perturbation, which in turn means that if $f$ is $\epsilon$-robust to $\delta$-perturbation, then $f$ is must be $\epsilon/\delta$-separating.
\end{proof}

\notseparating*

\begin{proof}
Let $\delta$ be any positive constant. By assumption, there exists a $\delta'$ such that if $L_2(\pi_1, \pi_2) < \delta'$ then $d^\Pi(\pi_1, \pi_2) < \delta$. Moreover, since $f$ is continuous, there exists an $\epsilon$ such that if $L_2(R_1, R_2) < \epsilon$, then $L_2(f(R_1), f(R_2)) < \delta'$. Next, let $R$ be any reward function that is non-trivial under $\tfunc$ and $\gamma$. We now have that, for any positive constant $c$, the reward functions $c \cdot R$ and $-c \cdot R$ have the opposite policy ordering, which means that $\starc(c \cdot R, - c \cdot R) = 1$. Moreover, by making $c$ sufficiently small, we can ensure that $L_2(c \cdot R, -c \cdot R) < \epsilon$. Thus, for any positive $\delta$ there exist reward functions $c \cdot R$ and $-c \cdot R$ such that $d^\Pi(f(c \cdot R), f(-c \cdot R)) < \delta$, and such that $d^\mathrm{STARC}_{\tau,\gamma}(c \cdot R, -c \cdot R) = 1$. Hence $f$ is not $\epsilon/\delta$-separating for any $\delta > 0$ and any $\epsilon < 1$.
\end{proof}

\subsubsection{Misspecified Parameters}

\misspecifiedenv*

\begin{proof}
If $\tau_1 \neq \tau_2$, then either $\tau_1 \neq \tau_3$ or $\tau_2 \neq \tau_3$. 
If $\tau_1 \neq \tau_3$, then Theorem~\ref{thm:wrong_tau_large_diameter} implies that there exists reward functions $R_1, R_2$ such that $f_{\tau_1}(R_1) = f_{\tau_1}(R_2)$, but such that $d^\mathrm{STARC}_{\tfunc_3,\gamma}(R_1, R_2)$ is arbitrarily close to 1. 
Thus $f_{\tau_1}$ violates condition 2 of Definition~\ref{def:misspecification_metric} for all $\epsilon < 1$.
Similarly, if $\tau_2 \neq \tau_3$, then Theorem~\ref{thm:wrong_tau_large_diameter} implies that there exists reward functions $R_1, R_2$ such that $f_{\tau_2}(R_1) = f_{\tau_2}(R_2)$, but such that $d^\mathrm{STARC}_{\tau_3,\gamma}(R_1, R_2)$ is arbitrarily close to 1.
Then Lemma~\ref{lemma:epsilon_robustness_implies_weak_refinement} implies that there can be no $f$ that is $\epsilon$-robust to misspecification with $f_{\tau_2}$ (as defined by $d^\mathrm{STARC}_{\tau_3,\gamma}$) for any $\epsilon < 0.5$. 
\end{proof}

\misspecifieddiscounting*

\begin{proof}
If $\gamma_1 \neq \gamma_2$, then either $\gamma_1 \neq \gamma_3$ or $\gamma_2 \neq \gamma_3$. 
If $\gamma_1 \neq \gamma_3$, then Theorem~\ref{thm:wrong_gamma_large_diameter} implies that there exists reward functions $R_1, R_2$ such that $f_{\gamma_1}(R_1) = f_{\gamma_1}(R_2)$, but such that $d^\mathrm{STARC}_{\tau,\gamma_3}(R_1, R_2)$ is arbitrarily close to 1. Thus $f_{\gamma_1}$ violates condition 2 of Definition~\ref{def:misspecification_metric} for all $\epsilon < 1$.
Similarly, if $\gamma_2 \neq \gamma_3$, then Theorem~\ref{thm:wrong_gamma_large_diameter} implies that there exists reward functions $R_1, R_2$ such that $f_{\gamma_2}(R_1) = f_{\gamma_2}(R_2)$, but such that $d^\mathrm{STARC}_{\tau,\gamma_3}(R_1, R_2)$ is arbitrarily close to 1.
Then Lemma~\ref{lemma:epsilon_robustness_implies_weak_refinement} implies that there can be no $f$ that is $\epsilon$-robust to misspecification with $f_{\gamma_2}$ (as defined by $d^\mathrm{STARC}_{\tau,\gamma_3}$) for any $\epsilon < 0.5$. 
\end{proof}





\end{document}

\endinput